\documentclass[pdflatex,sn-mathphys-ay]{sn-jnl}% Math and Physical Sciences Author Year Reference Style
\usepackage{graphicx}%
\usepackage{multirow}%
\usepackage{amsmath,amssymb,amsfonts}%
\usepackage{amsthm}%
\usepackage{mathrsfs}%
\usepackage[title]{appendix}%
\usepackage{xcolor}%
\usepackage{textcomp}%
\usepackage{manyfoot}%
\usepackage{booktabs}%
\usepackage{algorithm}%
\usepackage{algorithmicx}%
\usepackage{algpseudocode}%
\usepackage{listings}%
\usepackage{subfigure}
\newcommand{\E}{\mathbb{E}}

\newcommand{\R}{\mathbb{R}}

\newcommand{\cA}{\mathcal{A}}
\newcommand{\cB}{\mathcal{B}}

\newcommand{\cE}{\mathcal{E}}
\newcommand{\cF}{\mathcal{F}}
\newcommand{\cG}{\mathcal{G}}

\newcommand{\cJ}{\mathcal{J}}
\newcommand{\cK}{\mathcal{K}}
\newcommand{\cL}{\mathcal{L}}
\newcommand{\cM}{\mathcal{M}}
\newcommand{\cN}{\mathcal{N}}

\newcommand{\cR}{\mathcal{R}}
\newcommand{\cS}{\mathcal{S}}
\newcommand{\cT}{\mathcal{T}}

\newcommand{\argmax}{\operatornamewithlimits{argmax}}
\newcommand{\argmin}{\operatornamewithlimits{argmin}}
\mathchardef\mhyphen="2D
\newcommand{\ex}{\mathbb{E}}

\newcommand{\sbr}[1]{\left( #1 \right)}
\newcommand{\mbr}[1]{\left[ #1 \right]}
\newcommand{\lbr}[1]{\left\{ #1 \right\}}
\newcommand{\abr}[1]{\left| #1 \right|}
\newcommand{\nbr}[1]{\left\| #1 \right\|}

\newcommand{\indicator}{\mathbf{1}}
\newcommand{\trace}{\textup{tr}}

\newcommand{\round}{\mathtt{ROUND}}

\newcommand{\seg}{\textup{seg}}
\newcommand{\bi}{\textup{B}}
\newcommand{\summing}{\textup{S}}

\newcommand{\seglsvits}{\mathtt{Seg\mbox{-}LSVI\mbox{-}TS}}
\newcommand{\biseglsvits}{\mathtt{Bi\mbox{-}Seg\mbox{-}LSVI\mbox{-}TS}}
\newcommand{\sumseglsvits}{\mathtt{Sum\mbox{-}Seg\mbox{-}LSVI\mbox{-}TS}}
\newcommand{\lsvitsq}{\mathtt{LSVI\mbox{-}TS\mbox{-}Q}}

\newcommand{\sig}{\operatorname{sig}}
\newcommand{\bitssegd}{\mathtt{SegBiTS\mbox{-}d}}

\newcommand{\uneqsegbitsd}{\mathtt{Uneq\mbox{-}SegBiTS\mbox{-}d} }
\newcommand{\edlinucbsegd}{\mathtt{E\mbox{-}LinUCB\mbox{-}d}}
\newcommand{\sumrm}{\mathrm{sum}}

\usepackage{mathtools}

\newcommand{\compilehidecomments}{ture}
\ifthenelse{ \equal{\compilehidecomments}{true} }{
	\newcommand{\yihan}[1]{}
    \newcommand{\fengxu}[1]{}
    \newcommand{\siwei}[1]{}
    \newcommand{\shie}[1]{}
    \newcommand{\gal}[1]{}
}{
	\newcommand{\yihan}[1]{{\color{red} [\text{Yihan:} #1]}}
    \newcommand{\fengxu}[1]{{\color{brown} [\text{Fengxu:} #1]}}
    \newcommand{\siwei}[1]{{\color{blue} [\text{Siwei:} #1]}}
    \newcommand{\shie}[1]{{\color{purple} [\text{Shie:} #1]}}
    \newcommand{\gal}[1]{{\color{orange} [\text{Gal:} #1]}}
}

\theoremstyle{thmstyleone}%
\newtheorem{theorem}{Theorem}%  meant for continuous numbers
\newtheorem{proposition}[theorem]{Proposition}% 
\newtheorem{assumption}{Assumption}
\newtheorem{lemma}[theorem]{Lemma}

\theoremstyle{thmstyletwo}%

\theoremstyle{thmstylethree}%

\begin{document}

\title[Article Title]{Reinforcement Learning with Segment Reward Feedback under Linear Function Approximation}

%%=============================================================%%
%% GivenName	-> \fnm{Joergen W.}
%% Particle	-> \spfx{van der} -> surname prefix
%% FamilyName	-> \sur{Ploeg}
%% Suffix	-> \sfx{IV}
%% \author*[1,2]{\fnm{Joergen W.} \spfx{van der} \sur{Ploeg} 
%%  \sfx{IV}}\email{iauthor@gmail.com}
%%=============================================================%%

\author[1]{\fnm{Fengxu} \sur{Liu}}\email{z23v2p@gmail.com}

\author[2]{\fnm{Siwei} \sur{Wang}}\email{siweiwang@microsoft.com}
%\equalcont{These authors contributed equally to this work.}

\author[3]{\fnm{Gal}\sur{Dalal}}\email{gdalal@nvidia.com}

\author[3,4]{\fnm{Shie} \sur{Mannor}}\email{shie@ee.technion.ac.il}

\author[5]{\fnm{Yihan} \sur{Du}}\email{yihan\_du@sutd.edu.sg}
%\equalcont{These authors contributed equally to this work.}

\affil[1]{\orgname{National University of Singapore}}

\affil[2]{\orgname{Microsoft Research Asia}}

\affil[3]{\orgname{NVIDIA Research}}

\affil[4]{\orgname{Technion}}

\affil[5]{\orgdiv{ESD},
  \orgname{Singapore University of Technology and Design}}

%%==================================%%
%% Sample for unstructured abstract %%
%%==================================%%

\abstract{
Classical reinforcement learning (RL) assumes that a reward is observed for every visited state-action pair. However, in real-world applications such as autonomous driving, such fine-grained feedback can be costly or difficult to collect, whereas trajectory-level feedback may be too sparse for efficient learning. To provide a general feedback model bridging these two extremes and handle large state spaces, we study RL with segment reward feedback under linear function approximation. Our work answers how the granularity of segment feedback and the choice of segmentation influence learning.
For equal-length segments with known transitions, we design algorithms $\bitssegd$ and $\edlinucbsegd$ for binary and sum feedback types, respectively. They adopt posterior sampling with planning to achieve computational efficiency and the E-optimal experimental design to attain near-optimality. Nearly matching lower bounds are established.
For equal-length segments with unknown transitions, we develop a unified $\seglsvits$ framework with two instantiations for binary and sum feedback, which carefully integrates the posterior estimated reward parameters into least-squares value iteration.
These results reveal a fundamental insight: under binary feedback, increasing the number of segments significantly reduces the regret through an exponential factor, while surprisingly, under sum feedback, the granularity of segments does not affect learning much.
Finally, to investigate whether segmenting according to state-action features can further expedite learning, we design an algorithm $\uneqsegbitsd$ that allows arbitrary segmentations. The resulting regret bound shows that under the usual elliptical potential analysis, the influence of state-action features on the regret appears only through logarithmic factors, and equal segmentation achieves the best performance.
}

\keywords{Reinforcement learning, segment reward feedback, linear function approximation, computationally and sample efficient algorithms, regret guarantees}

%%\pacs[JEL Classification]{D8, H51}

%%\pacs[MSC Classification]{35A01, 65L10, 65L12, 65L20, 65L70}

\maketitle

\section{Introduction}
\label{sec1}

Reinforcement learning (RL) provides a general framework for sequential
decision making, in which an agent interacts with an unknown environment with the goal of maximizing the long-term reward.  Classical RL typically assumes
that the reward associated with every visited state-action pair is observed
immediately.  While this reward feedback model is analytically convenient,
state-action-wise rewards can be difficult and costly to obtain in real-world applications, particularly when performance is assessed by humans or through a delayed evaluation process. For example, in robotic manipulation, it is often easier to evaluate whether a robot successfully completes a task, such as folding clothes, than to assign a reliable reward to every individual action. In autonomous driving, evaluating each state-action pair according to multiple criteria such as safety, comfort, and efficiency can also be difficult and onerous. 
These practical applications has
motivated the study of trajectory-level feedback, where only a
single reward signal is revealed after an entire episode (Efroni et
al.,~\citeyear{efroni2021reinforcement}; Chatterji et
al.,~\citeyear{chatterji2021theory}).

Trajectory-level feedback reduces the frequency of reward queries, but it also aggregates the reward information from an entire episode into a single observation. When trajectories
are long, such a signal may provide little information about which portions of
the behavior were beneficial or detrimental to the overall performance. To resolve this challenge, segment feedback offers an
intermediate feedback granularity: an episode is divided into several
segments, and one reward signal is observed for each segment.  This segment feedback model provides a general paradigm 
interpolating between trajectory-level signals and standard state-action-wise feedback. It can naturally be applied to realistic scenarios where trajectories are presented to
human evaluators in shorter clips or bags, such as autonomous driving.  
% (Tang et al.,~\citeyear{tang2024reinforcement}; Du et al.,~\citeyear{pmlr-v267-du25e})
Under this model, there is a fundamental question:
\emph{how do the granularity of segment feedback and the choice of segmentation influence learning performance?}

Du et al.~(\citeyear{pmlr-v267-du25e}) initiated the theoretical study of equal-length segment feedback in episodic tabular Markov decision processes (MDPs).
%, characterizing how the number of segments affects learning under binary and sum feedback. 
% However, their algorithm design and analysis are restricted to the tabular representation and fixed equal-length trajectory partitions. 
However, the tabular representation becomes prohibitive in large or continuous state spaces, which is common in real-world applications, while fixed equal-length partitions leave open whether more flexible trajectory partitioning can be utilized to improve learning. 
For realistic applications with large or continuous state spaces, linear function approximation provides a widely-adopted formulation: in a linear MDP (Jin et al.,~\citeyear{jin20_linearmdp}), rewards and transition kernels are represented through a known $d$-dimensional feature map. Existing work on sparse feedback with function approximation~(Cassel et al.,~\citeyear{pmlr-v235-cassel24a}) mainly considered a single aggregate observation for the entire trajectory. Consequently, it remains unclear whether segment-level reward information can be  efficiently exploited in linear MDPs, how the granularity of segment feedback affects learning performance, and whether choosing segmentation according to the realized state-action features can provably benefit learning.

Motivated by these questions, in this work, we investigate RL with segment feedback under linear function approximation. We adopt the linear MDP formulation in (Jin et al.~\citeyear{jin20_linearmdp}), which enables generalization across large or potentially infinite state-action spaces through known feature representations. 
%Building on the fixed equal-length segment-feedback model of Du et al.~\citeyear{pmlr-v267-du25e}, 
Under the framework of segment reward feedback in linear MDPs, we first study equal-length segments and binary and sum feedback with known and unknown transitions. Specifically, binary feedback refers to a Bernoulli outcome   generated by the cumulative reward over the segment, and sum feedback is a noisy observation of the cumulative reward over the segment.
Beyond the equal-length segment setting, we further consider a more general variable segment setting with binary feedback, where both the number of segments and the way a trajectory is partitioned can be chosen adaptively by learning algorithms. We aim to rigorously quantify how the reward feedback type, unknown transition dynamics, and segmentation scheme impact the regret performance of learning algorithms.

This work faces several significant technical challenges: (i) When the transitions are unknown, designing a computationally and statistically efficient algorithm is highly nontrivial because reward and transition uncertainties are represented at different temporal scales. A per-step reward bonus is difficult to control because the reward covariance matrix is constructed by aggregate segment features and is generally incompatible with the transition covariance matrices constructed by individual state-action features, potentially leading to an unbounded regret analysis. A segment-level reward bonus, on the other hand, is history dependent and cannot be incorporated into the state-action-wise Markovian Bellman recursion. Finally, omitting reward bonuses altogether prevents the resulting value estimates from being optimistic, breaking the optimism-based regret analysis. (ii) Establishing
dimension-dependent lower bounds requires encoding multiple dimensions of the reward parameter into linear features, while ensuring that sparse segment
observations cannot easily distinguish them. For binary feedback, the
construction must additionally quantify the information loss caused by the
sigmoid curvature in order to demonstrate that the exponential dependence in the regret bound is inevitable. (iii) Under the variable segment setting, both the number of segments and the way a trajectory is partitioned may affect learning through the sigmoid curvatures and aggregated segment features. Hence, the analysis must rigorously account for all possible adaptive, episode-dependent segmentations.

To address these challenges, we develop sample efficient algorithms
and establish regret guarantees for different
settings. Table~\ref{tab:main-results} summarizes the settings and our results. Our main contributions are summarized as follows.

\begin{enumerate}
\item We study RL with segment reward feedback with linear
function approximation, which provides a general feedback paradigm emcompassing per-state-action feedback and
trajectory-level feedback as two extremes. Under this framework, we consider both binary and sum feedback types with
known and unknown transitions.

\item For equal-length segments with known transitions, we propose algorithms $\bitssegd$ and $\edlinucbsegd$ for binary and sum feedback, respectively. $\bitssegd$ employs posterior sampling with efficient linear MDP planning and achieves a regret bound with dependence on $\exp(\frac{Hr_{\max}}{2m})$,
% due to the sigmoid curvature, 
where $H$ is the length of each episode, $m$ is the number of segments, and $r_{\max}$ is the  universal upper bound of rewards. 
In contrast, $\edlinucbsegd$ uses the E-optimal experimental design to refine the conditioning of the covariance matrix and achieves a near-optimal regret guarantee that does not depend on $m$ (when ignoring logarithmic factors). 
%These results show that under binary feedback, increasing $m$ substantially reduces the regret, while surprisingly, under sum feedback, increasing $m$ does not improve learning much.

\item We establish dimension-dependent regret lower bounds for both binary
and sum segment feedback in linear MDPs with known transitions. 
% To the best of our knowledge, these are the first regret lower bounds for RL with fixed equal-length segment feedback under linear function approximation. 
The lower bound for binary feedback demonstrates 
that the exponential dependence on $H/m$ is unavoidable through a Bernoulli KL-divergence analysis, while the lower bound for sum feedback validates that our upper bound is near-optimal and $m$ does not affect the regret up to logarthmic factors.

\item For equal-length segments with unknown transitions, we develop a  unified $\seglsvits$ framework and provide computationally and sample efficient instantiations for both binary and sum feedback. By integrating posterior sampling of reward parameters with optimistic least-squares value iteration, the framework resolves the mismatch between segment-level reward uncertainty and state-action-wise value iteration. While transition learning introduces additional polynomial terms, the resulting regret bounds preserve the same insights as in the known transition setting: 
\emph{under binary feedback, increasing the number of segments $m$ significantly reduces the regret due to the $\exp(\frac{Hr_{\max}}{2m})$ factor, while surprisingly, under sum feedback, increasing $m$ does not improve learning much.}
%the binary-feedback bound retains the factor $\exp(\frac{Hr_{\max}}{2m})$, whereas the sum-feedback bound has no explicit dependence on $m$.

\item Finally, we investigate whether choosing segmentation according to the realized state-action features can further accelerate learning. We introduce a variable segment and binary feedback model and design an algorithm $\uneqsegbitsd$, which accommodates adaptive, episode-dependent partitions and accounts for the sigmoid curvature of each individual segment. Its regret guarantee shows that, \emph{under the common elliptical potential analysis}, the effect of the realized state-action features is absorbed into logarithmic factors and thus \emph{feature-dependent segmentation does not substantially benefit learning},
%, whereas the segment lengths explicitly govern the dominant  terms. 
and furthermore, 
%for a fixed number of segments, 
\emph{equal segmentation achieves the best performance}.
\end{enumerate}

\begin{table}[h]
\caption{Summary of our results.}\label{tab:main-results}%
\begin{tabular}{@{}lllll@{}}
\toprule
Result & Segmentation & Feedback Type & Transition & Regret\\
\midrule
Theorem~\ref{thm:seg-bits-upper} & Equal-length & Binary & Known & {\centering
\(\displaystyle
\widetilde{O}\!\left(
\begin{aligned}
&\exp\!\left(\frac{H r_{\max}}{2m}\right)
\nu(K)d
\sqrt{
Km
\max\!\left\{
\frac{H^2}{m\alpha\lambda},
1
\right\}}
\end{aligned}
\right)
\)
\par}
\\

Theorem~\ref{thm:seg-bits-lower} & Equal-length & Binary & Known & {\centering
\(\displaystyle
\Omega\left(
\exp\left(\frac{Hr_{\max}}{2m}\right)
d\sqrt{Km}
\right)
\)
\par}
\\

Theorem~\ref{thm:regret_sum_fixed_segment_d_known} & Equal-length & Sum & Known & {\centering
\(\displaystyle
\widetilde{O}\left(
d\sqrt{HK}
 \right)
\)
\par}
\\

Theorem~\ref{thm:regret_sum_fixed_segment_d_known_lower} & Equal-length & Sum & Known & {\centering
\(\displaystyle
 \Omega\left(
        d\sqrt{HK}
    \right)
\)
\par}
\\

Theorem~\ref{thm:seg_binary_unknown_tran_d}
& Equal-length
& Binary
& Unknown
&
{\centering
\(
\begin{aligned}
\widetilde{O}\!\Biggl(
&\exp\!\left(\frac{H r_{\max}}{2m}\right)
\nu(K)d
\sqrt{
Km
\max\!\left\{
\frac{H^2}{m\alpha\lambda},
1
\right\}
}
\\[-1mm]
&\quad{}
+
d^2H^2\sqrt{K}
\left(
\frac{\nu(K)}{\sqrt{\lambda}}
+
r_{\max}
\right)
\Biggr)
\end{aligned}
\)
\par}
\\

Theorem~\ref{thm:seg_sum_unknown_tran_d}
& Equal-length
& Sum
& Unknown
&
{\centering
\(\displaystyle
\begin{aligned}
\widetilde{O}\!\Biggl(
&d^{5/2}H^2\sqrt{K}
\left(
r_{\max}
+
\frac{1}{\sqrt{\lambda}}
\right)
\\[-1mm]
&\quad{}
+
d\left(
1+r_{\max}\sqrt{\lambda}
\right)
\sqrt{
KHd
\max\!\left\{
\frac{H}{\lambda},
1
\right\}
}
\Biggr)
\end{aligned}
\)
\par}
\\

Theorem~\ref{thm:regret-binary-variable-known}
& Variable
& Binary
& Known
&
{\centering
\(\displaystyle
\widetilde O\Bigg(
        \nu_U(K)d
        \sqrt{
            \sum_{k=1}^{K}
            \max\left\{
                \frac{
                    \sum_{i=1}^{m_k}(\ell_i^k)^2
                }{4\lambda},
                1
            \right\}
            \sum_{i=1}^{m_k}
            \frac{1}{\sig'(\ell_i^k r_{\max})}
        }
    \Bigg)
\)
\par}
\\
\botrule
\end{tabular}
\footnotetext{\textbf{Notation.}
Equal-length and variable refer to \(m\) equal-length
segments per episode and arbitrary, episode-dependent segmentations, respectively. 
Here \(K\), \(d\) and \(\lambda\) denote the number of episodes, feature dimension, and
regularization parameter, respectively. Under variable segmentation, \(m_k\) and \(\ell_i^k\) denote the
number of segments and the length of the
\(i\)-th segment in episode \(k\), respectively.
Define the sigmoid function \(\sig(x) \coloneqq 1/(1+\exp(-x))\), where \(\alpha\), \(\nu(K)\) and \(\nu_U(K)\) are defined in Eq.~\eqref{eq:def_alpha}, Eq.~\eqref{eq:def_nu} and Eq.~\eqref{eq:nu-unequal}, respectively. \(\widetilde O(\cdot)\) hides logarithmic factors.
%and \(\Omega(\cdot)\) denote upper bounds up to logarithmic factors and asymptotic lower bounds.
}
\end{table}

\section{Related Work}\label{sec2}

\subsection{RL with Sparse Feedback}\label{subsec2.1}
A line of work relaxes the standard per-step reward observation assumption by replacing instantaneous rewards with a single trajectory-level reward signal. Efroni et al.~(\citeyear{efroni2021reinforcement}) investigated episodic tabular RL with trajectory-level sum feedback and developed UCB- and Thompson-sampling-based algorithms with regret guarantees. More recently, Zhang et al.~(\citeyear{pmlr-v267-zhang25l}) established a near-minimax-optimal regret guarantee for episodic tabular RL with trajectory-level sum feedback using tighter reward confidence regions and a reference transition model. Chatterji et al.~(\citeyear{chatterji2021theory}) considered once-per-episode binary feedback by directly maximizing the probability of producing feedback one, while the resulting nonlinear objective may induce non-Markovian optimal policies and pose computational challenges. Cohen et al.~(\citeyear{pmlr-v134-cohen21a}) addressed tabular RL with adversarial aggregate losses, and proposed a computationally efficient algorithm. For this adversarial aggregate feedback setting, Lancewicki and Mansour~(\citeyear{pmlr-v267-lancewicki25a}) designed policy optimization algorithms that achieve near-optimal regret under known transitions and improve the guarantee in (Cohen et al.,~\citeyear{pmlr-v134-cohen21a}) under unknown transitions. Ito et al.~(\citeyear{NEURIPS2025_0a50497b}) further proposed best-of-both-worlds algorithms that attain logarithmic regret under stochastic losses while retaining square-root regret under adversarial losses.
Zhuo et al.~(\citeyear{Zhuo2026LearningMD}) studied fully bandit feedback in episodic MDPs, where only the total reward of each episode is observed while the state-action trajectory remains unobserved, and showed that exponential dependence on the horizon $H$ is unavoidable in worst-case regret.

Trajectory-level feedback has also been studied under function approximation. In the offline setting, Xu et al.~(\citeyear{10596317}) investigated RL with trajectory-wise rewards and introduced a pessimistic value iteration method based on reward decomposition. Cassel et al.~(\citeyear{pmlr-v235-cassel24a}) focused on aggregate bandit feedback in online linear MDPs and developed computationally efficient algorithms with $\widetilde{O}(\mathrm{poly}(d,H)\sqrt{K})$ regret.

More generally, there have been several works considering feedback defined over segments or bags of a trajectory.
%, rather than over the trajectory as a whole. 
Tang et al.~(\citeyear{tang2024reinforcement}) introduced an empirical framework for RL from bagged rewards and developed a transformer-based method, but did not provide regret guarantees. Gao et al.~(\citeyear{gaoharnessing}) investigated bagged decision times with non-Markovian transitions within each bag, focusing primarily on modeling the transition structure rather than recovering step-wise rewards from aggregated observations. Du et al.~(\citeyear{pmlr-v267-du25e}) studied episodic tabular MDPs with equal-length segment feedback, covering both sum and binary feedback, and provided regret guarantees under both known and unknown transitions.

Prior works on binary and sum segment feedback have been primarily restricted to tabular MDPs, while function approximation results have focused on whole-trajectory signals. %The case of segment-level feedback under linear function approximation remains less understood.
A general treatment of segment-level feedback under function approximation has remained an important unresolved problem, which we systematically address in this work.

\subsection{RL with Linear Function Approximation}\label{subsec2.2}

RL with linear function approximation has been extensively studied to handle large or continuous state spaces. A widely-adopted model is linear MDPs~(Yang and Wang~\citeyear{pmlr-v97-yang19b}; Jin et al.,~\citeyear{jin20_linearmdp}), which represent transitions as linear combinations of unknown measures, with coefficients given by known state-action features. Yang and Wang~(\citeyear{pmlr-v97-yang19b}) investigated discounted MDPs with linearly parameterized transitions and developed a parametric Q-learning algorithm under a generative model and an anchor state-action condition. Yang and Wang~(\citeyear{pmlr-v119-yang20h}) subsequently proposed MatrixRL for episodic RL
%with regret guarantees independent of the state-action space size, 
and extended this approach to kernel representations. Under the linear MDP model, optimism-based approaches combine least-squares value iteration with confidence bonuses to encourage exploration. Jin et al.~(\citeyear{jin20_linearmdp}) introduced the LSVI-UCB framework, and several subsequent works improved its dependence on the feature dimension and episode horizon, leading to nearly minimax-optimal regret guarantees (Hu et al.,~\citeyear{pmlr-v162-hu22a}; He et al.,~\citeyear{he2023nearly}). 
% Randomized exploration offers an alternative to deterministic optimism. Osband et al.~(\citeyear{pmlr-v48-osband16}) introduced randomized least-squares value iteration(LSVI), while Zanette et al.~(\citeyear{pmlr-v108-zanette20a}) and Ishfaq et al.~(\citeyear{ishfaq2021randomized}) provided regret analyses for randomized value-function methods under function approximation. 
%
Randomized exploration offers an alternative to deterministic optimism and has been used for value-based approaches with function approximation (Osband et al.,~\citeyear{pmlr-v48-osband16}; Zanette et al.,~\citeyear{pmlr-v108-zanette20a}; Ishfaq et al.,~\citeyear{ishfaq2021randomized}).

%Several other works have established minimax and instance-dependent regret lower bounds for linear MDPs (Zhou et al.,~\citeyear{pmlr-v134-zhou21a}; He et al.,~\citeyear{pmlr-v139-he21c}). 
A related but distinct model is the linear mixture MDP, whose transition kernel is a linear combination of known basis kernels with unknown coefficients (Jia et al., \citeyear{jia2020model}; Ayoub
et al., \citeyear{ayoub2020model}; Zhou et al., \citeyear{zhou2021provably}). 
He et al.~(\citeyear{pmlr-v139-he21c}) established instance-dependent logarithmic regret bounds for both linear MDPs and linear mixture MDPs. 

The linear function approximation literature typically assumes that the reward associated with every visited state-action pair is observed individually. Hence, their algorithms and results fail to address decision scenarios where rewards are revealed only through sparse observations. Our work bridges the above two lines of research by investigating binary and sum segment feedback in linear MDPs, with provably efficient algorithms, dimension-dependent lower bounds, and an extension to variable segment and binary feedback.

\section{Problem Formulation}
\label{sec3}

We study episodic RL in linear MDPs under segment reward feedback. In this framework, the agent does
not observe per-step rewards as in classical RL. Instead, reward feedback is observed only at the end of each trajectory segment.
% The framework separates two aspects of the feedback mechanism: how the
% trajectory is partitioned into segments and what reward information is revealed
% for each segment. 
Under this framework, we consider two feedback types: sum feedback and binary feedback.
We first investigate equal-length segments with binary and sum feedback. Then,
we extend the binary segment feedback model to variable segments where the number and
lengths of segments may vary across episodes.

\subsection{Episodic Linear MDPs}

Consider an episodic MDP $\cM=(\cS,\cA,H,r,P,\rho)$, where $\cS$ is a
measurable state space, $\cA$ is a finite action space, $H$ is the episode
horizon, $r:\cS\times\cA\to[-r_{\max},r_{\max}]$ is the reward function,
$\rho\in\Delta(\cS)$ is the initial-state distribution, and
$P=\{P_h\}_{h=1}^H$ is a collection of time-inhomogeneous transition
kernels.  For any $h\in[H]$ and $(s,a,s') \in \cS \times \cA \times \cS$, $P_h(s' \mid s,a)$ is the probability of transitioning to state $s'$ if action $a$ is taken in state $s$ at step $h$.  The reward
function is time-homogeneous and unknown.

We define a policy $\pi=\{\pi_h\}_{h=1}^H$ as a collection of mappings
$\pi_h:\cS\to\cA$.  For any policy $\pi$, the state value function and state-action value function
are defined as 
\begin{align}
V_h^\pi(s)
&\coloneqq
\E\left[
\sum_{t=h}^H r(s_t,a_t)
\,\middle|\,
s_h=s,\pi
\right],
\\
Q_h^\pi(s,a)
&\coloneqq
\E\left[
\sum_{t=h}^H r(s_t,a_t)
\,\middle|\,
s_h=s,a_h=a,\pi
\right] ,
\end{align}
which stand for the expected cumulative reward obtained till the end of this episode under policy $\pi$, starting from $s$ and $(s,a)$ at step $h$, respectively.
Since the action space and episode horizon are both finite, there always exists an optimal policy $\pi^*$  that satisfies $V_h^{\pi^*}(s)=\sup_\pi V_h^{\pi}(s)$ for all $h \in [H]$ and $s \in \cS$~(Puterman \citeyear{puterman2014markov}), and we denote $V_h^*(s)\coloneqq V_h^{\pi^*}(s)$.

The agent interacts with the MDP as follows. At the beginning of each
episode $k$, an initial state $s_1^k\sim\rho$ is picked by the environment, and the agent chooses
a policy $\pi^k$. At each step $h\in[H]$, the agent observes the current state
$s_h^k$ and chooses an action
$a_h^k=\pi_h^k(s_h^k)$. Then, the agent transitions to the next state 
$s_{h+1}^k\sim P_h(\cdot\mid s_h^k,a_h^k).$

The realized trajectory in episode $k$ is denoted by
$\tau^k \coloneqq (s_1^k,a_1^k,\dots,s_H^k,a_H^k).$
All state-action pairs in $\tau^k$ are observable to the agent, whereas reward
information is revealed only through the feedback associated with each segment of the realized trajectory, which we will introduce shortly.

In this work, we focus on the linear MDP model~(Jin et al.~\citeyear{jin20_linearmdp}).  Let
$\psi:\cS\times\cA\to\R^d$ be a known feature map. We make the following structural assumption.

\begin{assumption}[Linear MDP]
\label{assump:linear_mdp}
There exist a vector $\theta^*\in\R^d$ and, for every $h\in[H]$, a vector of
signed measures
\[
\mu_h^*
=
\bigl(
{\mu_h^*}^{(1)},\ldots,{\mu_h^*}^{(d)}
\bigr)^\top
\]
over $\cS$ such that, for every $(s,a)\in\cS\times\cA$,
\begin{align}
r(s,a)
&=
\langle\psi(s,a),\theta^*\rangle,
\\
P_h(\cdot\mid s,a)
&=
\langle\psi(s,a),\mu_h^*(\cdot)\rangle.
\end{align}
Assume the following normalization conditions:
\[
\|\psi(s,a)\|_2\le1, \ \forall (s,a) \in \cS \times \cA,
\qquad
\|\theta^*\|_2\le r_{\max}\sqrt d,
\qquad
\|\mu_h^*(\cS)\|_2\le\sqrt d.
\]
\end{assumption}

Throughout the paper, the reward parameter $\theta^*$ is unknown.  We consider
both known transition and unknown transition settings.  In the
former setting, the transition kernel $P$ is known to the agent, whereas in the
latter setting $P$ must be learned from the observed state transitions.

For any trajectory or trajectory segment $\tau$, define its aggregate feature
as $\phi^\tau \coloneqq \sum_{(s,a)\in\tau}\psi(s,a)$.  For any policy $\pi$, define
the expected trajectory feature under $\pi$ as
$\phi^\pi \coloneqq \E[\sum_{h=1}^H\psi(s_h,a_h)\mid\pi]$ and $\phi^\pi(s_1) \coloneqq \E[\sum_{h=1}^H\psi(s_h,a_h)\mid\pi,s_1]$.  Assumption~\ref{assump:linear_mdp}
implies $\|\phi^\pi\|_2\le H$.

\subsection{Equal-length Segment Feedback}

We first introduce the equal-length segment setting, which provides a fundamental model
for investigating how the granularity of reward feedback influences learning performance in linear
MDPs. 
% Following (Du et al.~\citeyear{pmlr-v267-du25e}), 
In this setting, each episode is divided
into $m$ segments of equal length $H/m$, where we assume that
$m$ divides $H$ for convenience. For each
$i\in[m]$, the $i$-th segment of episode $k$ is denoted by
\[
\tau_i^k
\coloneqq
\bigl(
(s_h^k,a_h^k)
\bigr)_{h=\frac{H}{m}(i-1)+1}^{\frac{H}{m}i}.
\]
Since $\|\psi(s,a)\|_2\le1$, the aggregate feature of $\tau_i^k$ satisfies
$\|\phi^{\tau_i^k}\|_2\le H/m$.  The agent observes reward feedback only at the end of each segment, instead of each step.  We
consider the following two feedback types.

\textbf{Binary segment feedback.} 
In the binary feedback model, let $\sig(x):=1/(1+\exp(-x))$.  At the end of segment $i\in[m]$, the agent
observes a binary outcome $y_i^k$ that satisfies
\[
y_i^k\mid\tau_i^k
\sim
\operatorname{Bernoulli}
\left(
\sig\bigl((\phi^{\tau_i^k})^\top\theta^*\bigr)
\right).
\]
Hence, the probability of receiving reward feedback $1$ is determined by the
cumulative reward over the corresponding segment. 

\textbf{Sum segment feedback.}
In the sum feedback model,  at each step $h$ of episode $k$, the environment generates an instantaneous random reward
$R(s_h^k,a_h^k)=r(s_h^k,a_h^k)+\varepsilon_h^k$, where $\varepsilon_h^k$ is a zero-mean and $1$-sub-Gaussian noise
that is independent of the  transition distribution. The instantaneous random reward $R(s_h^k,a_h^k)$ is not revealed to the agent. Instead, the agent observes only the aggregate random reward of each segment $i\in[m]$:
\begin{align*}
R_i^k
&:=
\sum_{t=\frac{H}{m}(i-1)+1}^{\frac{H}{m}i}
R(s_t^k,a_t^k)
=
(\phi^{\tau_i^k})^\top\theta^*
+
\sum_{t=\frac{H}{m}(i-1)+1}^{\frac{H}{m}i}
\varepsilon_t^k .
\end{align*}
When $m=H$, the sum segment feedback model degenerates to classical RL
in linear MDPs (Jin et al.,~\citeyear{jin20_linearmdp}).  When $m=1$, the binary and sum segment models
reduce to RL with binary~(Chatterji et al.,~\citeyear{chatterji2021theory}) and sum trajectory feedback~(Efroni et al.,~\citeyear{efroni2021reinforcement}), respectively.

\subsection{Variable and Binary Segment Feedback}
\label{sec:variable segment}
We further extend the binary feedback model by allowing the number and
lengths of the segments to vary across episodes, with both determined by the
algorithm. After the policy has been executed for the entire episode $k$, the
algorithm observes the trajectory $\tau^k$ and decides the number of segments
$m_k$ and the partition of segments. Let $0=t_0^k<t_1^k<\cdots<t_{m_k}^k=H$ denote the end step indices of the $m_k$ segments, and define $\ell_i^k \coloneqq t_i^k-t_{i-1}^k$ as the length of the $i$-th segment in episode $k$.

Once the partition has been determined, the environment reveals one binary
outcome for each realized segment. Similarly, for every $i\in[m_k]$,
the agent observes
\[
y_i^k\mid\tau_i^k
\sim
\operatorname{Bernoulli}
\left(
\sig\bigl((\phi^{\tau_i^k})^\top\theta^*\bigr)
\right).
\]
The equal-length and binary segment feedback model is recovered by taking $m_k=m$ and
$t_i^k=iH/m$.  
%All other components of the problem, including the underlying linear MDP and the policy class, remain unchanged.

\subsection{Learning Objective}

The learning objective is the same under all feedback models considered
above.  Over $K$ episodes, the agent aims to minimize the cumulative regret
\[
\cR(K)
\coloneqq
\sum_{k=1}^K \mathbb E_{s^k_1 \sim \rho}
\left(
V_1^*(s_1^k)-V_1^{\pi^k}(s_1^k)
\right).
\] 

Prior work for binary trajectory feedback~(Chatterji et al.,~\citeyear{chatterji2021theory}) maximizes the expected probability of generating feedback one, whose nonlinear dependence on the cumulative reward may necessitate non-Markovian policies. In contrast, we use binary feedback to infer the underlying reward and maximize the expected cumulative reward under the standard MDP formulation. This distinction allows us to restrict attention to Markovian policies  and design computationally efficient algorithms.

\section{Equal-length Segment Feedback with Known Transitions}\label{sec4}
In this section, we first consider the equal-length segment and known transition setting, which allows us to isolate the statistical difficulty of learning the unknown reward parameter from segment feedback from the standard difficulty of learning the MDP transition dynamics. Under this setting, we study binary and sum feedback, and develop efficient algorithms with dimension-dependent regret guarantees, complemented by lower bounds that demonstrate the tightness of key dependencies.

\subsection{Binary Feedback}

For binary feedback, we propose algorithm $\bitssegd$, which drives exploration by perturbing logistic reward estimates with Gaussian noise calibrated to segment-level uncertainty and performing linear MDP planning under the perturbed rewards. Our regret analysis quantifies the benefit of finer segmentation through a sigmoid curvature term, showing that the regret bound decreases rapidly as the number of segments increases. A complementary lower bound establishes that the exponential dependence on segment length is unavoidable.

\subsubsection{Algorithm $\bitssegd$ and Regret Upper Bound}
\label{sec:known-binary-upper}

Binary segment feedback reveals the unknown reward parameter only through a Bernoulli observation whose parameter is the sigmoid of the aggregate reward for each segment. Since each segment has
length $H/m$, the aggregate reward over each segment is bounded by $Hr_{\max}/m$ in absolute value.
Accordingly, we define the uniform inverse sigmoid-curvature parameter
\begin{align}\label{eq:def_alpha}
\alpha \coloneqq \exp\left(\frac{Hr_{\max}}{m}\right)
+ \exp\left(-\frac{Hr_{\max}}{m}\right) +2 .
\end{align}
For any $k\geq 0$, define
\begin{align}
\omega(k) \coloneqq \sqrt{\lambda} \left( r_{\max}\sqrt{d} +\frac{1}{2} \right)
+ \frac{d}{\sqrt{\lambda}} \log \left(
\frac{4}{\delta'} \left(
1+\frac{H^2k}{4d\lambda m}
\right) \right),
\label{eq:def_omega}
\end{align}
and
\begin{align}
\nu(k) \coloneqq \frac{m \sqrt{ \lambda}}{H}
\Bigg(
1 + \frac{H r_{\max} \sqrt{d}}{m}
+ \frac{H}{m\sqrt{\lambda}}
\sqrt{ 1 + \frac{H r_{\max} \sqrt{d}}{m} }
\omega(k)
+ \frac{H^2}{m^2 \lambda} \omega(k)^2
\Bigg)^{\frac{3}{2}} .
\label{eq:def_nu}
\end{align}
Here $\omega(k)$ controls the stochastic error of the regularized logistic regression 
estimator, and $\nu(k)$ determines the confidence radius used to
calibrate posterior sampling.

Using these quantities, we propose algorithm $\bitssegd$, a computationally efficient Thompson-sampling-style algorithm~(Thompson, \citeyear{thompson1933likelihood}) which adopts randomized posterior sampling and linear MDP planning and achieves a $\widetilde{O}(\sqrt{K})$ regret rate. 

\begin{algorithm}[t]
    \caption{$\bitssegd$}
    \label{alg:seg-bits-d}
    \begin{algorithmic}[1]
        \State \textbf{Input:}
        $\delta,\delta'\coloneqq \frac{\delta}{3},
        \lambda$.

        \For{$k=1,\dots,K$}

        \State
        $\begin{aligned}[t]
        \hat{\theta}_{k-1}
        \gets \arg\min_{\theta}\;
        &-\sum_{k'=1}^{k-1}\sum_{i=1}^{m}
        \Big(
        y_i^{k'}\log
        \sig\!\left((\phi^{\tau_i^{k'}})^\top\theta\right)
        \\
        &\qquad
        +(1-y_i^{k'})
        \log\!\left(
        1-\sig\!\left((\phi^{\tau_i^{k'}})^\top\theta\right)
        \right)
        \Big)
        +\frac{\lambda}{2}\|\theta\|_2^2 .
        \end{aligned}$
        \label{line:segd-estimate}

        \State
        $\Sigma_{k-1}
        \gets
        \sum_{k'=1}^{k-1}\sum_{i=1}^{m}
        \phi^{\tau_i^{k'}}
        (\phi^{\tau_i^{k'}})^\top
        +\alpha\lambda I$.
        \label{line:segd-covariance}

        \State Sample
        $\xi_k\sim
        \mathcal{N}
        \left(
        0,\,
        \alpha\nu(k-1)^2\Sigma_{k-1}^{-1}
        \right)$,
        where $\alpha$ and $\nu(k-1)$ are defined in
        Eqs.~\eqref{eq:def_alpha} and \eqref{eq:def_nu}, respectively.
        \label{line:segd-sample}

        \State
        $\widetilde{\theta}_k
        \gets
        \hat{\theta}_{k-1}+\xi_k$.
        \label{line:segd-perturb}

        \State
        $\pi^k
        \gets
        \arg\max_{\pi}
        (\phi^\pi)^\top\widetilde{\theta}_k$.
        \label{line:plan_opt_policy}

        \State
        Play episode $k$ with policy $\pi^k$.
        Observe trajectory $\tau^k$ and binary segment feedback
        $\{y_i^k\}_{i=1}^m$.
        \label{line:segd-feedback}

        \EndFor
    \end{algorithmic}
\end{algorithm}

Algorithm~\ref{alg:seg-bits-d} presents the pseudocode of $\bitssegd$.
At each episode $k$, $\bitssegd$ first performs regularized logistic regression
to obtain the reward parameter estimate $\hat{\theta}_{k-1}$,
 and computes the segment covariance matrix
$\Sigma_{k-1}$ from past binary feedback
(Line~\ref{line:segd-estimate}--\ref{line:segd-covariance}).
Then, it samples a Gaussian noise
$\xi_k\sim\mathcal{N}(0,\alpha\nu(k-1)^2\Sigma_{k-1}^{-1})$ and calculates the posterior estimated reward
parameter $\widetilde{\theta}_k=\hat{\theta}_{k-1}+\xi_k$
(Line~\ref{line:segd-sample}--\ref{line:segd-perturb}).
Then, $\bitssegd$ computes the optimal policy for the linear MDP with reward
parameter $\widetilde{\theta}_k$,
and collects the resulting trajectory and binary segment feedback
(Line~\ref{line:plan_opt_policy}--\ref{line:segd-feedback}).

The step $\pi^k=\arg\max_{\pi}(\phi^\pi)^\top\widetilde{\theta}_k$ in Line~\ref{line:plan_opt_policy} can be regarded as episodic linear MDP planning under known transition kernels and reward function $r(s,a)=\psi(s,a)^\top \widetilde{\theta}_k$. This can be efficiently implemented using existing episodic linear MDP planning algorithms, e.g.,  (Wagenmaker et al.,~\citeyear{pmlr-v162-wagenmaker22b}).% \yihan{I revised this paragraph. We need to discuss more on this point in the meeting.}

% We assume access to the known transitions through a generative model that samples from $P_h(\cdot\mid s,a)$ for any $(h,s,a)$. Existing methods for finite-horizon linear MDPs can compute arbitrarily accurate policies with sample and computational complexities polynomial in the problem parameters and independent of $|\mathcal{S}|$, while reusing simulated transition data across reward functions (Wagenmaker et al.,~\citeyear{pmlr-v162-wagenmaker22b}). Thus, computing a $1/K$-optimal policy per episode adds only a constant to the cumulative regret.

Next, we provide a regret upper bound for algorithm $\bitssegd$.

\begin{theorem}
\label{thm:seg-bits-upper}
Under Assumption~\ref{assump:linear_mdp}, with probability at least $1-\delta$, for any $K>0$, the regret of Algorithm~\ref{alg:seg-bits-d} satisfies
\begin{align}
\mathcal{R}(K)
=
\widetilde{O}
\Bigg(
    &\exp\left(\frac{Hr_{\max}}{2m}\right)
    \nu(K)\sqrt{d}
    \Bigg(
        \sqrt{
            Kmd
            \max
            \left\{
                \frac{H^2}{m\alpha\lambda},
                1
            \right\}
        }
        +
        H\sqrt{\frac{K}{\alpha\lambda}}
    \Bigg)
\Bigg).
\label{eq:seg-bits-upper-bound}
\end{align}

\end{theorem}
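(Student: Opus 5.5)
The proof follows the standard linear Thompson-sampling analysis, in the style of Agrawal--Goyal and Abeille--Lazaric, adapted to logistic segment feedback as in the tabular analysis of Du et al.

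\textbf{Step 1 (confidence set).} We first show that with probability at least $1-\delta'$, for all $k$,
\[
\|\hat\theta_{k-1}-\theta^*\|_{\Sigma_{k-1}}\le \sqrt{\alpha}\,\nu(k-1).
\]
Let $g_k(\theta)=\sum_{k',i}\sig((\phi^{\tau_i^{k'}})^\top\theta)\phi^{\tau_i^{k'}}+\lambda\theta$ denote the gradient map of the regularized likelihood. Write $H_k$ for the corresponding Hessian-type matrix, with curvature weights $\sig'$. A self-normalized martingale bound on $\sum \epsilon_i^{k'}\phi^{\tau_i^{k'}}$, where $\epsilon_i^{k'}$ is the Bernoulli noise with variance $\sig'\le 1/4$, controls $\|g_k(\hat\theta)-g_k(\theta^*)\|_{H_k^{-1}}$ by $\omega(k)$. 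Here $\|\phi^{\tau}\|\le H/m$ yields the $\frac{H^2k}{4d\lambda m}$ inside the log.

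Next we use a generalized self-concordance argument (Faury et al.) to relate $H_k(\hat\theta)$ and $H_k(\theta^*)$. This produces the polynomial expression $(1+\frac{Hr_{\max}\sqrt d}{m}+\dots)^{3/2}$ defining $\nu$. Finally, since $\sig'(x)\ge 1/\alpha$ for $|x|\le Hr_{\max}/m$, we have $H_k\succeq \Sigma_{k-1}/\alpha$, which converts the bound to the $\Sigma_{k-1}$-norm with the $\sqrt\alpha$ factor. A projection or boundedness step is needed to keep $\hat\theta$ in the region where this curvature bound applies. This step is where $\exp(Hr_{\max}/2m)\approx\sqrt\alpha$ enters.

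\textbf{Step 2 (concentration and anti-concentration of $\xi_k$).} Gaussian concentration gives
\[
\|\widetilde\theta_k-\hat\theta_{k-1}\|_{\Sigma_{k-1}}\le \sqrt{\alpha}\,\nu(k-1)\sqrt{2d\log(dK/\delta')},
\]
so $\|\widetilde\theta_k-\theta^*\|_{\Sigma_{k-1}}\le \gamma_k:=\sqrt\alpha\,\nu\,(1+\sqrt{2d\log})$. Anti-concentration shows that with constant probability $p$,
\[
(\phi^{\pi^*})^\top\widetilde\theta_k\ge(\phi^{\pi^*})^\top\theta^*,
\]
that is, the sample is optimistic. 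The variance $\alpha\nu^2\Sigma^{-1}$ is chosen so that the Gaussian perturbation in the direction $\phi^{\pi^*}$ dominates the estimation error.

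\textbf{Step 3 (regret decomposition).} We decompose per-episode regret as
\[
(\phi^{\pi^*})^\top\theta^*-(\phi^{\pi^k})^\top\theta^* = \bigl[(\phi^{\pi^*})^\top\theta^*-(\phi^{\pi^k})^\top\widetilde\theta_k\bigr]+(\phi^{\pi^k})^\top(\widetilde\theta_k-\theta^*).
\]
The second term is at most $\gamma_k\|\phi^{\pi^k}\|_{\Sigma_{k-1}^{-1}}$. The first is handled by the standard TS argument using the optimism probability $p$: it is bounded by $\frac{c}{p}\gamma_k\,\E[\|\phi^{\pi^{k}}\|_{\Sigma_{k-1}^{-1}}\mid\cF_{k-1}]$ plus a martingale term. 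Azuma--Hoeffding then controls the martingale term using $\|\phi\|_{\Sigma^{-1}}\le H/\sqrt{\alpha\lambda}$, which gives the additive $H\sqrt{K/(\alpha\lambda)}$ term.

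We then replace $\|\phi^{\pi^k}\|_{\Sigma_{k-1}^{-1}}$ with the realized $\|\phi^{\tau^k}\|$. Since $\phi^{\tau^k}=\sum_i\phi^{\tau_i^k}$, Cauchy--Schwarz gives
\[
\|\phi^{\tau^k}\|_{\Sigma^{-1}}\le\sqrt{m\sum_i\|\phi^{\tau_i^k}\|^2_{\Sigma^{-1}}}.
\]
The gap between expected and realized features is again a bounded martingale difference. Summing over $k$ and applying the elliptical potential lemma, with $\|\phi^{\tau_i}\|^2_{\Sigma^{-1}}\le H^2/(m^2\alpha\lambda)$ requiring the clipping $\min\{1,\cdot\}\cdot\max\{\frac{H^2}{m\alpha\lambda},1\}$, yields
\[
\sqrt{Km\cdot d\log(\cdot)\max\{\tfrac{H^2}{m\alpha\lambda},1\}}.
\]
Multiplying by $\gamma_K\sim\sqrt\alpha\,\nu(K)\sqrt d$ and using $\sqrt\alpha=\Theta(\exp(Hr_{\max}/2m))$ gives the bound stated in Theorem~\ref{thm:seg-bits-upper}. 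A union bound over the $\delta'$-events completes the proof.

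The main obstacle is Step 1: obtaining a logistic confidence radius in the $\Sigma_{k-1}$-norm with only a $\sqrt{\alpha}$ curvature loss. This must be done while handling the fact that $\hat\theta$ is unconstrained, so the self-concordance argument must absorb the dependence on $\|\hat\theta-\theta^*\|$ into $\nu$. I would follow Faury et al.'s $\tilde g$ integral-Hessian technique directly.
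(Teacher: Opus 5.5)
Your outline is correct and follows the paper's architecture: the logistic confidence bound $|\phi^\top(\theta^*-\hat\theta_{k-1})|\le\sqrt{\alpha}\,\nu(k-1)\|\phi\|_{\Sigma_{k-1}^{-1}}$, Gaussian anti-concentration along the fixed direction $\phi^{\pi^*}$, Jensen plus Azuma--Hoeffding with increments $H/\sqrt{\alpha\lambda}$ to reach the realized $\phi^{\tau^k}$, the triangle inequality to segment features, and an episode-level elliptical potential where $\sum_i\|\phi^{\tau_i^k}\|^2_{\Sigma_{k-1}^{-1}}\le H^2/(m\alpha\lambda)$ gives the $\max\{\cdot,1\}$ factor.

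One correction concerns Step 1. You do not need a projection or boundedness step on $\hat\theta$, and adding one would change the unconstrained estimator the algorithm actually uses. The paper uses $\sig'\ge 1/\alpha$ only at $\theta^*$. There $|(\phi^{\tau})^\top\theta^*|\le Hr_{\max}/m$ holds automatically, which gives $\Sigma_k\preceq\alpha\Lambda_k(\theta^*)$. The estimator enters only through the mean-value matrix $\Gamma_k(\theta^*,\hat\theta_k)$. This matrix is compared with $\Lambda_k(\theta^*)$ via self-concordance, $b(\phi,\theta^*,\hat\theta_k)\ge\sig'(\phi^\top\theta^*)/(1+|\phi^\top(\theta^*-\hat\theta_k)|)$. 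For segment features one then uses $|\phi^\top(\theta^*-\hat\theta_k)|\le\frac{H}{m\sqrt{\lambda}}\|Z_k-\lambda\theta^*\|_{\Gamma_k^{-1}(\theta^*,\hat\theta_k)}$. Solving the resulting quadratic inequality in $\|Z_k\|_{\Gamma_k^{-1}(\theta^*,\hat\theta_k)}$ is what produces the $\omega(k)^2$ term and the exponent $3/2$ in $\nu(k)$.

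The optimism step also differs. The paper does not use an Agrawal--Goyal or Abeille--Lazaric argument on a high-probability ellipsoid event. Following Efroni et al., it proceeds in three moves:
\begin{itemize}
\item It shows $\E[(\phi^{\pi^*})^\top\theta^*-X\mid F_{k-1}]\le 2\sqrt{2\pi e}\,\E[(X-\E[X\mid F_{k-1}])^+\mid F_{k-1}]$ for $X=(\phi^{\pi^k})^\top\widetilde\theta_k$.
\item It bounds the right-hand side by $\E[|(\phi^{\pi^k})^\top\xi_k|+|(\phi^{\pi^k})^\top\xi_k'|\mid F_{k-1}]$, using an independent copy $\xi_k'$.
\item It controls these first moments by truncating at level $\delta_k=k^{-4}$.
\end{itemize}
This route stays in conditional expectations and needs no global Gaussian event. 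As a result, the optimism probability is not reduced by a failure probability and the bound is anytime. Its only martingales have deterministic increments of order $Hr_{\max}$ and $H/\sqrt{\alpha\lambda}$.

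Your route gives the same order, but you should make two things explicit. First, the usable optimism probability becomes $p$ minus the per-episode failure probability of your Gaussian event. Second, you need to spread $\delta'$ over episodes (e.g.\ $\delta'/k^2$) rather than using $\log(dK/\delta')$ to obtain the ``for any $K$'' statement.
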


Under binary segment feedback, increasing the number of segments \(m\) substantially reduces the regret bound, as captured by the dominating exponential factor. This exponential dependence originates from the information loss induced by the curvature of the sigmoid function.
Specifically, when $m$ is small, each segment is long and the sigmoid output is close to either $0$ or $1$. In this regime, the sigmoid function curve is nearly flat, and it is statistically difficult to infer the underlying reward parameter. Otherwise, when $m$ is large, each segment is short and the sigmoid function curve is steep, making the feedback more informative about the reward parameter. This learning difficulty is characterized by
the sigmoid-curvature factor $\sqrt{\alpha}
=
\Theta(\exp\left(\frac{Hr_{\max}}{2m}\right)
)$.
%, which gives rise to the dominant exponential factor in the regret bound.

\subsubsection{Regret Lower Bound}
Now we establish the first regret lower bound for linear MDPs with binary segment feedback to the best of our knowledge. It demonstrates that the exponential dependence on the segment length in the regret guarantee is indispensable.

\begin{theorem}
\label{thm:seg-bits-lower}
Consider RL with binary segment feedback and known transitions. Suppose that $r_{\max}\ge 1/2$ and
$
K
\ge
\frac{(d-1)^2}
{m\,\sig'\!\left(\frac{H}{m}r_{\max}-\frac12\right)}
\max\left\{
\frac{m^2}{64H^2r_{\max}^2},
\frac14
\right\}$.
% with $|\mathcal A|=2^{d-1}$ satisfying Assumption~\ref{assump:linear_mdp} 
For any algorithm, there exists an instance such that the regret satisfies
\[
\mathbb{E} \left[\cR(K)\right]=
\Omega\left(
\exp\left(\frac{Hr_{\max}}{2m}\right)
d\sqrt{Km}
\right).
\label{eq:seg-bits-lower-bound}
\]
where $\mathbb{E} \left[\cdot\right]$ denotes the expectation over the randomness in the reward feedback and the algorithm.
\end{theorem}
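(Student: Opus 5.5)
We will reduce the problem to a $(d-1)$-dimensional hypercube linear bandit embedded in a linear MDP, and apply an Assouad-type argument using Bernoulli KL divergences. The construction uses a single effective state and $|\cA| = 2^{d-1}$ actions indexed by sign vectors $a\in\{-1,+1\}^{d-1}$. The features are
\[
\psi(s,a)=\Bigl(\tfrac{1}{2},\ \tfrac{a}{2\sqrt{d-1}}\Bigr),
\]
so $\|\psi\|_2\le 1$. Transitions are trivial and known, and $\mu_h^*$ places all mass on the single state. For each $\sigma\in\{-1,+1\}^{d-1}$ define
\[
\theta^\sigma=\Bigl(2r_{\max}-1,\ \varepsilon\,\sigma\Bigr)
\]
with a small $\varepsilon>0$ chosen later. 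Then
\[
r(s,a)=r_{\max}-\tfrac12+\tfrac{\varepsilon}{2\sqrt{d-1}}\langle a,\sigma\rangle\in[-r_{\max},r_{\max}]
\]
once $\varepsilon\le\sqrt{d-1}$; this uses $r_{\max}\ge 1/2$. With this bias choice, every segment's aggregate reward lies in a window around $\frac{H}{m}(r_{\max}-\frac12)$ of width $O(\frac{H\varepsilon}{m}\cdot\frac{\sqrt{d-1}}{\sqrt{d-1}})$. Hence the sigmoid derivative on every segment is $\Theta\bigl(\sig'(\frac{H}{m}r_{\max}-\frac12)\bigr)$, which is of order $\exp(-Hr_{\max}/m)$ up to constants. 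This is the source of the exponential factor.

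\textbf{Regret decomposition.} The optimal policy plays $a=\sigma$ at every step. Each step at which coordinate $j$ of the chosen action disagrees with $\sigma_j$ costs $\varepsilon/\sqrt{d-1}$. Hence
\[
\cR(K)\ \ge\ \frac{\varepsilon}{\sqrt{d-1}}\sum_{j=1}^{d-1}\E_\sigma[N_j],
\]
where $N_j$ is the number of steps, over all $KH$ steps, with $a_{h,j}\ne\sigma_j$. By the standard Assouad argument, average over $\sigma$ and pair $\sigma$ with $\sigma^{(j)}$, the vector obtained by flipping coordinate $j$. Pinsker's inequality then gives
\[
\E_\sigma[N_j]+\E_{\sigma^{(j)}}[N_j']\ \ge\ \frac{KH}{2}\Bigl(1-\sqrt{\tfrac12\mathrm{KL}(\mathbb P_\sigma\|\mathbb P_{\sigma^{(j)}})}\Bigr).
\]

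\textbf{KL bound.} This is the main step. By the chain rule over the $Km$ segment observations, the divergence is a sum of Bernoulli KLs $\mathrm{kl}(\sig(x)\|\sig(x'))$, where $x,x'$ are the aggregate segment rewards under the two parameters. These satisfy $|x-x'|\le\frac{H}{m}\cdot\frac{\varepsilon}{\sqrt{d-1}}$. The Bernoulli KL is bounded by the $\chi^2$ divergence, and the mean-value theorem then gives
\[
\mathrm{kl}\bigl(\sig(x)\|\sig(x')\bigr)\ \lesssim\ \sig'(\bar x)\,(x-x')^2 .
\]
This uses that $\sig'(\bar x)/(\sig(x')(1-\sig(x')))=O(1)$ inside the narrow window; the care required here is to make the window small enough, which is where the $K$ lower-bound assumption enters. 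The result is
\[
\mathrm{KL}\ \lesssim\ Km\,\sig'\!\Bigl(\tfrac{H}{m}r_{\max}-\tfrac12\Bigr)\frac{H^2\varepsilon^2}{m^2(d-1)}.
\]
We then set
\[
\varepsilon\asymp\frac{m\sqrt{d-1}}{H\sqrt{Km\,\sig'(\cdot)}},
\]
which makes the KL a small constant. The assumed lower bound on $K$ guarantees that this $\varepsilon$ satisfies both $\frac{H\varepsilon}{m}\le 1$ and the reward-range constraint.

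\textbf{Conclusion.} Plugging $\varepsilon$ back in,
\[
\E[\cR(K)]\ \gtrsim\ \frac{\varepsilon}{\sqrt{d-1}}\,(d-1)\,KH\ \asymp\ (d-1)\sqrt{\frac{Km}{\sig'(\cdot)}} .
\]
Finally, $1/\sqrt{\sig'(\frac{H}{m}r_{\max}-\frac12)}=\Theta(\exp(\frac{Hr_{\max}}{2m}))$, since $\sig'(x)\approx e^{-|x|}$ and the $-\frac12$ shift contributes only a constant factor. This yields $\Omega\bigl(\exp(\frac{Hr_{\max}}{2m})\,d\sqrt{Km}\bigr)$.

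The main obstacle is the KL step: bounding the Bernoulli KL by the curvature $\sig'$ *uniformly* over all adaptively chosen segments. This requires keeping every segment's aggregate reward within $O(1)$ of the biased level, and that is exactly the condition the assumption on $K$ is designed to enforce.
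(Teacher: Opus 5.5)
Your proposal has a genuine gap: the construction puts the segment logits in the wrong place, and that destroys the exponential factor. With $\psi_1=\frac12$ and $\theta^\sigma_1=2r_{\max}-1$, the per-step baseline reward is $r_{\max}-\frac12$. So every segment's aggregate reward lies near $\frac{H}{m}(r_{\max}-\frac12)=\frac{H}{m}r_{\max}-\frac{H}{2m}$, not near $\frac{H}{m}r_{\max}-\frac12$. You conflate these two quantities when you assert that the curvature on every segment is $\Theta\bigl(\sig'(\frac{H}{m}r_{\max}-\frac12)\bigr)$.

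Since $\sig'$ is decreasing on $[0,\infty)$, the true curvature $\sig'\bigl(\frac{H}{m}(r_{\max}-\frac12)\bigr)$ exceeds $\sig'(\frac{H}{m}r_{\max}-\frac12)$ by a factor of at least $\frac14 e^{\frac{H}{2m}-\frac12}$, which is unbounded in $H/m$. With your $\varepsilon$, the KL bound you can actually prove is of order $e^{H/(2m)}$ rather than a small constant, so the Pinsker step gives nothing. Re-tuning $\varepsilon$ to the true curvature yields only $\Omega\bigl(\exp(\frac{H(r_{\max}-1/2)}{2m})\,d\sqrt{Km}\bigr)$, which is off by a factor $e^{-H/(4m)}$. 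For $r_{\max}=\frac12$ the exponential disappears entirely.

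The missing idea is this. The algorithm may play any action throughout a segment, so every action's per-step reward must be within $O(m/H)$ of $r_{\max}$. That is, the bias must essentially saturate $r_{\max}$, and the spread of rewards across actions, multiplied by $H/m$, must be $O(1)$. The $-\frac12$ in the theorem comes from this segment-level spread, not from a per-step offset.

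The paper achieves this with $\psi(s,a)=\frac{1}{\sqrt d}(a,1)$ and $\theta_\gamma=\bigl(\sqrt d\,\Delta\gamma,\ \sqrt d\,(r_{\max}-(d-1)\Delta)\bigr)$. This gives $r_\gamma(s,a)=r_{\max}-(d-1)\Delta+\Delta\langle a,\gamma\rangle\in[r_{\max}-2(d-1)\Delta,\,r_{\max}]$. The hypothesis on $K$ gives both $(d-1)\Delta\le r_{\max}$ (needed for the normalization) and $\frac{H}{m}(d-1)\Delta\le\frac14$. Hence every logit is at least $\frac{H}{m}r_{\max}-\frac12\ge 0$, and monotonicity of $\sig'$ yields the per-segment bound $\frac12\sig'(\frac{H}{m}r_{\max}-\frac12)(x-y)^2$.

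The $\frac{1}{\sqrt d}$ scaling of the constant coordinate is what lets the baseline reach $r_{\max}$ while keeping $\|\theta_\gamma\|_2\le r_{\max}\sqrt d$ and $\|\mu_h^*(\cS)\|_2\le\sqrt d$. With your constant feature $\frac12$ you would need $\theta_1\approx 2r_{\max}$ and $\mu_h^*(s)=(2,0,\dots,0)$, which violate Assumption~\ref{assump:linear_mdp} when $d<4$. Your current $\theta^\sigma$ already violates $\|\theta\|_2\le r_{\max}\sqrt d$ for $d\le 3$ and large $r_{\max}$, a check you did not make.

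Once the logits are placed correctly, the rest of your outline goes through essentially as in the paper. That includes the $\chi^2$/mean-value bound on the Bernoulli KL, Assouad with Pinsker in place of Bretagnolle--Huber, and the per-coordinate regret accounting.
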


Theorem~\ref{thm:seg-bits-lower} exhibits that the exponential dependence on $\frac{Hr_{\max}}{m}$ in Theorem~\ref{thm:seg-bits-upper} is intrinsic to binary segment feedback. In particular, as the number of segments $m$ increases, the regret bound decreases rapidly through the exponential factor $\exp(\frac{Hr_{\max}}{2m})$. Therefore, \emph{under binary feedback, increasing the number of segments accelerates learning. }

% \yihan{add a sentence to summarize/highlight proof novelty, e.g., ``We develop a novel lower bound analysis, which ... and derives the expoential dependence that is essential to the problem''}
% In the analysis, we view the averaged action within each segment as an effective linear bandit action, and bound the KL divergence between neighboring Bernoulli distributions using the curvature of the sigmoid function. Then, we relate this KL divergence to the probability of misidentifying the reward parameter for each direction. 
% By introducing the derivative of the sigmoid function $\sig'(x)=\frac{1}{\exp(x)+\exp(-x)+2}$, we derive the claimed exponential dependence $\exp(\frac{Hr_{\max}}{2m})$ in the regret lower bound. 
The main challenge of the lower bound proof is to derive the exponential dependence on the segment length $\frac{H}{m}$. 
We construct hard instances where the averaged action within each segment is viewed as an effective linear bandit action, and the aggregate segment reward lies in the nearly flat region of the sigmoid function. In this region, the KL divergence of Bernoulli observations under neighbouring instances is exponentially small in $\frac{Hr_{\max}}{m}$. Keeping the KL divergence bounded by a constant, the resulting choice of the reward gap $\Delta$ in the regret analysis gives rise to the exponential dependence. 
Below we provide a proof sketch of Theorem~\ref{thm:seg-bits-lower}.

\begin{proof}[Proof sketch.]
Construct the following linear MDP instance: $\cS=\{s_1,\ldots,s_N\}$ and $\cA=\{-1,1\}^{d-1}$, and for any $s_j\in\cS$,
\[
\psi(s_j,a)
=
\frac{1}{\sqrt d}
\begin{pmatrix}
a\\1
\end{pmatrix},
\qquad
\mu_h^*(s_j)
=
\begin{pmatrix}
\mathbf 0_{d-1}\\ \sqrt d/N
\end{pmatrix},
\qquad
\theta_\gamma^*
=
\begin{pmatrix}
\sqrt d\,\Delta\gamma\\
\sqrt d(r_{\max}-(d-1)\Delta)
\end{pmatrix},
\]
where $\gamma\in\{-1,1\}^{d-1}$. This gives $
P_h(s_{j'}\mid s_j,a)=\frac{1}{N}
$ for every $(s_j,a,s_{j'})$ and
$
r_\gamma(s_j,a)
=
r_{\max}-(d-1)\Delta
+
\Delta\langle a,\gamma\rangle.
$
Thus, the first $d-1$ coordinates encode $d-1$ independent components of the reward parameter, while the last coordinate induces uniform transitions and a common positive reward baseline. The optimal action is $a=\gamma$, and hence learning an optimal policy requires identifying all signs of $\gamma$.

For the $t$-th segment, let
$
\bar A_t
\coloneqq
\frac{1}{H/m}
\sum_{\ell=1}^{H/m}a_{t,\ell}
$. The averaged action $\bar A_t$ can be regarded as an action in a $(d-1)$-dimensional linear bandit problem, since the aggregate segment reward depends on $\gamma$ only through the inner product $\langle \bar A_t,\gamma\rangle$. 
Conditional on the chosen actions, the binary feedback is Bernoulli with mean $
\sig(
\frac{H}{m}
\left[
r_{\max}-(d-1)\Delta
+
\Delta\langle\bar A_t,\gamma\rangle
\right]
).
$ 
We choose
\begin{align}
\Delta
=
\frac{1}{
8\cdot\frac{H}{m}
\sqrt{
Km\,\sig'\!\left(
\frac{H}{m}r_{\max}-\frac12
\right)
}
}. \label{eq:def_Delta}
\end{align}
Under the stated condition on $K$ in Theorem~\ref{thm:seg-bits-lower}, this choice ensures that every aggregate segment reward satisfies
$$
\frac{H}{m}
\left(
r_{\max}-(d-1)\Delta
+
\Delta\langle\bar A_t,\gamma\rangle
\right)
\ge
\frac{H}{m}
\left(
r_{\max}-2(d-1)\Delta
\right)
\ge
\frac{H}{m}r_{\max}-\frac12
\ge 0.
$$
Hence, all aggregate segment rewards lie in the region where \(\sig'(\cdot)\) is decreasing. 

Then, we analyze the statistical information available for distinguishing neighboring instances. Fix $i\in[d-1]$, and let $\gamma'$ differ from $\gamma$ only in coordinate $i$. The segment aggregate rewards under these two parameters differ by $
2\frac{H}{m}\Delta
\left|
\bar A_{t,i}
\right|.
$
For any $x,y\in\mathbb R$, the KL divergence of the Bernoulli distributions parameterized by $x$ and $y$ satisfies

\begin{align}
\operatorname{KL}
\left(
\mathcal B(\sig(x)),
\mathcal B(\sig(y))
\right)
\le
\frac12
\sup_{u\in[x,y]}
\sig'(u)(x-y)^2. \label{eq:kl_bernoulli}
\end{align}

% Let $\mathbb P_\gamma$ denote the probability measure induced by any algorithm under the instance parameterized by $\gamma$. 

Let \(\mathbb D_\gamma\) denote the distribution of the interaction history generated by the algorithm under the instance parameterized by \(\gamma\).
Since all aggregate segment rewards are at least $\frac{H}{m}r_{\max}-\frac12$ and $\sig'(\cdot)$ is decreasing on $[0,\infty)$, using Eq.~\eqref{eq:kl_bernoulli}, we have
%following analysis of (Lattimore and Szepesv{\'a}r,~\citeyear{lattimore2020bandit}) yields 

$$
\begin{aligned}
\operatorname{KL}
\left(
\mathbb D_\gamma,
\mathbb D_{\gamma'}
\right)
&\le
\frac12
\sig'\!\left(
\frac{H}{m}r_{\max}-\frac12
\right)
\sum_{t=1}^{Km}
\mathbb E_\gamma
\left[
4
\left(\frac{H}{m}\right)^2
\Delta^2\bar A_{t,i}^2
\right]
\\
&\le
2Km\,
\sig'\!\left(
\frac{H}{m}r_{\max}-\frac12
\right)
\left(\frac{H}{m}\right)^2
\Delta^2 
\\
&=
\frac{1}{32}.
\end{aligned}
$$

% Therefore, even after all $Km$ segment observations, neighboring instances remain difficult to distinguish. The exponential dependence arises from the sigmoid curvature in the preceding KL bound. In particular,
Here the derivative of the sigmoid function induces the exponential factor:
$$
\begin{aligned}
\sig'\!\left(
\frac{H}{m}r_{\max}-\frac12
\right)
&=
\frac{1}{
\exp\!\left(
\frac{H}{m}r_{\max}-\frac12
\right)
+
2
+
\exp\!\left(
-\frac{H}{m}r_{\max}+\frac12
\right)
}
\\
&=
\Theta\!\left(
\exp\!\left(
-\frac{Hr_{\max}}{m}
\right)
\right).
\end{aligned}
$$

% Consequently,

% $$
% \frac{1}{
% \sqrt{
% \sig'\!\left(
% \frac{H}{m}r_{\max}-\frac12
% \right)
% }
% }
% =
% \Theta\!\left(
% \exp\!\left(
% \frac{Hr_{\max}}{2m}
% \right)
% \right).
% $$

% Thus, although the total KL divergence is bounded by a constant, the exponentially small curvature allows us to choose
% Following the classical lower bound analytical procedure~(Lattimore and Szepesv{\'a}r,~\citeyear{lattimore2020bandit}), 
% Recall that we choose
% $
% \Delta
% =
% \Theta\!\left(
% \frac{
% \exp\!\left(
% \frac{Hr_{\max}}{2m}
% \right)
% }{
% \frac{H}{m}\sqrt{Km}
% }
% \right)
% $.
% Recall the choice of $\Delta$ in Eq.~\eqref{eq:def_Delta}, which gives rise to the exponential dependence in the lower bound. 
Finally, the Bretagnolle-Huber inequality implies that the sign of each coordinate under two neighbouring instances cannot be reliably distinguished. 
%Averaging over $\gamma\in\{-1,1\}^{d-1}$, we can obtain that, a constant fraction of the \(d-1\) action coordinates have incorrect signs over the \(Km\) segments in expectation.
Averaging over $\gamma\in\{-1,1\}^{d-1}$ shows that the expected number of coordinate-wise sign errors across the $Km$ segments is at least a constant fraction of $(d-1)Km$. Since each coordinate-wise error incurs a  regret of $\frac{H}{m}\Delta$, we have

$$
\begin{aligned}
\mathbb{E} \left[ \cR(K)\right]
&=
\Omega\!\left(
dKm\frac{H}{m}\Delta
\right)
\\
&=
\Omega\!\left(
d
\sqrt{
\frac{Km}{
\sig'\!\left(
\frac{H}{m}r_{\max}-\frac12
\right)
}
}
\right)
\\
&=
\Omega\!\left(
d
\exp\!\left(
\frac{Hr_{\max}}{2m}
\right)
\sqrt{Km}
\right) ,
\end{aligned}
$$
which establishes the claimed lower bound.
%and shows that the exponential dependence on $\frac{H}{m}$ is unavoidable under binary segment feedback.
\end{proof}

\subsection{Sum Feedback}
For sum feedback, we develop algorithm $\edlinucbsegd$, which uses the E-optimal experimental design to refine the conditioning of the segment covariance matrix during the initial exploration phase. We provide regret upper and lower bounds  to demonstrate the near-optimality of $\edlinucbsegd$.
%, yielding near-optimal dependence on $d$, $H$, and $K$, with no polynomial dependence on $m$. 

\subsubsection{Algorithm $\edlinucbsegd$ and Regret Upper Bound}
\label{sec:known-sum-upper}

Under sum segment feedback, the reward observation collected from segment $\tau_i^k$ takes the linear form
$    R_i^k
    =
    \langle \phi^{\tau_i^k},\theta^*\rangle
    +
    \varepsilon_i^k.$
Moreover, since the transition kernels are known, 
% the expected episode feature $\phi^\pi$ can be evaluated for any policy $\pi$, and its expected cumulative reward is $(\phi^\pi)^\top\theta^*$. Consequently, 
reward learning in this setting can be viewed as a linear bandit problem (Dani et al., \citeyear{dani2008colt-stochastic}; Abbasi-Yadkori et al., \citeyear{abbasi2011improved}) where each episode selects a policy and produces $m$ reward observations. 
%This motivates an optimistic algorithm based on LinUCB~(Abbasi-Yadkori et al., \citeyear{abbasi2011improved}).
However, a direct application of existing linear bandit algorithms may result in overly large confidence intervals due to a poorly conditioned covariance matrix, leading to a suboptimal regret bound. 

To address this issue and accurately reveal the influence of segments on the
regret performance, $\edlinucbsegd$ adopts the E-optimal design~(Pukelsheim,
\citeyear{pukelsheim2006optimal}) in the initial exploration phase.
The procedure of $\edlinucbsegd$ is presented in
Algorithm~\ref{alg:ed_linucb_seg_d}.
Specifically, the E-optimal design step finds a distribution $w^*$ over
policies that minimizes the spectral norm of the inverse expected covariance
matrix, or equivalently, maximizes the minimum eigenvalue of the expected
covariance matrix (Line~\ref{line:e_optimal_design}).
The distribution $w^*$ is then converted into a sequence of $K_0$ exploration
policies using the rounding procedure $\round$ in~(Allen-Zhu et al.,
\citeyear{allen2021near})
(Lines~\ref{line:K_0} and~\ref{line:round}).
Next, $\edlinucbsegd$ plays these $K_0$ policies
(Line~\ref{line:initial_exploration}).
This initial exploration phase guarantees a sufficiently well-conditioned
covariance matrix.
After this phase, $\edlinucbsegd$ constructs a regularized least-squares
estimate for reward parameter $\theta^*$ (Line~\ref{line:hat_theta}),
and selects the policy with the highest optimistically estimated cumulative
reward (Line~\ref{line:pi_k_elinucb}).

\begin{algorithm}[H]
    \caption{$\edlinucbsegd$}
    \label{alg:ed_linucb_seg_d}
    \begin{algorithmic}[1]
        \State {\bfseries Input:}
        $\delta,\delta' \coloneqq \frac{\delta}{3},
        \lambda \coloneqq \frac{H}{r_{\max}^2 m},$
        rounding procedure $\round$,
        rounding approximation parameter $\gamma \coloneqq \frac{1}{10}$.
        $\beta(k) \coloneqq
        \sqrt{\frac{Hd}{m}
        \left(
        \log(1+\frac{kH^2}{\lambda d m})
        +2\log(\frac{1}{\delta'})
        \right)}
        +r_{\max}\sqrt{\lambda d}, \forall k>0$.

        \State \label{line:e_optimal_design}
        Let $w^* \in \triangle_{\Pi}$ and $z^*$ be the optimal solution
        and optimal value of the optimization:
        \begin{align}
            \min_{w \in \triangle_{\Pi}}
            \Bigg\|
            \Bigg(
            \sum_{\pi \in \Pi} w(\pi)
            \bigg(
            \sum_{i=1}^{m}
            \ex_{\tau_i \sim \pi}
            \mbr{\phi^{\tau_i}(\phi^{\tau_i})^\top}
            \bigg)
            \Bigg)^{-1}
            \Bigg\|
            \label{eq:ed_opt}
        \end{align}

        \State
        $K_0 \leftarrow
        \lceil \max\{
        26(1+\gamma)^2(z^*)^2H^4
        \log(\frac{2d}{\delta'}),
        \ \frac{d}{\gamma^2}
        \} \rceil$
        \label{line:K_0}\;

        \State
        $(\pi^1,\dots,\pi^{K_0}) \leftarrow
        \round(
        \{\sum_{i=1}^{m}
        \ex_{\tau_i \sim \pi}
        \mbr{\phi^{\tau_i}(\phi^{\tau_i})^\top}
        \}_{\pi \in \Pi},
        w^*, \gamma, K_0)$
        \label{line:round}\;

        \State
        Play $K_0$ episodes with policies $\pi^1,\dots,\pi^{K_0}$.
        Observe trajectories $\tau^1,\dots,\tau^{K_0}$ and rewards
        $\{R^1_i\}_{i=1}^{m},\dots,\{R^{K_0}_i\}_{i=1}^{m}$
        \label{line:initial_exploration}\;

        \For{$k=K_0+1,\dots,K$}

        \State
        $\hat{\theta}_{k-1} \leftarrow
        (\lambda I
        +\sum_{k'=1}^{k-1}\sum_{i=1}^{m}
        \phi^{\tau^{k'}_i}(\phi^{\tau^{k'}_i})^\top)^{-1}
        \sum_{k'=1}^{k-1}\sum_{i=1}^{m}
        \phi^{\tau^{k'}_i}R^{k'}_i$
        \label{line:hat_theta}\;

        \State
        $\Sigma_{k-1} \leftarrow
        \lambda I
        +\sum_{k'=1}^{k-1}\sum_{i=1}^{m}
        \phi^{\tau^{k'}_i}(\phi^{\tau^{k'}_i})^\top$
        \label{line:Sigma}\;

        \State
        $\pi^k \leftarrow
        \argmax_{\pi \in \Pi}
        ((\phi^{\pi})^\top\hat{\theta}_{k-1}
        +\beta(k-1)\cdot
        \|\phi^{\pi}\|_{(\Sigma_{k-1})^{-1}})$
        \label{line:pi_k_elinucb}\;

        \State
        Play episode $k$ with policy $\pi^k$.
        Observe trajectory $\tau^k$ and sum segment feedback
        $\{R^k_i\}_{i=1}^{m}$
        \label{line:play_episode_elinucb}\;

        \EndFor
    \end{algorithmic}
\end{algorithm}

Now we state the regret guarantee for algorithm $\edlinucbsegd$.

\begin{theorem}
\label{thm:regret_sum_fixed_segment_d_known}
Under Assumption~\ref{assump:linear_mdp}, with probability at least $1-\delta$, for any $K>0$, the regret of Algorithm~\ref{alg:ed_linucb_seg_d} satisfies
\[
\cR(K)
=
O\left(
d\sqrt{HK}
\log\left(
\left(
1+\frac{KH r_{\max}^2}{dm}
\right)
\frac{1}{\delta}
\right)
+
(z^*)^2H^5
\log\left(
\frac{d}{\delta}
\right)
+
dH
\right).
\]
\end{theorem}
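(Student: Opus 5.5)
We follow the standard LinUCB analysis, split into the $K_0$ exploration episodes and the optimistic phase. The exploration phase contributes at most $K_0\cdot 2Hr_{\max}$ regret, since $|(\phi^\pi)^\top\theta^*|\le Hr_{\max}$. With the definition of $K_0$ in Line~\ref{line:K_0}, this gives the $(z^*)^2H^5\log(d/\delta)$ term, with $r_{\max}$ absorbed by normalization, and a $dH$ term from the $d/\gamma^2$ branch.

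For the optimistic phase, the plan is as follows.

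\textbf{Step 1 (confidence set).} Note that $R_i^k-(\phi^{\tau_i^k})^\top\theta^*$ is a sum of $H/m$ independent $1$-sub-Gaussian variables, hence $\sqrt{H/m}$-sub-Gaussian. The self-normalized bound of Abbasi-Yadkori et al.\ then gives $\|\hat\theta_{k-1}-\theta^*\|_{\Sigma_{k-1}}\le\beta(k-1)$ with probability $1-\delta'$, using $\|\phi^{\tau_i}\|\le H/m$ and $\|\theta^*\|\le r_{\max}\sqrt d$.

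\textbf{Step 2 (optimism).} By optimism, $V_1^*-V_1^{\pi^k}\le 2\beta(k-1)\|\phi^{\pi^k}\|_{\Sigma_{k-1}^{-1}}$, after taking expectation over $s_1$; all quantities involve the expected feature $\phi^\pi$.

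\textbf{Step 3 (the key difficulty).} The matrix $\Sigma_{k-1}$ is built from realized segment features $\phi^{\tau_i^k}$, not from $\phi^{\pi^k}$. We write
\[
\|\phi^{\pi^k}\|_{\Sigma^{-1}}\le\sum_{i}\|\E[\phi^{\tau_i^k}]\|_{\Sigma^{-1}}\le\sqrt m\Big(\sum_i\E\|\phi^{\tau_i^k}\|^2_{\Sigma^{-1}}\Big)^{1/2},
\]
using Jensen and Cauchy--Schwarz. We then replace the expectation by the realized value via a martingale (Azuma/Freedman) concentration. This is where the initial exploration is needed. The E-optimal design together with matrix concentration (matrix Chernoff over $K_0$ rounded episodes, with the $\round$ guarantee of Allen-Zhu et al.) yields $\lambda_{\min}(\Sigma_{K_0})\ge K_0/((1+\gamma)z^*)$ minus deviations. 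With $K_0$ as chosen, this is at least of order $H^2$, i.e.\ at least the squared feature norm. As a consequence, each term $\|\phi^{\tau_i^k}\|^2_{\Sigma^{-1}}\le 1$, so the elliptical potential lemma applies without the usual $\min\{1,\cdot\}$ truncation. It also makes the martingale differences bounded by $O(1)$, so the concentration error is $O(\sqrt{K\log(1/\delta)})$ rather than scaling with $H$. This step (the matrix concentration plus controlling the interleaving of $m$ rank-one updates within an episode) is the main obstacle. For the interleaving, we would bound $\Sigma_{k-1}$ against intermediate matrices $\Sigma_{k-1}+\sum_{j<i}\phi^{\tau_j^k}\phi^{\tau_j^k\top}$, using that the well-conditioning implies these differ by at most a constant factor.

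\textbf{Step 4 (summation).} Summing, we get
\[
\sum_k\|\phi^{\pi^k}\|_{\Sigma^{-1}}\le\sqrt{Km\cdot 2d\log(1+KH^2/(\lambda dm))}+O(\sqrt{K m\log(1/\delta)}).
\]
Multiplying by $\beta(K)=O(\sqrt{Hd\log(\cdot)/m}+r_{\max}\sqrt{\lambda d})$ with $\lambda=H/(r_{\max}^2m)$ gives $\beta=O(\sqrt{Hd\log/m})$. The product is then $O(d\sqrt{HK}\log((1+KHr_{\max}^2/(dm))/\delta))$, in which $m$ cancels. A union bound over the three events, each with probability $\delta'=\delta/3$, completes the proof.
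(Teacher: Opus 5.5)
Your outline matches the paper's proof: $2K_0Hr_{\max}$ for the exploration phase, the self-normalized confidence set for $\sqrt{H/m}$-sub-Gaussian segment noise, and optimism giving $2\beta(k-1)\|\phi^{\pi^k}\|_{\Sigma_{k-1}^{-1}}$. It also uses matrix Bernstein plus the $\round$ guarantee to get $\Sigma_{k-1}\succeq H^2I$ for $k>K_0$, hence $\sum_i\|\phi^{\tau_i^k}\|^2_{\Sigma_{k-1}^{-1}}\le 1$. Finally, it uses an elliptical potential bound in which $m$ cancels against $\beta(K)$.

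The step that does not go through as written is the passage from expected to realized features in Step~3. You apply Jensen and Cauchy--Schwarz first, reaching $\sqrt{m}\,\bigl(\E[Z_k\mid F_{k-1}]\bigr)^{1/2}$ with $Z_k:=\sum_i\|\phi^{\tau_i^k}\|^2_{\Sigma_{k-1}^{-1}}$, and only then concentrate.

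\begin{itemize}
\item A martingale argument compares realized quantities with their conditional means. But $\E[\sqrt{mZ_k}\mid F_{k-1}]\le\sqrt{m\,\E[Z_k\mid F_{k-1}]}$, so the realized $\sqrt{mZ_k}$ cannot stand in for your upper bound.
\item Suppose instead you concentrate $\sum_k\E[Z_k\mid F_{k-1}]$ against $\sum_kZ_k$ and then use $\sum_k\sqrt{m\,\E[Z_k\mid F_{k-1}]}\le\bigl(Km\sum_k\E[Z_k\mid F_{k-1}]\bigr)^{1/2}$. Then an Azuma error of order $\sqrt{K\log(1/\delta)}$ ends up under the square root. That is exactly what your remark about $O(1)$ martingale differences delivers. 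After multiplying by $\beta(K)$, it produces a term of order $K^{3/4}$.
\end{itemize}

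The additive $O(\sqrt{Km\log(1/\delta)})$ in your Step~4 only follows from Freedman's inequality with the self-bounding variance $\mathrm{Var}(Z_k\mid F_{k-1})\le\E[Z_k\mid F_{k-1}]$. This is valid because $0\le Z_k\le 1$ on the exploration event. It gives $\sum_k\E[Z_k\mid F_{k-1}]\le 2\sum_kZ_k+O(\log(1/\delta))$, and you need to commit to that argument.

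The paper avoids the issue by concentrating before any squaring:
\begin{itemize}
\item Jensen gives $\|\phi^{\pi^k}\|_{\Sigma_{k-1}^{-1}}\le\E[\|\phi^{\tau^k}\|_{\Sigma_{k-1}^{-1}}\mid F_{k-1}]$.
\item Azuma then applies to the bounded quantity $\|\phi^{\tau^k}\|_{\Sigma_{k-1}^{-1}}\le 1$, with purely additive error $4\sqrt{K\log(4K/\delta')}$.
\item Only afterwards come the triangle inequality $\|\phi^{\tau^k}\|_{\Sigma_{k-1}^{-1}}\le\sum_i\|\phi^{\tau_i^k}\|_{\Sigma_{k-1}^{-1}}$ and Cauchy--Schwarz over the $Km$ realized segment norms.
\end{itemize}

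The within-episode interleaving you single out as the main obstacle is a one-liner in the paper. For positive semidefinite $A$, $\det(I+A)\ge 1+\mathrm{tr}(A)$, so $\log\det\Sigma_k-\log\det\Sigma_{k-1}\ge\log\bigl(1+\sum_i\|\phi^{\tau_i^k}\|^2_{\Sigma_{k-1}^{-1}}\bigr)$. Combined with $x\le 2\log(1+x)$ on $[0,1]$, this gives a batched elliptical potential bound directly in terms of $\Sigma_{k-1}$. Your comparison with intermediate matrices also works, since every within-episode partial sum is $\preceq (H^2/m)I\preceq\Sigma_{k-1}/m$, but it is unnecessary.
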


When $K$ is sufficiently large, the dominant term in Theorem~\ref{thm:regret_sum_fixed_segment_d_known} is
$\widetilde O(d\sqrt{HK})$, while the second order terms arise from the E-optimal exploration phase. As shown by the lower bound in the next subsection, this upper bound for algorithm $\edlinucbsegd$ is near-optimal. 

This result conveys a different insight on the impact of segments from that under binary feedback: increasing the number of segments does not substantially decrease the regret. Specifically, a larger $m$ provides more reward observations, but meanwhile, under shorter segments, the segment feature \(\phi^{\tau^{k'}_i}\) contributed to the covariance matrix \(\Sigma_{k}\) is smaller, inflating the reward estimation uncertainty \(\|\phi^{\pi}\|_{(\Sigma_{k})^{-1}}\). These two effects cancel out when estimating the cumulative expected reward over an entire episode. Consequently, $m$ appears only in the logarithmic factor, and increasing $m$ yields no polynomial improvement in regret performance.

\subsubsection{Regret Lower Bound}
\label{sec:known-sum-lower}

We now establish a lower bound to demonstrate the tightness of our upper bound.
%to corroborate that the dependence on $K$, $H$, and $m$ in  Theorem~\ref{thm:regret_sum_fixed_segment_d_known} cannot be improved in general (when ignoring logarithmic factors).

\begin{theorem}
\label{thm:regret_sum_fixed_segment_d_known_lower}
Consider RL with sum segment feedback and known transitions. Suppose that
$K \geq \frac{(d-1)^2}{H r_{\max}^2}$.
 For any algorithm, there exists an instance such that the regret satisfies 
\[
   \mathbb{E}\left[\cR(K)\right]= \Omega\left(
        d\sqrt{HK}
    \right),
\]
where $\mathbb{E} \left[\cdot\right]$ denotes the expectation over the randomness in the reward feedback and the algorithm.
\end{theorem}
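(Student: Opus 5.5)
We mirror the construction used in the proof sketch of Theorem~\ref{thm:seg-bits-lower}, replacing the Bernoulli KL bound by a Gaussian one. Take $\cS=\{s_1,\ldots,s_N\}$, $\cA=\{-1,1\}^{d-1}$, features $\psi(s_j,a)=\frac{1}{\sqrt d}(a^\top,1)^\top$, uniform transitions via $\mu_h^*(s_j)=(\mathbf 0_{d-1}^\top,\sqrt d/N)^\top$, and reward parameter $\theta_\gamma^*=(\sqrt d\,\Delta\gamma^\top,\ \sqrt d(r_{\max}-(d-1)\Delta))^\top$ for $\gamma\in\{-1,1\}^{d-1}$. This gives $r_\gamma(s,a)=r_{\max}-(d-1)\Delta+\Delta\langle a,\gamma\rangle$. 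For this to be valid we need $\Delta\le r_{\max}/(d-1)$, so that rewards lie in $[-r_{\max},r_{\max}]$ and $\|\theta^*_\gamma\|_2\le r_{\max}\sqrt d$. We take the noise $\varepsilon_h^k\sim\mathcal N(0,1)$ i.i.d., so each segment observation is $R_i^k\sim\mathcal N\bigl(\frac Hm[r_{\max}-(d-1)\Delta+\Delta\langle\bar A_{i}^k,\gamma\rangle],\,H/m\bigr)$, where $\bar A_i^k$ is the averaged action over the segment. Since transitions are uniform and state-independent, the problem reduces to a $(d-1)$-dimensional linear bandit with $Km$ segment observations. The per-step optimal action is $a=\gamma$, and each coordinate $\ell$ with $a_{h,\ell}\neq\gamma_\ell$ costs $2\Delta$ regret at that step.

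Next we bound the information between neighbours. Fix $i\in[d-1]$ and let $\gamma'$ flip coordinate $i$. The means of a segment observation differ by $2\frac Hm\Delta|\bar A_{t,i}|\le 2\frac Hm\Delta$, and the variance is $H/m$. Hence the Gaussian KL per segment is at most $\frac{(2H\Delta/m)^2}{2H/m}=\frac{2H\Delta^2}{m}$. By the chain rule over the $Km$ segments,
\[
\operatorname{KL}(\mathbb D_\gamma,\mathbb D_{\gamma'})\le 2KH\Delta^2 .
\]
Choose $\Delta=\frac{1}{8\sqrt{KH}}$, so the KL is at most $1/32$. The validity condition $\Delta\le r_{\max}/(d-1)$ then reads $K\ge\frac{(d-1)^2}{64Hr_{\max}^2}$, which is implied by the assumption $K\ge\frac{(d-1)^2}{Hr_{\max}^2}$. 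Note that $m$ cancels here: there are more observations but each is noisier and carries proportionally less signal. This cancellation is the source of the $m$-independence.

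We then use the standard Assouad argument. Let $T_i=\sum_{k,h}\indicator\{a_{h,i}^k\neq\gamma_i\}$, and note that $\cR(K)\ge 2\Delta\sum_i\E_\gamma[T_i]$. We apply Bretagnolle--Huber to the event $\{T_i\ge KH/2\}$, or more precisely use $\E_\gamma[T_i]+\E_{\gamma'}[KH-T_i]\ge \frac{KH}{2}\exp(-\operatorname{KL})$. Averaging over $\gamma$ uniform on $\{-1,1\}^{d-1}$ then gives $\frac{1}{2^{d-1}}\sum_\gamma\sum_i\E_\gamma[T_i]\ge (d-1)\frac{KH}{4}e^{-1/32}$. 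So some $\gamma$ has regret at least $c\,dKH\Delta=\Omega(d\sqrt{HK})$.

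The main subtlety is the KL chain-rule step. The $\bar A_{t,i}$ are adaptively chosen, and the trajectory states are also part of the history. The states have identical distributions under both instances, since transitions do not depend on $\gamma$. So only the reward observations contribute, and the bound $|\bar A_{t,i}|\le1$ holds uniformly, which makes the adaptivity harmless. We also need to check that the per-step regret accounting (sum over coordinates of $2\Delta$ mismatches) equals $V^*-V^{\pi^k}$ in expectation. This holds because the value is additive over steps under uniform transitions.
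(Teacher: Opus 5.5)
Your proposal is correct and follows essentially the same route as the paper's proof. Both use the same hypercube linear MDP with uniform transitions and $\mathcal N(0,1)$ step noise, a per-segment Gaussian KL in which $m$ cancels so that the total KL between coordinate-flipped neighbours is a constant, Bretagnolle--Huber, and averaging over $\gamma\in\{-1,1\}^{d-1}$.

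The remaining differences are cosmetic. The paper drops the reward baseline: it takes $\theta^*_\gamma=(\frac{\sqrt d}{\sqrt{KH}}\gamma^\top,0)^\top$, i.e.\ $\Delta=1/\sqrt{KH}$, with total KL at most $2$. It also counts sign errors of the segment-averaged actions $\bar A_{t,i}$ over the $Km$ segments. You instead count per-step mismatches $T_i$ and apply Bretagnolle--Huber step by step, which is exactly what justifies your bound $\frac{KH}{2}e^{-\operatorname{KL}}$; this is equally valid and slightly cleaner.
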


Theorem~\ref{thm:regret_sum_fixed_segment_d_known_lower} validates that the regret upper bound for algorithm $\edlinucbsegd$ (Theorem~\ref{thm:regret_sum_fixed_segment_d_known}) nearly matches the lower bound. More importantly, this lower bound confirms that the essential regret for reward learning under sum segment feedback is independent of $m$. Therefore, unlike binary feedback, \emph{under sum feedback, increasing the number of segments does not  substantially reduce the regret}.

\section{Equal-length Segment Feedback with Unknown Transitions}

With known transitions, a policy can be evaluated once a candidate reward parameter is available in a manner as in linear bandits. When the transitions are unknown, however, the agent must learn the reward parameter and transition dynamics simultaneously. This introduces a mismatch between segment-level reward feedback and step-wise value iteration, preventing a straightforward application of existing linear MDP algorithms, e.g., (Jin et al.~\citeyear{jin20_linearmdp}). To bridge this mismatch, we develop $\seglsvits$, a unified framework that establishes step-wise Bellman recursion using multiple Thompson-sampled reward models inferred from segment feedback. 
For binary and sum feedback, it respectively performs logistic and linear regression with posterior sampling to estimate the reward parameter using segment-level observations, and integrates the estimated reward parameters into  optimistic least-squares value iteration to ensure optimism for both reward and transition uncertainty.

\subsection{A Unified $\seglsvits$ Framework}
\label{sec:unknown-unified-framework}

Adapting least-squares value iteration to integrate segment reward feedback is highly non-trivial due to the following challenges: 
% The main challenge is therefore to achieve reward optimism while retaining a step-wise Markovian Bellman recursion.
(i) The reward parameter is estimated from segment features 
$\phi^{\tau_i^k}=\sum_{h\in\tau_i^k}\psi(s_h^k,a_h^k)$, so the reward uncertainty is characterized by a segment-level covariance matrix $\Sigma_{k-1}$. Introducing a per-step reward bonus of the form $\|\psi(s,a)\|_{\Sigma_{k-1}^{-1}}$ will require controlling the accumulated reward bonus over each step using the segment-level covariance matrix. However, 
%since each segment feature is a sum of step-wise features, 
there is generally no positive-semidefinite ordering relationship between the segment-level covariance matrix and the step-wise covariance matrix, and hence the usual elliptical potential argument fails. (ii) Alternatively, a bonus defined directly on a segment feature depends on the within-segment transition and cannot be incorporated into a Markovian state-action value function. (iii) Finally, omitting reward exploration bonuses leaves the transition bonus insufficient to guarantee optimism and breaks the analysis of optimistic least-squares value iteration.
%with respect to the unknown reward parameter. 

%To isolate the common algorithmic structure, 
To present the unified algorithmic framework compatible with different feedback types, 
let
$\mathsf{F}\in\{\mathrm{bi},\mathrm{sum}\}$ denote the binary and sum feedback, respectively, and let $\mathcal D_{k-1}^{\mathsf F}$ contain all trajectories and segment reward observations collected before episode $k$. We use
\[
\bigl(
    \hat{\theta}_{k-1}^{\mathsf F},
    \Sigma_{k-1}^{\mathsf F},
    \rho_{k-1}^{\mathsf F}
\bigr)
\leftarrow
\textsc{RewardEst}_{\mathsf F}
\bigl(\mathcal D_{k-1}^{\mathsf F}\bigr)
\]
to represent the feedback-specific reward estimator, covariance matrix, and sampling radius. We also let $B_{\mathsf F}$ and $\beta_{\mathsf F}$ denote the clipping and transition bonus parameters, respectively. Their concrete forms will be specified for binary and sum feedback separately. With this notation, algorithms for both feedback types are instances of the following framework.

\begin{algorithm}[H]
\caption{A Unified $\seglsvits$ Framework}
\label{alg:unified-seg-lsvi-ts}
\begin{algorithmic}[1]
    \State {\bfseries Input:}
    feedback type $\mathsf F$,
    $\delta$, $\delta'\coloneqq\delta/6$, $\lambda$,
       $n
        \coloneqq
        \left\lceil
            2\sqrt{2\pi e}
            \log\left(\frac{K}{\delta'}\right)
        \right\rceil,$
    $B_{\mathsf F}$, and $\beta_{\mathsf F}$.

    \For{$k=1,\dots,K$}
        \State Receive the initial state $s_1^k$.

        \State
        $
        \bigl(
            \hat{\theta}_{k-1}^{\mathsf F},
            \Sigma_{k-1}^{\mathsf F},
            \rho_{k-1}^{\mathsf F}
        \bigr)
        \leftarrow
        \textsc{RewardEst}_{\mathsf F}
        \bigl(\mathcal D_{k-1}^{\mathsf F}\bigr)
        $.

        \For{$j=1,\dots,n$}
            \State Sample $g_k^{(j)}\sim\mathcal N(0,I_d)$.

            \State
            $
            \widetilde{\theta}_k^{(j)}
            \leftarrow
            \hat{\theta}_{k-1}^{\mathsf F}
            +
            \rho_{k-1}^{\mathsf F}
            \bigl(\Sigma_{k-1}^{\mathsf F}\bigr)^{-1/2}
            g_k^{(j)}
            $.

            \State
            $
            \bigl(
                \{Q_h^{k,(j)}\}_{h=1}^H,
                \{V_h^{k,(j)}\}_{h=1}^{H}
            \bigr)
            \leftarrow
            \lsvitsq
            \left(
                \widetilde{\theta}_k^{(j)},
                k,
                B_{\mathsf F},
                \beta_{\mathsf F}
            \right)
            $.
        \EndFor

        \State
        $
        j^+
        \leftarrow
        \arg\max_{j\in[n]}
        V_1^{k,(j)}(s_1^k)
        $.

        \For{$h=1,\dots,H$}
            \State
            $
            a_h^k
            \leftarrow
            \arg\max_{a\in\mathcal A}
            Q_h^{k,(j^+)}(s_h^k,a)
            $.

            \State Take action $a_h^k$ and observe $s_{h+1}^k$.
        \EndFor

        \State Observe the segment feedback under model $\mathsf F$ and update $\mathcal D_k^{\mathsf F}$.
    \EndFor
\end{algorithmic}
\end{algorithm}

\begin{algorithm}[H]
\caption{$\lsvitsq$
$(\theta,k,B,\beta)$}
\label{alg:unified-lsvi-ts-q}
\begin{algorithmic}[1]
    \State {\bfseries Input:}
    $\theta$, $k$, $B$, and $\beta$.
    \State Initialize $V_{H+1}^k(\cdot)\leftarrow 0$, $\lambda'=1$.

    \For{$h=H,\dots,1$}
        \State
            $
            \Lambda_h^{k-1}
            \leftarrow
            \lambda'I
            +
            \sum_{k'=1}^{k-1}
            \psi(s_h^{k'},a_h^{k'})
            \psi(s_h^{k'},a_h^{k'})^\top
            $.
        \State
        $
        \mathbf w_h^k
        \leftarrow
        (\Lambda_h^{k-1})^{-1}
        \sum_{k'=1}^{k-1}
        \psi(s_h^{k'},a_h^{k'})
        [
            \psi(s_h^{k'},a_h^{k'})^\top\theta
            +
            V_{h+1}^k(s_{h+1}^{k'})
        ]
        $.

        \State
        $
        Q_h^k(\cdot,\cdot)
        \leftarrow
        \operatorname{clip}_{[-HB,HB]}
        (
            (\mathbf w_h^k)^\top\psi(\cdot,\cdot)
            +
            \beta
            \sqrt{
                \psi(\cdot,\cdot)^\top
                \bigl(\Lambda_h^{k-1}\bigr)^{-1}
                \psi(\cdot,\cdot)
            }
        )
        $, where 
        $   \operatorname{clip}_{[L,U]}(x)
        :=
        \min\{U,\max\{L,x\}\}$ for any $x,L,U\in\mathbb R$ such that $L\leq U$
        \State
        $
        V_h^k(s)
        \leftarrow
        \max_{a\in\mathcal A}Q_h^k(s,a)
        $.
    \EndFor

    \State {\bfseries Output:}
    $\{Q_h^k\}_{h=1}^H$ and $\{V_h^k\}_{h=1}^{H}$.
\end{algorithmic}
\end{algorithm}

Algorithm~\ref{alg:unified-seg-lsvi-ts} illustrates the procedure of $\seglsvits$.
For each of the $n=O(\log(K/\delta))$ sampled reward parameters, $\seglsvits$ computes a candidate policy under the induced linear reward function by performing $H$-step least-squares Bellman recursion. The transition data and covariance matrices are shared across all reward parameter candidates.
%, with candidate-specific regression targets. 
$\seglsvits$ then selects the policy with the highest optimistic initial-state value.
%, without augmenting the state space with segment histories or enumerating states.

The key idea of $\seglsvits$ is to achieve overall optimism by connecting the reward optimism of Thompson sampling with the transition optimism of least-squares value iteration in subroutine $\lsvitsq$. Conditional on the past, each sampled parameter $\widetilde{\theta}_k^{(j)}$ has a constant probability of being optimistic in the direction $\phi^{\pi^*}(s_1^k)$. Hence, with $n=O(\log(K/\delta))$ independent samples, with high probability, there exists $j'\in[n]$ such that \[ \left(\phi^{\pi^*}(s_1^k)\right)^\top \widetilde{\theta}_k^{(j')} \geq \left(\phi^{\pi^*}(s_1^k)\right)^\top \theta^* . \] 
For this candidate reward parameter  $\widetilde{\theta}_k^{(j')}$, the transition bonus in $\lsvitsq$ guarantees the optimism of its corresponding value function. With high probability, there exists a candidate $j'\in[n]$ such that the selection of $j^+$, the optimism of value iteration, and the optimism of the sampled reward parameter imply
\[
V_1^{k,(j^+)}(s_1^k)
\geq V_1^{k,(j')}(s_1^k)
\geq \left(\phi^{\pi^*}(s_1^k)\right)^\top
     \widetilde{\theta}_k^{(j')}
\geq \left(\phi^{\pi^*}(s_1^k)\right)^\top \theta^*.
\]
In $\seglsvits$, the Thompson sampling in the outer framework handles reward uncertainty at the segment level, while the subroutine $\lsvitsq$ handles transition uncertainty at the step level. This coupling restores optimism without introducing an explicit reward bonus that depends on within-segment transition into the Bellman recursion. Therefore, this algorithmic framework effectively resolves the aforementioned challenges due to the mismatch between segment-level reward uncertainty and step-wise least-squares value iteration. 
%Moreover, transition errors are controlled by the step-wise covariance
%matrices $\{\Lambda_h^{k-1}\}_{h=1}^H$, whereas reward errors are
%aggregated into segment-level quantities and controlled by
%$\Sigma_{k-1}^{\mathsf F}$. Hence, the analysis does not require a
%positive-semidefinite comparison between the two covariance structures.

\subsection{Binary Feedback}
\label{sec:unknown-binary-feedback}

We first instantiate the unified framework for binary segment feedback and refer to the resulting algorithm as  $\biseglsvits$. Recall that
$\alpha=\exp(Hr_{\max}/m)+\exp(-Hr_{\max}/m)+2$, and $\nu(k)$ is defined as in Eq.~\eqref{eq:def_nu}. Based on the binary feedback collected before episode $k$, we set
\begin{equation}
\label{eq:binary-unknown-reward-module}
\begin{aligned}
\hat{\theta}_{k-1}^{\mathrm{bi}}
\coloneqq {}&
\argmin_{\theta} \Bigg(
-\sum_{k'=1}^{k-1}\sum_{i=1}^{m}
\bigg(
y_i^{k'}
\log\left(
\sig\left(
(\phi^{\tau_i^{k'}})^\top\theta
\right)
\right)
\\[-2mm]
&\hspace{28mm}
+
(1-y_i^{k'})
\log\left(
1-
\sig\left(
(\phi^{\tau_i^{k'}})^\top\theta
\right)
\right)
\bigg)
+
\frac{\lambda}{2}\|\theta\|_2^2 \Bigg)
,
\\
\Sigma_{k-1}^{\mathrm{bi}}
\coloneqq{}&
\sum_{k'=1}^{k-1}\sum_{i=1}^{m}
\phi^{\tau_i^{k'}}
(\phi^{\tau_i^{k'}})^\top
+
\alpha\lambda I,
\\
\rho_{k-1}^{\mathrm{bi}}
\coloneqq{}&
\sqrt{\alpha}\,\nu(k-1).
\end{aligned}
\end{equation}
The corresponding clipping and transition bonus parameters are
\begin{equation}
\label{eq:binary-unknown-lsvi-parameters}
\begin{aligned}
B_{\mathrm{bi}}
\coloneqq{}&
\frac{\nu(K)}{\sqrt{\lambda}}
\left(
\sqrt d
+
2\sqrt{
\log\left(\frac{Kn}{\delta'}\right)
}
+
1
\right)
+
r_{\max}\sqrt d,
\\
\beta_{\mathrm{bi}}
\coloneqq{}&
c'_\beta dH B_{\mathrm{bi}}
\sqrt{
\log\left(
\frac{2dKH}{\delta'}
\right)
},
\end{aligned}
\end{equation}
respectively, where $c'_\beta>0$ is a sufficiently large absolute constant. Substituting Eqs.~\eqref{eq:binary-unknown-reward-module} and
\eqref{eq:binary-unknown-lsvi-parameters} into
Algorithm~\ref{alg:unified-seg-lsvi-ts} yields its
binary-feedback instantiation, and we present its regret guarantee below.

\begin{theorem}
\label{thm:seg_binary_unknown_tran_d}
Suppose Assumption~\ref{assump:linear_mdp} holds. There exists a sufficiently large absolute constant $c'_\beta>0$
such that, for any $K > 0$, with probability at least $1-\delta$,
the regret of Algorithm~\ref{alg:unified-seg-lsvi-ts}
under binary segment feedback, instantiated with
Eqs.~\eqref{eq:binary-unknown-reward-module} and
\eqref{eq:binary-unknown-lsvi-parameters}, satisfies
\begin{align*}
\cR(K)
=
\widetilde O
\Bigg(
&
\exp\left(\frac{Hr_{\max}}{2m}\right)
\nu(K)d
\sqrt{
Km
\max\left\{
\frac{H^2}{m\alpha\lambda},
1
\right\}
}
+
d^2H^2\sqrt K
\left(
\frac{\nu(K)}{\sqrt{\lambda}}
+
r_{\max}
\right)
\Bigg).
\end{align*}
\end{theorem}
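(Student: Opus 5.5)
The plan combines the reward-side analysis from the proof of Theorem~\ref{thm:seg-bits-upper} with an LSVI-UCB style transition analysis in the spirit of Jin et al.~(\citeyear{jin20_linearmdp}). We work on a good event $\mathcal G$, of probability at least $1-\delta$, which intersects the following events.
\begin{enumerate}
\item[(a)] The logistic confidence event. For all $k$, $\|\hat\theta^{\mathrm{bi}}_{k-1}-\theta^*\|_{\Sigma^{\mathrm{bi}}_{k-1}}\le \sqrt{\alpha}\,\nu(k-1)$. We inherit this from the known-transition analysis, which uses the self-concordance based bound with $\omega(k)$ and $\nu(k)$.
\item[(b)] Gaussian concentration. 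For all $k,j$, $\|g_k^{(j)}\|_2\le \sqrt d+2\sqrt{\log(Kn/\delta')}$. Since $\Sigma^{\mathrm{bi}}_{k-1}\succeq \alpha\lambda I$, this gives $\|\widetilde\theta_k^{(j)}\|_2\le B_{\mathrm{bi}}$, so every sampled reward satisfies $|\psi^\top\widetilde\theta_k^{(j)}|\le B_{\mathrm{bi}}$. This justifies the clipping range $[-HB_{\mathrm{bi}},HB_{\mathrm{bi}}]$.
\item[(c)] The anti-concentration event. For each $k$, some $j'\in[n]$ satisfies $(\phi^{\pi^*}(s_1^k))^\top\widetilde\theta_k^{(j')}\ge(\phi^{\pi^*}(s_1^k))^\top\theta^*$. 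Each sample achieves this with probability at least $1/(2\sqrt{2\pi e})$, so with $n$ samples and a union bound over $K$ the failure probability is at most $\delta'$.
\item[(d)] The uniform self-normalized concentration for LSVI. It must hold simultaneously over all value functions in the class generated by $\lsvitsq$ with reward parameter in the ball of radius $B_{\mathrm{bi}}$. A covering argument gives the bonus $\beta_{\mathrm{bi}}=\widetilde O(dHB_{\mathrm{bi}})$.
\end{enumerate}

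\textbf{Optimism.} For a fixed sampled parameter $\theta=\widetilde\theta_k^{(j)}$, let $Q^{\pi,\theta}_h$ denote the true action-value of $\pi$ in the linear MDP with reward $\psi^\top\theta$. Event (d) gives $|(\mathbf w_h^k)^\top\psi-\psi^\top\theta-P_hV^k_{h+1}|\le\beta_{\mathrm{bi}}\|\psi\|_{(\Lambda_h^{k-1})^{-1}}$. Backward induction, using that clipping at $HB_{\mathrm{bi}}$ preserves optimism because true values are bounded by $HB_{\mathrm{bi}}$ by (b), yields $V_1^{k,(j)}(s_1)\ge V_1^{\pi^*,\theta}(s_1)=(\phi^{\pi^*}(s_1))^\top\theta$. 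Combining this with (c) and the argmax choice of $j^+$ gives the displayed chain $V_1^{k,(j^+)}(s_1^k)\ge V_1^*(s_1^k)$.

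\textbf{Regret decomposition.} Write $\bar\theta_k=\widetilde\theta_k^{(j^+)}$. Then
\[
V_1^*-V_1^{\pi^k}\le \bigl(V_1^{k,(j^+)}(s_1^k)-V_1^{\pi^k,\bar\theta_k}(s_1^k)\bigr)+(\phi^{\pi^k}(s_1^k))^\top(\bar\theta_k-\theta^*).
\]
We bound the two terms separately.
\begin{enumerate}
\item[(i)] \emph{Transition term.} The standard LSVI recursion bounds it by $\sum_h 2\beta_{\mathrm{bi}}\|\psi(s_h^k,a_h^k)\|_{(\Lambda_h^{k-1})^{-1}}$ plus a martingale difference. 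The elliptical potential lemma summed over $h$ and $k$, together with Azuma, gives $\widetilde O(\beta_{\mathrm{bi}}H\sqrt{dK})$. This equals $\widetilde O\bigl(d^{3/2}H^2\sqrt K(\nu(K)/\sqrt\lambda+r_{\max})\bigr)$, which is dominated by the stated $d^2H^2$ term, with extra $\sqrt d$ slack from the covering of the reward ball.
\item[(ii)] \emph{Reward term.} By Cauchy--Schwarz, (a) and (b), it is at most $\|\phi^{\pi^k}(s_1^k)\|_{(\Sigma^{\mathrm{bi}}_{k-1})^{-1}}\sqrt\alpha\,\nu(K)\bigl(1+\sqrt d+2\sqrt{\log(Kn/\delta')}\bigr)$. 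Replacing the expected feature by the realized sum of segment features costs a martingale term of order $\widetilde O(\sqrt{\alpha}\nu(K)\sqrt d\,H\sqrt{K/(\alpha\lambda)})$. The convexity inequality $\|\sum_i\phi^{\tau_i}\|\le\sqrt m\,(\sum_i\|\phi^{\tau_i}\|^2)^{1/2}$ then lets us apply the elliptical potential lemma to $Km$ segment features of norm at most $H/m$, with regularizer $\alpha\lambda$. This gives $\sqrt{Km\,d\max\{H^2/(m\alpha\lambda),1\}\log(\cdot)}$, and multiplying by $\sqrt\alpha\,\nu(K)\sqrt d=\Theta(e^{Hr_{\max}/2m})\nu(K)\sqrt d$ yields the first term.
\end{enumerate}

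\textbf{Main obstacle.} The hardest step is (d). The targets $V^k_{h+1}$ depend on data-dependent sampled $\widetilde\theta_k^{(j)}$, so the covering must range jointly over $\mathbf w$, the matrix inside the bonus, and the reward parameter in the $B_{\mathrm{bi}}$-ball. The value scale is $HB_{\mathrm{bi}}$ rather than $H$, which is why $\beta_{\mathrm{bi}}$ scales with $B_{\mathrm{bi}}$. We would follow Jin et al.'s covering lemma with the reward parameter added as $d$ extra coordinates. The covering log-number stays $\widetilde O(d^2)$, so the $\sqrt{\log\mathcal N}$ factor remains $\widetilde O(d)$, and $\beta_{\mathrm{bi}}=\widetilde O(dHB_{\mathrm{bi}})$ still suffices for a large enough $c'_\beta$. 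The optimism step for the selected index $j^+$ is not a problem, because (d) holds uniformly over all $j$ and $k$.
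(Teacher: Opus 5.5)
Your proposal is correct and has the same architecture as the paper's proof. It uses the same four good events, with $\|\hat\theta^{\mathrm{bi}}_{k-1}-\theta^*\|_{\Sigma^{\mathrm{bi}}_{k-1}}\le\sqrt{\alpha}\,\nu(k-1)$ taken from the known-transition confidence lemma. It uses the same optimism chain $V_1^{k,(j^+)}(s_1^k)\ge V_1^{k,(j')}(s_1^k)\ge(\phi^{\pi^*}(s_1^k))^\top\widetilde\theta_k^{(j')}\ge V_1^*(s_1^k)$, and the same episode-level elliptical potential bound for the segment features.

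The one structural difference is where you split the regret.

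\emph{The paper's route.} It runs the recursion for $\Gamma_h^k=V_h^{k,(j^+)}(s_h^k)-V_h^{\pi^k}(s_h^k)$ in the \emph{original} MDP. This uses a second estimation-error lemma in which $\langle\psi_h^k,\widetilde\theta_k^{(j^+)}-\theta^*\rangle$ appears at every step. The reward error is therefore accumulated along the realized trajectory, $\sum_h\psi_h^k=\sum_i\phi^{\tau_i^k}$, and plugs straight into the segment elliptical potential. Its increments $\zeta_{h+1}^k$ are bounded by $4HB_{\mathrm{bi}}$ deterministically, since $V_{h+1}^{k,(j^+)}$ is clipped and $V_{h+1}^{\pi^k}$ is a true value.

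\emph{Your route.} Splitting through $V_1^{\pi^k,\bar\theta_k}$ needs only the auxiliary-MDP lemma, which you use anyway for optimism. It costs two things:
\begin{itemize}
\item an extra Jensen-plus-Azuma step from $\phi^{\pi^k}(s_1^k)$ to realized segment norms, whose $\widetilde O(\nu(K)H\sqrt{dK/\lambda})$ cost is dominated, as you note;
\item increments involving $V^{\pi^k,\bar\theta_k}_{h+1}$ that are bounded by $HB_{\mathrm{bi}}$ only on the good event. Azuma must then be applied to a truncated sequence. The truncating indicator is determined before episode $k$'s trajectory, so this is routine.
\end{itemize}

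Three smaller points.
\begin{itemize}
\item You do not need to cover the reward parameter. $V_{h+1}^{k,(j)}$ depends on $\widetilde\theta_k^{(j)}$ only through $\mathbf w_{h+1}^{k,(j)}$, which already contains the reward regression and has norm at most $2HB_{\mathrm{bi}}\sqrt{dK}$. The paper therefore covers only $(\mathbf w,\beta^2\Lambda^{-1})$; adding $\theta$ is harmless.
\item Your accounting of term (i) is off by $\sqrt d$. $B_{\mathrm{bi}}$ already carries a factor $\sqrt d$, so $\beta_{\mathrm{bi}}H\sqrt{dK}=\widetilde O\bigl(d^2H^2\sqrt K(\nu(K)/\sqrt\lambda+r_{\max})\bigr)$. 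This is exactly the stated second term, with no $\sqrt d$ slack. The rate still holds because $\log\mathcal N_\epsilon=\widetilde O(d^2)$ keeps $\beta_{\mathrm{bi}}=\widetilde O(dHB_{\mathrm{bi}})$.
\item $\cR(K)$ averages over $s_1^k\sim\rho$, while the optimistic-sample event concerns only the realized $s_1^k$, and $\pi^k$ depends on $s_1^k$ through $j^+$. You therefore also need the Azuma step from realized to $\rho$-averaged gaps, which the paper includes in $\mathcal E^{\mathrm{Bi}}_{\mathrm{KS}}$.
\end{itemize}
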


The first and second terms in Theorem~\ref{thm:seg_binary_unknown_tran_d} are due to reward and transition learning, respectively.
Notably, the reward learning term exhibits an exponential dependence on the segment length $H/m$, arising from the sigmoid curvature under  binary feedback, as in the known transition setting (Theorem~\ref{thm:seg-bits-upper}). Learning the transition dynamics contributes an additional polynomial term. Consequently, when the reward learning term dominates, increasing the number of segments reduces the regret bound and accelerates learning.

%Technically, the two terms correspond to reward-model error and transition-learning error. Summing the reward error over each episode converts it into a segment-level quantity controlled by $\Sigma_{k-1}^{\mathrm{bi}}$, yielding the first term. The LSVI bonuses and martingale fluctuations are instead controlled stage by stage through $\{\Lambda_h^{k-1}\}_{h=1}^H$, producing the second term without requiring a positive-semidefinite comparison between the two covariance structures. Nevertheless, the two errors remain coupled because the LSVI targets contain the sampled rewards, which accounts for the factor $\nu(K)/\sqrt{\lambda}+r_{\max}.$

\subsection{Sum Feedback}
\label{sec:unknown-sum-feedback}

We next instantiate the unified framework for sum segment feedback and refer to the resulting algorithm as  $\sumseglsvits$.
In this setting, each segment observation is linear in the reward parameter, and the noise in each observation has sub-Gaussian scale $\sqrt{H/m}$. Therefore, normalizing the segment feature and observation 
by $\sqrt{H/m}$ yields a ridge regression problem with unit-scale 
sub-Gaussian noise. Based on the normalized ridge regression with sum feedback collected 
before episode $k$, we set
\begin{equation}
\label{eq:sum-unknown-reward-module}
\begin{aligned}
\widetilde{\beta}(k)
\coloneqq{}&
\sqrt{
    d\left(\log\left(
        1+\frac{kH}{\lambda d}
    \right)
    +
    2\log\left(
        \frac{1}{\delta'}
    \right) \right)
}
+
r_{\max}\sqrt{\lambda d},
\qquad k\geq 0,
\\
\hat{\theta}_{k-1}^{\mathrm{sum}}
\coloneqq{}&
\bigl(
    \Sigma_{k-1}^{\mathrm{sum}}
\bigr)^{-1}
\sum_{k'=1}^{k-1}
\sum_{i=1}^{m}
\frac{m}{H}
\phi^{\tau_i^{k'}}
R_i^{k'},
\\
\Sigma_{k-1}^{\mathrm{sum}}
\coloneqq{}&
\lambda I
+
\sum_{k'=1}^{k-1}
\sum_{i=1}^{m}
\frac{m}{H}
\phi^{\tau_i^{k'}}
\bigl(\phi^{\tau_i^{k'}}\bigr)^\top,
\\
\rho_{k-1}^{\mathrm{sum}}
\coloneqq{}&
\widetilde{\beta}(k-1).
\end{aligned}
\end{equation}
The corresponding clipping and transition bonus parameters are
\begin{equation}
\label{eq:sum-unknown-lsvi-parameters}
\begin{aligned}
B_{\mathrm{sum}}
\coloneqq{}&
\frac{\widetilde{\beta}(K)}{\sqrt{\lambda}}
\left(
    \sqrt d
    +
    2\sqrt{
        \log\left(
            \frac{Kn}{\delta'}
        \right)
    }
    +
    1
\right)
+
r_{\max}\sqrt d,
\\
\beta_{\mathrm{sum}}
\coloneqq{}&
c_\beta dH B_{\mathrm{sum}}
\sqrt{
    \log\left(
        \frac{2dKH}{\delta'}
    \right)
},
\end{aligned}
\end{equation}
respectively, where $c_\beta>0$ is a sufficiently large absolute constant.
Substituting Eqs.~\eqref{eq:sum-unknown-reward-module} and
\eqref{eq:sum-unknown-lsvi-parameters} into
Algorithm~\ref{alg:unified-seg-lsvi-ts} yields its
sum-feedback instantiation, whose regret performance is stated below.

\begin{theorem}
\label{thm:seg_sum_unknown_tran_d}
Suppose Assumption~\ref{assump:linear_mdp} holds. There exists a sufficiently large absolute constant $c_\beta>0$
such that, for any $K>0$, with probability at least $1-\delta$,
the regret of Algorithm~\ref{alg:unified-seg-lsvi-ts}
under sum segment feedback, instantiated with Eqs.~\eqref{eq:sum-unknown-reward-module} and
\eqref{eq:sum-unknown-lsvi-parameters}, satisfies
\begin{align*}
\cR(K) = \widetilde O\Bigg(
&
d^{5/2}H^2\sqrt K
\left(
    r_{\max}
    +
    \frac{1}{\sqrt{\lambda}}
\right)
+
d
\left(
    1+r_{\max}\sqrt{\lambda}
\right)
\sqrt{
    KHd
    \max\left\{
        \frac{H}{\lambda},
        1
    \right\}
}
\Bigg).
\end{align*}
\end{theorem}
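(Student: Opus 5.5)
We follow the same optimism-plus-decomposition route outlined for the unified framework, now instantiated with the normalized ridge estimator of Eq.~\eqref{eq:sum-unknown-reward-module}.

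\textbf{Step 1: reward concentration.} After normalization, the features are $\sqrt{m/H}\,\phi^{\tau_i^{k'}}$ with norm at most $\sqrt{H/m}$, and the noise $\sqrt{m/H}\sum_t\varepsilon_t$ is $1$-sub-Gaussian. The self-normalized bound of Abbasi-Yadkori et al.\ then gives, with probability $1-\delta'$, for all $k$,
\[
\|\hat\theta_{k-1}^{\sumrm}-\theta^*\|_{\Sigma_{k-1}^{\sumrm}}\le\widetilde\beta(k-1).
\]
The log-determinant term is $\log(1+kmH/(m\lambda d))=\log(1+kH/(\lambda d))$, which is where $m$ disappears.

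\textbf{Step 2: sampled parameters and optimism.} Gaussian tail bounds with a union bound over $k\le K$ and $j\le n$ give $\|\widetilde\theta_k^{(j)}-\hat\theta^{\sumrm}_{k-1}\|_{\Sigma_{k-1}^{\sumrm}}\le\widetilde\beta(k-1)\,(\sqrt d+2\sqrt{\log(Kn/\delta')})$. Since $\Sigma^{\sumrm}\succeq\lambda I$, every sampled reward satisfies $|\psi^\top\widetilde\theta_k^{(j)}|\le B_{\sumrm}$, so all value functions lie in $[-HB_{\sumrm},HB_{\sumrm}]$.

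Anti-concentration of the Gaussian in the fixed direction $\phi^{\pi^*}(s_1^k)$ gives probability at least $1/(2\sqrt{2\pi e})$ that a given sample is optimistic, that is, $\langle\phi^{\pi^*}(s_1^k),\widetilde\theta_k^{(j)}\rangle\ge\langle\phi^{\pi^*}(s_1^k),\theta^*\rangle$. With $n$ independent samples, at least one $j'$ is optimistic with probability $1-\delta'/K$ per episode.

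\textbf{Step 3: transition optimism in $\lsvitsq$.} We follow Jin et al. For a fixed $\theta$ with $\|\theta\|$ bounded, we need a uniform concentration bound for $\sum_{k'}\psi_h^{k'}[V_{h+1}(s^{k'}_{h+1})-P_hV_{h+1}]$ in the $(\Lambda_h^{k-1})^{-1}$-norm. We obtain it from a covering of the value class, parameterized by $(\mathbf w,\Lambda^{-1},\theta)$, with range $HB_{\sumrm}$; this yields $\beta_{\sumrm}=c_\beta dHB_{\sumrm}\sqrt{\log}$. The bound gives $Q_h^{k,(j)}\ge \psi^\top\widetilde\theta^{(j)}_k+P_hV_{h+1}^{k,(j)}$ before clipping, and clipping only helps since the true target lies in range. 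Induction then gives
\[
V_1^{k,(j)}(s_1^k)\ge\langle\phi^{\pi^*}(s_1^k),\widetilde\theta_k^{(j)}\rangle .
\]
Combining with $j^+$ yields $V_1^{k,(j^+)}(s_1^k)\ge V_1^*(s_1^k)$.

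\textbf{Step 4: regret decomposition.} Write $\widetilde\theta=\widetilde\theta_k^{(j^+)}$. Then
\[
V_1^{k,(j^+)}-V_1^{\pi^k}=\bigl[V_1^{k,(j^+)}-\langle\phi^{\pi^k}(s_1^k),\widetilde\theta\rangle\bigr]+\langle\phi^{\pi^k}(s_1^k),\widetilde\theta-\theta^*\rangle .
\]

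The first bracket is controlled by the standard LSVI recursion along the realized trajectory, bounded by $2\beta_{\sumrm}\sum_h\|\psi_h^k\|_{(\Lambda_h^{k-1})^{-1}}$ plus martingale differences of range $HB_{\sumrm}$. Elliptical potentials per step give $\beta_{\sumrm}H\sqrt{dK}$ plus the Azuma term, which is $\widetilde O(d^{5/2}H^2\sqrt K(r_{\max}+1/\sqrt\lambda))$ using $B_{\sumrm}=\widetilde O(\sqrt d\,\widetilde\beta/\sqrt\lambda+r_{\max}\sqrt d)$.

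For the reward term, replace $\phi^{\pi^k}(s_1^k)$ by the realized $\sum_i\phi^{\tau_i^k}$, at the cost of another martingale of comparable range. This leaves
\[
\sum_i\langle\phi^{\tau_i^k},\widetilde\theta-\theta^*\rangle\le 2\widetilde\beta\sqrt d\,\sqrt{H/m}\sum_i\|\sqrt{m/H}\,\phi^{\tau_i^k}\|_{(\Sigma^{\sumrm}_{k-1})^{-1}} .
\]
By Cauchy--Schwarz over $Km$ segments and the elliptical potential lemma with $\min\{1,\cdot\}$ handled via $\max\{H/\lambda,1\}$ (normalized features have squared norm at most $H/m$), this gives
\[
\widetilde\beta\sqrt d\sqrt{H/m}\sqrt{Km\cdot d\max\{H/\lambda,1\}} .
\]
Here $\widetilde\beta=\widetilde O(\sqrt d(1+r_{\max}\sqrt\lambda))$, matching the second term of the theorem.

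\textbf{Main obstacle.} The hardest step is Step 3's covering argument. The value class now depends on a random, data-dependent $\widetilde\theta$ chosen among $n$ candidates and on the arg-max selection. I would cover $\theta$ over a ball of radius $B_{\sumrm}$ jointly with $(\mathbf w,\Lambda^{-1})$, which adds only an $O(d\log)$ term to the log covering number and keeps the $\beta_{\sumrm}$ scaling. A secondary subtlety is that optimism in Step 2 holds for the fixed direction $\phi^{\pi^*}(s_1^k)$ while $s_1^k$ is revealed before sampling. This is fine because the direction is measurable with respect to the conditioning before the draw.
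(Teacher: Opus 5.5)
Your proposal is correct and follows essentially the paper's argument: normalized ridge concentration ($\mathcal K_{\mathrm{Norm}}$), Gaussian anti-concentration with $n$ samples for reward optimism, Jin et al.-style optimistic LSVI in each auxiliary MDP $\mathcal M_k^{(j)}$, and the two elliptical-potential sums that produce the two terms of the bound. The differences are cosmetic, plus one omitted routine step. The paper's recursion picks up $\langle\psi_h^k,\widetilde\theta_k^{(j^+)}-\theta^*\rangle$ directly along the realized trajectory, so it needs no extra martingale to pass from $\phi^{\pi^k}(s_1^k)$ to $\sum_i\phi^{\tau_i^k}$; covering $\theta$ separately is unnecessary because $\widetilde\theta_k^{(j)}$ is already absorbed into $\mathbf w_h^{k,(j)}$; and you still need the routine Azuma step converting $V_1^*(s_1^k)-V_1^{\pi^k}(s_1^k)$ into the $\mathbb E_{s_1\sim\rho}$ regret.
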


The first and second terms in Theorem~\ref{thm:seg_sum_unknown_tran_d} correspond to transition and reward learning errors, respectively. 
In contrast to binary feedback, the reward learning error under sum feedback depends only polynomially on the problem parameters, and the higher order term arises from learning the unknown transition dynamics. 
% Its magnitude is nevertheless coupled with reward estimation through $B_{\mathrm{sum}}$, since the sampled rewards enter the LSVI regression targets and determine the scale of the transition bonuses. 
The normalized ridge regression for the reward parameter in Eq.~\eqref{eq:sum-unknown-reward-module} reduces the reward confidence radius from $\widetilde O(\sqrt{Hd/m})$ to $\widetilde O(\sqrt d)$, preventing the segment noise scale $\sqrt{H/m}$ from propagating into the transition learning term. As a result, the regret bound has no explicit dependence on $m$, and therefore increasing the number of segments does not substantially decrease the regret, consistent with the known transition setting (Theorem~\ref{thm:regret_sum_fixed_segment_d_known}).

% Technically, the two terms correspond to transition-learning error and reward-model error. The first term captures the accumulated LSVI bonuses. Uniform control of the sampled reward models gives $B_{\mathrm{sum}}=\widetilde O(d(r_{\max}+1/\sqrt{\lambda}))$, while the LSVI confidence parameter and the elliptical-potential argument contribute the remaining $d$ and $\sqrt d$ factors, yielding the $d^{5/2}$ dependence. The second term arises from the accumulated reward-model error, which becomes a segment-level quantity and is controlled by the normalized covariance matrix $\Sigma_{k-1}^{\mathrm{sum}}$.

\section{Variable and Binary Segment Feedback with Known Transitions}
\label{sec:variable-segment-binary}

In this section, we generalize the binary feedback and known transition model to the variable segment setting, as introduced in
Section~\ref{sec:variable segment}. The main
motivation is to investigate whether partitioning segments  according to the realized
state-action features can further shrink the regret and expedite learning. 

Developing a provably efficient algorithm and a rigorous 
analysis for the variable segment setting is challenging because the segment lengths may vary both within and
across episodes, and hence the Bernoulli observations no longer share a common
curvature parameter. Applying a single curvature bound based on the longest
segment would treat all observations according to the least informative segment and obscure the effect of the partition. Moreover, the covariance
matrix is constructed from features associated with the partitioned segments,
and thus the feature decomposition used in the elliptical potential argument must
remain consistent with the episode-dependent segmentation.

To address these difficulties, we present algorithm $\uneqsegbitsd$, which builds upon algorithm $\bitssegd$  but adopts a segment-specific sigmoid curvature to handle variable segments. Specifically, we construct
the covariance matrix using the weight
$\sig'(\ell_i^k r_{\max})$ tailored to the length of each segment, and sample a reward parameter according to the
resulting confidence geometry. Here $\ell_i^k$ denotes the length of the $i$-th segment in episode $k$.

\begin{algorithm}[t]
\caption{$\uneqsegbitsd$}
\label{alg:uneq-segbits-d}
\begin{algorithmic}[1]
\State \textbf{Input:} $\delta$, $\delta':=\frac{\delta}{3}$, $\lambda$.
\For{$k=1,\ldots,K$}
\State 
$\begin{aligned}[t]
\hat{\theta}_{k-1}
\gets
\arg\min_{\theta}\;
&-\sum_{k'=1}^{k-1}\sum_{i=1}^{m_{k'}}
\Bigg(
y_i^{k'}
\log\sig\left(
(\phi^{\tau_i^{k'}})^\top\theta
\right) +
\\
&\qquad
(1-y_i^{k'})
\log\left(
1-\sig\left(
(\phi^{\tau_i^{k'}})^\top\theta
\right)
\right)
\Bigg)
+
\frac{\lambda}{2}\|\theta\|_2^2 .
\end{aligned}
$
\State \label{eq:uneq:covariance matrix}
$\displaystyle
\Sigma_{k-1}^{U}
\leftarrow
\sum_{k'=1}^{k-1}\sum_{i=1}^{m_{k'}}
\sig'(\ell_i^{k'}r_{\max}) \, 
\phi^{\tau_i^{k'}}
(\phi^{\tau_i^{k'}})^\top
+
\lambda I .
$
\State  Sample
$\displaystyle
\xi_k
\sim
\mathcal N\left(
0,
\nu_U(k-1)^2 \, (\Sigma_{k-1}^{U})^{-1}
\right)$,
where $\nu_U(k-1)$ is defined in Eq.~\eqref{eq:nu-unequal}.
\State 
$\displaystyle
\widetilde{\theta}_k
\leftarrow
\hat{\theta}_{k-1}+\xi_k .
$
\State 
$\displaystyle
\pi^k
\leftarrow
\argmax_{\pi}
(\phi^\pi)^\top\widetilde{\theta}_k .
$
\State  Play episode $k$ with policy $\pi^k$ and observe trajectory $\tau^k$.
\State  Choose the number of segments $m_k$ and a segment partition of trajectory $\tau^k$, and observe binary
segment feedback $\{y_i^k\}_{i=1}^{m_k}$.
\EndFor
\end{algorithmic}
\end{algorithm}

Algorithm~\ref{alg:uneq-segbits-d} gives the pseudocode of $\uneqsegbitsd$.
% The algorithm follows the same posterior-sampling principle as $\bitssegd$.
% The essential difference is the covariance matrix in Line~\ref{eq:uneq:covariance matrix}. 
Different from algorithm $\bitssegd$, $\uneqsegbitsd$ employs the covariance matrix euipped with adaptive weights to variable segments.
Since
$\|\phi^{\tau_i^k}\|_2\leq \ell_i^k$ and the reward of each state-action pair
is bounded by $r_{\max}$, for every $\theta\in\Theta$,
$    |(\phi^{\tau_i^k})^\top\theta|
    \leq
    \ell_i^k r_{\max}.$
The symmetry and monotonicity of $\sig'$ therefore imply
$    \sig'\bigl((\phi^{\tau_i^k})^\top\theta\bigr)
    \geq
    \sig'(\ell_i^k r_{\max}).$
Thus, $\Sigma_{k-1}^{U}$ provides a deterministic lower bound on the
curvature of the logistic loss while preserving the differences in the information revealed by segments of different lengths. 

For any $k>0$, define
\[
    L(k)
    :=
    \max_{k'\leq k,\,  i\in[m_{k'}]}
    \ell_i^{k'},
\]
\begin{align}
    \omega_U(k)
    &:=
    \frac{\sqrt{\lambda}}{2}
    +
    \frac{d}{\sqrt{\lambda}}
    \log\left[
        \frac{4}{\delta'}
        \left(
            1+
            \frac{
                \sig'_{\max}
                \sum_{k'=1}^{k}
                \sum_{i=1}^{m_{k'}}
                (\ell_i^{k'})^2
            }{\lambda d}
        \right)
    \right]
    +
    r_{\max}\sqrt{\lambda d},
    \label{eq:omega-unequal}
\end{align}
and
\begin{align}
    \nu_U(k)
    &:=
    \frac{\sqrt{\lambda}}{L(k)}
    \Bigg[
        1+L(k)r_{\max}\sqrt d
        +
        \frac{L(k)}{\sqrt{\lambda}}
        \Bigg(
            \sqrt{1+L(k)r_{\max}\sqrt d}\,
            \omega_U(k)
            +
            \frac{L(k)}{\sqrt{\lambda}}
            \omega_U(k)^2
        \Bigg)
    \Bigg]^{3/2}.
    \label{eq:nu-unequal}
\end{align}

The following theorem provides a regret guarantee of algorithm $\uneqsegbitsd$ for arbitrary
segmentations.

\begin{theorem}
\label{thm:regret-binary-variable-known}
Under Assumption~\ref{assump:linear_mdp}, with probability at least $1-\delta$, for any $K>0$, the regret of Algorithm~\ref{alg:uneq-segbits-d} 
satisfies
\begin{align*}
    \cR(K)=\widetilde O\Bigg(
        H\nu_U(K)\sqrt{\frac{dK}{\lambda}}
        +
        \nu_U(K)d
        \sqrt{
            \sum_{k=1}^{K}
            \max\left\{
                \frac{
                    \sum_{i=1}^{m_k}(\ell_i^k)^2
                }{4\lambda},
                1
            \right\}
            \sum_{i=1}^{m_k}
            \frac{1}{\sig'(\ell_i^k r_{\max})}
        }
    \Bigg).
\end{align*}
\end{theorem}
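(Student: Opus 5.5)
The proof will follow the standard Thompson-sampling analysis for generalized linear bandits, the same template behind Theorem~\ref{thm:seg-bits-upper}, with the uniform curvature $1/\alpha$ replaced by the segment-specific weights $\sig'(\ell_i^k r_{\max})$. The plan has three ingredients, carried out in order: a confidence bound for the regularized logistic estimator in the geometry of $\Sigma_{k-1}^U$, concentration and anti-concentration for the Gaussian perturbation $\xi_k$, and a regret decomposition closed by an elliptical-potential argument.

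\textbf{Step 1 (confidence set).} With probability at least $1-\delta'$, simultaneously for all $k$, I aim to show $\|\hat\theta_{k-1}-\theta^*\|_{\Sigma_{k-1}^U}\le \nu_U(k-1)$. Write $G_{k-1}(\theta_1,\theta_2)=\sum_{k',i}\bar\sig'_{k',i}\,\phi^{\tau_i^{k'}}(\phi^{\tau_i^{k'}})^\top+\lambda I$ for the integral-averaged Hessian of the loss along the segment $[\theta_1,\theta_2]$. The first-order condition gives $G_{k-1}(\hat\theta_{k-1},\theta^*)(\hat\theta_{k-1}-\theta^*)=S_{k-1}-\lambda\theta^*$, where $S_{k-1}$ is the martingale $\sum_{k',i}\eta_i^{k'}\phi^{\tau_i^{k'}}$. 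On the constraint set $|\phi^\top\theta|\le\ell r_{\max}$ we have $\bar\sig'\ge\sig'(\ell_i^{k'}r_{\max})$, hence $G\succeq\Sigma_{k-1}^U$. To avoid needing the estimator to lie in that set, I will use the self-concordance trick of Faury et al.: $G$ dominates the local Hessian $H_{k-1}(\theta^*)$ up to a factor $(1+L(k)\|\hat\theta-\theta^*\|)^{-1}$.

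The martingale term is then controlled in the $H_{k-1}(\theta^*)^{-1}$ norm by a Bernstein-type self-normalized inequality. The Bernoulli variances are exactly $\sig'((\phi^{\tau})^\top\theta^*)\le\sig'_{\max}$, which yields $\omega_U$ with the $\log\bigl(1+\sig'_{\max}\sum(\ell_i^{k'})^2/(\lambda d)\bigr)$ term. Solving the resulting quadratic inequality in $\|\hat\theta-\theta^*\|$, where the segment length enters only through $L(k)$, and converting from the $H(\theta^*)$ geometry to $\Sigma^U$, should reproduce the $3/2$ power in $\nu_U$.

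\textbf{Step 2 (perturbation).} Standard Gaussian tail bounds give $\|\xi_k\|_{\Sigma_{k-1}^U}\le \nu_U(k-1)\sqrt{2d\log(2dK/\delta')}$ for all $k$ with probability at least $1-\delta'$. Anti-concentration gives $(\phi^{\pi^*})^\top\widetilde\theta_k\ge(\phi^{\pi^*})^\top\theta^*$ with constant probability $p$. Then the usual Agrawal--Goyal / Abeille--Lazaric argument, comparing against the least-uncertain saturated policy in the $\|\phi^\pi\|_{(\Sigma_{k-1}^U)^{-1}}$ norm, yields
\[
\E\!\left[V_1^*-V_1^{\pi^k}\right]\lesssim \tfrac{1}{p}\,\nu_U(k-1)\sqrt{d\log(\cdot)}\;\E\!\left[\|\phi^{\pi^k}\|_{(\Sigma_{k-1}^U)^{-1}}\right].
\]
Replacing the expectation by the realized $\|\phi^{\tau^k}\|_{(\Sigma_{k-1}^U)^{-1}}$ costs an Azuma martingale term. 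Since $\|\phi^\pi\|_{(\Sigma^U)^{-1}}\le H/\sqrt\lambda$, the per-episode increments are bounded by $H\nu_U(K)/\sqrt\lambda$, and this term gives the first summand $H\nu_U(K)\sqrt{dK/\lambda}$.

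\textbf{Step 3 (elliptical potential; main obstacle).} The difficulty is that $\Sigma^U$ is built from weighted segment features, while the regret involves the whole-trajectory feature $\phi^{\tau^k}=\sum_i\phi^{\tau_i^k}$, and the partition varies across episodes. Write $x_i^k=\sqrt{\sig'(\ell_i^kr_{\max})}\,\phi^{\tau_i^k}$. By the triangle inequality and Cauchy--Schwarz,
\[
\|\phi^{\tau^k}\|_{(\Sigma_{k-1}^U)^{-1}}\le\sqrt{\sum_{i}\frac{1}{\sig'(\ell_i^kr_{\max})}}\;\sqrt{\sum_i\|x_i^k\|^2_{(\Sigma_{k-1}^U)^{-1}}}.
\]
Each realized norm is capped at $H/\sqrt{\lambda}$ because regret per episode is bounded, so I can work with $\min\{1,\cdot\}$. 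The within-episode batch sum $\sum_i\|x_i^k\|^2_{(\Sigma_{k-1}^U)^{-1}}$ is then compared with the intra-batch increments via $\min\{1,\sum_i\|x_i^k\|^2\}\le\max\{1,\sum_i\sig'(\ell_i^kr_{\max})(\ell_i^k)^2/\lambda\}\cdot\log\det$ ratio. Using $\sig'\le 1/4$, this gives exactly the factor $\max\{\sum_i(\ell_i^k)^2/(4\lambda),1\}$.

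Summing over $k$ with Cauchy--Schwarz, and bounding $\sum_k\log\det$ ratios by $d\log\bigl(1+\sum(\ell_i^k)^2/(4\lambda d)\bigr)$, produces the second summand $\nu_U(K)d\sqrt{\sum_k\max\{\cdot,1\}\sum_i 1/\sig'(\ell_i^kr_{\max})}$. Getting this batch-versus-sequential comparison right for arbitrary adaptive partitions is the step I expect to be delicate. I would first try the determinant inequality $\det(A+\sum_i x_ix_i^\top)\ge\det(A)\bigl(1+\sum_i\|x_i\|^2_{A^{-1}}/c\bigr)$ with $c=\max\{1,\sum_i\|x_i\|_2^2/\lambda\}$, applied episode by episode.
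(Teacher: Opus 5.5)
Your proposal is correct and follows essentially the same route as the paper. The shared steps are a Faury-style self-concordant confidence bound in the $\Sigma^U$ geometry (via $\Sigma^U_k\preceq\Lambda^U_k(\theta^*)$ and the maximal segment length $L(k)$), constant-probability optimism of $\widetilde\theta_k$ along $\phi^{\pi^*}$, Jensen plus Azuma to pass from $\phi^{\pi^k}$ to the realized $\phi^{\tau^k}$, and an episode-level curvature-weighted elliptical potential built from the triangle inequality, Cauchy--Schwarz, and $x\le 2\max\{c,1\}\log(1+x)$ on $[0,c]$ with $c=\sum_i(\ell_i^k)^2/(4\lambda)$. The only inessential difference is that the paper handles the Thompson-sampling step with the Efroni et al.\ independent-copy argument and conditional tail level $\delta_k=k^{-4}$, rather than an Agrawal--Goyal saturated-policy argument. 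The $\min\{1,\cdot\}$ truncation in your Step 3 is unnecessary and, as stated, bounds the wrong quantity: your Cauchy--Schwarz step produces the untruncated sum $\sum_i\|x_i^k\|^2_{(\Sigma^U_{k-1})^{-1}}$. Your final determinant inequality with $c=\max\{1,\sum_i\|x_i^k\|_2^2/\lambda\}$ already handles that sum directly, exactly as the paper does.
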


The main implication of Theorem~\ref{thm:regret-binary-variable-known} is that it explicitly captures the exponential dependence associated with each individual segment length through the factor:
% under variable segments. In particular, the segment-dependent exponential factor is 
\[ \sum_{i=1}^{m_k} \frac{1}{\sig'(\ell_i^k r_{\max})} = \sum_{i=1}^{m_k} \left( e^{\ell_i^k r_{\max}} +2 +e^{-\ell_i^k r_{\max}} \right). \] 
Thus, up to polynomial factors, the regret depends on $\sqrt{ \sum_{k=1}^{K} \sum_{i=1}^{m_k} e^{\ell_i^k r_{\max}} }.$ Each segment therefore contributes an exponential cost determined by its own length, and a small number of long segments can substantially enlarge the regret bound. 
In addition, we note that under a usual elliptical potential argument, the feature decomposition should be aligned with  the realized segment partition. Then, when the same segment-wise feature decomposition is used in both the elliptical norms and covariance matrix, the effect of state-action features is absorbed into logarithmic factors and only the dependence on segment lengths remains in the dominant term.

% pairing each segment feature with its corresponding curvature weight $\sig'(\ell_i^k r_{\max})$. Under this decomposition, the realized state--action features enter through their segment-wise sums, and their detailed geometry affects the regret bound only through the log-determinant of the associated covariance matrix. This dependence is therefore absorbed into the logarithmic factors suppressed by $\widetilde O$, whereas the segment lengths enter the dominant segment-dependent terms explicitly. 

% Consequently, comparing different partitions with the same $m_k$ reduces to comparing the two length-dependent quantities appearing in Theorem~\ref{thm:regret-binary-variable-known}. Both quantities favor balanced segment lengths. Indeed, 

To understand the influence of segmentation more explicitly, we fix an episode $k$ and the number of segments $m_k$.
Since $\sum_{i=1}^{m_k}\ell_i^k=H$ and $1/\sig'(\ell r_{\max})$ is convex in $\ell$, Jensen's inequality gives
\[
    \sum_{i=1}^{m_k}
    \frac{1}{\sig'(\ell_i^k r_{\max})}
    \geq
    \frac{m_k}{
        \sig'\left(
            \frac{H}{m_k}r_{\max}
        \right)
    }.
\]
Similarly, the convexity of function $\ell^2$ implies
\[
    \sum_{i=1}^{m_k}(\ell_i^k)^2
    \geq
    \frac{H^2}{m_k}.
\]
Both inequalities attain equality when
$\ell_i^k=H/m_k$ for every $i\in[m_k]$. Therefore, when the exponential
curvature term dominates, equal segmentation minimizes the dominant
segment-dependent term in the regret bound. 
% If $H$ is not divisible by
% $m_k$, the most balanced integer partition gives the corresponding
% minimizer. 
This conclusion is drawn under the
usual elliptical potential analysis, and whether there exists a more advanced algorithm or analysis that can improve the regret performance through unequal segmentation, e.g.,  exploiting the state-action features to partion differently, remains an open problem. 
In a nutshell, we show in Theorem~\ref{thm:regret-binary-variable-known} that, \emph{under standard logistic-regression-based algorithm design and elliptical potential analysis, equal segmentation achieves the best regret performance for binary feedback}, that is, unequal segmentation does not bring significant additional benefits.

Below we provide a proof sketch to explain how the effect of state-action features is absorbed into logarthimic factors and how the dependence on individual segment lengths is derived through a curvature-weighted
elliptical potential argument.

% \yihan{This proof sketch is too brief (it describes important points using only texts). should be as concrete as that for the binary feedback lower bound. 
% \\
% Should expand important points with big equations: (i) how the state-action features are absorbed to the elliptical potential analysis. (ii) the whole analytical procedure, including weighted concentration for logistic regression, and how to deal with the weighted confidence intervals using the elliptical potential analysis.
% \\
% After reading this proof sketch, the reader should almost know how to prove.
% }

\begin{proof}[Proof sketch.]
The analysis begins with the observation that each segment feature aggregates the underlying state-action features:
\[
\phi^{\tau_i^k}
=\sum_{h=t_{i-1}^k+1}^{t_i^k}\psi(s_h^k,a_h^k),
\qquad
\|\phi^{\tau_i^k}\|_2\leq\ell_i^k.
\]
While the partition may depend on the realized trajectory, it is fixed before the binary outcomes are revealed. Hence, conditional on the realized trajectory and the chosen partition, the feedback noises have zero mean and variances $\sig'((\phi^{\tau_i^k})^\top\theta^*)$. The bound $|(\phi^{\tau_i^k})^\top\theta^*|\leq\ell_i^k r_{\max}$ implies
\[
\sig'(\ell_i^k r_{\max})
\leq \sig'((\phi^{\tau_i^k})^\top\theta^*).
\]
Consequently, the algorithm's covariance matrix weighted by the segment-length-dependent curvature is dominated by the covariance matrix weighted by the true feedback variances:
\[
\Sigma_{k-1}^U
=\lambda I+\sum_{k'=1}^{k-1}\sum_{i=1}^{m_{k'}}
\sig'(\ell_i^{k'}r_{\max})
\phi^{\tau_i^{k'}}(\phi^{\tau_i^{k'}})^\top
\preceq
\lambda I+\sum_{k'=1}^{k-1}\sum_{i=1}^{m_{k'}}
\sig'((\phi^{\tau_i^{k'}})^\top\theta^*)
\phi^{\tau_i^{k'}}(\phi^{\tau_i^{k'}})^\top.
\]
A variance-weighted self-normalized concentration argument (Faury et al.,~\citeyear{faury2020improved}), combined with the self-concordance property of the logistic loss, yields the following concentration bound: with high probability,
\[
|\phi^\top(\hat\theta_{k-1}-\theta^*)|
\leq \nu_U(k-1)\|\phi\|_{(\Sigma_{k-1}^U)^{-1}},
\qquad \forall k\geq1,\ \forall\phi\in\mathbb R^d.
\]

Conditional on the history prior to sampling $\xi_k$, the Gaussian perturbation used for posterior sampling is optimistic along $\phi^{\pi^*}$ with constant probability. The posterior-sampling decomposition and martingale concentration then reduce the regret to a sum of trajectory-feature norms under the inverse weighted covariance matrice:
\[
\mathcal R(K)
=\widetilde O\left(
\nu_U(K)\sqrt d\left(
\sum_{k=1}^K\|\phi^{\tau^k}\|_{(\Sigma_{k-1}^U)^{-1}}
+H\sqrt{\frac K\lambda}\right)
+Hr_{\max}\sqrt K\right).
\]

To control these norms, we absorb the curvature weights into the segment features. These weighted features appear exactly in the matrix update:
\[
\Sigma_k^U-\Sigma_{k-1}^U
=\sum_{i=1}^{m_k}
\bigl(\sqrt{\sig'(\ell_i^k r_{\max})}\phi^{\tau_i^k}\bigr)
\bigl(\sqrt{\sig'(\ell_i^k r_{\max})}\phi^{\tau_i^k}\bigr)^\top.
\]
Since all segments in an episode use the same pre-episode matrix, the elliptical potential argument is applied at the episode level. Using the fact that $\sig'(x)\leq1/4$ for any $x \in \mathbb{R}$ and the segment-feature norm bound gives
\[
\sum_{i=1}^{m_k}
\|\sqrt{\sig'(\ell_i^k r_{\max})}\phi^{\tau_i^k}\|_{(\Sigma_{k-1}^U)^{-1}}^2
\leq
2\max\left\{\frac{\sum_{i=1}^{m_k}(\ell_i^k)^2}{4\lambda},1\right\}
\log\left( \frac{\det(\Sigma_k^U)}{\det(\Sigma_{k-1}^U)} \right) .
\]
The triangle inequality transfers the trajectory norm to segment norms. 
% Applying Cauchy--Schwarz within and across episodes removes the curvature weights, while the log-determinant increments telescope:
Applying the Cauchy--Schwarz inequality to the weighted segment norms introduces the inverse curvature factor \(1/\sig'(\ell_i^k r_{\max})\), while summing over episodes allows the log-determinant increments to telescope:
\begin{align*}
\sum_{k=1}^K\|\phi^{\tau^k}\|_{(\Sigma_{k-1}^U)^{-1}}
&\leq\sum_{k=1}^K\sum_{i=1}^{m_k}
\frac{\|\sqrt{\sig'(\ell_i^k r_{\max})}\phi^{\tau_i^k}\|_{(\Sigma_{k-1}^U)^{-1}}}
{\sqrt{\sig'(\ell_i^k r_{\max})}}\\
&\leq
\sqrt{2\sum_{k=1}^K
\max\left\{\frac{\sum_{i=1}^{m_k}(\ell_i^k)^2}{4\lambda},1\right\}
\sum_{i=1}^{m_k}\frac{1}{\sig'(\ell_i^k r_{\max})}}
\sqrt{d\log\left(1+\frac{\sum_{k=1}^K\sum_{i=1}^{m_k}(\ell_i^k)^2}{4\lambda d}\right)}.
\end{align*}
% Thus, the inverse-curvature factors arise from removing the feature weights, whereas the polynomial segment-length factors arise from feature aggregation and episode-level matrix updates. Substituting this bound into the regret decomposition and absorbing the martingale terms yields
Substituting this bound into the preceding regret decomposition bound yields:
\[ 
\mathcal R(K)=\widetilde O\left(  H\nu_U(K)\sqrt{\frac{dK}{\lambda}}
        +
\nu_U(K)d\sqrt{\sum_{k=1}^K
\max\left\{\frac{\sum_{i=1}^{m_k}(\ell_i^k)^2}{4\lambda},1\right\}
\sum_{i=1}^{m_k}\frac{1}{\sig'(\ell_i^k r_{\max})}}
\right).
\]
\end{proof}

\section{Experiments}

In this section, we provide experimental results to examine how the granularity of segment feedback  influences learning performance under binary and sum feedback. Unless otherwise specified, all experiments use the same linear MDP instance with $|\mathcal S|=30$, $|\mathcal A|=6$, $H=24$, $r_{\max}=0.1$, $\delta=0.1$ and $K=20000$. The only exception is the  experiment for the known transition and sum feedback setting, which uses a smaller instance with $|\mathcal S|=2$, $|\mathcal A|=2$, $H=24$, $r_{\max}=0.1$, $\delta=0.1$ and $K=5000$, since the design of algorithm $\edlinucbsegd$ focuses more on optimality and it is computationally inefficient. In the equal-length segment setting, we consider the number of segments $m\in\{1,2,3,4,6,8,12,24\}$. Each algorithm is repeated for $20$ independent runs. We report the average regret across runs with two-sided $95\%$ Student-$t$ confidence intervals.

Figure~\ref{fig:experimental-results}(a) presents the experimental results for the equal-length segment and binary feedback setting. The regret of algorithms $\bitssegd$ and $\biseglsvits$ decreases substantially as $m$ increases.
%, with the largest improvement occurring for small $m$. 
This shows that shorter segments provide more informative binary observations by moving closer to the steep region of the sigmoid function, matching the exponential dependence on $H/m$ in Theorems~\ref{thm:seg-bits-upper} and~\ref{thm:seg_binary_unknown_tran_d}.

Figure~\ref{fig:experimental-results}(b) reports the  results for the equal-length segment and sum feedback setting. The regret of algorithms $\edlinucbsegd$ and $\sumseglsvits$ varies only mildly with $m$, indicating that increasing the number of segments does not yield  pronounced improvement under sum segment feedback. This agrees with Theorems~\ref{thm:regret_sum_fixed_segment_d_known} and~\ref{thm:seg_sum_unknown_tran_d}, whose leading regret terms have no explicit dependence on $m$.

Finally, we evaluate algorithm $\uneqsegbitsd$ under the variable segment and binary feedback setting. For computational convenience, we use the same number of segments $m_k$ and the same trajectory partition across different episodes $k$. In the two subfigures of Figure~\ref{fig:experimental-results}(c), we set $m_k=4$ and $m_k=6$, respectively, and evaluate the algorithm from equal to increasingly unequal segmentation. 
% Each result is averaged over $10$ independent runs. 
As shown in Figure~\ref{fig:experimental-results}(c), equal segmentation achieves the smallest regret, while the regret generally increases as the segmentation becomes more uneven, particularly when it contains a very long segment. This is consistent with our theoretical result in Theorem~\ref{thm:regret-binary-variable-known}, where longer segments incur a larger sigmoid curvature cost and equal segmentation minimizes the dominant segment-length-dependent terms for fixed $m_k$.

\begin{figure*}[t]
    \centering

    \subfigure[Equal-length segment and binary feedback]{%
        \begin{minipage}{\textwidth}
            \centering
            \includegraphics[width=0.42\linewidth]
            {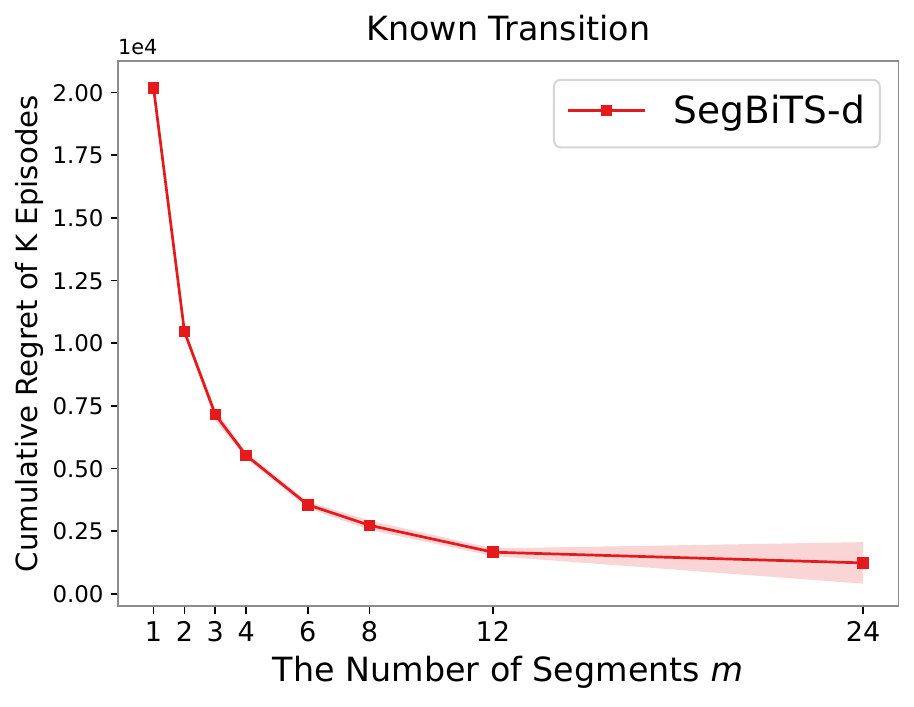}
            \hspace{0.02\linewidth}
            \includegraphics[width=0.42\linewidth]
            {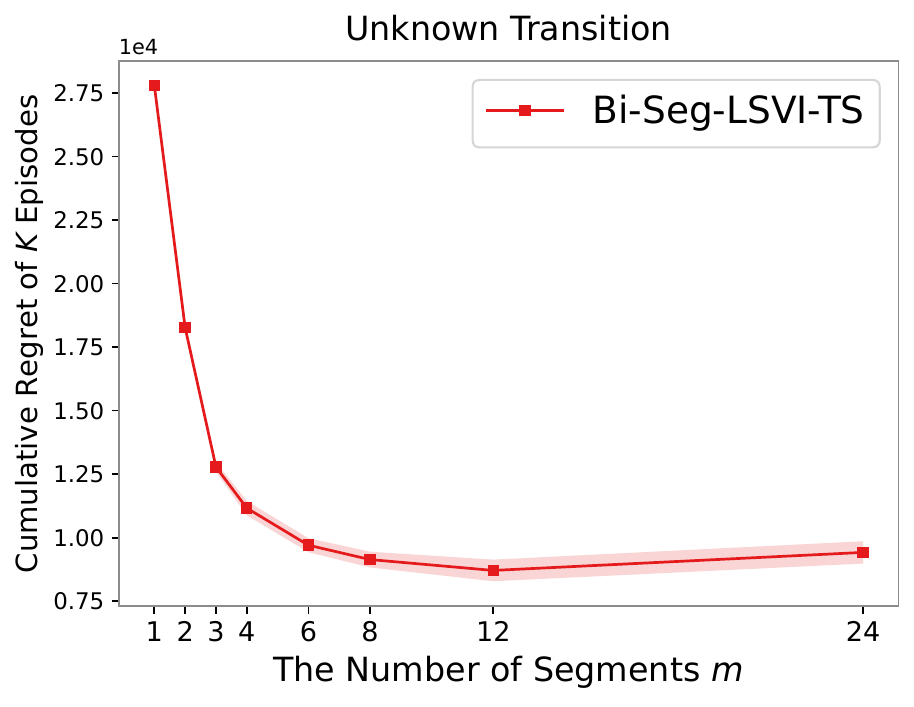}
        \end{minipage}%
    }

    \par\medskip

    \subfigure[Equal-length segment and sum feedback]{%
        \begin{minipage}{\textwidth}
            \centering
            \includegraphics[width=0.42\linewidth]
            {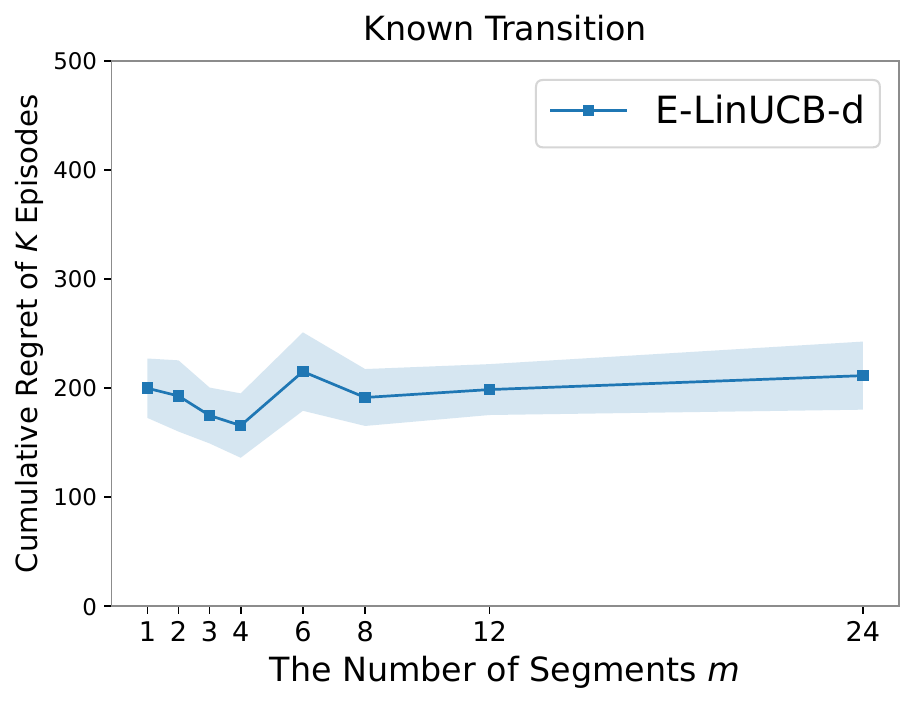}
            \hspace{0.02\linewidth}
            \includegraphics[width=0.42\linewidth]
            {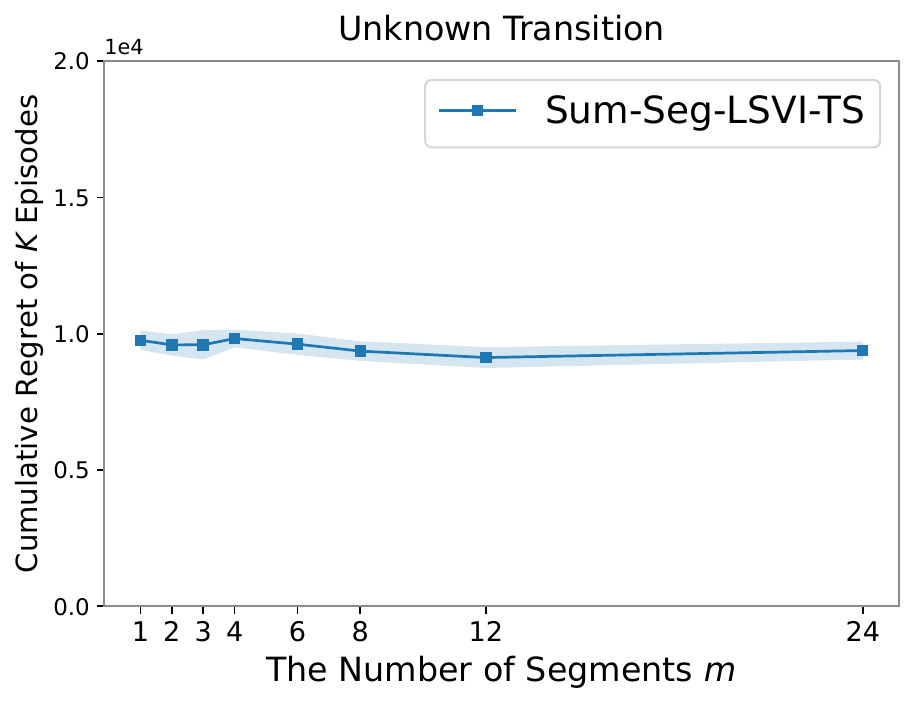}
        \end{minipage}%
    }

    \par\medskip

    \subfigure[Variable segment and binary feedback with known
    transitions. The X-axis ranges from equal to increasingly
    unequal segmentation.]{%
        \begin{minipage}{\textwidth}
            \centering
            \includegraphics[width=0.42\linewidth]
            {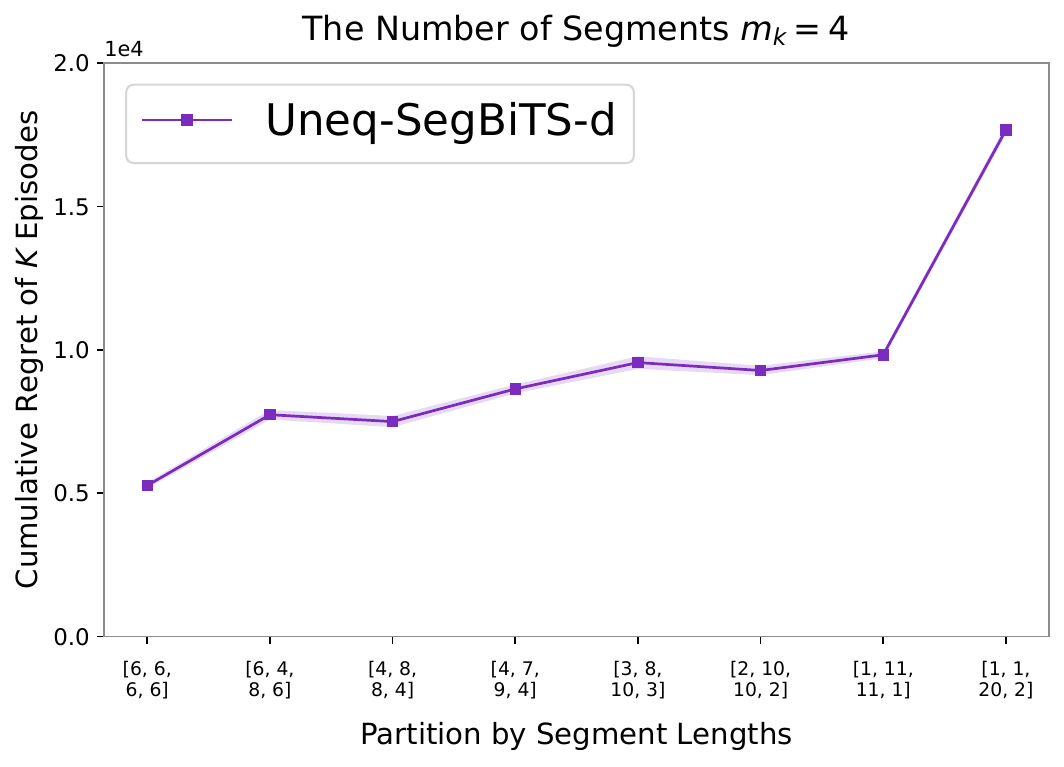}
            \hspace{0.02\linewidth}
            \includegraphics[width=0.42\linewidth]
            {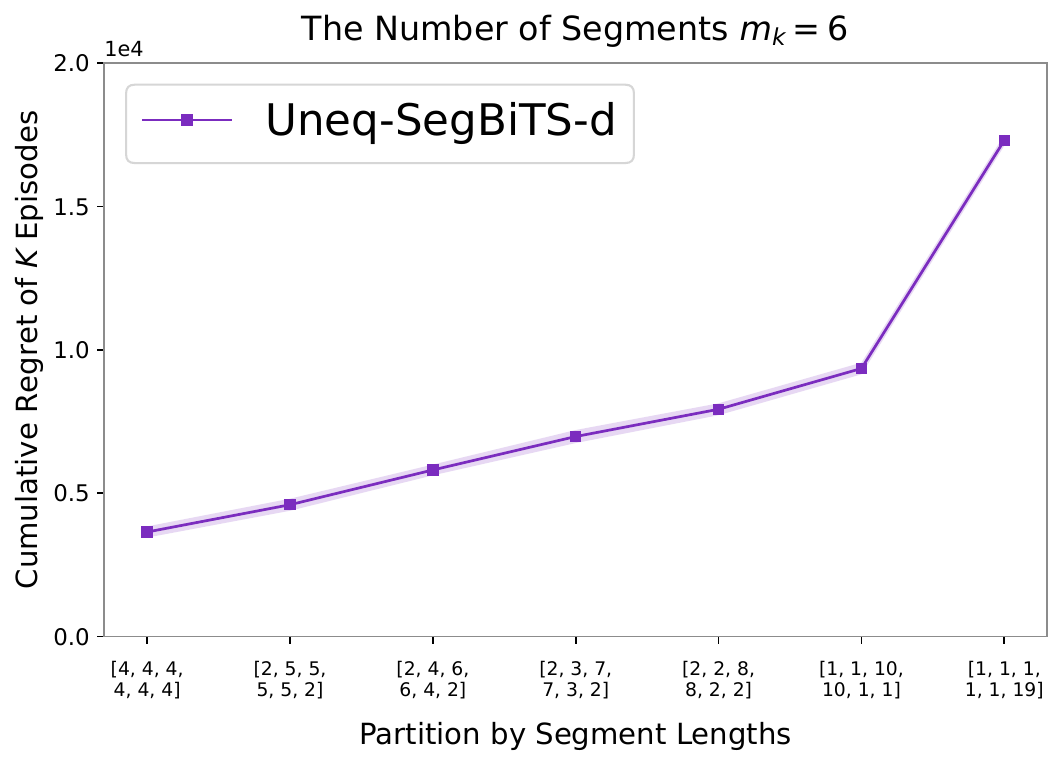}
        \end{minipage}%
    }

    \caption{Experimental results for RL with segment feedback
    in linear MDPs.}
    \label{fig:experimental-results}
\end{figure*}

\section{Conclusion and Future Work}

In this work, we study RL with segment reward feedback under linear function approximation. For equal-length segments with known and unknown transitions, we develop algorithms for both binary and sum feedback types and provide regret guarantees. Lower bounds under known transitions are established. 
These results reveal an insight: increasing the number of segments accelerates learning under binary feedback, but does not substantially expedite learning under sum feedback. 
We further consider variable segments with binary feedback. The resulting regret bound shows that, under the common elliptical potential analysis, equal segmentation minimizes the regret.

An important direction for future work is to investigate whether the current finding in the variable segment setting, i.e., the preference for equal segmentation, reflects an intrinsic property of the problem or stems from the limitations of the current analysis.
%determine whether this preference for equal segmentation reflects an intrinsic property of the problem or a limitation of the current analysis. 
It is interesting to develop new algorithms and more advanced analytical tools beyond the elliptical potential framework, as well as a lower bound, to fundamentally determine whether partitioning a trajectory according to the realized state-action features can provably improve learning.

\clearpage
\begin{appendices}
\numberwithin{theorem}{section}
\numberwithin{assumption}{section}

\section{Experimental Details}\label{secA}

In this section, we detail the instances used in our experiments.

For the fixed equal-length binary-feedback experiments with both known and
unknown transitions, the unknown-transition sum-feedback experiment, and
the variable-segment binary-feedback experiment, we consider the linear MDP instance
illustrated in Figure~\ref{fig:experiment-instance-other}. There are 30
states and 6 actions. The states are arranged into five cyclic layers, with
six states in each layer. The agent starts from a state chosen uniformly
from the first layer. There are two optimal actions, $a_1$ and $a_2$, and
four suboptimal actions, $a_3,\ldots,a_6$, in every state. For any
$s\in\cS$, the reward function satisfies
\[
r(s,a_1)=r(s,a_2)=r_{\max},
\qquad
r(s,a)=0,\quad a\in\{a_3,a_4,a_5,a_6\}.
\]
At each step, independently of the current state and selected action, the
agent transitions uniformly to one of the six states in the next layer.
After the fifth layer, the transition returns to the first layer. Thus, for
every state and action, each state in the next cyclic layer is reached with
probability $1/6$.

\begin{figure*}[h]
    \centering
    \includegraphics[
        width=\textwidth,
        trim=300pt 135pt 220pt 115pt,
        clip
    ]{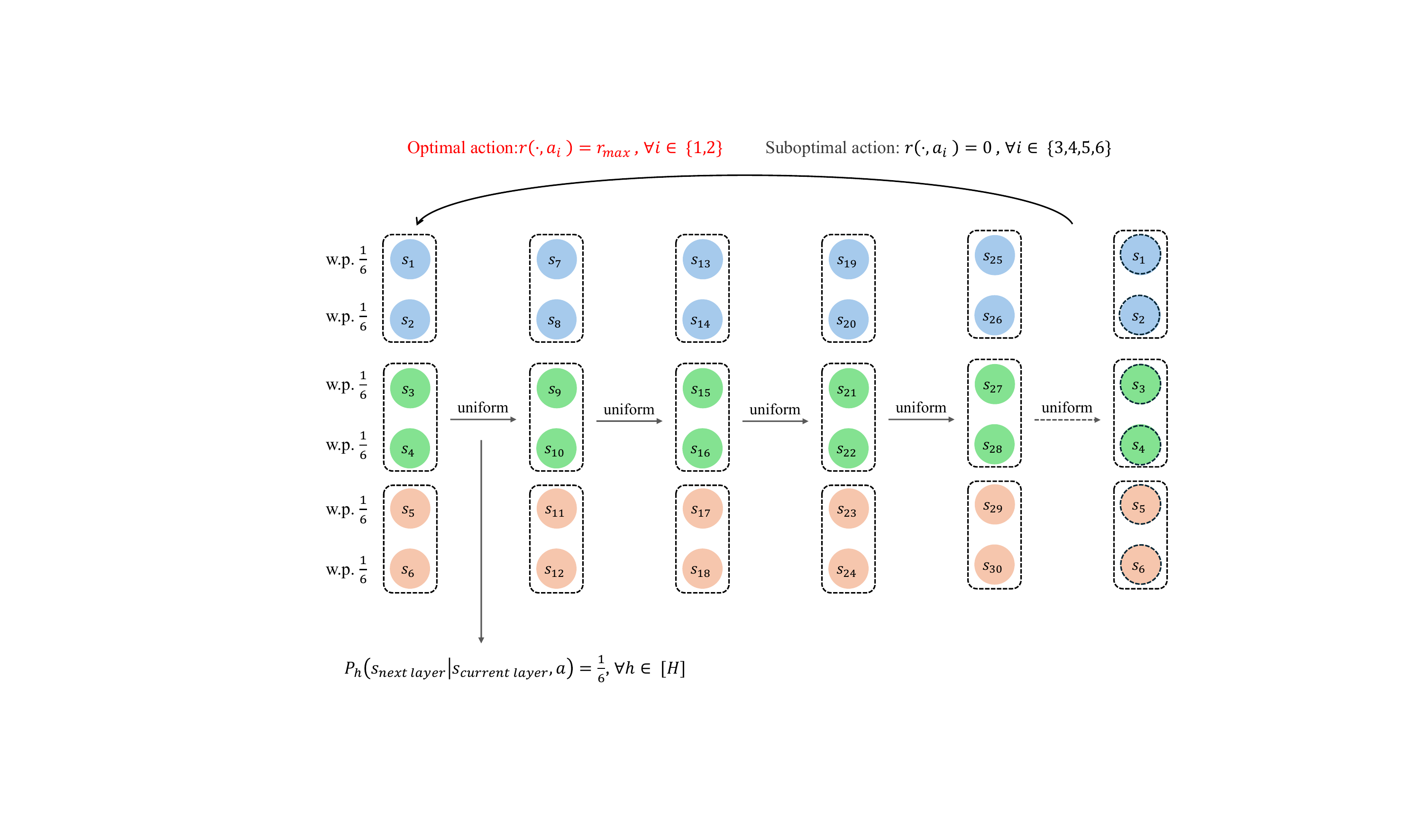}
    \caption{The 30-state linear MDP instance used in experiments.}
    \label{fig:experiment-instance-other}
\end{figure*}

For the known-transition sum-feedback experiment, we consider the smaller
linear MDP illustrated in Figure~\ref{fig:experiment-instance-sum}. There
are two states, denoted by $s^+$ and $s^-$, and two actions, denoted by
$a^+$ and $a^-$. The initial state is chosen uniformly from
$\{s^+,s^-\}$. For the optimal action $a^+$, we have
\[
r(s^+,a^+)=r_{\max},
\qquad
r(s^-,a^+)=0,
\]
whereas for the suboptimal action $a^-$,
\[
r(s^+,a^-)=0,
\qquad
r(s^-,a^-)=-r_{\max}.
\]
In either state, under the optimal action $a^+$, the agent transitions to
$s^+$ and $s^-$ with probabilities $0.75$ and $0.25$, respectively. Under
the suboptimal action $a^-$, it transitions to $s^+$ and $s^-$ with
probabilities $0.25$ and $0.75$, respectively. These transition
probabilities are the same at every step.

\begin{figure*}[t]
    \centering
    \includegraphics[
        width=0.55\textwidth,
        trim=525pt 345pt 503pt 165pt,
        clip
    ]{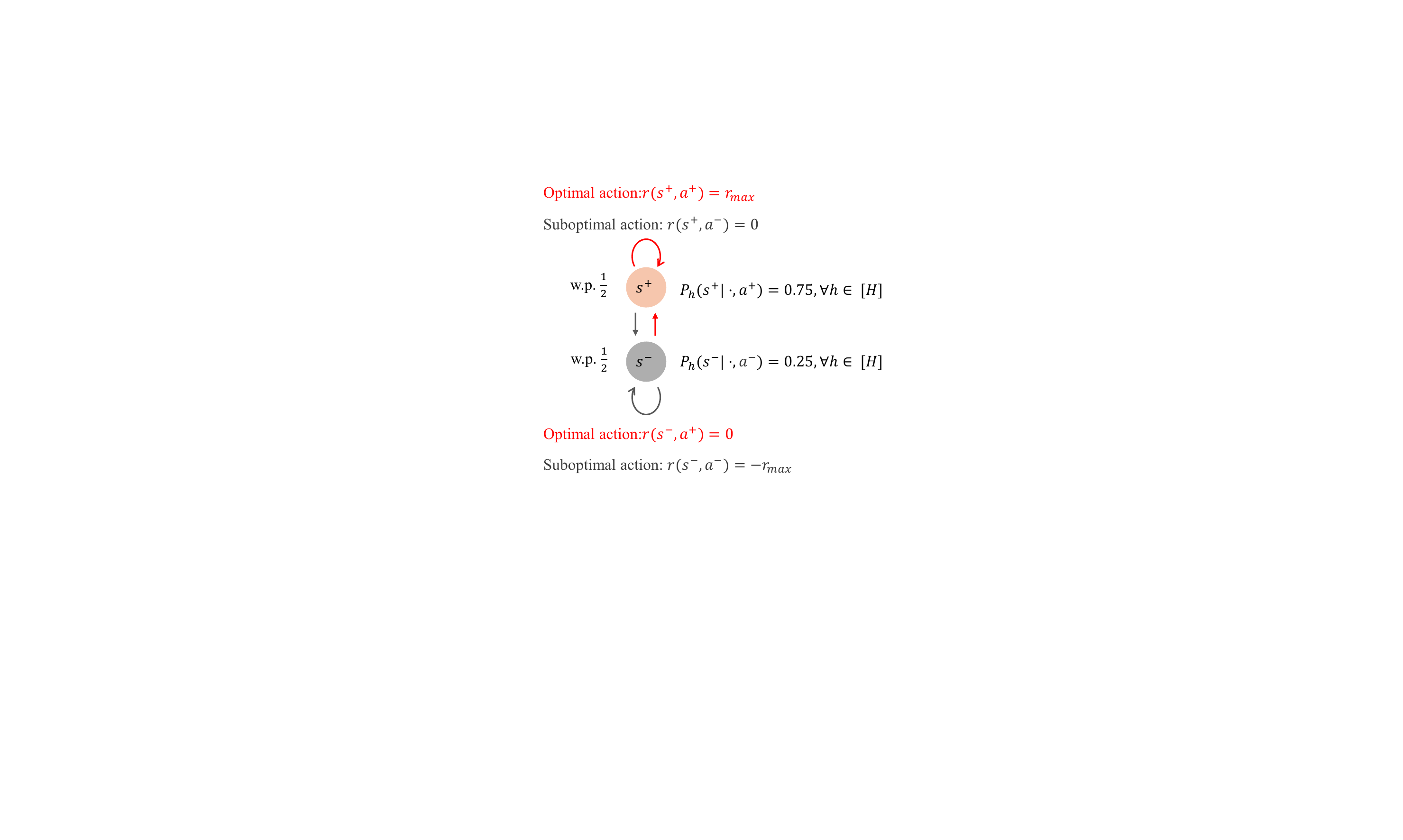}
    \caption{The two-state linear MDP instance used in the
    known-transition sum-feedback experiment.}
    \label{fig:experiment-instance-sum}
\end{figure*}

\section{Proofs for Fixed Equal-Length Binary Feedback with Known Transitions}\label{secB}

\subsection{Proof for the Regret Upper Bound }\label{app:upper_binary_known}

 First, we prove the confidence interval of the estimate $\hat{\theta}_{k-1}$.
    For any $k>0$ and $\theta \in \Theta$, define
\begin{align}
	Z_k &:= \sum_{k'=1}^{k} \sum_{i=1}^{m} \varepsilon_{k',i} \cdot \phi^{\tau^{k'}_i} , \nonumber
	\\
	g_{k}(\theta) &:= \sum_{k'=1}^{k} \sum_{i=1}^{m} \sig( (\phi^{\tau^{k'}_i})^\top\theta ) \cdot \phi^{\tau^{k'}_i} + \lambda \theta , \label{eq:def_g_k}
	\\
	\Lambda_{k}(\theta) &:=  \sum_{k'=1}^{k} \sum_{i=1}^{m} \sig'( (\phi^{\tau^{k'}_i})^\top\theta ) \cdot \phi^{\tau^{k'}_i} (\phi^{\tau^{k'}_i})^\top + \lambda I . \label{eq:def_Lambda_k}
\end{align}

\begin{lemma} \label{lemma:lin_Lambda_k}
	For any $k>0$ and $\theta \in \Theta$, we have
	\begin{align*}
		\det(\Lambda_k(\theta)) \leq \sbr{ \frac{H^2 \sig'_{\max} k}{d m} + \lambda  }^{d} .
	\end{align*}
\end{lemma}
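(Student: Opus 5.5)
The plan is the standard determinant--trace (AM--GM) argument applied to the positive definite matrix $\Lambda_k(\theta)$. First we check positive definiteness: $\Lambda_k(\theta)$ is $\lambda I$ plus a sum of rank-one matrices $\phi\phi^\top$ weighted by $\sig'(\cdot)\ge 0$, and $\lambda>0$. Hence all its eigenvalues $\lambda_1,\dots,\lambda_d$ are positive. The AM--GM inequality on these eigenvalues then gives
\[
\det(\Lambda_k(\theta))=\prod_{j=1}^d \lambda_j \le \left(\frac{1}{d}\sum_{j=1}^d\lambda_j\right)^{d}=\left(\frac{\trace(\Lambda_k(\theta))}{d}\right)^{d}.
\]

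Next we bound the trace. By linearity of the trace and $\trace(\phi\phi^\top)=\|\phi\|_2^2$,
\[
\trace(\Lambda_k(\theta))=\lambda d+\sum_{k'=1}^{k}\sum_{i=1}^{m}\sig'\bigl((\phi^{\tau_i^{k'}})^\top\theta\bigr)\,\|\phi^{\tau_i^{k'}}\|_2^2 .
\]
We use two facts. The curvature satisfies $\sig'(\cdot)\le \sig'_{\max}$, where $\sig'_{\max}=1/4$ is the global maximum of $\sig'$. Each segment feature satisfies $\|\phi^{\tau_i^{k'}}\|_2\le H/m$, since it is a sum of $H/m$ features of norm at most one under Assumption~\ref{assump:linear_mdp}. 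There are $km$ summands, so
\[
\trace(\Lambda_k(\theta))\le \lambda d+km\,\sig'_{\max}\frac{H^2}{m^2}=\lambda d+\frac{H^2\sig'_{\max}k}{m}.
\]

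Dividing by $d$ and raising to the $d$-th power yields $\det(\Lambda_k(\theta))\le\left(\frac{H^2\sig'_{\max}k}{dm}+\lambda\right)^d$, which is the claim. The argument is routine and has no real obstacle. The only point to watch is that the bound must hold uniformly in $\theta\in\Theta$ (indeed for any $\theta$). This is guaranteed because we bound $\sig'$ by its global maximum rather than by a $\theta$-dependent quantity.
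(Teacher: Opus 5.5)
Your proposal is correct and matches the paper's proof: both apply AM--GM to the eigenvalues to get $\det(\Lambda_k(\theta))\le(\trace(\Lambda_k(\theta))/d)^d$. Both then bound the trace by $\lambda d + km\,\sig'_{\max}(H/m)^2$, using $\sig'\le\sig'_{\max}$ and $\|\phi^{\tau_i^{k'}}\|_2\le H/m$.
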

\begin{proof}
	For any $k>0$, by AM-GM inequality, we have
	\begin{align*}
		\det(\Lambda_k(\theta)) &\leq \sbr{ \frac{\trace(\Lambda_k(\theta))}{d} }^{d}
		\\
		&\leq \sbr{ \frac{1}{d} \cdot \sbr{ k m \cdot \sig'_{\max} \cdot \sbr{\frac{H}{m}}^2 + \lambda d } }^{d}
		\\
		&= \sbr{ \frac{H^2 \sig'_{\max} k}{d m} + \lambda  }^{d} .
	\end{align*}
\end{proof}

For each $k>0$, let $F_k$ denote the filtration containing all information accumulated through the end of episode $k$. For each $i\in[m]$, let $\widetilde F_{i,k}$ denote the filtration containing the information available immediately before $y_i^k$ is observed in episode $k$. Since $\pi^k$ is determined before the observation of $y_i^k$, it is $\widetilde F_{i,k}$-measurable.

For each $k>0$ and $i\in[m]$, define the binary-feedback noise as $\varepsilon_{k,i}:=y_i^k-\sig((\phi^{\tau_i^k})^\top\theta^*)$. Its conditional variance given $\widetilde F_{i,k}$ is $v_{k,i}^2:=\ex[\varepsilon_{k,i}^2\mid\widetilde F_{i,k}]=\sig((\phi^{\tau_i^k})^\top\theta^*)\bigl(1-\sig((\phi^{\tau_i^k})^\top\theta^*)\bigr)=\sig'((\phi^{\tau_i^k})^\top\theta^*)$.

\begin{lemma}[Concentration of Noises under Binary Feedback]\label{lemma:lin_noise_seq_martingale}
	With probability at least $1-\delta'$, for any $k>0$,
	\begin{align*}
		\nbr{\sum_{k'=1}^{k} \sum_{i=1}^{m} \varepsilon_{k',i} \cdot \phi^{\tau^{k'}_i}}_{\Lambda_{k}^{-1}(\theta^*)}
		\leq
		\frac{\sqrt{\lambda}}{2}
		+
		\frac{d}{\sqrt{\lambda}}
		\log
		\sbr{
			\frac{4}{\delta'}
			\cdot
			\sbr{
				1
				+
				\frac{H^2 \sig'_{\max} k}{d m \lambda}
			}
		}.
	\end{align*}
\end{lemma}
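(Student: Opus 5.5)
The plan is to invoke the variance-weighted self-normalized martingale inequality for vector-valued martingales with bounded increments, in the form of Faury et al.~(2020, Theorem~1). There the weighting matrix uses the true conditional variances. Here those variances are exactly $v_{k,i}^2=\sig'((\phi^{\tau_i^k})^\top\theta^*)$, so $\Lambda_k(\theta^*)=\lambda I+\sum_{k',i} v_{k',i}^2\,\phi^{\tau_i^{k'}}(\phi^{\tau_i^{k'}})^\top$ is precisely the matrix appearing in that theorem. The increments must first be put in the right form.

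We flatten the double index $(k',i)$ into a single time index $t=(k'-1)m+i$, ordered lexicographically, with filtration $\widetilde F_{i,k'}$. With respect to this filtration:
\begin{itemize}
\item the feature $\phi^{\tau_i^{k'}}$ is $\widetilde F_{i,k'}$-measurable, since the segment is realized before $y_i^{k'}$ is revealed;
\item $\varepsilon_{k',i}$ has conditional mean zero;
\item $|\varepsilon_{k',i}|\le 1$ and the conditional variance is $v_{k',i}^2$.
\end{itemize}

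The features have norm up to $H/m$ rather than $1$. We therefore rescale: write $x_t=\phi^{\tau_i^{k'}}/(H/m)$, or equivalently absorb the scale into the regularizer, so that the theorem applies to unit-norm features. Faury et al.'s bound reads, uniformly over $t$ with probability $1-\delta'$,
\[
\|S_t\|_{\widetilde H_t^{-1}}\le \frac{\sqrt{\lambda}}{2}+\frac{2}{\sqrt{\lambda}}\log\!\Bigl(\frac{\det(\widetilde H_t)^{1/2}\lambda^{-d/2}}{\delta'}\Bigr)+\frac{2d}{\sqrt{\lambda}}\log 2 .
\]
Here $\widetilde H_t=\lambda I+\sum v_s^2 x_s x_s^\top$. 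The constants in their statement assume $\|x\|\le 1$ and $|\varepsilon|\le 1$.

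Since the target bound's leading constant does not contain $H/m$, I expect the cleanest route to be applying the theorem directly with unnormalized features. Their proof (the Laplace/Freedman-type supermartingale argument) only needs $|\varepsilon_t\, x_t^\top u|\le 1$ on the relevant ball of $u$. This is handled either by reproducing their argument with the ball radius scaled, or by noting that the final inequality only depends on the features through $\det\Lambda_k(\theta^*)$.

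Then combine $\log(\det(\Lambda_k(\theta^*))^{1/2}\lambda^{-d/2})$ and $d\log 2$ into the form $\frac{d}{\sqrt\lambda}\log\bigl(\frac{4}{\delta'}(\cdots)\bigr)$. The ingredients are $\frac{2}{\sqrt\lambda}\cdot\frac12\log\det(\cdot)=\frac{1}{\sqrt\lambda}\log\det(\cdot)$, and Lemma~\ref{lemma:lin_Lambda_k} with $\theta=\theta^*$, which gives $\det(\Lambda_k(\theta^*))\lambda^{-d}\le(1+\frac{H^2\sig'_{\max}k}{dm\lambda})^d$. The terms $\frac{2}{\sqrt\lambda}\log(1/\delta')+\frac{2d\log 2}{\sqrt\lambda}$ are then bounded by $\frac{d}{\sqrt\lambda}\log(4/\delta')$, using $d\ge1$ and crude constant absorption.

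The main obstacle is the scale issue: the features are bounded by $H/m$, not by $1$. I would first try to verify that Faury et al.'s argument goes through with a ball of radius $\sqrt\lambda/(2\cdot H/m)$-type adjustments, or else redo their stopping-time and mixture argument with the increment bound $|\varepsilon_t x_t^\top u|\le 1$ for $\|u\|_2\le m/H$. I would track how the $\sqrt\lambda/2$ term and the log-determinant term change. If a residual $H/m$ factor appears, it can be pushed inside the logarithm or into the $\lambda$-dependent term, in the manner of the paper's downstream definition of $\omega(k)$.
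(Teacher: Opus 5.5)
\textbf{Gap: the feature-scale step fails, and the stated bound cannot be recovered.} Your route is exactly the paper's: flatten $(k',i)$ into one time index with filtration $\widetilde F_{i,k'}$, check the martingale-difference and variance conditions, invoke Theorem~1 of Faury et al., bound the determinant with Lemma~\ref{lemma:lin_Lambda_k}, and merge constants. The paper applies that theorem as if $\|\phi^{\tau_i^{k'}}\|_2\le 1$. You are right that this is not legitimate: the theorem assumes $x_t$ lies in the unit ball, while $\|\phi^{\tau_i^{k'}}\|_2$ can be as large as $H/m$. But neither repair you sketch works.

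Rescale $x_t=\frac{m}{H}\phi^{\tau_i^{k'}}$ with regularizer $\frac{m^2}{H^2}\lambda$; this is the same as running their mixture argument over the ball $\|u\|_2\le m/H$. It gives exactly
\[
\nbr{Z_k}_{\Lambda_k^{-1}(\theta^*)}\le\frac{m\sqrt{\lambda}}{2H}+\frac{2H}{m\sqrt{\lambda}}\log\sbr{\frac{\det(\Lambda_k(\theta^*))^{1/2}\lambda^{-d/2}}{\delta'}}+\frac{2Hd\log 2}{m\sqrt{\lambda}} .
\]
So the features do not enter only through $\det\Lambda_k(\theta^*)$. The scale appears as a multiplicative factor $H/m$ in front of the logarithm, and as $m/H$ on the $\sqrt{\lambda}/2$ term. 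A multiplicative factor cannot be moved inside the logarithm or into $\lambda$ while keeping the stated form.

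\textbf{The stated inequality is false when $H/m$ is large.} Take $d=1$, $\psi\equiv1$, $m=1$, $H=L$, $\lambda=1$, and choose $\theta^*$ with $\sig(L\theta^*)=p:=L^{-2}$. This is admissible once $r_{\max}\ge\frac{1}{L}\log(L^2-1)$. At $k=1$, with probability $p$ we get $y_1^1=1$, and then
\[
\nbr{Z_1}_{\Lambda_1^{-1}(\theta^*)}=\frac{(1-p)L}{\sqrt{1+p(1-p)L^2}}\ge\frac{(1-p)L}{\sqrt2}.
\]
With $\delta'=p/2$, the right-hand side of the lemma equals $\frac12+\log\bigl(8L^2(1+L^2/4)\bigr)=O(\log L)$. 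For $L=100$ this is about $70.7$ against $19.6$, and the violation probability $10^{-4}$ exceeds $\delta'$.

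Consequently, Theorem~1 justifies the statement as written only when $m=H$. In general you must prove the $H/m$-scaled version displayed above, and that extra factor then has to be carried into $\omega(k)$ and $\nu(k)$.

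\textbf{Minor point.} Merging $\frac{2}{\sqrt\lambda}\log\frac{1}{\delta'}$ into $\frac{d}{\sqrt\lambda}\log\frac{4}{\delta'}$ requires $d\ge2$, not $d\ge1$. The paper makes the same step.
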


\begin{proof}
	According to Theorem 1 in (Faury et al.,~\citeyear{faury2020improved}), we have that with probability at least $1-\delta'$, for any $k>0$,
	\begin{align*}
		\nbr{\sum_{k'=1}^{k} \sum_{i=1}^{m} \varepsilon_{k',i} \cdot \phi^{\tau^{k'}_i}}_{\Lambda_{k}^{-1}(\theta^*)}
		&\leq
		\frac{\sqrt{\lambda}}{2}
		+
		\frac{2}{\sqrt{\lambda}}
		\log
		\sbr{
			\frac{
				\det(\Lambda_{k}(\theta^*))^{\frac{1}{2}}
				\cdot
				\lambda^{-\frac{d}{2}}
			}{\delta'}
		}
		+
		\frac{2}{\sqrt{\lambda}} d \log(2)
		\\
		&\overset{\textup{(a)}}{\leq}
		\frac{\sqrt{\lambda}}{2}
		+
		\frac{2}{\sqrt{\lambda}}
		\log
		\sbr{
			\frac{1}{\delta'}
			\sbr{
				1
				+
				\frac{H^2 \sig'_{\max} k}{d m \lambda}
			}^{\frac{d}{2}}
		}
		+
		\frac{2}{\sqrt{\lambda}} d \log(2)
		\\
		&\leq
		\frac{\sqrt{\lambda}}{2}
		+
		\frac{d}{\sqrt{\lambda}}
		\log
		\sbr{
			\frac{1}{\delta'}
			\sbr{
				1
				+
				\frac{H^2 \sig'_{\max} k}{d m \lambda}
			}
		}
		+
		\frac{2}{\sqrt{\lambda}} d \log(2)
		\\
		&\leq
		\frac{\sqrt{\lambda}}{2}
		+
		\frac{d}{\sqrt{\lambda}}
		\log
		\sbr{
			\frac{4}{\delta'}
			\sbr{
				1
				+
				\frac{H^2 \sig'_{\max} k}{d m \lambda}
			}
		},
	\end{align*}
	where inequality (a) uses Lemma~\ref{lemma:lin_Lambda_k}.
\end{proof}

Define event
\begin{align*}
	\cE:=\lbr{ \nbr{g_k(\hat{\theta}_k) - g_k(\theta^*) }_{\Lambda_k^{-1}(\theta^*)} \leq \omega(k), \  \forall k>0 } .
\end{align*}
\begin{lemma}
	It holds that
	\begin{align*}
		\Pr(\cE)\ge 1-\delta'.
	\end{align*}
\end{lemma}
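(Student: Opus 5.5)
The plan is to show that on the high-probability event of Lemma~\ref{lemma:lin_noise_seq_martingale} the inequality defining $\cE$ holds deterministically. This gives $\Pr(\cE)\ge 1-\delta'$ immediately.

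First, write down the first-order optimality condition of the regularized logistic regression in Line~\ref{line:segd-estimate}. The objective is strictly convex, and its gradient at $\theta$ equals
\[
\sum_{k'=1}^{k}\sum_{i=1}^{m}\bigl(\sig((\phi^{\tau_i^{k'}})^\top\theta)-y_i^{k'}\bigr)\phi^{\tau_i^{k'}}+\lambda\theta .
\]
Hence $\hat\theta_k$ satisfies
\[
g_k(\hat\theta_k)=\sum_{k'=1}^{k}\sum_{i=1}^{m}y_i^{k'}\phi^{\tau_i^{k'}} .
\]
This uses the indexing convention in which $\hat\theta_k$ is fit on data through episode $k$, matching the definition of $g_k$ in Eq.~\eqref{eq:def_g_k}.

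Substituting $y_i^{k'}=\sig((\phi^{\tau_i^{k'}})^\top\theta^*)+\varepsilon_{k',i}$ gives the exact identity
\[
g_k(\hat\theta_k)-g_k(\theta^*)=\sum_{k'=1}^{k}\sum_{i=1}^{m}\varepsilon_{k',i}\phi^{\tau_i^{k'}}-\lambda\theta^*=Z_k-\lambda\theta^* .
\]
By the triangle inequality in the $\Lambda_k^{-1}(\theta^*)$-norm,
\[
\|g_k(\hat\theta_k)-g_k(\theta^*)\|_{\Lambda_k^{-1}(\theta^*)}\le\|Z_k\|_{\Lambda_k^{-1}(\theta^*)}+\lambda\|\theta^*\|_{\Lambda_k^{-1}(\theta^*)} .
\]

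Next, bound the two terms separately. Since $\Lambda_k(\theta^*)\succeq\lambda I$, we have $\lambda\|\theta^*\|_{\Lambda_k^{-1}(\theta^*)}\le\sqrt{\lambda}\|\theta^*\|_2\le\sqrt{\lambda}\,r_{\max}\sqrt d$ by Assumption~\ref{assump:linear_mdp}. For the noise term, invoke Lemma~\ref{lemma:lin_noise_seq_martingale}: with probability at least $1-\delta'$, simultaneously for all $k$,
\[
\|Z_k\|_{\Lambda_k^{-1}(\theta^*)}\le\frac{\sqrt{\lambda}}{2}+\frac{d}{\sqrt\lambda}\log\Bigl(\frac{4}{\delta'}\bigl(1+\frac{H^2\sig'_{\max}k}{dm\lambda}\bigr)\Bigr).
\]

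The remaining step, and the only one needing care, is matching constants with $\omega(k)$ in Eq.~\eqref{eq:def_omega}. Using $\sig'_{\max}=1/4$, the logarithmic argument becomes $1+\frac{H^2k}{4d\lambda m}$, which is exactly the term in $\omega(k)$. The two $\sqrt\lambda$ contributions sum to $\sqrt\lambda(r_{\max}\sqrt d+\tfrac12)$, so the total bound is precisely $\omega(k)$.

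The only potential obstacle is bookkeeping: confirming that the time index in $\cE$ is consistent with that in $g_k$ and in the concentration lemma. The uniformity over all $k>0$ is inherited directly from the anytime guarantee of Lemma~\ref{lemma:lin_noise_seq_martingale}, so no union bound over $k$ is needed.
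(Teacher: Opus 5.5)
Your proposal is correct and follows the paper's own proof: the first-order optimality condition gives $g_k(\hat\theta_k)-g_k(\theta^*)=Z_k-\lambda\theta^*$. The triangle inequality, Lemma~\ref{lemma:lin_noise_seq_martingale}, and the bound $\lambda\|\theta^*\|_{\Lambda_k^{-1}(\theta^*)}\le r_{\max}\sqrt{\lambda d}$ then give exactly $\omega(k)$ on the same $1-\delta'$ event, uniformly in $k$. Your explicit check that $\sig'_{\max}=1/4$ makes the logarithmic term match Eq.~\eqref{eq:def_omega} is a detail the paper leaves implicit.
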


\begin{proof}This proof is similar to that for Lemma 8 in (Faury et al.,~\citeyear{faury2020improved}).

	For each $k>0$, define
	\begin{align*}
		\cL_k(\theta)
		:=
		-
		\Biggl[
		\sum_{k'=1}^{k}
		\sum_{i=1}^{m}
		\Bigl(
		y_i^{k'}
		\cdot
		\log
		\bigl(
		\sig((\phi^{\tau_i^{k'}})^\top\theta)
		\bigr)
		+
		(1-y_i^{k'})
		\cdot
		\log
		\bigl(
		1-\sig((\phi^{\tau_i^{k'}})^\top\theta)
		\bigr)
		\Bigr)
		-
		\frac{\lambda}{2}\|\theta\|_2^2
		\Biggr].
	\end{align*}

	Recall that
		$\hat{\theta}_k=\arg\min_{\theta}\cL_k(\theta).$
	Since $\nabla\cL_k(\hat{\theta}_k)=0$ and
	\(
	\sig'(x)=\sig(x)(1-\sig(x)),
	\)
	the first-order optimality condition implies
	\begin{align*}
		\underbrace{
		\sum_{k'=1}^{k}
		\sum_{i=1}^{m}
		\sig\!\left((\phi^{\tau_i^{k'}})^\top\hat{\theta}_k\right)
		\phi^{\tau_i^{k'}}
		+
		\lambda\hat{\theta}_k
		}_{g_k(\hat{\theta}_k)}
		=
		\sum_{k'=1}^{k}
		\sum_{i=1}^{m}
		y_i^{k'}
		\phi^{\tau_i^{k'}}.
	\end{align*}

	Therefore,
	\begin{align}
		g_k(\hat{\theta}_k)-g_k(\theta^*)
		&=
		\sum_{k'=1}^{k}
		\sum_{i=1}^{m}
		y_i^{k'}
		\phi^{\tau_i^{k'}}
		-
		\left[
		\sum_{k'=1}^{k}
		\sum_{i=1}^{m}
		\sig\!\left((\phi^{\tau_i^{k'}})^\top\theta^*\right)
		\phi^{\tau_i^{k'}}
		+
		\lambda\theta^*
		\right]
		\nonumber\\
		&=
		\sum_{k'=1}^{k}
		\sum_{i=1}^{m}
		\varepsilon_{k',i}\phi^{\tau_i^{k'}}
		-
		\lambda\theta^*.
		\label{eq:lin_g_hat_theta-g_theta_star}
	\end{align}

	Combining Eq.\eqref{eq:lin_g_hat_theta-g_theta_star} with Lemma~\ref{lemma:lin_noise_seq_martingale}, we obtain that, with probability at least $1-\delta'$, for any $k>0$,
	\begin{align*}
		\nbr{g_k(\hat{\theta}_k)-g_k(\theta^*)}_{\Lambda_k^{-1}(\theta^*)}
		&\le
		\nbr{
		\sum_{k'=1}^{k}
		\sum_{i=1}^{m}
		\varepsilon_{k',i}\phi^{\tau_i^{k'}}
		}_{\Lambda_k^{-1}(\theta^*)}
		+
		r_{\max}\sqrt{\lambda d}
		\\
		&\le
		\frac{\sqrt{\lambda}}{2}
		+
		\frac{d}{\sqrt{\lambda}}
		\log
		\left[
		\frac{4}{\delta'}
		\left(
		1+\frac{H^2\sig'_{\max}k}{dm\lambda}
		\right)
		\right]
		+
		r_{\max}\sqrt{\lambda d}
		\\
		&=
		\omega(k).
	\end{align*}
\end{proof}

For any $\phi \in \R^{d}$ and $\theta_1,\theta_2 \in \Theta$, define
\begin{align*}
b(\phi,\theta_1,\theta_2)
:=
\int_{z=0}^{1}
\sig'\left(
(1-z) \cdot \phi^\top \theta_1
+
z \cdot \phi^\top \theta_2
\right)
dz .
\end{align*}

For any $k>0$ and $\theta_1,\theta_2 \in \Theta$, define
\begin{align*}
\Gamma_k(\theta_1,\theta_2)
:=
\sum_{k'=1}^{k}
\sum_{i=1}^{m}
b(\phi^{\tau_i^{k'}},\theta_1,\theta_2)
\cdot
\phi^{\tau_i^{k'}}
\left(\phi^{\tau_i^{k'}}\right)^\top
+
\lambda I .
\end{align*}

In the definitions of $b(\phi,\theta_1,\theta_2)$ and
$\Gamma_k(\theta_1,\theta_2)$, the parameters $\theta_1$ and $\theta_2$
play symmetric roles and can therefore be interchanged.

Recall that
\begin{align*}
\alpha
:=
\exp\left(\frac{H r_{\max}}{m}\right)
+
\exp\left(-\frac{H r_{\max}}{m}\right)
+
2 .
\end{align*}

Then, we have
\begin{align*}
\sup_{\tau^{\seg}, \theta}
\frac{1}{\sig'\left((\phi^{\tau^{\seg}})^\top \theta\right)}
\leq
\alpha ,
\end{align*}
where $\phi^{\tau^{\seg}}$ denotes the cumulative feature vector associated
with any possible trajectory segment $\tau^{\seg}$.

\begin{lemma} \label{lemma:lin_transform_Lambda_Sigma}
	For any $k\geq 1$ and $\theta \in \Theta$, we have
	\begin{align*}
		%			\Gamma_k(\theta_1,\theta_2) &\succeq \sbr{1+\frac{2Hr_{\max}}{m}}^{-1} \Lambda_k(\theta_1) ,
		%			\\
		\Sigma_k &\preceq \alpha \Lambda_k(\theta) .
		%			\\
		%			\Gamma_k(\theta_1,\theta_2) &\succeq \frac{1}{\alpha} \Sigma_k .
	\end{align*}
\end{lemma}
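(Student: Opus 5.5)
The plan is to compare the two matrices term by term in the Loewner order. Recall
\[
\Sigma_k=\sum_{k'=1}^{k}\sum_{i=1}^{m}\phi^{\tau_i^{k'}}(\phi^{\tau_i^{k'}})^\top+\alpha\lambda I,
\qquad
\Lambda_k(\theta)=\sum_{k'=1}^{k}\sum_{i=1}^{m}\sig'\bigl((\phi^{\tau_i^{k'}})^\top\theta\bigr)\phi^{\tau_i^{k'}}(\phi^{\tau_i^{k'}})^\top+\lambda I .
\]
Multiplying the second identity by $\alpha$ gives
\[
\alpha\Lambda_k(\theta)=\sum_{k'=1}^{k}\sum_{i=1}^{m}\alpha\,\sig'\bigl((\phi^{\tau_i^{k'}})^\top\theta\bigr)\phi^{\tau_i^{k'}}(\phi^{\tau_i^{k'}})^\top+\alpha\lambda I .
\]
The regularization terms of $\Sigma_k$ and $\alpha\Lambda_k(\theta)$ then coincide exactly. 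It therefore suffices to show that each rank-one summand of $\Sigma_k$ is dominated by the corresponding summand of $\alpha\Lambda_k(\theta)$, and then sum these inequalities.

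The key scalar fact is the bound stated just before the lemma:
\[
\sup_{\tau^{\seg},\theta}\frac{1}{\sig'\bigl((\phi^{\tau^{\seg}})^\top\theta\bigr)}\le\alpha .
\]
This is equivalent to $\alpha\,\sig'\bigl((\phi^{\tau_i^{k'}})^\top\theta\bigr)\ge 1$ for every segment. To justify it, I will use two facts for $\theta\in\Theta$:
\begin{itemize}
\item the aggregate segment reward satisfies $|(\phi^{\tau_i^{k'}})^\top\theta|\le Hr_{\max}/m$, since each segment has $H/m$ state-action pairs with rewards in $[-r_{\max},r_{\max}]$;
\item $1/\sig'(x)=e^{x}+e^{-x}+2$ is even and increasing in $|x|$.
\end{itemize}
Together these give $1/\sig'\bigl((\phi^{\tau_i^{k'}})^\top\theta\bigr)\le e^{Hr_{\max}/m}+e^{-Hr_{\max}/m}+2=\alpha$.

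Each $\phi\phi^\top$ is positive semidefinite, so for any scalar $c\ge 1$ we have $\phi\phi^\top\preceq c\,\phi\phi^\top$. Applying this with $c=\alpha\,\sig'(\phi^\top\theta)$ for each segment and summing over $k'\le k$ and $i\in[m]$ yields $\Sigma_k\preceq\alpha\Lambda_k(\theta)$.

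The only step needing care is confirming that $\Theta$ is such that the segment reward bound $|\phi^\top\theta|\le Hr_{\max}/m$ holds for all segment features. That is, $\Theta$ should be the set of parameters whose induced rewards $\psi(s,a)^\top\theta$ lie in $[-r_{\max},r_{\max}]$, as the paper implicitly assumes when stating the supremum bound. I will adopt this reading and cite the displayed supremum bound directly; everything else is routine.
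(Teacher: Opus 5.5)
Your proposal is correct and follows essentially the same route as the paper: the paper rewrites $\Sigma_k=\alpha\bigl(\sum\frac{1}{\alpha}\phi^{\tau_i^{k'}}(\phi^{\tau_i^{k'}})^\top+\lambda I\bigr)$ and replaces $1/\alpha$ by $\sig'\bigl((\phi^{\tau_i^{k'}})^\top\theta\bigr)$ summand by summand. This is exactly your termwise Loewner comparison after scaling $\Lambda_k(\theta)$ by $\alpha$, with the regularizers matching. One addition on your side: you justify the scalar bound $\alpha\,\sig'\bigl((\phi^{\tau_i^{k'}})^\top\theta\bigr)\ge 1$ through $|(\phi^{\tau_i^{k'}})^\top\theta|\le Hr_{\max}/m$ and the monotonicity of $1/\sig'(x)=e^{x}+e^{-x}+2$ in $|x|$, under the same implicit reading of $\Theta$. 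The paper simply asserts that bound from the definition of $\alpha$.
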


\begin{proof}
	By the definition of $\alpha$, for any possible trajectory segment
	$\tau^{\seg}$ and any $\theta \in \Theta$, we have
	\begin{align*}
		\frac{1}{\alpha}
		\leq
		\sig'\left((\phi^{\tau^{\seg}})^\top \theta\right).
	\end{align*}
    Therefore, for any $k\geq 1$ and $\theta \in \Theta$, it follows that
	\begin{align*}
		\Sigma_k
		&=
		\sum_{k'=1}^{k}
		\sum_{i=1}^{m}
		\phi^{\tau_i^{k'}}
		\left(\phi^{\tau_i^{k'}}\right)^\top
		+
		\alpha \lambda I
		\\
		&=
		\alpha
		\sbr{
		\sum_{k'=1}^{k}
		\sum_{i=1}^{m}
		\frac{1}{\alpha}
		\cdot
		\phi^{\tau_i^{k'}}
		\left(\phi^{\tau_i^{k'}}\right)^\top
		+
		\lambda I
		}
		\\
		&\preceq
		\alpha
		\sbr{
		\sum_{k'=1}^{k}
		\sum_{i=1}^{m}
		\sig'\left((\phi^{\tau_i^{k'}})^\top\theta\right)
		\cdot
		\phi^{\tau_i^{k'}}
		\left(\phi^{\tau_i^{k'}}\right)^\top
		+
		\lambda I
		}
		\\
		&=
		\alpha \Lambda_k(\theta).
	\end{align*}
\end{proof}
	
\begin{lemma} \label{lemma:lin_two_equalities}
	For any $\phi \in \R^{d}$ and $\theta_1,\theta_2 \in \Theta$, we have
	\begin{align*}
		\sig(\phi^\top \theta_1)
		-
		\sig(\phi^\top \theta_2)
		=
		b(\phi,\theta_2,\theta_1)
		\cdot
		\phi^\top(\theta_1-\theta_2).
	\end{align*}

	In addition, for any $k>0$ and $\theta_1,\theta_2 \in \Theta$, we have
	\begin{align*}
		\nbr{\theta_1-\theta_2}_{\Gamma_{k}(\theta_2,\theta_1)}
		=
		\nbr{g_k(\theta_1)-g_k(\theta_2)}_{\Gamma_{k}^{-1}(\theta_2,\theta_1)} .
	\end{align*}
\end{lemma}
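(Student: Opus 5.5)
The first identity is the one-dimensional fundamental theorem of calculus, and the second follows from it by linearity. For the first, fix $\phi,\theta_1,\theta_2$ and define $f(z):=\sig\bigl((1-z)\phi^\top\theta_2+z\,\phi^\top\theta_1\bigr)$ for $z\in[0,1]$. Then $f(1)-f(0)=\sig(\phi^\top\theta_1)-\sig(\phi^\top\theta_2)$. The chain rule gives $f'(z)=\sig'\bigl((1-z)\phi^\top\theta_2+z\,\phi^\top\theta_1\bigr)\cdot\phi^\top(\theta_1-\theta_2)$. Integrating $f'$ over $[0,1]$ and pulling out the constant $\phi^\top(\theta_1-\theta_2)$ leaves exactly $\int_0^1\sig'\bigl((1-z)\phi^\top\theta_2+z\phi^\top\theta_1\bigr)dz=b(\phi,\theta_2,\theta_1)$, by the definition of $b$ with the arguments in that order. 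This proves the first claim.

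For the second claim, I apply the first identity to each segment feature $\phi^{\tau_i^{k'}}$ in the definition of $g_k$ in Eq.~\eqref{eq:def_g_k}. This yields
\begin{align*}
g_k(\theta_1)-g_k(\theta_2)
&=\sum_{k'=1}^{k}\sum_{i=1}^{m}\bigl(\sig((\phi^{\tau_i^{k'}})^\top\theta_1)-\sig((\phi^{\tau_i^{k'}})^\top\theta_2)\bigr)\phi^{\tau_i^{k'}}+\lambda(\theta_1-\theta_2)\\
&=\Bigl(\sum_{k'=1}^{k}\sum_{i=1}^{m}b(\phi^{\tau_i^{k'}},\theta_2,\theta_1)\phi^{\tau_i^{k'}}(\phi^{\tau_i^{k'}})^\top+\lambda I\Bigr)(\theta_1-\theta_2)\\
&=\Gamma_k(\theta_2,\theta_1)(\theta_1-\theta_2).
\end{align*}
Since $\sig'>0$, each $b\ge 0$, so $\Gamma_k(\theta_2,\theta_1)\succeq\lambda I\succ 0$ is symmetric positive definite and invertible.

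Write $\Gamma:=\Gamma_k(\theta_2,\theta_1)$ and $v:=\theta_1-\theta_2$. Then
\[
\|g_k(\theta_1)-g_k(\theta_2)\|_{\Gamma^{-1}}^2=(\Gamma v)^\top\Gamma^{-1}(\Gamma v)=v^\top\Gamma v=\|v\|_\Gamma^2,
\]
and taking square roots gives the claim.

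No step is a real obstacle. The only care needed is tracking the argument order in $b$, which is symmetric in $\theta_1,\theta_2$ anyway, as the paper notes, and confirming positive definiteness so that $\Gamma^{-1}$ is well defined.
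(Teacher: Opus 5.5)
Your proof is correct and follows the paper's argument essentially step for step. The paper likewise uses the integral mean-value identity (which you derive explicitly via the fundamental theorem of calculus), then linearity to get $g_k(\theta_1)-g_k(\theta_2)=\Gamma_k(\theta_2,\theta_1)(\theta_1-\theta_2)$, and finally the same quadratic-form computation. Your remark that $\Gamma_k(\theta_2,\theta_1)\succeq\lambda I$ is symmetric positive definite (since $\sig'>0$) usefully justifies the invertibility, which the paper leaves implicit.
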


\begin{proof}
	The first claim follows directly from the mean-value theorem in
	integral form.

	Using the first claim, for any $k>0$, we obtain
	\begin{align*}
		g_k(\theta_1)-g_k(\theta_2)
		&=
		\sum_{k'=1}^{k}
		\sum_{i=1}^{m}
		\sbr{
			\sig\left((\phi^{\tau_i^{k'}})^\top \theta_1\right)
			-
			\sig\left((\phi^{\tau_i^{k'}})^\top \theta_2\right)
		}
		\cdot
		\phi^{\tau_i^{k'}}
		+
		\lambda(\theta_1-\theta_2)
		\\
		&=
		\sum_{k'=1}^{k}
		\sum_{i=1}^{m}
		b(\phi^{\tau_i^{k'}},\theta_2,\theta_1)
		\cdot
		\phi^{\tau_i^{k'}}
		\left(\phi^{\tau_i^{k'}}\right)^\top
		(\theta_1-\theta_2)
		+
		\lambda(\theta_1-\theta_2)
		\\
		&=
		\Gamma_{k}(\theta_2,\theta_1)
		\cdot
		(\theta_1-\theta_2).
	\end{align*}
	Therefore,
	\begin{align*}
		\nbr{\theta_1-\theta_2}_{\Gamma_{k}(\theta_2,\theta_1)}
		&=
		\sqrt{
			(\theta_1-\theta_2)^\top
			\Gamma_{k}(\theta_2,\theta_1)
			(\theta_1-\theta_2)
		}
		\\
		&=
		\sqrt{
			(\theta_1-\theta_2)^\top
			\Gamma_{k}(\theta_2,\theta_1)
			\Gamma_{k}^{-1}(\theta_2,\theta_1)
			\Gamma_{k}(\theta_2,\theta_1)
			(\theta_1-\theta_2)
		}
		\\
		&=
		\nbr{g_k(\theta_1)-g_k(\theta_2)}_{\Gamma_{k}^{-1}(\theta_2,\theta_1)} ,
	\end{align*}
	which proves the second claim.
\end{proof}

Recall that for any $k>0$, $Z_k := \sum_{k'=1}^{k} \sum_{i=1}^{m} \varepsilon_{k',i} \cdot \phi^{\tau^{k'}_i}$.

\begin{lemma} \label{lemma:lin_eff_bound_Z_k}
	For any $k>0$, we have
	\begin{align*}
		\Gamma_k(\theta^*,\hat{\theta}_k)
		&\succeq
		\sbr{
			1
			+
			\frac{H r_{\max}\sqrt{d}}{m}
			+
			\frac{H}{m\sqrt{\lambda}}
			\nbr{Z_k}_{\Gamma_k^{-1}(\theta^*,\hat{\theta}_k)}
		}^{-1}
		\Lambda_k(\theta^*) ,
		\\
		\nbr{Z_k}_{\Gamma_k^{-1}(\theta^*,\hat{\theta}_k)}
		&\leq
		\sqrt{
			1+
			\frac{H r_{\max}\sqrt{d}}{m}
		}
		\nbr{Z_k}_{\Lambda_k^{-1}(\theta^*)}
		+
		\frac{H}{m\sqrt{\lambda}}
		\nbr{Z_k}_{\Lambda_k^{-1}(\theta^*)}^2 .
	\end{align*}

	Furthermore, assuming that event $\cE$ holds, we have
	\begin{align*}
		\nbr{Z_k}_{\Gamma_k^{-1}(\theta^*,\hat{\theta}_k)}
		\leq
		\sqrt{
			1+
			\frac{H r_{\max}\sqrt{d}}{m}
		}
		\cdot
		\omega(k)
		+
		\frac{H}{m\sqrt{\lambda}}
		\cdot
		\omega(k)^2 .
	\end{align*}
\end{lemma}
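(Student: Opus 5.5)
The argument follows the self-concordance route of Faury et al.~(\citeyear{faury2020improved}), adapted to segment features with $\|\phi^{\tau_i^{k'}}\|_2\le H/m$.

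\textbf{Step 1: a pointwise lower bound on $b$.} For the logistic function, $|\sig''|\le \sig'$. For any $\phi$ and $\theta_1,\theta_2$ this gives the standard bound
\[
b(\phi,\theta_1,\theta_2)\ge \frac{\sig'(\phi^\top\theta_1)}{1+|\phi^\top(\theta_1-\theta_2)|}.
\]
We apply it with $\phi=\phi^{\tau_i^{k'}}$, $\theta_1=\theta^*$ and $\theta_2=\hat\theta_k$. By Cauchy--Schwarz and the norm bound on segment features,
\[
|\phi^\top(\theta^*-\hat\theta_k)|\le \frac{H}{m}\|\theta^*-\hat\theta_k\|_2 .
\]
We split $\|\theta^*-\hat\theta_k\|_2\le\|\theta^*\|_2+\|\theta^*-\hat\theta_k\|_2$ only partially. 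We use $\|\theta^*\|_2\le r_{\max}\sqrt d$, and control the remaining deviation through $\Gamma_k\succeq\lambda I$:
\[
\|\theta^*-\hat\theta_k\|_2\le \lambda^{-1/2}\|\theta^*-\hat\theta_k\|_{\Gamma_k(\theta^*,\hat\theta_k)} .
\]
By Lemma~\ref{lemma:lin_two_equalities}, this equals $\lambda^{-1/2}\|g_k(\theta^*)-g_k(\hat\theta_k)\|_{\Gamma_k^{-1}}$. By Eq.~\eqref{eq:lin_g_hat_theta-g_theta_star}, $g_k(\hat\theta_k)-g_k(\theta^*)=Z_k-\lambda\theta^*$.

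Let $D:=1+\frac{Hr_{\max}\sqrt d}{m}+\frac{H}{m\sqrt\lambda}\|Z_k\|_{\Gamma_k^{-1}}$. The $\lambda\theta^*$ term, combined with the $r_{\max}\sqrt d$ piece, is what produces the constant $\frac{Hr_{\max}\sqrt d}{m}$ in $D$. Getting exactly the stated constants requires some care: one uses $\|\lambda\theta^*\|_{\Gamma_k^{-1}}\le\sqrt\lambda\|\theta^*\|_2$ and absorbs. Every segment weight then satisfies $b\ge \sig'(\phi^\top\theta^*)/D$. The regularizer also satisfies $\lambda I\succeq \lambda I/D$ since $D\ge1$. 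Summing over segments gives the first claim, $\Gamma_k\succeq D^{-1}\Lambda_k(\theta^*)$.

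\textbf{Step 2: a quadratic inequality for $x:=\|Z_k\|_{\Gamma_k^{-1}}$.} From the first claim, $\Gamma_k^{-1}\preceq D\,\Lambda_k^{-1}(\theta^*)$. Writing $z:=\|Z_k\|_{\Lambda_k^{-1}(\theta^*)}$, this yields
\[
x^2\le \Bigl(1+\tfrac{Hr_{\max}\sqrt d}{m}\Bigr)z^2+\tfrac{H}{m\sqrt\lambda}\,z^2\,x .
\]
This is a quadratic inequality in $x$, of the form $x^2\le a z^2+c z^2 x$ with $a=1+\frac{Hr_{\max}\sqrt d}{m}$ and $c=\frac{H}{m\sqrt\lambda}$. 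Solving it gives $x\le \tfrac{c z^2+\sqrt{c^2z^4+4az^2}}{2}\le \sqrt a\,z+c z^2$, which is the second claim.

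\textbf{Step 3: substituting $\omega(k)$.} On $\cE$, Lemma~\ref{lemma:lin_noise_seq_martingale} together with the definition of $\omega(k)$ gives $z\le\omega(k)$. The map $z\mapsto \sqrt a z+cz^2$ is monotone, so the third claim follows directly from the second.

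\textbf{Main obstacle.} The delicate part is Step 1: bounding the segment-level deviation $|\phi^\top(\theta^*-\hat\theta_k)|$ without circularity, and keeping track of the $\lambda\theta^*$ term so that the factor in front of $\Lambda_k(\theta^*)$ has exactly the stated form. If a direct bound proves awkward, the fallback is to carry $\|\hat\theta_k-\theta^*\|_{\Gamma_k}$ through the argument and convert it at the end. This costs at most an additive constant, which is absorbed into the leading $1$ of $D$.
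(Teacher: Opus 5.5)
Your proposal is correct and follows essentially the same route as the paper: the self-concordance lower bound on $b$, the identity $\theta^*-\hat\theta_k=\Gamma_k^{-1}(\theta^*,\hat\theta_k)\bigl(g_k(\theta^*)-g_k(\hat\theta_k)\bigr)$ with $g_k(\hat\theta_k)-g_k(\theta^*)=Z_k-\lambda\theta^*$, and $\Gamma_k\succeq\lambda I$. You apply $\Gamma_k\succeq\lambda I$ to $\theta^*-\hat\theta_k$ rather than to $\phi$, which is equivalent; after that come the same quadratic inequality and the same monotone substitution of $\omega(k)$.

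The only cleanup is in Step 1. Drop the vacuous ``split'' of $\|\theta^*-\hat\theta_k\|_2$ and the fallback. The direct chain $\|\theta^*-\hat\theta_k\|_2\le\lambda^{-1/2}\|Z_k-\lambda\theta^*\|_{\Gamma_k^{-1}}\le\lambda^{-1/2}\|Z_k\|_{\Gamma_k^{-1}}+r_{\max}\sqrt d$ already yields exactly the stated factor, whereas adding a separate $\|\theta^*\|_2$ term on top would double the $Hr_{\max}\sqrt d/m$ constant.
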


\begin{proof}
	From Eq.~\eqref{eq:lin_g_hat_theta-g_theta_star}, for any $k>0$, we have
	\begin{align*}
		g_k(\hat{\theta}_k)-g_k(\theta^*)
		=
		Z_k-\lambda\theta^* .
	\end{align*}

	Using Lemma~\ref{lemma:self_concordance}, for any $\phi \in \R^{d}$ satisfying $\|\phi\|_2\leq L_{\phi}$, we have
	\begin{align*}
		b(\phi,\theta^*,\hat{\theta}_k)
		&\geq
		\sbr{
			1+
			\abr{\phi^\top(\theta^*-\hat{\theta}_k)}
		}^{-1}
		\sig'(\phi^\top \theta^*)
		\\
		&=
		\sbr{
			1+
			\abr{
				\phi^\top
				\Gamma_k^{-1}(\theta^*,\hat{\theta}_k)
				\cdot
				\bigl(g_k(\theta^*)-g_k(\hat{\theta}_k)\bigr)
			}
		}^{-1}
		\sig'(\phi^\top \theta^*)
		\\
		&\geq
		\sbr{
			1+
			\nbr{\phi}_{\Gamma_k^{-1}(\theta^*,\hat{\theta}_k)}
			\nbr{
				g_k(\theta^*)-g_k(\hat{\theta}_k)
			}_{\Gamma_k^{-1}(\theta^*,\hat{\theta}_k)}
		}^{-1}
		\sig'(\phi^\top \theta^*)
		\\
		&\geq
		\sbr{
			1+
			\frac{L_{\phi}}{\sqrt{\lambda}}
			\nbr{
				g_k(\theta^*)-g_k(\hat{\theta}_k)
			}_{\Gamma_k^{-1}(\theta^*,\hat{\theta}_k)}
		}^{-1}
		\sig'(\phi^\top \theta^*)
		\\
		&=
		\sbr{
			1+
			\frac{L_{\phi}}{\sqrt{\lambda}}
			\nbr{
				Z_k-\lambda\theta^*
			}_{\Gamma_k^{-1}(\theta^*,\hat{\theta}_k)}
		}^{-1}
		\sig'(\phi^\top \theta^*)
		\\
		&\geq
		\sbr{
			1
			+
			L_{\phi}r_{\max}\sqrt{d}
			+
			\frac{L_{\phi}}{\sqrt{\lambda}}
			\nbr{Z_k}_{\Gamma_k^{-1}(\theta^*,\hat{\theta}_k)}
		}^{-1}
		\sig'(\phi^\top \theta^*) .
	\end{align*}
	Applying the preceding bound with
	$\phi=\phi^{\tau_i^{k'}}$ and $L_{\phi}=H/m$, we obtain
	\begin{align*}
		\Gamma_k(\theta^*,\hat{\theta}_k)
		&:=
		\sum_{k'=1}^{k}
		\sum_{i=1}^{m}
		b(\phi^{\tau_i^{k'}},\theta^*,\hat{\theta}_k)
		\cdot
		\phi^{\tau_i^{k'}}
		\bigl(\phi^{\tau_i^{k'}}\bigr)^\top
		+
		\lambda I
		\\
		&\succeq
		\sum_{k'=1}^{k}
		\sum_{i=1}^{m}
		\sbr{
			1
			+
			\frac{H r_{\max}\sqrt{d}}{m}
			+
			\frac{H}{m\sqrt{\lambda}}
			\nbr{Z_k}_{\Gamma_k^{-1}(\theta^*,\hat{\theta}_k)}
		}^{-1}
		\sig'(\phi^\top \theta^*)
		\cdot
		\phi^{\tau_i^{k'}}
		\bigl(\phi^{\tau_i^{k'}}\bigr)^\top
		+
		\lambda I
		\\
		&\succeq
		\sbr{
			1
			+
			\frac{H r_{\max}\sqrt{d}}{m}
			+
			\frac{H}{m\sqrt{\lambda}}
			\nbr{Z_k}_{\Gamma_k^{-1}(\theta^*,\hat{\theta}_k)}
		}^{-1}
		\Lambda_k(\theta^*) .
	\end{align*}

	This matrix inequality implies
	\begin{align*}
		\nbr{Z_k}_{\Gamma_k^{-1}(\theta^*,\hat{\theta}_k)}^2
		\leq
		\sbr{
			1
			+
			\frac{H r_{\max}\sqrt{d}}{m}
			+
			\frac{H}{m\sqrt{\lambda}}
			\nbr{Z_k}_{\Gamma_k^{-1}(\theta^*,\hat{\theta}_k)}
		}
		\nbr{Z_k}_{\Lambda_k^{-1}(\theta^*)}^2 .
	\end{align*}

	Equivalently,
	\begin{align*}
		\nbr{Z_k}_{\Gamma_k^{-1}(\theta^*,\hat{\theta}_k)}^2
		-
		\frac{H}{m\sqrt{\lambda}}
		\nbr{Z_k}_{\Lambda_k^{-1}(\theta^*)}^2
		\nbr{Z_k}_{\Gamma_k^{-1}(\theta^*,\hat{\theta}_k)}
		-
		\sbr{
			1+
			\frac{H r_{\max}\sqrt{d}}{m}
		}
		\nbr{Z_k}_{\Lambda_k^{-1}(\theta^*)}^2
		\leq
		0 .
	\end{align*}

	By solving this quadratic inequality and on event $\cE$, we obtain
	\begin{align*}
		\nbr{Z_k}_{\Gamma_k^{-1}(\theta^*,\hat{\theta}_k)}
		&\leq
		\sqrt{
			1+
			\frac{H r_{\max}\sqrt{d}}{m}
		}
		\nbr{Z_k}_{\Lambda_k^{-1}(\theta^*)}
		+
		\frac{H}{m\sqrt{\lambda}}
		\nbr{Z_k}_{\Lambda_k^{-1}(\theta^*)}^2
		\\
		&\leq
		\sqrt{
			1+
			\frac{H r_{\max}\sqrt{d}}{m}
		}
		\cdot
		\omega(k)
		+
		\frac{H}{m\sqrt{\lambda}}
		\cdot
		\omega(k)^2 .
	\end{align*}
\end{proof}

\begin{lemma}[Concentration of $\phi^\top \hat{\theta}_k$ under Binary Feedback] \label{lemma:lin_confence_interval_proj_free}
	Assume that event $\cE$ holds. Then, for any $k>0$ and $\phi \in \R^{d}$,
	\begin{align*}
		|\phi^\top \theta^* - \phi^\top \hat{\theta}_k| \leq \sqrt{\alpha} \cdot \nu(k) \nbr{\phi}_{\Sigma_k^{-1}}  .
	\end{align*}
\end{lemma}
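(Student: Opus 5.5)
Fix $k>0$ and $\phi\in\R^d$. The plan is the standard Faury et al.\ chain: Cauchy--Schwarz in the $\Gamma_k(\theta^*,\hat\theta_k)$ geometry, the identity from Lemma~\ref{lemma:lin_two_equalities}, the self-concordance comparison from Lemma~\ref{lemma:lin_eff_bound_Z_k}, and finally the passage to $\Sigma_k$ via Lemma~\ref{lemma:lin_transform_Lambda_Sigma}. Write $\Gamma:=\Gamma_k(\theta^*,\hat\theta_k)$ and
\[
c_k:=1+\frac{Hr_{\max}\sqrt d}{m}+\frac{H}{m\sqrt\lambda}\nbr{Z_k}_{\Gamma^{-1}} .
\]

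\textbf{Step 1 (Cauchy--Schwarz and the $g_k$ identity).} First,
\[
|\phi^\top(\theta^*-\hat\theta_k)|\le \nbr{\phi}_{\Gamma^{-1}}\nbr{\theta^*-\hat\theta_k}_{\Gamma}.
\]
By Lemma~\ref{lemma:lin_two_equalities}, $\nbr{\theta^*-\hat\theta_k}_{\Gamma}=\nbr{g_k(\hat\theta_k)-g_k(\theta^*)}_{\Gamma^{-1}}$.

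\textbf{Step 2 (switch to $\Lambda_k(\theta^*)$).} The first claim of Lemma~\ref{lemma:lin_eff_bound_Z_k} gives $\Gamma\succeq c_k^{-1}\Lambda_k(\theta^*)$. Hence $\nbr{v}_{\Gamma^{-1}}\le\sqrt{c_k}\,\nbr{v}_{\Lambda_k^{-1}(\theta^*)}$ for every vector $v$.

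We apply this once to $\phi$. We apply it once to $g_k(\hat\theta_k)-g_k(\theta^*)$, whose $\Lambda_k^{-1}(\theta^*)$-norm is at most $\omega(k)$ on event $\cE$. Together,
\[
|\phi^\top(\theta^*-\hat\theta_k)|\le c_k\,\omega(k)\,\nbr{\phi}_{\Lambda_k^{-1}(\theta^*)} .
\]

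\textbf{Step 3 (pass to $\Sigma_k$).} Lemma~\ref{lemma:lin_transform_Lambda_Sigma} gives $\Sigma_k\preceq\alpha\Lambda_k(\theta^*)$. Therefore $\Lambda_k^{-1}(\theta^*)\preceq\alpha\Sigma_k^{-1}$, and so $\nbr{\phi}_{\Lambda_k^{-1}(\theta^*)}\le\sqrt\alpha\,\nbr{\phi}_{\Sigma_k^{-1}}$. This produces the $\sqrt\alpha$ factor in the statement.

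\textbf{Step 4 (compare with $\nu(k)$).} It remains to show $c_k\,\omega(k)\le\nu(k)$. Substitute the bound on $\nbr{Z_k}_{\Gamma^{-1}}$ from the last part of Lemma~\ref{lemma:lin_eff_bound_Z_k} into $c_k$. This gives
\[
c_k\le 1+\frac{Hr_{\max}\sqrt d}{m}+\frac{H}{m\sqrt\lambda}\sqrt{1+\frac{Hr_{\max}\sqrt d}{m}}\,\omega(k)+\frac{H^2}{m^2\lambda}\omega(k)^2=:A_k,
\]
which is exactly the bracket in Eq.~\eqref{eq:def_nu}. Thus $\nu(k)=\frac{m\sqrt\lambda}{H}A_k^{3/2}$, and we need
\[
A_k\,\omega(k)\le \frac{m\sqrt\lambda}{H}A_k^{3/2},\quad\text{i.e.}\quad \omega(k)\le\frac{m\sqrt\lambda}{H}\sqrt{A_k}.
\]
This holds because $A_k\ge \frac{H^2}{m^2\lambda}\omega(k)^2$. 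So the $3/2$ power is precisely what absorbs the extra $\omega(k)$.

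\textbf{Main obstacle.} The only delicate point is the bookkeeping in Step 2. The factor $\sqrt{c_k}$ must be used twice, once for $\phi$ and once for the gradient difference, to yield $c_k$ rather than $c_k^{1/2}$. The definition of $\nu(k)$ is calibrated to exactly this combination.

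A secondary check is that $\omega(k)$ as used in event $\cE$ already incorporates the $\lambda\theta^*$ term. It does: $\omega(k)$ includes $r_{\max}\sqrt{\lambda d}$, consistent with the bound proved for $\cE$.
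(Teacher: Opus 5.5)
Your proof is correct and follows the paper's chain: Cauchy--Schwarz in the $\Gamma_k(\theta^*,\hat\theta_k)$ geometry, Lemma~\ref{lemma:lin_two_equalities}, the comparison $\Gamma_k(\theta^*,\hat\theta_k)\succeq c_k^{-1}\Lambda_k(\theta^*)$ together with the bound on $\|Z_k\|_{\Gamma_k^{-1}(\theta^*,\hat\theta_k)}$ from Lemma~\ref{lemma:lin_eff_bound_Z_k}, and finally Lemma~\ref{lemma:lin_transform_Lambda_Sigma}. The only difference is bookkeeping: the paper converts only $\phi$ to the $\Lambda_k^{-1}(\theta^*)$-norm and uses $\|Z_k-\lambda\theta^*\|_{\Gamma_k^{-1}(\theta^*,\hat\theta_k)}\le \|Z_k\|_{\Gamma_k^{-1}(\theta^*,\hat\theta_k)}+r_{\max}\sqrt{\lambda d}\le \frac{m\sqrt{\lambda}}{H}c_k$ to obtain $c_k^{3/2}$ directly, so the double use of $\sqrt{c_k}$ is not forced; your variant instead invokes event $\cE$ directly and absorbs the extra $\omega(k)$ via $A_k\ge \frac{H^2}{m^2\lambda}\omega(k)^2$, which is equally valid.
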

\begin{proof}
	On event $\cE$. We have
	\begin{align*}
		&\quad\ |\phi^\top \theta^* - \phi^\top \hat{\theta}_k| 
		\\
		&\le 	\nbr{\phi}_{\Gamma_k^{-1}(\theta^*,\hat{\theta}_k)} \nbr{\theta^* - \hat{\theta}_k}_{\Gamma_k(\theta^*,\hat{\theta}_k)}
		\nonumber\\
		&\overset{\textup{(a)}}{\leq} \sqrt{ 1+  \frac{H r_{\max} \sqrt{d}}{m}   + \frac{H}{m\sqrt{\lambda}} \nbr{ Z_k }_{\Gamma_k^{-1}(\theta^*,\hat{\theta}_k)} } \nbr{\phi}_{\Lambda_k^{-1}(\theta^*)} \nbr{g_k(\theta^*) - g_k(\hat{\theta}_k)}_{\Gamma_k^{-1}(\theta^*,\hat{\theta}_k)}
		\nonumber\\
		&= \sqrt{ 1+  \frac{H r_{\max} \sqrt{d}}{m}   + \frac{H}{m\sqrt{\lambda}} \nbr{ Z_k }_{\Gamma_k^{-1}(\theta^*,\hat{\theta}_k)} } \nbr{\phi}_{\Lambda_k^{-1}(\theta^*)} \nbr{ Z_k-\lambda \theta^* }_{\Gamma_k^{-1}(\theta^*,\hat{\theta}_k)}
		\nonumber\\
		&\leq \sqrt{ 1 \!+\!  \frac{H r_{\max} \sqrt{d}}{m} \!+\! \frac{H}{m\sqrt{\lambda}} \nbr{ Z_k }_{\Gamma_k^{-1}(\theta^*,\hat{\theta}_k)} } \nbr{\phi}_{\Lambda_k^{-1}(\theta^*)} \sbr{\nbr{ Z_k }_{\Gamma_k^{-1}(\theta^*,\hat{\theta}_k)} \!+\! r_{\max} \sqrt{\lambda d}  }
		\nonumber\\
		&= \frac{m \sqrt{\lambda}}{H} \sqrt{ 1+  \frac{H r_{\max} \sqrt{d}}{m}   + \frac{H}{m\sqrt{\lambda}} \nbr{ Z_k }_{\Gamma_k^{-1}(\theta^*,\hat{\theta}_k)} } \nbr{\phi}_{\Lambda_k^{-1}(\theta^*)} \cdot 
		\\
		&\quad \sbr{ \frac{H}{m \sqrt{\lambda}} \nbr{ Z_k }_{\Gamma_k^{-1}(\theta^*,\hat{\theta}_k)} + \frac{Hr_{\max}\sqrt{d} }{m } } 
		\nonumber\\
		&\leq \frac{m \sqrt{\lambda}}{H} \sbr{ 1+  \frac{H r_{\max} \sqrt{d}}{m}   + \frac{H}{m\sqrt{\lambda}} \nbr{ Z_k }_{\Gamma_k^{-1}(\theta^*,\hat{\theta}_k)} }^{\frac{3}{2}} \nbr{\phi}_{\Lambda_k^{-1}(\theta^*)}  
		\nonumber\\
		&\overset{\textup{(b)}}{\leq} \!\frac{m \sqrt{\alpha \lambda}}{H} \!\! \sbr{ 1 \!+\!  \frac{H r_{\max} \sqrt{d}}{m}   \!+\! \frac{H}{m\sqrt{\lambda}} \sbr{ \sqrt{ 1 \!+\!  \frac{H r_{\max} \sqrt{d}}{m} } \omega(k) \!+\! \frac{H}{m\sqrt{\lambda}} \omega(k)^2 } }^{\!\!\frac{3}{2}} \!\!\nbr{\phi}_{\Sigma_k^{-1}} 
		\\
		&= \sqrt{\alpha} \cdot \nu(k) \nbr{\phi}_{\Sigma_k^{-1}} ,
	\end{align*}
	where inequality (a) is due to Lemmas~\ref{lemma:lin_two_equalities} and \ref{lemma:lin_eff_bound_Z_k}, and inequality (b) follows from  Lemmas~\ref{lemma:lin_transform_Lambda_Sigma} and \ref{lemma:lin_eff_bound_Z_k}.
\end{proof}

\begin{lemma}[Gaussian Anti-Concentration] \label{lemma:lin_Gaussian_anti_concentration}
	Assume that event $\cE$ holds. Then, for any $k>0$ and $F_{k-1}$-measurable random variable $X \in \R^{d}$, we have
	\begin{align*}
		\Pr\mbr{
			X^\top \widetilde{\theta}_k
			\geq
			X^\top \theta^*
			\mid F_{k-1}
		}
		\geq
		\frac{1}{2\sqrt{2\pi e}} .
	\end{align*}
\end{lemma}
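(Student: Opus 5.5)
We reduce the claim to a one-dimensional Gaussian tail bound. Conditional on $F_{k-1}$, the estimate $\hat{\theta}_{k-1}$, the matrix $\Sigma_{k-1}$, the radius $\nu(k-1)$ and the vector $X$ are all fixed. The perturbation $\xi_k=\widetilde{\theta}_k-\hat{\theta}_{k-1}$ is drawn from $\mathcal N(0,\alpha\nu(k-1)^2\Sigma_{k-1}^{-1})$ independently of everything else. Hence $X^\top\xi_k$ is a scalar Gaussian $\mathcal N(0,\sigma^2)$ with $\sigma=\sqrt{\alpha}\,\nu(k-1)\|X\|_{\Sigma_{k-1}^{-1}}$. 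If $X=0$ the event holds trivially with probability one, so assume $X\neq 0$, which gives $\sigma>0$ because $\Sigma_{k-1}\succ 0$.

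Write the target event as
\[
X^\top\widetilde{\theta}_k\ge X^\top\theta^*
\iff
X^\top\xi_k \ge X^\top(\theta^*-\hat{\theta}_{k-1}).
\]
On event $\cE$, Lemma~\ref{lemma:lin_confence_interval_proj_free} gives
\[
X^\top(\theta^*-\hat{\theta}_{k-1})\le \sqrt{\alpha}\,\nu(k-1)\|X\|_{\Sigma_{k-1}^{-1}}=\sigma .
\]
This lemma is applied at index $k-1$ with $\phi=X$. It is stated for $k>0$, so the case $k=1$ ($k-1=0$) needs a separate check. There $\Sigma_0=\alpha\lambda I$, $\hat\theta_0=0$ and $\|\theta^*\|_2\le r_{\max}\sqrt d$, so the bound $|X^\top\theta^*|\le\sqrt{\alpha}\nu(0)\|X\|_{\Sigma_0^{-1}}$ follows directly from $\nu(0)\ge \sqrt{\lambda}\,r_{\max}\sqrt d$. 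Writing $Z=X^\top\xi_k/\sigma\sim\mathcal N(0,1)$, we therefore obtain
\[
\Pr\bigl[X^\top\widetilde{\theta}_k\ge X^\top\theta^*\mid F_{k-1}\bigr]\ \ge\ \Pr[Z\ge 1].
\]

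It remains to lower bound the standard Gaussian tail at $1$. We use the classical bound $\Pr[Z\ge t]\ge \frac{1}{\sqrt{2\pi}}\frac{t}{t^2+1}e^{-t^2/2}$. At $t=1$ this equals $\frac{1}{2\sqrt{2\pi}}e^{-1/2}=\frac{1}{2\sqrt{2\pi e}}$, which is exactly the claimed constant.

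The main point requiring care is the conditioning. Event $\cE$ concerns only the logistic-regression estimates, which are $F_{k-1}$-measurable, while $\xi_k$ is fresh randomness independent of $F_{k-1}$. So on $\cE$ the deterministic inequality from Lemma~\ref{lemma:lin_confence_interval_proj_free} holds pointwise, and it can be combined with the conditional Gaussian law of $X^\top\xi_k$ without any union bound.
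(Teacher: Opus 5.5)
Your proof is correct and follows the paper's argument: invoke Lemma~\ref{lemma:lin_confence_interval_proj_free} at index $k-1$ to bound the normalized threshold by $1$, then apply the Borjesson--Sundberg Gaussian tail bound at $z=1$. You also handle the cases $X=0$ and $k=1$ (the latter via $\nu(0)\ge\sqrt{\lambda}\,r_{\max}\sqrt d$), which the paper leaves implicit, but your claim that $\cE$ is $F_{k-1}$-measurable is not right as stated, since $\cE$ ranges over all $k$; only its index-$(k-1)$ component is $F_{k-1}$-measurable, and that component is all your conditioning argument uses.
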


\begin{proof}
	This proof is originated from the analysis of Lemma 11 in ((Efroni et al., \citeyear{efroni2021reinforcement}).

	By Lemma~\ref{lemma:lin_confence_interval_proj_free}, on event $\cE$, for any $k>0$,
	\begin{align*}
		\left|
			X^\top \theta^*
			-
			X^\top \hat{\theta}_{k-1}
		\right|
		\leq
		\sqrt{\alpha}
		\cdot
		\nu(k-1)
		\nbr{X}_{\Sigma_{k-1}^{-1}} .
	\end{align*}

	It follows that
	\begin{align*}
		&\quad\ 
		\Pr\mbr{
			X^\top \widetilde{\theta}_k
			\geq
			X^\top \theta^*
			\mid F_{k-1}
		}
		\\
		&=
		\Pr\mbr{
			\frac{
				X^\top \widetilde{\theta}_k
				-
				X^\top \hat{\theta}_{k-1}
			}{
				\sqrt{\alpha}
				\cdot
				\nu(k-1)
				\nbr{X}_{\Sigma_{k-1}^{-1}}
			}
			\geq
			\frac{
				X^\top \theta^*
				-
				X^\top \hat{\theta}_{k-1}
			}{
				\sqrt{\alpha}
				\cdot
				\nu(k-1)
				\nbr{X}_{\Sigma_{k-1}^{-1}}
			}
			\mid F_{k-1}
		}.
	\end{align*}

	Conditioned on $F_{k-1}$,
	$X^\top \widetilde{\theta}_k-X^\top \hat{\theta}_{k-1}=X^\top \xi_k$
	is a Gaussian random variable with mean zero and standard deviation
	$\sqrt{\alpha}\cdot \nu(k-1)\nbr{X}_{\Sigma_{k-1}^{-1}}$.

	Moreover, on event $\cE$,
	\begin{align*}
		\frac{
			X^\top \theta^*
			-
			X^\top \hat{\theta}_{k-1}
		}{
			\sqrt{\alpha}
			\cdot
			\nu(k-1)
			\nbr{X}_{\Sigma_{k-1}^{-1}}
		}
		\leq
		\frac{
			\sqrt{\alpha}
			\cdot
			\nu(k-1)
			\nbr{X}_{\Sigma_{k-1}^{-1}}
		}{
			\sqrt{\alpha}
			\cdot
			\nu(k-1)
			\nbr{X}_{\Sigma_{k-1}^{-1}}
		}
		=
		1 .
	\end{align*}
	Hence,
	\begin{align*}
		\Pr\mbr{
			X^\top \widetilde{\theta}_k
			\geq
			X^\top \theta^*
			\mid F_{k-1}
		}
		&\geq
		\Pr\mbr{
			\frac{
				X^\top \widetilde{\theta}_k
				-
				X^\top \hat{\theta}_{k-1}
			}{
				\sqrt{\alpha}
				\cdot
				\nu(k-1)
				\nbr{X}_{\Sigma_{k-1}^{-1}}
			}
			>
			1
			\mid F_{k-1}
		}
		\\
		&=
		\Pr\mbr{
			\frac{
				X^\top \xi_k
			}{
				\sqrt{\alpha}
				\cdot
				\nu(k-1)
				\nbr{X}_{\Sigma_{k-1}^{-1}}
			}
			>
			1
			\mid F_{k-1}
		}
		\\
		&\overset{\textup{(a)}}{\geq}
		\frac{1}{2\sqrt{2\pi e}} ,
	\end{align*}
	where inequality \textup{(a)} follows from the standard Gaussian tail
	lower bound: if $Z\sim\cN(0,1)$, then
	\[
		\Pr[Z>z]
		\geq
		\frac{1}{\sqrt{2\pi}}
		\cdot
		\frac{z}{1+z^2}
		\exp\left(-\frac{z^2}{2}\right),
	\]
	applied with $z=1$ (Borjesson and Sundberg,~\citeyear{borjesson1979simple}).
\end{proof}

\begin{lemma} \label{lemma:lin_xi_xi'}
	Let $\xi_k,\xi'_k \in \R^{d}$ be i.i.d. random variables given $F_{k-1}$. Let $\tilde{p}$ be a $F_{k-1}$-measurable transition model, and $x_{k-1} \in \R^{d}$ be  a $F_{k-1}$-measurable random variable. For any policy $\pi$, denote the cumulative feature under policy $\pi$ on MDP $\tilde{P}$ by $\tilde{\phi}^{\pi}$. Let $\tilde{\pi}^k:= \argmax_{\pi} (\tilde{\phi}^{\pi})^\top (x_{k-1} + \xi_k)$. Then, we have
	\begin{align*}
		& \ex\mbr{ \sbr{ (\tilde{\phi}^{\tilde{\pi}^k})^\top \sbr{ x_{k-1} + \xi_k } - \ex\mbr{ (\tilde{\phi}^{\tilde{\pi}^k})^\top \sbr{ x_{k-1} + \xi_k } \ |\ F_{k-1}  } }^{+} \ \Big|\ F_{k-1} }
		\\
		&\leq \ex\mbr{ | (\tilde{\phi}^{\tilde{\pi}^k})^\top \xi_k | + | (\tilde{\phi}^{\tilde{\pi}^k})^\top \xi'_k| \ \Big|\ F_{k-1} } .
	\end{align*}
\end{lemma}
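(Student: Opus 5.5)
The plan is the standard "independent copy" argument from the Thompson-sampling analyses of Efroni et al.\ (Lemma 11 there). Write $J(\xi):=\max_{\pi}(\tilde{\phi}^{\pi})^\top(x_{k-1}+\xi)$, so that $(\tilde{\phi}^{\tilde{\pi}^k})^\top(x_{k-1}+\xi_k)=J(\xi_k)$. Because $\xi_k$ and $\xi'_k$ are i.i.d.\ given $F_{k-1}$, and $x_{k-1},\tilde P$ are $F_{k-1}$-measurable, we have $\ex[J(\xi_k)\mid F_{k-1}]=\ex[J(\xi'_k)\mid F_{k-1}]$. The quantity to bound is therefore
\[
\ex\mbr{\bigl(J(\xi_k)-\ex[J(\xi'_k)\mid F_{k-1}]\bigr)^{+}\mid F_{k-1}} .
\]

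\textbf{Step 1 (move the inner expectation outside).} Conditional on $F_{k-1}$ and $\xi_k$, the map $z\mapsto (J(\xi_k)-z)^+$ is convex. By Jensen's inequality with respect to $\xi'_k$, which is independent of $\xi_k$ given $F_{k-1}$,
\[
\bigl(J(\xi_k)-\ex[J(\xi'_k)\mid F_{k-1}]\bigr)^+\le \ex\mbr{(J(\xi_k)-J(\xi'_k))^+\mid F_{k-1},\xi_k}.
\]
Taking conditional expectations then bounds the target by $\ex[(J(\xi_k)-J(\xi'_k))^+\mid F_{k-1}]$.

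\textbf{Step 2 (compare the two maxima at the policy chosen with $\xi_k$).} Since $J(\xi'_k)$ is a maximum over $\pi$, it is at least its value at $\tilde{\pi}^k$:
\[
J(\xi'_k)\ge(\tilde{\phi}^{\tilde{\pi}^k})^\top(x_{k-1}+\xi'_k).
\]
Hence
\[
J(\xi_k)-J(\xi'_k)\le (\tilde{\phi}^{\tilde{\pi}^k})^\top\xi_k-(\tilde{\phi}^{\tilde{\pi}^k})^\top\xi'_k ,
\]
because the $x_{k-1}$ terms cancel. Taking positive parts, $(a-b)^+\le|a|+|b|$ gives
\[
(J(\xi_k)-J(\xi'_k))^+\le|(\tilde{\phi}^{\tilde{\pi}^k})^\top\xi_k|+|(\tilde{\phi}^{\tilde{\pi}^k})^\top\xi'_k| .
\]
Taking $\ex[\cdot\mid F_{k-1}]$ yields exactly the claimed bound.

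\textbf{Main obstacle: measurability and integrability, not the algebra.} We must check that $\tilde{\pi}^k$ is a measurable function of $(F_{k-1},\xi_k)$, with ties broken by any fixed measurable rule, so that the Jensen step and the tower property are legitimate. We also need $J$ to be integrable. This holds since $\|\tilde{\phi}^\pi\|_2\le H$ gives $|J(\xi)|\le H(\|x_{k-1}\|_2+\|\xi\|_2)$, which is integrable for the Gaussian perturbations used in the paper. Note that $\tilde{\pi}^k$ depends on $\xi_k$ but not on $\xi'_k$. This is what makes the second absolute-value term a genuine evaluation of an independent copy along a $\xi_k$-dependent direction, and it is how the term will later be controlled via Lemma~\ref{lemma:lin_confence_interval_proj_free}-type norm bounds.
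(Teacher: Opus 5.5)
Your proof is correct and is essentially the paper's argument. Both introduce the independent copy $\xi'_k$, replace $\max_{\pi}(\tilde{\phi}^{\pi})^\top(x_{k-1}+\xi'_k)$ by its value at $\tilde{\pi}^k$ so that $x_{k-1}$ cancels, and finish with the triangle inequality. The only difference is the order of steps: the paper lower-bounds the inner conditional expectation first and then uses $\bigl(\E[Y\mid F_{k-1},\xi_k]\bigr)^{+}\le\E[|Y|\mid F_{k-1},\xi_k]$, whereas you apply conditional Jensen to $z\mapsto(J(\xi_k)-z)^{+}$ first and then compare pointwise.
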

\begin{proof}
	This proof is originated from Lemma 12 in (Efroni et al.,~\citeyear{efroni2021reinforcement}).
	
	First, using the definition of $\tilde{\pi}^k$ and the fact that $\xi_k$ and $\xi'_k$ follow the same distribution, we have
	\begin{align}
		\ex\mbr{ (\tilde{\phi}^{\tilde{\pi}^k})^\top \sbr{ x_{k-1} + \xi_k } \ |\ F_{k-1}  } = \ex\mbr{ \max_{\pi} (\tilde{\phi}^{\pi})^\top \sbr{ x_{k-1} + \xi'_k } \ |\ F_{k-1}  } . \label{eq:connect_xi_xi'}
	\end{align}
	
	Then, since given $F_{k-1}$, $\xi_k$ and $\xi'_k$ are independent, we have
	\begin{align}
		\ex\mbr{ \max_{\pi} (\tilde{\phi}^{\pi})^\top \sbr{ x_{k-1} + \xi'_k } \ |\ F_{k-1}  } &= \ex\mbr{ \max_{\pi} (\tilde{\phi}^{\pi})^\top \sbr{ x_{k-1} + \xi'_k } \ |\ F_{k-1} , \xi_k, \tilde{\pi}_k }
		\nonumber\\
		&\geq \ex\mbr{ (\phi^{\tilde{\pi}_k})^\top \sbr{ x_{k-1} + \xi'_k } \ |\ F_{k-1}, \xi_k, \tilde{\pi}_k  } . \label{eq:relax_max_xi'}
	\end{align}
	
	Hence, combining Eqs.~\eqref{eq:connect_xi_xi'} and \eqref{eq:relax_max_xi'}, we have
	\begin{align*}
		&\quad\ \ex\mbr{ \sbr{(\phi^{\tilde{\pi}_k})^\top \sbr{ x_{k-1} + \xi_k } - \ex\mbr{ (\phi^{\tilde{\pi}_k})^\top \sbr{ x_{k-1} + \xi_k } \ |\ F_{k-1}  } }^{+} \ \Big|\ F_{k-1} }
		\\
		&\leq \ex\mbr{ \sbr{ (\phi^{\tilde{\pi}_k})^\top \sbr{ x_{k-1} + \xi_k }  - \ex\mbr{ (\phi^{\tilde{\pi}_k})^\top \sbr{ x_{k-1} + \xi'_k } \ |\ F_{k-1} , \xi_k, \tilde{\pi}_k  } }^{+} \ \Big|\ F_{k-1} } 
		\\
		&= \ex\mbr{ \sbr{ \ex\mbr{ (\phi^{\tilde{\pi}_k})^\top \sbr{ x_{k-1} + \xi_k }  -  (\phi^{\tilde{\pi}_k})^\top \sbr{ x_{k-1} + \xi'_k } \ |\ F_{k-1} , \xi_k, \tilde{\pi}_k  } }^{+} \ \Big|\ F_{k-1} } 
		\\
		&= \ex\mbr{ \sbr{ \ex\mbr{ (\phi^{\tilde{\pi}_k})^\top \xi_k   -  (\phi^{\tilde{\pi}_k})^\top \xi'_k \ |\ F_{k-1} , \xi_k, \tilde{\pi}_k  } }^{+} \ \Big|\ F_{k-1} }
		\\
		&\leq \ex\mbr{ \abr{ \ex\mbr{ (\phi^{\tilde{\pi}_k})^\top \xi_k   -  (\phi^{\tilde{\pi}_k})^\top \xi'_k \ |\ F_{k-1} , \xi_k, \tilde{\pi}_k  } } \ \Big|\ F_{k-1} }
		\\
		&\leq \ex\mbr{  \ex\mbr{ |(\phi^{\tilde{\pi}_k})^\top \xi_k   -  (\phi^{\tilde{\pi}_k})^\top \xi'_k| \ |\ F_{k-1} , \xi_k, \tilde{\pi}_k   } \ \Big|\ F_{k-1} }
		\\
		&= \ex\mbr{   | (\phi^{\tilde{\pi}_k})^\top \xi_k   -  (\phi^{\tilde{\pi}_k})^\top \xi'_k | \ \Big|\ F_{k-1} }
		\\
		&\leq \ex\mbr{   |(\phi^{\tilde{\pi}_k})^\top \xi_k| \ \Big|\ F_{k-1} } + \ex\mbr{   |  (\phi^{\tilde{\pi}_k})^\top \xi'_k| \ \Big|\ F_{k-1} } .
	\end{align*}
	
\end{proof}

For any $k>0$ and $\delta_k \in (0,1)$, define the event
\begin{align*}
	\cT_k(\delta_k)
	:=
	\lbr{
		\forall \phi \in \R^{d}:
		\ |\phi^\top \xi_k|
		\leq
		\sqrt{\alpha} \cdot \nu(k-1)
		\sbr{
			\sqrt{d}
			+
			2\sqrt{\log\sbr{\frac{1}{\delta_k}}}
		}
		\nbr{\phi}_{\Sigma_{k-1}^{-1}}
	}.
\end{align*}

\begin{lemma} \label{lemma:lin_X_top_xi_k}
	For any $k>0$ and $\delta_k \in (0,1)$, we have
	\begin{align*}
		\Pr\mbr{
			\cT_k(\delta_k)
			\mid F_{k-1}
		}
		\geq
		1-\delta_k .
	\end{align*}

	In addition, for a random variable $X \in \R^{d}$ satisfying
	$\|X\|_{\Sigma_{k-1}^{-1}}\leq L_X$, we have
	\begin{align*}
		\ex\mbr{
			|X^\top \xi_k|
			\mid F_{k-1}
		}
		&\leq
		\sqrt{\alpha} \cdot \nu(k-1)
		\sbr{
			\sqrt{d}
			+
			2\sqrt{\log\sbr{\frac{1}{\delta_k}}}
		}
		\ex\mbr{
			\nbr{X}_{\Sigma_{k-1}^{-1}}
			\mid F_{k-1}
		}
		\\
		&\quad
		+
		\sqrt{\alpha} \cdot \nu(k-1)
		\cdot
		L_X\sqrt{d\delta_k}.
	\end{align*}
\end{lemma}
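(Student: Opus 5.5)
The plan has two parts: a uniform Gaussian tail bound that gives the first claim, and a split on that event that gives the second.

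\emph{First claim.} Conditional on $F_{k-1}$, write $\xi_k=\sqrt{\alpha}\,\nu(k-1)\,\Sigma_{k-1}^{-1/2}g$ with $g\sim\mathcal N(0,I_d)$. This is legitimate because $\Sigma_{k-1}$ and $\nu(k-1)$ are $F_{k-1}$-measurable. For any $\phi\in\R^d$, Cauchy--Schwarz gives
\[
|\phi^\top\xi_k|=\sqrt{\alpha}\,\nu(k-1)\,\bigl|(\Sigma_{k-1}^{-1/2}\phi)^\top g\bigr|\le\sqrt{\alpha}\,\nu(k-1)\,\|\phi\|_{\Sigma_{k-1}^{-1}}\|g\|_2 .
\]
So it is enough to show $\|g\|_2\le\sqrt d+2\sqrt{\log(1/\delta_k)}$ with conditional probability at least $1-\delta_k$. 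The map $g\mapsto\|g\|_2$ is $1$-Lipschitz and $\E\|g\|_2\le\sqrt d$. Gaussian concentration for Lipschitz functions then gives $\Pr[\|g\|_2\ge\sqrt d+t]\le e^{-t^2/2}$. Taking $t=2\sqrt{\log(1/\delta_k)}$ gives $\delta_k^2\le\delta_k$, with room to spare. The bound holds simultaneously for all $\phi$ because the only random quantity involved is $\|g\|_2$, which does not depend on $\phi$; this is what makes the event $\cT_k(\delta_k)$ uniform.

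\emph{Second claim.} Split the conditional expectation on $\cT_k(\delta_k)$ and its complement:
\[
\E[|X^\top\xi_k|\mid F_{k-1}]=\E[|X^\top\xi_k|\indicator\{\cT_k\}\mid F_{k-1}]+\E[|X^\top\xi_k|\indicator\{\cT_k^c\}\mid F_{k-1}] .
\]
On $\cT_k$ we apply the uniform bound with $\phi=X$. The bound holds for every $\phi$, so it holds for $X$ even though $X$ may depend on $\xi_k$; for instance $X=\phi^{\pi^k}$ with $\pi^k$ chosen through $\widetilde\theta_k$. Dropping the indicator then gives the first term on the right-hand side. On the complement we bound $|X^\top\xi_k|\le\sqrt{\alpha}\,\nu(k-1)\,L_X\|g\|_2$ and apply Cauchy--Schwarz in expectation:
\[
\E[\|g\|_2\indicator\{\cT_k^c\}\mid F_{k-1}]\le\sqrt{\E\|g\|_2^2}\,\sqrt{\Pr[\cT_k^c\mid F_{k-1}]}\le\sqrt{d}\,\sqrt{\delta_k}.
\]
This yields exactly the second term $\sqrt{\alpha}\,\nu(k-1)L_X\sqrt{d\delta_k}$.

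\emph{Main obstacle.} The delicate point is the measurability of $X$. Because $X$ may depend on $\xi_k$, we cannot treat $X^\top\xi_k$ as a scalar Gaussian. The uniform-in-$\phi$ formulation of $\cT_k$, together with the deterministic bound $\|X\|_{\Sigma_{k-1}^{-1}}\le L_X$ on the bad event, is what sidesteps this.
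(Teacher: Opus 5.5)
Your proposal is correct and follows the paper's proof essentially step for step. Both arguments reduce $|\phi^\top\xi_k|$ by Cauchy--Schwarz to $\sqrt{\alpha}\,\nu(k-1)\|\phi\|_{\Sigma_{k-1}^{-1}}\|g\|_2$, apply a tail bound to the chi-distributed $\|g\|_2$, and then split on $\cT_k(\delta_k)$, using Cauchy--Schwarz and $\E\|g\|_2^2=d$ on the complement. The differences are cosmetic: you use Gaussian Lipschitz concentration where the paper cites the Laurent--Massart $\chi^2$ tail bound, and you state explicitly what the paper leaves implicit, namely that the uniformity of $\cT_k(\delta_k)$ in $\phi$ is what allows $X$ to depend on $\xi_k$.
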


\begin{proof}
	This proof is similar to the analysis of Lemma 13 in (Efroni et al.,~\citeyear{efroni2021reinforcement}).

	We first prove the first statement. For any $\phi \in \R^{d}$, by the Cauchy-Schwarz inequality,
	\begin{align}
		|\phi^\top \xi_k|
		&=
		\left|
			\phi^\top
			\Sigma_{k-1}^{-\frac{1}{2}}
			\Sigma_{k-1}^{\frac{1}{2}}
			\xi_k
		\right|
		\nonumber\\
		&\leq
		\nbr{
			\Sigma_{k-1}^{-\frac{1}{2}}\phi
		}_2
		\nbr{
			\Sigma_{k-1}^{\frac{1}{2}}\xi_k
		}_2
		\nonumber\\
		&=
		\sqrt{\alpha} \cdot \nu(k-1)
		\nbr{\phi}_{\Sigma_{k-1}^{-1}}
		\nbr{
			\frac{1}{\sqrt{\alpha}\cdot \nu(k-1)}
			\Sigma_{k-1}^{\frac{1}{2}}\xi_k
		}_2 .
		\label{eq:known_tran_phi_xi_decomposition}
	\end{align}

	Conditioned on $F_{k-1}$,
		$\frac{1}{\sqrt{\alpha}\cdot \nu(k-1)}
		\Sigma_{k-1}^{\frac{1}{2}}\xi_k$
	is a standard Gaussian vector in $\R^d$. Hence, its Euclidean norm is
	chi-distributed with parameter $d$ .

	By Lemma 1 in (Laurent and Massart,~\citeyear{laurent2000adaptive}), with probability at least
	$1-\delta_k$,
	\begin{align*}
		\nbr{
			\frac{1}{\sqrt{\alpha}\cdot \nu(k-1)}
			\Sigma_{k-1}^{\frac{1}{2}}\xi_k
		}_2
		&\leq
		\sqrt{
			d
			+
			2\sqrt{
				d\log\sbr{\frac{1}{\delta_k}}
			}
			+
			2\log\sbr{\frac{1}{\delta_k}}
		}
		\\
		&=
		\sqrt{
			\sbr{
				\sqrt{d}
				+
				\sqrt{\log\sbr{\frac{1}{\delta_k}}}
			}^2
			+
			\log\sbr{\frac{1}{\delta_k}}
		}
		\\
		&\leq
		\sqrt{d}
		+
		2\sqrt{\log\sbr{\frac{1}{\delta_k}}}.
	\end{align*}
	Combining this bound with Eq.~\eqref{eq:known_tran_phi_xi_decomposition}
	proves the first statement.

	Next, we prove the second statement.

	For a random variable $X \in \R^{d}$, we decompose
	\begin{align*}
		\ex\mbr{
			|X^\top \xi_k|
			\mid F_{k-1}
		}
		&=
		\ex\mbr{
			|X^\top \xi_k|
			\cT_k(\delta_k)
			\mid F_{k-1}
		}
		+
		\ex\mbr{
			|X^\top \xi_k|
			\left({{\cT}_k(\delta_k)}\right)^c
			\mid F_{k-1}
		}
		\\
		&\leq
		\sqrt{\alpha} \cdot \nu(k-1)
		\sbr{
			\sqrt{d}
			+
			2\sqrt{\log\sbr{\frac{1}{\delta_k}}}
		}
		\ex\mbr{
			\nbr{X}_{\Sigma_{k-1}^{-1}}
			\mid F_{k-1}
		}
		\\
		&\quad
		+
		\sqrt{
			\Pr\mbr{
				\left({{\cT}_k(\delta_k)}\right)^c
				\mid F_{k-1}
			}
			\cdot
			\ex\mbr{
				|X^\top \xi_k|^2
				\mid F_{k-1}
			}
		}
		\\
		&\overset{\textup{(a)}}{\leq}
		\sqrt{\alpha} \cdot \nu(k-1)
		\sbr{
			\sqrt{d}
			+
			2\sqrt{\log\sbr{\frac{1}{\delta_k}}}
		}
		\ex\mbr{
			\nbr{X}_{\Sigma_{k-1}^{-1}}
			\mid F_{k-1}
		}
		\\
		&\quad
		+
		\sqrt{\alpha} \cdot \nu(k-1)
		\sqrt{
			\delta_k
			\ex\mbr{
				\nbr{X}_{\Sigma_{k-1}^{-1}}^2
				\cdot
				\nbr{
					\frac{1}{\sqrt{\alpha}\cdot \nu(k-1)}
					\Sigma_{k-1}^{\frac{1}{2}}\xi_k
				}_2^2
				\mid F_{k-1}
			}
		}
		\\
		&\leq
		\sqrt{\alpha} \cdot \nu(k-1)
		\sbr{
			\sqrt{d}
			+
			2\sqrt{\log\sbr{\frac{1}{\delta_k}}}
		}
		\ex\mbr{
			\nbr{X}_{\Sigma_{k-1}^{-1}}
			\mid F_{k-1}
		}
		\\
		&\quad
		+
		\sqrt{\alpha} \cdot \nu(k-1)
		\sqrt{
			\delta_k L_X^2
			\ex\mbr{
				\nbr{
					\frac{1}{\sqrt{\alpha}\cdot \nu(k-1)}
					\Sigma_{k-1}^{\frac{1}{2}}\xi_k
				}_2^2
				\mid F_{k-1}
			}
		}
		\\
		&\overset{\textup{(b)}}{\leq}
		\sqrt{\alpha} \cdot \nu(k-1)
		\sbr{
			\sqrt{d}
			+
			2\sqrt{\log\sbr{\frac{1}{\delta_k}}}
		}
		\ex\mbr{
			\nbr{X}_{\Sigma_{k-1}^{-1}}
			\mid F_{k-1}
		}
		\\
		&\quad
		+
		\sqrt{\alpha} \cdot \nu(k-1)
		\cdot
		L_X\sqrt{d\delta_k}.
	\end{align*}
	Here inequality \textup{(a)} follows from the Cauchy-Schwarz inequality
	and the bound
	$\Pr\mbr{\left({{\cT}_k(\delta_k)}\right)^c\mid F_{k-1}}\leq \delta_k$.
	Inequality \textup{(b)} follows because, conditioned on $F_{k-1}$,
		$\nbr{
			\frac{1}{\sqrt{\alpha}\cdot \nu(k-1)}
			\Sigma_{k-1}^{\frac{1}{2}}\xi_k
		}_2$
	is chi-distributed with parameter $d$, and therefore
$		\ex\mbr{
			\nbr{
				\frac{1}{\sqrt{\alpha}\cdot \nu(k-1)}
				\Sigma_{k-1}^{\frac{1}{2}}\xi_k
			}_2^2
			\mid F_{k-1}
		}
		=
		d .$
\end{proof}

For any $k \geq 1$, let $\tau^k$ denote the trajectory in episode $k$.

Define event 
\begin{align}
	\cF^{\bi}_{\textup{KTran}}:=\Bigg\{& \abr{\sum_{k'=1}^{k} \sbr{ \ex\mbr{ \nbr{ \phi^{\tau^{k'}}}_{\Sigma_{k'-1}^{-1}} | F_{k'-1}} - \nbr{ \phi^{\tau^{k'}}}_{\Sigma_{k'-1}^{-1}} } } \leq 4H\sqrt{ \frac{k}{\alpha \lambda} \log\sbr{\frac{4k}{\delta'}} } ,
	\nonumber\\
	&\abr{\sum_{k'=1}^{k} \sbr{ \ex\mbr{ (\phi^{\pi^{k'}})^\top \theta^* | F_{k'-1} } - (\phi^{\pi^{k'}})^\top \theta^* }} \leq 4 H r_{\max} \sqrt{ k \log\sbr{\frac{4k}{\delta'}} } ,\ \forall k>0 \Bigg\} .
\end{align}

\begin{lemma} \label{lemma:lin_cF_binary}
	It holds that
	\begin{align*}
		\Pr \mbr{ \cF^{\bi}_{\textup{KTran}} } \geq 1-2\delta' .
	\end{align*}
\end{lemma}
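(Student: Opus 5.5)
We will prove each of the two inequalities in $\cF^{\bi}_{\textup{KTran}}$ separately with probability at least $1-\delta'$, and then take a union bound to get $1-2\delta'$. Both are sums of martingale differences adapted to the filtration $\{F_k\}$, so the tool is the Azuma--Hoeffding inequality. Uniformity over all $k>0$ will come from a union bound over $k$ with per-round confidence level $\delta'/(2k^2)$ or similar. The $\log(4k/\delta')$ factor has room to absorb this after a mild adjustment; alternatively one can use a doubling/peeling argument.

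\textbf{First inequality.} Define $X_{k'} := \ex[\|\phi^{\tau^{k'}}\|_{\Sigma_{k'-1}^{-1}}\mid F_{k'-1}] - \|\phi^{\tau^{k'}}\|_{\Sigma_{k'-1}^{-1}}$. The matrix $\Sigma_{k'-1}$ is $F_{k'-1}$-measurable. The randomness in $\phi^{\tau^{k'}}$ comes from the sampled $\xi_{k'}$, hence $\pi^{k'}$, and from the trajectory, and all of it is revealed by $F_{k'}$. So $X_{k'}$ is a martingale difference. For boundedness, note $\|\phi^{\tau^{k'}}\|_2\le H$ and $\Sigma_{k'-1}\succeq \alpha\lambda I$, which give $0\le \|\phi^{\tau^{k'}}\|_{\Sigma_{k'-1}^{-1}}\le H/\sqrt{\alpha\lambda}$ and therefore $|X_{k'}|\le 2H/\sqrt{\alpha\lambda}$. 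Azuma--Hoeffding for fixed $k$ yields a deviation of at most $2H/\sqrt{\alpha\lambda}\cdot\sqrt{2k\log(2/\delta_k)}$. Choosing $\delta_k$ so that $\sum_k\delta_k\le\delta'$ (e.g.\ $\delta_k=\delta'/(2k^2)$) and bounding $\log(4k^2/\delta')\le 2\log(4k/\delta')$ for $\delta'<1$ gives a bound of the form $4H\sqrt{\frac{k}{\alpha\lambda}\log(4k/\delta')}$.

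\textbf{Second inequality.} The same argument applies with $Y_{k'}:=\ex[(\phi^{\pi^{k'}})^\top\theta^*\mid F_{k'-1}]-(\phi^{\pi^{k'}})^\top\theta^*$. Here $\pi^{k'}$ is random through $\xi_{k'}$ and $F_{k'}$-measurable. Since $|(\phi^{\pi})^\top\theta^*|=|V_1^\pi|\le Hr_{\max}$ (per-step rewards lie in $[-r_{\max},r_{\max}]$), we get $|Y_{k'}|\le 2Hr_{\max}$, and Azuma--Hoeffding plus the same union bound over $k$ gives $4Hr_{\max}\sqrt{k\log(4k/\delta')}$.

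The only delicate step is getting the constants and the anytime union bound to produce exactly $\log(4k/\delta')$. If the crude $k^{-2}$ allocation does not fit inside the constant $4$, I would instead use the slack from $\sqrt{2}\cdot 2=2\sqrt2<4$. That gives room for $\log(2\cdot 2k^2/\delta')\le 2\log(4k/\delta')$, since $4k^2/\delta'\le(4k/\delta')^2$. Then $2\sqrt2\sqrt{2}=4$, and the constant matches exactly.
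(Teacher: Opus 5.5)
Your proposal is correct and matches the paper's proof essentially step for step. Both apply Azuma--Hoeffding to the two martingale difference sequences, with ranges $2H/\sqrt{\alpha\lambda}$ (via $\Sigma_{k'-1}\succeq\alpha\lambda I$ and $\|\phi^{\tau^{k'}}\|_2\le H$) and $2Hr_{\max}$; use per-$k$ confidence $\delta'/(2k^2)$, union-bounded over $k$; use $\log(4k^2/\delta')\le 2\log(4k/\delta')$ to reach the constant $4$; and finish with a union bound over the two inequalities. The hedging in your last paragraph is unnecessary, since the $k^{-2}$ allocation already gives $\frac{2H}{\sqrt{\alpha\lambda}}\sqrt{2k\cdot 2\log(4k/\delta')}=4H\sqrt{\frac{k}{\alpha\lambda}\log(4k/\delta')}$ exactly.
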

\begin{proof}
	We prove the first inequality as follows.
	
	For any $k'\geq 1$, we have that $\nbr{ \phi^{\tau^{k'}}}_{\Sigma_{k'-1}^{-1}} \leq \frac{H}{\sqrt{\alpha \lambda}}$, and then $|\ex \left[\nbr{ \phi^{\tau^{k'}}}_{\Sigma_{k'-1}^{-1}} | F_{k'-1}\right] - \nbr{ \phi^{\tau^{k'}}}_{\Sigma_{k'-1}^{-1}}| \leq \frac{2H}{\sqrt{\alpha \lambda}}$.

	Using the Azuma-Hoeffding inequality, we have that for any fixed $k>0$, with probability at least $1-\frac{\delta'}{2k^2}$,
	\begin{align*}
		\abr{\sum_{k'=1}^{k} \sbr{ \ex \left[\nbr{ \phi^{\tau^{k'}}}_{\Sigma_{k'-1}^{-1}} | F_{k'-1}\right] - \nbr{ \phi^{\tau^{k'}}}_{\Sigma_{k'-1}^{-1}} } } &\leq \sqrt{ 2 \cdot \frac{4H^2}{\alpha \lambda} \cdot k \log\sbr{\frac{4k^2}{\delta'}} } .
	\end{align*}
	
	Since $\sum_{k=1}^{\infty} \frac{\delta'}{2k^2} \leq \delta'$, by a union bound over $k$, we have that with probability at least $1-\delta'$, for any $k\geq 1$,
	\begin{align*}
		\abr{\sum_{k'=1}^{k} \sbr{ \ex \left[\nbr{ \phi^{\tau^{k'}}}_{\Sigma_{k'-1}^{-1}} | F_{k'-1}\right] - \nbr{ \phi^{\tau^{k'}}}_{\Sigma_{k'-1}^{-1}} } } &\leq \sqrt{ 2 \cdot \frac{4H^2}{\alpha \lambda} \cdot k \log\sbr{\frac{4k^2}{\delta'}} } 
		\\
		&\leq 4H \sqrt{ \frac{k}{\alpha \lambda} \log\sbr{\frac{4k}{\delta'}} } .
	\end{align*}
	The second inequality can be obtained by a similar argument and the fact that $|(\phi^{\pi^{k}})^\top \theta^*| \leq Hr_{\max}$ for any $k>0$.
\end{proof}

\begin{lemma} \label{lemma:lin_sum_phi_seg_binary}
	For any $K\geq 1$, we have
	\begin{align*}
		\sum_{k=1}^{K} \sum_{i=1}^{m} \nbr{ \phi^{\tau^k_i}}_{\Sigma_{k-1}^{-1}} \leq \sqrt{ 2Km d \cdot \max\lbr{ \frac{H^2}{m \alpha \lambda}, 1} \cdot \log \sbr{ 1+ \frac{  KH^2 }{ \alpha \lambda d m } } } .
	\end{align*}
\end{lemma}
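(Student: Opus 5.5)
We reduce to the standard elliptical potential lemma, applied at the episode level with batched rank-$m$ updates. First we apply Cauchy--Schwarz over all $Km$ segment terms:
\[
\sum_{k=1}^{K}\sum_{i=1}^{m}\nbr{\phi^{\tau_i^k}}_{\Sigma_{k-1}^{-1}}
\le \sqrt{Km\sum_{k=1}^{K}\sum_{i=1}^{m}\nbr{\phi^{\tau_i^k}}_{\Sigma_{k-1}^{-1}}^2}.
\]
It then suffices to show
\[
\sum_{k}\sum_i \nbr{\phi^{\tau_i^k}}^2_{\Sigma_{k-1}^{-1}} \le 2d\max\lbr{\frac{H^2}{m\alpha\lambda},1}\log\sbr{1+\frac{KH^2}{\alpha\lambda d m}}.
\]

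For each episode, set $x_k := \sum_{i=1}^m \nbr{\phi^{\tau_i^k}}^2_{\Sigma_{k-1}^{-1}} = \trace\sbr{\Sigma_{k-1}^{-1/2} M_k \Sigma_{k-1}^{-1/2}}$ with $M_k := \sum_i \phi^{\tau_i^k}(\phi^{\tau_i^k})^\top$, so that $\Sigma_k = \Sigma_{k-1}+M_k$. Because $\Sigma_{k-1}\succeq \alpha\lambda I$ and $\|\phi^{\tau_i^k}\|_2\le H/m$, we have $x_k \le m\cdot \frac{H^2}{m^2\alpha\lambda} = \frac{H^2}{m\alpha\lambda}$. Let $A := \Sigma_{k-1}^{-1/2}M_k\Sigma_{k-1}^{-1/2}$, which is PSD with eigenvalues $\mu_j$. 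Then $\log\frac{\det\Sigma_k}{\det\Sigma_{k-1}} = \sum_j\log(1+\mu_j)\ge \log(1+\trace A)=\log(1+x_k)$. Set $c:=\max\{\frac{H^2}{m\alpha\lambda},1\}$. Since $x_k\le c$, the scalar inequality $x\le \frac{c}{\log(1+c)}\log(1+x)$ for $x\in[0,c]$ (concavity of $\log(1+x)$), combined with $\frac{c}{\log(1+c)}\le 2c$ for $c\ge1$, gives $x_k\le 2c\log\frac{\det\Sigma_k}{\det\Sigma_{k-1}}$.

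Summing over $k$ telescopes to $2c\log\frac{\det\Sigma_K}{\det(\alpha\lambda I)}$. By AM-GM as in Lemma~\ref{lemma:lin_Lambda_k}, $\det\Sigma_K \le \sbr{\alpha\lambda + \frac{KmH^2/m^2}{d}}^d$, so the log-ratio is at most $d\log\sbr{1+\frac{KH^2}{\alpha\lambda d m}}$. Substituting back into the Cauchy--Schwarz bound gives the claim.

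The main obstacle is the batched update: all $m$ segments in an episode share the same $\Sigma_{k-1}$, so the per-rank-one potential lemma does not apply directly. We handle this by the trace/eigenvalue inequality $\sum_j\log(1+\mu_j)\ge\log(1+\sum_j\mu_j)$ together with the episode-level cap $x_k\le H^2/(m\alpha\lambda)$. This is exactly where the factor $\max\{\frac{H^2}{m\alpha\lambda},1\}$ originates.
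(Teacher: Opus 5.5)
Your proposal is correct and follows the paper's proof essentially step for step. Both arguments use Cauchy--Schwarz over the $Km$ segments, the per-episode cap $x_k \le H^2/(m\alpha\lambda)$ combined with the scalar inequality $x \le 2\max\{c,1\}\log(1+x)$, determinant telescoping via $\det(I+A)\ge 1+\trace(A)$, and the AM--GM bound on $\det(\Sigma_K)$. The one difference is that you justify the scalar inequality explicitly through concavity of $\log(1+x)$, whereas the paper simply asserts it.
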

\begin{proof}		
	We have
	\begin{align}
		\sum_{k=1}^{K} \sum_{i=1}^{m} \nbr{ \phi^{\tau^k_i}}_{\Sigma_{k-1}^{-1}} &\leq \sqrt{ Km   \sum_{k=1}^{K} \sum_{i=1}^{m} \nbr{ \phi^{\tau^k_i}}_{\Sigma_{k-1}^{-1}}^2 }
		\nonumber\\
		&\overset{\textup{(a)}}{\leq} \sqrt{ 2Km \cdot \max\lbr{ \frac{H^2}{m \alpha \lambda}, 1} \cdot \sum_{k=1}^{K} \log \sbr{ 1+ \sum_{i=1}^{m} \nbr{ \phi^{\tau^k_i}}_{\Sigma_{k-1}^{-1}}^2} }
		\nonumber\\ 
		&\le \sqrt{ 2Km \cdot \max\lbr{ \frac{H^2}{m \alpha \lambda}, 1} \cdot \log \sbr{ \frac{\det(\Sigma_K)}{\det(\alpha \lambda I)} } } 
		\nonumber\\
		&\leq \sqrt{ 2Km d \cdot \max\lbr{ \frac{H^2}{m \alpha \lambda}, 1} \cdot \log \sbr{ 1+ \frac{  KH^2 }{ \alpha \lambda d m } } } , \label{eq:sum_indicator_derivation}
	\end{align}
	where inequality (a) is due to that for any $x \in [0,c]$ with constant $c\geq0$, it holds that $x \leq 2 \max\{c,1\} \cdot \log(1+x)$.
\end{proof}

\begin{proof}[Proof of Theorem~\ref{thm:seg-bits-upper}.]Let $\delta'=\frac{\delta}{3}$. Then,
		$\Pr\left[\cE \cap \cF^{\bi}_{\textup{KTran}}\right]
		\geq 1-\delta .$
	Therefore, it remains to prove the desired regret bound on the event
	$\cE \cap \cF^{\bi}_{\textup{KTran}}$.

	Assume that event $\cE \cap \cF^{\bi}_{\textup{KTran}}$ holds. Then,
	\begin{align}
		\cR(K)
		&=
		\sum_{k=1}^{K}
		\sbr{
			(\phi^{\pi^*})^\top \theta^*
			-
			(\phi^{\pi^k})^\top \theta^*
		}
		\nonumber\\
		&=
		\sum_{k=1}^{K}
		\sbr{
			\ex\mbr{
				(\phi^{\pi^*})^\top \theta^*
				-
				(\phi^{\pi^k})^\top \theta^*
				\mid F_{k-1}
			}
			+
			\ex\mbr{
				(\phi^{\pi^k})^\top \theta^*
				\mid F_{k-1}
			}
			-
			(\phi^{\pi^k})^\top \theta^*
		}
		\nonumber\\
		&\leq
		\sum_{k=1}^{K}
		\ex\mbr{
			(\phi^{\pi^*})^\top \theta^*
			-
			(\phi^{\pi^k})^\top \theta^*
			\mid F_{k-1}
		}
		+
		4Hr_{\max}
		\sqrt{
			K\log\sbr{\frac{4K}{\delta'}}
		}.
		\label{eq:lin_known_tran_regret_decomposition}
	\end{align}

	For the first term, we further decompose
	\begin{align}
		&\quad
		\sum_{k=1}^{K}
		\ex\mbr{
			(\phi^{\pi^*})^\top \theta^*
			-
			(\phi^{\pi^k})^\top \theta^*
			\mid F_{k-1}
		}
		\nonumber\\
		&=
		\sum_{k=1}^{K}
		\sbr{
			\ex\mbr{
				(\phi^{\pi^*})^\top \theta^*
				-
				(\phi^{\pi^k})^\top \widetilde{\theta}_k
				\mid F_{k-1}
			}
			+
			\ex\mbr{
				(\phi^{\pi^k})^\top \widetilde{\theta}_k
				-
				(\phi^{\pi^k})^\top \theta^*
				\mid F_{k-1}
			}
		}.
		\label{eq:lin_known_tran_ex_phi_star-phi_pi_k_theta}
	\end{align}

	We first show that
	\begin{align}
		\ex\mbr{
			(\phi^{\pi^*})^{\top}\theta^*
			-
			(\phi^{\pi^k})^{\top}\widetilde{\theta}_k
			\mid F_{k-1}
		}
		\leq
		2\sqrt{2\pi e}
		\cdot
		\ex\mbr{
			\sbr{
				(\phi^{\pi^k})^{\top}\widetilde{\theta}_k
				-
				\ex\mbr{
					(\phi^{\pi^k})^{\top}\widetilde{\theta}_k
					\mid F_{k-1}
				}
			}^{+}
			\mid F_{k-1}
		}.
		\label{eq:lin_known_tran_fist_term_leq_positive}
	\end{align}

	If
	\[
		\ex\mbr{
			(\phi^{\pi^*})^\top \theta^*
			-
			(\phi^{\pi^k})^\top \widetilde{\theta}_k
			\mid F_{k-1}
		}
		<0,
	\]
	then Eq.~\eqref{eq:lin_known_tran_fist_term_leq_positive} holds trivially.
	Otherwise, let
	\[
		z
		:=
		\ex\mbr{
			(\phi^{\pi^*})^\top \theta^*
			-
			(\phi^{\pi^k})^\top \widetilde{\theta}_k
			\mid F_{k-1}
		}.
	\]
	Then,
	\begin{align*}
		&\quad
		\ex\mbr{
			\sbr{
				(\phi^{\pi^k})^\top \widetilde{\theta}_k
				-
				\ex\mbr{
					(\phi^{\pi^k})^\top \widetilde{\theta}_k
					\mid F_{k-1}
				}
			}^{+}
			\mid F_{k-1}
		}
		\\
		&\geq
		z
		\Pr\mbr{
			(\phi^{\pi^k})^\top \widetilde{\theta}_k
			-
			\ex\mbr{
				(\phi^{\pi^k})^\top \widetilde{\theta}_k
				\mid F_{k-1}
			}
			\geq z
			\mid F_{k-1}
		}
		\\
		&\geq
		\ex\mbr{
			(\phi^{\pi^*})^\top \theta^*
			-
			(\phi^{\pi^k})^\top \widetilde{\theta}_k
			\mid F_{k-1}
		}
		\cdot
		\Pr\mbr{
			(\phi^{\pi^k})^\top \widetilde{\theta}_k
			\geq
			(\phi^{\pi^*})^\top \theta^*
			\mid F_{k-1}
		}
		\\
		&\overset{\textup{(a)}}{\geq}
		\ex\mbr{
			(\phi^{\pi^*})^\top \theta^*
			-
			(\phi^{\pi^k})^\top \widetilde{\theta}_k
			\mid F_{k-1}
		}
		\cdot
		\Pr\mbr{
			(\phi^{\pi^*})^\top \widetilde{\theta}_k
			\geq
			(\phi^{\pi^*})^\top \theta^*
			\mid F_{k-1}
		}
		\\
		&\overset{\textup{(b)}}{\geq}
		\ex\mbr{
			(\phi^{\pi^*})^\top \theta^*
			-
			(\phi^{\pi^k})^\top \widetilde{\theta}_k
			\mid F_{k-1}
		}
		\cdot
		\frac{1}{2\sqrt{2\pi e}},
	\end{align*}
	where inequality \textup{(a)} follows from the definition of $\pi^k$,
	and inequality \textup{(b)} follows from
	Lemma~\ref{lemma:lin_Gaussian_anti_concentration}. This proves
	Eq.~\eqref{eq:lin_known_tran_fist_term_leq_positive}.

	Let $\xi'_k \in \R^{d}$ be an independent copy of $\xi_k$ conditioned on
	$F_{k-1}$. Applying Lemma~\ref{lemma:lin_xi_xi'} with $\tilde{P}=P$,
	$x_{k-1}=\hat{\theta}_{k-1}$, and $\tilde{\pi}^k=\pi^k$, we obtain
	\begin{align*}
		&\quad
		\ex\mbr{
			(\phi^{\pi^*})^\top \theta^*
			-
			(\phi^{\pi^k})^\top \widetilde{\theta}_k
			\mid F_{k-1}
		}
		\\
		&\leq
		2\sqrt{2\pi e}
		\cdot
		\ex\mbr{
			\sbr{
				(\phi^{\pi^k})^\top \widetilde{\theta}_k
				-
				\ex\mbr{
					(\phi^{\pi^k})^\top \widetilde{\theta}_k
					\mid F_{k-1}
				}
			}^{+}
			\mid F_{k-1}
		}
		\\
		&\leq
		2\sqrt{2\pi e}
		\cdot
		\ex\mbr{
			\left|(\phi^{\pi^k})^\top \xi_k\right|
			+
			\left|(\phi^{\pi^k})^\top \xi'_k\right|
			\mid F_{k-1}
		}.
	\end{align*}

	Substituting the above bound into
	Eq.~\eqref{eq:lin_known_tran_ex_phi_star-phi_pi_k_theta}, and then applying
	Lemma~\ref{lemma:lin_X_top_xi_k} with
	$\delta_k=1/k^4$ and $L_X=H/\sqrt{\alpha\lambda}$, we have
	\begin{align}
		&\quad
		\sum_{k=1}^{K}
		\ex\mbr{
			(\phi^{\pi^*})^\top \theta^*
			-
			(\phi^{\pi^k})^\top \theta^*
			\mid F_{k-1}
		}
		\nonumber\\
		&=
		\sum_{k=1}^{K}
		\ex\mbr{
			(\phi^{\pi^*})^\top \theta^*
			-
			(\phi^{\pi^k})^\top \widetilde{\theta}_k
			+
			(\phi^{\pi^k})^\top \widetilde{\theta}_k
			-
			(\phi^{\pi^k})^\top \theta^*
			\mid F_{k-1}
		}
		\nonumber\\
		&\leq
		\sum_{k=1}^{K}
		\bigg(
		2\sqrt{2\pi e}
		\cdot
		\ex\mbr{
			\left|(\phi^{\pi^k})^\top \xi_k\right|
			+
			\left|(\phi^{\pi^k})^\top \xi'_k\right|
			\mid F_{k-1}
		}
		\nonumber\\
		&\qquad
		+
		\ex\mbr{
			(\phi^{\pi^k})^\top
			\sbr{
				\hat{\theta}_{k-1}
				+
				\xi_k
			}
			-
			(\phi^{\pi^k})^\top \theta^*
			\mid F_{k-1}
		}
		\bigg)
		\nonumber\\
		&\leq
		\sum_{k=1}^{K}
		\bigg(
		\sbr{2\sqrt{2\pi e}+1}
		\cdot
		\ex\mbr{
			\left|(\phi^{\pi^k})^\top \xi_k\right|
			\mid F_{k-1}
		}
		+
		2\sqrt{2\pi e}
		\cdot
		\ex\mbr{
			\left|(\phi^{\pi^k})^\top \xi'_k\right|
			\mid F_{k-1}
		}
		\nonumber\\
		&\qquad
		+
		\ex\mbr{
			(\phi^{\pi^k})^\top \hat{\theta}_{k-1}
			-
			(\phi^{\pi^k})^\top \theta^*
			\mid F_{k-1}
		}
		\bigg)
		\nonumber\\
		&\overset{\textup{(a)}}{\leq}
		\sum_{k=1}^{K}
		\Bigg(
		\sbr{4\sqrt{2\pi e}+2}
		\sqrt{\alpha}
		\cdot
		\nu(k-1)
		\sbr{
			\sqrt{d}
			+
			4\sqrt{\log\sbr{k}}
		}
		\cdot
		\ex\mbr{
			\nbr{\phi^{\pi^k}}_{\Sigma_{k-1}^{-1}}
			\mid F_{k-1}
		}
		\nonumber\\
		&\qquad
		+
		\sbr{4\sqrt{2\pi e}+1}
		\sqrt{\alpha}
		\cdot
		\nu(k-1)
		\frac{\sqrt{d}}{k^2}
		\cdot
		\frac{H}{\sqrt{\alpha\lambda}}
		\Bigg),
		\label{eq:lin_ex_phi_star_theta_star-phi_k_theta_star}
	\end{align}
	where inequality \textup{(a)} follows from
	Lemmas~\ref{lemma:lin_confence_interval_proj_free}
	and~\ref{lemma:lin_X_top_xi_k}.

	Moreover, by the definition of event $\cF^{\bi}_{\textup{KTran}}$ and
	Lemma~\ref{lemma:lin_sum_phi_seg_binary}, we have
	\begin{align}
		&\quad
		\sum_{k=1}^{K}
		\ex\mbr{
			\nbr{\phi^{\pi^k}}_{\Sigma_{k-1}^{-1}}
			\mid F_{k-1}
		}
        \\&\overset{\textup{(a)}}{\leq}\sum_{k=1}^{K}\ex\mbr{ \nbr{ \phi^{\tau^{k}}}_{\Sigma_{k-1}^{-1}} | F_{k-1}}
		\nonumber\\
		&=
		\sum_{k=1}^{K}
		\sbr{
			\ex\mbr{ \nbr{ \phi^{\tau^{k}}}_{\Sigma_{k-1}^{-1}} | F_{k-1}}
			-
			\nbr{ \phi^{\tau^{k}}}_{\Sigma_{k-1}^{-1}}
		}
		+
		\sum_{k=1}^{K}
		\nbr{ \phi^{\tau^{k}}}_{\Sigma_{k-1}^{-1}}
		\nonumber\\
		&\leq
		4H
		\sqrt{
			\frac{K}{\alpha\lambda}
			\log\sbr{\frac{4K}{\delta'}}
		}
		\nonumber\\
		&\quad
		+
		\sqrt{
			2Kmd
			\cdot
			\max\lbr{
				\frac{H^2}{m\alpha\lambda},
				1
			}
			\cdot
			\log\sbr{
				1+
				\frac{KH^2}{\alpha\lambda dm}
			}
		}.
		\label{eq:lin_ex_sum_phi_seg}
	\end{align}
where inequality (a) follows from Jensen's inequality, the identity
\(\phi^{\pi^k}=\E[\phi^{\tau^k}\mid \tilde F_k]\), and the tower property of conditional expectation.
	Combining
	Eq.~\eqref{eq:lin_ex_phi_star_theta_star-phi_k_theta_star},
	Eq.~\eqref{eq:lin_ex_sum_phi_seg}, and
	Eq.~\eqref{eq:lin_known_tran_regret_decomposition}, we obtain
	\begin{align*}
		\cR(K)
		&\leq
		\sbr{4\sqrt{2\pi e}+2}
		\sqrt{\alpha}
		\cdot
		\nu(K)
		\sbr{
			\sqrt{d}
			+
			4\sqrt{\log\sbr{K}}
		}
		\\
		&\quad
		\cdot
		\sbr{
			4H
			\sqrt{
				\frac{K}{\alpha\lambda}
				\log\sbr{\frac{4K}{\delta'}}
			}
			+
			\sqrt{
				2Kmd
				\max\lbr{
					\frac{H^2}{m\alpha\lambda},
					1
				}
				\log\sbr{
					1+
					\frac{KH^2}{\alpha\lambda dm}
				}
			}
		}
		\\
		&\quad
		+
		2\sbr{4\sqrt{2\pi e}+1}
		H
		\cdot
		\nu(K)
		\sqrt{
			\frac{d}{\lambda}
		}
		+
		4Hr_{\max}
		\sqrt{
			K\log\sbr{\frac{4K}{\delta'}}
		}
		\\
		&\overset{\textup{(a)}}{=}
		\widetilde{O}
		\Bigg(
			\exp\left(\frac{Hr_{\max}}{2m}\right)
			\cdot
			\nu(K)
			\sqrt{d}
			\sbr{
				\sqrt{
					Kmd
					\cdot
					\max\lbr{
						\frac{H^2}{m\alpha\lambda},
						1
					}
				}
				+
				H
				\sqrt{
					\frac{K}{\alpha\lambda}
				}
			}
		\Bigg),
	\end{align*}
	where in step \textup{(a)}, the last two terms are absorbed into the
	$\widetilde{O}(\cdot)$ notation.
\end{proof}

%%===========================================================================================%%
%% If you are submitting to one of the Nature Portfolio journals, using the eJP submission   %%
%% system, please include the references within the manuscript file itself. You may do this  %%
%% by copying the reference list from your .bbl file, paste it into the main manuscript .tex %%
%% file, and delete the associated \verb+\bibliography+ commands.                            %%
%%===========================================================================================%%
\subsection{Proof for the Regret Lower Bound}

In this section, we prove the regret lower bound with known transitions under fixed segment binary feedback.

\begin{proof}[Proof of Theorem~\ref{thm:seg-bits-lower}]
   Consider a linear MDP instance : $\cS=\{s_1,s_2,\dots,s_N\}$, $\cA=\{-1,1\}^{d-1}$, linear map $\psi(s_j,a)=(\sqrt{\frac{1}{d}} a^\top,\sqrt{\frac{1}{d}})^{\top}$, where $a\in\{-1,1\}^{d-1}$. For any $h\in [H]$, $\mu_h^*(s_j)=(\mathbf{0}_{d-1}^\top,\frac{\sqrt{d}}{N})^{\top}$.
    Under this construction, 
    $P_h(s_{j'}\mid s_j,a)=\langle \psi(s_j,a),\mu_h^*(s_{j'})\rangle
    =\frac{1}{N}$ for all $h\in[H]$ and $a\in\mathcal A$.

    We define $\Delta
:=
\frac{1}
{
8\cdot \frac{H}{m}
\sqrt{
Km\,\sig'\!\left(\frac{H}{m}r_{\max}-\frac{1}{2}\right)
}
}$. Then, we construct reward parameter $\theta^*_\gamma = (\sqrt{d}\Delta\gamma^\top,\sqrt{d}(r_{\max}-(d-1)\Delta))^\top$, where $\gamma\in \{-1,1\}^{d-1}$. We verify that the above construction indeed satisfies the Assumption~\ref{assump:linear_mdp}.

First,
for any $s_j\in \mathcal S$ and $a\in\mathcal A$, we have
\[
\|\psi(s_j,a)\|^2
=
\frac{1}{d}\|a\|^2 +\frac{1}{d}
=
\frac{d-1}{d}+\frac{1}{d}
=
1,
\]
where the second equality follows from $a\in\{-1,1\}^{d-1}$.

Moreover,
\[
\mu_h^*(\mathcal S)
=
\sum_{j=1}^N \mu_h^*(s_j)
=
(\mathbf{0}_{d-1}^\top,\sqrt{d})^{\top},
\]
and therefore
\[
\|\mu_h^*(\mathcal S)\|
=
\sqrt d.
\]

Finally, the reward is also linear in $\psi$. Indeed, for any $s_j\in \mathcal S$, 
\begin{align*}
r_\gamma(s_j,a)
&=
\left(r_{\max}-(d-1)\Delta\right)
+
\left\langle
\sqrt{\frac{1}{d}}a,
\sqrt{d}\Delta\gamma
\right\rangle \\
&=
r_{\max}-(d-1)\Delta+\Delta\langle a,\gamma\rangle .
\end{align*}
Furthermore,
\begin{align*}
\left\|\theta_\gamma^*\right\|^2
&=
\left\|
\sqrt d\Delta\gamma
\right\|^2
+
d\left(r_{\max}-(d-1)\Delta\right)^2 \\
&=
d\Delta^2\|\gamma\|^2
+
d\left(r_{\max}-(d-1)\Delta\right)^2 \\
&=
d(d-1)\Delta^2
+
d\left(r_{\max}-(d-1)\Delta\right)^2 \\
&=
d\left[
(d-1)\Delta^2
+
\left(r_{\max}-(d-1)\Delta\right)^2
\right].
\end{align*}
When $K
\ge
\frac{(d-1)^2 m}
{
64 H^2 r_{\max}^2
\sig'\!\left(\frac{H}{m}r_{\max}-\frac{1}{2}\right)
}$, we have $|r_\gamma(s_j,a)| \le r_{\max}$ and $\|\theta_\gamma^*\| \le r_{\max}\sqrt{d}$. According to all above, this linear MDP satisfies the Assumption~\ref{assump:linear_mdp}.

    If we set the segment length to be $\frac{H}{m}$, then for any algorithm $\mathbb A$, after running $K$ episodes, we obtain a total of $Km$ segments. In the $t$-th segment, the algorithm selects actions $a_{t,1},a_{t,2},\ldots,a_{t,\frac{H}{m}}\in\{-1,1\}^{d-1}$. We define the average action selected by the algorithm in the $t$-th segment as
    \[\bar A_t := \frac{1}{H/m}\sum_{\ell=1}^{H/m} a_{t,\ell}\]
    , where $\bar A_t\in[-1,1]^{d-1}$. In addition, conditional on selected actions, the feedback of $t$-th segment $y^{\gamma}_t\sim \cB\!\left(
    \sig\!\left(
    \frac{H}{m}
    \left[
    r_{\max}-(d-1)\Delta+\Delta\left\langle \bar A_t,\gamma\right\rangle
    \right]
    \right)
    \right).$ 
    
    If we further require $\frac{H}{m}(d-1)\Delta \le \frac{1}{4},$ we have
    
    \begin{align}
        \frac{H}{m}
    \left[
    r_{\max}-(d-1)\Delta+\Delta\left\langle \bar A_t,\gamma\right\rangle
    \right]
    \ge
    \frac{H}{m}
    \left[
    r_{\max}-2(d-1)\Delta
    \right]
    \ge \frac{H}{m}r_{\max} - \frac{1}{2}. \label{eq:lower_bound_logit}
    \end{align}
Therefore, under the condition
$
K
\ge
\frac{(d-1)^2}
{
m\,\sig'\!\left(\frac{H}{m}r_{\max}-\frac12\right)
}
\cdot
\max\left\{
\frac{m^2}{64H^2r_{\max}^2},
\frac14
\right\}
$, Assumption~\ref{assump:linear_mdp} holds and Eq.\eqref{eq:lower_bound_logit} also holds.

 Let $\mathbb D_\gamma$ denote the probability measure induced by algorithm $\mathbb A$
under the instance parameterized by $\gamma$. By Theorem 24.1 in (Lattimore and Szepesv{\'a}r,~\citeyear{lattimore2020bandit}) and Eq.~\eqref{eq:lower_bound_logit}, we have for any $\gamma,\gamma'\in\{-1,1\}^{d-1}$ that

\begin{align}
\operatorname{KL}(\mathbb{D}_\gamma,\mathbb{D}_{\gamma'})
&=
\E_\gamma\Bigg[
\sum_{t=1}^{Km}
\operatorname{KL}\Bigg(
\cB\!\left(
    \sig\!\left(
        \frac{H}{m}
        \left[
            r_{\max}-(d-1)\Delta
            +\Delta\langle\bar A_t,\gamma\rangle
        \right]
    \right)
\right),
\notag\\[-2pt]
&\qquad\qquad
\cB\!\left(
    \sig\!\left(
        \frac{H}{m}
        \left[
            r_{\max}-(d-1)\Delta
            +\Delta\langle\bar A_t,\gamma'\rangle
        \right]
    \right)
\right)
\Bigg)
\Bigg]
\notag\\
&\overset{\textup{(a)}}{\leq}
\frac{1}{2}
\sig'\!\left(
    \frac{H}{m}r_{\max}-\frac{1}{2}
\right)
\sum_{t=1}^{Km}
\E_\gamma\left[
    \left(
        \frac{H\Delta}{m}
        \langle\bar A_t,\gamma-\gamma'\rangle
    \right)^2
\right],
\label{eq:hyer_KL_Bi}
\end{align}
where inequality (a) follows from $\operatorname{KL}(\mathcal B(\sig(x)),\mathcal B(\sig(y)))\le\frac12 \sup_{u\in[x,y]}\sig'(u)(x-y)^2$. By Eq.\eqref{eq:lower_bound_logit}, both logits are at least $\frac{H}{m}r_{\max}-\frac12.$ When \(r_{\max}\ge 1/2\), this lower bound is nonnegative. Hence, using the monotonicity of \(\sig'\) on \([0,\infty)\), we get the claimed bound.

For $i \in [d-1]$ and $\gamma \in \{-1,1\}^{d-1}$, we define 
\begin{align*}
    p_{\gamma,i} = \mathbb D_\gamma \left(\sum_{t=1}^{Km}\indicator\{\operatorname{sign}(\bar A_{t,i})\neq\operatorname{sign}(\gamma_i) \}\ge \frac{Km}{2}\right)
\end{align*}

Now fixing $i \in [d-1]$ and $\gamma \in \{-1,1\}^{d-1}$, let $\gamma'_j = \gamma_j$ for $j\neq i$ and $\gamma'_i = -\gamma_i$. Using Eq.\eqref{eq:hyer_KL_Bi}, we have 
\begin{align}
     \operatorname{KL}(\mathbb D_\gamma,\mathbb D_{\gamma'}) &\le \frac{1}{2}\sig'\!\left(\frac{H}{m}r_{\max}-\frac{1}{2}\right)\sum_{t=1}^{Km}\mathbb{E}_{\gamma} \left[4\left(\frac{H}{m}\right)^2\Delta^2\bar A_{t,i}^2\right] \notag
     \\
     & \le 2\sig'\!\left(\frac{H}{m}r_{\max}-\frac{1}{2}\right)Km\left(\frac{H}{m}\right)^2\Delta^2\notag
     \\&\le
2\sig'\!\left(\frac{H}{m}r_{\max}-\frac{1}{2}\right)
Km\left(\frac{H}{m}\right)^2
\cdot
\frac{
1
}{
64\left(\frac{H}{m}\right)^2
Km\,\sig'\!\left(\frac{H}{m}r_{\max}-\frac{1}{2}\right)
}.\notag
    \\&\le \frac{1}{32}  \label{eq:KL_up}
\end{align}
 Using the Bretagnolle–Huber inequality and Eq.\eqref{eq:KL_up}, we have 
\begin{align*}
    p_{\gamma,i} +p_{\gamma',i} \ge \frac{1}{2}\exp\left(-\operatorname{KL}(\mathbb D_\gamma,\mathbb D_{\gamma'})\right) \ge \frac{1}{2}\exp\left(-\frac{1}{32}\right)
\end{align*}
Then taking sum over $i \in [d-1]$ and taking average over $\gamma \in \{-1,1\}^{d-1}$, we obtain
\begin{align*}
    \sum_{\gamma\in\{-1,1\}^{d-1}}\frac{1}{2^{d-1}}\sum_{i=1}^{d-1} p_{\gamma,i}\ge \frac{d-1}{4}\exp\left(-\frac{1}{32}\right)
\end{align*}

This show that there exists a $\gamma \in \{-1,1\}^{d-1}$ such that $\sum_{i=1}^{d-1} p_{\gamma,i}\ge \frac{d-1}{4}\exp\left(-\frac{1}{32}\right)$.By the definition of $p_{\gamma,i}$ and our construction, the regret for this $\gamma$ is 
\begin{align*}
\mathbb{E}\left[\cR(K)\right]
&=
\mathbb{E} \left[\sum_{k=1}^K\left(V^* - V^{\pi_k} \right) \right]
\\
&=
\mathbb{E}_{\gamma}
\left[
\sum_{k=1}^{K}
\sum_{h=1}^{H}
\Delta
\sum_{i=1}^{d-1}
\left(
\operatorname{sign}(\gamma_i) - a_{h,i}^{k}
\right)
\gamma_i
\right]
\\
&\ge
\frac{H}{m}\cdot\Delta\cdot
\sum_{i=1}^{d-1}
\mathbb{E}_{\gamma}
\left[
\sum_{t=1}^{Km}
\mathbf{1}
\left\{
\operatorname{sign}\!\left(\bar A_{t,i}\right)
\neq
\gamma_i
\right\}
\right]
\\
&\ge
\frac{H}{m}\cdot\Delta \cdot \frac{Km}{2} \sum_{i=1}^{d-1}\mathbb D_\gamma \left(\sum_{t=1}^{Km}\indicator\{\operatorname{sign}(\bar A_{t,i})\neq\operatorname{sign}(\gamma_i) \}\ge \frac{Km}{2}\right)
\\ &=\frac{H}{m}\cdot \Delta \cdot \frac{Km}{2}\sum_{i=1}^{d-1}p_{\gamma,i}
\\ &
\ge
\frac{H}{m}
\cdot
\frac{
1
}{
8\cdot \frac{H}{m}
\sqrt{
Km\,\sig'\!\left(\frac{H}{m}r_{\max}-\frac{1}{2}\right)
}
}
\cdot
\frac{Km(d-1)}{8}
\exp\!\left(-\frac{1}{32}\right)
\\&=
\frac{(d-1)}{64}
\exp\!\left(-\frac{1}{32}\right)
\sqrt{
\frac{Km}{
\sig'\!\left(\frac{H}{m}r_{\max}-\frac{1}{2}\right)
}
}.
\end{align*}
Therefore, we can construct a Linear MDP instance satisfying the assumptions, whose state space consists of $N$ states and whose action space has cardinality \(2^{d-1}\). 

For this instance, when $K
\ge
\frac{(d-1)^2}
{
m\,\sig'\!\left(\frac{H}{m}r_{\max}-\frac12\right)
}\cdot
\max\left\{
\frac{m^2}{64H^2r_{\max}^2},
\frac14
\right\}$ and $r_{\max}\ge \frac{1}{2}$, we have
\begin{align*}
    \cR(K) =
\Omega\!\left(
\exp\!\left(\frac{Hr_{\max}}{2m}\right)d
\sqrt{
Km
}
\right).
\end{align*}
\end{proof}

\section{Proofs for Fixed Equal-Length Sum Feedback with Known Transitions}\label{secC}

\subsection{Proof for the Regret Upper Bound }
In this section, we prove the regret upper bound for fixed equal-length sum feedback with known transitions.

Define event
\begin{align}
	& \cJ:=\Bigg\{ \nbr{ \sum_{k=1}^{K_0} \sbr{ \sum_{i=1}^{m} \phi^{\tau^k_i} (\phi^{\tau^k_i})^\top  - \ex_{\tau_i \sim \pi^k}\mbr{\sum_{i=1}^{m} \phi(\tau_i) \phi(\tau_i)^\top} } } 
	\nonumber\\
	&\leq \frac{4H^2}{m} \sqrt{K_0\log\sbr{\frac{2d}{\delta'}}}  + \frac{4H^2}{m} \log\sbr{\frac{2d}{\delta'}} \Bigg\} .
\end{align}

\begin{lemma}[Concentration of Initial Sampling] 
	It holds that
	\begin{align*}
		\Pr \mbr{ \cJ } \geq 1-\delta' .
	\end{align*}
\end{lemma}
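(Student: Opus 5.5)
We will prove the bound with the matrix Bernstein inequality (Tropp's inequality for sums of independent, zero-mean, bounded random symmetric matrices). For each exploration episode $k\in[K_0]$, define
\[
X_k \coloneqq \sum_{i=1}^{m}\phi^{\tau^k_i}(\phi^{\tau^k_i})^\top-\ex_{\tau_i\sim\pi^k}\mbr{\sum_{i=1}^{m}\phi(\tau_i)\phi(\tau_i)^\top}.
\]
The policies $\pi^1,\dots,\pi^{K_0}$ are produced by $\round$ from $w^*$ before any interaction, so they are deterministic. The trajectories $\tau^1,\dots,\tau^{K_0}$ are therefore independent, and each $X_k$ is a zero-mean symmetric $d\times d$ matrix. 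If one prefers not to rely on this independence, the matrix Freedman inequality applied to the martingale difference sequence $\{X_k\}$ gives the same conclusion.

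\textbf{Uniform bound.} Since $\|\phi^{\tau_i^k}\|_2\le H/m$, we have $0\preceq\sum_{i=1}^m\phi^{\tau^k_i}(\phi^{\tau^k_i})^\top\preceq m(H/m)^2 I=(H^2/m)I$. The same bound holds for its expectation. Hence $\|X_k\|\le H^2/m=:L$, because the difference of two PSD matrices with spectral norm at most $L$ has spectral norm at most $L$.

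\textbf{Variance proxy.} We need $\|\sum_{k}\ex[X_k^2]\|$. Since $\ex[X_k^2]\preceq\ex[Y_k^2]$ for $Y_k$ the random PSD part, and $Y_k^2\preceq\|Y_k\|\,Y_k\preceq L^2 I$, we get $\|\sum_{k=1}^{K_0}\ex[X_k^2]\|\le K_0L^2$.

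\textbf{Applying Bernstein.} Matrix Bernstein gives, for all $t\ge 0$,
\[
\Pr\!\left[\Big\|\sum_k X_k\Big\|\ge t\right]\le 2d\exp\!\left(\frac{-t^2/2}{K_0L^2+Lt/3}\right).
\]
Setting the right-hand side equal to $\delta'$ and solving the quadratic in $t$ yields
\[
t\le \sqrt{2K_0L^2\log(2d/\delta')}+\tfrac{2}{3}L\log(2d/\delta').
\]
This is at most $4L\sqrt{K_0\log(2d/\delta')}+4L\log(2d/\delta')$, which is exactly the bound in event $\cJ$ with $L=H^2/m$.

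The only step requiring care is justifying independence, or the martingale structure, of the $X_k$ given that the $\round$ output is fixed in advance. The remaining constants are loose enough to absorb slack.
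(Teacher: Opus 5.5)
Your proposal is correct and follows essentially the same route as the paper: treat the exploration policies $\pi^1,\dots,\pi^{K_0}$ and $K_0$ as fixed before sampling, bound each episode's segment-covariance sum by $H^2/m$ in spectral norm, and apply the matrix Bernstein inequality (Theorem 6.1.1 of Tropp). You also make explicit two steps the paper leaves implicit: the variance proxy $K_0(H^2/m)^2$, and the inversion of the tail bound that yields the constants $4H^2/m$ in $\cJ$.
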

\begin{proof}
	Note that the policies $\pi^1,\dots,\pi^{K_0}$ and the number $K_0$ are fixed before sampling. Thus, conditional on these policies, the only randomness comes from the trajectories generated under them. For each $k\in[K_0]$, we have
	$\ex[\sum_{i=1}^{m} \phi^{\tau^k_i} (\phi^{\tau^k_i})^\top]=\ex_{\tau_i \sim \pi^k}\mbr{\sum_{i=1}^{m} \phi(\tau_i) \phi(\tau_i)^\top}$, and
	$\|\sum_{i=1}^{m} \phi^{\tau^k_i} (\phi^{\tau^k_i})^\top\| \leq \frac{H^2}{m}$.
	The lemma then follows from the matrix Bernstein inequality (Theorem 6.1.1 in Tropp et al. \citeyear{tropp2015introduction}).
\end{proof}
Recall that $z^*$ is the optimal value of
$$
z^*
:=
\min_{w\in\Delta_{\Pi}}
\left\|
\left(
\sum_{\pi\in\Pi} w(\pi)
\sum_{i=1}^{m}
\E_{\tau_i\sim\pi}
\left[
\phi^{\tau_i}(\phi^{\tau_i})^\top
\right]
\right)^{-1}
\right\|.
$$
Here the expectation is over the trajectory segment induced by policy $\pi$ and the transition kernels $P=\{P_h\}_{h=1}^H$.
Let $w^*$ be an optimal solution attaining $z^*$.

\begin{lemma}[E-optimal Design] \label{lemma:seg_e_optimal_design_d}
	Assume that event $\cJ$ holds. Then, we have
	\begin{align*}
		\nbr{ \sbr{\sum_{k=1}^{K_0} \sum_{i=1}^{m} \phi^{\tau^k_i} (\phi^{\tau^k_i})^\top}^{-1} } \leq \frac{1}{H^2} .
	\end{align*}
\end{lemma}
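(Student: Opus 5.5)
We want a lower bound on the minimum eigenvalue of the empirical design matrix $\sum_{k=1}^{K_0}\sum_{i=1}^{m}\phi^{\tau^k_i}(\phi^{\tau^k_i})^\top$, namely $\lambda_{\min}\ge H^2$. We compare it to its expectation under the rounded design, and compare that in turn to the optimal E-design value $1/z^*$. Write $M(\pi):=\sum_{i=1}^{m}\ex_{\tau_i\sim\pi}[\phi^{\tau_i}(\phi^{\tau_i})^\top]$. By definition of $z^*$ and $w^*$, $\lambda_{\min}\bigl(\sum_\pi w^*(\pi)M(\pi)\bigr)=1/z^*$.

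\textbf{Step 1 (rounding).} We invoke the guarantee of the rounding procedure $\round$ of Allen-Zhu et al. for the E-optimality (spectral) criterion. With $\gamma=1/10$ and $K_0\ge d/\gamma^2$, the output policies satisfy
\[
\Bigl\|\Bigl(\sum_{k=1}^{K_0}M(\pi^k)\Bigr)^{-1}\Bigr\|\le (1+\gamma)\,\Bigl\|\Bigl(K_0\sum_\pi w^*(\pi)M(\pi)\Bigr)^{-1}\Bigr\| = \frac{(1+\gamma)z^*}{K_0}.
\]
Equivalently, $\lambda_{\min}\bigl(\sum_k M(\pi^k)\bigr)\ge K_0/((1+\gamma)z^*)$. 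This is precisely why the condition $K_0\ge d/\gamma^2$ appears in Line~\ref{line:K_0}.

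\textbf{Step 2 (concentration).} On event $\cJ$, Weyl's inequality gives
\[
\lambda_{\min}\Bigl(\sum_{k,i}\phi^{\tau^k_i}(\phi^{\tau^k_i})^\top\Bigr)\ge \frac{K_0}{(1+\gamma)z^*}-\frac{4H^2}{m}\sqrt{K_0\log(2d/\delta')}-\frac{4H^2}{m}\log(2d/\delta').
\]

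\textbf{Step 3 (arithmetic).} It remains to show that the right-hand side is at least $H^2$ under $K_0\ge 26(1+\gamma)^2(z^*)^2H^4\log(2d/\delta')$. Set $L=\log(2d/\delta')$ and $a=(1+\gamma)z^*$, so that $\sqrt{K_0}\ge\sqrt{26}\,aH^2\sqrt L$. Then
\[
\frac{4H^2}{m}\sqrt{K_0L}\le 4H^2\sqrt{K_0 L}\le \frac{4}{\sqrt{26}}\cdot\frac{K_0}{a},
\]
and the remaining two terms $\frac{4H^2}{m}L$ and $H^2$ are each a small fraction of $K_0/a$. To bound them, we use that $z^*\ge$ roughly $1/H^2$, since $\mathrm{tr}\,M(\pi)\le H^2/m\cdot m$ gives $\lambda_{\min}\le H^2/d$ and hence $aH^2\ge d\ge1$.

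\textbf{Main obstacle.} The delicate point is getting the constants to close: the fractions $4/\sqrt{26}$, plus the terms of order $1/(26aH^2)$, plus $H^2 a/K_0$, must sum to at most $1$. This requires the lower bound $z^*H^2\ge 1$, which we derive from the trace bound, and we also use $m\ge1$. A second point to verify is that the Allen-Zhu rounding guarantee applies to the spectral-norm objective with our $\gamma$ and threshold on $K_0$.
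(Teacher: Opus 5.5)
Your proposal is correct and follows the paper's proof essentially step for step. Both arguments use the $\round$ guarantee with $K_0\ge d/\gamma^2$ to get $\lambda_{\min}\bigl(\sum_k M(\pi^k)\bigr)\ge K_0/((1+\gamma)z^*)$, then Weyl's inequality on $\cJ$, then close the constants using $K_0\ge 26(1+\gamma)^2(z^*)^2H^4\log(2d/\delta')$ together with $z^*H^2\gtrsim 1$.

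The only cosmetic differences are these. You bound the three subtracted terms directly as fractions of $K_0/((1+\gamma)z^*)$, roughly $4/\sqrt{26}+4/26+1/(26\log 2)<1$, instead of solving the quadratic in $\sqrt{K_0}$; note explicitly $\log(2d/\delta')\ge\log 2$ for the last term. You also obtain $z^*\ge d/H^2$ from a trace bound, whereas the paper gets $z^*\ge 1/H^2$ from the operator-norm bound.
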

\begin{proof}
	By the guarantee of the rounding procedure $\round$ (Theorem 1.1 in Allen-Zhu et al. \citeyear{allen2021near}) and the fact that $K_0 \geq \frac{d}{\gamma^2}$, we have
	\begin{align*}
		&\quad \nbr{ \sbr{\sum_{k=1}^{K_0}  \ex_{\tau_i \sim \pi^k}\mbr{\sum_{i=1}^{m} \phi(\tau_i) \phi(\tau_i)^\top} }^{-1} }
		\\
		&\leq (1+\gamma) \nbr{  \sbr{ K_0 \sum_{\pi \in \Pi} w^*(\pi) \cdot  \ex_{\tau_i \sim \pi^k}\mbr{\sum_{i=1}^{m} \phi(\tau_i) \phi(\tau_i)^\top} }^{-1} }
		\\
		&\leq \frac{(1+\gamma) z^*}{K_0} .
	\end{align*}

	Let $\sigma_{\min}(\cdot)$ denote the minimum eigenvalue. We next lower bound the minimum eigenvalue of the empirical design matrix obtained from the initial sampling phase. On the event $\cJ$, we have
	\begin{align}
		&\quad\ \sigma_{\min}\sbr{ \sum_{k=1}^{K_0} \sum_{i=1}^{m} \phi^{\tau^k_i} (\phi^{\tau^k_i})^\top }
		\nonumber\\
		&= \sigma_{\min}\!\sbr{ \sum_{k=1}^{K_0} \ex_{\tau_i \sim \pi^k}\mbr{\sum_{i=1}^{m} \phi(\tau_i) \phi(\tau_i)^{\!\top}} \!+\! \sum_{k=1}^{K_0} \sum_{i=1}^{m} \phi^{\tau^k_i} (\phi^{\tau^k_i})^{\!\top} \!-\! \sum_{k=1}^{K_0} \ex_{\tau_i \sim \pi^k}\!\mbr{\sum_{i=1}^{m} \phi(\tau_i) \phi(\tau_i)^{\!\top}} }
		\nonumber\\
		&\geq \!\sigma_{\min}\!\sbr{ \sum_{k=1}^{K_0} \ex_{\tau_i \sim \pi^k}\!\mbr{\sum_{i=1}^{m} \phi(\tau_i) \phi(\tau_i)^{\!\top}} } \!-\! \nbr{ \sum_{k=1}^{K_0} \sum_{i=1}^{m} \phi^{\tau^k_i} (\phi^{\tau^k_i})^{\!\top} \!-\! \sum_{k=1}^{K_0} \ex_{\tau_i \sim \pi^k}\!\mbr{\sum_{i=1}^{m} \phi(\tau_i) \phi(\tau_i)^{\!\top}} }
		\nonumber\\
		&\geq \frac{ K_0 }{(1+\gamma) z^*} - \frac{4H^2}{m}\sqrt{\log\sbr{\frac{2d}{\delta'}}} \cdot \sqrt{K_0} - \frac{4H^2}{m} \log\sbr{\frac{2d}{\delta'}} . \label{eq:seg_minimum_eig_derivation_d}
	\end{align}

	Let $x=\sqrt{K_0}$ and define
	\begin{align*}
		f(x) = \frac{ 1 }{(1+\gamma) z^*} \cdot x^2 - \frac{4H^2}{m} \sqrt{\log\sbr{\frac{2d}{\delta'}}} \cdot x - \frac{4H^2}{m} \log\sbr{\frac{2d}{\delta'}} - H^2 .
	\end{align*}
	
	By the elementary property of quadratic functions, if
	\begin{align}
		x \geq \frac{ \frac{4H^2}{m} \sqrt{\log\sbr{\frac{2d}{\delta'}}} + \sqrt{ \sbr{ \frac{4H^2}{m} \sqrt{\log\sbr{\frac{2d}{\delta'}}} }^2 + 4 \cdot \frac{ 1 }{(1+\gamma) z^*} \sbr{\frac{4H^2}{m} \log\sbr{\frac{2d}{\delta'}} + H^2 } } }{ 2 \cdot \frac{ 1 }{(1+\gamma) z^*} } , \label{eq:condition_quadratic_function_d}
	\end{align} 
	then we have $f(x)\geq0$.
	
	Thus, to ensure that Eq.~\eqref{eq:condition_quadratic_function_d} holds, it suffices to set 
	\begin{align*}
		K_0 &\geq \frac{(1+\gamma)^2 (z^*)^2 }{4} \cdot \Bigg( 2 \cdot \sbr{\frac{4H^2}{m}\sqrt{\log\sbr{\frac{2d}{\delta'}}}}^2 + 2 \cdot \sbr{ \frac{4H^2}{m}\sqrt{\log\sbr{\frac{2d}{\delta'}}} }^2 \\&\quad +  \frac{8  }{(1+\gamma) z^*} \cdot 5H^2 \log\sbr{\frac{2d}{\delta'}}   \Bigg)
		\\
		&=  \sbr{ \frac{ 16H^4 (1+\gamma)^2 (z^*)^2 }{m^2 } + 10 H^2 (1+\gamma) z^* } \log\sbr{\frac{2d}{\delta'}} .
	\end{align*}
	
	Furthermore, since $\|\sum_{\pi \in \Pi} w^*(\pi)  \ex_{\tau_i \sim \pi^k}\mbr{\sum_{i=1}^{m} \phi(\tau_i) \phi(\tau_i)^\top}\|_2 \leq H^2$ and hence $z^* \geq \frac{1}{H^2}$, to make the right-hand side of Eq.~\eqref{eq:seg_minimum_eig_derivation_d} no smaller than $H^2$, it suffices to set
	\begin{align*}
		K_0 \geq 26H^4 (1+\gamma)^2 (z^*)^2  \log\sbr{\frac{2d}{\delta'}} .
	\end{align*}
	
	Therefore, combining the definition of $K_0$ with Eq.~\eqref{eq:seg_minimum_eig_derivation_d}, we obtain
	\begin{align*}
\sigma_{\min}
\left(
\sum_{k=1}^{K_0}
\sum_{i=1}^m
\phi^{\tau_i^k}
(\phi^{\tau_i^k})^\top
\right)
\ge H^2.
\end{align*}
	
    which completes the proof.
\end{proof}

\begin{lemma} \label{lemma:log_det_seg_d}
	For any $k>0$,
	\begin{align*}
		\sum_{k'=1}^{k} \log\sbr{ 1+\sum_{i=1}^{m} \nbr{\phi^{\tau^{k'}_i}}_{\Sigma_{k'-1}^{-1}}^2 } \le \log \sbr{ \frac{\det(\Sigma_k)}{\det(\lambda I)} }
		\leq d \log \sbr{ 1+ \frac{  kH^2 }{ \lambda d m } } .
	\end{align*}
\end{lemma}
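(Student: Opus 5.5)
The plan is the standard episode-level elliptical potential argument. Here $\Sigma_k=\lambda I+\sum_{k'\le k}\sum_{i=1}^m \phi^{\tau^{k'}_i}(\phi^{\tau^{k'}_i})^\top$, so the update $\Sigma_{k'}=\Sigma_{k'-1}+\sum_{i=1}^m \phi^{\tau^{k'}_i}(\phi^{\tau^{k'}_i})^\top$ adds $m$ rank-one terms at once. Write $M_{k'}:=\Sigma_{k'-1}^{-1/2}\bigl(\sum_{i=1}^m \phi^{\tau^{k'}_i}(\phi^{\tau^{k'}_i})^\top\bigr)\Sigma_{k'-1}^{-1/2}\succeq 0$. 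Then
\begin{align*}
\det(\Sigma_{k'})=\det(\Sigma_{k'-1})\det(I+M_{k'}).
\end{align*}

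\textbf{First inequality.} For a PSD matrix $M$ with eigenvalues $\mu_j\ge 0$, we have $\det(I+M)=\prod_j(1+\mu_j)\ge 1+\sum_j\mu_j=1+\trace(M)$. Moreover,
\begin{align*}
\trace(M_{k'})=\sum_{i=1}^m \nbr{\phi^{\tau^{k'}_i}}_{\Sigma_{k'-1}^{-1}}^2 .
\end{align*}
Taking logarithms and telescoping over $k'=1,\dots,k$ (with $\Sigma_0=\lambda I$) gives
\begin{align*}
\sum_{k'=1}^{k}\log\Bigl(1+\sum_{i=1}^m\nbr{\phi^{\tau^{k'}_i}}_{\Sigma_{k'-1}^{-1}}^2\Bigr)\le\sum_{k'=1}^k\log\frac{\det(\Sigma_{k'})}{\det(\Sigma_{k'-1})}=\log\frac{\det(\Sigma_k)}{\det(\lambda I)}.
\end{align*}

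\textbf{Second inequality.} Apply AM--GM to the eigenvalues of $\Sigma_k$, exactly as in Lemma~\ref{lemma:lin_Lambda_k}: $\det(\Sigma_k)\le(\trace(\Sigma_k)/d)^d$. Each segment feature satisfies $\|\phi^{\tau^{k'}_i}\|_2\le H/m$, so
\begin{align*}
\trace(\Sigma_k)\le \lambda d+km(H/m)^2=\lambda d+kH^2/m .
\end{align*}
Hence $\det(\Sigma_k)/\lambda^d\le\bigl(1+\tfrac{kH^2}{\lambda dm}\bigr)^d$, and taking logarithms gives the claim.

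No step is a real obstacle. The only point needing care is the episode-level determinant inequality $\det(I+M)\ge1+\trace(M)$: it replaces the usual rank-one identity because all $m$ segments of an episode share the pre-episode matrix $\Sigma_{k'-1}$. It follows directly from expanding the product of $(1+\mu_j)$ with nonnegative $\mu_j$.
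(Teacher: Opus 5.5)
Your proposal is correct and follows the paper's proof essentially step for step. The paper also factors $\det(\Sigma_k)=\det(\Sigma_{k-1})\det\bigl(I+\sum_{i}\Sigma_{k-1}^{-1/2}\phi^{\tau^k_i}(\phi^{\tau^k_i})^\top\Sigma_{k-1}^{-1/2}\bigr)$, uses $\det(I+A)\ge 1+\trace(A)$, and telescopes. It then bounds $\det(\Sigma_k)$ by AM--GM with the same trace bound $\lambda d+km(H/m)^2$.
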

\begin{proof}
	For any $k>0$, it holds that
	\begin{align*}
		\det(\Sigma_k)&\overset{\textup{(a)}}{=}\det\sbr{\Sigma_{k-1}+ \sum_{i=1}^{m} \phi^{\tau^k_i} (\phi^{\tau^k_i})^\top}
		\\
		&=\det(\Sigma_{k-1}) \det\sbr{ I + \sum_{i=1}^{m} (\Sigma_{k-1})^{-\frac{1}{2}}   \phi^{\tau^k_i} (\phi^{\tau^k_i})^\top  (\Sigma_{k-1})^{-\frac{1}{2}} }
		\\
		&\overset{\textup{(b)}}{\ge} \det(\Sigma_{k-1}) \sbr{ 1+ \sum_{i=1}^{m} \nbr{\phi^{\tau^k_i}}_{\Sigma_{k-1}^{-1}}^2 }
		\\
		&\overset{\textup{(c)}}{\geq} \det(\lambda I) \prod_{k'=1}^{k} \sbr{ 1+ \sum_{i=1}^{m} \nbr{\phi^{\tau^{k'}_i}}_{\Sigma_{k'-1}^{-1}}^2 } ,
	\end{align*}
    where inequality (b) uses the fact that $\det(I+A)\geq 1+\trace(A)$ for any positive semi-definite matrix $A$, and inquality (c) comes from repeatedly applying steps (a)-(b).
    
	Taking the logarithm on both sides, we have
	\begin{align*}
		\log \det(\Sigma_k) \ge \log \det(\lambda I) +  \sum_{k'=1}^{k} \log\sbr{ 1+ \sum_{i=1}^{m} \nbr{\phi^{\tau^{k'}_i}}_{\Sigma_{k'-1}^{-1}}^2 } .
	\end{align*}
	Then,
	\begin{align*}
		\sum_{k'=1}^{k} \log\sbr{ 1+\sum_{i=1}^{m} \nbr{\phi^{\tau^{k'}_i}}_{\Sigma_{k'-1}^{-1}}^2 } &\le \log \sbr{ \frac{\det(\Sigma_k)}{\det(\lambda I)} }
		\\
		&\overset{\textup{(d)}}{\leq} \log \sbr{ \frac{\sbr{ \frac{\trace(\Sigma_k)}{d}}^{d} }{ \lambda^{d} } }
		\\
		&= d \log \sbr{ \frac{  \trace(\Sigma_k) }{ \lambda d } }
		\\
		&\leq d \log \sbr{ \frac{ \lambda d + km \cdot \frac{H^2}{m^2} }{ \lambda d } }
		\\
		&= d \log \sbr{ 1+\frac{ kH^2 }{ \lambda d m } } ,
	\end{align*}
	where inequality (d) uses the arithmetic mean-geometric mean inequality.
    	
    which completes the proof.
\end{proof}

\begin{lemma}[Elliptical Potential with Optimized Initialization]\label{lemma:ellip_potent_with_opt_init_seg_d}
	Assume that event $\cJ$ holds. 
	Then, for any $k \geq K_0+1$,
	\begin{align*}
		\sum_{i=1}^{m} \nbr{ \phi^{\tau^k_i}}_{\Sigma_{k-1}^{-1}}^2 \leq 1 .
	\end{align*}
	Furthermore, for any $K\geq K_0+1$,
	\begin{align*}
		\sum_{k=K_0+1}^{K} \sum_{i=1}^{m} \nbr{ \phi^{\tau^k_i}}_{\Sigma_{k-1}^{-1}} \leq \sqrt{2Km d \log \sbr{ 1+ \frac{  KH^2 }{ \lambda d m } }} .
	\end{align*}
\end{lemma}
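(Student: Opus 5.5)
The first claim will follow from the E-optimal design guarantee (Lemma~\ref{lemma:seg_e_optimal_design_d}) together with the per-episode norm bound $\|\phi^{\tau^k_i}\|_2\le H/m$. For $k\ge K_0+1$ we have
\[
\Sigma_{k-1}\succeq \sum_{k'=1}^{K_0}\sum_{i=1}^m \phi^{\tau^{k'}_i}(\phi^{\tau^{k'}_i})^\top ,
\]
because $\lambda I$ and the additional rank-one terms are positive semidefinite. On event $\cJ$, Lemma~\ref{lemma:seg_e_optimal_design_d} gives that the minimum eigenvalue of the right-hand side is at least $H^2$. Hence $\|\Sigma_{k-1}^{-1}\|\le 1/H^2$. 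It follows that
\[
\sum_{i=1}^m \nbr{\phi^{\tau^k_i}}_{\Sigma_{k-1}^{-1}}^2\le \frac{1}{H^2}\sum_{i=1}^m \nbr{\phi^{\tau^k_i}}_2^2\le \frac{1}{H^2}\cdot m\cdot \frac{H^2}{m^2}=\frac1m\le 1,
\]
which is the first claim (with room to spare).

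For the second claim, I will argue as in Lemma~\ref{lemma:lin_sum_phi_seg_binary}, using the fact that the per-episode quadratic sum now lies in $[0,1]$. First, Cauchy--Schwarz over the at most $Km$ pairs $(k,i)$ with $k\in\{K_0+1,\dots,K\}$ gives
\[
\sum_{k=K_0+1}^K\sum_{i=1}^m \nbr{\phi^{\tau^k_i}}_{\Sigma_{k-1}^{-1}}\le \sqrt{Km\sum_{k=K_0+1}^K\sum_{i=1}^m \nbr{\phi^{\tau^k_i}}_{\Sigma_{k-1}^{-1}}^2}.
\]
Next, set $x_k=\sum_i\|\phi^{\tau^k_i}\|^2_{\Sigma_{k-1}^{-1}}\in[0,1]$ and apply the elementary inequality $x\le 2\log(1+x)$ for $x\in[0,1]$. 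This gives $x_k\le 2\log(1+x_k)$. Summing over $k\ge K_0+1$ and extending the sum to all $k'\le K$ (each added term is nonnegative), Lemma~\ref{lemma:log_det_seg_d} bounds $\sum_k\log(1+x_k)$ by $d\log(1+KH^2/(\lambda d m))$. Substituting this yields $\sqrt{2Kmd\log(1+KH^2/(\lambda dm))}$, as claimed.

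The only point needing care is the first step. Monotonicity of the inverse under the Loewner order must be applied to the data-only matrix. The $\lambda I$ term may be discarded because it only increases $\Sigma_{k-1}$, so the eigenvalue lower bound $H^2$ from the exploration phase carries over directly. Everything else is routine.
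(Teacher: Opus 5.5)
Your proposal is correct and follows essentially the same argument as the paper. For the first claim you dominate $\Sigma_{k-1}$ by the exploration-phase data matrix and use the eigenvalue bound $H^2$ from Lemma~\ref{lemma:seg_e_optimal_design_d}. For the second you apply Cauchy--Schwarz, the inequality $x\le 2\log(1+x)$ on $[0,1]$, extension of the sum to all episodes, and Lemma~\ref{lemma:log_det_seg_d}.
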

\begin{proof}
	By Lemma~\ref{lemma:seg_e_optimal_design_d}, for any $k\geq K_0+1$, we have
	\begin{align*}
		\sum_{i=1}^{m} \nbr{ \phi^{\tau^k_i}}_{\Sigma_{k-1}^{-1}}^2
		&=
		\sum_{i=1}^{m}
		\nbr{ \phi^{\tau^k_i}}_{
		\sbr{
			\lambda I
			+
			\sum_{k'=1}^{K_0}
			\sum_{i=1}^{m}
			\phi^{\tau_i^{k'}}
			(\phi^{\tau_i^{k'}})^\top
			+
			\sum_{k'=K_0+1}^{k-1}
			\sum_{i=1}^{m}
			\phi^{\tau_i^{k'}}
			(\phi^{\tau_i^{k'}})^\top
		}^{-1}
		}^2
		\\
		&\leq
		\sum_{i=1}^{m}
		\nbr{ \phi^{\tau^k_i} }_{
		\sbr{
			\sum_{k'=1}^{K_0}
			\sum_{i=1}^{m}
			\phi^{\tau_i^{k'}}
			(\phi^{\tau_i^{k'}})^\top
		}^{-1}
		}^2
		\\
		&\leq
		m \cdot \frac{H^2}{m^2} \cdot \frac{1}{H^2}
		\\
		&\leq 1 .
	\end{align*}
	
	We then bound the cumulative elliptical norm by Cauchy--Schwarz, 
	\begin{align*}
		\sum_{k=K_0+1}^{K} \sum_{i=1}^{m} \nbr{ \phi^{\tau^k_i}}_{\Sigma_{k-1}^{-1}}
		&\leq
		\sqrt{
			K m
			\sum_{k=K_0+1}^{K}
			\sum_{i=1}^{m}
			\nbr{ \phi^{\tau^k_i}}_{\Sigma_{k-1}^{-1}}^2
		}
		\\
		&\overset{\textup{(a)}}{\leq}
		\sqrt{
			K m \cdot 2
			\sum_{k=K_0+1}^{K}
			\log \sbr{
				1
				+
				\sum_{i=1}^{m}
				\nbr{ \phi^{\tau^k_i}}_{\Sigma_{k-1}^{-1}}^2
			}
		}
		\\
		&\leq
		\sqrt{
			K m \cdot 2
			\sum_{k=1}^{K}
			\log \sbr{
				1
				+
				\sum_{i=1}^{m}
				\nbr{ \phi^{\tau^k_i}}_{\Sigma_{k-1}^{-1}}^2
			}
		}
		\\
		&\overset{\textup{(b)}}{\leq}
		\sqrt{
			2K m d
			\log \sbr{
				1
				+
				\frac{K H^2}{\lambda d m}
			}
		} ,
	\end{align*}
	where inequality (a) uses the fact that $x \leq 2\log(1+x)$ for any $x\in[0,1]$, and inequality (b) follows from Lemma~\ref{lemma:log_det_seg_d}.
    	
    which completes the proof.
\end{proof}
Define event 
\begin{align}
	\cK\!:=\!\lbr{ \nbr{\hat{\theta}_{k} \!-\! \theta^*}_{\Sigma_{k}} \!\leq\! \sqrt{ \frac{H d}{m} \left( \log\sbr{1 \!+\! \frac{kH^2}{\lambda dm}} \!+\! 2\log\sbr{\frac{1}{\delta'}} \right) } \!+\! r_{\max}\sqrt{\lambda d} ,\  \forall k>0 }\! . \label{eq:concentration_theta_seg_d}
\end{align}
\begin{lemma}[Concentration of $\hat{\theta}_k$ under Sum Feedback] \label{lemma:concentration_sum_d}
	It holds that
	\begin{align*}
		\Pr \mbr{ \cK } \geq 1-\delta' .
	\end{align*}
\end{lemma}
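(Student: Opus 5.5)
The plan is to reduce the claim to the self-normalized concentration inequality for vector-valued martingales of Abbasi-Yadkori et al.~(\citeyear{abbasi2011improved}, Theorem~1). To do so, we write the segment-level data as a single sequence of $km$ observations indexed by $(k',i)$ in lexicographic order. The filtration is $\widetilde F_{i,k'}$, the information available immediately before $R_i^{k'}$ is revealed. With respect to this filtration the segment feature $\phi^{\tau_i^{k'}}$ is measurable, because the trajectory up to the end of segment $i$ is observed before the segment feedback. The segment noise is
\[
\eta_{k',i}\coloneqq R_i^{k'}-(\phi^{\tau_i^{k'}})^\top\theta^*=\sum_{t}\varepsilon_t^{k'}.
\]
It is a sum of $H/m$ independent $1$-sub-Gaussian terms, each independent of the transitions. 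Hence $\eta_{k',i}$ is conditionally $\sqrt{H/m}$-sub-Gaussian given $\widetilde F_{i,k'}$. The subtle point is that the $\varepsilon_t$ are not revealed step by step, so their conditional independence from the features inside the segment has to be argued. I would justify it through the modeling assumption that the reward noise is independent of the transition distribution.

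Next comes the usual ridge regression decomposition. Set $S_k\coloneqq\sum_{k'\le k}\sum_i \eta_{k',i}\phi^{\tau_i^{k'}}$. Then
\[
\hat\theta_k-\theta^*=\Sigma_k^{-1}S_k-\lambda\Sigma_k^{-1}\theta^*,
\]
and therefore
\[
\|\hat\theta_k-\theta^*\|_{\Sigma_k}\le \|S_k\|_{\Sigma_k^{-1}}+\sqrt{\lambda}\,\|\theta^*\|_2\le \|S_k\|_{\Sigma_k^{-1}}+r_{\max}\sqrt{\lambda d},
\]
using $\|\theta^*\|_2\le r_{\max}\sqrt d$ from Assumption~\ref{assump:linear_mdp}. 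The self-normalized bound, applied with sub-Gaussian parameter $\sigma^2=H/m$, gives that with probability at least $1-\delta'$, simultaneously for all $k$,
\[
\|S_k\|_{\Sigma_k^{-1}}^2\le 2\frac{H}{m}\log\left(\frac{\det(\Sigma_k)^{1/2}\det(\lambda I)^{-1/2}}{\delta'}\right).
\]
It holds uniformly over all stopping times, hence for all $k>0$. The theorem is stated for a scalar-indexed sequence; since it is a uniform-in-time statement, it also holds at the episode boundaries $(k,m)$.

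Finally, the log-determinant is bounded by Lemma~\ref{lemma:log_det_seg_d}:
\[
\log\frac{\det\Sigma_k}{\det(\lambda I)}\le d\log\left(1+\frac{kH^2}{\lambda dm}\right).
\]
Substituting this gives
\[
\|S_k\|^2_{\Sigma_k^{-1}}\le \frac{H}{m}\left(d\log\left(1+\frac{kH^2}{\lambda dm}\right)+2\log\frac1{\delta'}\right)\le \frac{Hd}{m}\left(\log\left(1+\frac{kH^2}{\lambda dm}\right)+2\log\frac1{\delta'}\right),
\]
where the last step uses $d\ge1$. Taking square roots and adding the bias term $r_{\max}\sqrt{\lambda d}$ yields exactly the radius defining $\cK$, which proves $\Pr[\cK]\ge1-\delta'$.

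The main obstacle is the measurability and sub-Gaussianity step: $\eta_{k',i}$ must be conditionally $\sqrt{H/m}$-sub-Gaussian given a filtration with respect to which $\phi^{\tau_i^{k'}}$ is predictable. The rest is standard.
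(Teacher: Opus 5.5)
Your proposal is correct and follows the paper's route. The paper likewise treats each segment noise as conditionally $\sqrt{H/m}$-sub-Gaussian given the trajectory, bounds the regularization bias using $\|\theta^*\|_2\le r_{\max}\sqrt d$, and applies the self-normalized confidence bound of Abbasi-Yadkori et al.\ to the segment-level observations; your version simply spells out the filtration, the ridge decomposition, and the log-determinant bound via Lemma~\ref{lemma:log_det_seg_d}, which the paper's one-line proof leaves implicit.
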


\begin{proof}
	For each segment, conditional on the observed trajectory, the sum feedback is $\frac{H}{m}$-sub-Gaussian. Moreover, by Assumption~\ref{assump:linear_mdp}, we have $\|\theta^*\|\leq r_{\max}\sqrt{d}$. Therefore, applying the self-normalized concentration inequality in Lemma 2 of (Abbasi-Yadkori et al., \citeyear{abbasi2011improved}) to the segment-level observations. Hence, the event $\cK$ holds with probability at least $1-\delta'$.
\end{proof}

For any $k\geq1$, Recall that $\tau^k$ denotes the trajectory in episode $k$ and $F_k$ denotes the filtration that includes all information up to the end of episode $k$.
Then, $\pi^{k+1}$ is $F_k$-measurable.

Define event 
\begin{align}
	&\cF^{\summing}_{\textup{opt}}:=\Bigg\{ \abr{\sum_{k'=K_0+1}^{k} \sbr{ \ex_{\tau \sim \pi^{k'}}\mbr{ \nbr{ \phi^{\tau}}_{\Sigma_{k'-1}^{-1}} | F_{k'-1}} - \nbr{ \phi^{\tau^{k'}}}_{\Sigma_{k'-1}^{-1}} } } 
	\nonumber\\
	&\hspace*{5em} \leq 4\sqrt{ k \log\sbr{\frac{4k}{\delta'}} } ,\ \forall k \geq K_0+1 \Bigg\} . \label{eq:def_martingale_d}
\end{align}

\begin{lemma}
	It holds that
	\begin{align*}
		\Pr\mbr{ \left(\cF^{\summing}_{\textup{opt}}\right)^c \cap \cJ }
		\leq \delta' .
	\end{align*}
\end{lemma}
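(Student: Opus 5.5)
The plan is to prove the statement with a martingale concentration argument, Azuma--Hoeffding plus a union bound over $k$, in the same way as Lemma~\ref{lemma:lin_cF_binary}. The extra ingredient is that on the event $\cJ$ the increments are bounded by a constant, which is why the bound has no $H$ or $\lambda$ factor.

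First, define for $k\ge K_0+1$ the increments
\[
X_{k'} := \ex_{\tau\sim\pi^{k'}}\mbr{\nbr{\phi^{\tau}}_{\Sigma_{k'-1}^{-1}}\mid F_{k'-1}} - \nbr{\phi^{\tau^{k'}}}_{\Sigma_{k'-1}^{-1}} .
\]
Since $\pi^{k'}$ and $\Sigma_{k'-1}$ are $F_{k'-1}$-measurable and $\tau^{k'}\sim\pi^{k'}$, this is a martingale difference sequence with respect to $\{F_{k'}\}$. Next, bound the norm of each trajectory feature. Lemma~\ref{lemma:seg_e_optimal_design_d} gives $\Sigma_{k'-1}\succeq\sum_{k''\le K_0}\sum_i\phi^{\tau_i^{k''}}(\phi^{\tau_i^{k''}})^\top\succeq H^2 I$ on $\cJ$. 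Since $\|\phi^\tau\|_2\le H$, every trajectory (realized or not) satisfies $\nbr{\phi^\tau}_{\Sigma_{k'-1}^{-1}}\le 1$, and therefore $|X_{k'}|\le 2$.

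The main obstacle is that this increment bound holds only on $\cJ$, which is why the claim is stated for $(\cF^{\summing}_{\textup{opt}})^c\cap\cJ$. The event $\cJ$ is determined by the first $K_0$ episodes, so $\cJ\in F_{K_0}$, and this is what lets the conditioning go through. I would handle it by truncation: set $\widetilde X_{k'}:=X_{k'}\indicator\{\cJ\}$. Because $\indicator\{\cJ\}$ is $F_{K_0}\subseteq F_{k'-1}$-measurable for $k'\ge K_0+1$, $\widetilde X_{k'}$ is still a martingale difference. It is bounded by $2$ always, since on $\cJ^c$ it is zero.

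Finally, apply Azuma--Hoeffding to $\sum_{k'=K_0+1}^{k}\widetilde X_{k'}$ for each fixed $k$ with confidence $\delta'/(2k^2)$. This gives the deviation bound $\sqrt{2\cdot 4\,(k-K_0)\log(4k^2/\delta')}\le 4\sqrt{k\log(4k/\delta')}$, using $\log(4k^2/\delta')\le 2\log(4k/\delta')$. A union bound over $k$, with $\sum_k\delta'/(2k^2)\le\delta'$, makes the bound hold simultaneously for all $k$ with probability at least $1-\delta'$. On $\cJ$ the truncated and original sums coincide, so $(\cF^{\summing}_{\textup{opt}})^c\cap\cJ$ is contained in the failure event of the truncated bound, and hence $\Pr[(\cF^{\summing}_{\textup{opt}})^c\cap\cJ]\le\delta'$.
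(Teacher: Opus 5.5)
Your proposal is correct and follows the paper's proof: the increments are bounded by $2$ on $\cJ$ because $\lambda_{\min}(\Sigma_{k'-1})\ge H^2$ there, Azuma--Hoeffding is applied for each fixed $k$ at level $\delta'/(2k^2)$, and a union bound is taken over $k$. Your truncation by $\indicator\{\cJ\}$, using $\cJ\in F_{K_0}$, makes rigorous the paper's informal ``on the event $\cJ$, with probability at least $1-\delta'$'' step. Your direct eigenvalue argument for $\nbr{\phi^{\tau}}_{\Sigma_{k'-1}^{-1}}\le 1$ is also cleaner than the paper's appeal to Lemma~\ref{lemma:ellip_potent_with_opt_init_seg_d}.
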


\begin{proof}
	On the event $\cJ$, Lemma~\ref{lemma:ellip_potent_with_opt_init_seg_d} implies that for any $k'\geq K_0+1$,
	$\nbr{ \phi^{\tau^{k'}}}_{\Sigma_{k'-1}^{-1}} \leq 1$.
	Therefore,
	\[
	\left|
	\ex_{\tau \sim \pi^{k'}}
	\mbr{
		\nbr{ \phi^{\tau}}_{\Sigma_{k'-1}^{-1}}
		\mid F_{k'-1}
	}
	-
	\nbr{ \phi^{\tau^{k'}}}_{\Sigma_{k'-1}^{-1}}
	\right|
	\leq 2 .
	\]
	Moreover, conditional on $F_{k'-1}$, the policy $\pi^{k'}$ and the matrix $\Sigma_{k'-1}$ are fixed. 
	
	Hence, using the Azuma-Hoeffding inequality, for any fixed $k \geq K_0+1$, with probability at least $1-\frac{\delta'}{2k^2}$,
	\begin{align*}
		\abr{\sum_{k'=K_0+1}^{k} \!\! \sbr{ \ex_{\tau \sim \pi^{k'}}\mbr{ \nbr{ \phi^{\tau}}_{\Sigma_{k'-1}^{-1}} | F_{k'-1}} \!-\! \nbr{ \phi^{\tau^{k'}}}_{\Sigma_{k'-1}^{-1}} } } &\leq \sqrt{ 2 \cdot 4 (k-K_0-1) \log\sbr{\frac{4k^2}{\delta'}} } .
	\end{align*}
	
	Since $\sum_{k=K_0+1}^{\infty} \frac{\delta'}{2k^2} \leq \delta'$, by a union bound over all $k\geq K_0+1$, with probability at least $1-\delta'$, the above inequality holds simultaneously for all $k\geq K_0+1$.
	Therefore, on the event $\cJ$, with probability at least $1-\delta'$, for any $k\geq K_0+1$,
	\begin{align*}
		\abr{\sum_{k'=K_0+1}^{k} \sbr{ \ex_{\tau \sim \pi^{k'}}\mbr{ \nbr{ \phi^{\tau}}_{\Sigma_{k'-1}^{-1}} | F_{k'-1}} - \nbr{ \phi^{\tau^{k'}}}_{\Sigma_{k'-1}^{-1}} } } &\leq \sqrt{ 2 \cdot 4 (k-K_0-1) \log\sbr{\frac{4k^2}{\delta'}} } 
		\\
		&\leq 4 \sqrt{ k \log\sbr{\frac{4k}{\delta'}} } .
	\end{align*}
	
	which completes the proof.
\end{proof}

\begin{proof}[Proof of Theorem~\ref{thm:regret_sum_fixed_segment_d_known}]
Let $\delta'=\frac{\delta}{3}$. We have $\Pr\mbr{\cJ \cap \cK \cap \cF^{\summing}_{\textup{opt}}}
    \geq
    1-\delta .$
Therefore, it suffices to prove the regret bound on the event
$\cJ \cap \cK \cap \cF^{\summing}_{\textup{opt}}$.

Assume that event $\cJ \cap \cK \cap \cF^{\summing}_{\textup{opt}}$ holds.
Then, we have
\begin{align}
    \cR(K)
    &=
    \sum_{k=1}^{K}
    \sbr{
        (\phi^{\pi^*})^\top \theta^*
        -
        (\phi^{\pi^k})^\top \theta^*
    }
    \nonumber\\
    &\overset{\textup{(a)}}{\leq}
    \sum_{k=K_0+1}^{K}
    \sbr{
        (\phi^{\pi^*})^\top \hat{\theta}_{k-1}
        +
        \beta(k-1)
        \cdot
        \| \phi^{\pi^*} \|_{(\Sigma_{k-1})^{-1}}
        -
        (\phi^{\pi^k})^\top \theta^*
    }
    +
    2K_0 H r_{\max}
    \nonumber\\
    &\overset{\textup{(b)}}{\leq}
    \sum_{k=K_0+1}^{K}
    \sbr{
        (\phi^{\pi^k})^\top \hat{\theta}_{k-1}
        +
        \beta(k-1)
        \cdot
        \| \phi^{\pi^k} \|_{(\Sigma_{k-1})^{-1}}
        -
        (\phi^{\pi^k})^\top \theta^*
    }
    +
    2K_0 H r_{\max}
    \nonumber\\
    &\leq
    \sum_{k=K_0+1}^{K}
    2\beta(k-1)
    \cdot
    \| \phi^{\pi^k} \|_{(\Sigma_{k-1})^{-1}}
    +
    2K_0 H r_{\max}
    \nonumber\\
    &\leq
    2\beta(K)
    \sum_{k=K_0+1}^{K}
    \nbr{
        \ex_{\tau \sim \pi^k}
        \mbr{
            \phi^{\tau}
            \mid F_{k-1}
        }
    }_{(\Sigma_{k-1})^{-1}}
    +
    2K_0 H r_{\max}
    \nonumber\\
    &\overset{\textup{(c)}}{\leq}
    2\beta(K)
    \sum_{k=K_0+1}^{K}
    \ex_{\tau \sim \pi^k}
    \mbr{
        \nbr{ \phi^{\tau}}_{(\Sigma_{k-1})^{-1}}
        \mid F_{k-1}
    }
    +
    2K_0 H r_{\max}
    \nonumber\\
    &\leq
    2\beta(K)
    \!\!\!\!\!
    \sum_{k=K_0+1}^{K}
    \!\!\!\!\!
    \sbr{
        \ex_{\tau \sim \pi^k}
        \mbr{
            \nbr{ \phi^{\tau}}_{(\Sigma_{k-1})^{-1}}
            \mid F_{k-1}
        }
        -
        \nbr{ \phi^{\tau^{k}}}_{(\Sigma_{k-1})^{-1}}
        +
        \nbr{ \phi^{\tau^{k}}}_{(\Sigma_{k-1})^{-1}}
    }
    +
    2K_0 H r_{\max}
    \nonumber\\
    &\leq
    2\beta(K)
    \!\!\!\!
    \sum_{k=K_0+1}^{K}
    \!\!\!
    \sbr{
        \ex_{\tau \sim \pi^k}
        \mbr{
            \nbr{ \phi^{\tau}}_{(\Sigma_{k-1})^{-1}}
            \mid F_{k-1}
        }
        -
        \nbr{ \phi^{\tau^{k}}}_{(\Sigma_{k-1})^{-1}}
        +
        \sum_{i=1}^{m}
        \nbr{ \phi^{\tau_i^k}}_{(\Sigma_{k-1})^{-1}}
    }
    \nonumber\\
    &\quad
    +
    2K_0 H  r_{\max}.
    \label{eq:regret_decomposition_seg_d}
\end{align}
Here, inequality (a) follows from Eq.~\eqref{eq:concentration_theta_seg_d}, inequality (b) follows from the definition of $\pi^k$, and inequality (c) follows from Jensen's inequality.

Plugging Eq.~\eqref{eq:def_martingale_d} and Lemma~\ref{lemma:ellip_potent_with_opt_init_seg_d} into Eq.~\eqref{eq:regret_decomposition_seg_d}, and using $\lambda:=\frac{H}{r_{\max}^2 m}$, we obtain
\begin{align*}
    \cR(K)
    &\leq
    2
    \sbr{
        \sqrt{
            \frac{Hd}{m} \left(
            \log\sbr{
                1+\frac{KH^2}{\lambda d m}
            }
            +
            2\log\sbr{\frac{1}{\delta'}} \right)
        }
        +
        r_{\max}\sqrt{\lambda d}
    }
    \cdot
    \\
    &\quad
    \sbr{
        4\sqrt{
            K
            \log\sbr{\frac{4K}{\delta'}}
        }
        +
        \sqrt{
            2Km d
            \log \sbr{
                1+\frac{KH^2}{\lambda dm}
            }
        }
    }
    \\
    &\quad
    +
    2Hr_{\max}
    \left\lceil
        \max
        \lbr{
            26H^4(1+\gamma)^2(z^*)^2
            \log\sbr{\frac{2d}{\delta'}},
            \frac{d}{\gamma^2}
        }
    \right\rceil
    \\
    &=
    O\!\sbr{
        d\sqrt{HK}
        \log\sbr{
            \sbr{
                1+\frac{KH r_{\max}^2}{d}
            }
            \frac{1}{\delta}
        }
        +
        (z^*)^2 H^5
        \log\sbr{\frac{d}{\delta}}r_{\max}
        +
        dH
    } .
\end{align*}
This completes the proof.
\end{proof}

\subsection{Proof for the Regret Lower Bound }

 In this section, we prove the regret lower bound with known transitions under fixed segment sum feedback.

\begin{proof}[Proof of Theorem~\ref{thm:regret_sum_fixed_segment_d_known_lower}]
    Consider a linear MDP instance : $\cS=\{s_1,s_2,\dots,s_N\}$, $\cA=\{-1,1\}^{d-1}$, linear map $\psi(s_j,a)=(\sqrt{\frac{1}{d}} a^\top,\sqrt{\frac{1}{d}})^{\top}$, where $a\in\{-1,1\}^{d-1}$. For any $h\in [H]$, $\mu_h^*(s_j)=(\mathbf{0}_{d-1}^\top,\frac{\sqrt{d}}{N})^{\top}$.
    Under this construction, 
    $P_h(s_{j'}\mid s_j,a)=\langle \psi(s_j,a),\mu_h^*(s_{j'})\rangle
    =\frac{1}{N}$ for all $h\in[H]$ and $a\in\mathcal A$.

    We construct reward parameter $\theta^*_\gamma = (\frac{\sqrt{d}}{\sqrt{KH}}\gamma^\top,0)^\top$, where $\gamma\in \{-1,1\}^{d-1}$. Moreover, For any $(s,a)\in \mathcal S\times \mathcal A$, the reward distribution of $(s,a)$ is $ 
\mathcal N(r_{\gamma}(s,a),1)$. Then, verify that the above construction indeed satisfies the Assumption~\ref{assump:linear_mdp}.

    First,
for any $s_j\in \mathcal S$ and $a\in\mathcal A$, we have
\[
\|\psi(s_j,a)\|^2
=
\frac{1}{d}\|a\|^2 +\frac{1}{d}
=
\frac{d-1}{d}+\frac{1}{d}
=
1,
\]
where the second equality follows from $a\in\{-1,1\}^{d-1}$.

Moreover,
\[
\mu_h^*(\mathcal S)
=
\sum_{j=1}^N \mu_h^*(s_j)
=
(\mathbf{0}_{d-1}^\top,\sqrt{d})^{\top},
\]
and therefore
\[
\|\mu_h^*(\mathcal S)\|
=
\sqrt d.
\]

Finally, the reward is also linear in $\psi$. Indeed, for any $s_j\in \mathcal S$, 
\[
r_\gamma(s_j,a)
=
\bigl\langle \psi(s_j,a),\theta_\gamma^*\bigr\rangle
=
\left\langle
\sqrt{\frac{1}{d}}a,
\frac{\sqrt d}{\sqrt{KH}}\gamma
\right\rangle
=
\frac{1}{\sqrt{KH}}\langle a,\gamma\rangle .
\]
Furthermore,
\[
\|\theta_\gamma^*\|
=
\frac{\sqrt d}{\sqrt{KH}}\|\gamma\|
=
\frac{\sqrt d}{\sqrt{KH}}\sqrt{d-1}.
\]
When $K\ge \frac{(d-1)^2}{H r_{\max}^2}$, we have $|r_\gamma(s_j,a)| \le r_{\max}$ and $\|\theta_\gamma^*\| \le r_{\max}\sqrt{d}$. Thus, this linear MDP instane satisfies the Assumption~\ref{assump:linear_mdp}.

    If we set the segment length to be $\frac{H}{m}$, then for any algorithm $\mathbb A$, after running $K$ episodes, we obtain a total of $Km$ segments. In the $t$-th segment, the algorithm selects actions $a_{t,1},a_{t,2},\ldots,a_{t,\frac{H}{m}}\in\{-1,1\}^{d-1}.$
    
    We define the average action selected by the algorithm in the $t$-th segment as
    \[
    \bar A_t := \frac{1}{H/m}\sum_{\ell=1}^{H/m} a_{t,\ell}, \]
     where $\bar A_t\in[-1,1]^{d-1}$. In addition, conditional on selected actions, the reward of $t$-th segment $R^{\gamma}_t = \sum_{\ell=1}^{H/m} \left(r_{\gamma}(s_{t,\ell},a_{t,\ell})+\varepsilon_{t,\ell}\right)\sim \cN(\frac{H}{m}\cdot\frac{1}{\sqrt{KH}}\langle \bar A_t ,\gamma\rangle,\frac{H}{m})$. We can view this as one pull of a linear bandit by the algorithm in the $t$-th segment.

    Recall that $\mathbb D_\gamma$ denotes the probability measure induced by algorithm $\mathbb A$
under the instance parameterized by $\gamma$. By Theorem 24.1 in (Lattimore and Szepesv{\'a}r,~\citeyear{lattimore2020bandit}), we have for any $\gamma,\gamma'\in\{-1,1\}^{d-1}$ that
    \begin{align}
        \operatorname{KL}(\mathbb D_\gamma,\mathbb D_{\gamma'})
&=
\mathbb E_\gamma
\left[
\sum_{t=1}^{Km}
\operatorname{KL}\left(
\mathcal N\left(
\frac{H}{m}\cdot \frac{1}{\sqrt{KH}}
\langle \bar A_t,\gamma\rangle,
\frac{H}{m}
\right),
\mathcal N\left(
\frac{H}{m}\cdot \frac{1}{\sqrt{KH}}
\langle \bar A_t,\gamma'\rangle,
\frac{H}{m}
\right)
\right)
\right] \notag \\
&=
\mathbb E_\gamma
\left[
\sum_{t=1}^{Km}
\frac{
\left(
\frac{H}{m}\cdot \frac{1}{\sqrt{KH}}
\langle \bar A_t,\gamma-\gamma'\rangle
\right)^2
}{
2\cdot \frac{H}{m}
}
\right] \notag \\
&=
\frac{1}{2Km}
\sum_{t=1}^{Km}
\mathbb E_\gamma
\left[
\langle \bar A_t,\gamma-\gamma'\rangle^2
\right].\label{eq:hyer_KL}
    \end{align}
For $i \in [d-1]$ and $\gamma \in \{-1,1\}^{d-1}$,we define 
\begin{align*}
    p_{\gamma,i} = \mathbb D_\gamma \left(\sum_{t=1}^{Km}\indicator\{\operatorname{sign}(\bar A_{t,i})\neq\operatorname{sign}(\gamma_i) \}\ge \frac{Km}{2}\right)
\end{align*}
Now fixing $i \in [d-1]$ and $\gamma \in \{-1,1\}^{d-1}$, let $\gamma'_j = \gamma_j$ for $j\neq i$ and $\gamma'_i = -\gamma_i$. Using the Bretagnolle–Huber inequality and Eq.~\eqref{eq:hyer_KL}, we have 
\begin{align*}
    p_{\gamma,i} +p_{\gamma',i} \ge \frac{1}{2}\exp\left(-\frac{1}{2Km}
\sum_{t=1}^{Km}
\mathbb E_\gamma
\left[
\langle \bar A_t,\gamma-\gamma'\rangle^2
\right])\right) \ge \frac{1}{2}\exp\left(-2\right)
\end{align*}
Then taking sum over $i \in [d-1]$ and taking average over $\gamma \in \{-1,1\}^{d-1}$, we obtain
\begin{align*}
    \sum_{\gamma\in\{-1,1\}^{d-1}}\frac{1}{2^{d-1}}\sum_{i=1}^{d-1} p_{\gamma,i}\ge \frac{d-1}{4}\exp\left(-2\right)
\end{align*}
This show that there exists a $\gamma \in \{-1,1\}^{d-1}$ such that $\sum_{i=1}^{d-1} p_{\gamma,i}\ge \frac{d-1}{4}\exp\left(-2\right)$. By the definition of $p_{\gamma,i}$ and our construction, the regret for this $\gamma$ is 
\begin{align*}
    \mathbb{E}\left[\cR(K)\right] &= \mathbb{E} \left[\sum_{k=1}^K\left(V^* - V^{\pi_k} \right) \right]
    \\& = \E_{\gamma}\left[\sum_{k=1}^K\sum_{h=1}^H\sum_{i=1}^{d-1}\frac{1}{\sqrt{KH}}\left(\operatorname{sign}(\gamma_i)-a_{h,i}^k\right)\gamma_i\right]
    \\&
    =\frac{H}{m}\cdot\frac{1}{\sqrt{KH}}\sum_{i=1}^{d-1}\E_{\gamma}\left[\sum_{t=1}^{Km}\left(\operatorname{sign}(\gamma_i)-\bar A_{t,i}\right)\gamma_i\right]
    \\&\geq\frac{H}{m}\cdot\frac{1}{\sqrt{KH}}\sum_{i=1}^{d-1}\E_{\gamma}\left[\sum_{t=1}^{Km}\indicator\{\operatorname{sign}(\bar A_{t,i})\neq\operatorname{sign}(\gamma_i) \}\right]
    \\&\geq\frac{H}{m}\cdot\frac{1}{\sqrt{KH}} \cdot \frac{Km}{2} \sum_{i=1}^{d-1}\mathbb D_\gamma \left(\sum_{t=1}^{Km}\indicator\{\operatorname{sign}(\bar A_{t,i})\neq\operatorname{sign}(\gamma_i) \}\ge \frac{Km}{2}\right)
    \\ &=\frac{H}{m}\cdot\frac{1}{\sqrt{KH}} \cdot \frac{Km}{2}\sum_{i=1}^{d-1}p_{\gamma,i} 
    \\& \ge \frac{\exp\left(-2\right)}{8} \left(d-1\right) \sqrt{KH} .
\end{align*}
Therefore, we can construct a linear MDP instance satisfying the linear MDP assumption, whose state space consists of $N$ states and whose action space has cardinality \(2^{d-1}\).

For this instance, when $K \ge \frac{(d-1)^2}{Hr_{\max}^2}$, we have
\begin{align*}
    \mathbb{E}\left[\cR(K)\right] = \Omega\left(d\sqrt{HK}\right).
\end{align*}

which completes the proof    
\end{proof}

\section{Proofs for Fixed Equal-Length Segment Feedback with Unknown Transitions}\label{secD}

\noindent\textbf{Notation:} To avoid an overly lengthy presentation, we denote $\psi_h^k=\psi(s^k_h,a_h^k)$ and $P_hV(s,a)=\mathbb E_{s'\sim P_h(\cdot\mid s,a)}\left[V(s')\right]$.

\subsection{Proof for the Regret Upper Bound of Binary Feedback}\label{App:binary_unknown}
In this section, we provide proofs of regret upper bound of binary feedback with unknown transitions.

The proof proceeds by constructing a collection of auxiliary MDPs using the sampled reward parameters. We first show that the value function estimates produced by the algorithm are optimistic with respect to these auxiliary MDPs. A Thompson-sampling argument then transfers this optimism to the original MDP by ensuring that, with high probability, at least one sampled reward parameter is optimistic along the optimal policy. Finally, we decompose the regret into one-step estimation errors accumulated along the realized trajectories and control the reward and transition errors using their respective covariance matrices, which yields the desired regret upper bound.

Define the event
\begin{align}\label{eq:union_bound_Gaussian_noise}
    \mathcal{E}_{\mathrm{TS}}
    :=
    \left\{
        \forall k\in[K],\,j\in[n]:
        \nbr{g_k^{(j)}}_2
        \le
        \sqrt{d}
        +
        2\sqrt{
            \log\left(\frac{Kn}{\delta'}\right)
        }
    \right\},
\end{align}
where $g_k^{(j)}\sim\mathcal{N}(0,I_d)$.

\begin{lemma}[Universal bound of TS noise]
\label{lemma:universal_bound_of_TS_noise}
It holds that 
\begin{align*}
    \Pr[\mathcal{E}_{\mathrm{TS}}] \ge 1-\delta' .
\end{align*}
\end{lemma}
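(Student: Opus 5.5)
The plan is to bound the probability of the complement event by a union bound over the $Kn$ Gaussian vectors, using a chi-type tail bound for each. Fix $k\in[K]$ and $j\in[n]$. Since $g_k^{(j)}\sim\mathcal N(0,I_d)$, the quantity $\nbr{g_k^{(j)}}_2^2$ is chi-squared distributed with $d$ degrees of freedom. I will invoke Lemma 1 of (Laurent and Massart, 2000), exactly as in the proof of Lemma~\ref{lemma:lin_X_top_xi_k}, with confidence level $\delta'/(Kn)$. This gives, with probability at least $1-\delta'/(Kn)$,
\begin{align*}
\nbr{g_k^{(j)}}_2
\le
\sqrt{d+2\sqrt{d\log\sbr{\frac{Kn}{\delta'}}}+2\log\sbr{\frac{Kn}{\delta'}}}.
\end{align*}

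Next, I simplify the right-hand side by the same completing-the-square step used in Lemma~\ref{lemma:lin_X_top_xi_k}. Writing $L:=\log(Kn/\delta')$, the expression under the root equals $(\sqrt d+\sqrt L)^2+L$. Therefore it is at most $(\sqrt d+2\sqrt L)^2$, and taking square roots yields
\begin{align*}
\nbr{g_k^{(j)}}_2\le\sqrt d+2\sqrt{\log\sbr{\frac{Kn}{\delta'}}}.
\end{align*}
This is the bound defining $\mathcal E_{\mathrm{TS}}$ for the single pair $(k,j)$.

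Finally, I take a union bound over all $k\in[K]$ and $j\in[n]$. Each of the $Kn$ failure events has probability at most $\delta'/(Kn)$, so the total failure probability is at most $\delta'$, which gives $\Pr[\mathcal E_{\mathrm{TS}}]\ge1-\delta'$.

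Independence across samples, or any conditioning on the past, is not needed: each $g_k^{(j)}$ is marginally standard Gaussian, and the union bound does not require independence. There is no real obstacle here. The only point to check is that the Laurent–Massart bound is applied at level $\delta'/(Kn)$ rather than $\delta'$, so that the logarithm in the statement reads $\log(Kn/\delta')$.
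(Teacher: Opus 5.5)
Your proposal is correct and follows the paper's proof: apply Lemma~1 of Laurent and Massart to each $\nbr{g_k^{(j)}}_2$ at level $\delta'/(Kn)$, then take a union bound over the $Kn$ pairs $(k,j)$. Your completing-the-square step, showing $d+2\sqrt{dL}+2L\le(\sqrt d+2\sqrt L)^2$, spells out a simplification that the paper's proof of this lemma leaves implicit.
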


\begin{proof}
    For any fixed $k\in[K]$ and $j\in[n]$, let
    $g_k^{(j)}\sim \mathcal N(0,I_d)$ be a standard Gaussian vector.
    Then $\nbr{g_k^{(j)}}_2$ follows a chi-distribution with parameter $d$.
    By Lemma~1 in (Laurent and
Massart, \citeyear{laurent2000adaptive}), we have
    \begin{align*}
       \Pr\left(
       \nbr{g_k^{(j)}}_2
       \le
       \sqrt{d}
       +
       2\sqrt{\log{\frac{Kn}{\delta'}}}
       \right)
       \ge
       1-\frac{\delta'}{Kn}.
    \end{align*}
    Taking a union bound over all $(k,j)\in[K]\times[n]$, we obtain
    \begin{align*}
       \Pr\left(
       \forall k\in[K], j\in[n]:
       \nbr{g_k^{(j)}}_2
       \le
       \sqrt{d}
       +
       2\sqrt{\log{\frac{Kn}{\delta'}}}
       \right)
       \ge
       1-\delta' .
    \end{align*}
    Therefore, by the definition of $\mathcal{E}_{\mathrm{TS}}$, it follows that
    \begin{align*}
        \Pr[\mathcal{E}_{\mathrm{TS}}] \ge 1-\delta' .
    \end{align*}
    
    This completes the proof.
\end{proof}

In Algorithm~$\biseglsvits$, the sampled reward parameter is given by
\[
\widetilde{\theta}_k^{(j)}
=
\hat{\theta}_{k-1}^{\mathrm{bi}}
+
\rho_{k-1}^{\mathrm{bi}}
\bigl(\Sigma_{k-1}^{\mathrm{bi}}\bigr)^{-1/2}
g_k^{(j)}.
\]

Consider a auxiliary linear MDP $\cM^{(j)}_k$ whose reward function is
   $ r_k^{(j)}(s,a)
    =
    \langle \psi(s,a), \widetilde{\theta}_k^{(j)} \rangle$ while transition kernel is the same as that of the original MDP.
   
   Conditional on $\widetilde{\theta}_k^{(j)}$, for any policy $\pi$, define
\begin{align*}
    Q_{h}^{\pi;(k,j)}(s,a)&:= \mathbb{E}[\sum_{t=h}^{H} r_k^{(j)}(s_t,a_t) \mid s_h=s, a_h=a, \pi, \widetilde{\theta}_k^{(j)} ] ,
\\
    V_h^{\pi;(k,j)}(s)
    &:=
    \mathbb{E}
    [
        \sum_{t=h}^{H}
        r_k^{(j)}(s_t,a_t)
        \mid
        s_h=s,\ \pi,\ \widetilde{\theta}_k^{(j)}
    ].
\end{align*}

   Since $\widetilde{\theta}_k^{(j)}$ is random, the corresponding state-action value function and state value function are  random functions. Below, we provide the linearity of state action value function on Original and auxiliary linear MDP.

\begin{proposition}[Proposition 2.3 in Jin et al. \citeyear{jin20_linearmdp}] \label{proposition:linear_Q_prime}
For a linear MDP, for any policy $\pi$, there exist weights $\{\mathbf{w}_h^\pi\}_{h \in [H]}$ such that for any $(s, a, h) \in \mathcal{S} \times \mathcal{A} \times [H]$, we have $Q_h^\pi(s, a) = \langle \psi(s, a), \mathbf{w}_h^\pi \rangle$.
\end{proposition}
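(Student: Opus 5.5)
The plan is backward induction on $h$, from $H$ down to $1$, for a fixed policy $\pi$. The argument is the one in Jin et al.\ (\citeyear{jin20_linearmdp}), and it applies verbatim to the auxiliary MDP $\cM^{(j)}_k$: that MDP has the same transition kernel and a reward $r_k^{(j)}(s,a)=\langle\psi(s,a),\widetilde{\theta}_k^{(j)}\rangle$, which is again linear in $\psi$. The only structural facts needed are the Bellman equation
\[
Q_h^\pi(s,a)=r(s,a)+P_hV_{h+1}^\pi(s,a),
\qquad V_{H+1}^\pi\equiv 0,
\]
together with Assumption~\ref{assump:linear_mdp}, which gives $r(s,a)=\langle\psi(s,a),\theta^*\rangle$ and $P_h(\cdot\mid s,a)=\langle\psi(s,a),\mu_h^*(\cdot)\rangle$.

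Fix $h\in[H]$. Because $P_h$ is linear in the signed measure $\mu_h^*$, we can write
\[
P_hV_{h+1}^\pi(s,a)=\int_{\cS}V_{h+1}^\pi(s')\,\langle\psi(s,a),d\mu_h^*(s')\rangle
=\Bigl\langle\psi(s,a),\int_{\cS}V_{h+1}^\pi(s')\,d\mu_h^*(s')\Bigr\rangle .
\]
Define
\[
\mathbf w_h^\pi:=\theta^*+\int_{\cS}V_{h+1}^\pi(s')\,d\mu_h^*(s') .
\]
With this choice the Bellman equation reads $Q_h^\pi(s,a)=\langle\psi(s,a),\mathbf w_h^\pi\rangle$ for all $(s,a)$. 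In the auxiliary MDP the same computation goes through with $\theta^*$ replaced by $\widetilde{\theta}_k^{(j)}$, conditionally on $\widetilde{\theta}_k^{(j)}$.

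The one point that needs care is that the vector integral $\int V_{h+1}^\pi\,d\mu_h^*$ is well defined and finite, and that the interchange of inner product and integral is legitimate. The induction has to supply boundedness of $V_{h+1}^\pi$:
\begin{itemize}
\item in the original MDP, $|V_{h+1}^\pi|\le (H-h)r_{\max}$, since rewards lie in $[-r_{\max},r_{\max}]$;
\item in the auxiliary MDP, $|V_{h+1}^\pi|\le (H-h)\|\widetilde{\theta}_k^{(j)}\|_2$, using $\|\psi\|_2\le1$.
\end{itemize}
Since each component of $\mu_h^*$ is a finite signed measure, the integral of a bounded measurable function against it is finite, and linearity lets the integral pass through the finite-dimensional inner product. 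Measurability of $V_{h+1}^\pi$ holds because $\pi$ is a measurable Markov policy and $V_{h+1}^\pi(s)=Q_{h+1}^\pi(s,\pi_{h+1}(s))$. This completes the induction.
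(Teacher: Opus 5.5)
Your proof is correct and follows the same route as the paper, which cites Jin et al.\ and, in its proofs of the auxiliary-MDP linearity and of the weight bound, sets $\mathbf{w}_h^\pi=\theta^*+\int_{\mathcal{S}}V_{h+1}^\pi(s')\,d\mu_h^*(s')$ and reads off linearity from the Bellman equation together with the linear transition structure. One cosmetic point: no genuine backward induction is needed, because boundedness of $V_{h+1}^\pi$ follows directly from $|r|\le r_{\max}$ for each $h$ separately.
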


\begin{proposition}\label{prop:linearity_Q}
For random reward parameter  $\widetilde{\theta}_k^{(j)}$, for any policy $\pi$,
there exist random weights
$\{\mathbf{w}_{h}^{\pi;(k,j)}\}_{h\in[H]}$ such that for any
$(s,a,h)\in\mathcal S\times\mathcal A\times[H]$,
\[
    Q_{h}^{\pi;(k,j)}(s,a)
    =
    \langle \psi(s,a), \mathbf{w}_{h}^{\pi;(k,j)} \rangle .
\]
\end{proposition}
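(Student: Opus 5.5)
The plan is to reduce the claim to Proposition~\ref{proposition:linear_Q_prime} by conditioning on the sampled parameter, and then confirm the linear form directly through the Bellman equation.

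\emph{Reduction.} Fix $k$ and $j$, and condition on $\widetilde{\theta}_k^{(j)}$ (equivalently, on $\hat{\theta}_{k-1}^{\mathrm{bi}}$, $\Sigma_{k-1}^{\mathrm{bi}}$ and $g_k^{(j)}$). Under this conditioning, $\cM_k^{(j)}$ is a deterministic MDP with the original transitions $P_h(\cdot\mid s,a)=\langle\psi(s,a),\mu_h^*(\cdot)\rangle$ and reward $r_k^{(j)}(s,a)=\langle\psi(s,a),\widetilde{\theta}_k^{(j)}\rangle$. Hence it has exactly the linear MDP structure, with $\theta^*$ replaced by $\widetilde{\theta}_k^{(j)}$. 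The only discrepancy with Assumption~\ref{assump:linear_mdp} is the norm condition $\|\theta\|_2\le r_{\max}\sqrt d$ and the reward range. These play no role in linearity, because the argument of Proposition~\ref{proposition:linear_Q_prime} uses only the linear factorization of rewards and transitions. Finiteness of all quantities suffices, and this holds because $\widetilde{\theta}_k^{(j)}$ is a finite vector and $\|\psi\|\le 1$.

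\emph{Direct verification.} Let $V_{H+1}^{\pi;(k,j)}\equiv 0$. By the Bellman equation,
\begin{align*}
Q_h^{\pi;(k,j)}(s,a)=\langle\psi(s,a),\widetilde{\theta}_k^{(j)}\rangle+\int V_{h+1}^{\pi;(k,j)}(s')\,\langle\psi(s,a),d\mu_h^*(s')\rangle,
\end{align*}
which gives
\begin{align*}
\mathbf w_h^{\pi;(k,j)}:=\widetilde{\theta}_k^{(j)}+\int V_{h+1}^{\pi;(k,j)}(s')\,d\mu_h^*(s').
\end{align*}

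\emph{Integrability.} This is the only point that needs checking, since the signed measures are integrated against a random, possibly large bounded function. Backward induction gives $|V_{h+1}^{\pi;(k,j)}|\le (H-h)\|\widetilde{\theta}_k^{(j)}\|_2$, so the integral is finite. Each weight $\mathbf w_h^{\pi;(k,j)}$ is a measurable function of $\widetilde{\theta}_k^{(j)}$, so it is a well-defined random vector.
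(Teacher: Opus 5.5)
Your proposal is correct and takes the same route as the paper. The paper applies the Bellman equation in $\cM_k^{(j)}$, together with the linear factorization of $P_h$, to read off $\mathbf{w}_h^{\pi;(k,j)}=\widetilde{\theta}_k^{(j)}+\int_{\cS}V_{h+1}^{\pi;(k,j)}(s')\,d\mu_h^*(s')$; your added remarks on the reduction to Proposition~\ref{proposition:linear_Q_prime}, integrability, and measurability are sound but are not spelled out there.
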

\begin{proof}
   By the Bellman equation in the MDP $\mathcal M_k^{(j)}$ and Assumption~\ref{assump:linear_mdp}, we have \begin{align*} Q_{h}^{\pi;(k,j)}(s,a) &= r_k^{(j)}(s,a) + (P_h V_{h+1}^{\pi;(k,j)})(s,a) \\ &= \langle \psi(s,a), \widetilde{\theta}_k^{(j)} \rangle + \left\langle \psi(s,a), \int_{\mathcal{S}} V_{h+1}^{\pi;(k,j)}(s') d\mu_h^*(s') \right\rangle . \end{align*} Therefore, \[ Q_{h}^{\pi;(k,j)}(s,a) = \langle \psi(s,a), \mathbf{w}_{h}^{\pi;(k,j)} \rangle , \] where the random weight vector $\mathbf{w}_{h}^{\pi;(k,j)}$ is defined as $\mathbf{w}_{h}^{\pi;(k,j)} = \widetilde{\theta}_k^{(j)} + \int_{\mathcal{S}} V_{h+1}^{\pi;(k,j)}(s') d\mu_h^*(s') . $

This completes the proof.
\end{proof}

Then, we provide the univerisal bound of sampled reward parameter. Recall the event about concerntration of logistic estimator

\begin{align*}
	\cE=\lbr{ \nbr{g_k(\hat{\theta}_{k}^{\mathrm{bi}}) - g_k(\theta^*) }_{\Lambda_k^{-1}(\theta^*)} \leq \omega(k), \  \forall k>0 } ,
\end{align*}
and

\[
B_{\mathrm{bi}}
:=
\frac{\nu(K)}{\sqrt{\lambda}}
\left(
\sqrt d
+
2\sqrt{
\log\left(\frac{Kn}{\delta'}\right)
}
+
1
\right)
+
r_{\max}\sqrt d.
\]
\begin{lemma}[Universal Bound of $\widetilde{\theta}_k^{(j)}$ under Binary Feedback]
\label{lemma:Bi_universal_bound_of_TS_theta}
	Assume that event $\mathcal{E}_{\mathrm{TS}}\cap\mathcal E$ holds. Then, we have
	\begin{align*}
		\forall k\in[K],j\in[n],\qquad
		\nbr{\widetilde{\theta}_k^{(j)}}_2
		\le
		\frac{
		\nu(K)
		}{
		\sqrt{\lambda}
		}
		\left(
		\sqrt d
		+
		2\sqrt{\log\left(\frac{Kn}{\delta'}\right)}
		+
		1
		\right)
		+
		r_{\max}\sqrt d
		= B_{\mathrm{bi}}.
	\end{align*}
\end{lemma}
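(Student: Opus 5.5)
The plan is to bound the two pieces of $\widetilde{\theta}_k^{(j)}=\hat{\theta}_{k-1}^{\mathrm{bi}}+\rho_{k-1}^{\mathrm{bi}}(\Sigma_{k-1}^{\mathrm{bi}})^{-1/2}g_k^{(j)}$ separately with the triangle inequality:
\[
\nbr{\widetilde{\theta}_k^{(j)}}_2\le \nbr{\theta^*}_2+\nbr{\hat{\theta}_{k-1}^{\mathrm{bi}}-\theta^*}_2+\sqrt{\alpha}\,\nu(k-1)\nbr{(\Sigma_{k-1}^{\mathrm{bi}})^{-1/2}g_k^{(j)}}_2 .
\]
By Assumption~\ref{assump:linear_mdp}, the first term is at most $r_{\max}\sqrt d$. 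The key simplification is that $\Sigma_{k-1}^{\mathrm{bi}}\succeq\alpha\lambda I$. This gives $\nbr{(\Sigma_{k-1}^{\mathrm{bi}})^{-1/2}}\le 1/\sqrt{\alpha\lambda}$, so the factor $\sqrt{\alpha}$ in the sampling radius cancels.

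For the perturbation term, $\sqrt{\alpha}\,\nu(k-1)\nbr{(\Sigma_{k-1}^{\mathrm{bi}})^{-1/2}g_k^{(j)}}_2\le \frac{\nu(k-1)}{\sqrt{\lambda}}\nbr{g_k^{(j)}}_2$. On $\mathcal{E}_{\mathrm{TS}}$ this is at most $\frac{\nu(K)}{\sqrt\lambda}\bigl(\sqrt d+2\sqrt{\log(Kn/\delta')}\bigr)$. Here I use that $\nu(\cdot)$ is nondecreasing in $k$, because $\omega(k)$ is increasing in $k$ and $\nu$ is increasing in $\omega$.

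For the estimation error I will use Lemma~\ref{lemma:lin_confence_interval_proj_free}, which holds on $\cE$ with $\hat\theta_{k-1}$ in place of $\hat\theta_k$. Write $\nbr{\hat{\theta}_{k-1}-\theta^*}_2=\sup_{\nbr{\phi}_2\le1}\phi^\top(\hat{\theta}_{k-1}-\theta^*)$. The lemma bounds each such term by $\sqrt{\alpha}\,\nu(k-1)\nbr{\phi}_{\Sigma_{k-1}^{-1}}$, and $\nbr{\phi}_{\Sigma_{k-1}^{-1}}\le 1/\sqrt{\alpha\lambda}$. So this term is at most $\nu(k-1)/\sqrt\lambda\le\nu(K)/\sqrt\lambda$, which supplies the ``$+1$'' inside the parenthesis.

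Summing the three bounds gives exactly $B_{\mathrm{bi}}$ for every $k\in[K]$ and $j\in[n]$. The $k=1$ case needs a short check: the lemma's statement is for $k>0$, but at $k=1$ we have $\hat\theta_0=0$ and $\Sigma_0=\alpha\lambda I$, and the bound holds trivially. The only step needing any care is confirming that Lemma~\ref{lemma:lin_confence_interval_proj_free} applies verbatim here. It does, because the binary reward module in Eq.~\eqref{eq:binary-unknown-reward-module} is identical to that of $\bitssegd$. The known/unknown-transition distinction does not enter, since the concentration argument conditions only on the realized segment features.
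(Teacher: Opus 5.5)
Your proposal is correct and follows the paper's proof: the same triangle-inequality split into $\|\theta^*\|_2\le r_{\max}\sqrt d$, an estimation error of at most $\nu(K)/\sqrt{\lambda}$ (from Lemma~\ref{lemma:lin_confence_interval_proj_free} together with $\Sigma_{k-1}^{\mathrm{bi}}\succeq\alpha\lambda I$), and a Gaussian perturbation bounded on $\mathcal{E}_{\mathrm{TS}}$ in which the $\sqrt{\alpha}$ cancels. The extra details you supply are steps the paper leaves implicit: monotonicity of $\nu$, passing from the projected bound to the Euclidean norm via a supremum over unit vectors, and the $k=1$ case with $\hat{\theta}_0=0$.
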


\begin{proof}
	By the sampling rule,
	\[
	    \xi_k^{(j)}
	    =
	    \rho_{k-1}^{\mathrm{bi}}
	    \left(\Sigma_{k-1}^{\mathrm{bi}}\right)^{-1/2}
	    g_k^{(j)}.
	\]
	Since
	$\Sigma_{k-1}^{\mathrm{bi}}\succeq\alpha\lambda I$ and
	$\rho_{k-1}^{\mathrm{bi}}=\sqrt{\alpha}\nu(k-1)$, on event
	$\mathcal{E}_{\mathrm{TS}}\cap\mathcal E$, we have
	\begin{align*}
	    \left\|
	    \xi_k^{(j)}
	    \right\|_2
	    &\le
	    \rho_{k-1}^{\mathrm{bi}}
	    \left\|
	    \left(\Sigma_{k-1}^{\mathrm{bi}}\right)^{-1/2}
	    \right\|_{\mathrm{op}}
	    \left\|
	    g_k^{(j)}
	    \right\|_2
	    \\
	    &\le
	    \frac{
	    \nu(K)
	    }{
	    \sqrt{\lambda}
	    }
	    \left(
	    \sqrt d
	    +
	    2\sqrt{\log\left(\frac{Kn}{\delta'}\right)}
	    \right),
	    \qquad
	    \forall k\in[K],j\in[n].
	\end{align*}
	For any $(k,j)\in[K]\times[n]$, we have
	\begin{align*}
	    \left\|
	    \widetilde{\theta}_k^{(j)}
	    \right\|_2
	    &=
	    \left\|
	    \widetilde{\theta}_k^{(j)}
	    -
	    \theta^*
	    +
	    \theta^*
	    \right\|_2
	    \\
	    &\le
	    \left\|
	    \xi_k^{(j)}
	    \right\|_2
	    +
	    \left\|
	    \hat{\theta}_{k-1}^{\mathrm{bi}}
	    -
	    \theta^*
	    \right\|_2
	    +
	    \left\|
	    \theta^*
	    \right\|_2
	    \\
	    &\le
	    \frac{
	    \nu(K)
	    }{
	    \sqrt{\lambda}
	    }
	    \left(
	    \sqrt d
	    +
	    2\sqrt{\log\left(\frac{Kn}{\delta'}\right)}
	    \right)
	    +
	    \frac{\nu(K)}{\sqrt{\lambda}}
	    +
	    r_{\max}\sqrt d
	    \\
	    &\le
	    \frac{
	    \nu(K)
	    }{
	    \sqrt{\lambda}
	    }
	    \left(
	    \sqrt d
	    +
	    2\sqrt{\log\left(\frac{Kn}{\delta'}\right)}
	    +
	    1
	    \right)
	    +
	    r_{\max}\sqrt d
	    =
	    B_{\mathrm{bi}},
	\end{align*}
	where the second inequality follows from the definition of the event
	$\mathcal{E}_{\mathrm{TS}}$, Lemma~\ref{lemma:lin_confence_interval_proj_free},
	and Assumption~\ref{assump:linear_mdp}.

	This completes the proof.
\end{proof}

Next, we establish uniform bounds on the weight vectors associated with the value functions of the original MDP, the auxiliary MDPs, and those constructed by the algorithm.

\begin{lemma}[Bound on Weights of Value Functions of the Original MDP]\label{lemma:bound_weights_primal_MDP}
    Under Assumption~\ref{assump:linear_mdp}, for any policy $\pi$, let 
    $\{\mathbf{w}_h^{\pi}\}_{h\in[H]}$ be the corresponding weight vectors satisfying
    \begin{align*}
        Q_h^{\pi}(s,a)
        =
        \left\langle
            \psi(s,a),
            \mathbf{w}_h^{\pi}
        \right\rangle,
        \qquad
        \forall (s,a,h)\in\mathcal{S}\times\mathcal{A}\times[H].
    \end{align*}
    Then, for all $h\in[H]$, we have
    \begin{align*}
        \left\|
        \mathbf{w}_h^{\pi}
        \right\|_2
        \le
        2Hr_{\max}\sqrt{d}.
    \end{align*}
\end{lemma}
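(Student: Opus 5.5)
The plan follows the standard argument of Jin et al.~(\citeyear{jin20_linearmdp}), Lemma B.1. We use the explicit form of the weights obtained from the Bellman equation, exactly as in the proof of Proposition~\ref{prop:linearity_Q} with $\widetilde{\theta}_k^{(j)}$ replaced by $\theta^*$:
\[
\mathbf{w}_h^{\pi}
=
\theta^*
+
\int_{\mathcal S} V_{h+1}^{\pi}(s')\, d\mu_h^*(s').
\]
This identity follows from $Q_h^\pi(s,a)=r(s,a)+(P_hV_{h+1}^\pi)(s,a)$ together with Assumption~\ref{assump:linear_mdp}, with the convention $V_{H+1}^\pi\equiv 0$. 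Taking this as the definition of $\mathbf{w}_h^\pi$ is legitimate, since any vector representing $Q_h^\pi$ linearly will do.

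Next, bound the two terms separately via the triangle inequality. The reward term satisfies $\|\theta^*\|_2\le r_{\max}\sqrt d$ directly by Assumption~\ref{assump:linear_mdp}. For the transition term, note that $|r(s,a)|\le r_{\max}$, so $|V_{h+1}^\pi(s')|\le (H-h)r_{\max}\le Hr_{\max}$ for every $s'$.

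Then control $\bigl\|\int V_{h+1}^\pi\, d\mu_h^*\bigr\|_2$. This is the only delicate step, because $\mu_h^*$ is a vector of signed measures and the assumption constrains only $\|\mu_h^*(\mathcal S)\|_2$. Following Jin et al., interpret the normalization $\|\mu_h^*(\mathcal S)\|_2\le\sqrt d$ in the intended sense that $\bigl\|\int v(s')\,d\mu_h^*(s')\bigr\|_2\le\sqrt d$ for every $v$ with $\|v\|_\infty\le 1$. Equivalently, use the total-variation form $\bigl\|\,|\mu_h^*|(\mathcal S)\bigr\|_2\le\sqrt d$. Applying this with $v=V_{h+1}^\pi/(Hr_{\max})$ gives
\[
\Bigl\|\int V_{h+1}^\pi\,d\mu_h^*\Bigr\|_2\le Hr_{\max}\sqrt d .
\]
We will state this reading of the normalization explicitly in the proof.

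Combining the two bounds yields
\[
\|\mathbf w_h^\pi\|_2\le r_{\max}\sqrt d+Hr_{\max}\sqrt d\le 2Hr_{\max}\sqrt d,
\]
using $H\ge1$. This completes the plan.
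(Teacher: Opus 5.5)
Your proposal is correct and follows essentially the same argument as the paper (which mirrors Lemma B.1 of Jin et al.): write $\mathbf{w}_h^{\pi}=\theta^*+\int_{\mathcal S}V_{h+1}^{\pi}(s')\,d\mu_h^*(s')$, use $\|\theta^*\|_2\le r_{\max}\sqrt d$ and $|V_{h+1}^\pi|\le Hr_{\max}$, and conclude by the triangle inequality with $H\ge 1$. You also state explicitly that the normalization $\|\mu_h^*(\mathcal S)\|_2\le\sqrt d$ is read as bounding $\bigl\|\int v\,d\mu_h^*\bigr\|_2$ for all $\|v\|_\infty\le 1$, which makes precise a step the paper takes implicitly for signed measures.
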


\begin{proof}
    This proof follows the analysis of Lemma B.1 in (Jin et al., \citeyear{jin20_linearmdp}).
    
    By Proposition~\ref{proposition:linear_Q_prime}, we have
    \begin{align*}
        \mathbf{w}_h^{\pi}
        =
        \theta^*
        +
        \int_{\cS}
        V_{h+1}^{\pi}(s')
        d\mu^*_h(s') .
    \end{align*}
    Since the reward at each step lies in $[-r_{\max},r_{\max}]$, for any state $s'$, the state value function satisfies
    \begin{align*}
        |V_{h+1}^{\pi}(s')|
        \le
        Hr_{\max} .
    \end{align*}
    Therefore,
    \begin{align*}
               \left\|
        \int_{\cS}
        V_{h+1}^{\pi}(s')
        d\mu^*_h(s')
        \right\|_2
        \le
        Hr_{\max}\sqrt{d}.
    \end{align*}
    Hence,
    \begin{align*}
        \left\|
        \mathbf{w}_h^{\pi}
        \right\|_2
        &=
        \left\|
        \theta^*
        +
        \int_{\cS}
        V_{h+1}^{\pi}(s')
        d\mu^*_h(s')
        \right\|_2
        \\
        &\le
        \left\|
        \theta^*
        \right\|_2
        +
        \left\|
        \int_{\cS}
        V_{h+1}^{\pi}(s')
        d\mu^*_h(s')
        \right\|_2
        \\
        &\le
        r_{\max}\sqrt{d}
        +
        Hr_{\max}\sqrt{d}
        \\
        &\le
        2Hr_{\max}\sqrt{d}.
    \end{align*}

    This completes the proof.
\end{proof}

\begin{lemma}[Bound on Weights of Value Functions of MDP $\mathcal M_k^{(j)}$ under Binary Feedback]
\label{lemma:Bi_bound_Vweights_mdp_j^k}
	Assume that event $\mathcal E\cap \mathcal E_{\mathrm{TS}}$ holds.
	Under Assumption~\ref{assump:linear_mdp}, for any policy $\pi$, let
	$\{\mathbf{w}_{h}^{\pi;(k,j)}\}_{h\in[H]}$ be the corresponding random
	weight vectors satisfying
	\begin{align*}
	    Q_{h}^{\pi;(k,j)}(s,a)
	    =
	    \left\langle
	    \psi(s,a),
	    \mathbf{w}_{h}^{\pi;(k,j)}
	    \right\rangle,
	    \qquad
	    \forall (s,a,h,k,j)
	    \in
	    \mathcal S\times\mathcal A\times[H]\times[K]\times[n].
	\end{align*}
	Then, for all $(h,k,j)\in[H]\times[K]\times[n]$, we have
	\begin{align*}
	    \left\|
	    \mathbf{w}_{h}^{\pi;(k,j)}
	    \right\|_2
	    \le
	    2HB_{\mathrm{bi}}\sqrt d.
	\end{align*}
\end{lemma}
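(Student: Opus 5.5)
The argument follows the proof of Lemma~\ref{lemma:bound_weights_primal_MDP}. The only change is that the true reward parameter $\theta^*$ is replaced by the sampled parameter $\widetilde{\theta}_k^{(j)}$.

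By Proposition~\ref{prop:linearity_Q}, the weight vector has the explicit form
\[
\mathbf{w}_{h}^{\pi;(k,j)} = \widetilde{\theta}_k^{(j)} + \int_{\mathcal S} V_{h+1}^{\pi;(k,j)}(s')\,d\mu_h^*(s').
\]
The triangle inequality then splits $\|\mathbf{w}_{h}^{\pi;(k,j)}\|_2$ into two terms, which we bound separately.

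\textbf{Reward part.} On the event $\mathcal E\cap\mathcal E_{\mathrm{TS}}$, Lemma~\ref{lemma:Bi_universal_bound_of_TS_theta} gives $\|\widetilde{\theta}_k^{(j)}\|_2\le B_{\mathrm{bi}}$ for all $k\in[K]$ and $j\in[n]$.

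\textbf{Per-step reward in $\mathcal M_k^{(j)}$.} For every $(s,a)$, Cauchy--Schwarz and $\|\psi(s,a)\|_2\le 1$ give
\[
|r_k^{(j)}(s,a)| = |\langle\psi(s,a),\widetilde{\theta}_k^{(j)}\rangle| \le B_{\mathrm{bi}}.
\]
The transition kernel of $\mathcal M_k^{(j)}$ is the true $P$, and the value is an expectation of at most $H$ such rewards. Hence, conditional on $\widetilde{\theta}_k^{(j)}$, we have $|V_{h+1}^{\pi;(k,j)}(s')|\le HB_{\mathrm{bi}}$ for all $s'$ and $h$.

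\textbf{Transition part.} The step that needs care is bounding $\|\int V\,d\mu_h^*\|_2$ when the $\mu_h^*$ are signed measures. I would handle it exactly as in Lemma B.1 of (Jin et al.,~\citeyear{jin20_linearmdp}), as used in Lemma~\ref{lemma:bound_weights_primal_MDP}. If $\|V\|_\infty\le c$, then
\[
\Big\|\int_{\mathcal S} V(s')\,d\mu_h^*(s')\Big\|_2 \le c\sqrt d,
\]
which follows from the normalization $\|\mu_h^*(\mathcal S)\|_2\le\sqrt d$ in Assumption~\ref{assump:linear_mdp}, interpreted as in Jin et al. With $c=HB_{\mathrm{bi}}$, the transition part is at most $HB_{\mathrm{bi}}\sqrt d$.

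\textbf{Combining.} The two pieces give
\[
\|\mathbf{w}_{h}^{\pi;(k,j)}\|_2 \le B_{\mathrm{bi}} + HB_{\mathrm{bi}}\sqrt d \le 2HB_{\mathrm{bi}}\sqrt d,
\]
where the last inequality uses $H\ge1$ and $d\ge1$.

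The bound holds uniformly over $(h,k,j)$ and over $\pi$, because the only random quantities used are controlled by the event $\mathcal E\cap\mathcal E_{\mathrm{TS}}$, and that event does not depend on $\pi$ or $h$. The randomness of $\widetilde{\theta}_k^{(j)}$ causes no difficulty: all bounds are pointwise on the event, applied to the realized sample.
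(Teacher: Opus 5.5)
Your proposal is correct and matches the paper's proof step for step. Both arguments use the same chain:
\begin{itemize}
\item the decomposition $\mathbf{w}_h^{\pi;(k,j)}=\widetilde{\theta}_k^{(j)}+\int V_{h+1}^{\pi;(k,j)}\,d\mu_h^*$ from Proposition~\ref{prop:linearity_Q};
\item the bound $\|\widetilde{\theta}_k^{(j)}\|_2\le B_{\mathrm{bi}}$ on $\mathcal E\cap\mathcal E_{\mathrm{TS}}$;
\item the resulting value bound $HB_{\mathrm{bi}}$, which yields the $HB_{\mathrm{bi}}\sqrt d$ transition term via the normalization of $\mu_h^*$;
\item the triangle inequality.
\end{itemize}
Your caveat about signed measures is a fair one; the paper passes over the same point by citing the assumption as in Jin et al.
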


\begin{proof}
	By Proposition~\ref{prop:linearity_Q}, for any
	$(h,k,j)\in[H]\times[K]\times[n]$, we have
	\begin{align*}
	    \mathbf{w}_{h}^{\pi;(k,j)}
	    =
	    \widetilde{\theta}_k^{(j)}
	    +
	    \int_{\mathcal S}
	    V_{h+1}^{\pi;(k,j)}(s')
	    d\mu_h^*(s').
	\end{align*}
	On event $\mathcal E\cap \mathcal E_{\mathrm{TS}}$, by
	Lemma~\ref{lemma:Bi_universal_bound_of_TS_theta} and
	Assumption~\ref{assump:linear_mdp}, we have
	\begin{align*}
	    \left|
	    \left\langle
	    \psi(s,a),
	    \widetilde{\theta}_k^{(j)}
	    \right\rangle
	    \right|
	    \le
	    \left\|
	    \widetilde{\theta}_k^{(j)}
	    \right\|_2
	    \left\|
	    \psi(s,a)
	    \right\|_2
	    \le
	    B_{\mathrm{bi}}.
	\end{align*}
	Thus, the reward of MDP $\mathcal M_k^{(j)}$ lies in
	$[-B_{\mathrm{bi}},B_{\mathrm{bi}}]$. Hence, for any state $s'$, the state value function
	satisfies
	\begin{align*}
	    \left|
	    V_{h+1}^{\pi;(k,j)}(s')
	    \right|
	    \le
	    HB_{\mathrm{bi}}.
	\end{align*}
	Therefore, by Assumption~\ref{assump:linear_mdp},
	\begin{align*}
	    \left\|
	    \int_{\mathcal S}
	    V_{h+1}^{\pi;(k,j)}(s')
	    d\mu_h^*(s')
	    \right\|_2
	    \le
	    HB_{\mathrm{bi}}\sqrt d.
	\end{align*}
	Hence,
	\begin{align*}
	    \left\|
	    \mathbf{w}_{h}^{\pi;(k,j)}
	    \right\|_2
	    &=
	    \left\|
	    \widetilde{\theta}_k^{(j)}
	    +
	    \int_{\mathcal S}
	    V_{h+1}^{\pi;(k,j)}(s')
	    d\mu_h^*(s')
	    \right\|_2
	    \\
	    &\le
	    \left\|
	    \widetilde{\theta}_k^{(j)}
	    \right\|_2
	    +
	    \left\|
	    \int_{\mathcal S}
	    V_{h+1}^{\pi;(k,j)}(s')
	    d\mu_h^*(s')
	    \right\|_2
	    \\
	    &\le
	    B_{\mathrm{bi}}
	    +
	    HB_{\mathrm{bi}}\sqrt d
	    \\
	    &\le
	    2HB_{\mathrm{bi}}\sqrt d.
	\end{align*}

	This completes the proof.  
\end{proof}

\begin{lemma}
\label{lemma:Bi_bound_alg_weight_MDP_j^k}
	Assume that event $\mathcal{E}_{\mathrm{TS}}\cap\mathcal E$ holds.
	Then, for any $(k,h,j)\in[K]\times[H]\times[n]$, the weight vector
	$\mathbf{w}_h^{k,(j)}$ used to construct the estimated state-action
	value function $Q_h^{k,(j)}$ in Algorithm~\ref{alg:unified-lsvi-ts-q}
	satisfies
	\begin{align*}
	    \left\|
	    \mathbf{w}_h^{k,(j)}
	    \right\|_2
	    \le
	    2HB_{\mathrm{bi}}
	    \sqrt{
	    \frac{dk}{\lambda'}
	    }.
	\end{align*}
\end{lemma}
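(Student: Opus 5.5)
This is the standard bound on the least-squares weights, as in Lemma B.2 of (Jin et al., \citeyear{jin20_linearmdp}), adapted to the clipped targets. The plan is to bound the regression target for every sample, then pass that bound through $(\Lambda_h^{k-1})^{-1}$ via a duality/Cauchy--Schwarz argument.

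\textbf{Step 1: bound the regression targets.} Fix $(k,h,j)$ and write $\theta=\widetilde{\theta}_k^{(j)}$. On $\mathcal{E}_{\mathrm{TS}}\cap\mathcal E$, Lemma~\ref{lemma:Bi_universal_bound_of_TS_theta} gives $\|\theta\|_2\le B_{\mathrm{bi}}$. Combined with $\|\psi\|_2\le 1$, this yields $|\psi(s_h^{k'},a_h^{k'})^\top\theta|\le B_{\mathrm{bi}}$. Because of the clipping step in $\lsvitsq$, every $Q_{h+1}^{k}$, and hence $V_{h+1}^k$, takes values in $[-HB_{\mathrm{bi}},HB_{\mathrm{bi}}]$; also $V_{H+1}^k=0$. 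Each target $y_{k'}:=\psi(s_h^{k'},a_h^{k'})^\top\theta+V_{h+1}^k(s_{h+1}^{k'})$ is therefore bounded by $(H+1)B_{\mathrm{bi}}\le 2HB_{\mathrm{bi}}$. Here one uses that the clip acts at every step below $h$.

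\textbf{Step 2: reduce to a fixed direction.} Write $\psi_{k'}:=\psi(s_h^{k'},a_h^{k'})$ and $\Lambda:=\Lambda_h^{k-1}$. For any unit vector $v$,
\[
|v^\top \mathbf w_h^{k,(j)}|=\Bigl|\sum_{k'=1}^{k-1} v^\top\Lambda^{-1}\psi_{k'}\,y_{k'}\Bigr|
\le 2HB_{\mathrm{bi}}\sum_{k'=1}^{k-1}|v^\top\Lambda^{-1}\psi_{k'}|.
\]

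\textbf{Step 3: Cauchy--Schwarz.} Applying Cauchy--Schwarz to the sum gives
\[
\sum_{k'=1}^{k-1}|v^\top\Lambda^{-1}\psi_{k'}|\le\sqrt{k-1}\,\|v\|_{\Lambda^{-1}}\sqrt{\sum_{k'=1}^{k-1}\psi_{k'}^\top\Lambda^{-1}\psi_{k'}}.
\]
Since $\Lambda\succeq\lambda' I$, we have $\|v\|_{\Lambda^{-1}}\le 1/\sqrt{\lambda'}$. The trace identity $\sum_{k'}\psi_{k'}^\top\Lambda^{-1}\psi_{k'}=\trace\bigl(\Lambda^{-1}(\Lambda-\lambda' I)\bigr)\le d$ (Lemma D.1 in Jin et al.) controls the remaining factor. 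Altogether $|v^\top\mathbf w_h^{k,(j)}|\le 2HB_{\mathrm{bi}}\sqrt{dk/\lambda'}$.

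\textbf{Step 4: conclude.} Taking the supremum over unit $v$ gives the claimed bound on $\|\mathbf w_h^{k,(j)}\|_2$. For $k=1$ the sum is empty, so $\mathbf w=0$ and the bound holds trivially.

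The only step needing care is Step 1. The targets involve $V_{h+1}^k$ built from the same sampled $\theta$, and they stay bounded only because of the clip at level $HB_{\mathrm{bi}}$, not because of any optimism or concentration argument. Once that is settled, the rest is routine.
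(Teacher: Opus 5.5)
Your proof is correct and follows the paper's argument. You bound each regression target by $2HB_{\mathrm{bi}}$, using Lemma~\ref{lemma:Bi_universal_bound_of_TS_theta} for the reward term and the clipping at level $HB_{\mathrm{bi}}$ for $V_{h+1}^{k,(j)}$; you then apply Cauchy--Schwarz with $\Lambda_h^{k-1}\succeq\lambda' I$ and $\sum_{k'}\psi_{k'}^\top(\Lambda_h^{k-1})^{-1}\psi_{k'}\le d$, and conclude by duality over unit vectors.
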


\begin{proof}
	This proof follows the analysis of Lemma B.2 in (Jin et al., \citeyear{jin20_linearmdp}).

	On event $\mathcal{E}_{\mathrm{TS}}\cap\mathcal E$, for any
	$(k,h,j)\in[K]\times[H]\times[n]$, the reward induced by
	$\widetilde{\theta}_k^{(j)}$ is bounded by $B_{\mathrm{bi}}$. Moreover, the estimated
	state-action value function $Q_h^{k,(j)}$ is bounded by $HB_{\mathrm{bi}}$.
	Hence, the regression target in Algorithm~\ref{alg:unified-lsvi-ts-q}
	is bounded in absolute value by $2HB_{\mathrm{bi}}$.

	For any vector $\mathbf{v}\in\mathbb{R}^d$, we have
	\begin{align*}
	    \left|
	    \mathbf{v}^{\top}
	    \mathbf{w}_h^{k,(j)}
	    \right|
	    &=
	    \left|
	    \mathbf{v}^{\top}
	    \bigl(\Lambda_h^{k-1}\bigr)^{-1}
	    \sum_{k'=1}^{k-1}
	    \psi(s_h^{k'},a_h^{k'})
	    \left[
	    \psi(s_h^{k'},a_h^{k'})^{\top}
	    \widetilde{\theta}_k^{(j)}
	    +
	    \max_{a\in\mathcal A}
	    Q_{h+1}^{k,(j)}(s_{h+1}^{k'},a)
	    \right]
	    \right|
	    \\
	    &\le
	    2HB_{\mathrm{bi}}
	    \sum_{k'=1}^{k-1}
	    \left|
	    \mathbf{v}^{\top}
	    \bigl(\Lambda_h^{k-1}\bigr)^{-1}
	    \psi(s_h^{k'},a_h^{k'})
	    \right|
	    \\
	    &\le
	    2HB_{\mathrm{bi}}
	    \sqrt{
	    \left[
	    \sum_{k'=1}^{k-1}
	    \mathbf{v}^{\top}
	    \bigl(\Lambda_h^{k-1}\bigr)^{-1}
	    \mathbf{v}
	    \right]
	    \left[
	    \sum_{k'=1}^{k-1}
	    \psi(s_h^{k'},a_h^{k'})^{\top}
	    \bigl(\Lambda_h^{k-1}\bigr)^{-1}
	    \psi(s_h^{k'},a_h^{k'})
	    \right]
	    }
	    \\
	    &\le
	    2HB_{\mathrm{bi}}
	    \|\mathbf{v}\|_2
	    \sqrt{
	    \frac{dk}{\lambda'}
	    }.
	\end{align*}
	The last inequality follows from
	$\Lambda_h^{k-1}\succeq \lambda'I$ and
	\begin{align*}
	    \sum_{k'=1}^{k-1}
	    \psi(s_h^{k'},a_h^{k'})^{\top}
	    \bigl(\Lambda_h^{k-1}\bigr)^{-1}
	    \psi(s_h^{k'},a_h^{k'})
	    \le
	    d.
	\end{align*}
	Therefore, by the dual characterization of the Euclidean norm,
	\begin{align*}
	    \left\|
	    \mathbf{w}_h^{k,(j)}
	    \right\|_2
	    =
	    \max_{\mathbf{v}:\|\mathbf{v}\|_2=1}
	    \left|
	    \mathbf{v}^{\top}
	    \mathbf{w}_h^{k,(j)}
	    \right|
	    \le
	    2HB_{\mathrm{bi}}
	    \sqrt{
	    \frac{dk}{\lambda'}
	    }.
	\end{align*}

	This completes the proof.
\end{proof}

We prove a self-normalized concentration inequality for the value function estimation error under binary
feedback. Fix a constant $c'_{\beta}>0$. For an absolute constant $C_{LS}^{Bi}>0$ independent of
$c'_{\beta}$.
Define the event
\begin{align}
\mathcal E^{\mathrm{Bi}}_{\mathrm{LS}}
:=
\Bigg\{
&
\left\|
\sum_{k'=1}^{k-1}
\psi_h^{k'}
\left[
    V_{h+1}^{k,(j)}(s_{h+1}^{k'})
    -
    P_h V_{h+1}^{k,(j)}(s_h^{k'},a_h^{k'})
\right]
\right\|_{(\Lambda_h^{k-1})^{-1}}
\nonumber
\\
&\qquad\qquad
\le
C_{{LS}}^{Bi} d H B_{\mathrm{bi}} \sqrt{\log
    \left(
        \frac{2(c'_{\beta}+1)dKH}{\delta'}
    \right)},
\quad
\forall (k,h,j)\in[K]\times[H]\times[n]
\Bigg\}.
\label{eq:Bi_event_LS}
\end{align}

\begin{lemma}
\label{lemma:self_normalized_V_concentration}
Let $c'_{\beta}$ be the constant in the definition of
$\beta_{\mathrm{bi}}$, and let $\lambda'=1$. There exists an absolute
constant $C_{{LS}}^{Bi}>0$, independent of $c'_{\beta}$, such that
the event $\mathcal E^{\mathrm{Bi}}_{\mathrm{LS}}$ defined above satisfies
\begin{align*}
    \Pr
    \left[
        \cE
        \cap
        \mathcal E_{\mathrm{TS}}
        \cap
        \left(
        \mathcal E^{\mathrm{Bi}}_{\mathrm{LS}}
        \right)^c
    \right]
    \le
    \delta'.
\end{align*}
\end{lemma}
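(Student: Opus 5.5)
The plan follows Lemma D.4 of (Jin et al., \citeyear{jin20_linearmdp}): combine a self-normalized martingale bound for a \emph{fixed} function $V$ with a covering argument over the class of value functions that $\lsvitsq$ can output. The data-dependent $V_{h+1}^{k,(j)}$ is not measurable with respect to the filtration, so the fixed-$V$ bound cannot be applied to it directly; the covering handles this.

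\textbf{Step 1 (function class).} On $\cE\cap\mathcal E_{\mathrm{TS}}$, Lemma~\ref{lemma:Bi_bound_alg_weight_MDP_j^k} gives $\|\mathbf w_h^{k,(j)}\|_2\le 2HB_{\mathrm{bi}}\sqrt{dK}$ with $\lambda'=1$. Hence every $V_{h+1}^{k,(j)}$ belongs to
\[
\mathcal V=\Bigl\{V(\cdot)=\max_a \operatorname{clip}_{[-HB_{\mathrm{bi}},HB_{\mathrm{bi}}]}\bigl(\mathbf w^\top\psi(\cdot,a)+\beta\|\psi(\cdot,a)\|_{\Lambda^{-1}}\bigr):\|\mathbf w\|_2\le L,\ \beta\le\beta_{\mathrm{bi}},\ \Lambda\succeq I\Bigr\},
\]
where $L=2HB_{\mathrm{bi}}\sqrt{dK}$. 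The randomness of $\widetilde\theta_k^{(j)}$ enters only through $\mathbf w$, which is why this class is uniform over $j\in[n]$. Every $V\in\mathcal V$ satisfies $|V|\le HB_{\mathrm{bi}}$.

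\textbf{Step 2 (fixed $V$).} For each fixed $V\in\mathcal V$, apply the self-normalized bound of Abbasi-Yadkori et al.\ (Theorem D.3 in Jin et al.). The noise terms $V(s_{h+1}^{k'})-P_hV(s_h^{k'},a_h^{k'})$ are martingale differences bounded by $2HB_{\mathrm{bi}}$ with respect to the filtration generated by episodes $1,\dots,k'$ and step $h$ of episode $k'$. This yields, with probability $1-\delta_0$,
\[
\Bigl\|\sum_{k'}\psi_h^{k'}\bigl[V(s_{h+1}^{k'})-P_hV(s_h^{k'},a_h^{k'})\bigr]\Bigr\|_{(\Lambda_h^{k-1})^{-1}}^2\le 8H^2B_{\mathrm{bi}}^2\Bigl(\tfrac d2\log(1+K)+\log\tfrac1{\delta_0}\Bigr).
\]

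\textbf{Step 3 (covering and union bound).} Take an $\epsilon$-net $\mathcal N_\epsilon$ of $\mathcal V$ in sup-norm, using the log-covering bound of Lemma D.6 in Jin et al.:
\[
\log|\mathcal N_\epsilon|\le d\log\bigl(1+4L/\epsilon\bigr)+d^2\log\bigl(1+8\sqrt d\,\beta_{\mathrm{bi}}^2/\epsilon^2\bigr),
\]
since clipping is 1-Lipschitz and does not increase distances. Union bound Step 2 over $\mathcal N_\epsilon$ and over $h\in[H]$ with $\delta_0=\delta'/(H|\mathcal N_\epsilon|)$; the bound then holds for all $k$ simultaneously because Step 2 is uniform in time. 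The $\epsilon$-approximation error adds $8k^2\epsilon^2$ to the squared norm. Choose $\epsilon=dHB_{\mathrm{bi}}/K$, so this term is $O(d^2H^2B_{\mathrm{bi}}^2)$.

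\textbf{Step 4 (simplify the log, main obstacle).} The main obstacle is showing that the resulting logarithm is at most a constant times $\log(2(c'_\beta+1)dKH/\delta')$ with a constant $C_{LS}^{Bi}$ independent of $c'_\beta$. The substitution $\beta_{\mathrm{bi}}=c'_\beta dHB_{\mathrm{bi}}\sqrt{\log(2dKH/\delta')}$ makes the net size depend on $c'_\beta$. After the choice of $\epsilon$, however, the ratio $\beta_{\mathrm{bi}}^2/\epsilon^2$ is
\[
\frac{\beta_{\mathrm{bi}}^2}{\epsilon^2}=(c'_\beta)^2K^2\log(2dKH/\delta'),
\]
in which $B_{\mathrm{bi}}$ cancels, and $L/\epsilon=2\sqrt{dK}K/d$ contains no $B_{\mathrm{bi}}$ or $c'_\beta$. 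So the whole log term is $O(d^2\log((c'_\beta+1)dKH/\delta'))$, with $c'_\beta$ entering only inside the logarithm. Taking square roots gives the stated bound $C_{LS}^{Bi}dHB_{\mathrm{bi}}\sqrt{\log(2(c'_\beta+1)dKH/\delta')}$. Intersecting with $\cE\cap\mathcal E_{\mathrm{TS}}$, which is needed for the weight bound in Step 1, gives $\Pr[\cE\cap\mathcal E_{\mathrm{TS}}\cap(\mathcal E^{\mathrm{Bi}}_{\mathrm{LS}})^c]\le\delta'$.

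One point to check is that $\mathcal E_{\mathrm{TS}}$ involves future Gaussian samples. This is harmless: the net argument bounds the deviation uniformly over the deterministic class $\mathcal V$, and $\cE\cap\mathcal E_{\mathrm{TS}}$ is used only to certify that the realized $V_{h+1}^{k,(j)}$ lies in $\mathcal V$.
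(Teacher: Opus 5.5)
Your proposal is correct and follows the paper's proof essentially step for step. You certify on $\cE\cap\mathcal E_{\mathrm{TS}}$, via the weight bound of Lemma~\ref{lemma:Bi_bound_alg_weight_MDP_j^k}, that $V_{h+1}^{k,(j)}$ lies in the deterministic clipped class of Lemma~\ref{lemma:cover_num_LSVI_V}, apply the covering-based self-normalized bound (Lemma~\ref{lemma:uniform_self_normalized}) with $\lambda'=1$ and $\epsilon=dHB_{\mathrm{bi}}/K$, and use the fact that $c'_\beta$ enters only inside the logarithm through $\beta_{\mathrm{bi}}^2/\epsilon^2$, so $C_{LS}^{Bi}$ can be taken independent of $c'_\beta$. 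Your explicit computation of $\beta_{\mathrm{bi}}^2/\epsilon^2$ and $L/\epsilon$ simply spells out the step the paper compresses into ``there exists an absolute constant $C'_0$ independent of $c'_\beta$.''
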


\begin{proof}
This proof follows the analysis of Lemma B.3 in
(Jin et al., \citeyear{jin20_linearmdp}).

On event $\mathcal E_{\mathrm{TS}}\cap\cE$, for all
$(k,h,j)\in[K]\times[H]\times[n]$,
Lemma~\ref{lemma:Bi_bound_alg_weight_MDP_j^k} gives
\begin{align*}
    \left\|
    \mathbf w_h^{k,(j)}
    \right\|_2
    \le
    2HB_{\mathrm{bi}}
    \sqrt{
    \frac{dK}{\lambda'}
    }.
\end{align*}
In addition, by the definition of $\Lambda_h^{k-1}$, we have
\[
    \lambda_{\min}
    \left(
    \Lambda_h^{k-1}
    \right)
    \ge
    \lambda'.
\]

Hence, on event $\mathcal E_{\mathrm{TS}}\cap\cE$, for all
$(k,h,j)\in[K]\times[H]\times[n]$, the function
$V_{h+1}^{k,(j)}$ belongs to the function class characterized in
Lemma~\ref{lemma:cover_num_LSVI_V}.

Combining Lemmas~\ref{lemma:uniform_self_normalized} and
\ref{lemma:cover_num_LSVI_V}, we obtain that, with probability at least
$1-\delta'$, the following implication holds: when
$\mathcal E_{\mathrm{TS}}\cap\cE$ occurs, for any fixed
$\epsilon>0$ and all $(k,h,j)\in[K]\times[H]\times[n]$,
\begin{align}
&
\left\|
\sum_{k'=1}^{k-1}
\psi_h^{k'}
\left[
    V_{h+1}^{k,(j)}(s_{h+1}^{k'})
    -
    P_h V_{h+1}^{k,(j)}(s_h^{k'},a_h^{k'})
\right]
\right\|_{(\Lambda_h^{k-1})^{-1}}^2
\nonumber
\\
&\le
4H^2B_{\mathrm{bi}}^2
\Bigg[
    \frac d2
    \log
    \left(
        \frac{k+\lambda'}{\lambda'}
    \right)
    +
    d
    \log
    \left(
        1+
        \frac{8HB_{\mathrm{bi}}\sqrt{dK}}
        {\epsilon\sqrt{\lambda'}}
    \right)
    +
    d^2
    \log
    \left(
        1+
        \frac{8d^{1/2}\beta_{\mathrm{bi}}^2}
        {\epsilon^2\lambda'}
    \right)
    +
    \log
    \left(
        \frac{H}{\delta'}
    \right)
\Bigg]
+
\frac{8k^2\epsilon^2}{\lambda'} .
\label{eq:Bi_concentration_value_bound}
\end{align}
Now set $\lambda'=1$ and choose
\begin{align*}
    \beta_{\mathrm{bi}}
    &=
    c'_{\beta}dHB_{\mathrm{bi}}
    \sqrt{
    \log
    \left(
        \frac{2dKH}{\delta'}
    \right)
    },
    \qquad
    \epsilon
    =
    \frac{dHB_{\mathrm{bi}}}{K}.
\end{align*}
Plugging these choices into
Eq.~\eqref{eq:Bi_concentration_value_bound}, we conclude that there
exists an absolute constant $C'_0>0$, independent of $c'_{\beta}$,
such that
\begin{align*}
&
\left\|
\sum_{k'=1}^{k-1}
\psi_h^{k'}
\left[
    V_{h+1}^{k,(j)}(s_{h+1}^{k'})
    -
    P_h V_{h+1}^{k,(j)}(s_h^{k'},a_h^{k'})
\right]
\right\|_{(\Lambda_h^{k-1})^{-1}}^2
\le
C'_0 d^2H^2B_{\mathrm{bi}}^2\log
    \left(
        \frac{2(c'_{\beta}+1)dKH}{\delta'}
    \right) .
\end{align*}
Choose the constant in the definition of
$\mathcal E^{\mathrm{Bi}}_{\mathrm{LS}}$ such that
\[
    C_{{LS}}^{Bi}
    \ge
    \sqrt{C'_0}.
\]
Then
\begin{align*}
&
\left\|
\sum_{k'=1}^{k-1}
\psi_h^{k'}
\left[
    V_{h+1}^{k,(j)}(s_{h+1}^{k'})
    -
    P_h V_{h+1}^{k,(j)}(s_h^{k'},a_h^{k'})
\right]
\right\|_{(\Lambda_h^{k-1})^{-1}}
\le
C_{{LS}}^{Bi}dHB_{\mathrm{bi}}\sqrt{\log
    \left(
        \frac{2(c'_{\beta}+1)dKH}{\delta'}
    \right)}.
\end{align*}
By the definition of $\mathcal E^{\mathrm{Bi}}_{\mathrm{LS}}$, therefore,
\begin{align*}
    \Pr
    \left[
        \cE
        \cap
        \mathcal E_{\mathrm{TS}}
        \cap
        \left(
        \mathcal E^{\mathrm{Bi}}_{\mathrm{LS}}
        \right)^c
    \right]
    \le
    \delta'.
\end{align*}

This completes the proof.
\end{proof}

Next, we establish upper bounds on the value function estimation errors of Algorithm~\ref{alg:unified-lsvi-ts-q} in the original and auxiliary MDPs.

\begin{lemma}[Upper Bound for estimator error in Original MDP]
\label{lemma:Bi_upper_bound_LSVI-TS}
    Assume that event 
    $\cE \cap \mathcal E_{\mathrm{TS}}
    \cap \mathcal E^{\mathrm{Bi}}_{\mathrm{LS}}$ holds.
    Choose the absolute constant $c'_\beta$ in the definition of
    $\beta_{\mathrm{bi}}$ sufficiently large.
    Then, for any policy $\pi$, for all
    $(s,a,h,k,j)\in
    \mathcal{S}\times\mathcal{A}\times[H]\times[K]\times[n]$, we have
    \begin{align*}
       \left|
        \left\langle
            \psi(s,a),
            \mathbf{w}_h^{k,(j)}
        \right\rangle
        -
        Q_h^{\pi}(s,a)
        -
        P_h
        \left(
            V_{h+1}^{k,(j)}
            -
            V_{h+1}^{\pi}
        \right)(s,a)
        -
        \left\langle
            \psi(s,a),
            \widetilde{\theta}_k^{(j)}-\theta^*
        \right\rangle
        \right|
        \\
        \le
        \beta_{\mathrm{bi}}
        \sqrt{
            \bigl(\psi(s,a)\bigr)^\top
            \bigl(\Lambda_h^{k-1}\bigr)^{-1}
            \psi(s,a)
        } .
    \end{align*}
\end{lemma}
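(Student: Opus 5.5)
This is the analogue of Lemma B.4 in (Jin et al., \citeyear{jin20_linearmdp}). The plan is to split $\langle\psi(s,a),\mathbf w_h^{k,(j)}\rangle$ into three pieces: a regularization bias, a self-normalized martingale term, and the desired Bellman-type term. Write $\Lambda:=\Lambda_h^{k-1}$ and $\psi_h^{k'}:=\psi(s_h^{k'},a_h^{k'})$.

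By Proposition~\ref{proposition:linear_Q_prime}, $Q_h^\pi(s,a)=\langle\psi(s,a),\mathbf w_h^\pi\rangle$ with
\[
\mathbf w_h^\pi=\theta^*+\int V_{h+1}^\pi\, d\mu_h^*.
\]
Likewise, the linear MDP structure gives
\[
P_h\bigl(V_{h+1}^{k,(j)}-V_{h+1}^\pi\bigr)(s,a)=\Bigl\langle\psi(s,a),\int (V_{h+1}^{k,(j)}-V_{h+1}^\pi)\,d\mu_h^*\Bigr\rangle,
\]
and $\langle\psi(s,a),\widetilde\theta_k^{(j)}-\theta^*\rangle$ is already linear. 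So the target quantity equals $\langle\psi(s,a),\mathbf w_h^{k,(j)}-\mathbf u\rangle$, where
\[
\mathbf u:=\widetilde\theta_k^{(j)}+\int V_{h+1}^{k,(j)}\,d\mu_h^*.
\]
Expanding $\mathbf u=\Lambda^{-1}\bigl(\lambda'\mathbf u+\sum_{k'}\psi_h^{k'}(\psi_h^{k'})^\top\mathbf u\bigr)$ and substituting the definition of $\mathbf w_h^{k,(j)}$ from Algorithm~\ref{alg:unified-lsvi-ts-q} gives
\[
\mathbf w_h^{k,(j)}-\mathbf u=-\lambda'\Lambda^{-1}\mathbf u+\Lambda^{-1}\sum_{k'=1}^{k-1}\psi_h^{k'}\bigl[V_{h+1}^{k,(j)}(s_{h+1}^{k'})-P_hV_{h+1}^{k,(j)}(s_h^{k'},a_h^{k'})\bigr].
\]
The reward parts cancel exactly because the regression target uses the same $\widetilde\theta_k^{(j)}$ that appears in $\mathbf u$, and $(\psi_h^{k'})^\top\!\int V\,d\mu_h^*=P_hV(s_h^{k'},a_h^{k'})$.

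Next, bound the two terms using Cauchy--Schwarz in the $\Lambda^{-1}$ geometry, so that $|\langle\psi,\Lambda^{-1}x\rangle|\le\|\psi\|_{\Lambda^{-1}}\|x\|_{\Lambda^{-1}}$.
\begin{itemize}
\item \textbf{Bias term.} $|\lambda'\langle\psi,\Lambda^{-1}\mathbf u\rangle|\le\sqrt{\lambda'}\,\|\mathbf u\|_2\,\|\psi\|_{\Lambda^{-1}}$. Since $|V_{h+1}^{k,(j)}|\le HB_{\mathrm{bi}}$ by clipping, and $\|\widetilde\theta_k^{(j)}\|_2\le B_{\mathrm{bi}}$ on $\mathcal E\cap\mathcal E_{\mathrm{TS}}$ by Lemma~\ref{lemma:Bi_universal_bound_of_TS_theta}, we get $\|\mathbf u\|_2\le 2HB_{\mathrm{bi}}\sqrt d$ with $\lambda'=1$.
\item \textbf{Martingale term.} On $\mathcal E^{\mathrm{Bi}}_{\mathrm{LS}}$ this is at most
\[
C^{Bi}_{LS}\,dHB_{\mathrm{bi}}\sqrt{\log\bigl(2(c'_\beta+1)dKH/\delta'\bigr)}\;\|\psi\|_{\Lambda^{-1}}.
\]
\end{itemize}

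Summing the two bounds gives a coefficient of the form
\[
dHB_{\mathrm{bi}}\Bigl(C_{LS}^{Bi}\sqrt{\log\bigl(2(c'_\beta+1)dKH/\delta'\bigr)}+2\Bigr).
\]
The main obstacle is a circularity: $C^{Bi}_{LS}$ is independent of $c'_\beta$, but the covering argument inside $\mathcal E^{\mathrm{Bi}}_{\mathrm{LS}}$ depends on $\beta_{\mathrm{bi}}$, so a $\log(c'_\beta+1)$ term appears. To close it, I will show that
\[
C^{Bi}_{LS}\sqrt{\log\bigl(2dKH/\delta'\bigr)+\log(c'_\beta+1)}+2\le c'_\beta\sqrt{\log\bigl(2dKH/\delta'\bigr)}
\]
for all sufficiently large $c'_\beta$. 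This holds because $\log(2dKH/\delta')\ge\log 2$ is bounded below, while $\sqrt{\log(c'_\beta+1)}$ grows much more slowly than $c'_\beta$. The right-hand side of the inequality is exactly $\beta_{\mathrm{bi}}/(dHB_{\mathrm{bi}})$, which yields the claimed bound for all $(s,a,h,k,j)$ simultaneously.
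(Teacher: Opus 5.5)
Your proposal is correct and follows the paper's proof: the Lemma B.4-style decomposition from Jin et al., the event $\mathcal E^{\mathrm{Bi}}_{\mathrm{LS}}$ for the martingale term, and a sufficiently large $c'_\beta$ to absorb the $\log(c'_\beta+1)$ factor. The only difference is cosmetic. You compare $\mathbf w_h^{k,(j)}$ directly with $\mathbf u=\widetilde\theta_k^{(j)}+\int V_{h+1}^{k,(j)}\,d\mu_h^*$, which merges the paper's three regularization terms $q_1$, $p_2$, $q_4$ (whose contributions sum exactly to $-\lambda'\langle\psi(s,a),(\Lambda_h^{k-1})^{-1}\mathbf u\rangle$) into a single term, so the $\pi$-independence of the bound is explicit and Lemma~\ref{lemma:bound_weights_primal_MDP} is not needed.
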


\begin{proof}
   By Proposition~\ref{proposition:linear_Q_prime} and the Bellman
    equation, for any
    $(s,a,h)\in\mathcal{S}\times\mathcal{A}\times[H]$, the state-action
    value function satisfies
    \begin{align*}
        Q_h^{\pi}(s,a)
        &:=
        \left\langle
            \psi(s,a),
            \mathbf{w}_h^{\pi}
        \right\rangle
        =
        \left(
            \langle\psi,\theta^*\rangle
            +
            P_h V_{h+1}^{\pi}
        \right)(s,a).
    \end{align*}
    Therefore,
    \begin{align*}
        &\mathbf{w}_h^{k,(j)}-\mathbf{w}_h^{\pi}
        \\
        &=
        \bigl(\Lambda_h^{k-1}\bigr)^{-1}
        \sum_{k'=1}^{k-1}
        \psi_h^{k'}
        \left[
            \langle
                \psi_h^{k'},
                \widetilde{\theta}_k^{(j)}
            \rangle
            +
            V_{h+1}^{k,(j)}(s_{h+1}^{k'})
        \right]
        -
        \mathbf{w}_h^{\pi}
        \\
        &=
        \bigl(\Lambda_h^{k-1}\bigr)^{-1}
        \left\{
            -\lambda'\mathbf{w}_h^{\pi}
            +
            \sum_{k'=1}^{k-1}
            \psi_h^{k'}
            \left[
                V_{h+1}^{k,(j)}(s_{h+1}^{k'})
                -
                P_hV_{h+1}^{\pi}(s_h^{k'},a_h^{k'})
            \right]
        \right.
        \\
        &\hspace{5.5cm}\left.
            +
            \sum_{k'=1}^{k-1}
            \psi_h^{k'}
            \bigl(\psi_h^{k'}\bigr)^\top
            \bigl(
                \widetilde{\theta}_k^{(j)}-\theta^*
            \bigr)
        \right\}
        \\
        &=
        \underbrace{
            -\lambda'
            \bigl(\Lambda_h^{k-1}\bigr)^{-1}
            \mathbf{w}_h^{\pi}
        }_{q_1}
        +
        \underbrace{
            \bigl(\Lambda_h^{k-1}\bigr)^{-1}
            \sum_{k'=1}^{k-1}
            \psi_h^{k'}
            \left[
                V_{h+1}^{k,(j)}(s_{h+1}^{k'})
                -
                P_hV_{h+1}^{k,(j)}
                (s_h^{k'},a_h^{k'})
            \right]
        }_{q_2}
        \\
        &\quad
        +
        \underbrace{
            \bigl(\Lambda_h^{k-1}\bigr)^{-1}
            \sum_{k'=1}^{k-1}
            \psi_h^{k'}
            P_h
            \left(
                V_{h+1}^{k,(j)}
                -
                V_{h+1}^{\pi}
            \right)(s_h^{k'},a_h^{k'})
        }_{q_3}
        +
        \bigl(
            \widetilde{\theta}_k^{(j)}-\theta^*
        \bigr)
        \underbrace{
            -\lambda'
            \bigl(\Lambda_h^{k-1}\bigr)^{-1}
            \bigl(
                \widetilde{\theta}_k^{(j)}-\theta^*
            \bigr)
        }_{q_4}.
    \end{align*}

    We next control the four terms above separately. For the first term,
    \begin{align*}
        \left|
        \left\langle
            \psi(s,a),
            q_1
        \right\rangle
        \right|
        &=
        \lambda'
        \left|
        \left\langle
            \psi(s,a),
            \bigl(\Lambda_h^{k-1}\bigr)^{-1}
            \mathbf{w}_h^{\pi}
        \right\rangle
        \right|
        \\
        &\le
        \sqrt{\lambda'}
        \left\|
            \mathbf{w}_h^{\pi}
        \right\|_2
        \sqrt{
            \bigl(\psi(s,a)\bigr)^\top
            \bigl(\Lambda_h^{k-1}\bigr)^{-1}
            \psi(s,a)
        } .
    \end{align*}
    For the second term, on event
    $\mathcal E^{\mathrm{Bi}}_{\mathrm{LS}}$, by
    Eq.~\eqref{eq:Bi_event_LS}, we have
    \begin{align*}
        \left|
        \left\langle
            \psi(s,a),
            q_2
        \right\rangle
        \right|
        \le
        C_{{LS}}^{{Bi}}
        \cdot dHB_{\mathrm{bi}}\sqrt{\log
    \left(
        \frac{2(c'_{\beta}+1)dKH}{\delta'}
    \right)}
        \sqrt{
            \bigl(\psi(s,a)\bigr)^\top
            \bigl(\Lambda_h^{k-1}\bigr)^{-1}
            \psi(s,a)
        } .
    \end{align*}
    For the third term, using
    Assumption~\ref{assump:linear_mdp}, we obtain
    \begin{align*}
        \left\langle
            \psi(s,a),
            q_3
        \right\rangle
        &=
        \left\langle
            \psi(s,a),
            \bigl(\Lambda_h^{k-1}\bigr)^{-1}
            \sum_{k'=1}^{k-1}
            \psi_h^{k'}
            P_h
            \left(
                V_{h+1}^{k,(j)}
                -
                V_{h+1}^{\pi}
            \right)(s_h^{k'},a_h^{k'})
        \right\rangle
        \\
        &=
        \left\langle
            \psi(s,a),
            \bigl(\Lambda_h^{k-1}\bigr)^{-1}
            \sum_{k'=1}^{k-1}
            \psi_h^{k'}
            \bigl(\psi_h^{k'}\bigr)^\top
            \int
            \left(
                V_{h+1}^{k,(j)}
                -
                V_{h+1}^{\pi}
            \right)(s')
            d\mu_h^*(s')
        \right\rangle
        \\
        &=
        \underbrace{
            \left\langle
                \psi(s,a),
                \int
                \left(
                    V_{h+1}^{k,(j)}
                    -
                    V_{h+1}^{\pi}
                \right)(s')
                d\mu_h^*(s')
            \right\rangle
        }_{p_1}
        \\
        &\quad
        \underbrace{
            -
            \lambda'
            \left\langle
                \psi(s,a),
                \bigl(\Lambda_h^{k-1}\bigr)^{-1}
                \int
                \left(
                    V_{h+1}^{k,(j)}
                    -
                    V_{h+1}^{\pi}
                \right)(s')
                d\mu_h^*(s')
            \right\rangle
        }_{p_2}.
    \end{align*}
    By Assumption~\ref{assump:linear_mdp} and
    $B_{\mathrm{bi}}\ge r_{\max}$, we have
    \begin{align*}
        p_1
        &=
        P_h
        \left(
            V_{h+1}^{k,(j)}
            -
            V_{h+1}^{\pi}
        \right)(s,a),
        \qquad
        |p_2|
        \le
        4HB_{\mathrm{bi}}\sqrt{d\lambda'}
        \sqrt{
            \bigl(\psi(s,a)\bigr)^\top
            \bigl(\Lambda_h^{k-1}\bigr)^{-1}
            \psi(s,a)
        } .
    \end{align*}
    For the fourth term, on event
    $\cE\cap\mathcal E_{\mathrm{TS}}$, we have
    \begin{align*}
        \left|
        \left\langle
            \psi(s,a),
            q_4
        \right\rangle
        \right|
        &=
        \lambda'
        \left|
        \left\langle
            \psi(s,a),
            \bigl(\Lambda_h^{k-1}\bigr)^{-1}
            \bigl(
                \widetilde{\theta}_k^{(j)}
                -
                \theta^*
            \bigr)
        \right\rangle
        \right|
        \\
        &\le
        \lambda'
        \sqrt{
            \bigl(\psi(s,a)\bigr)^\top
            \bigl(\Lambda_h^{k-1}\bigr)^{-1}
            \psi(s,a)
        }
        \left\|
            \widetilde{\theta}_k^{(j)}
            -
            \theta^*
        \right\|_{\bigl(\Lambda_h^{k-1}\bigr)^{-1}}
        \\
        &\le
        \sqrt{\lambda'}B_{\mathrm{bi}}
        \sqrt{
            \bigl(\psi(s,a)\bigr)^\top
            \bigl(\Lambda_h^{k-1}\bigr)^{-1}
            \psi(s,a)
        } .
    \end{align*}
    Since
    \begin{align*}
        \left\langle
            \psi(s,a),
            \mathbf{w}_h^{k,(j)}
        \right\rangle
        -
        Q_h^{\pi}(s,a)
        &=
        \left\langle
            \psi(s,a),
            q_1+q_2+q_3+q_4
        \right\rangle
        +
        \left\langle
            \psi(s,a),
            \widetilde{\theta}_k^{(j)}-\theta^*
        \right\rangle,
    \end{align*}
    by Lemma~\ref{lemma:bound_weights_primal_MDP} and our choice of
    $\lambda'$, there exists an absolute constant $c'>0$, independent
    of $c'_\beta$, such that
    \begin{align}
        &
        \left|
        \left\langle
            \psi(s,a),
            \mathbf{w}_h^{k,(j)}
        \right\rangle
        -
        Q_h^{\pi}(s,a)
        -
        P_h
        \left(
            V_{h+1}^{k,(j)}
            -
            V_{h+1}^{\pi}
        \right)(s,a)
        -
        \left\langle
            \psi(s,a),
            \widetilde{\theta}_k^{(j)}-\theta^*
        \right\rangle
        \right|
        \notag
        \\
        &\le
        c'\cdot dHB_{\mathrm{bi}}\sqrt{\log
    \left(
        \frac{2(c'_{\beta}+1)dKH}{\delta'}
    \right)}
        \sqrt{
            \bigl(\psi(s,a)\bigr)^\top
            \bigl(\Lambda_h^{k-1}\bigr)^{-1}
            \psi(s,a)
        } .
        \label{eq:Bi_c'_bound}
    \end{align}
    It remains to ensure that the right-hand side above is bounded by
    the bonus term with coefficient $\beta_{\mathrm{bi}}$.
    It is sufficient to choose
    $c'_\beta$ such that
    \begin{align}
        c'
        \sqrt{
            \log\left(\frac{2dKH}{\delta'}\right)
            +
            \log(c'_\beta+1)
        }
        \le
        c'_\beta
        \sqrt{
            \log\left(\frac{2dKH}{\delta'}\right)
        } .
        \label{eq:Bi_cbeta_choice}
    \end{align}
    Since
    $\log\left(\frac{2dKH}{\delta'}\right)\in[\log 2,\infty)$
    and $c'$ is an absolute constant independent of $c'_\beta$, we can
    find a sufficiently large absolute constant $c'_\beta$ satisfying
    \[
        c'
        \sqrt{
            \log 2+\log(c'_\beta+1)
        }
        \le
        c'_\beta\sqrt{\log 2}.
    \]
    This choice guarantees Eq.~\eqref{eq:Bi_cbeta_choice}. Hence,
    \begin{align*}
        c'\cdot dHB_{\mathrm{bi}}\sqrt{\log
    \left(
        \frac{2(c'_{\beta}+1)dKH}{\delta'}
    \right)}
        \le
        \beta_{\mathrm{bi}} .
    \end{align*}
    Combining this inequality with Eq.~\eqref{eq:Bi_c'_bound} proves
    the result.

    This completes the proof.
\end{proof}

\begin{lemma}[Upper Bound for estimator error in $\cM_k^{(j)}$]
\label{lemma:Bi_upper_bound_LSVI_kj}
    Assume that event 
    $\cE \cap \mathcal E_{\mathrm{TS}}
    \cap \mathcal E^{\mathrm{Bi}}_{\mathrm{LS}}$ holds.
    Choose the absolute constant $c'_\beta$ in the definition of
    $\beta_{\mathrm{bi}}$ sufficiently large.
    Then, for any policy $\pi$, for all
    $(s,a,h,k,j)\in
    \mathcal{S}\times\mathcal{A}\times[H]\times[K]\times[n]$, we have
    \begin{align*}
        \left|
        \left\langle
            \psi(s,a),
            \mathbf{w}_h^{k,(j)}
        \right\rangle
        -
        Q_h^{\pi;(k,j)}(s,a)
        -
        P_h
        \left(
            V_{h+1}^{k,(j)}
            -
            V_{h+1}^{\pi;(k,j)}
        \right)(s,a)
        \right|
        \\
        \le
        \beta_{\mathrm{bi}}
        \sqrt{
            \bigl(\psi(s,a)\bigr)^\top
            \bigl(\Lambda_h^{k-1}\bigr)^{-1}
            \psi(s,a)
        } .
    \end{align*}
\end{lemma}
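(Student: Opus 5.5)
The proof will mirror that of Lemma~\ref{lemma:Bi_upper_bound_LSVI-TS}. The only change is that the comparison is made inside the auxiliary MDP $\cM_k^{(j)}$, where the reward parameter is exactly $\widetilde{\theta}_k^{(j)}$. By Proposition~\ref{prop:linearity_Q} and the Bellman equation in $\cM_k^{(j)}$,
\[
\mathbf{w}_h^{\pi;(k,j)}=\widetilde{\theta}_k^{(j)}+\int_{\cS}V_{h+1}^{\pi;(k,j)}(s')\,d\mu_h^*(s').
\]
Hence $\langle\psi_h^{k'},\widetilde{\theta}_k^{(j)}\rangle+P_hV_{h+1}^{\pi;(k,j)}(s_h^{k'},a_h^{k'})=\langle\psi_h^{k'},\mathbf{w}_h^{\pi;(k,j)}\rangle$. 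Substituting this into the LSVI formula for $\mathbf{w}_h^{k,(j)}$, I would write $\mathbf{w}_h^{k,(j)}-\mathbf{w}_h^{\pi;(k,j)}=q_1+q_2+q_3$ with
\[
q_1=-\lambda'(\Lambda_h^{k-1})^{-1}\mathbf{w}_h^{\pi;(k,j)},
\]
\[
q_2=(\Lambda_h^{k-1})^{-1}\sum_{k'}\psi_h^{k'}\bigl[V_{h+1}^{k,(j)}(s_{h+1}^{k'})-P_hV_{h+1}^{k,(j)}(s_h^{k'},a_h^{k'})\bigr],
\]
\[
q_3=(\Lambda_h^{k-1})^{-1}\sum_{k'}\psi_h^{k'}P_h\bigl(V_{h+1}^{k,(j)}-V_{h+1}^{\pi;(k,j)}\bigr)(s_h^{k'},a_h^{k'}).
\]
The reward-mismatch terms ($q_4$ and the explicit $\langle\psi,\widetilde{\theta}_k^{(j)}-\theta^*\rangle$ term) from the original-MDP lemma do not appear, because the reward in the regression target coincides with the auxiliary reward.

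Next I would bound the three terms.
\begin{itemize}
\item For $q_1$, Cauchy--Schwarz gives $|\langle\psi(s,a),q_1\rangle|\le\sqrt{\lambda'}\,\|\mathbf{w}_h^{\pi;(k,j)}\|_2\,\|\psi(s,a)\|_{(\Lambda_h^{k-1})^{-1}}$. On $\cE\cap\mathcal E_{\mathrm{TS}}$, Lemma~\ref{lemma:Bi_bound_Vweights_mdp_j^k} gives $\|\mathbf{w}_h^{\pi;(k,j)}\|_2\le 2HB_{\mathrm{bi}}\sqrt d$.
\item For $q_2$, the event $\mathcal E^{\mathrm{Bi}}_{\mathrm{LS}}$ bounds the self-normalized sum by $C_{LS}^{Bi}dHB_{\mathrm{bi}}\sqrt{\log(2(c'_\beta+1)dKH/\delta')}$. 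This term is identical to the one in the original-MDP case, since it only involves $V_{h+1}^{k,(j)}$.
\item For $q_3$, linearity of $P_h$ (Assumption~\ref{assump:linear_mdp}) lets me write the sum as $\sum_{k'}\psi_h^{k'}(\psi_h^{k'})^\top\int(V_{h+1}^{k,(j)}-V_{h+1}^{\pi;(k,j)})\,d\mu_h^*$. This splits into the exact term $P_h(V_{h+1}^{k,(j)}-V_{h+1}^{\pi;(k,j)})(s,a)$ plus a $-\lambda'$ remainder. Both value functions are bounded by $HB_{\mathrm{bi}}$: the clipped LSVI output by construction, and $V^{\pi;(k,j)}$ by Lemma~\ref{lemma:Bi_universal_bound_of_TS_theta}. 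Together with $\|\mu_h^*(\cS)\|_2\le\sqrt d$, the remainder is at most $2HB_{\mathrm{bi}}\sqrt{d\lambda'}\,\|\psi(s,a)\|_{(\Lambda_h^{k-1})^{-1}}$.
\end{itemize}

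Summing the three bounds with $\lambda'=1$ shows that the left-hand side of the claim is at most $c''dHB_{\mathrm{bi}}\sqrt{\log(2(c'_\beta+1)dKH/\delta')}\,\|\psi(s,a)\|_{(\Lambda_h^{k-1})^{-1}}$, where $c''$ is an absolute constant independent of $c'_\beta$. The final step repeats the argument used around Eq.~\eqref{eq:Bi_cbeta_choice}: choose $c'_\beta$ large enough that $c''\sqrt{\log 2+\log(c'_\beta+1)}\le c'_\beta\sqrt{\log 2}$. The resulting coefficient is then at most $\beta_{\mathrm{bi}}$, and this can be done simultaneously with the choice required by Lemma~\ref{lemma:Bi_upper_bound_LSVI-TS}.

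The only delicate point is that $c''$, and the constant $C_{LS}^{Bi}$ entering it, must not depend on $c'_\beta$; otherwise the self-referential choice of $c'_\beta$ would fail. This is already ensured by Lemma~\ref{lemma:self_normalized_V_concentration}. Beyond that I expect no real obstacle, since this lemma is simpler than the original-MDP version.
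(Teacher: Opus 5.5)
Your proposal is correct and follows the paper's proof essentially line for line. It uses the same decomposition $\mathbf{w}_h^{k,(j)}-\mathbf{w}_h^{\pi;(k,j)}=q_1+q_2+q_3$ in the auxiliary MDP, bounds $q_1$ via Lemma~\ref{lemma:Bi_bound_Vweights_mdp_j^k} and $q_2$ on $\mathcal E^{\mathrm{Bi}}_{\mathrm{LS}}$, splits $q_3$ into $P_h(V_{h+1}^{k,(j)}-V_{h+1}^{\pi;(k,j)})(s,a)$ plus a $\lambda'$-remainder, and makes the same self-referential choice of $c'_\beta$. Your remainder bound $2HB_{\mathrm{bi}}\sqrt{d\lambda'}$ is slightly tighter than the paper's $4HB_{\mathrm{bi}}\sqrt{d\lambda'}$, which makes no difference to the result.
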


\begin{proof}
    This proof follows the analysis of Lemma~B.4 in
    (Jin et al.,~\citeyear{jin20_linearmdp}).
    
    Consider the MDP $\cM_k^{(j)}$ with reward parameter
    $\widetilde{\theta}_k^{(j)}$.
    By Proposition~\ref{prop:linearity_Q} and the Bellman equation, for
    any $(s,a,h)\in\mathcal{S}\times\mathcal{A}\times[H]$, the
    state-action value function satisfies
    \begin{align*}
        Q_h^{\pi;(k,j)}(s,a)
        &:=
        \left\langle
            \psi(s,a),
            \mathbf{w}_h^{\pi;(k,j)}
        \right\rangle
        \\
        &=
        \left(
            \langle
                \psi,
                \widetilde{\theta}_k^{(j)}
            \rangle
            +
            P_hV_{h+1}^{\pi;(k,j)}
        \right)(s,a).
    \end{align*}
    This implies
    \begin{align*}
        \mathbf{w}_h^{k,(j)}
        -
        \mathbf{w}_h^{\pi;(k,j)}
        &=
        \bigl(\Lambda_h^{k-1}\bigr)^{-1}
        \sum_{k'=1}^{k-1}
        \psi_h^{k'}
        \left[
            \langle
                \psi_h^{k'},
                \widetilde{\theta}_k^{(j)}
            \rangle
            +
            V_{h+1}^{k,(j)}(s_{h+1}^{k'})
        \right]
        -
        \mathbf{w}_h^{\pi;(k,j)}
        \\
        &=
        \bigl(\Lambda_h^{k-1}\bigr)^{-1}
        \left\{
            -\lambda'\mathbf{w}_h^{\pi;(k,j)}
            +
            \sum_{k'=1}^{k-1}
            \psi_h^{k'}
            \left[
                V_{h+1}^{k,(j)}(s_{h+1}^{k'})
                -
                P_hV_{h+1}^{\pi;(k,j)}
                (s_h^{k'},a_h^{k'})
            \right]
        \right\}
        \\
        &=
        \underbrace{
            -\lambda'
            \bigl(\Lambda_h^{k-1}\bigr)^{-1}
            \mathbf{w}_h^{\pi;(k,j)}
        }_{q_1}
        +
        \underbrace{
            \bigl(\Lambda_h^{k-1}\bigr)^{-1}
            \sum_{k'=1}^{k-1}
            \psi_h^{k'}
            \left[
                V_{h+1}^{k,(j)}(s_{h+1}^{k'})
                -
                P_hV_{h+1}^{k,(j)}
                (s_h^{k'},a_h^{k'})
            \right]
        }_{q_2}
        \\
        &\quad
        +
        \underbrace{
            \bigl(\Lambda_h^{k-1}\bigr)^{-1}
            \sum_{k'=1}^{k-1}
            \psi_h^{k'}
            P_h
            \left(
                V_{h+1}^{k,(j)}
                -
                V_{h+1}^{\pi;(k,j)}
            \right)(s_h^{k'},a_h^{k'})
        }_{q_3}.
    \end{align*}

    We now bound these three terms separately. For the first term,
    \begin{align*}
        \left|
        \left\langle
            \psi(s,a),
            q_1
        \right\rangle
        \right|
        &=
        \lambda'
        \left|
        \left\langle
            \psi(s,a),
            \bigl(\Lambda_h^{k-1}\bigr)^{-1}
            \mathbf{w}_h^{\pi;(k,j)}
        \right\rangle
        \right|
        \\
        &\le
        \sqrt{\lambda'}
        \left\|
            \mathbf{w}_h^{\pi;(k,j)}
        \right\|_2
        \sqrt{
            \bigl(\psi(s,a)\bigr)^\top
            \bigl(\Lambda_h^{k-1}\bigr)^{-1}
            \psi(s,a)
        } .
    \end{align*}
    For the second term, on event
    $\mathcal E^{\mathrm{Bi}}_{\mathrm{LS}}$, by
    Eq.~\eqref{eq:Bi_event_LS}, we have
    \begin{align*}
        \left|
        \left\langle
            \psi(s,a),
            q_2
        \right\rangle
        \right|
        \le
        C_{{LS}}^{{Bi}}
        \cdot dHB_{\mathrm{bi}}\sqrt{\log
    \left(
        \frac{2(c'_{\beta}+1)dKH}{\delta'}
    \right)}
        \sqrt{
            \bigl(\psi(s,a)\bigr)^\top
            \bigl(\Lambda_h^{k-1}\bigr)^{-1}
            \psi(s,a)
        } .
    \end{align*}
    For the third term, by Assumption~\ref{assump:linear_mdp}, we have
    \begin{align*}
        \left\langle
            \psi(s,a),
            q_3
        \right\rangle
        &=
        \left\langle
            \psi(s,a),
            \bigl(\Lambda_h^{k-1}\bigr)^{-1}
            \sum_{k'=1}^{k-1}
            \psi_h^{k'}
            P_h
            \left(
                V_{h+1}^{k,(j)}
                -
                V_{h+1}^{\pi;(k,j)}
            \right)(s_h^{k'},a_h^{k'})
        \right\rangle
        \\
        &=
        \left\langle
            \psi(s,a),
            \bigl(\Lambda_h^{k-1}\bigr)^{-1}
            \sum_{k'=1}^{k-1}
            \psi_h^{k'}
            \bigl(\psi_h^{k'}\bigr)^\top
            \int
            \left(
                V_{h+1}^{k,(j)}
                -
                V_{h+1}^{\pi;(k,j)}
            \right)(s')
            d\mu_h^*(s')
        \right\rangle
        \\
        &=
        \underbrace{
            \left\langle
                \psi(s,a),
                \int
                \left(
                    V_{h+1}^{k,(j)}
                    -
                    V_{h+1}^{\pi;(k,j)}
                \right)(s')
                d\mu_h^*(s')
            \right\rangle
        }_{p_1}
        \\
        &\quad
        \underbrace{
            -
            \lambda'
            \left\langle
                \psi(s,a),
                \bigl(\Lambda_h^{k-1}\bigr)^{-1}
                \int
                \left(
                    V_{h+1}^{k,(j)}
                    -
                    V_{h+1}^{\pi;(k,j)}
                \right)(s')
                d\mu_h^*(s')
            \right\rangle
        }_{p_2}.
    \end{align*}
    By Assumption~\ref{assump:linear_mdp}, we have
    \begin{align*}
        p_1
        &=
        P_h
        \left(
            V_{h+1}^{k,(j)}
            -
            V_{h+1}^{\pi;(k,j)}
        \right)(s,a),
        \qquad
        |p_2|
        \le
        4HB_{\mathrm{bi}}\sqrt{d\lambda'}
        \sqrt{
            \bigl(\psi(s,a)\bigr)^\top
            \bigl(\Lambda_h^{k-1}\bigr)^{-1}
            \psi(s,a)
        } .
    \end{align*}
    Since
    \begin{align*}
        \left\langle
            \psi(s,a),
            \mathbf{w}_h^{k,(j)}
        \right\rangle
        -
        Q_h^{\pi;(k,j)}(s,a)
        =
        \left\langle
            \psi(s,a),
            q_1+q_2+q_3
        \right\rangle ,
    \end{align*}
    on event $\cE\cap\mathcal E_{\mathrm{TS}}$, by
    Lemma~\ref{lemma:Bi_bound_Vweights_mdp_j^k} and our choice of
    $\lambda'$, there exists an absolute constant $c_0>0$, independent
    of $c'_\beta$, such that
    \begin{align}
        &
        \left|
        \left\langle
            \psi(s,a),
            \mathbf{w}_h^{k,(j)}
        \right\rangle
        -
        Q_h^{\pi;(k,j)}(s,a)
        -
        P_h
        \left(
            V_{h+1}^{k,(j)}
            -
            V_{h+1}^{\pi;(k,j)}
        \right)(s,a)
        \right|
        \notag
        \\
        &\le
        c_0\cdot dHB_{\mathrm{bi}}\sqrt{\log
    \left(
        \frac{2(c'_{\beta}+1)dKH}{\delta'}
    \right)}
        \sqrt{
            \bigl(\psi(s,a)\bigr)^\top
            \bigl(\Lambda_h^{k-1}\bigr)^{-1}
            \psi(s,a)
        }  \notag
    \\&
        \le
        \beta_{\mathrm{bi}}
        \sqrt{
            \bigl(\psi(s,a)\bigr)^\top
            \bigl(\Lambda_h^{k-1}\bigr)^{-1}
            \psi(s,a)
        } .
        \label{eq:Bi_c0_bound}
    \end{align}
    The last inequality follows by choosing $c'_\beta$ sufficiently
    large so that Eq.~\eqref{eq:Bi_cbeta_choice} and
    Eq.~\eqref{eq:Bi_c0_bound} hold simultaneously.

    This completes the proof.
\end{proof}

Define event
\[
    \mathcal Q^{\mathrm{Bi}}
    :=
    \left\{
    \forall k\in [K],
    \exists j\in[n]:
    \bigl(\phi^{\pi^*}(s_1^k)\bigr)^\top
    \widetilde{\theta}_k^{(j)}
    \ge
    \bigl(\phi^{\pi^*}(s_1^k)\bigr)^\top
    \theta^*
    \right\},
\]

\begin{lemma}\label{lemma:Bi_TS_p_optim}
Set
$    n
    =
    \left\lceil
        2\sqrt{2\pi e}
        \log\left(\frac{K}{\delta'}\right)
    \right\rceil .$
Then
\begin{align*}
    \Pr\left(
        \bigl(\mathcal Q^{\mathrm{Bi}}\bigr)^c
        \cap \cE
    \right)
    \leq \delta' .
\end{align*}
\end{lemma}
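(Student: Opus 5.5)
The plan is to fix an episode $k$, show that conditional on the history $F_{k-1}$ each of the $n$ samples is optimistic along $\phi^{\pi^*}(s_1^k)$ with probability at least $p:=\frac{1}{2\sqrt{2\pi e}}$, use conditional independence of the $n$ Gaussian draws to make the failure probability geometric in $n$, and finish with a union bound over $k\in[K]$. The $F_{k-1}$ here should include $s_1^k$. This is legitimate because the sampling vectors $g_k^{(j)}$ are drawn after $s_1^k$ is received and are independent of it.

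First, observe that $X:=\phi^{\pi^*}(s_1^k)$ is measurable with respect to the history augmented by $s_1^k$, since $\pi^*$ and $P$ are fixed and $s_1^k$ is revealed before sampling. On $\cE$, Lemma~\ref{lemma:lin_confence_interval_proj_free} applied at index $k-1$ gives $|X^\top(\theta^*-\hat{\theta}^{\mathrm{bi}}_{k-1})|\le \sqrt{\alpha}\,\nu(k-1)\|X\|_{(\Sigma^{\mathrm{bi}}_{k-1})^{-1}}$. Note that $\hat{\theta}^{\mathrm{bi}}_{k-1}$ and $\Sigma^{\mathrm{bi}}_{k-1}$ coincide with the known-transition quantities, because only the reward estimator enters.

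Second, conditionally, $X^\top(\widetilde{\theta}_k^{(j)}-\hat{\theta}^{\mathrm{bi}}_{k-1})=\rho^{\mathrm{bi}}_{k-1}X^\top(\Sigma^{\mathrm{bi}}_{k-1})^{-1/2}g_k^{(j)}$ is Gaussian with standard deviation exactly $\sqrt{\alpha}\,\nu(k-1)\|X\|_{(\Sigma^{\mathrm{bi}}_{k-1})^{-1}}$. The argument of Lemma~\ref{lemma:lin_Gaussian_anti_concentration} therefore applies verbatim and gives the conditional optimism probability at least $p$. If $X=0$, the inequality holds trivially with equality.

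The main obstacle is that $\cE$ is a global event and is not measurable at time $k$. The conditional statement must be written carefully. I would use the per-episode event $\cE_{k-1}:=\{\|g_{k-1}(\hat\theta_{k-1})-g_{k-1}(\theta^*)\|_{\Lambda^{-1}_{k-1}(\theta^*)}\le\omega(k-1)\}$, which is $F_{k-1}$-measurable and is implied by $\cE$. Lemma~\ref{lemma:lin_confence_interval_proj_free} only uses this single-index condition.

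Let $A_k$ denote the event that no $j$ is optimistic at episode $k$. By the conditional independence of $g_k^{(1)},\dots,g_k^{(n)}$,
\[
\Pr\bigl(A_k\cap\cE_{k-1}\bigr)=\E\bigl[\indicator\{\cE_{k-1}\}\Pr(A_k\mid F_{k-1},s_1^k)\bigr]\le(1-p)^n\le e^{-pn}.
\]
With $n\ge \frac{1}{p}\log\frac{K}{\delta'}$, this is at most $\delta'/K$. Then $(\mathcal Q^{\mathrm{Bi}})^c\cap\cE\subseteq\bigcup_{k}(A_k\cap\cE_{k-1})$, and the union bound over $k\in[K]$ yields probability at most $\delta'$.
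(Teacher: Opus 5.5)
Your proposal is correct and follows the paper's proof essentially step by step. The paper also conditions on the history augmented by $s_1^k$ (its $\cG_k$) and replaces the global event $\cE$ by the $\cG_k$-measurable per-episode event $\cE^{k-1}$, exactly as you do. It then gets the per-sample optimism probability $\frac{1}{2\sqrt{2\pi e}}$ from the same anti-concentration argument and bounds the failure probability by $(1-p_0)^n\le e^{-np_0}$ via conditional independence and the tower rule. It finishes with $\cE\subseteq\cE^{k-1}$ and a union bound over $k\in[K]$.
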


\begin{proof}
For each $k\in[K]$, define $ X_k := \phi^{\pi^*}(s_1^k)$.
Let $\cG_k$ denote the filtration including the history up to the end
of episode $k-1$ and the initial state $s_1^k$ of episode $k$. Then
$X_k$, $\hat{\theta}_{k-1}^{\mathrm{bi}}$, and
$\Sigma_{k-1}^{\mathrm{bi}}$ are fixed given $\cG_k$.

For any $k>0$, define the local concentration event
\[
    \cE^{k-1}
    =
    \left\{
        \left\|
            g_{k-1}\bigl(\hat{\theta}_{k-1}^{\mathrm{bi}}\bigr)
            -
            g_{k-1}(\theta^*)
        \right\|_{\Lambda_{k-1}^{-1}(\theta^*)}
        \leq \omega(k-1)
    \right\}.
\]
On event $\cE^{k-1}$, by
Lemma~\ref{lemma:lin_confence_interval_proj_free}, we have
\[
    \left|
        X_k^\top
        \left(
            \theta^*-\hat{\theta}_{k-1}^{\mathrm{bi}}
        \right)
    \right|
    \leq
    \sqrt{\alpha}\cdot\nu(k-1)
    \left\|X_k\right\|_{(\Sigma_{k-1}^{\mathrm{bi}})^{-1}} .
\]
In particular,
\[
    X_k^\top
    \left(
        \theta^*-\hat{\theta}_{k-1}^{\mathrm{bi}}
    \right)
    \leq
    \sqrt{\alpha}\cdot\nu(k-1)
    \left\|X_k\right\|_{(\Sigma_{k-1}^{\mathrm{bi}})^{-1}} .
\]

For each $j\in[n]$, conditional on $\cG_k$, we have
\[
    \widetilde{\theta}_k^{(j)}
    =
    \hat{\theta}_{k-1}^{\mathrm{bi}}
    +
    \xi_k^{(j)},
    \qquad
    \xi_k^{(j)}
    \sim
    \mathcal N\left(
        0,
        \alpha\cdot\nu(k-1)^2
        (\Sigma_{k-1}^{\mathrm{bi}})^{-1}
    \right),
\]
and the samples are conditionally independent. Therefore,
\[
    X_k^\top\xi_k^{(j)}
    \sim
    \mathcal N\left(
        0,
        \alpha\cdot\nu(k-1)^2
        \left\|X_k\right\|_{(\Sigma_{k-1}^{\mathrm{bi}})^{-1}}^2
    \right).
\]
Consequently, on $\cE^{k-1}$, following the same anti-concentration
argument as in Lemma~\ref{lemma:lin_Gaussian_anti_concentration}, we
obtain
\[
    \Pr\left[
        X_k^\top\widetilde{\theta}_k^{(j)}
        \geq
        X_k^\top\theta^*
        \,\middle|\,
        \cG_k
    \right]
    \geq
    \Pr(Z\geq 1)
    \geq
    \frac{1}{2\sqrt{2\pi e}}
    =:p_0,
\]
where $Z\sim\mathcal N(0,1)$. If
$\left\|X_k\right\|_{(\Sigma_{k-1}^{\mathrm{bi}})^{-1}}=0$, then the
same lower bound holds trivially.

Define
\[
    \mathcal Q_k^{\mathrm{Bi}}
    =
    \left\{
        \exists j\in[n]:
        X_k^\top\widetilde{\theta}_k^{(j)}
        \geq
        X_k^\top\theta^*
    \right\}.
\]
By the conditional independence of
$\{\widetilde{\theta}_k^{(j)}\}_{j\in[n]}$ given $\cG_k$, we have
\[
    \mathbf 1_{\cE^{k-1}}
    \Pr\left(
        \bigl(\mathcal Q_k^{\mathrm{Bi}}\bigr)^c
        \,\middle|\,
        \cG_k
    \right)
    \leq
    \mathbf 1_{\cE^{k-1}}(1-p_0)^n
    \leq
    \mathbf 1_{\cE^{k-1}}\exp(-np_0).
\]
Since $\cE^{k-1}$ is also measurable with respect to $\cG_k$, taking
the tower rule of expectation on both sides, we obtain
\[
    \Pr\left(
        \bigl(\mathcal Q_k^{\mathrm{Bi}}\bigr)^c
        \cap \cE^{k-1}
    \right)
    \leq
    \exp(-np_0).
\]
Moreover,
\[
    \cE
    =
    \bigcap_{k>0}\cE^{k-1}
    \subseteq
    \cE^{k-1}.
\]
Thus,
\[
    \Pr\left(
        \bigl(\mathcal Q_k^{\mathrm{Bi}}\bigr)^c
        \cap \cE
    \right)
    \leq
    \exp(-np_0).
\]
Since
\[
    \mathcal Q^{\mathrm{Bi}}
    =
    \bigcap_{k=1}^{K}\mathcal Q_k^{\mathrm{Bi}},
\]
by the union bound,
\[
    \Pr\left(
        \bigl(\mathcal Q^{\mathrm{Bi}}\bigr)^c
        \cap \cE
    \right)
    \leq
    \sum_{k=1}^{K}
    \Pr\left(
        \bigl(\mathcal Q_k^{\mathrm{Bi}}\bigr)^c
        \cap \cE
    \right)
    \leq
    K\exp(-np_0).
\]
By the choice of $n$, we have
\[
    K\exp(-np_0)\leq\delta'.
\]
Therefore,
\[
    \Pr\left(
        \bigl(\mathcal Q^{\mathrm{Bi}}\bigr)^c
        \cap \cE
    \right)
    \leq \delta'.
\]

This completes the proof.

\end{proof}

Let ${\pi^*}_k^{(j)}$, $Q_{h}^{{\pi^*}_k^{(j)};(k,j)}$, and 
$V_{h}^{{\pi^*}_k^{(j)};(k,j)}$ denote the optimal policy, state-action value function, and state value function of MDP $\cM_k^{(j)}$.

\begin{lemma}\label{lemma:Bi_UCB_MDP_jk}
   Assume that event 
   $\cE \cap \mathcal E_{\mathrm{TS}} \cap \mathcal E^{\mathrm{Bi}}_{\mathrm{LS}}$ holds.
   Then, for all 
   $(s,a,h,k,j)\in \mathcal{S} \times \mathcal{A} \times [H] \times [K] \times [n]$, we have
\[
Q_h^{k,(j)}(s,a) 
\ge 
Q_h^{{\pi^*}_k^{(j)};(k,j)}(s,a).
\]
\end{lemma}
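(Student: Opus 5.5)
The plan is the standard LSVI-UCB optimism argument (Lemma B.5 in Jin et al.), applied to the auxiliary MDP $\cM_k^{(j)}$ with $k$ and $j$ fixed. We induct backward on $h$ from $h=H+1$ down to $h=1$. At $h=H+1$ everything is zero. The convention $Q_{H+1}\equiv 0$ makes the base case trivial; equivalently, the case $h=H$ is handled inside the general step because $V_{H+1}^{k,(j)}=V_{H+1}^{{\pi^*}_k^{(j)};(k,j)}=0$.

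For the inductive step, assume $Q_{h+1}^{k,(j)}\ge Q_{h+1}^{{\pi^*}_k^{(j)};(k,j)}$ pointwise. Taking maxima over actions gives $V_{h+1}^{k,(j)}(s)\ge V_{h+1}^{{\pi^*}_k^{(j)};(k,j)}(s)$ for all $s$, since the optimal policy of $\cM_k^{(j)}$ is greedy with respect to its own $Q$. Next, apply Lemma~\ref{lemma:Bi_upper_bound_LSVI_kj} with $\pi={\pi^*}_k^{(j)}$. This is valid on $\cE\cap\mathcal E_{\mathrm{TS}}\cap\mathcal E^{\mathrm{Bi}}_{\mathrm{LS}}$ with $c'_\beta$ chosen large. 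It yields
\[
\langle\psi(s,a),\mathbf w_h^{k,(j)}\rangle+\beta_{\mathrm{bi}}\|\psi(s,a)\|_{(\Lambda_h^{k-1})^{-1}}
\ge Q_h^{{\pi^*}_k^{(j)};(k,j)}(s,a)+P_h\bigl(V_{h+1}^{k,(j)}-V_{h+1}^{{\pi^*}_k^{(j)};(k,j)}\bigr)(s,a).
\]
Because $P_h$ is a probability kernel, the last term is nonnegative by the inductive hypothesis. Hence the unclipped quantity dominates $Q_h^{{\pi^*}_k^{(j)};(k,j)}(s,a)$.

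The step needing care is the clipping $\operatorname{clip}_{[-HB_{\mathrm{bi}},HB_{\mathrm{bi}}]}$ in Algorithm~\ref{alg:unified-lsvi-ts-q}. Clipping is monotone, so from $x\ge y$ we get $\operatorname{clip}(x)\ge\operatorname{clip}(y)$. It therefore suffices that $Q_h^{{\pi^*}_k^{(j)};(k,j)}(s,a)$ already lies in $[-HB_{\mathrm{bi}},HB_{\mathrm{bi}}]$, so that clipping leaves it unchanged. This holds on $\cE\cap\mathcal E_{\mathrm{TS}}$: by Lemma~\ref{lemma:Bi_universal_bound_of_TS_theta} and $\|\psi\|_2\le1$, the rewards of $\cM_k^{(j)}$ lie in $[-B_{\mathrm{bi}},B_{\mathrm{bi}}]$. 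The value from step $h$ onward is a sum of at most $H-h+1\le H$ such rewards, so it lies in $[-HB_{\mathrm{bi}},HB_{\mathrm{bi}}]$. Combining gives
\[
Q_h^{k,(j)}(s,a)=\operatorname{clip}(\cdot)\ge \operatorname{clip}\bigl(Q_h^{{\pi^*}_k^{(j)};(k,j)}(s,a)\bigr)=Q_h^{{\pi^*}_k^{(j)};(k,j)}(s,a),
\]
which completes the induction for all $(s,a,h,k,j)$.

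The main obstacle is making sure Lemma~\ref{lemma:Bi_upper_bound_LSVI_kj} may be invoked with the data-dependent policy ${\pi^*}_k^{(j)}$. I expect this to be fine, because that lemma is stated for every policy $\pi$ uniformly on the event. The uniform concentration in $\mathcal E^{\mathrm{Bi}}_{\mathrm{LS}}$ involves only the algorithm's $V_{h+1}^{k,(j)}$, not $\pi$. The remaining terms $q_1$ and $q_3$ are bounded deterministically through Lemma~\ref{lemma:Bi_bound_Vweights_mdp_j^k}, which again holds for any $\pi$. So no extra union bound over policies is needed.
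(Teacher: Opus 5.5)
Your proposal is correct and matches the paper's proof: backward induction on $h$, invoking Lemma~\ref{lemma:Bi_upper_bound_LSVI_kj} with $\pi={\pi^*}_k^{(j)}$, using the inductive hypothesis to get $P_h\bigl(V_{h+1}^{k,(j)}-V_{h+1}^{{\pi^*}_k^{(j)};(k,j)}\bigr)\ge 0$, and using $Q_h^{{\pi^*}_k^{(j)};(k,j)}\in[-HB_{\mathrm{bi}},HB_{\mathrm{bi}}]$ so that clipping preserves the inequality. Your remarks on the monotonicity of the clip and on applying that lemma to the data-dependent policy ${\pi^*}_k^{(j)}$ spell out steps the paper leaves implicit.
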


\begin{proof}
This proof follows the analysis of Lemma~B.5 in (Jin et al., \citeyear{jin20_linearmdp}).

We prove the claim by backward induction on $h$.
At the last step $H$, we have 
\[
V_{H+1}^{k,(j)} = V_{H+1}^{{\pi^*}_k^{(j)};(k,j)} = 0 .
\]
Applying Lemma~\ref{lemma:Bi_upper_bound_LSVI_kj} with $\pi={\pi^*}_k^{(j)}$, we obtain
\[
\left| 
\left\langle 
\psi(s,a), 
\mathbf{w}_H^{k,(j)} 
\right\rangle 
-
Q_H^{{\pi^*}_k^{(j)};(k,j)}(s,a) 
\right|
\le
\beta_{\mathrm{bi}} 
\sqrt{ 
\bigl(\psi(s,a)\bigr)^\top 
\bigl(\Lambda_H^{k-1}\bigr)^{-1} 
\psi(s,a) 
}.
\]
Since $Q_H^{{\pi^*}_k^{(j)};(k,j)}(s,a) \in [-HB_{\mathrm{bi}},HB_{\mathrm{bi}}]$, it follows that
\[
Q_H^{{\pi^*}_k^{(j)};(k,j)}(s,a) 
\le 
\operatorname{clip}_{[-HB_{\mathrm{bi}}, HB_{\mathrm{bi}}]}
        \left(
        \bigl(\mathbf{w}_H^{k,(j)}\bigr)^\top \psi(s,a)
        +
        \beta_{\mathrm{bi}}
        \sqrt{
        \bigl(\psi(s,a)\bigr)^\top
        \bigl(\Lambda_H^{k-1}\bigr)^{-1}
        \psi(s,a)
        }
        \right)
=
Q_H^{k,(j)}(s,a).
\]

Now suppose that the claim holds at step $h+1$. We prove it for step $h$.
Again, applying Lemma~\ref{lemma:Bi_upper_bound_LSVI_kj} with $\pi={\pi^*}_k^{(j)}$, we have
\[
\left| 
\left\langle 
\psi(s,a), 
\mathbf{w}_h^{k,(j)} 
\right\rangle 
-
Q_h^{{\pi^*}_k^{(j)};(k,j)}(s,a)
-
P_h
\left(
V_{h+1}^{k,(j)}
-
V_{h+1}^{{\pi^*}_k^{(j)};(k,j)}
\right)(s,a)
\right|
\\
\le
\beta_{\mathrm{bi}} 
\sqrt{
\bigl(\psi(s,a)\bigr)^\top 
\bigl(\Lambda_h^{k-1}\bigr)^{-1} 
\psi(s,a)
}.
\]

By the induction hypothesis,
\[V_{h+1}^{k,(j)}(s')
\ge
V_{h+1}^{{\pi^*}_k^{(j)};(k,j)}(s'),\]
and hence
$P_h
\left(
V_{h+1}^{k,(j)}
-
V_{h+1}^{{\pi^*}_k^{(j)};(k,j)}
\right)(s,a)
\ge
0.$

Therefore,
\begin{align*}
    Q_h^{{\pi^*}_k^{(j)};(k,j)}(s,a) 
&\le 
\operatorname{clip}_{[-HB_{\mathrm{bi}}, HB_{\mathrm{bi}}]}
        \left(
        \bigl(\mathbf{w}_h^{k,(j)}\bigr)^\top \psi(s,a)
        +
        \beta_{\mathrm{bi}}
        \sqrt{
        \bigl(\psi(s,a)\bigr)^\top
        \bigl(\Lambda_h^{k-1}\bigr)^{-1}
        \psi(s,a)
        }
        \right) \\
&=
Q_h^{k,(j)}(s,a).
\end{align*}

This completes the proof.
\end{proof}

\begin{lemma}[Optimism]\label{lemma:Bi_LSVI_TS_optim}
Assume that event 
$\cE \cap \mathcal E_{\mathrm{TS}} \cap \mathcal E^{\mathrm{Bi}}_{\mathrm{LS}}\cap \mathcal Q^{\mathrm{Bi}}$ holds.
For any episode \(k\) and realized initial state $s_1^k$, let $j^+$ be selected by Algorithm~\ref{alg:unified-seg-lsvi-ts} such that
$    j^+
    =
    \arg\max_{j\in[n]}
    V_1^{k,(j)}(s_1^k).$
Then, we have
\begin{align*}
    V_1^{k,(j^+)}(s_1^k)
    \ge
    V_1^*(s_1^k).
\end{align*}
\end{lemma}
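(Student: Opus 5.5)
The plan is to chain three inequalities, each supplied by an earlier lemma, exactly as in the heuristic displayed in Section~\ref{sec:unknown-unified-framework}. Fix an episode $k$ and the realized initial state $s_1^k$. On $\mathcal Q^{\mathrm{Bi}}$ there is an index $j'\in[n]$ with
\[
\bigl(\phi^{\pi^*}(s_1^k)\bigr)^\top\widetilde{\theta}_k^{(j')}\ge\bigl(\phi^{\pi^*}(s_1^k)\bigr)^\top\theta^* .
\]
This is the only place the Thompson-sampling randomness enters. I will then show
\[
V_1^{k,(j^+)}(s_1^k)\ \ge\ V_1^{k,(j')}(s_1^k)\ \ge\ V_1^{{\pi^*}_k^{(j')};(k,j')}(s_1^k)\ \ge\ V_1^{\pi^*;(k,j')}(s_1^k)\ =\ \bigl(\phi^{\pi^*}(s_1^k)\bigr)^\top\widetilde{\theta}_k^{(j')}\ \ge\ V_1^*(s_1^k).
\]

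The first inequality is immediate from the definition of $j^+$ as the maximizer of $V_1^{k,(j)}(s_1^k)$ over $j\in[n]$. For the second, I invoke Lemma~\ref{lemma:Bi_UCB_MDP_jk}, which holds on $\cE\cap\mathcal E_{\mathrm{TS}}\cap\mathcal E^{\mathrm{Bi}}_{\mathrm{LS}}$ and gives $Q_1^{k,(j')}(s,a)\ge Q_1^{{\pi^*}_k^{(j')};(k,j')}(s,a)$ for all $(s,a)$. Taking $\max_a$ on both sides, using $V_1^{k,(j')}(s)=\max_a Q_1^{k,(j')}(s,a)$ and that the optimal value of $\cM_k^{(j')}$ is the max over actions of its optimal $Q$, yields the claim at $s=s_1^k$. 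The third inequality is the optimality of ${\pi^*}_k^{(j')}$ in the auxiliary MDP $\cM_k^{(j')}$, applied to the competitor policy $\pi^*$ (the optimal policy of the original MDP).

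For the equality, note that $\cM_k^{(j')}$ shares the true transition kernel $P$ and has reward $\langle\psi(s,a),\widetilde{\theta}_k^{(j')}\rangle$, conditional on $\widetilde{\theta}_k^{(j')}$. By linearity of expectation,
\[
V_1^{\pi^*;(k,j')}(s_1^k)=\E\Bigl[\sum_{h}\psi(s_h,a_h)\,\Big|\,\pi^*,s_1^k\Bigr]^\top\widetilde{\theta}_k^{(j')}=\bigl(\phi^{\pi^*}(s_1^k)\bigr)^\top\widetilde{\theta}_k^{(j')} .
\]
The same computation with $\theta^*$ gives $V_1^*(s_1^k)=\bigl(\phi^{\pi^*}(s_1^k)\bigr)^\top\theta^*$, and the last inequality is then exactly the $\mathcal Q^{\mathrm{Bi}}$ property of $j'$.

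The main point needing care is the step through Lemma~\ref{lemma:Bi_UCB_MDP_jk}, and specifically the clipping in $\lsvitsq$. Its proof relies on $Q^{{\pi^*}_k^{(j')};(k,j')}_h\in[-HB_{\mathrm{bi}},HB_{\mathrm{bi}}]$, which holds because, on $\cE\cap\mathcal E_{\mathrm{TS}}$, Lemma~\ref{lemma:Bi_universal_bound_of_TS_theta} bounds the auxiliary rewards by $B_{\mathrm{bi}}$. This is why all four events must be assumed simultaneously. Given those lemmas, the remaining argument is bookkeeping: the index $j'$ depends on $k$ and $s_1^k$, so the chain holds for every $k$ and every realized initial state, which is what the statement asserts.
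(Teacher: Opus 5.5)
Your proposal is correct and follows the paper's proof essentially step for step. It uses the same chain of four ingredients: optimism of $\lsvitsq$ in each auxiliary MDP via Lemma~\ref{lemma:Bi_UCB_MDP_jk}, optimality of ${\pi^*}_k^{(j')}$ against $\pi^*$, the identity $V_1^{\pi^*;(k,j')}(s_1^k)=(\phi^{\pi^*}(s_1^k))^\top\widetilde{\theta}_k^{(j')}$, and the index $j'$ from $\mathcal Q^{\mathrm{Bi}}$ combined with the definition of $j^+$. The only cosmetic difference is that you take $\max_a$ on both sides of the $Q$-comparison, whereas the paper evaluates $Q_1^{k,(j)}$ at the action ${\pi^*}_k^{(j)}(s_1^k)$.
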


\begin{proof}
    On event $\cE \cap \mathcal E_{\mathrm{TS}} \cap \mathcal E^{\mathrm{Bi}}_{\mathrm{LS}}$, by Lemma~\ref{lemma:Bi_UCB_MDP_jk}, we have
    \[
        Q_h^{k,(j)}(s,a)
        \ge
        Q_h^{{\pi^*}_k^{(j)};(k,j)}(s,a),
        \qquad
        \forall (s,a,h,k,j)
        \in
        \mathcal{S}\times\mathcal{A}\times[H]\times[K]\times[n].
    \]
    Given $s_1^k$, it follows that
    \begin{align*}
        V_1^{k,(j)}(s^k_1)
        &=
        \max_a Q_1^{k,(j)}(s^k_1,a)
        \ge
        Q_1^{k,(j)}
        \left(
            s_1^k,
            {\pi^*}_k^{(j)}(s_1^k)
        \right)
\ge
        V_1^{{\pi^*}_k^{(j)};(k,j)}(s_1^k)
        \ge
        V_1^{\pi^*;(k,j)}(s_1^k),
    \end{align*}
    where the last inequality follows from the definition of the optimal state value function
    $V_1^{{\pi^*}_k^{(j)};(k,j)}$ in MDP $\cM_k^{(j)}$, and $\pi^*$ denotes the optimal policy of the original MDP.

    Moreover, under the same transition kernel, we can write
    $V_1^{\pi^*;(k,j)}(s_1^k)$ and $V_1^*(s_1^k)$ as
    \[
        V_1^{\pi^*;(k,j)}(s_1^k)
        =
        \left(\phi^{\pi^*}(s_1^k)\right)^\top
        \widetilde{\theta}_k^{(j)},
        \qquad
        V_1^*(s_1^k)
        =
        \left(\phi^{\pi^*}(s_1^k)\right)^\top
        \theta^* .
    \]
    On event $\mathcal Q^{\mathrm{Bi}}$, for the given episode $k$, there exists $j'\in[n]$ such that
    \begin{align*}
        \left(\phi^{\pi^*}(s_1^k)\right)^\top
        \widetilde{\theta}_k^{(j')}
        \ge
        \left(\phi^{\pi^*}(s_1^k)\right)^\top
        \theta^* .
    \end{align*}
    Hence,
    \begin{align*}
        V_1^{\pi^*;(k,j')}(s_1^k)
        \ge
        V_1^*(s_1^k).
    \end{align*}

    By the definition of $j^+$ in Algorithm~\ref{alg:unified-seg-lsvi-ts}, we have
    \begin{align*}
        V_1^{k,(j^+)}(s^k_1)
        \ge
        V_1^{k,(j')}(s^k_1)
        \ge
        V_1^{\pi^*;(k,j')}(s_1^k)
        \ge
        V_1^*(s_1^k).
    \end{align*}
    This proves the optimism.
\end{proof}

\begin{lemma}\label{lemma:Bi_recursive_formula}
    Assume that event 
    $\cE \cap \mathcal E_{\mathrm{TS}} \cap \mathcal E^{\mathrm{Bi}}_{\mathrm{LS}}$ holds.
    For any episode $k$, let $j^+$ and $\pi^k$ be generated by 
    Algorithm~\ref{alg:unified-seg-lsvi-ts} . Define
    \[
        \Gamma_h^k
        :=
        V_h^{k,(j^+)}(s_h^k)
        -
        V_h^{\pi^k}(s_h^k),
    \]
    and
    \[
        \zeta_{h+1}^k
        :=
        \mathbb{E}
        \left[
            \Gamma_{h+1}^k
            \mid
            s_h^k,a_h^k
        \right]
        -
        \Gamma_{h+1}^k .
    \]
    Then, for any $(k,h)\in[K]\times[H]$, we have
    \begin{align*}
        \Gamma_h^k
        \le
        \Gamma_{h+1}^k
        +
        \zeta_{h+1}^k
        +
        2\beta_{\mathrm{bi}}
        \sqrt{
            \bigl(\psi_h^k\bigr)^\top
            \bigl(\Lambda_h^{k-1}\bigr)^{-1}
            \psi_h^k
        } 
        +
        \left\langle
            \psi_h^k,
            \widetilde{\theta}_k^{(j^+)}-\theta^*
        \right\rangle .
    \end{align*}
\end{lemma}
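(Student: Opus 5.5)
We prove the recursion by a one-step comparison of the algorithm's optimistic $Q$-function for candidate $j^+$ against the true $Q$-function of the executed policy $\pi^k$, following the standard argument of Lemma B.6 in (Jin et al., \citeyear{jin20_linearmdp}). Since $a_h^k=\arg\max_a Q_h^{k,(j^+)}(s_h^k,a)$ and $V_h^{k,(j^+)}(s)=\max_a Q_h^{k,(j^+)}(s,a)$, we have $V_h^{k,(j^+)}(s_h^k)=Q_h^{k,(j^+)}(s_h^k,a_h^k)$. The policy $\pi^k$ is deterministic and chooses $a_h^k$ at $s_h^k$, so also $V_h^{\pi^k}(s_h^k)=Q_h^{\pi^k}(s_h^k,a_h^k)$. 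Hence
\[
\Gamma_h^k = Q_h^{k,(j^+)}(s_h^k,a_h^k)-Q_h^{\pi^k}(s_h^k,a_h^k).
\]

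Next we remove the clipping. Write $b_h^k:=\beta_{\mathrm{bi}}\sqrt{(\psi_h^k)^\top(\Lambda_h^{k-1})^{-1}\psi_h^k}$. By construction in Algorithm~\ref{alg:unified-lsvi-ts-q}, $Q_h^{k,(j^+)}(s,a)=\operatorname{clip}_{[-HB_{\mathrm{bi}},HB_{\mathrm{bi}}]}(\langle\psi(s,a),\mathbf w_h^{k,(j^+)}\rangle+b)$. The upper clip only decreases the value, so it suffices to bound the unclipped quantity whenever the lower clip is inactive.

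The lower clip is active only if the unclipped value lies below $-HB_{\mathrm{bi}}$, in which case $Q_h^{k,(j^+)}=-HB_{\mathrm{bi}}$. In that case $\Gamma_h^k\le -HB_{\mathrm{bi}}+Hr_{\max}\le 0$, using $B_{\mathrm{bi}}\ge r_{\max}$. I expect this clipping case to be the only delicate point. I would handle it by showing that the right-hand side is then also nonnegative up to the same Lemma~\ref{lemma:Bi_upper_bound_LSVI-TS} bound: the unclipped value is at least $Q^{\pi^k}_h+P_h(V^{k}_{h+1}-V^{\pi^k}_{h+1})+\langle\psi,\widetilde\theta-\theta^*\rangle-b$, which dominates $\Gamma_h^k$ after rearrangement. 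So it is cleanest to argue directly with the inequality below, which holds in both cases because $\operatorname{clip}(x)\le\max\{x,-HB\}$ and $-HB\le Q^{\pi^k}_h$.

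For the main bound, in the unclipped regime we obtain
\[
\Gamma_h^k\le \langle\psi_h^k,\mathbf w_h^{k,(j^+)}\rangle + b_h^k - Q_h^{\pi^k}(s_h^k,a_h^k).
\]
Apply Lemma~\ref{lemma:Bi_upper_bound_LSVI-TS} with $\pi=\pi^k$, $j=j^+$ and $(s,a)=(s_h^k,a_h^k)$; this is valid on $\cE\cap\mathcal E_{\mathrm{TS}}\cap\mathcal E^{\mathrm{Bi}}_{\mathrm{LS}}$ since that lemma holds for every policy and every $j$ simultaneously. It gives
\[
\langle\psi_h^k,\mathbf w_h^{k,(j^+)}\rangle-Q_h^{\pi^k}(s_h^k,a_h^k)\le P_h\bigl(V_{h+1}^{k,(j^+)}-V_{h+1}^{\pi^k}\bigr)(s_h^k,a_h^k)+\langle\psi_h^k,\widetilde\theta_k^{(j^+)}-\theta^*\rangle+b_h^k .
\]

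It remains to identify the transition term. Since $V_{h+1}^{k,(j^+)}$ is determined before episode $k$'s transitions, $P_h(V_{h+1}^{k,(j^+)}-V_{h+1}^{\pi^k})(s_h^k,a_h^k)=\mathbb E[\Gamma_{h+1}^k\mid s_h^k,a_h^k]=\Gamma_{h+1}^k+\zeta_{h+1}^k$. Combining the three displays yields the claimed inequality with the factor $2\beta_{\mathrm{bi}}$: one $b_h^k$ comes from the bonus and one from the estimation error.
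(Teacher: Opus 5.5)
Your overall route is the paper's: write $\Gamma_h^k$ as a $Q$-difference at $(s_h^k,a_h^k)$, drop the clip, apply Lemma~\ref{lemma:Bi_upper_bound_LSVI-TS} with $\pi=\pi^k$ and $j=j^+$, and identify the transition term as $\Gamma_{h+1}^k+\zeta_{h+1}^k$. The gap is in your treatment of the lower clip.

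Write $x$ for the unclipped value $\langle\psi_h^k,\mathbf w_h^{k,(j^+)}\rangle+b_h^k$. From $\operatorname{clip}(x)\le\max\{x,-HB_{\mathrm{bi}}\}$ and $-HB_{\mathrm{bi}}\le Q_h^{\pi^k}$ you only get $\Gamma_h^k\le\max\{x-Q_h^{\pi^k}(s_h^k,a_h^k),\,0\}$. You do not get $\Gamma_h^k\le x-Q_h^{\pi^k}(s_h^k,a_h^k)$. When the lower clip binds, $Q_h^{k,(j^+)}=-HB_{\mathrm{bi}}>x$, so the displayed inequality you then use is false in that case.

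To close that case you would need the right-hand side to be at least $\Gamma_h^k$, for instance nonnegative. Nothing you invoke controls its sign. The term $\langle\psi_h^k,\widetilde\theta_k^{(j^+)}-\theta^*\rangle$ can be negative. The term $P_h(V_{h+1}^{k,(j^+)}-V_{h+1}^{\pi^k})$ compares an optimistic value for the sampled reward with the true value of the executed policy, so it has no sign either. Your proposed fix also runs the wrong way: combining the lower bound on $x$ from Lemma~\ref{lemma:Bi_upper_bound_LSVI-TS} with $x<-HB_{\mathrm{bi}}$ gives an \emph{upper} bound on the right-hand side, not the lower bound you need.

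The missing ingredient is optimism in the auxiliary MDP $\cM_k^{(j^+)}$, which shows the lower clip is never active on $\cE\cap\mathcal E_{\mathrm{TS}}\cap\mathcal E^{\mathrm{Bi}}_{\mathrm{LS}}$. The proof of Lemma~\ref{lemma:Bi_UCB_MDP_jk} runs a backward induction with Lemma~\ref{lemma:Bi_upper_bound_LSVI_kj} applied to $\pi={\pi^*}_k^{(j)}$. It yields $\langle\psi(s,a),\mathbf w_h^{k,(j)}\rangle+\beta_{\mathrm{bi}}\|\psi(s,a)\|_{(\Lambda_h^{k-1})^{-1}}\ge Q_h^{{\pi^*}_k^{(j)};(k,j)}(s,a)\ge -HB_{\mathrm{bi}}$. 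The last inequality holds because the rewards of $\cM_k^{(j)}$ lie in $[-B_{\mathrm{bi}},B_{\mathrm{bi}}]$ by Lemma~\ref{lemma:Bi_universal_bound_of_TS_theta}. Hence $Q_h^{k,(j^+)}\le x$ always, and the rest of your argument goes through exactly as in the paper.
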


\begin{proof}
    This proof follows the analysis of Lemma~B.6 in (Jin et al., \citeyear{jin20_linearmdp}).
    
    On event $\cE \cap \mathcal E_{\mathrm{TS}} \cap \mathcal {E}^{\mathrm{{Bi}}}_{\mathrm{LS}}$, by Lemma~\ref{lemma:Bi_UCB_MDP_jk} and its proof, the un-clipped optimistic estimate satisfies
    \[
        \left\langle
            \psi(s,a),
            \mathbf{w}_h^{k,(j)}
        \right\rangle
        +
        \beta_{\mathrm{bi}}
        \sqrt{
            \bigl(\psi(s,a)\bigr)^\top
            \bigl(\Lambda_h^{k-1}\bigr)^{-1}
            \psi(s,a)
        }
        \ge
        Q_h^{{\pi^*}_k^{(j)};(k,j)}(s,a)
        \ge
        -HB_{\mathrm{bi}} .
    \]
    Hence, the lower clipping operation is inactive. Therefore, using Lemma~\ref{lemma:Bi_upper_bound_LSVI-TS}, for any
    $(s,a,h,k)\in\mathcal{S}\times\mathcal{A}\times[H]\times[K]$, we have
    \begin{align*}
        Q_h^{k,(j^+)}(s,a)
        -
        Q_h^{\pi^k}(s,a)
        &\le
        P_h
        \left(
            V_{h+1}^{k,(j^+)}
            -
            V_{h+1}^{\pi^k}
        \right)(s,a)\\
        &+
        2\beta_{\mathrm{bi}}
        \sqrt{
            \bigl(\psi(s,a)\bigr)^\top
            \bigl(\Lambda_h^{k-1}\bigr)^{-1}
            \psi(s,a)
        }
        +
        \left\langle
            \psi(s,a),
            \widetilde{\theta}_k^{(j^+)}-\theta^*
        \right\rangle .
    \end{align*}
    By the action-selection rule in Algorithm~\ref{alg:unified-seg-lsvi-ts} and the definition of the state value function under $\pi^k$, we have
    \begin{align*}
        \Gamma_h^k
        &=
        V_h^{k,(j^+)}(s_h^k)
        -
        V_h^{\pi^k}(s_h^k)=
        Q_h^{k,(j^+)}(s_h^k,a_h^k)
        -
        Q_h^{\pi^k}(s_h^k,a_h^k).
    \end{align*}
    Substituting $(s,a)=(s_h^k,a_h^k)$ into the previous inequality, we have
    \begin{align*}
        \Gamma_h^k
        &\le
        P_h
        \left(
            V_{h+1}^{k,(j^+)}
            -
            V_{h+1}^{\pi^k}
        \right)(s_h^k,a_h^k)
        +
        2\beta_{\mathrm{bi}}
        \sqrt{
            \bigl(\psi_h^k\bigr)^\top
            \bigl(\Lambda_h^{k-1}\bigr)^{-1}
            \psi_h^k
        }
        +
        \left\langle
            \psi_h^k,
            \widetilde{\theta}_k^{(j^+)}-\theta^*
        \right\rangle
        \\
        &=
        \Gamma_{h+1}^k
        +
        \zeta_{h+1}^k
        +
        2\beta_{\mathrm{bi}}
        \sqrt{
            \bigl(\psi_h^k\bigr)^\top
            \bigl(\Lambda_h^{k-1}\bigr)^{-1}
            \psi_h^k
        }
        +
        \left\langle
            \psi_h^k,
            \widetilde{\theta}_k^{(j^+)}-\theta^*
        \right\rangle .
    \end{align*}

    This completes the proof.
\end{proof}

Although the initial state \(s_1^k\) is revealed before the Gaussian
noises in Algorithm~\ref{alg:unified-seg-lsvi-ts}, Gaussian noises are sampled independently of \(s_1^k\) conditional on the past history.

For the convenience of analysis, we use the equivalent revealing order in which the
Gaussian noises are exposed before \(s_1^k\). Let $\mathcal F_{k-1}^{+}$ denote the filtration including the history up to the end of episode $k-1$ and the Gaussian noises \(\{\xi_k^{(j)}\}_{j=1}^n\) for episode \(k\) are revealed, but before the initial state \(s_1^k\) is observed. For \(h\in[H]\), let $\mathcal F_{k,h}$ denote the filtration including $\mathcal F_{k-1}^{+}$, the trajectory information up to step \(h\) are revealed.

We use the notation of Lemma~\ref{lemma:Bi_recursive_formula}, define the event
\begin{align}
\mathcal{E}^{\mathrm{Bi}}_{\mathrm{KS}}
:=
\Bigg\{
&\left|
\sum_{k'=1}^{k}
\sum_{h=1}^{H}
\zeta_{h+1}^{k'}
\right|
\le
4HB_{\mathrm{bi}}
\sqrt{
2kH\log\left(\frac{2K}{\delta'}\right)
},
\nonumber\\
&\left|
\sum_{k'=1}^{k}
\left(
\left[
V_1^*(s_1^{k'})
-
V_1^{\pi^{k'}}(s_1^{k'})
\right]
-
\mathbb E_{s\sim\rho}
\left[
V_1^*(s)
-
V_1^{\pi^{k'}}(s)
\right]
\right)
\right|\notag
\\
&\le
2Hr_{\max}
\sqrt{
2k\log\left(\frac{2K}{\delta'}\right)
},
\quad
\forall k\in[K].
\Bigg\}. 
\label{eq:Bi_some_concerntration}
\end{align}

\begin{lemma}[Concentration of Martingale Difference Terms]\label{lemma:Bi_event_KS}
It holds that
\begin{align*}
    \Pr\left[ \mathcal{E}^{\mathrm{Bi}}_{\mathrm{KS}} \right] \ge 1-2\delta' .
\end{align*}
\end{lemma}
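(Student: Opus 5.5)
The plan is to prove the two inequalities in $\mathcal{E}^{\mathrm{Bi}}_{\mathrm{KS}}$ separately, each by the Azuma--Hoeffding inequality applied to a suitably ordered martingale difference sequence. We then take a union bound over $k\in[K]$ (allocating $\delta'/K$ per $k$ per inequality) and over the two inequalities, which gives total failure probability at most $2\delta'$.

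\textbf{First inequality.} Order the increments $\{\zeta_{h+1}^{k'}\}$ lexicographically in $(k',h)$ and use the filtration $\mathcal F_{k',h}$ introduced above, with the revealing order in which the Gaussian noises $\{\xi_{k'}^{(j)}\}_{j}$ are exposed before $s_1^{k'}$.

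Measurability is the first thing to check. Given $\mathcal F_{k'-1}^{+}$, the sampled parameters $\widetilde\theta_{k'}^{(j)}$ are fixed. So are the data matrices $\Lambda_h^{k'-1}$, and hence all functions $V_h^{k',(j)}$ produced by $\lsvitsq$ are fixed functions of the state. Once $s_1^{k'}$ is revealed, $j^+$ and the policy $\pi^{k'}$ are determined, so $\Gamma_{h+1}^{k'}$ is a function of $s_{h+1}^{k'}$ only. Consequently $\zeta_{h+1}^{k'}$ is $\mathcal F_{k',h+1}$-measurable and satisfies $\mathbb E[\zeta_{h+1}^{k'}\mid \mathcal F_{k',h}]=0$, since $s_{h+1}^{k'}\sim P_h(\cdot\mid s_h^{k'},a_h^{k'})$ independently of everything else given $\mathcal F_{k',h}$.

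Boundedness comes next. $V_{h+1}^{k',(j^+)}$ is clipped to $[-HB_{\mathrm{bi}},HB_{\mathrm{bi}}]$, and $|V_{h+1}^{\pi^{k'}}|\le Hr_{\max}\le HB_{\mathrm{bi}}$ because $B_{\mathrm{bi}}\ge r_{\max}\sqrt d$. Hence $|\Gamma_{h+1}^{k'}|\le 2HB_{\mathrm{bi}}$ and $|\zeta_{h+1}^{k'}|\le 4HB_{\mathrm{bi}}$. These bounds hold deterministically, since clipping does not need any good event, so no conditioning on $\cE$ or $\mathcal E_{\mathrm{TS}}$ is required.

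Azuma--Hoeffding on the $kH$ terms then gives, for fixed $k$, deviation at most $4HB_{\mathrm{bi}}\sqrt{2kH\log(2K/\delta')}$ with probability at least $1-\delta'/K$.

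\textbf{Second inequality.} Let $X_{k'}:=[V_1^*(s_1^{k'})-V_1^{\pi^{k'}}(s_1^{k'})]-\mathbb E_{s\sim\rho}[V_1^*(s)-V_1^{\pi^{k'}}(s)]$. The subtle point is that $\pi^{k'}$ depends on $s_1^{k'}$ through $j^+$, so the expectation must be interpreted with this in mind. Conditional on $\mathcal F_{k'-1}^{+}$, the map $s\mapsto \pi^{k'}(s)$, namely the policy chosen when the initial state is $s$, is a fixed measurable rule. The expectation $\mathbb E_{s\sim\rho}$ should therefore be read with $\pi^{k'}$ evaluated at that same $s$, consistent with the definition of $\cR(K)$. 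Under this reading $\mathbb E[X_{k'}\mid\mathcal F_{k'-1}^{+}]=0$, because $s_1^{k'}\sim\rho$ is independent of $\mathcal F_{k'-1}^{+}$. Also $X_{k'}$ is measurable with respect to $\mathcal F_{k'-1}^{+}\vee\sigma(s_1^{k'})\subseteq\mathcal F_{k'}^{+}$.

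Since $|V_1^*-V_1^{\pi}|\le 2Hr_{\max}$, we have $|X_{k'}|\le 4Hr_{\max}$. Azuma--Hoeffding gives a deviation of order $Hr_{\max}\sqrt{k\log(2K/\delta')}$. The constant claimed in the statement, $2Hr_{\max}\sqrt{2k\log(2K/\delta')}$, follows from the sharper Hoeffding form for increments that take values in a range of length $4Hr_{\max}$ (equivalently, sub-Gaussian parameter $2Hr_{\max}$); we can use that form directly.

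\textbf{Main obstacle.} The step requiring the most care is the measurability bookkeeping in the first inequality. We must justify that exposing the Gaussian noises before $s_1^k$ does not change the joint law, which holds because the noises are independent of $s_1^k$ given the past. We must also verify that the random choice $j^+$ does not break the martingale property, since $j^+$ is determined by $s_1^k$, which lies in $\mathcal F_{k,1}$. With the filtration fixed in this way, the rest is a routine union bound.
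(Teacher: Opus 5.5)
Your proposal is correct and follows the paper's proof essentially step for step. The first inequality applies Azuma--Hoeffding to the increments $\zeta_{h+1}^{k'}$, which are bounded by $4HB_{\mathrm{bi}}$, under the filtration $\mathcal F_{k,h}$ with the Gaussian noises exposed before $s_1^k$; the second applies it to the initial-state increments under $\mathcal F_{k-1}^{+}$ with the same ``selection rule applied at $s$'' convention; each is union-bounded over $k\in[K]$ at level $\delta'/K$. The only cosmetic difference is in the second inequality: the paper uses $0\le\Delta_k(s)\le 2Hr_{\max}$ (since $V_1^*\ge V_1^{\pi}$) to bound the increments by $2Hr_{\max}$, whereas you recover the same constant from the range-length-$4Hr_{\max}$ (Hoeffding-lemma) form, and both are valid.
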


\begin{proof}
We prove the two inequalities in the definition of
$\mathcal E^{\mathrm{Bi}}_{\mathrm{KS}}$ separately.

First, we control the martingale difference sequence
$\{\zeta_{h+1}^{k}\}_{k,h}$. Recall that \(\mathcal F_{k,h}\) denotes the information available after the history up to episode \(k-1\), the Gaussian perturbations for episode \(k\), and the trajectory information up to step \(h\), namely \(s_1^k,a_1^k,\ldots,s_h^k,a_h^k\), are revealed. 

Conditional on \(\mathcal F_{k,h}\), the functions
\(V_{h+1}^{k,(j^+)}\) and \(V_{h+1}^{\pi^k}\) are fixed. Hence, by the definition of
\(\Gamma_{h+1}^k\) and \(\zeta_{h+1}^k\), we have
\[
\mathbb E
\left[
    \zeta_{h+1}^k
    \mid
    \mathcal F_{k,h}
\right]
=0.
\]
Moreover, since both state value functions are bounded in
\([-HB_{\mathrm{bi}},HB_{\mathrm{bi}}]\), we have
\[
|\zeta_{h+1}^k|
\le
4HB_{\mathrm{bi}}.
\]
Therefore, for any fixed \(k\in[K]\), by the Azuma--Hoeffding inequality,
\[
\Pr\left(
\left|
\sum_{k'=1}^{k}
\sum_{h=1}^{H}
\zeta_{h+1}^{k'}
\right|
>
4HB_{\mathrm{bi}}
\sqrt{
2kH\log\left(\frac{2K}{\delta'}\right)
}
\right)
\le
\frac{\delta'}{K}.
\]
Taking a union bound over \(k\in[K]\), we obtain that with probability at
least \(1-\delta'\), for all \(k\in[K]\),
\[
\left|
\sum_{k'=1}^{k}
\sum_{h=1}^{H}
\zeta_{h+1}^{k'}
\right|
\le
4HB_{\mathrm{bi}}
\sqrt{
2kH\log\left(\frac{2K}{\delta'}\right)
}.
\]

Next, we construct the concentration inequality for the randomness of the
initial states. We clarify the meaning of
\[
\mathbb E_{s\sim\rho}
\left[
V_1^*(s)-V_1^{\pi^k}(s)
\right].
\]
Here, \(\mathcal F_{k-1}^{+}\) denotes the information including the history up to episode \(k-1\) and the Gaussian perturbations \(\{\xi_k^{(j)}\}_{j=1}^n\) for episode \(k\) are revealed, but before the initial state \(s_1^k\) is observed. 

Conditional on \(\mathcal F_{k-1}^{+}\), the history and the Gaussian perturbations are fixed. For an initial state \(s\), \(V_1^{\pi^k}(s)\) denotes the value of the policy obtained by applying the same selection rule of Algorithm~\ref{alg:unified-seg-lsvi-ts} with initial state \(s\). Under this convention, the function
\[
s
\mapsto
V_1^*(s)-V_1^{\pi^k}(s)
\]
is fixed conditional on \(\mathcal F_{k-1}^{+}\). 

Define
\[
\Delta_k(s)
:=
V_1^*(s)-V_1^{\pi^k}(s).
\]
Since \(s_1^k\sim\rho\) is sampled independently of
\(\mathcal F_{k-1}^{+}\), we have
\[
\mathbb E
\left[
\Delta_k(s_1^k)
-
\mathbb E_{s\sim\rho}
[
\Delta_k(s)
]
\mid
\mathcal F_{k-1}^{+}
\right]
=0.
\]
Furthermore, since
$V_1^*(s),\ V_1^{\pi^k}(s)
\in
[-Hr_{\max},Hr_{\max}],$
we have
$0
\le
\Delta_k(s)
\le
2Hr_{\max},$
and therefore
\[
\left|
\Delta_k(s_1^k)
-
\mathbb E_{s\sim\rho}
[
\Delta_k(s)
]
\right|
\le
2Hr_{\max}.
\]
Thus, for any fixed \(k\in[K]\), application of the
Azuma--Hoeffding inequality gives
\[
\Pr\left(
\left|
\sum_{k'=1}^{k}
\left(
\Delta_{k'}(s_1^{k'})
-
\mathbb E_{s\sim\rho}
[
\Delta_{k'}(s)
]
\right)
\right|
>
2Hr_{\max}
\sqrt{
2k\log\left(\frac{2K}{\delta'}\right)
}
\right)
\le
\frac{\delta'}{K}.
\]
Taking a union bound over \(k\in[K]\), we obtain that with probability at
least \(1-\delta'\), for all \(k\in[K]\),
\[
\left|
\sum_{k'=1}^{k}
\left(
\left[
V_1^*(s_1^{k'})
-
V_1^{\pi^{k'}}(s_1^{k'})
\right]
-
\mathbb E_{s\sim\rho}
\left[
V_1^*(s)
-
V_1^{\pi^{k'}}(s)
\right]
\right)
\right|
\le
2Hr_{\max}
\sqrt{
2k\log\left(\frac{2K}{\delta'}\right)
}.
\]
Combining the two concentration events by a union bound yields
\[
\Pr\left[
\mathcal E^{\mathrm{Bi}}_{\mathrm{KS}}
\right]
\ge
1-2\delta' .
\]

This completes the proof.
\end{proof}

\begin{lemma}\label{lemma:elliptical_sum_bound}
Assume that $\lambda'=1$ and $\|\psi_h^k\|_2\le 1$ for all $(k,h)\in[K]\times[H]$. Then,
\begin{align*}
    \sum_{k=1}^{K}
    \sum_{h=1}^{H}
    \sqrt{
        \bigl(\psi_h^k\bigr)^\top
        \bigl(\Lambda_h^{k-1}\bigr)^{-1}
        \psi_h^k
    }
    \le
    H
    \sqrt{
        2dK
        \log\left(
            1+\frac{K}{d}
        \right)
    } .
\end{align*}
\end{lemma}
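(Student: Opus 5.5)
The bound follows from the standard elliptical potential argument applied separately to each step $h$, since the matrices $\Lambda_h^{k-1}$ are built only from the step-$h$ features $\{\psi_h^{k'}\}_{k'<k}$. Fix $h\in[H]$. First apply Cauchy--Schwarz over $k$:
\[
\sum_{k=1}^{K}\sqrt{(\psi_h^k)^\top(\Lambda_h^{k-1})^{-1}\psi_h^k}
\le
\sqrt{K\sum_{k=1}^{K}(\psi_h^k)^\top(\Lambda_h^{k-1})^{-1}\psi_h^k}.
\]
Summing the resulting per-step bound over $h$ then gives the factor $H$.

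Next, I bound the squared norms. Since $\lambda'=1$ and $\|\psi_h^k\|_2\le1$, we have $\Lambda_h^{k-1}\succeq I$, so each term satisfies $x_k:=\|\psi_h^k\|^2_{(\Lambda_h^{k-1})^{-1}}\le 1$. The elementary inequality $x\le 2\log(1+x)$ for $x\in[0,1]$ then applies, exactly as in Lemma~\ref{lemma:ellip_potent_with_opt_init_seg_d}. Using the rank-one update $\Lambda_h^{k}=\Lambda_h^{k-1}+\psi_h^k(\psi_h^k)^\top$ and the matrix determinant lemma $\det(\Lambda_h^{k})=\det(\Lambda_h^{k-1})(1+x_k)$, the sum telescopes:
\[
\sum_{k=1}^K x_k\le 2\sum_{k=1}^K\log(1+x_k)=2\log\frac{\det(\Lambda_h^{K})}{\det(I)}.
\]
This is the same computation as in Lemma~\ref{lemma:log_det_seg_d}, specialized to a single feature per episode.

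Finally, the AM-GM inequality together with $\operatorname{tr}(\Lambda_h^{K})\le d+K$ gives $\log\det(\Lambda_h^{K})\le d\log(1+K/d)$. The upper index $\Lambda_h^{K}$ versus $\Lambda_h^{K-1}$ does not matter, since it only enlarges the determinant. Combining the steps yields, for each $h$, the bound $\sqrt{2dK\log(1+K/d)}$, and summing over $h$ gives the claim. No real obstacle is expected; the only check needed is $x_k\le1$, which relies on $\lambda'=1$.
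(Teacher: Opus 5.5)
Your proof is correct and follows the paper's argument essentially step for step. Both fix $h$, use $\Lambda_h^{k-1}\succeq I$ to get $x_k\le 1$, apply $x\le 2\log(1+x)$, telescope via the matrix determinant lemma, bound $\log\det(\Lambda_h^{K})\le d\log(1+K/d)$ by AM--GM with $\operatorname{tr}(\Lambda_h^{K})\le d+K$, and sum over $h$. The only cosmetic difference is that you apply Cauchy--Schwarz over $k$ before bounding the squared norms, whereas the paper does it after.
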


\begin{proof}This proof follows the analysis of Lemma~D.2 in (Jin et al., \citeyear{jin20_linearmdp})

Fix any $h\in[H]$. Since $\lambda'=1$, we have $\Lambda_h^0=I$. For each $k\in[K]$, define
\begin{align*}
    x_k
    :=
    \bigl(\psi_h^k\bigr)^\top
    \bigl(\Lambda_h^{k-1}\bigr)^{-1}
    \psi_h^k .
\end{align*}
Because $\Lambda_h^{k-1}\succeq I$ and $\|\psi_h^k\|_2\le 1$, we have $0\le x_k\le 1$. Hence, using the inequality
\[
    x
    \le
    2\log(1+x),
    \qquad
    \forall x\in[0,1],
\]
we obtain
\begin{align*}
    \sum_{k=1}^{K}
    x_k
    \le
    2
    \sum_{k=1}^{K}
    \log(1+x_k).
\end{align*}
By the matrix determinant lemma and the update rule
\[
    \Lambda_h^k
    =
    \Lambda_h^{k-1}
    +
    \psi_h^k
    \bigl(\psi_h^k\bigr)^\top,
\]
we have
\begin{align*}
    \det(\Lambda_h^k)
    &=
    \det(\Lambda_h^{k-1})
    \left(
        1+
        \bigl(\psi_h^k\bigr)^\top
        \bigl(\Lambda_h^{k-1}\bigr)^{-1}
        \psi_h^k
    \right)
    \\
    &=
    \det(\Lambda_h^{k-1})(1+x_k).
\end{align*}
Therefore,
\begin{align*}
    \sum_{k=1}^{K}
    \log(1+x_k)
    =
    \log
    \left[
        \frac{
            \det(\Lambda_h^K)
        }{
            \det(\Lambda_h^0)
        }
    \right].
\end{align*}
Since $\Lambda_h^0=I$, we have $\det(\Lambda_h^0)=1$. Moreover, by the AM--GM inequality applied to the eigenvalues of $\Lambda_h^K$,
\begin{align*}
    \det(\Lambda_h^K)
    \le
    \left(
        \frac{
            \operatorname{tr}(\Lambda_h^K)
        }{d}
    \right)^d .
\end{align*}
Using the definition of $\Lambda_h^K$ and $\|\psi_h^k\|_2\le 1$, we get
\begin{align*}
    \operatorname{tr}(\Lambda_h^K)
    &=
    \operatorname{tr}(I)
    +
    \sum_{k=1}^{K}
    \operatorname{tr}
    \left(
        \psi_h^k
        \bigl(\psi_h^k\bigr)^\top
    \right)
    \\
    &=
    d
    +
    \sum_{k=1}^{K}
    \|\psi_h^k\|_2^2
    \le
    d+K.
\end{align*}
Thus,
\begin{align*}
    \det(\Lambda_h^K)
    \le
    \left(
        1+\frac{K}{d}
    \right)^d .
\end{align*}
Combining the above bounds gives
\begin{align*}
    \sum_{k=1}^{K}
    \bigl(\psi_h^k\bigr)^\top
    \bigl(\Lambda_h^{k-1}\bigr)^{-1}
    \psi_h^k
    \le
    2d
    \log
    \left(
        1+\frac{K}{d}
    \right).
\end{align*}
By the Cauchy--Schwarz inequality,
\begin{align*}
    \sum_{k=1}^{K}
    \sqrt{
        \bigl(\psi_h^k\bigr)^\top
        \bigl(\Lambda_h^{k-1}\bigr)^{-1}
        \psi_h^k
    }
    &\le
    \sqrt{K}
    \sqrt{
    \sum_{k=1}^{K}
    \bigl(\psi_h^k\bigr)^\top
    \bigl(\Lambda_h^{k-1}\bigr)^{-1}
    \psi_h^k
    }
    \\
    &\le
    \sqrt{
        2dK
        \log\left(
            1+\frac{K}{d}
        \right)
    } .
\end{align*}
Finally, summing the above inequality over $h\in[H]$,we obtain
\begin{align*}
    \sum_{k=1}^{K}
    \sum_{h=1}^{H}
    \sqrt{
        \bigl(\psi_h^k\bigr)^\top
        \bigl(\Lambda_h^{k-1}\bigr)^{-1}
        \psi_h^k
    }
    &\le
    \sum_{h=1}^{H}
    \sqrt{
        2dK
        \log\left(
            1+\frac{K}{d}
        \right)
    }
=
    H
    \sqrt{
        2dK
        \log\left(
            1+\frac{K}{d}
        \right)
    } .
\end{align*}

This completes the proof.
\end{proof}

\begin{proof}[Proof of Theorem~\ref{thm:seg_binary_unknown_tran_d}] On the event
   $ \cE
    \cap
    \mathcal E_{\mathrm{TS}}
    \cap
    \mathcal E^{\mathrm{Bi}}_{\mathrm{LS}}
    \cap
    \mathcal Q^{\mathrm{Bi}}
    \cap
    \mathcal E^{\mathrm{Bi}}_{\mathrm{KS}},$
we have
\begin{align*}
    \cR(K) 
    &=
    \sum_{k=1}^K
    \left(
        V_1^*-V_1^{\pi^k}
    \right)
    \overset{\textup{(a)}}{\le}
    \sum_{k=1}^K
    \left(
        V_1^*(s_1^k)-V_1^{\pi^k}(s_1^k)
    \right)
    +
    2Hr_{\max}
    \sqrt{
        2K\log\left(\frac{2K}{\delta'}\right) 
    } 
    \\
    &\overset{\textup{(b)}}{\le}
    \sum_{k=1}^{K} \Gamma_1^k 
    +
    2Hr_{\max}
    \sqrt{
        2K\log\left(\frac{2K}{\delta'}\right) 
    } 
    \\
    &\overset{\textup{(c)}}{\le}
    \sum_{k=1}^{K}\sum_{h=1}^{H} \zeta_{h+1}^k
    +
    2\beta_{\mathrm{bi}}
    \sum_{k=1}^{K}\sum_{h=1}^{H}
    \sqrt{
        \bigl(\psi_h^k\bigr)^\top
        \bigl(\Lambda_h^{k-1}\bigr)^{-1}
        \psi_h^k
    }
    \\
    &\quad
    +
    \sum_{k=1}^{K}\sum_{i=1}^{m}
    \left\langle
        \phi^{\tau_i^k},
        \widetilde{\theta}_k^{(j^+)}-\theta^*
    \right\rangle
    +
    2Hr_{\max}
    \sqrt{
        2K\log\left(\frac{2K}{\delta'}\right) 
    } .
\end{align*}
Here, inequality \textup{(a)} follows from Eq.~\eqref{eq:Bi_some_concerntration}, inequality \textup{(b)} follows from Lemma~\ref{lemma:Bi_LSVI_TS_optim}, and inequality \textup{(c)} follows from Lemma~\ref{lemma:Bi_recursive_formula} and the identity
\[
    \sum_{h=1}^{H}\psi_h^k
    =
    \sum_{i=1}^{m}\phi^{\tau_i^k}.
\]

For the first term, by Eq.~\eqref{eq:Bi_some_concerntration}, we have
\begin{align*}
    \sum_{k=1}^{K}
    \sum_{h=1}^{H}
    \zeta_{h+1}^{k}
    \le
    4HB_{\mathrm{bi}}
    \sqrt{
        2KH\log\left(\frac{2K}{\delta'}\right)
    } .
\end{align*}
For the second term, by the Cauchy--Schwarz inequality and Lemma~\ref{lemma:elliptical_sum_bound}, we have
\begin{align*}
    \sum_{k=1}^{K}
    \sum_{h=1}^{H}
    \sqrt{
        (\psi_h^k)^\top
        (\Lambda_h^{k-1})^{-1}
        \psi_h^k
    }
    \le
    H
    \sqrt{
        2dK\log\left(1+\frac{K}{d}\right)
    }.
\end{align*}
For the third term, by the Cauchy--Schwarz inequality and Lemma~\ref{lemma:lin_sum_phi_seg_binary}, we have
\begin{align*}
    \sum_{k=1}^{K}
    \sum_{i=1}^{m}
    \left\langle
        \phi^{\tau_i^k},
        \widetilde{\theta}_k^{(j^+)}-\theta^*
    \right\rangle
    &\le
    \sum_{k=1}^{K}
    \sum_{i=1}^{m}
    \nbr{\phi^{\tau_i^k}}_{(\Sigma_{k-1}^{\mathrm{bi}})^{-1}}
    \nbr{
        \widetilde{\theta}_k^{(j^+)}
        -
        \theta^*
    }_{\Sigma_{k-1}^{\mathrm{bi}}}
    \\
    &\le
    \sum_{k=1}^{K}
    \sum_{i=1}^{m}
    \nbr{\phi^{\tau_i^k}}_{(\Sigma_{k-1}^{\mathrm{bi}})^{-1}}
    \left(
        \nbr{
            \hat{\theta}_{k-1}^{\mathrm{bi}}
            -
            \theta^*
        }_{\Sigma_{k-1}^{\mathrm{bi}}}
        +
        \sqrt{\alpha}\cdot\nu(k-1)
        \nbr{g_k^{(j^+)}}
    \right)
    \\
    &\le
    \sqrt{\alpha}\cdot\nu(K)
    \left(
        \sqrt{d}
        +
        2\sqrt{\log{\frac{Kn}{\delta'}}}
        +
        1
    \right)
\cdot
    \sqrt{
        2Km d
        \cdot
        \max\lbr{ \frac{H^2}{m \alpha \lambda}, 1}
        \cdot
        \log \sbr{
            1+ \frac{KH^2}{\alpha \lambda d m}
        }
    } .
\end{align*}
Combining the three bounds above, we have
\begin{align}
\cR(K)
&\le
2Hr_{\max}
\sqrt{
    2K\log\left(\frac{2K}{\delta'}\right) 
}
+
4HB_{\mathrm{bi}}
\sqrt{
    2KH\log\left(\frac{2K}{\delta'}\right)
}
+
2\beta_{\mathrm{bi}} H
\sqrt{
    2dK\log\left(1+\frac{K}{d}\right)
}
\notag
\\
&\quad
+
\sqrt{\alpha}\cdot\nu(K)
\left(
    \sqrt{d}
    +
    2\sqrt{\log{\frac{Kn}{\delta'}}}
    +
    1
\right)
\sqrt{
    2Km d
    \cdot
    \max\lbr{ \frac{H^2}{m \alpha \lambda}, 1}
    \cdot
    \log \sbr{
        1+ \frac{KH^2}{\alpha \lambda d m}
    }
}.
\label{Bi_regret_mid}
\end{align}
Plugging
\[
    B_{\mathrm{bi}}
    :=
    \frac{
        \nu(K)
    }{
        \sqrt{\lambda}
    }
    \left(
        \sqrt d
        +
        2\sqrt{\log\left(\frac{Kn}{\delta'}\right)}
        +
        1
    \right)
    +
    r_{\max}\sqrt{d},
\]
\[
    \beta_{\mathrm{bi}}
    =
    c'_{\beta} d H B_{\mathrm{bi}}
    \sqrt{
        \log\left(\frac{2dKH}{\delta'}\right)
    },
    \qquad
    n
    =
    \left\lceil
    2\sqrt{2\pi e}
    \log
    \left(
        \frac{K}{\delta'}
    \right)
    \right\rceil
\]
into Eq.~\eqref{Bi_regret_mid}, we obtain
\begin{align*}
\cR(K)
    &=
    \widetilde O
    \Bigg(
        \exp\sbr{\frac{Hr_{\max}}{2m}}
        \nu(K) d
        \sqrt{
            Km
            \cdot
            \max\lbr{
                \frac{H^2}{m \alpha \lambda},
                1
            }
        }
        +
        d^2 H^2 \sqrt K
        \left(
            \frac{\nu(K)}{\sqrt{\lambda}}
            +
            r_{\max}
        \right)
    \Bigg).
\end{align*}
It remains to lower bound the probability of the event. By the union bound,
\begin{align*}
&\Pr\left[
\mathcal E
\cap
\mathcal E_{\mathrm{TS}}
\cap
\mathcal E^{\mathrm{Bi}}_{\mathrm{LS}}
\cap
\mathcal Q^{\mathrm{Bi}}
\cap
\mathcal E^{\mathrm{Bi}}_{\mathrm{KS}}
\right]
\\
&\ge
1
-
\Pr(\mathcal E^c)
-
\Pr(\mathcal E_{\mathrm{TS}}^c)
-
\Pr\left[
\mathcal E
\cap
\mathcal E_{\mathrm{TS}}
\cap
(\mathcal E^{\mathrm{Bi}}_{\mathrm{LS}})^c
\right]
-
\Pr\left[
(\mathcal Q^{\mathrm{Bi}})^c
\cap
\mathcal E
\right]
-
\Pr\left[
(\mathcal E^{\mathrm{Bi}}_{\mathrm{KS}})^c
\right]
\\
&\ge 1-\delta
.
\end{align*}
\end{proof}

\subsection{Proof for the Regret Upper Bound of Sum Feedback}
In this section, we provide proofs of regret upper bound of sum feedback with unknown transitions. Apart from some differences in notation, the proof follows essentially the same arguments as those presented in Appendix~\ref{App:binary_unknown}.

Recall that, in Algorithm~$\sumseglsvits$, the sampled reward parameter is given by
\[
\widetilde{\theta}_k^{(j)}
=
\hat{\theta}_{k-1}^{\mathrm{sum}}
+
\rho_{k-1}^{\mathrm{sum}}
\bigl(\Sigma_{k-1}^{\mathrm{sum}}\bigr)^{-1/2}
g_k^{(j)}.
\]

Consider a auxiliary linear MDP $\cM^{(j)}_k$ whose reward function is
   $ r_k^{(j)}(s,a)
    =
    \langle \psi(s,a), \widetilde{\theta}_k^{(j)} \rangle$ while transition kernel is the same as that of the original MDP.
   
   Conditional on $\widetilde{\theta}_k^{(j)}$, for any policy $\pi$, recall
\begin{align*}
    Q_{h}^{\pi;(k,j)}(s,a)&:= \mathbb{E}[\sum_{t=h}^{H} r_k^{(j)}(s_t,a_t) \mid s_h=s, a_h=a, \pi, \widetilde{\theta}_k^{(j)} ] ,
\\
    V_h^{\pi;(k,j)}(s)
    &:=
    \mathbb{E}
    [
        \sum_{t=h}^{H}
        r_k^{(j)}(s_t,a_t)
        \mid
        s_h=s,\ \pi,\ \widetilde{\theta}_k^{(j)}
    ].
\end{align*}

Define event 
\begin{align}
\mathcal K_{\mathrm{Norm}} 
:=
\!\lbr{ 
\nbr{\hat{\theta}_{k}^{\mathrm{sum}} \!-\! \theta^*}_{\Sigma_{k}^{\mathrm{sum}}} 
\!\leq\! 
\sqrt{ 
d   \left(\log\sbr{1 \!+\! \frac{kH}{\lambda d}} 
\!+\! 
2\log\sbr{\frac{1}{\delta'}} \right) 
} 
\!+\! 
r_{\max}\sqrt{\lambda d} ,
\  \forall k>0 
}\! . 
\label{eq:concentration_theta_seg_weighted}
\end{align}

\begin{lemma}\label{lemma:concentration_theta_seg_weighted}
It holds that 
\begin{align*}
    \Pr[\mathcal K_{\mathrm{Norm}}] \ge 1-\delta' .
\end{align*}
\end{lemma}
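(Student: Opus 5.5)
The plan is to reduce the claim to the self-normalized concentration inequality for vector-valued martingales (Lemma 2 / Theorem 2 of Abbasi-Yadkori et al., \citeyear{abbasi2011improved}), applied to the rescaled segment data. For each segment $(k',i)$ define the normalized feature $x_{k',i} := \sqrt{m/H}\,\phi^{\tau_i^{k'}}$ and the normalized observation $y_{k',i} := \sqrt{m/H}\,R_i^{k'}$. Then
\[
y_{k',i} = x_{k',i}^\top\theta^* + \eta_{k',i}, \qquad \eta_{k',i} := \sqrt{m/H}\sum_{t=\frac{H}{m}(i-1)+1}^{\frac{H}{m}i}\varepsilon_t^{k'} .
\]
With these variables, $\Sigma_{k}^{\mathrm{sum}} = \lambda I + \sum_{k',i} x_{k',i}x_{k',i}^\top$, and $\hat\theta_k^{\mathrm{sum}}$ is exactly the ridge estimator $(\Sigma_k^{\mathrm{sum}})^{-1}\sum_{k',i}x_{k',i}y_{k',i}$. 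The goal is therefore to verify the hypotheses of the standard theorem for the sequence $(x_{k',i},\eta_{k',i})$, ordered lexicographically in $(k',i)$.

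The first step is the noise condition. I will take the filtration $\widetilde F_{i,k'}$, which contains everything revealed before $R_i^{k'}$ and in particular the whole segment trajectory $\tau_i^{k'}$. The per-step noises $\varepsilon_t$ are zero-mean, $1$-sub-Gaussian, and independent of the transitions. Hence their sum over the $H/m$ steps of a segment is $\sqrt{H/m}$-sub-Gaussian conditionally on $\widetilde F_{i,k'}$, and $\eta_{k',i}$ is conditionally $1$-sub-Gaussian. The step that needs care is measurability. The feature $x_{k',i}$ depends on states and actions inside the segment, and these are revealed interleaved with the per-step noises. I would argue that, because the noises are independent of the transition dynamics and unobserved until the segment ends, the trajectory within the segment carries no information about the segment noise. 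So $x_{k',i}$ is predictable with respect to the filtration generated by the trajectory and the past feedback, and $\eta_{k',i}$ is conditionally sub-Gaussian given it. This is the main point to state carefully, and it mirrors the justification already used for Lemma~\ref{lemma:concentration_sum_d}.

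Next, I invoke the self-normalized bound, which holds with probability $1-\delta'$ simultaneously for all $k$:
\[
\nbr{\hat\theta_k^{\mathrm{sum}}-\theta^*}_{\Sigma_k^{\mathrm{sum}}} \le \sqrt{2\log\frac{\det(\Sigma_k^{\mathrm{sum}})^{1/2}\det(\lambda I)^{-1/2}}{\delta'}} + \sqrt{\lambda}\,\nbr{\theta^*}_2 .
\]
I then bound the log-determinant as in Lemma~\ref{lemma:log_det_seg_d}. By AM--GM, $\trace(\Sigma_k^{\mathrm{sum}}) \le \lambda d + km\cdot\frac{m}{H}\cdot\frac{H^2}{m^2} = \lambda d + kH$, so $\log\det(\Sigma_k^{\mathrm{sum}})/\det(\lambda I) \le d\log\bigl(1+\frac{kH}{\lambda d}\bigr)$. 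Since $d\ge1$, the first term is at most $\sqrt{d\bigl(\log(1+\frac{kH}{\lambda d})+2\log(1/\delta')\bigr)}$. Using $\nbr{\theta^*}_2\le r_{\max}\sqrt d$ from Assumption~\ref{assump:linear_mdp}, the second term is $r_{\max}\sqrt{\lambda d}$. Together these give exactly the radius in $\mathcal K_{\mathrm{Norm}}$, and hence $\Pr[\mathcal K_{\mathrm{Norm}}]\ge1-\delta'$.
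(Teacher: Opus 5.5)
Your proposal is correct and follows the paper's own proof. Both rescale the segment features and observations by $\sqrt{m/H}$ so the segment noise becomes $1$-sub-Gaussian, apply the Abbasi-Yadkori self-normalized confidence bound, and control the log-determinant via the trace, which gives $d\log(1+kH/(\lambda d))$. Your explicit filtration argument (that the within-segment trajectory carries no information about the segment noise) is a more careful version of the paper's one-line justification ``conditional on the observed trajectory.''
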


\begin{proof}
    Since the sum feedback on each segment is \(\frac{H}{m}\)-sub-Gaussian conditional on the observed trajectory, and \(\|\theta^*\|_2\leq r_{\max}\sqrt{d}\), for any \(k>0\), define \(\tilde{\phi}^{\tau_i^k}=\frac{\phi^{\tau_i^k}}{\sqrt{H/m}}\) and the corresponding observation \(\tilde{R}_i^k=\frac{R_i^k}{\sqrt{H/m}}\). Then the normalized noise is \(1\)-sub-Gaussian. Applying Lemma~2 in (Abbasi-Yadkori et al., \citeyear{abbasi2011improved}) to the normalized features and observations, we can obtain this lemma.
    
    This completes the proof.
\end{proof}
Recall that 
\begin{align*}
    \mathcal{E}_{\mathrm{TS}}
    :=
    \left\{
    \forall k\in[K], j\in[n]:
    \nbr{g_k^{(j)}}_2
    \le
    \sqrt{d}
    +
    2\sqrt{\log{\frac{Kn}{\delta'}}}
    \right\},
\end{align*}
where $g_k^{(j)}\sim \mathcal N(0,I_d)$ and \[
 B_{\sumrm} := \frac{\widetilde{\beta}(K)(\sqrt{d}+2\sqrt{\log{\frac{Kn}{\delta'}}}+1)}{\sqrt{\lambda}}+r_{\max}\sqrt{d}.
\]

\begin{lemma}[Universal Bound of $\widetilde{\theta}_k^{(j)}$ under Sum Feedback]\label{lemma:universal_bound_of_TS_theta}
	Assume that event $\mathcal{E}_{\mathrm{TS}} \cap \mathcal K_{\mathrm{Norm}}$ holds. Then, we have
	\begin{align*}
		\forall k\in[K],j\in[n],\qquad \nbr{  \widetilde{\theta}_k^{(j)} }_2 \le \frac{\widetilde{\beta}(K)(\sqrt{d}+2\sqrt{\log{\frac{Kn}{\delta'}}}+1)}{\sqrt{\lambda}}+r_{\max}\sqrt{d} =B_{\sumrm}.
	\end{align*}
\end{lemma}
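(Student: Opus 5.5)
The plan mirrors the proof of Lemma~\ref{lemma:Bi_universal_bound_of_TS_theta}: write $\widetilde{\theta}_k^{(j)}$ as the ridge estimate plus a Gaussian perturbation, bound each piece in Euclidean norm, and combine the pieces with the triangle inequality. By the sampling rule,
\[
\widetilde{\theta}_k^{(j)}=\hat{\theta}_{k-1}^{\mathrm{sum}}+\xi_k^{(j)},\qquad \xi_k^{(j)}:=\rho_{k-1}^{\mathrm{sum}}\bigl(\Sigma_{k-1}^{\mathrm{sum}}\bigr)^{-1/2}g_k^{(j)},
\]
so that
\[
\|\widetilde{\theta}_k^{(j)}\|_2\le\|\xi_k^{(j)}\|_2+\|\hat{\theta}_{k-1}^{\mathrm{sum}}-\theta^*\|_2+\|\theta^*\|_2 .
\]

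First, the perturbation term. By definition $\Sigma_{k-1}^{\mathrm{sum}}\succeq\lambda I$, so $\|(\Sigma_{k-1}^{\mathrm{sum}})^{-1/2}\|_{\mathrm{op}}\le 1/\sqrt{\lambda}$. Moreover $\widetilde{\beta}(\cdot)$ is nondecreasing in its argument, so $\rho_{k-1}^{\mathrm{sum}}=\widetilde{\beta}(k-1)\le\widetilde{\beta}(K)$. On $\mathcal E_{\mathrm{TS}}$ we have $\|g_k^{(j)}\|_2\le\sqrt d+2\sqrt{\log(Kn/\delta')}$. Together these give
\[
\|\xi_k^{(j)}\|_2\le\frac{\widetilde{\beta}(K)}{\sqrt{\lambda}}\Bigl(\sqrt d+2\sqrt{\log(Kn/\delta')}\Bigr).
\]

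Second, the estimation error. Since $\Sigma_{k-1}^{\mathrm{sum}}\succeq\lambda I$,
\[
\|\hat{\theta}_{k-1}^{\mathrm{sum}}-\theta^*\|_2\le\lambda^{-1/2}\|\hat{\theta}_{k-1}^{\mathrm{sum}}-\theta^*\|_{\Sigma_{k-1}^{\mathrm{sum}}}.
\]
On $\mathcal K_{\mathrm{Norm}}$ the weighted norm on the right is at most $\widetilde{\beta}(k-1)\le\widetilde{\beta}(K)$. Here I need to check that the radius in Eq.~\eqref{eq:concentration_theta_seg_weighted} coincides with $\widetilde{\beta}$. It does: both equal $\sqrt{d(\log(1+kH/(\lambda d))+2\log(1/\delta'))}+r_{\max}\sqrt{\lambda d}$. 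The case $k=1$ is trivial, since then $\hat{\theta}_0^{\mathrm{sum}}=0$ and $\|\theta^*\|_{\lambda I}\le r_{\max}\sqrt{\lambda d}\le\widetilde{\beta}(0)$. This step contributes $\widetilde{\beta}(K)/\sqrt{\lambda}$.

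Third, Assumption~\ref{assump:linear_mdp} gives $\|\theta^*\|_2\le r_{\max}\sqrt d$. Summing the three bounds yields exactly $B_{\mathrm{sum}}$, uniformly over $(k,j)\in[K]\times[n]$.

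None of these steps is a genuine obstacle. The only points needing care are the monotonicity of $\widetilde{\beta}$ and the index shift from $k$ to $k-1$ between the event $\mathcal K_{\mathrm{Norm}}$ and the estimator used in episode $k$.
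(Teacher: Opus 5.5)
Your proposal is correct and follows the paper's proof essentially step for step: decompose $\widetilde{\theta}_k^{(j)}-\theta^*$ into the Gaussian perturbation and the ridge error, bound the perturbation via $\|(\Sigma_{k-1}^{\mathrm{sum}})^{-1/2}\|_{\mathrm{op}}\le\lambda^{-1/2}$ together with $\mathcal{E}_{\mathrm{TS}}$, convert $\mathcal{K}_{\mathrm{Norm}}$ to a Euclidean bound, and add $\|\theta^*\|_2\le r_{\max}\sqrt{d}$. Your explicit treatment of the monotonicity of $\widetilde{\beta}$ and of the $k=1$ case also fills in details the paper leaves implicit; $\mathcal{K}_{\mathrm{Norm}}$ is stated only for $k>0$, so it does not cover $k=1$.
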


\begin{proof}
	By the sampling rule,
	$
	    \xi_k^{(j)}
	    =
	    \widetilde{\beta}(k-1)\bigl(\Sigma_{k-1}^{\mathrm{sum}}\bigr)^{-1/2}g_k^{(j)}.
	$
	Since $\Sigma_{k-1}^{\mathrm{sum}}\succeq \lambda I$, on event $\mathcal{E}_{\mathrm{TS}} \cap \mathcal K_{\mathrm{Norm}}$, we have
	\begin{align*}
	    \left\|
	    \xi_k^{(j)}
	    \right\|_2
	    &\le
	    \widetilde{\beta}(k-1)
	    \left\|
	    \bigl(\Sigma_{k-1}^{\mathrm{sum}}\bigr)^{-1/2}
	    \right\|_{\mathrm{op}}
	    \left\|
	    g_k^{(j)}
	    \right\|_2 \\
	    &\le
	    \frac{
	    \widetilde{\beta}(K)
	    \left(
	    \sqrt d
	    +
	    2\sqrt{\log\left(\frac{Kn}{\delta'}\right)}
	    \right)
	    }{\sqrt{\lambda}},
	    \qquad
	    \forall k\in[K],j\in[n].
	\end{align*}
    For any $(k,j)\in[K]\times[n]$, we have
    \begin{align*}
      \left\|
      \widetilde{\theta}_k^{(j)}
      \right\|_2
      &= 
      \left\|
      \widetilde{\theta}_k^{(j)} -\theta^*+\theta^*
      \right\|_2 \\
      &\le 
      \left\|
      {\xi}_k^{(j)}
      \right\|_2
      +
      \left\|
      \hat{\theta}_{k-1}^{\mathrm{sum}} -\theta^*
      \right\|_2 +
      \left\|
      \theta^*
      \right\|_2
      \\
      &\le\frac{\widetilde{\beta}(K)(\sqrt{d}+2\sqrt{\log{\frac{Kn}{\delta'}}})}{\sqrt{\lambda}}
      +\frac{\widetilde{\beta}(K)}{\sqrt{\lambda}}+
      r_{\max}\sqrt{d}
      \\&
      \le 
      \frac{\widetilde{\beta}(K)(\sqrt{d}+2\sqrt{\log{\frac{Kn}{\delta'}}}+1)}{\sqrt{\lambda}}
      +
      r_{\max}\sqrt{d},
    \end{align*}
    where the second inequality follows from the definition of the event $\mathcal{E}_{\mathrm{TS}}$, Eq.~\eqref{eq:concentration_theta_seg_weighted} and Assumption~\ref{assump:linear_mdp}.

    This completes the proof.
\end{proof}

\begin{lemma}[Bound on Weights of Value Functions of MDP $\cM_k^{(j)}$ under Sum feedback]\label{lemma:bound_Vweights_mdp_j^k}
    Assume that event $\mathcal{E}_{\mathrm{TS}} \cap \mathcal K_{\mathrm{Norm}}$ holds. 
    Under Assumption~\ref{assump:linear_mdp}, for any policy $\pi$, let 
    $\{\mathbf{w}_{h}^{\pi;(k,j)}\}_{h \in [H]}$ be the corresponding random weight vectors satisfying
    \begin{align*}
        Q_{h}^{\pi;(k,j)}(s,a)
        =
        \left\langle
            \psi(s,a),
            \mathbf{w}_{h}^{\pi;(k,j)}
        \right\rangle,
        \qquad
        \forall (s,a,h,k,j)
        \in
        \mathcal{S}\times\mathcal{A}\times[H]\times[K]\times[n].
    \end{align*}
    Then, for all $(h,k,j)\in[H]\times[K]\times[n]$, we have
    \begin{align*}
        \left\|
        \mathbf{w}_{h}^{\pi;(k,j)}
        \right\|_2
        \le
        2HB_{\mathrm{sum}}\sqrt{d}.
    \end{align*}
\end{lemma}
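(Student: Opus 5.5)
The plan is to mirror the proof of Lemma~\ref{lemma:Bi_bound_Vweights_mdp_j^k}, replacing the binary-feedback bound on the sampled parameter with the sum-feedback one. First, invoke Proposition~\ref{prop:linearity_Q}, which holds regardless of the feedback type because it only uses the Bellman equation in $\cM_k^{(j)}$ together with Assumption~\ref{assump:linear_mdp}. It gives the explicit representation
\[
\mathbf{w}_{h}^{\pi;(k,j)}=\widetilde{\theta}_k^{(j)}+\int_{\mathcal S}V_{h+1}^{\pi;(k,j)}(s')\,d\mu_h^*(s').
\]
The claimed bound will then follow from the triangle inequality once each of the two pieces is controlled.

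For the first piece, work on the event $\mathcal E_{\mathrm{TS}}\cap\mathcal K_{\mathrm{Norm}}$. Lemma~\ref{lemma:universal_bound_of_TS_theta} gives $\|\widetilde{\theta}_k^{(j)}\|_2\le B_{\mathrm{sum}}$ for all $(k,j)\in[K]\times[n]$. Combined with $\|\psi(s,a)\|_2\le1$ and Cauchy--Schwarz, this shows that the auxiliary reward $r_k^{(j)}(s,a)=\langle\psi(s,a),\widetilde{\theta}_k^{(j)}\rangle$ lies in $[-B_{\mathrm{sum}},B_{\mathrm{sum}}]$. Since $V_{h+1}^{\pi;(k,j)}$ is an expectation of at most $H-h\le H$ such rewards, we get $|V_{h+1}^{\pi;(k,j)}(s')|\le HB_{\mathrm{sum}}$ for every $s'$.

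For the second piece, bound the integral against the signed measure vector $\mu_h^*$. Using $\|\mu_h^*(\mathcal S)\|_2\le\sqrt d$, in the same form as in Lemma~\ref{lemma:bound_weights_primal_MDP} and Lemma~\ref{lemma:Bi_bound_Vweights_mdp_j^k}, this gives
\[
\Bigl\|\int V_{h+1}^{\pi;(k,j)}\,d\mu_h^*\Bigr\|_2\le HB_{\mathrm{sum}}\sqrt d.
\]
This step is the only one that needs care, since $\mu_h^*$ is signed. I will take the interpretation of the normalization condition exactly as used in those earlier lemmas, and will not re-derive it.

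Combining the two pieces yields $\|\mathbf{w}_{h}^{\pi;(k,j)}\|_2\le B_{\mathrm{sum}}+HB_{\mathrm{sum}}\sqrt d\le 2HB_{\mathrm{sum}}\sqrt d$, where the last step uses $H\sqrt d\ge1$. There is no substantive obstacle: the argument is a verbatim transfer of the binary case, with $B_{\mathrm{bi}}$ replaced by $B_{\mathrm{sum}}$ and the event $\cE$ replaced by $\mathcal K_{\mathrm{Norm}}$.
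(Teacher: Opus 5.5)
Your proposal is correct and matches the paper's proof essentially step for step. You decompose $\mathbf{w}_{h}^{\pi;(k,j)}=\widetilde{\theta}_k^{(j)}+\int_{\mathcal S}V_{h+1}^{\pi;(k,j)}(s')\,d\mu_h^*(s')$ via Proposition~\ref{prop:linearity_Q}, use $\|\widetilde{\theta}_k^{(j)}\|_2\le B_{\mathrm{sum}}$ from Lemma~\ref{lemma:universal_bound_of_TS_theta} to get $|V_{h+1}^{\pi;(k,j)}|\le HB_{\mathrm{sum}}$ and hence the $HB_{\mathrm{sum}}\sqrt d$ integral bound, and conclude with the triangle inequality, reading the signed-measure normalization the same way the paper (following Jin et al.) does.
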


\begin{proof}
    By Proposition~\ref{prop:linearity_Q}, for any $(h,k,j)\in[H]\times[K]\times[n]$, we have
    \begin{align*}
        \mathbf{w}_{h}^{\pi;(k,j)}
        =
        \widetilde{\theta}_k^{(j)}
        +
        \int_{\mathcal{S}}
        V_{h+1}^{\pi;(k,j)}(s')
        d\mu_h^*(s') .
    \end{align*}
    On event $\mathcal{E}_{\mathrm{TS}} \cap \mathcal K_{\mathrm{Norm}}$, by Lemma~\ref{lemma:universal_bound_of_TS_theta} and Assumption~\ref{assump:linear_mdp}, we have
    \begin{align*}
       \left|
       \left\langle
       \psi(s,a),
       \widetilde{\theta}_k^{(j)}
       \right\rangle
       \right|
       \le
       \left\|
       \widetilde{\theta}_k^{(j)}
       \right\|_2
       \left\|
       \psi(s,a)
       \right\|_2
       \le
       B_{\mathrm{sum}} .
    \end{align*}
    Thus, the reward of MDP $\cM_k^{(j)}$ lies in $[-B_{\mathrm{sum}},B_{\mathrm{sum}}]$. Hence, for any state $s'$, the state value function satisfies
    \begin{align*}
        \left|
        V_{h+1}^{\pi;(k,j)}(s')
        \right|
        \le
        HB_{\mathrm{sum}} .
    \end{align*}
    Therefore,
    \begin{align*}
        \left\|
        \int_{\mathcal{S}}
        V_{h+1}^{\pi;(k,j)}(s')
        d\mu_h^*(s')
        \right\|_2
        \le
        HB_{\mathrm{sum}}\sqrt{d}.
    \end{align*}
    Hence,
    \begin{align*}
        \left\|
        \mathbf{w}_{h}^{\pi;(k,j)}
        \right\|_2
        &=
        \left\|
        \widetilde{\theta}_k^{(j)}
        +
        \int_{\mathcal{S}}
        V_{h+1}^{\pi;(k,j)}(s')
        d\mu_h^*(s')
        \right\|_2
        \\
        &\le
        \left\|
        \widetilde{\theta}_k^{(j)}
        \right\|_2
        +
        \left\|
        \int_{\mathcal{S}}
        V_{h+1}^{\pi;(k,j)}(s')
        d\mu_h^*(s')
        \right\|_2
        \\
        &\le
        B_{\mathrm{sum}}
        +
        HB_{\mathrm{sum}}\sqrt{d}
        \\
        &\le
        2HB_{\mathrm{sum}}\sqrt{d}.
    \end{align*}

    This completes the proof.
\end{proof}

\begin{lemma}
\label{lemma:bound_alg_weight_MDP_j^k}
    Assume that event $\mathcal{E}_{\mathrm{TS}} \cap \mathcal{K}_{\mathrm{Norm}}$ holds. 
    Then, for any $(k,h,j)\in[K]\times[H]\times[n]$, the weight vector 
    $\mathbf{w}_h^{k,(j)}$ used to construct the estimated state-action value function $Q_h^{k,(j)}$ in Algorithm~\ref{alg:unified-lsvi-ts-q} satisfies
    \begin{align*}
        \left\|
        \mathbf{w}_h^{k,(j)}
        \right\|_2
        \le
        2HB_{\mathrm{sum}}
        \sqrt{\frac{dk}{\lambda'}}.
    \end{align*}
\end{lemma}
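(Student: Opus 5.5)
The plan mirrors the proof of Lemma~\ref{lemma:Bi_bound_alg_weight_MDP_j^k}, with $B_{\mathrm{bi}}$ replaced by $B_{\mathrm{sum}}$ and the event $\cE\cap\mathcal E_{\mathrm{TS}}$ replaced by $\mathcal{E}_{\mathrm{TS}}\cap\mathcal K_{\mathrm{Norm}}$. The argument is the standard one from Lemma~B.2 of (Jin et al.,~\citeyear{jin20_linearmdp}). We first bound each regression target, then apply Cauchy--Schwarz in the $(\Lambda_h^{k-1})^{-1}$ geometry, and finally use the trace bound on the normalized design.

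\emph{Step 1: bounding the targets.} On $\mathcal{E}_{\mathrm{TS}}\cap\mathcal K_{\mathrm{Norm}}$, Lemma~\ref{lemma:universal_bound_of_TS_theta} gives $\|\widetilde{\theta}_k^{(j)}\|_2\le B_{\mathrm{sum}}$. Since $\|\psi\|_2\le 1$, this yields $|\langle\psi_h^{k'},\widetilde{\theta}_k^{(j)}\rangle|\le B_{\mathrm{sum}}$. The clipping in Algorithm~\ref{alg:unified-lsvi-ts-q} forces $Q_{h+1}^{k,(j)}\in[-HB_{\mathrm{sum}},HB_{\mathrm{sum}}]$, so $|V_{h+1}^{k,(j)}|\le HB_{\mathrm{sum}}$. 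Hence every target $\langle\psi_h^{k'},\widetilde{\theta}_k^{(j)}\rangle+V_{h+1}^{k,(j)}(s_{h+1}^{k'})$ is bounded in absolute value by $(H+1)B_{\mathrm{sum}}\le 2HB_{\mathrm{sum}}$. This holds because $H\ge1$; note that at step $H$ we have $V_{H+1}=0$.

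\emph{Step 2: bounding the weight vector.} Fix any $\mathbf v\in\R^d$. We have
\[
|\mathbf v^\top\mathbf w_h^{k,(j)}|\le 2HB_{\mathrm{sum}}\sum_{k'=1}^{k-1}|\mathbf v^\top(\Lambda_h^{k-1})^{-1}\psi_h^{k'}|.
\]
Cauchy--Schwarz bounds the right-hand side by
\[
2HB_{\mathrm{sum}}\sqrt{(k-1)\,\mathbf v^\top(\Lambda_h^{k-1})^{-1}\mathbf v}\,\sqrt{\sum_{k'}(\psi_h^{k'})^\top(\Lambda_h^{k-1})^{-1}\psi_h^{k'}}.
\]
For the first square root, $\Lambda_h^{k-1}\succeq\lambda' I$ gives $\mathbf v^\top(\Lambda_h^{k-1})^{-1}\mathbf v\le\|\mathbf v\|_2^2/\lambda'$. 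For the second, the trace identity gives
\[
\sum_{k'}(\psi_h^{k'})^\top(\Lambda_h^{k-1})^{-1}\psi_h^{k'}=\trace\big((\Lambda_h^{k-1})^{-1}(\Lambda_h^{k-1}-\lambda' I)\big)\le d.
\]
Combining these yields $|\mathbf v^\top\mathbf w_h^{k,(j)}|\le 2HB_{\mathrm{sum}}\|\mathbf v\|_2\sqrt{dk/\lambda'}$. Taking the supremum over unit vectors $\mathbf v$ gives the claim.

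There is no real obstacle here. The only point requiring care is Step 1: the bound on $\widetilde{\theta}_k^{(j)}$ must be the sum-feedback one, Lemma~\ref{lemma:universal_bound_of_TS_theta}, which relies on $\Sigma_{k-1}^{\mathrm{sum}}\succeq\lambda I$ and $\mathcal K_{\mathrm{Norm}}$. The next-step value must be controlled by the clipping range $HB_{\mathrm{sum}}$ rather than by any optimism argument.
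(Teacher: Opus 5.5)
Your proposal is correct and follows the paper's proof essentially line by line. You bound each regression target by $2HB_{\mathrm{sum}}$ via Lemma~\ref{lemma:universal_bound_of_TS_theta} and the clipping range, then apply Cauchy--Schwarz in the $(\Lambda_h^{k-1})^{-1}$ geometry with $\Lambda_h^{k-1}\succeq\lambda' I$ and $\sum_{k'}(\psi_h^{k'})^\top(\Lambda_h^{k-1})^{-1}\psi_h^{k'}\le d$, and conclude by the dual characterization of the Euclidean norm; you also spell out two steps the paper leaves implicit, namely $(H+1)B_{\mathrm{sum}}\le 2HB_{\mathrm{sum}}$ and the trace identity behind the $\le d$ bound.
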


\begin{proof}This proof follows the analysis of Lemma B.2 in (Jin et al., \citeyear{jin20_linearmdp}).
   
   On event $\mathcal{E}_{\mathrm{TS}} \cap \mathcal{K}_{\mathrm{{Norm}}}$, for any $(k,h,j)\in[K]\times[H]\times[n]$, the reward induced by $\widetilde{\theta}_k^{(j)}$ is bounded by $B_{\mathrm{sum}}$, and the estimated state value function $Q_h^{k,(j)}$ is bounded by $HB_{\mathrm{sum}}$. 
    Hence, the regression target in Algorithm~\ref{alg:unified-lsvi-ts-q} is bounded in absolute value by $2HB_{\mathrm{sum}}$.

    For any vector $\mathbf{u}\in\mathbb{R}^d$, we have
    \begin{align*}
        \left|
        \mathbf{u}^\top \mathbf{w}_h^{k,(j)}
        \right|
        &=
        \left|
        \mathbf{u}^\top
        \bigl(\Lambda_h^{k-1}\bigr)^{-1}
        \sum_{k'=1}^{k-1}
        \psi(s_h^{k'},a_h^{k'})
        \left[
        \psi(s_h^{k'},a_h^{k'})^\top
        \widetilde{\theta}_k^{(j)}
        +
        \max_{a\in\mathcal{A}}
        Q_{h+1}^{k,(j)}(s_{h+1}^{k'},a)
        \right]
        \right|
        \\
        &\le
        2HB_{\mathrm{sum}}
        \sum_{k'=1}^{k-1}
        \left|
        \mathbf{u}^\top
        \bigl(\Lambda_h^{k-1}\bigr)^{-1}
        \psi(s_h^{k'},a_h^{k'})
        \right|
        \\
        &\le
        2HB_{\mathrm{sum}}
        \sqrt{
        \left[
        \sum_{k'=1}^{k-1}
        \mathbf{u}^\top
        \bigl(\Lambda_h^{k-1}\bigr)^{-1}
        \mathbf{u}
        \right]
        \cdot
        \left[
        \sum_{k'=1}^{k-1}
        \psi(s_h^{k'},a_h^{k'})^\top
        \bigl(\Lambda_h^{k-1}\bigr)^{-1}
        \psi(s_h^{k'},a_h^{k'})
        \right]
        }
        \\
        &\le
        2HB_{\mathrm{sum}}
        \left\|
        \mathbf{u}
        \right\|_2
        \sqrt{\frac{dk}{\lambda'}}.
    \end{align*}
    The last inequality follows from $\Lambda_h^{k-1}\succeq \lambda'I$ and
    \begin{align*}
        \sum_{k'=1}^{k-1}
        \psi(s_h^{k'},a_h^{k'})^\top
        \bigl(\Lambda_h^{k-1}\bigr)^{-1}
        \psi(s_h^{k'},a_h^{k'})
        \le d .
    \end{align*}
    Therefore, by the dual characterization of the Euclidean norm,
    \begin{align*}
        \left\|
        \mathbf{w}_h^{k,(j)}
        \right\|_2
        =
        \max_{\mathbf{u}:\left\|\mathbf{u}\right\|_2=1}
        \left|
        \mathbf{u}^\top
        \mathbf{w}_h^{k,(j)}
        \right|
        \le
        2HB_{\mathrm{sum}}
        \sqrt{\frac{dk}{\lambda'}}.
    \end{align*}

    This completes the proof.
\end{proof}

Fix a constant $c_{\beta}>0$. For an absolute constant $C_{{LS}}>0$ independent of $c_\beta$.
Define the event
\begin{align}
\mathcal E_{\mathrm{LS}}
:=
\Bigg\{
&
\left\|
\sum_{k'=1}^{k-1}
\psi_h^{k'}
\left[
    V_{h+1}^{k,(j)}(s_{h+1}^{k'})
    -
    P_h V_{h+1}^{k,(j)}(s_h^{k'},a_h^{k'})
\right]
\right\|_{(\Lambda_h^{k-1})^{-1}}
\nonumber \\
&\qquad\qquad
\le
C_{{LS}} d H B_R \sqrt{\log
    \left(
        \frac{2(c_{\beta}+1)dKH}{\delta'}
    \right)},
\quad
\forall (k,h,j)\in[K]\times[H]\times[n]
\Bigg\}.
\label{eq:event_LS}
\end{align}

\begin{lemma}\label{lemma:self_normalized_V_concentration}
Let $c_{\beta}$ be the constant in the definition of $\beta_{\sumrm}$, and let $\lambda'=1$.
There exists an absolute constant $C_{{LS}}>0$, independent of
$c_{\beta}$, such that the event
$\mathcal E_{\mathrm{LS}}$ defined above satisfies
\[
    \Pr\left[
        \cK_{\mathrm{Norm}}
        \cap
        \mathcal E_{\mathrm{TS}}
        \cap
        \mathcal E_{\mathrm{LS}}^c
    \right]
    \le
    \delta' .
\]
\end{lemma}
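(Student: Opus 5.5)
The plan is to mirror the proof of the binary-feedback version, Lemma~\ref{lemma:self_normalized_V_concentration} in Appendix~\ref{App:binary_unknown}, replacing $B_{\mathrm{bi}}$ by $B_{\mathrm{sum}}$ (which is what $B_R$ denotes in Eq.~\eqref{eq:event_LS}). We also replace the reward event $\cE$ by $\cK_{\mathrm{Norm}}$. The argument is essentially that of Lemma~B.3 in (Jin et al.,~\citeyear{jin20_linearmdp}).

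\textbf{Step 1: bound the random value functions.} On $\cK_{\mathrm{Norm}}\cap\mathcal E_{\mathrm{TS}}$, Lemma~\ref{lemma:universal_bound_of_TS_theta} gives $\|\widetilde\theta_k^{(j)}\|_2\le B_{\mathrm{sum}}$ for all $(k,j)$. Hence every clipped $Q_h^{k,(j)}$ takes values in $[-HB_{\mathrm{sum}},HB_{\mathrm{sum}}]$. Lemma~\ref{lemma:bound_alg_weight_MDP_j^k} gives $\|\mathbf w_h^{k,(j)}\|_2\le 2HB_{\mathrm{sum}}\sqrt{dK/\lambda'}$, and $\lambda_{\min}(\Lambda_h^{k-1})\ge\lambda'=1$.

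\textbf{Step 2: place $V_{h+1}^{k,(j)}$ in a function class and apply uniform concentration.} By Step~1, on this event $V_{h+1}^{k,(j)}$ belongs, for every $(k,h,j)$, to the class
\[
\mathcal V=\Bigl\{\max_a \operatorname{clip}_{[-HB_{\mathrm{sum}},HB_{\mathrm{sum}}]}\bigl(\mathbf w^\top\psi(\cdot,a)+\beta_{\mathrm{sum}}\|\psi(\cdot,a)\|_{\Lambda^{-1}}\bigr)\Bigr\},
\]
where $\|\mathbf w\|_2\le 2HB_{\mathrm{sum}}\sqrt{dK}$ and $\Lambda\succeq I$. The class does not depend on $j$, since the sample index only enters through $\mathbf w$. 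This is the class treated in Lemma~\ref{lemma:cover_num_LSVI_V}. Lemma~\ref{lemma:uniform_self_normalized}, applied with the $\epsilon$-covering number from Lemma~\ref{lemma:cover_num_LSVI_V} and a union bound over $h\in[H]$, then gives the following. With probability at least $1-\delta'$, whenever $\cK_{\mathrm{Norm}}\cap\mathcal E_{\mathrm{TS}}$ holds, the squared self-normalized norm is at most the analogue of Eq.~\eqref{eq:Bi_concentration_value_bound} with $B_{\mathrm{bi}},\beta_{\mathrm{bi}}$ replaced by $B_{\mathrm{sum}},\beta_{\mathrm{sum}}$:
\[
4H^2B_{\mathrm{sum}}^2\Bigl[\tfrac d2\log(k+1)+d\log\bigl(1+\tfrac{8HB_{\mathrm{sum}}\sqrt{dK}}{\epsilon}\bigr)+d^2\log\bigl(1+\tfrac{8\sqrt d\,\beta_{\mathrm{sum}}^2}{\epsilon^2}\bigr)+\log\tfrac H\delta'\Bigr]+8k^2\epsilon^2 .
\]
The subtle point is that $V_{h+1}^{k,(j)}$ depends on the data. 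This is handled by the covering argument, because the bound is uniform over $\mathcal V$.

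\textbf{Step 3: choose $\epsilon$ and absorb constants.} Take $\epsilon=dHB_{\mathrm{sum}}/K$ and substitute $\beta_{\mathrm{sum}}=c_\beta dHB_{\mathrm{sum}}\sqrt{\log(2dKH/\delta')}$. The $d^2\log$ term is then $O\bigl(d^2\log((c_\beta+1)dKHB_{\mathrm{sum}}/\delta')\bigr)$, and the $8k^2\epsilon^2$ term is $O(d^2H^2B_{\mathrm{sum}}^2)$.

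\textbf{Main obstacle.} This step is where the main difficulty lies. We must verify that $\log B_{\mathrm{sum}}$, which comes from the covering terms, and $\log c_\beta$ are absorbed into $\log\bigl(2(c_\beta+1)dKH/\delta'\bigr)$ with an absolute constant $C'_0$ that does not depend on $c_\beta$. Only a logarithmic dependence on $c_\beta$ is allowed, since $c_\beta$ is fixed only later, in the sum analogue of Eq.~\eqref{eq:Bi_cbeta_choice}. $B_{\mathrm{sum}}$ is polynomial in $d,K,\log(1/\delta')$ and $1/\sqrt\lambda$, so its logarithm is absorbed as in the binary case. Finally, setting $C_{LS}\ge\sqrt{C'_0}$ and taking square roots gives the event $\mathcal E_{\mathrm{LS}}$, so $\Pr[\cK_{\mathrm{Norm}}\cap\mathcal E_{\mathrm{TS}}\cap\mathcal E_{\mathrm{LS}}^c]\le\delta'$.
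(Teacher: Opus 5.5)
Your Steps 1 and 2 follow the paper's argument. On $\cK_{\mathrm{Norm}}\cap\mathcal E_{\mathrm{TS}}$ the weights satisfy $\|\mathbf w_h^{k,(j)}\|_2\le 2HB_{\mathrm{sum}}\sqrt{dK}$, so $V_{h+1}^{k,(j)}$ lies in the $j$-independent class of Lemma~\ref{lemma:cover_num_LSVI_V}. Lemma~\ref{lemma:uniform_self_normalized} with a union bound over $h$ then gives your displayed bound on a fixed event of probability at least $1-\delta'$.

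The problem is Step 3, which you flag as the main obstacle. You keep $B_{\mathrm{sum}}$ inside the covering logarithms. You then claim $\log B_{\mathrm{sum}}$ is absorbed into $\log\bigl(2(c_\beta+1)dKH/\delta'\bigr)$ with an absolute constant, because $B_{\mathrm{sum}}$ is polynomial in $d,K,\log(1/\delta'),1/\sqrt\lambda$. That absorption is false. $B_{\mathrm{sum}}$ also depends on $r_{\max}$ and on $\lambda$ (through $1/\sqrt\lambda$ and through $\widetilde\beta(K)$). These are free parameters that do not appear in $\log\bigl(2(c_\beta+1)dKH/\delta'\bigr)$. Letting $\lambda\to 0$ or $r_{\max}\to\infty$ makes $\log B_{\mathrm{sum}}$ arbitrarily large relative to that logarithm, so no absolute $C'_0$ could cover it. The binary-case proof does not perform such an absorption either.

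The step is fixed by noticing there is nothing to absorb. Both $\epsilon=dHB_{\mathrm{sum}}/K$ and $\beta_{\mathrm{sum}}=c_\beta dHB_{\mathrm{sum}}\sqrt{\log(2dKH/\delta')}$ are proportional to $B_{\mathrm{sum}}$, so it cancels inside the logarithms:
\[
\frac{8HB_{\mathrm{sum}}\sqrt{dK}}{\epsilon}=\frac{8K^{3/2}}{\sqrt d},\qquad \frac{8\sqrt d\,\beta_{\mathrm{sum}}^2}{\epsilon^2}=8\sqrt d\,c_\beta^2K^2\log\Bigl(\frac{2dKH}{\delta'}\Bigr).
\]
Also $8k^2\epsilon^2\le 8d^2H^2B_{\mathrm{sum}}^2$. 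The right-hand side is therefore at most $H^2B_{\mathrm{sum}}^2$ times a quantity depending only on $d,K,H,\delta',c_\beta$. That quantity is at most $C'_0d^2\log\bigl(2(c_\beta+1)dKH/\delta'\bigr)$ with $C'_0$ absolute, since $c_\beta$ enters only through $\log(c_\beta^2)\le 2\log(c_\beta+1)$. With this replacement, the rest of your argument (take $C_{LS}\ge\sqrt{C'_0}$, then take square roots) goes through as in the paper.
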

\begin{proof}This proof follows the analysis of Lemma~B.3 in (Jin et al., \citeyear{jin20_linearmdp}).

On  event $\mathcal{E}_{\mathrm{TS}}\cap\cK_{\mathrm{Norm}}$, for all
$(k,h,j)\in[K]\times[H]\times[n]$, Lemma~\ref{lemma:bound_alg_weight_MDP_j^k} gives
\[
    \left\|\mathbf w_h^{k,(j)}\right\|_2
    \le
    2HB_{\mathrm{sum}} \sqrt{\frac{dK}{\lambda'}} .
\]
In addition, by the definition of $\Lambda_h^{k-1}$, we have
   $ \lambda_{\min}\bigl(\Lambda_h^{k-1}\bigr)\ge \lambda' .$

Hence, on event $\mathcal{E}_{\mathrm{TS}}\cap\cK_{\mathrm{Norm}}$, for all
$(k,h,j)\in[K]\times[H]\times[n]$, function
$V_{h+1}^{k,(j)}$ belongs to the function class characterized in
Lemma~\ref{lemma:cover_num_LSVI_V}.

Combining Lemmas~\ref{lemma:uniform_self_normalized} and
\ref{lemma:cover_num_LSVI_V}, we obtain that, with probability at least
$1-\delta'$, the following implication holds: when
$\mathcal E_{\mathrm{TS}}\cap\cK_{\mathrm{Norm}}$ occurs, for any fixed
$\epsilon>0$ and all $(k,h,j)\in[K]\times[H]\times[n]$,
\begin{align}
    &
    \left\|
    \sum_{k'=1}^{k-1}
    \psi_h^{k'}
    \left[
        V_{h+1}^{k,(j)}(s_{h+1}^{k'})
        -
        P_h V_{h+1}^{k,(j)}(s_h^{k'},a_h^{k'})
    \right]
    \right\|_{\bigl(\Lambda_h^{k-1}\bigr)^{-1}}^2
    \notag
    \\
    &\le
    4H^2B_{\mathrm{sum}}^2
    \left[
        \frac{d}{2}
        \log
        \left(
            \frac{k+\lambda'}{\lambda'}
        \right)
        +
        d
        \log
        \left(
            1+
            \frac{8HB_{\mathrm{sum}}\sqrt{dK}}{\epsilon\sqrt{\lambda'}}
        \right)
        +
        d^2
        \log
        \left(
            1+
            \frac{8d^{1/2}\beta_{\mathrm{sum}}^2}{\epsilon^2\lambda'}
        \right)
        +
        \log
        \left(
            \frac{H}{\delta'}
        \right)
    \right]
    +
    \frac{8k^2\epsilon^2}{\lambda'} .
    \label{eq:concentration_value_bound}
\end{align}
Now set $\lambda'=1$ and choose
\[
    \beta_{\mathrm{sum}}
    =
    c_{\beta}dHB_{\mathrm{sum}} \sqrt{\log\left(\frac{2dKH}{\delta'}\right)},
    \qquad
    \epsilon
    =
    \frac{dHB_{\mathrm{sum}}}{K}.
\]
Plugging these choices into Eq.~\eqref{eq:concentration_value_bound}, we conclude that there exists an absolute constant
$C_0>0$, independent of $c_\beta$, such that
\begin{align*}
    &
    \left\|
    \sum_{k'=1}^{k-1}
    \psi_h^{k'}
    \left[
        V_{h+1}^{k,(j)}(s_{h+1}^{k'})
        -
        P_h V_{h+1}^{k,(j)}(s_h^{k'},a_h^{k'})
    \right]
    \right\|_{\bigl(\Lambda_h^{k-1}\bigr)^{-1}}^2
    \le
    C_0 d^2H^2B_{\mathrm{sum}}^2
    \log
    \left(
        \frac{2(c_{\beta}+1)dKH}{\delta'}
    \right) .
\end{align*}
Choose the constant in the definition of $\mathcal E_{\mathrm{LS}}$ such that
$C_{LS}\ge \sqrt{C_0}$. Then
\[
    \left\|
    \sum_{k'=1}^{k-1}
    \psi_h^{k'}
    \left[
        V_{h+1}^{k,(j)}(s_{h+1}^{k'})
        -
        P_h V_{h+1}^{k,(j)}(s_h^{k'},a_h^{k'})
    \right]
    \right\|_{\bigl(\Lambda_h^{k-1}\bigr)^{-1}}
    \le
    C_{LS} dHB_{\mathrm{sum}}\sqrt{\log
    \left(
        \frac{2(c_{\beta}+1)dKH}{\delta'}
    \right)}.
\]
By the definition of $\mathcal E_{\mathrm{LS}}$, the event
    $\cK_{\mathrm{Norm}}
    \cap
    \mathcal E_{\mathrm{TS}}
    \cap
    \mathcal E_{\mathrm{LS}}^c$
can occur only if the high-probability implication above fails. Therefore,
$$
    \Pr\left[
        \cK_{\mathrm{Norm}}
        \cap
        \mathcal E_{\mathrm{TS}}
        \cap
        \mathcal E_{\mathrm{LS}}^c
    \right]
    \le
    \delta' .
$$
This completes the proof.
\end{proof}

\begin{lemma}[Upper Bound for estimator error in Original MDP]\label{lemma:upper_bound_LSVI-TS}
    Assume that event $\mathcal{E}_{\mathrm{TS}} \cap \cK_{\mathrm{Norm}}\cap\mathcal{E}_{\mathrm{LS}}$ holds.There exists a absolute constant $c_\beta$ that is the constant in definition of $\beta_{\mathrm{sum}}$.
    Then, for any policy $\pi$, for all
    $(s,a,h,k,j)\in\mathcal{S}\times\mathcal{A}\times[H]\times[K]\times[n]$, we have
    \begin{align*}
       \left|
        \left\langle
            \psi(s,a),
             \mathbf{w}_h^{k,(j)}
        \right\rangle
        -
        Q_h^{\pi}(s,a)
        -
        P_h
        \left(
            V_{h+1}^{k,(j)}
            -
            V_{h+1}^{\pi}
        \right)(s,a)
        -
        \left\langle
            \psi(s,a),
            \widetilde{\theta}_k^{(j)}-\theta^*
        \right\rangle
        \right|
        \\
        \le
        \beta_{\mathrm{sum}}
        \sqrt{
            \bigl(\psi(s,a)\bigr)^\top
            \bigl(\Lambda_h^{k-1}\bigr)^{-1}
            \psi(s,a)
        } .
    \end{align*}
\end{lemma}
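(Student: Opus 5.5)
The plan is to mirror the proof of Lemma~\ref{lemma:Bi_upper_bound_LSVI-TS} almost verbatim, with $B_{\mathrm{bi}}$ replaced by $B_{\mathrm{sum}}$, $\beta_{\mathrm{bi}}$ by $\beta_{\mathrm{sum}}$, and the events $\cE,\mathcal E^{\mathrm{Bi}}_{\mathrm{LS}}$ by $\cK_{\mathrm{Norm}},\mathcal E_{\mathrm{LS}}$.

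First, by Proposition~\ref{proposition:linear_Q_prime} and the Bellman equation, write $\mathbf w_h^\pi=\theta^*+\int V_{h+1}^\pi\,d\mu_h^*$. Then expand $\mathbf w_h^{k,(j)}-\mathbf w_h^\pi$ using the LSVI formula in Algorithm~\ref{alg:unified-lsvi-ts-q}. This splits it into four pieces:
\begin{itemize}
\item $q_1=-\lambda'(\Lambda_h^{k-1})^{-1}\mathbf w_h^\pi$;
\item the martingale term $q_2=(\Lambda_h^{k-1})^{-1}\sum_{k'}\psi_h^{k'}[V_{h+1}^{k,(j)}(s_{h+1}^{k'})-P_hV_{h+1}^{k,(j)}(s_h^{k'},a_h^{k'})]$;
\item the transition-difference term $q_3=(\Lambda_h^{k-1})^{-1}\sum_{k'}\psi_h^{k'}P_h(V_{h+1}^{k,(j)}-V_{h+1}^\pi)(s_h^{k'},a_h^{k'})$;
\item the reward part $(\widetilde\theta_k^{(j)}-\theta^*)+q_4$, with $q_4=-\lambda'(\Lambda_h^{k-1})^{-1}(\widetilde\theta_k^{(j)}-\theta^*)$.
\end{itemize}

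Next, bound the inner product of each piece with $\psi(s,a)$ by a multiple of $\|\psi(s,a)\|_{(\Lambda_h^{k-1})^{-1}}$.
\begin{itemize}
\item For $q_1$, use Cauchy--Schwarz and Lemma~\ref{lemma:bound_weights_primal_MDP}, which gives $\sqrt{\lambda'}\,2Hr_{\max}\sqrt d$.
\item For $q_2$, use the event $\mathcal E_{\mathrm{LS}}$ directly (Lemma~\ref{lemma:self_normalized_V_concentration} for sum feedback), which gives $C_{LS}dHB_{\mathrm{sum}}\sqrt{\log(2(c_\beta+1)dKH/\delta')}$.
\item For $q_3$, use the linear MDP structure to rewrite it as $P_h(V_{h+1}^{k,(j)}-V_{h+1}^\pi)(s,a)$ plus a remainder $p_2$. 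Since both value functions are bounded by $HB_{\mathrm{sum}}$ (which dominates $Hr_{\max}$), the remainder satisfies $|p_2|\le 4HB_{\mathrm{sum}}\sqrt{d\lambda'}\|\psi\|_{(\Lambda_h^{k-1})^{-1}}$.
\item For $q_4$, use $\Lambda_h^{k-1}\succeq\lambda' I$ together with Lemma~\ref{lemma:universal_bound_of_TS_theta} and $\|\theta^*\|\le r_{\max}\sqrt d$. This gives a bound of order $\sqrt{\lambda'}B_{\mathrm{sum}}$, up to a constant factor.
\end{itemize}
The leftover $\langle\psi,\widetilde\theta_k^{(j)}-\theta^*\rangle$ is exactly the term subtracted in the statement.

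Summing these bounds with $\lambda'=1$ gives a total of at most $c'\,dHB_{\mathrm{sum}}\sqrt{\log(2(c_\beta+1)dKH/\delta')}\,\|\psi(s,a)\|_{(\Lambda_h^{k-1})^{-1}}$, for an absolute constant $c'$ independent of $c_\beta$.

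The only delicate step is the circularity: $\mathcal E_{\mathrm{LS}}$ carries a $\log(c_\beta+1)$ factor, because the covering of the value-function class depends on $\beta_{\mathrm{sum}}$. I would handle it exactly as in Eq.~\eqref{eq:Bi_cbeta_choice}. Choose $c_\beta$ large enough that $c'\sqrt{\log 2+\log(c_\beta+1)}\le c_\beta\sqrt{\log 2}$, which is possible because the left side grows only like $\sqrt{\log c_\beta}$. Since $\log(2dKH/\delta')\ge\log 2$, this choice yields $c'\sqrt{\log(2dKH/\delta')+\log(c_\beta+1)}\le c_\beta\sqrt{\log(2dKH/\delta')}$. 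The accumulated coefficient is therefore at most $\beta_{\mathrm{sum}}$, which completes the claim.
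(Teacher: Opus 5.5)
Your proposal is correct and follows the paper's proof essentially step for step. You use the same decomposition of $\mathbf w_h^{k,(j)}-\mathbf w_h^{\pi}$ into $q_1,\dots,q_4$ plus $\widetilde\theta_k^{(j)}-\theta^*$, bound the pieces with the same tools (Lemma~\ref{lemma:bound_weights_primal_MDP}, the event $\mathcal E_{\mathrm{LS}}$, the linear-MDP split of $q_3$, and Lemma~\ref{lemma:universal_bound_of_TS_theta}), and break the $\log(c_\beta+1)$ circularity with the same choice of $c_\beta$; your extra constant in the $q_4$ bound is harmless, since the proof of Lemma~\ref{lemma:universal_bound_of_TS_theta} already gives $\|\widetilde\theta_k^{(j)}-\theta^*\|_2\le B_{\mathrm{sum}}$.
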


\begin{proof}
    By Proposition~\ref{proposition:linear_Q_prime} and the Bellman equation, for any
    $(s,a,h)\in \mathcal{S}\times\mathcal{A}\times[H]$, the state-action value function satisfies
    \begin{align*}
        Q_h^{\pi}(s,a)
        :=
        \left\langle
            \psi(s,a),
            \mathbf{w}_h^{\pi}
        \right\rangle
        =
        \left(
            \langle\psi,\theta^*\rangle+ P_h V_{h+1}^{\pi}
        \right)(s,a).
    \end{align*}
    Therefore,
    \begin{align*}
        &\mathbf{w}_h^{k,(j)}-\mathbf{w}_h^{\pi}
        \\
        &=
        \bigl(\Lambda_h^{k-1}\bigr)^{-1}
        \sum_{k'=1}^{k-1}
        \psi_h^{k'}
        \left[
            \langle\psi_h^{k'},\widetilde{\theta}_k^{(j)} \rangle
            +
            V_{h+1}^{k,(j)}(s_{h+1}^{k'})
        \right]
        -
        \mathbf{w}_h^{\pi}
        \\
        &=
        \bigl(\Lambda_h^{k-1}\bigr)^{-1}
        \left\{
            -\lambda' \mathbf{w}_h^{\pi}
            +
            \sum_{k'=1}^{k-1}
            \psi_h^{k'}
            \left[
                V_{h+1}^{k,(j)}(s_{h+1}^{k'})
                -
                P_h V_{h+1}^{\pi}(s_h^{k'},a_h^{k'})
            \right]
            +
            \sum_{k'=1}^{k-1}
            \psi_h^{k'}
            \bigl(\psi_h^{k'}\bigr)^\top
            \bigl(\widetilde{\theta}_k^{(j)}-\theta^*\bigr)
        \right\}
        \\
        &=
        \underbrace{
        -\lambda'
        \bigl(\Lambda_h^{k-1}\bigr)^{-1}
        \mathbf{w}_h^{\pi}
        }_{q_1}
        +
        \underbrace{
        \bigl(\Lambda_h^{k-1}\bigr)^{-1}
        \sum_{k'=1}^{k-1}
        \psi_h^{k'}
        \left[
            V_{h+1}^{k,(j)}(s_{h+1}^{k'})
            -
            P_h V_{h+1}^{k,(j)}(s_h^{k'},a_h^{k'})
        \right]
        }_{q_2}
        \\
        &\quad
        +
        \underbrace{
        \bigl(\Lambda_h^{k-1}\bigr)^{-1}
        \sum_{k'=1}^{k-1}
        \psi_h^{k'}
        P_h
        \left(
            V_{h+1}^{k,(j)}
            -
            V_{h+1}^{\pi}
        \right)(s_h^{k'},a_h^{k'})
        }_{q_3}
        +
        \bigl(\widetilde{\theta}_k^{(j)}-\theta^*\bigr)
        \underbrace{
        -\lambda'
        \bigl(\Lambda_h^{k-1}\bigr)^{-1}
        \bigl(\widetilde{\theta}_k^{(j)}-\theta^*\bigr)
        }_{q_4}.
    \end{align*}

    We next control the four terms above separately. For the first term,
    \begin{align*}
        \left|
        \left\langle
            \psi(s,a),
            q_1
        \right\rangle
        \right|
        &=
        \lambda'
        \left|
        \left\langle
            \psi(s,a),
            \bigl(\Lambda_h^{k-1}\bigr)^{-1}
            \mathbf{w}_h^{\pi}
        \right\rangle
        \right|
        \\
        &\le
        \sqrt{\lambda'}
        \left\|
        \mathbf{w}_h^{\pi}
        \right\|_2
        \sqrt{
            \bigl(\psi(s,a)\bigr)^\top
            \bigl(\Lambda_h^{k-1}\bigr)^{-1}
            \psi(s,a)
        } .
    \end{align*}
    For the second term, on event $\mathcal{E}_{\mathrm{LS}}$, by Eq.~\eqref{eq:event_LS}, we have
    \begin{align*}
        \left|
        \left\langle
            \psi(s,a),
            q_2
        \right\rangle
        \right|
        \le
        C_{LS}\cdot dHB_{\mathrm{sum}}
        \sqrt{\log
    \left(
        \frac{2(c_{\beta}+1)dKH}{\delta'}
    \right)}
        \sqrt{
            \bigl(\psi(s,a)\bigr)^\top
            \bigl(\Lambda_h^{k-1}\bigr)^{-1}
            \psi(s,a)
        } .
    \end{align*}
    For the third term, using Assumption~\ref{assump:linear_mdp}, we obtain
    \begin{align*}
        \left\langle
            \psi(s,a),
            q_3
        \right\rangle
        &=
        \left\langle
            \psi(s,a),
            \bigl(\Lambda_h^{k-1}\bigr)^{-1}
            \sum_{k'=1}^{k-1}
            \psi_h^{k'}
            P_h
            \left(
                V_{h+1}^{k,(j)}
                -
                V_{h+1}^{\pi}
            \right)(s_h^{k'},a_h^{k'})
        \right\rangle
        \\
        &=
        \left\langle
            \psi(s,a),
            \bigl(\Lambda_h^{k-1}\bigr)^{-1}
            \sum_{k'=1}^{k-1}
            \psi_h^{k'}
            \bigl(\psi_h^{k'}\bigr)^\top
            \int
            \left(
                V_{h+1}^{k,(j)}
                -
                V_{h+1}^{\pi}
            \right)(s')
            d\mu_h^*(s')
        \right\rangle
        \\
        &=
        \underbrace{
        \left\langle
            \psi(s,a),
            \int
            \left(
                V_{h+1}^{k,(j)}
                -
                V_{h+1}^{\pi}
            \right)(s')
            d\mu_h^*(s')
        \right\rangle
        }_{p_1}
        \\
        &\quad
        \underbrace{
        -
        \lambda'
        \left\langle
            \psi(s,a),
            \bigl(\Lambda_h^{k-1}\bigr)^{-1}
            \int
            \left(
                V_{h+1}^{k,(j)}
                -
                V_{h+1}^{\pi}
            \right)(s')
            d\mu_h^*(s')
        \right\rangle
        }_{p_2}.
    \end{align*}
    By Assumption~\ref{assump:linear_mdp} and $B_{\mathrm{sum}}\ge r_{\max}$, we have
    \begin{align*}
        p_1
        =
        P_h
        \left(
            V_{h+1}^{k,(j)}
            -
            V_{h+1}^{\pi}
        \right)(s,a),
        \qquad
        |p_2|
        \le
        4HB_{\mathrm{sum}}\sqrt{d\lambda'}
        \sqrt{
            \bigl(\psi(s,a)\bigr)^\top
            \bigl(\Lambda_h^{k-1}\bigr)^{-1}
            \psi(s,a)
        } .
    \end{align*}
    For the fourth term, on event $\mathcal{E}_{\mathrm{TS}} \cap \cK_{\mathrm{Norm}}$, we have
    \begin{align*}
        \left|
        \left\langle
            \psi(s,a),
            q_4
        \right\rangle
        \right|
        &=
        \lambda'
        \left|
        \left\langle
            \psi(s,a),
            \bigl(\Lambda_h^{k-1}\bigr)^{-1}
            \bigl(
                \widetilde{\theta}_k^{(j)}
                -
                \theta^*
            \bigr)
        \right\rangle
        \right|
        \\
        &\le
        \lambda'
        \sqrt{
            \bigl(\psi(s,a)\bigr)^\top
            \bigl(\Lambda_h^{k-1}\bigr)^{-1}
            \psi(s,a)
        }
        \left\|
            \widetilde{\theta}_k^{(j)}
            -
            \theta^*
        \right\|_{\bigl(\Lambda_h^{k-1}\bigr)^{-1}}
        \\
        &\le
        \sqrt{\lambda'} B_{\mathrm{sum}}
        \sqrt{
            \bigl(\psi(s,a)\bigr)^\top
            \bigl(\Lambda_h^{k-1}\bigr)^{-1}
            \psi(s,a)
        } .
    \end{align*}
    Since
    \begin{align*}
        \left\langle
            \psi(s,a),
            \mathbf{w}_h^{k,(j)}
        \right\rangle
        -
        Q_h^{\pi}(s,a)
        =
        \left\langle
            \psi(s,a),
            q_1+q_2+q_3+q_4
        \right\rangle
        +
        \left\langle
            \psi(s,a),
            \widetilde{\theta}_k^{(j)}-\theta^*
        \right\rangle,
    \end{align*}
    by Lemma~\ref{lemma:bound_weights_primal_MDP} and our choice of $\lambda'$, there exists an absolute constant $c'>0$, independent of $c_\beta$, such that
    \begin{align}
        &
        \left|
        \left\langle
            \psi(s,a),
            \mathbf{w}_h^{k,(j)}
        \right\rangle
        -
        Q_h^{\pi}(s,a)
        -
        P_h
        \left(
            V_{h+1}^{k,(j)}
            -
            V_{h+1}^{\pi}
        \right)(s,a)
        -
        \left\langle
            \psi(s,a),
            \widetilde{\theta}_k^{(j)}-\theta^*
        \right\rangle
        \right|
        \notag
        \\
        &\le
        c'\cdot dHB_{\mathrm{sum}}
        \sqrt{\log
    \left(
        \frac{2(c_{\beta}+1)dKH}{\delta'}
    \right)}
        \sqrt{
            \bigl(\psi(s,a)\bigr)^\top
            \bigl(\Lambda_h^{k-1}\bigr)^{-1}
            \psi(s,a)
        } .
        \label{eq:c'_bound}
    \end{align}
    It remains to ensure that the right-hand side above is bounded by the bonus term with coefficient $\beta_{\mathrm{sum}}$.
    It is sufficient to choose $c_\beta$ such that
    \begin{align}
        c'
        \sqrt{
            \log\left(\frac{2dKH}{\delta'}\right)+\log(c_\beta+1)
        }
        \le
        c_\beta\sqrt{\log\left(\frac{2dKH}{\delta'}\right)},
        \label{eq:cbeta_choice}
    \end{align}
    Since $\log\left(\frac{2dKH}{\delta'}\right)\in[\log 2,\infty)$ and $c'$ is an absolute constant independent of $c_\beta$, we can find an absolute constant $c_\beta$ that is sufficiently large satisfying
    \[
        c'
        \sqrt{
            \log 2+\log(c_\beta+1)
        }
        \le
        c_\beta\sqrt{\log 2}.
    \]
    This choice guarantees Eq.~\eqref{eq:cbeta_choice}. Hence,
    \begin{align*}
        c'\cdot dHB_{\mathrm{sum}}
        \sqrt{\log
    \left(
        \frac{2(c_{\beta}+1)dKH}{\delta'}
    \right)}
        \le
        \beta_{\mathrm{sum}} .
    \end{align*}
    Combining this inequality with Eq.~\eqref{eq:c'_bound} proves the result.

    This completes the proof.
\end{proof}

\begin{lemma}[Upper Bound for estimator error in $\cM_k^{(j)}$]\label{lemma:upper_bound_LSVI_kj}
    Assume that event $\mathcal{E}_{\mathrm{TS}} \cap \cK_{\mathrm{Norm}}\cap\mathcal{E}_{\mathrm{LS}}$ holds.There exists a absolute constant $c_\beta$ that is the constant in definition of $\beta_{\mathrm{sum}}$.
    Then, for any policy $\pi$, for all
    $(s,a,h,k,j)\in\mathcal{S}\times\mathcal{A}\times[H]\times[K]\times[n]$, we have
    \begin{align*}
        \left|
        \left\langle
            \psi(s,a),
            \mathbf{w}_h^{k,(j)}
        \right\rangle
        -
        Q_h^{\pi;(k,j)}(s,a)
        -
        P_h
        \left(
            V_{h+1}^{k,(j)}
            -
            V_{h+1}^{\pi;(k,j)}
        \right)(s,a)
        \right|
        \\
        \le
        \beta_{\mathrm{sum}}
        \sqrt{
            \bigl(\psi(s,a)\bigr)^\top
            \bigl(\Lambda_h^{k-1}\bigr)^{-1}
            \psi(s,a)
        }.
    \end{align*}
\end{lemma}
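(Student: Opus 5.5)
The proof mirrors the argument for Lemma~\ref{lemma:upper_bound_LSVI-TS}, and it is in fact simpler. The reason is that in the auxiliary MDP $\cM_k^{(j)}$ the reward parameter used inside $\lsvitsq$ is exactly $\widetilde{\theta}_k^{(j)}$, so no reward-mismatch term appears.

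By Proposition~\ref{prop:linearity_Q} and the Bellman equation in $\cM_k^{(j)}$, we have $Q_h^{\pi;(k,j)}(s,a)=\langle\psi(s,a),\mathbf w_h^{\pi;(k,j)}\rangle=\bigl(\langle\psi,\widetilde{\theta}_k^{(j)}\rangle+P_hV_{h+1}^{\pi;(k,j)}\bigr)(s,a)$. Substituting this identity into the ridge-regression formula for $\mathbf w_h^{k,(j)}$, and using $\Lambda_h^{k-1}=\lambda'I+\sum_{k'}\psi_h^{k'}(\psi_h^{k'})^\top$, the reward contributions $\langle\psi_h^{k'},\widetilde{\theta}_k^{(j)}\rangle$ cancel exactly against $\sum_{k'}\psi_h^{k'}(\psi_h^{k'})^\top\widetilde{\theta}_k^{(j)}$. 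This leaves the three-term decomposition $\mathbf w_h^{k,(j)}-\mathbf w_h^{\pi;(k,j)}=q_1+q_2+q_3$, where
\begin{align*}
q_1&=-\lambda'(\Lambda_h^{k-1})^{-1}\mathbf w_h^{\pi;(k,j)},\\
q_2&=(\Lambda_h^{k-1})^{-1}\sum_{k'}\psi_h^{k'}\bigl[V_{h+1}^{k,(j)}(s_{h+1}^{k'})-P_hV_{h+1}^{k,(j)}(s_h^{k'},a_h^{k'})\bigr],\\
q_3&=(\Lambda_h^{k-1})^{-1}\sum_{k'}\psi_h^{k'}P_h\bigl(V_{h+1}^{k,(j)}-V_{h+1}^{\pi;(k,j)}\bigr)(s_h^{k'},a_h^{k'}).
\end{align*}

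Next we bound each term paired with $\psi(s,a)$; every bound carries the factor $\|\psi(s,a)\|_{(\Lambda_h^{k-1})^{-1}}$.
\begin{itemize}
\item For $q_1$, Cauchy--Schwarz gives a bound of $\sqrt{\lambda'}\|\mathbf w_h^{\pi;(k,j)}\|_2\,\|\psi(s,a)\|_{(\Lambda_h^{k-1})^{-1}}$. Since $\|\mathbf w_h^{\pi;(k,j)}\|_2\le 2HB_{\mathrm{sum}}\sqrt d$ by Lemma~\ref{lemma:bound_Vweights_mdp_j^k}, which holds on $\mathcal E_{\mathrm{TS}}\cap\cK_{\mathrm{Norm}}$, this term is at most $2HB_{\mathrm{sum}}\sqrt d\,\|\psi(s,a)\|_{(\Lambda_h^{k-1})^{-1}}$.
\item For $q_2$, the bound is immediate from the definition of $\mathcal E_{\mathrm{LS}}$ in Eq.~\eqref{eq:event_LS}: it is at most $C_{LS}dHB_{\mathrm{sum}}\sqrt{\log(2(c_\beta+1)dKH/\delta')}\,\|\psi(s,a)\|_{(\Lambda_h^{k-1})^{-1}}$.
\item For $q_3$, we use $P_h(\cdot\mid s,a)=\langle\psi,\mu_h^*\rangle$ to rewrite it as $p_1+p_2$. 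Here $p_1=P_h(V_{h+1}^{k,(j)}-V_{h+1}^{\pi;(k,j)})(s,a)$ is exactly the term subtracted in the statement. The remainder $p_2=-\lambda'\langle\psi(s,a),(\Lambda_h^{k-1})^{-1}\int(V_{h+1}^{k,(j)}-V_{h+1}^{\pi;(k,j)})\,d\mu_h^*\rangle$ is controlled using $|V_{h+1}^{k,(j)}|,|V_{h+1}^{\pi;(k,j)}|\le HB_{\mathrm{sum}}$ (the first by clipping, the second by Lemma~\ref{lemma:universal_bound_of_TS_theta}) together with $\|\mu_h^*(\cS)\|\le\sqrt d$. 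This gives $|p_2|\le 2HB_{\mathrm{sum}}\sqrt{d\lambda'}\,\|\psi(s,a)\|_{(\Lambda_h^{k-1})^{-1}}$.
\end{itemize}

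Combining the three bounds with $\lambda'=1$, the left-hand side is at most $c_0\,dHB_{\mathrm{sum}}\sqrt{\log(2(c_\beta+1)dKH/\delta')}\,\|\psi(s,a)\|_{(\Lambda_h^{k-1})^{-1}}$, where $c_0$ is an absolute constant independent of $c_\beta$.

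The only delicate step is the last one: ensuring that this right-hand side is dominated by $\beta_{\mathrm{sum}}\|\psi(s,a)\|_{(\Lambda_h^{k-1})^{-1}}$, even though $c_\beta$ itself appears inside the logarithm. I would handle it exactly as in Eq.~\eqref{eq:cbeta_choice}. Since $\log(2dKH/\delta')\ge\log 2$, it suffices to choose $c_\beta$ large enough that $c_0\sqrt{\log 2+\log(c_\beta+1)}\le c_\beta\sqrt{\log 2}$. Such a $c_\beta$ exists because the left side grows only like $\sqrt{\log c_\beta}$. I would take $c_\beta$ simultaneously large enough to satisfy the analogous condition with $c'$ from Lemma~\ref{lemma:upper_bound_LSVI-TS}, so that a single constant serves both lemmas. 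This completes the proof.
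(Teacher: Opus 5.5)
Your proposal is correct and follows the paper's proof essentially step for step. It uses the same exact cancellation of the reward term to reach $q_1+q_2+q_3$, bounds the terms the same way (the weight bound for $\mathcal{M}_k^{(j)}$, the event $\mathcal{E}_{\mathrm{LS}}$, and the split $q_3=p_1+p_2$), and makes the same choice of a sufficiently large $c_\beta$, shared with the original-MDP lemma, to absorb the $\log(c_\beta+1)$ term. Your bound $2HB_{\mathrm{sum}}\sqrt{d\lambda'}$ on $|p_2|$ is in fact slightly tighter than the paper's $4HB_{\mathrm{sum}}\sqrt{d\lambda'}$, which changes nothing.
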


\begin{proof}
    This proof follows the analysis of Lemma~B.4 in 
    (Jin et al., \citeyear{jin20_linearmdp}).
    
    Consider the MDP $\cM_k^{(j)}$ with reward parameter $\widetilde{\theta}_k^{(j)}$.
    By Proposition~\ref{prop:linearity_Q} and the Bellman equation, for any
    $(s,a,h)\in \mathcal{S}\times\mathcal{A}\times[H]$, the state-action value function satisfies
    \begin{align*}
        Q_h^{\pi;(k,j)}(s,a)
        :=
        \left\langle
            \psi(s,a),
            \mathbf{w}_h^{\pi;(k,j)}
        \right\rangle
        =
        \left(
            \langle\psi,\widetilde{\theta}_k^{(j)}\rangle
            +
            P_h V_{h+1}^{\pi;(k,j)}
        \right)(s,a).
    \end{align*}
    This implies
    \begin{align*}
        &\mathbf{w}_h^{k,(j)}-\mathbf{w}_h^{\pi;(k,j)}
        \\
        &=
        \bigl(\Lambda_h^{k-1}\bigr)^{-1}
        \sum_{k'=1}^{k-1}
        \psi_h^{k'}
        \left[
            \langle\psi_h^{k'},\widetilde{\theta}_k^{(j)} \rangle
            +
            V_{h+1}^{k,(j)}(s_{h+1}^{k'})
        \right]
        -
        \mathbf{w}_h^{\pi;(k,j)}
        \\
        &=
        \bigl(\Lambda_h^{k-1}\bigr)^{-1}
        \left\{
            -\lambda' \mathbf{w}_h^{\pi;(k,j)}
            +
            \sum_{k'=1}^{k-1}
            \psi_h^{k'}
            \left[
                V_{h+1}^{k,(j)}(s_{h+1}^{k'})
                -
                P_h V_{h+1}^{\pi;(k,j)}(s_h^{k'},a_h^{k'})
            \right]
        \right\}
        \\
        &=
        \underbrace{
        -\lambda'
        \bigl(\Lambda_h^{k-1}\bigr)^{-1}
        \mathbf{w}_h^{\pi;(k,j)}
        }_{q_1}
        +
        \underbrace{
        \bigl(\Lambda_h^{k-1}\bigr)^{-1}
        \sum_{k'=1}^{k-1}
        \psi_h^{k'}
        \left[
            V_{h+1}^{k,(j)}(s_{h+1}^{k'})
            -
            P_h V_{h+1}^{k,(j)}(s_h^{k'},a_h^{k'})
        \right]
        }_{q_2}
        \\
        &\quad
        +
        \underbrace{
        \bigl(\Lambda_h^{k-1}\bigr)^{-1}
        \sum_{k'=1}^{k-1}
        \psi_h^{k'}
        P_h
        \left(
            V_{h+1}^{k,(j)}
            -
            V_{h+1}^{\pi;(k,j)}
        \right)(s_h^{k'},a_h^{k'})
        }_{q_3}.
    \end{align*}

    We now bound these three terms separately. For the first term,
    \begin{align*}
        \left|
        \left\langle
            \psi(s,a),
            q_1
        \right\rangle
        \right|
        &=
        \lambda'
        \left|
        \left\langle
            \psi(s,a),
            \bigl(\Lambda_h^{k-1}\bigr)^{-1}
            \mathbf{w}_h^{\pi;(k,j)}
        \right\rangle
        \right|
        \\
        &\le
        \sqrt{\lambda'}
        \left\|
        \mathbf{w}_h^{\pi;(k,j)}
        \right\|_2
        \sqrt{
            \bigl(\psi(s,a)\bigr)^\top
            \bigl(\Lambda_h^{k-1}\bigr)^{-1}
            \psi(s,a)
        } .
    \end{align*}
    For the second term, on event $\mathcal{E}_{\mathrm{LS}}$, by Eq.~\eqref{eq:event_LS}, we have
    \begin{align*}
        \left|
        \left\langle
            \psi(s,a),
            q_2
        \right\rangle
        \right|
        \le
        C_{LS}\cdot dHB_{\mathrm{sum}}
        \sqrt{\log
    \left(
        \frac{2(c_{\beta}+1)dKH}{\delta'}
    \right)}
        \sqrt{
            \bigl(\psi(s,a)\bigr)^\top
            \bigl(\Lambda_h^{k-1}\bigr)^{-1}
            \psi(s,a)
        } .
    \end{align*}
    For the third term, by Assumption~\ref{assump:linear_mdp}, we have
    \begin{align*}
        \left\langle
            \psi(s,a),
            q_3
        \right\rangle
        &=
        \left\langle
            \psi(s,a),
            \bigl(\Lambda_h^{k-1}\bigr)^{-1}
            \sum_{k'=1}^{k-1}
            \psi_h^{k'}
            P_h
            \left(
                V_{h+1}^{k,(j)}
                -
                V_{h+1}^{\pi;(k,j)}
            \right)(s_h^{k'},a_h^{k'})
        \right\rangle
        \\
        &=
        \left\langle
            \psi(s,a),
            \bigl(\Lambda_h^{k-1}\bigr)^{-1}
            \sum_{k'=1}^{k-1}
            \psi_h^{k'}
            \bigl(\psi_h^{k'}\bigr)^\top
            \int
            \left(
                V_{h+1}^{k,(j)}
                -
                V_{h+1}^{\pi;(k,j)}
            \right)(s')
            d\mu_h^*(s')
        \right\rangle
        \\
        &=
        \underbrace{
        \left\langle
            \psi(s,a),
            \int
            \left(
                V_{h+1}^{k,(j)}
                -
                V_{h+1}^{\pi;(k,j)}
            \right)(s')
            d\mu_h^*(s')
        \right\rangle
        }_{p_1}
        \\
        &\quad
        \underbrace{
        -
        \lambda'
        \left\langle
            \psi(s,a),
            \bigl(\Lambda_h^{k-1}\bigr)^{-1}
            \int
            \left(
                V_{h+1}^{k,(j)}
                -
                V_{h+1}^{\pi;(k,j)}
            \right)(s')
            d\mu_h^*(s')
        \right\rangle
        }_{p_2}.
    \end{align*}
    By Assumption~\ref{assump:linear_mdp}, we have
    \begin{align*}
        p_1
        =
        P_h
        \left(
            V_{h+1}^{k,(j)}
            -
            V_{h+1}^{\pi;(k,j)}
        \right)(s,a),
        \qquad
        |p_2|
        \le
        4HB_{\mathrm{sum}}\sqrt{d\lambda'}
        \sqrt{
            \bigl(\psi(s,a)\bigr)^\top
            \bigl(\Lambda_h^{k-1}\bigr)^{-1}
            \psi(s,a)
        } .
    \end{align*}
    Since
    \[
        \left\langle
            \psi(s,a),
            \mathbf{w}_h^{k,(j)}
        \right\rangle
        -
        Q_h^{\pi;(k,j)}(s,a)
        =
        \left\langle
            \psi(s,a),
            q_1+q_2+q_3
        \right\rangle ,
    \]
    on event $\mathcal{E}_{\mathrm{TS}} \cap \cK_{\mathrm{Norm}}$, by Lemma~\ref{lemma:bound_Vweights_mdp_j^k} and our choice of $\lambda'$, there exists an absolute constant $c_0>0$ such that
    \begin{align}
        &
        \left|
        \left\langle
            \psi(s,a),
            \mathbf{w}_h^{k,(j)}
        \right\rangle
        -
        Q_h^{\pi;(k,j)}(s,a)
        -
        P_h
        \left(
            V_{h+1}^{k,(j)}
            -
            V_{h+1}^{\pi;(k,j)}
        \right)(s,a)
        \right|
        \notag
        \\
        &\le
        c_0\cdot dHB_{\mathrm{sum}}\sqrt{\log
    \left(
        \frac{2(c_{\beta}+1)dKH}{\delta'}
    \right)}
        \sqrt{
            \bigl(\psi(s,a)\bigr)^\top
            \bigl(\Lambda_h^{k-1}\bigr)^{-1}
            \psi(s,a)
        }\notag
        \\
        &\le
        \beta_{\mathrm{sum}}
        \sqrt{
            \bigl(\psi(s,a)\bigr)^\top
            \bigl(\Lambda_h^{k-1}\bigr)^{-1}
            \psi(s,a)
        } .
        \label{eq:c0_bound}
    \end{align}
    The last inequality follows by choosing $c_\beta$ sufficiently large so that Eq.~\eqref{eq:cbeta_choice} and Eq.~\eqref{eq:c0_bound} hold simultaneously.

    This completes the proof.
\end{proof}

Define event $\mathcal{Q}:=\lbr{ \forall k\in [K] ,\exists  j\in[n]:{\phi^{\pi^*}(s_1^k)}^\top \widetilde{\theta}_k^{(j)} \ge {\phi^{\pi^*}(s_1^k)}^\top \theta ^*}$.

\begin{lemma}\label{lemma:TS_p_optim}
Set
$    n
    =
    \left\lceil
    2\sqrt{2\pi e}
    \log\left(\frac{K}{\delta'}\right)
    \right\rceil .$
Then
\begin{align*}
    \Pr\left(\mathcal Q^c\cap \mathcal K_{\mathrm{Norm}}\right)
    \le
    \delta' .
\end{align*}
\end{lemma}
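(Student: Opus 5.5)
The argument mirrors Lemma~\ref{lemma:Bi_TS_p_optim}, with the logistic confidence set replaced by the ridge-regression event $\mathcal K_{\mathrm{Norm}}$. For each $k\in[K]$, fix $X_k:=\phi^{\pi^*}(s_1^k)$. Let $\cG_k$ be the $\sigma$-field generated by the history up to the end of episode $k-1$ together with $s_1^k$. Then $X_k$, $\hat\theta_{k-1}^{\mathrm{sum}}$, $\Sigma_{k-1}^{\mathrm{sum}}$ and $\widetilde\beta(k-1)$ are all $\cG_k$-measurable. Define the local event
\[
\mathcal K^{k-1}:=\lbr{\nbr{\hat\theta_{k-1}^{\mathrm{sum}}-\theta^*}_{\Sigma_{k-1}^{\mathrm{sum}}}\le \widetilde\beta(k-1)}.
\]
This event is $\cG_k$-measurable and satisfies $\mathcal K_{\mathrm{Norm}}\subseteq\mathcal K^{k-1}$. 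For $k=1$ it holds trivially, since $\nbr{\theta^*}_{\lambda I}\le r_{\max}\sqrt{\lambda d}\le\widetilde\beta(0)$.

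The first step is a per-sample anti-concentration bound. On $\mathcal K^{k-1}$, Cauchy--Schwarz gives
\[
X_k^\top(\theta^*-\hat\theta_{k-1}^{\mathrm{sum}})\le \widetilde\beta(k-1)\nbr{X_k}_{(\Sigma_{k-1}^{\mathrm{sum}})^{-1}}.
\]
Conditional on $\cG_k$, each perturbation satisfies
\[
X_k^\top\bigl(\widetilde\theta_k^{(j)}-\hat\theta_{k-1}^{\mathrm{sum}}\bigr)=\widetilde\beta(k-1)\,X_k^\top(\Sigma_{k-1}^{\mathrm{sum}})^{-1/2}g_k^{(j)}.
\]
This is a centered Gaussian with standard deviation exactly $\widetilde\beta(k-1)\nbr{X_k}_{(\Sigma_{k-1}^{\mathrm{sum}})^{-1}}$, because $g_k^{(j)}$ is independent of $\cG_k$. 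After normalizing, the event $X_k^\top\widetilde\theta_k^{(j)}\ge X_k^\top\theta^*$ contains $\{Z\ge 1\}$ with $Z\sim\cN(0,1)$. The Gaussian tail lower bound used in Lemma~\ref{lemma:lin_Gaussian_anti_concentration} then gives conditional probability at least $p_0=\frac{1}{2\sqrt{2\pi e}}$. The degenerate case $\nbr{X_k}_{(\Sigma_{k-1}^{\mathrm{sum}})^{-1}}=0$ is trivial, since then $X_k=0$.

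The second step amplifies over the $n$ samples and takes a union bound over episodes. The $g_k^{(j)}$, $j\in[n]$, are i.i.d.\ and independent of $\cG_k$. Hence, with $\mathcal Q_k$ the event that some $j$ is optimistic in episode $k$,
\[
\mathbf 1_{\mathcal K^{k-1}}\Pr(\mathcal Q_k^c\mid\cG_k)\le(1-p_0)^n\le e^{-np_0}.
\]
Taking expectations, using $\mathcal K_{\mathrm{Norm}}\subseteq\mathcal K^{k-1}$, and summing over $k\in[K]$ yields
\[
\Pr(\mathcal Q^c\cap\mathcal K_{\mathrm{Norm}})\le Ke^{-np_0}.
\]
Finally, $n\ge \log(K/\delta')/p_0$, so $Ke^{-np_0}\le\delta'$.

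The only delicate point is measurability. We must work with the $\cG_k$-measurable local event $\mathcal K^{k-1}$ rather than the global event $\mathcal K_{\mathrm{Norm}}$, which depends on future data. We must also check that the sampling radius $\rho^{\mathrm{sum}}_{k-1}=\widetilde\beta(k-1)$ is exactly the confidence radius of $\mathcal K_{\mathrm{Norm}}$ at index $k-1$. It is, so the normalized threshold is exactly $1$.
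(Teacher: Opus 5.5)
Your proof is correct and follows the paper's argument essentially step for step. Both use the $\cG_k$-measurable local event $\mathcal K^{k-1}$, Cauchy--Schwarz plus Gaussian anti-concentration with $p_0=\frac{1}{2\sqrt{2\pi e}}$, amplification $(1-p_0)^n\le e^{-np_0}$ via the tower rule, and a union bound over $k\in[K]$. Your explicit check that $\mathcal K^{0}$ holds deterministically (since $\hat\theta_0^{\mathrm{sum}}=0$ and $\|\theta^*\|_{\lambda I}\le r_{\max}\sqrt{\lambda d}\le\widetilde\beta(0)$) also tidies up an index the paper glosses over when it writes $\mathcal K_{\mathrm{Norm}}=\bigcap_{k>0}\mathcal K_{\mathrm{Norm}}^{k-1}$.
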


\begin{proof}
For each $k\in[K]$, recall
 $   X_k
    :=
    \phi^{\pi^*}(s_1^k)$
and $\cG_k$ denote the filtration including the history up to the end of episode $k-1$ and the initial state $s_1^k$ of episode $k$ are revealed. 
Then $X_k$, $\hat{\theta}_{k-1}^{\mathrm{sum}}$, and $\Sigma_{k-1}^{\mathrm{sum}}$ are fixed given $\cG_k$. 

For any $k>0$, define the local concentration event
\[
    \mathcal K_{\mathrm{Norm}}^{k-1}
    =
    \left\{
    \|\hat{\theta}_{k-1}^{\mathrm{sum}}-\theta^*\|_{\Sigma_{k-1}^{\mathrm{sum}}}
    \le
    \widetilde{\beta}(k-1)
    \right\}.
\]
On event $\mathcal K_{\mathrm{Norm}}^{k-1}$, by the Cauchy--Schwarz inequality,
\[
    X_k^\top(\theta^*-\hat{\theta}_{k-1}^{\mathrm{sum}})
    \le
    \widetilde{\beta}(k-1)
    \|X_k\|_{\left(\Sigma_{k-1}^{\mathrm{sum}}\right)^{-1}} .
\]

For each $j\in[n]$, conditional on $\cG_k$, we have
\[
    \widetilde{\theta}_k^{(j)}
    =
    \hat{\theta}_{k-1}^{\mathrm{sum}}+\xi_k^{(j)},
    \qquad
    \xi_k^{(j)}
    \sim
    \mathcal N
    \left(
    0,
    \bigl(\widetilde{\beta}(k-1)\bigr)^2
    \left(\Sigma_{k-1}^{\mathrm{sum}}\right)^{-1}
    \right),
\]
and the samples are conditionally independent. Therefore,
\[
    X_k^\top \xi_k^{(j)}
    \sim
    \mathcal N
    \left(
    0,
    \bigl(\widetilde{\beta}(k-1)\bigr)^2
    \|X_k\|_{\left(\Sigma_{k-1}^{\mathrm{sum}}\right)^{-1}}^2
    \right).
\]
Consequently, on $\mathcal K_{\mathrm{Norm}}^{k-1}$, following the same anti-concentration argument as in Lemma~\ref{lemma:lin_Gaussian_anti_concentration}, we obtain
\[
    \Pr\left[
    X_k^\top\widetilde{\theta}_k^{(j)}
    \ge
    X_k^\top\theta^*
    \mid
    \cG_k
    \right]
    \ge
    \Pr(Z\ge 1)
    \ge
    \frac{1}{2\sqrt{2\pi e}}
    =:
    p_0,
\]
where $Z\sim\mathcal N(0,1)$. If
$\|X_k\|_{\left(\Sigma_{k-1}^{\mathrm{sum}}\right)^{-1}}=0$, then the same lower bound holds trivially.

Define
\[
    \mathcal Q_k
    =
    \left\{
    \exists j\in[n]:
    X_k^\top\widetilde{\theta}_k^{(j)}
    \ge
    X_k^\top\theta^*
    \right\}.
\]
By the conditional independence of $\{\widetilde{\theta}_k^{(j)}\}_{j\in[n]}$ given $\cG_k$, we have
\[
    \mathbf 1_{\mathcal K_{\mathrm{Norm}}^{k-1}}
    \Pr(\mathcal Q_k^c\mid \cG_k)
    \le
    \mathbf 1_{\mathcal K_{\mathrm{Norm}}^{k-1}}
    (1-p_0)^n
    \le
    \mathbf 1_{\mathcal K_{\mathrm{Norm}}^{k-1}}
    \exp(-np_0).
\]
Since $\mathcal K_{\mathrm{Norm}}^{k-1}$ is also measurable to $\cG_k$, taking tower rule of expectation on both sides, we obtain
\[
    \Pr\left(\mathcal Q_k^c\cap \mathcal K_{\mathrm{Norm}}^{k-1}\right)
    \le
    \exp(-np_0).
\]
Moreover,
   $ \mathcal K_{\mathrm{Norm}}
    =
    \bigcap_{k>0} \mathcal K_{\mathrm{Norm}}^{k-1}
    \subseteq
    \mathcal K_{\mathrm{Norm}}^{k-1}.$
Thus,
\[
    \Pr\left(\mathcal Q_k^c\cap \mathcal K_{\mathrm{Norm}}\right)
    \le
    \exp(-np_0).
\]
Since $\mathcal Q=\bigcap_{k=1}^K\mathcal Q_k$, by the union bound,
\[
    \Pr\left(\mathcal Q^c\cap \mathcal K_{\mathrm{Norm}}\right)
    \le
    \sum_{k=1}^K
    \Pr\left(\mathcal Q_k^c\cap \mathcal K_{\mathrm{Norm}}\right)
    \le
    K\exp(-np_0).
\]
By the choice of $n$, we have
$    K\exp(-np_0)
    \le
    \delta' .$
Therefore,
\[
    \Pr\left(\mathcal Q^c\cap \mathcal K_{\mathrm{Norm}}\right)
    \le
    \delta' .
\]

This completes the proof.
\end{proof}

Let ${\pi^*}_k^{(j)}$, $Q_{h}^{{\pi^*}_k^{(j)};(k,j)}$, and 
$V_{h}^{{\pi^*}_k^{(j)};(k,j)}$ denote the optimal policy, state-action value function, and state value function of MDP $\cM_k^{(j)}$.

\begin{lemma}\label{lemma:UCB_MDP_jk}
   Assume that event $\mathcal{E}_{\mathrm{TS}} \cap \cK_{\mathrm{Norm}}\cap\mathcal{E}_{\mathrm{LS}}$ holds. Then, for all 
   $(s,a,h,k,j)\in \mathcal{S} \times \mathcal{A} \times [H] \times [K] \times [n]$, we have
\[
Q_h^{k,(j)}(s,a) 
\ge 
Q_h^{{\pi^*}_k^{(j)};(k,j)}(s,a).
\]
\end{lemma}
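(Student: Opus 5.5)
The statement is the sum-feedback analogue of Lemma~\ref{lemma:Bi_UCB_MDP_jk}, and I would prove it by the same backward induction on $h$ used there (following Lemma~B.5 in (Jin et al.,~\citeyear{jin20_linearmdp})). The main tool is Lemma~\ref{lemma:upper_bound_LSVI_kj}, applied with $\pi={\pi^*}_k^{(j)}$, the optimal policy of the auxiliary MDP $\cM_k^{(j)}$. On $\mathcal{E}_{\mathrm{TS}}\cap\cK_{\mathrm{Norm}}\cap\mathcal{E}_{\mathrm{LS}}$, this lemma gives, for all $(s,a,h,k,j)$,
\[
\Big|\langle\psi(s,a),\mathbf w_h^{k,(j)}\rangle - Q_h^{{\pi^*}_k^{(j)};(k,j)}(s,a) - P_h\big(V_{h+1}^{k,(j)}-V_{h+1}^{{\pi^*}_k^{(j)};(k,j)}\big)(s,a)\Big| \le \beta_{\mathrm{sum}}\|\psi(s,a)\|_{(\Lambda_h^{k-1})^{-1}}.
\]

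For the base case $h=H$, both $V_{H+1}^{k,(j)}$ and $V_{H+1}^{{\pi^*}_k^{(j)};(k,j)}$ vanish. The displayed bound then says that the unclipped quantity $\langle\psi,\mathbf w_H^{k,(j)}\rangle+\beta_{\mathrm{sum}}\|\psi\|_{(\Lambda_H^{k-1})^{-1}}$ dominates $Q_H^{{\pi^*}_k^{(j)};(k,j)}(s,a)$.

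For the inductive step, assume $Q_{h+1}^{k,(j)}\ge Q_{h+1}^{{\pi^*}_k^{(j)};(k,j)}$ pointwise. Maximizing over actions gives
\[
V_{h+1}^{k,(j)}(s')=\max_a Q_{h+1}^{k,(j)}(s',a)\ge Q_{h+1}^{{\pi^*}_k^{(j)};(k,j)}\big(s',{\pi^*}_k^{(j)}(s')\big)=V_{h+1}^{{\pi^*}_k^{(j)};(k,j)}(s').
\]
Since $P_h$ is a positive operator, the transition term $P_h(V_{h+1}^{k,(j)}-V_{h+1}^{{\pi^*}_k^{(j)};(k,j)})(s,a)$ is nonnegative. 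Dropping it from the displayed bound again shows that the unclipped optimistic estimate is at least $Q_h^{{\pi^*}_k^{(j)};(k,j)}(s,a)$.

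The only step needing care is the clipping to $[-HB_{\mathrm{sum}},HB_{\mathrm{sum}}]$ in Algorithm~\ref{alg:unified-lsvi-ts-q}. I would invoke Lemma~\ref{lemma:universal_bound_of_TS_theta}: on $\mathcal{E}_{\mathrm{TS}}\cap\cK_{\mathrm{Norm}}$, every per-step reward $\langle\psi,\widetilde\theta_k^{(j)}\rangle$ lies in $[-B_{\mathrm{sum}},B_{\mathrm{sum}}]$, so $Q_h^{{\pi^*}_k^{(j)};(k,j)}\in[-HB_{\mathrm{sum}},HB_{\mathrm{sum}}]$. For a target $y$ in this interval, $x\ge y$ implies $\operatorname{clip}_{[-HB_{\mathrm{sum}},HB_{\mathrm{sum}}]}(x)\ge y$. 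Hence the clipped value $Q_h^{k,(j)}(s,a)$ still dominates $Q_h^{{\pi^*}_k^{(j)};(k,j)}(s,a)$, which completes the induction.

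The real work has already been done in Lemma~\ref{lemma:upper_bound_LSVI_kj}, including the choice of a sufficiently large $c_\beta$ and the fact that $V_{h+1}^{k,(j)}$ lies in the covering class. The main obstacle to watch here is therefore only bookkeeping. The lemma must hold uniformly over $(k,j)$ for the data-dependent policy ${\pi^*}_k^{(j)}$, and it does because it holds for any policy on the stated event. The induction must also use the same event for every $h$, which is immediate since that event is independent of $h$.
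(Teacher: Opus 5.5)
Your proposal is correct and follows the paper's proof essentially verbatim. Both use backward induction on $h$, apply Lemma~\ref{lemma:upper_bound_LSVI_kj} with $\pi={\pi^*}_k^{(j)}$, and use the induction hypothesis to make the $P_h$ term nonnegative. Both then pass through the clipping because $Q_h^{{\pi^*}_k^{(j)};(k,j)}\in[-HB_{\mathrm{sum}},HB_{\mathrm{sum}}]$. Your explicit derivation of $V_{h+1}^{k,(j)}\ge V_{h+1}^{{\pi^*}_k^{(j)};(k,j)}$ from the $Q$-level hypothesis, and of why clipping preserves the inequality, just spells out steps the paper leaves implicit.
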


\begin{proof}This proof follows the analysis of Lemma B.5 in (Jin et al.,~\citeyear{jin20_linearmdp}).

We prove the claim by backward induction on $h$.
At the last step $H$, we have 
\[
V_{H+1}^{k,(j)} = V_{H+1}^{{\pi^*}_k^{(j)};(k,j)} = 0 .
\]
Applying Lemma~\ref{lemma:upper_bound_LSVI_kj} with $\pi={\pi^*}_k^{(j)}$, we obtain
\[
\left| 
\left\langle 
\psi(s,a), 
\mathbf{w}_H^{k,(j)} 
\right\rangle 
-
Q_H^{{\pi^*}_k^{(j)};(k,j)}(s,a) 
\right|
\le
\beta_{\mathrm{sum}}
\sqrt{ 
\bigl(\psi(s,a)\bigr)^\top 
\bigl(\Lambda_H^{k-1}\bigr)^{-1} 
\psi(s,a) 
}.
\]
Since $Q_H^{{\pi^*}_k^{(j)};(k,j)}(s,a) \in [-HB_{\mathrm{sum}},HB_{\mathrm{sum}}]$, it follows that
\begin{align*}
    Q_H^{{\pi^*}_k^{(j)};(k,j)}(s,a) 
&\le 
\operatorname{clip}_{[-HB_{\mathrm{sum}}, HB_{\mathrm{sum}}]}
        \left(
        \bigl(\mathbf{w}_H^{k,(j)}\bigr)^\top \psi(s,a)
        +
        \beta_{\mathrm{sum}}
        \sqrt{
        \bigl(\psi(s,a)\bigr)^\top
        \bigl(\Lambda_H^{k-1}\bigr)^{-1}
        \psi(s,a)
        }
        \right)
        \\
&=
Q_H^{k,(j)}(s,a).
\end{align*}

Now suppose that the claim holds at step $h+1$. We prove it for step $h$.
Again, applying Lemma~\ref{lemma:upper_bound_LSVI_kj} with $\pi={\pi^*}_k^{(j)}$, we have
\begin{align*}
    \left| 
\left\langle 
\psi(s,a), 
\mathbf{w}_h^{k,(j)} 
\right\rangle 
-
Q_h^{{\pi^*}_k^{(j)};(k,j)}(s,a)
-
P_h
\left(
V_{h+1}^{k,(j)}
-
V_{h+1}^{{\pi^*}_k^{(j)};(k,j)}
\right)(s,a)
\right|
\\
\le
\beta_{\mathrm{sum}}
\sqrt{
\bigl(\psi(s,a)\bigr)^\top 
\bigl(\Lambda_h^{k-1}\bigr)^{-1} 
\psi(s,a)
}.
\end{align*}

By the induction hypothesis,
\[V_{h+1}^{k,(j)}(s')
\ge
V_{h+1}^{{\pi^*}_k^{(j)};(k,j)}(s'),\]
and hence
$P_h
\left(
V_{h+1}^{k,(j)}
-
V_{h+1}^{{\pi^*}_k^{(j)};(k,j)}
\right)(s,a)
\ge
0.$

Therefore, we have
\begin{align*}
    Q_h^{{\pi^*}_k^{(j)};(k,j)}(s,a) 
&\le 
\operatorname{clip}_{[-HB_{\mathrm{sum}}, HB_{\mathrm{sum}}]}
        \left(
        \bigl(\mathbf{w}_h^{k,(j)}\bigr)^\top \psi(s,a)
        +
        \beta_{\mathrm{sum}}
        \sqrt{
        \bigl(\psi(s,a)\bigr)^\top
        \bigl(\Lambda_h^{k-1}\bigr)^{-1}
        \psi(s,a)
        }
        \right)
        \\
&=
Q_h^{k,(j)}(s,a).
\end{align*}

This completes the proof.
\end{proof}

\begin{lemma}[Optimism]\label{lemma:LSVI_TS_optim}
Assume that event $\mathcal{E}_{\mathrm{TS}} \cap \cK_{\mathrm{{Norm}}}\cap\mathcal{E}_{\mathrm{LS}}\cap \mathcal{Q} $ holds. For any episode \(k\) and realized  initial state $s_1^k$,  let $j^+$ be selected by Algorithm~\ref{alg:unified-seg-lsvi-ts}  such that
    $j^+
    =
    \arg\max_{j\in[n]}
    V_1^{k,(j)}(s_1^k).$
Then,  we have
\begin{align*}
    V_1^{k,(j^+)}(s_1^k)
    \ge
    V_1^*(s_1^k).
\end{align*}
\end{lemma}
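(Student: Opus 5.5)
The argument mirrors the binary-feedback optimism result (Lemma~\ref{lemma:Bi_LSVI_TS_optim}), with Lemma~\ref{lemma:UCB_MDP_jk} and the event $\mathcal Q$ taking the roles of their binary counterparts. Fix an episode $k$ and its realized initial state $s_1^k$, and assume the event $\mathcal{E}_{\mathrm{TS}}\cap\mathcal K_{\mathrm{Norm}}\cap\mathcal E_{\mathrm{LS}}\cap\mathcal Q$ holds.

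The first step is to relate each candidate to its own auxiliary MDP. By Lemma~\ref{lemma:UCB_MDP_jk}, every $j\in[n]$ satisfies $Q_1^{k,(j)}(s,a)\ge Q_1^{{\pi^*}_k^{(j)};(k,j)}(s,a)$. Maximizing over actions gives
\[
V_1^{k,(j)}(s_1^k)\ge Q_1^{k,(j)}\bigl(s_1^k,{\pi^*}_k^{(j)}(s_1^k)\bigr)\ge V_1^{{\pi^*}_k^{(j)};(k,j)}(s_1^k)\ge V_1^{\pi^*;(k,j)}(s_1^k).
\]
The last inequality holds because ${\pi^*}_k^{(j)}$ is optimal in $\mathcal M_k^{(j)}$, while $\pi^*$ is merely some policy there.

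The second step rewrites the right-hand side linearly. The transitions of $\mathcal M_k^{(j)}$ coincide with the true ones and its reward is $\langle\psi,\widetilde\theta_k^{(j)}\rangle$. Hence the expected feature of $\pi^*$ started at $s_1^k$ is the same in both MDPs, which gives
\[
V_1^{\pi^*;(k,j)}(s_1^k)=\bigl(\phi^{\pi^*}(s_1^k)\bigr)^\top\widetilde\theta_k^{(j)},\qquad V_1^*(s_1^k)=\bigl(\phi^{\pi^*}(s_1^k)\bigr)^\top\theta^*.
\]

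The third step uses $\mathcal Q$. It provides an index $j'$ with $\bigl(\phi^{\pi^*}(s_1^k)\bigr)^\top\widetilde\theta_k^{(j')}\ge \bigl(\phi^{\pi^*}(s_1^k)\bigr)^\top\theta^*$. Since $j^+$ maximizes $V_1^{k,(j)}(s_1^k)$ over $j$, we obtain the chain
\[
V_1^{k,(j^+)}(s_1^k)\ge V_1^{k,(j')}(s_1^k)\ge V_1^{\pi^*;(k,j')}(s_1^k)\ge V_1^*(s_1^k),
\]
which is the claim.

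No step is a genuine obstacle; all the analytic work sits in Lemma~\ref{lemma:UCB_MDP_jk} and Lemma~\ref{lemma:TS_p_optim}. The one point needing care is the linear identity in the second step. It relies on $\pi^*$ being a Markov policy and on the shared transition kernel, so that $V^{\pi;(k,j)}_1$ is exactly $\bigl(\phi^{\pi}(s_1)\bigr)^\top\widetilde\theta_k^{(j)}$ conditional on $\widetilde\theta_k^{(j)}$.
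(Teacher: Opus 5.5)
Your proposal is correct and matches the paper's proof step for step. The paper also uses Lemma~\ref{lemma:UCB_MDP_jk} to pass from $V_1^{k,(j)}(s_1^k)$ through $Q_1^{k,(j)}(s_1^k,{\pi^*}_k^{(j)}(s_1^k))$ down to $V_1^{\pi^*;(k,j)}(s_1^k)$, rewrites both values linearly in $\phi^{\pi^*}(s_1^k)$ via the shared transition kernel, and then concludes with the index $j'$ from $\mathcal Q$ and the maximality of $j^+$.
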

\begin{proof}
    On event $\mathcal{E}_{\mathrm{TS}} \cap \cK_{\mathrm{Norm}}\cap\mathcal{E}_{\mathrm{LS}}$, by Lemma~\ref{lemma:UCB_MDP_jk}, we have
    \[
        Q_h^{k,(j)}(s,a)
        \ge
        Q_h^{{\pi^*}_k^{(j)};(k,j)}(s,a),
        \qquad
        \forall (s,a,h,k,j)
        \in
        \mathcal{S}\times\mathcal{A}\times[H]\times[K]\times[n].
    \]
    Given $s_1^k$, it follows that
    \begin{align*}
        V_1^{k,(j)}(s^k_1)
        &=
        \max_a Q_1^{k,(j)}(s^k_1,a)
        \ge
        Q_1^{k,(j)}
        \left(
            s_1^k,
            {\pi^*}_k^{(j)}(s_1^k)
        \right)
\ge
        V_1^{{\pi^*}_k^{(j)};(k,j)}(s_1^k)
        \ge
        V_1^{\pi^*;(k,j)}(s_1^k),
    \end{align*}
    where the last inequality follows from the definition of the optimal state value function
    $V_1^{{\pi^*}_k^{(j)};(k,j)}$ in MDP $\cM_k^{(j)}$, and $\pi^*$ denotes the optimal policy of the original MDP.

    Moreover, under the same transition kernel, we can write
    $V_1^{\pi^*;(k,j)}(s_1^k)$ and $V_1^*(s_1^k)$ as
    \[
        V_1^{\pi^*;(k,j)}(s_1^k)
        =
        \left(\phi^{\pi^*}(s_1^k)\right)^\top
        \widetilde{\theta}_k^{(j)},
        \qquad
        V_1^*(s_1^k)
        =
        \left(\phi^{\pi^*}(s_1^k)\right)^\top
        \theta^* .
    \]
    On event $\mathcal Q$, for the given episode $k$, there exists $j'\in[n]$ such that
    \begin{align*}
        \left(\phi^{\pi^*}(s_1^k)\right)^\top
        \widetilde{\theta}_k^{(j')}
        \ge
        \left(\phi^{\pi^*}(s_1^k)\right)^\top
        \theta^* .
    \end{align*}
    Hence,
    \begin{align*}
        V_1^{\pi^*;(k,j')}(s_1^k)
        \ge
        V_1^*(s_1^k).
    \end{align*}

    By the definition of $j^+$ in Algorithm~\ref{alg:unified-seg-lsvi-ts}, we have
    \begin{align*}
        V_1^{k,(j^+)}(s^k_1)
        \ge
        V_1^{k,(j')}(s^k_1)
        \ge
        V_1^{\pi^*;(k,j')}(s_1^k)
        \ge
        V_1^*(s_1^k).
    \end{align*}
    This proves the optimism.
\end{proof}

\begin{lemma}\label{lemma:recursive_formula}
    Assume that event 
    $\mathcal{E}_{\mathrm{TS}} \cap \cK_{\mathrm{Norm}}\cap\mathcal{E}_{\mathrm{LS}}$ holds.
    For any episode $k$, let $j^+$ and $\pi^k$ be generated by 
    Algorithm~\ref{alg:unified-seg-lsvi-ts}. Define
    \[
        \Gamma_h^k
        :=
        V_h^{k,(j^+)}(s_h^k)
        -
        V_h^{\pi^k}(s_h^k),
    \]
    and
    \[
        \zeta_{h+1}^k
        :=
        \mathbb{E}
        \left[
            \Gamma_{h+1}^k
            \mid
            s_h^k,a_h^k
        \right]
        -
        \Gamma_{h+1}^k .
    \]
    Then, for any $(k,h)\in[K]\times[H]$, we have
    \begin{align*}
        \Gamma_h^k
        \le
        \Gamma_{h+1}^k
        +
        \zeta_{h+1}^k
        +
        2\beta_{\mathrm{sum}}
        \sqrt{
            \bigl(\psi_h^k\bigr)^\top
            \bigl(\Lambda_h^{k-1}\bigr)^{-1}
            \psi_h^k
        } 
        +
        \left\langle
            \psi_h^k,
            \widetilde{\theta}_k^{(j^+)}-\theta^*
        \right\rangle .
    \end{align*}
\end{lemma}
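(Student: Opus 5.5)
The plan mirrors the binary-feedback recursion in Lemma~\ref{lemma:Bi_recursive_formula}. The only differences are notational: $\beta_{\mathrm{sum}}$ replaces $\beta_{\mathrm{bi}}$, $B_{\mathrm{sum}}$ replaces $B_{\mathrm{bi}}$, and Lemmas~\ref{lemma:UCB_MDP_jk} and~\ref{lemma:upper_bound_LSVI-TS} replace their binary counterparts. Throughout we work on the event $\mathcal{E}_{\mathrm{TS}} \cap \cK_{\mathrm{Norm}}\cap\mathcal{E}_{\mathrm{LS}}$, under which all the preceding sum-feedback lemmas are available.

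First, we show that the lower clipping in $\lsvitsq$ is inactive for the selected index $j^+$. By the proof of Lemma~\ref{lemma:UCB_MDP_jk}, specifically the estimate from Lemma~\ref{lemma:upper_bound_LSVI_kj} combined with the induction hypothesis, the unclipped quantity satisfies
\[
\langle \psi(s,a), \mathbf w_h^{k,(j)}\rangle + \beta_{\mathrm{sum}}\|\psi(s,a)\|_{(\Lambda_h^{k-1})^{-1}} \ge Q_h^{{\pi^*}_k^{(j)};(k,j)}(s,a) \ge -HB_{\mathrm{sum}} .
\]
The lower bound $-HB_{\mathrm{sum}}$ holds because Lemma~\ref{lemma:universal_bound_of_TS_theta} bounds the rewards of $\cM_k^{(j)}$ in $[-B_{\mathrm{sum}},B_{\mathrm{sum}}]$. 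Since the clip is inactive from below and the upper clip only decreases the value, we get
\[
Q_h^{k,(j^+)}(s,a) \le \langle \psi(s,a), \mathbf w_h^{k,(j^+)}\rangle + \beta_{\mathrm{sum}}\|\psi(s,a)\|_{(\Lambda_h^{k-1})^{-1}} .
\]

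Next, we apply Lemma~\ref{lemma:upper_bound_LSVI-TS} with $\pi=\pi^k$ and $j=j^+$. This is legitimate because that lemma holds uniformly over all policies and all $j\in[n]$, so the data-dependent choices of $j^+$ and $\pi^k$ cause no difficulty. It bounds $\langle \psi(s,a),\mathbf w_h^{k,(j^+)}\rangle - Q_h^{\pi^k}(s,a)$ by
\[
P_h(V_{h+1}^{k,(j^+)}-V_{h+1}^{\pi^k})(s,a) + \langle\psi(s,a),\widetilde\theta_k^{(j^+)}-\theta^*\rangle + \beta_{\mathrm{sum}}\|\psi(s,a)\|_{(\Lambda_h^{k-1})^{-1}} .
\]
Combining this with the previous display yields
\[
Q_h^{k,(j^+)}-Q_h^{\pi^k} \le P_h(V_{h+1}^{k,(j^+)}-V_{h+1}^{\pi^k}) + 2\beta_{\mathrm{sum}}\|\psi\|_{(\Lambda_h^{k-1})^{-1}} + \langle\psi,\widetilde\theta_k^{(j^+)}-\theta^*\rangle .
\]

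Finally, we evaluate at $(s_h^k,a_h^k)$. Since $a_h^k=\arg\max_a Q_h^{k,(j^+)}(s_h^k,a)$ and $\pi^k$ plays $a_h^k$, we have $V_h^{k,(j^+)}(s_h^k)=Q_h^{k,(j^+)}(s_h^k,a_h^k)$ and $V_h^{\pi^k}(s_h^k)=Q_h^{\pi^k}(s_h^k,a_h^k)$, so $\Gamma_h^k$ equals the left-hand side of the last display. It remains to rewrite the transition term: by definition, $P_h(V_{h+1}^{k,(j^+)}-V_{h+1}^{\pi^k})(s_h^k,a_h^k)=\mathbb E[\Gamma_{h+1}^k\mid s_h^k,a_h^k]=\Gamma_{h+1}^k+\zeta_{h+1}^k$, which gives the claim.

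The only delicate point is the clipping argument, since the lower clip must be shown to be inactive. It is handled exactly as in the binary case via the optimism chain of Lemma~\ref{lemma:UCB_MDP_jk}.
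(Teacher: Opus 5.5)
Your proposal is correct and matches the paper's proof essentially step for step. The paper likewise shows the lower clip is inactive via the optimism chain in the proof of Lemma~\ref{lemma:UCB_MDP_jk} (the unclipped estimate is $\ge Q_h^{{\pi^*}_k^{(j)};(k,j)} \ge -HB_{\mathrm{sum}}$), applies Lemma~\ref{lemma:upper_bound_LSVI-TS} with $\pi=\pi^k$ and $j=j^+$, and evaluates at $(s_h^k,a_h^k)$ using the greedy action rule to identify $\Gamma_h^k$ and rewrite the transition term as $\Gamma_{h+1}^k+\zeta_{h+1}^k$. Your explicit remarks fill in details the paper leaves implicit: the upper clip can only decrease $Q_h^{k,(j^+)}$, and Lemma~\ref{lemma:upper_bound_LSVI-TS} holds uniformly over policies and indices, so the data-dependent $j^+$ and $\pi^k$ cause no difficulty.
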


\begin{proof}This proof follows the analysis of Lemma B.6 in (Jin et al., \citeyear{jin20_linearmdp}).
    
    On event $\mathcal{E}_{\mathrm{TS}} \cap \cK_{\mathrm{Norm}}\cap\mathcal{E}_{\mathrm{LS}}$, by Lemma~\ref{lemma:UCB_MDP_jk} and its proof, the un-clipped optimistic estimate satisfies
    \[
        \left\langle
            \psi(s,a),
            \mathbf{w}_h^{k,(j)}
        \right\rangle
        +
        \beta_{\mathrm{sum}}
        \sqrt{
            \bigl(\psi(s,a)\bigr)^\top
            \bigl(\Lambda_h^{k-1}\bigr)^{-1}
            \psi(s,a)
        }
        \ge
        Q_h^{{\pi^*}_k^{(j)};(k,j)}(s,a)
        \ge
        -HB_{\mathrm{sum}} .
    \]
    Hence, the lower clipping operation is inactive. Therefore, using Lemma~\ref{lemma:upper_bound_LSVI-TS}, for any
    $(s,a,h,k)\in\mathcal{S}\times\mathcal{A}\times[H]\times[K]$, we have
    \begin{align*}
        Q_h^{k,(j^+)}(s,a)
        -
        Q_h^{\pi^k}(s,a)
        &\le
        P_h
        \left(
            V_{h+1}^{k,(j^+)}
            -
            V_{h+1}^{\pi^k}
        \right)(s,a)
        +
        2\beta_{\mathrm{sum}}
        \sqrt{
            \bigl(\psi(s,a)\bigr)^\top
            \bigl(\Lambda_h^{k-1}\bigr)^{-1}
            \psi(s,a)
        }
        \\
        &+
        \left\langle
            \psi(s,a),
            \widetilde{\theta}_k^{(j^+)}-\theta^*
        \right\rangle .
    \end{align*}
    By the action-selection rule in Algorithm~\ref{alg:unified-seg-lsvi-ts} and the definition of the state value function under $\pi^k$, we have
    \begin{align*}
        \Gamma_h^k
        &=
        V_h^{k,(j^+)}(s_h^k)
        -
        V_h^{\pi^k}(s_h^k)
    =
        Q_h^{k,(j^+)}(s_h^k,a_h^k)
        -
        Q_h^{\pi^k}(s_h^k,a_h^k).
    \end{align*}
    Substituting $(s,a)=(s_h^k,a_h^k)$ into the previous inequality, we have
    \begin{align*}
        \Gamma_h^k
        &\le
        P_h
        \left(
            V_{h+1}^{k,(j^+)}
            -
            V_{h+1}^{\pi^k}
        \right)(s_h^k,a_h^k)
        +
        2\beta_{\mathrm{sum}}
        \sqrt{
            \bigl(\psi_h^k\bigr)^\top
            \bigl(\Lambda_h^{k-1}\bigr)^{-1}
            \psi_h^k
        }
        +
        \left\langle
            \psi_h^k,
            \widetilde{\theta}_k^{(j^+)}-\theta^*
        \right\rangle
        \\
        &=
        \Gamma_{h+1}^k
        +
        \zeta_{h+1}^k
        +
        2\beta_{\mathrm{sum}}
        \sqrt{
            \bigl(\psi_h^k\bigr)^\top
            \bigl(\Lambda_h^{k-1}\bigr)^{-1}
            \psi_h^k
        }
        +
        \left\langle
            \psi_h^k,
            \widetilde{\theta}_k^{(j^+)}-\theta^*
        \right\rangle .
    \end{align*}

    This completes the proof.
\end{proof}

Recall that \(F_{k-1}^{+}\) denotes the filtration including the history up to the end of episode $k-1$ and the Gaussian noises \(\{\xi_k^{(j)}\}_{j=1}^n\) for episode \(k\) are revealed, but before the initial state \(s_1^k\) is observed. For \(h\in[H]\) and \(F_{k,h}\) denotes the filtration including \(F_{k-1}^{+}\), the trajectory information up to step \(h\) are revealed.

% We use the notation of Lemma~\ref{lemma:recursive_formula}. 

We use the notation of Lemma~\ref{lemma:recursive_formula}, define the event
\begin{align}
\mathcal{E}_{\mathrm{KS}}
:=
\Bigg\{
&\left|
\sum_{k'=1}^{k}
\sum_{h=1}^{H}
\zeta_{h+1}^{k'}
\right|
\le
4HB_{\mathrm{sum}}
\sqrt{
2kH\log\left(\frac{2K}{\delta'}\right)
},
\nonumber\\
&\left|
\sum_{k'=1}^{k}
\left(
\left[
V_1^*(s_1^{k'})
-
V_1^{\pi^{k'}}(s_1^{k'})
\right]
-
\mathbb E_{s\sim\rho}
\left[
V_1^*(s)
-
V_1^{\pi^{k'}}(s)
\right]
\right)
\right|
\nonumber\\
&
\le
2Hr_{\max}
\sqrt{
2k\log\left(\frac{2K}{\delta'}\right)
},
\quad
\forall k\in[K].
\Bigg\}. 
\label{eq:some_concerntration}
\end{align}

\begin{lemma}\label{lemma:event_KS}
It holds that
\begin{align*}
    \Pr\left[ \mathcal{E}_{\mathrm{KS}} \right] \ge 1-2\delta' .
\end{align*}
\end{lemma}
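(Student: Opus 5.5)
The argument repeats the proof of Lemma~\ref{lemma:Bi_event_KS} essentially verbatim, with $B_{\mathrm{bi}}$ replaced by $B_{\mathrm{sum}}$. The event $\mathcal E_{\mathrm{KS}}$ has two parts, and each is a bounded martingale difference sequence handled by the Azuma--Hoeffding inequality. We allot failure probability $\delta'$ to each part and combine them by a union bound.

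\textbf{First part.} We work with the filtrations $F_{k-1}^{+}$ and $F_{k,h}$ defined just before the statement, so that the Gaussian perturbations $\{\xi_k^{(j)}\}_{j=1}^n$ are revealed before $s_1^k$. Conditional on $F_{k,h}$, the index $j^+$ and the functions $V_{h+1}^{k,(j^+)}$ and $V_{h+1}^{\pi^k}$ are fixed. This holds because $j^+$ depends only on the perturbations, past data and $s_1^k$, and $\pi^k$ is greedy with respect to $Q^{k,(j^+)}$. Since $s_{h+1}^k\sim P_h(\cdot\mid s_h^k,a_h^k)$, we get $\mathbb E[\zeta_{h+1}^k\mid F_{k,h}]=0$. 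Ordering the pairs $(k,h)$ lexicographically then makes $\{\zeta_{h+1}^k\}$ a martingale difference sequence with respect to the nested filtration $\{F_{k,h}\}$.

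For the boundedness we need a deterministic bound, so we use the following observation. The value $V_{h+1}^{k,(j^+)}$ lies in $[-HB_{\mathrm{sum}},HB_{\mathrm{sum}}]$ by the clipping in Algorithm~\ref{alg:unified-lsvi-ts-q}. The value $|V_{h+1}^{\pi^k}|\le Hr_{\max}\le HB_{\mathrm{sum}}$ because $B_{\mathrm{sum}}\ge r_{\max}\sqrt d$. Together these give $|\Gamma_{h+1}^k|\le 2HB_{\mathrm{sum}}$ and hence $|\zeta_{h+1}^k|\le 4HB_{\mathrm{sum}}$ surely, with no conditioning on $\mathcal K_{\mathrm{Norm}}$ or $\mathcal E_{\mathrm{TS}}$ needed. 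For each fixed $k$, Azuma--Hoeffding over the $kH$ terms gives the stated deviation with failure probability $\delta'/K$. A union bound over $k\in[K]$ then gives probability at least $1-\delta'$.

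\textbf{Second part.} We set $\Delta_k(s):=V_1^*(s)-V_1^{\pi^k}(s)$, where $\pi^k$ is the policy the selection rule would produce from initial state $s$ given $F_{k-1}^{+}$. This function is $F_{k-1}^{+}$-measurable. Since $s_1^k\sim\rho$ is independent of $F_{k-1}^{+}$, the centred increments $\Delta_k(s_1^k)-\mathbb E_{s\sim\rho}[\Delta_k(s)]$ have zero conditional mean. Because $0\le\Delta_k\le 2Hr_{\max}$, these increments are bounded by $2Hr_{\max}$. Azuma--Hoeffding with failure probability $\delta'/K$ for each $k$, followed by a union bound, gives probability at least $1-\delta'$.

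\textbf{Main subtlety.} The delicate point is measurability. In the actual algorithm, $j^+$ and $\pi^k$ depend on $s_1^k$, so they are not $F_{k-1}$-measurable. Using the equivalent revealing order (perturbations first, then $s_1^k$) together with the functional definition of $\Delta_k$ resolves this. With that in place, both martingale properties hold, and the two union bounds give $\Pr[\mathcal E_{\mathrm{KS}}]\ge 1-2\delta'$.
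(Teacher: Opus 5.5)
Your proposal is correct and follows the paper's proof essentially step for step. It uses the same two Azuma--Hoeffding arguments: one over the martingale differences $\zeta_{h+1}^k$ with respect to $F_{k,h}$, with increments bounded by $4HB_{\mathrm{sum}}$, and one over the initial-state fluctuations of $\Delta_k$ with respect to $F_{k-1}^{+}$, with increments bounded by $2Hr_{\max}$. Each uses failure probability $\delta'/K$ per $k$ with a union bound over $k$, and the two are combined to get $1-2\delta'$. Your explicit justification that $|\zeta_{h+1}^k|\le 4HB_{\mathrm{sum}}$ holds surely is a useful clarification of a point the paper states without comment: clipping bounds $V_{h+1}^{k,(j^+)}$, and $Hr_{\max}\le HB_{\mathrm{sum}}$ bounds $V_{h+1}^{\pi^k}$.
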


\begin{proof}
We prove the two inequalities in the definition of
$\mathcal E_{\mathrm{KS}}$ separately.

Conditional on \(F_{k,h}\), the functions
\(V_{h+1}^{k,(j^+)}\) and \(V_{h+1}^{\pi^k}\) are fixed. Hence, by the definition of
\(\Gamma_{h+1}^k\) and \(\zeta_{h+1}^k\), we have
\[
\mathbb E
\left[
    \zeta_{h+1}^k
    \mid
    F_{k,h}
\right]
=0.
\]
Moreover, since both state value functions are bounded in
\([-HB_{\mathrm{sum}},HB_{\mathrm{sum}}]\), we have
\[
|\zeta_{h+1}^k|
\le
4HB_{\mathrm{sum}}.
\]
Therefore, for any fixed \(k\in[K]\), by the Azuma--Hoeffding inequality,
\[
\Pr\left(
\left|
\sum_{k'=1}^{k}
\sum_{h=1}^{H}
\zeta_{h+1}^{k'}
\right|
>
4HB_{\mathrm{sum}}
\sqrt{
2kH\log\left(\frac{2K}{\delta'}\right)
}
\right)
\le
\frac{\delta'}{K}.
\]
Taking a union bound over \(k\in[K]\), we obtain that with probability at
least \(1-\delta'\), for all \(k\in[K]\),
\[
\left|
\sum_{k'=1}^{k}
\sum_{h=1}^{H}
\zeta_{h+1}^{k'}
\right|
\le
4HB_{\mathrm{sum}}
\sqrt{
2kH\log\left(\frac{2K}{\delta'}\right)
}.
\]

Next, we construct the concentration inequality for the randomness of the
initial states. We clarify the meaning of
\[
\mathbb E_{s\sim\rho}
\left[
V_1^*(s)-V_1^{\pi^k}(s)
\right].
\]

Conditional on \(F_{k-1}^{+}\), the history and the Gaussian perturbations are fixed. For an initial state \(s\), \(V_1^{\pi^k}(s)\) denotes the value of the policy obtained by applying the same selection rule of Algorithm~\ref{alg:unified-seg-lsvi-ts} with initial state \(s\). Under this convention, the function
\[
s
\mapsto
V_1^*(s)-V_1^{\pi^k}(s)
\]
is fixed conditional on \(F_{k-1}^{+}\). 

Recall
$\Delta_k(s)
:=
V_1^*(s)-V_1^{\pi^k}(s).$
Since \(s_1^k\sim\rho\) is sampled independently of
\(F_{k-1}^{+}\), we have
\[
\mathbb E
\left[
\Delta_k(s_1^k)
-
\mathbb E_{s\sim\rho}
[
\Delta_k(s)
]
\mid
F_{k-1}^{+}
\right]
=0.
\]
Furthermore, since
$V_1^*(s),\ V_1^{\pi^k}(s)
\in
[-Hr_{\max},Hr_{\max}],$
we have
$0
\le
\Delta_k(s)
\le
2Hr_{\max},$
and therefore
\[
\left|
\Delta_k(s_1^k)
-
\mathbb E_{s\sim\rho}
[
\Delta_k(s)
]
\right|
\le
2Hr_{\max}.
\]
Thus, for any fixed \(k\in[K]\), application of the
Azuma--Hoeffding inequality gives
\[
\Pr\left(
\left|
\sum_{k'=1}^{k}
\left(
\Delta_{k'}(s_1^{k'})
-
\mathbb E_{s\sim\rho}
[
\Delta_{k'}(s)
]
\right)
\right|
>
2Hr_{\max}
\sqrt{
2k\log\left(\frac{2K}{\delta'}\right)
}
\right)
\le
\frac{\delta'}{K}.
\]
Taking a union bound over \(k\in[K]\), we obtain that with probability at
least \(1-\delta'\), for all \(k\in[K]\),
\[
\left|
\sum_{k'=1}^{k}
\left(
\left[
V_1^*(s_1^{k'})
-
V_1^{\pi^{k'}}(s_1^{k'})
\right]
-
\mathbb E_{s\sim\rho}
\left[
V_1^*(s)
-
V_1^{\pi^{k'}}(s)
\right]
\right)
\right|
\le
2Hr_{\max}
\sqrt{
2k\log\left(\frac{2K}{\delta'}\right)
}.
\]
Combining the two concentration events by a union bound yields
\[
\Pr\left[
\mathcal E_{\mathrm{KS}}
\right]
\ge
1-2\delta' .
\]

This completes the proof.
\end{proof}

\begin{lemma}
    \label{lemma:seg_weighted_elliptical_potential}
    For any $K \ge 1$ , we have 
   \begin{align*}
       \sum_{k=1}^{K} \sum_{i=1}^{m} \nbr{ \phi^{\tau^k_i}}_{\left(\Sigma_{k-1}^{\mathrm{sum}}\right)^{-1}}\le \sqrt{ 2KH d \cdot \max\lbr{ \frac{H}{\lambda}, 1} \cdot \log \sbr{ 1+ \frac{  KH }{ \lambda d  } } }
   \end{align*}
\end{lemma}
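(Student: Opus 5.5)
We mirror the proof of Lemma~\ref{lemma:lin_sum_phi_seg_binary}, working in the normalized coordinates $\tilde\phi^{\tau_i^k}:=\sqrt{m/H}\,\phi^{\tau_i^k}$. In these coordinates $\Sigma_{k}^{\mathrm{sum}}=\lambda I+\sum_{k'\le k}\sum_{i}\tilde\phi^{\tau_i^{k'}}(\tilde\phi^{\tau_i^{k'}})^\top$ is an ordinary regularized Gram matrix. Each normalized segment feature satisfies $\|\tilde\phi^{\tau_i^k}\|_2\le \sqrt{m/H}\cdot H/m=\sqrt{H/m}$.

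\textbf{Step 1 (Cauchy--Schwarz).} Since $\|\phi\|_{(\Sigma^{\mathrm{sum}})^{-1}}=\sqrt{H/m}\,\|\tilde\phi\|_{(\Sigma^{\mathrm{sum}})^{-1}}$, we get
\[
\sum_{k=1}^K\sum_{i=1}^m\|\phi^{\tau_i^k}\|_{(\Sigma_{k-1}^{\mathrm{sum}})^{-1}}
\le \sqrt{\tfrac{H}{m}}\sqrt{Km\sum_{k,i}\|\tilde\phi^{\tau_i^k}\|^2_{(\Sigma_{k-1}^{\mathrm{sum}})^{-1}}}
=\sqrt{KH\sum_{k}x_k},
\]
where $x_k:=\sum_i\|\tilde\phi^{\tau_i^k}\|^2_{(\Sigma_{k-1}^{\mathrm{sum}})^{-1}}$.

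\textbf{Step 2 (bound $x_k$).} From $\Sigma_{k-1}^{\mathrm{sum}}\succeq\lambda I$ and the norm bound, $x_k\le m\cdot (H/m)/\lambda=H/\lambda$. Applying the elementary inequality $x\le 2\max\{c,1\}\log(1+x)$ for $x\in[0,c]$ with $c=H/\lambda$ gives
\[
x_k\le 2\max\{H/\lambda,1\}\log(1+x_k).
\]

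\textbf{Step 3 (telescoping).} Repeat the determinant argument of Lemma~\ref{lemma:log_det_seg_d}, using $\det(I+A)\ge1+\operatorname{tr}(A)$ for PSD $A$, with the normalized features. This yields $\sum_k\log(1+x_k)\le\log\bigl(\det\Sigma_K^{\mathrm{sum}}/\det(\lambda I)\bigr)$. By AM--GM on the eigenvalues, together with $\operatorname{tr}(\Sigma_K^{\mathrm{sum}})\le\lambda d+Km\cdot H/m=\lambda d+KH$, this is at most $d\log(1+KH/(\lambda d))$.

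Combining Steps 1--3 gives exactly the stated bound $\sqrt{2KHd\max\{H/\lambda,1\}\log(1+KH/(\lambda d))}$.

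The only delicate point is the scaling bookkeeping. The factor $\sqrt{H/m}$ extracted in Step 1 must cancel the $m$ from Cauchy--Schwarz, leaving $KH$, and the per-episode bound on $x_k$ must come out $m$-free. Both work because the normalization makes the squared segment norms at most $H/m$ each, summing to $H$ per episode. Everything else is routine.
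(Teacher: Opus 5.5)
Your proposal is correct and follows the paper's proof essentially step for step. You normalize by $\sqrt{m/H}$, use Cauchy--Schwarz to extract $\sqrt{KH}$, bound each episode's normalized squared norm by $H/\lambda$ and apply $x\le 2\max\{c,1\}\log(1+x)$, then telescope the log-determinant and bound it via AM--GM with $\operatorname{tr}(\Sigma_K^{\mathrm{sum}})\le \lambda d+KH$; your Step~3 simply spells out the determinant telescoping that the paper leaves implicit.
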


\begin{proof}
For any $K\ge 1$,we have 
\begin{align*}
    \sum_{i=1}^{m} \nbr{\sqrt{\frac{m}{H}}  \phi^{\tau^k_i}}_{\left(\Sigma_{k-1}^{\mathrm{sum}}\right)^{-1}}^2 
		&\leq m \cdot \frac{H}{m} \cdot \frac{1}{\lambda} \le \frac{H}{\lambda}
\end{align*}
We have 
     \begin{align*}
        \sum_{k=1}^{K} \sum_{i=1}^{m} \nbr{ \phi^{\tau^k_i}}_{\left(\Sigma_{k-1}^{\mathrm{sum}}\right)^{-1}}&=\sum_{k=1}^{K} \sum_{i=1}^{m} \sqrt{\frac{H}{m}}\nbr{ \sqrt{\frac{m}{H}}\phi^{\tau^k_i}}_{\left(\Sigma_{k-1}^{\mathrm{sum}}\right)^{-1}}  
        \\&\leq \sqrt{ KH   \sum_{k=1}^{K} \sum_{i=1}^{m} \nbr{\sqrt{\frac{m}{H}} \phi^{\tau^k_i}}_{\left(\Sigma_{k-1}^{\mathrm{sum}}\right)^{-1}}^2 }
		\nonumber\\
		&{\leq} \sqrt{ 2KH \cdot \max\lbr{ \frac{H}{  \lambda}, 1} \cdot \sum_{k=1}^{K} \log \sbr{ 1+ \sum_{i=1}^{m} \nbr{ \sqrt{\frac{m}{H}}\phi^{\tau^k_i}}_{\left(\Sigma_{k-1}^{\mathrm{sum}}\right)^{-1}}^2} }
		\nonumber\\ 
		&\leq \sqrt{ 2KH d \cdot \max\lbr{ \frac{H}{\lambda}, 1} \cdot \log \sbr{ 1+ \frac{  KH }{ \lambda d  } } } 
    \end{align*}
\end{proof}

\begin{proof}[Proof of Theorem~\ref{thm:seg_sum_unknown_tran_d}]
On the event 
$\mathcal{E}_{\mathrm{TS}} \cap \cK_{\mathrm{Norm}}\cap\mathcal{E}_{\mathrm{LS}} \cap \mathcal{Q}\cap \mathcal{E}_{\mathrm{KS}}$, we have
\begin{align*}
    \cR(K)
    &=
    \sum_{k=1}^K
    \left(
    V_1^*-V_1^{\pi^k} 
    \right)
    \overset{\textup{(a)}}{\le}
    \sum_{k=1}^K
    \left(
    V_1^*(s_1^k)-V_1^{\pi^k}(s_1^k)
    \right)
    +
    2Hr_{\max}
    \sqrt{
    2K\log\left(\frac{2K}{\delta'}\right) 
    }
    \\
    &\overset{\textup{(b)}}{\le}
    \sum_{k=1}^{K} \Gamma_1^k 
    +
    2Hr_{\max}
    \sqrt{
    2K\log\left(\frac{2K}{\delta'}\right) 
    }
    \\
    &\overset{\textup{(c)}}{\le}
    \sum_{k=1}^{K}\sum_{h=1}^{H} \zeta_{h+1}^k
    +
    2\beta_{\mathrm{sum}}
    \sum_{k=1}^{K}\sum_{h=1}^{H}
    \sqrt{
        \bigl(\psi_h^k\bigr)^\top
        \bigl(\Lambda_h^{k-1}\bigr)^{-1}
        \psi_h^k
    }
    \\
    &\quad
    +
    \sum_{k=1}^{K}\sum_{i=1}^{m}
    \left\langle
        \phi^{\tau_i^k},
        \widetilde{\theta}_k^{(j^+)}-\theta^*
    \right\rangle
    +
    2Hr_{\max}
    \sqrt{
    2K\log\left(\frac{2K}{\delta'}\right) 
    } .
\end{align*}
Here, inequality \textup{(a)} follows from Eq.~\eqref{eq:some_concerntration}, inequality \textup{(b)} follows from Lemma~\ref{lemma:LSVI_TS_optim}, and inequality \textup{(c)} follows from Lemma~\ref{lemma:recursive_formula} and the identity 
$\sum_{h=1}^{H}\psi_h^k=\sum_{i=1}^{m}\phi^{\tau_i^k}$.

For the first term, by Eq.~\eqref{eq:some_concerntration}, we have
\begin{align*}
    \sum_{k=1}^{K}
    \sum_{h=1}^{H}
    \zeta_{h+1}^{k}
    \le
    4HB_{\mathrm{sum}}
    \sqrt{
    2KH\log\left(\frac{2K}{\delta'}\right)
    } .
\end{align*}
For the second term, by the Cauchy--Schwarz inequality and Lemma~\ref{lemma:elliptical_sum_bound}, we have
\begin{align*}
    \sum_{k=1}^{K}\sum_{h=1}^{H}
    \sqrt{
    \bigl(\psi_h^k\bigr)^\top
    \bigl(\Lambda_h^{k-1}\bigr)^{-1}
    \psi_h^k
    }
    \le
    H\sqrt{
    2dK\log\left(1+\frac{K}{d}\right)
    } .
\end{align*}
For the third term, by the Cauchy--Schwarz inequality and Lemma~\ref{lemma:seg_weighted_elliptical_potential}, we obtain
\begin{align*}
    \sum_{k=1}^{K}\sum_{i=1}^{m}
    \left\langle
        \phi^{\tau_i^k},
        \widetilde{\theta}_k^{(j^+)}-\theta^*
    \right\rangle
    &\le
    \sum_{k=1}^{K}\sum_{i=1}^{m}
    \nbr{\phi^{\tau_i^k}}_{\left(\Sigma_{k-1}^{\mathrm{sum}}\right)^{-1}}
    \nbr{
        \widetilde{\theta}_k^{(j^+)}
        -
        \theta^*
    }_{\Sigma_{k-1}^{\mathrm{sum}}}
    \\
    &\le
    \sum_{k=1}^{K}\sum_{i=1}^{m}
    \nbr{\phi^{\tau_i^k}}_{\left(\Sigma_{k-1}^{\mathrm{sum}}\right)^{-1}}
    \left(
    \nbr{
        \hat{\theta}_{k-1}^{\mathrm{sum}}
        -
        \theta^*
    }_{\Sigma_{k-1}^{\mathrm{sum}}}
    +
    \widetilde{\beta}(k-1)
    \nbr{g_k^{(j^+)}}_2
    \right)
    \\
    &\le
    \widetilde{\beta}(K)
    \left(
        \sqrt{d}
        +
        2\sqrt{\log{\frac{Kn}{\delta'}}}
        +
        1
    \right)
    \sqrt{
        2KHd
        \cdot
        \max\lbr{ \frac{H}{\lambda}, 1}
        \cdot
        \log \sbr{
        1+ \frac{KH}{\lambda d}
        }
    } .
\end{align*}
Combining the three bounds above, we have
\begin{align}
\cR(K)
&\le
2Hr_{\max}
\sqrt{
2K\log\left(\frac{2K}{\delta'}\right) 
}
+
4HB_{\mathrm{sum}}
\sqrt{
2KH\log\left(\frac{2K}{\delta'}\right)
}
+
2\beta_{\mathrm{sum}} H
\sqrt{
2dK\log\left(1+\frac{K}{d}\right)
}
\notag
\\
&\quad
+
\widetilde{\beta}(K)
\left(
    \sqrt{d}
    +
    2\sqrt{\log{\frac{Kn}{\delta'}}}
    +
    1
\right)
\sqrt{
    2KHd
    \cdot
    \max\lbr{ \frac{H}{\lambda}, 1}
    \cdot
    \log \sbr{
    1+ \frac{KH}{\lambda d}
    }
}.
\label{regret_mid}
\end{align}
Plugging
\[
    \widetilde{\beta}(k)
    :=
    \sqrt{
        d  \left(\log\left(1+\frac{kH}{\lambda d}\right)
        +
        2\log\left(\frac{1}{\delta'}\right) \right)
    }
    +
    r_{\max} \sqrt{\lambda d},
\]
\[
    B_{\mathrm{sum}}
    :=
    \frac{
        \widetilde{\beta}(K)
        \left(
            \sqrt{d}
            +
            2\sqrt{\log{\frac{Kn}{\delta'}}}
            +
            1
        \right)
    }{\sqrt{\lambda}}
    +
    r_{\max}\sqrt{d},
    \qquad
    \beta_{\mathrm{sum}}
    =
    c_{\beta} d H B_{\mathrm{sum}}
    \sqrt{
        \log\left(\frac{2dKH}{\delta'}\right)
    },
\]
into Eq.~\eqref{regret_mid}, we obtain
\begin{align*}
\cR(K)
    &=
    \widetilde O\Bigg(
d^{5/2}H^2\sqrt K
\left(
    r_{\max}
    +
    \frac{1}{\sqrt{\lambda}}
\right)
+
d
\left(
    1+r_{\max}\sqrt{\lambda}
\right)
\sqrt{
    KHd
    \max\left\{
        \frac{H}{\lambda},
        1
    \right\}
}
\Bigg).
\end{align*}

It remains to lower bound the probability of the event on which the above regret bound holds. We have
\begin{align*}
   &\Pr\left[
    \mathcal{E}_{\mathrm{TS}}
    \cap
    \cK_{\mathrm{Norm}}
    \cap
    \mathcal{E}_{\mathrm{LS}}
    \cap
    \mathcal{Q}
    \cap
    \mathcal{E}_{\mathrm{KS}}
    \right]
    \\
    &=
    1-
    \Pr\left[
    \left(
    \mathcal{E}_{\mathrm{TS}}
    \cap
    \cK_{\mathrm{Norm}}
    \cap
    \mathcal{E}_{\mathrm{LS}}
    \cap
    \mathcal{Q}
    \cap
    \mathcal{E}_{\mathrm{KS}}
    \right)^c
    \right]
    \\
    &=
    1-
    \Pr\left[
    \mathcal{E}_{\mathrm{TS}}^c
    \cup
    \cK_{\mathrm{Norm}}^c
    \cup
    \left(
        \mathcal{E}_{\mathrm{TS}}
        \cap
        \cK_{\mathrm{Norm}}
        \cap
        \mathcal{E}_{\mathrm{LS}}^c
    \right)
    \cup
    \left(
        \cK_{\mathrm{Norm}}
        \cap
        \mathcal Q^c
    \right)
    \cup
    \mathcal{E}_{\mathrm{KS}}^c
    \right]
    \\
    &\ge
    1-
    \Pr(\mathcal{E}_{\mathrm{TS}}^c)
    -
    \Pr(\cK_{\mathrm{Norm}}^c)
    -
    \Pr\left(
        \mathcal{E}_{\mathrm{TS}}
        \cap
        \cK_{\mathrm{Norm}}
        \cap
        \mathcal{E}_{\mathrm{LS}}^c
    \right)
    -
    \Pr\left(
        \cK_{\mathrm{Norm}}
        \cap
        \mathcal Q^c
    \right)
    -
    \Pr(\mathcal{E}_{\mathrm{KS}}^c)
    \\
    &\ge
    1-
    \delta'
    -
    \delta'
    -
    \delta'
    -
    \delta'
    -
    2\delta'
    \\
    &=
    1-6\delta' .
\end{align*}
Setting $\delta'=\delta/6$, the regret bound holds with probability at least $1-\delta$.

This completes the proof.
\end{proof}

\section{Proof for Variable Binary Feedback with Known Transitions}\label{secE}

In this section, we prove the regret upper bound of variable binary feedback with Known transitions. Broadly, the proof follows the similar argument as in Appendix~\ref{app:upper_binary_known}, with several technical details adapted to accommodate different segmentation schemes.

For any $k>0$ and $\theta \in \Theta$, define
\begin{align}
	Z^U_k &:= \sum_{k'=1}^{k} \sum_{i=1}^{m_{k'}} \varepsilon_{k',i} \cdot \phi^{\tau^{k'}_i} , \nonumber
	\\
	g^U_{k}(\theta) &:= \sum_{k'=1}^{k} \sum_{i=1}^{m_{k'}} \sig( (\phi^{\tau^{k'}_i})^\top\theta ) \cdot \phi^{\tau^{k'}_i} + \lambda \theta , \label{eq:uq_def_g_k}
	\\
	\Lambda^U_{k}(\theta) &:=  \sum_{k'=1}^{k} \sum_{i=1}^{m_{k'}} \sig'( (\phi^{\tau^{k'}_i})^\top\theta ) \cdot \phi^{\tau^{k'}_i} (\phi^{\tau^{k'}_i})^\top + \lambda I . \label{eq:uq_def_Lambda_k}
\end{align}

\begin{lemma} \label{lemma:uq_Lambda_k}
	For any $k>0$ and $\theta \in \Theta$, we have
	\begin{align*}
		\det(\Lambda^U_k(\theta)) \leq \sbr{ \frac{\sig'_{\max}\sum_{k'=1}^{k}\sum_{i=1}^{m_{k'}}  \sbr{\ell^{k'}_{i}}^2}{d} + \lambda  }^{d}.
	\end{align*}
\end{lemma}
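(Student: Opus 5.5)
The argument mirrors Lemma~\ref{lemma:lin_Lambda_k}, with the uniform segment length $H/m$ replaced by the individual lengths $\ell_i^{k'}$. Since $\Lambda^U_k(\theta)$ is positive definite, its determinant is the product of its eigenvalues. The AM--GM inequality applied to those eigenvalues gives
\[
\det(\Lambda^U_k(\theta)) \le \left(\frac{\trace(\Lambda^U_k(\theta))}{d}\right)^d .
\]
It therefore suffices to bound the trace.

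By linearity of the trace,
\[
\trace(\Lambda^U_k(\theta)) = \sum_{k'=1}^{k}\sum_{i=1}^{m_{k'}} \sig'\big((\phi^{\tau_i^{k'}})^\top\theta\big)\,\|\phi^{\tau_i^{k'}}\|_2^2 + \lambda d .
\]
We bound each summand with two facts:
\begin{itemize}
\item $\sig'(x)\le \sig'_{\max}$ for every real $x$ (indeed $\sig'_{\max}=1/4$);
\item $\|\phi^{\tau_i^{k'}}\|_2 \le \ell_i^{k'}$, which follows from the triangle inequality because $\phi^{\tau_i^{k'}}$ is a sum of $\ell_i^{k'}$ features, each of norm at most $1$ by Assumption~\ref{assump:linear_mdp}.
\end{itemize}
Together these give
\[
\trace(\Lambda^U_k(\theta)) \le \sig'_{\max}\sum_{k'=1}^{k}\sum_{i=1}^{m_{k'}}(\ell_i^{k'})^2 + \lambda d .
\]

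Dividing by $d$ and substituting into the AM--GM bound yields the claim. No step poses a real obstacle. The only point needing care is to keep the segment-specific lengths $\ell_i^{k'}$ rather than bounding each by $L(k)$, so that the result matches the quantity appearing in $\omega_U(k)$ in Eq.~\eqref{eq:omega-unequal}.
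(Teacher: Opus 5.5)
Your proof is correct and follows the same route as the paper: AM--GM on the eigenvalues of the positive definite matrix $\Lambda^U_k(\theta)$ reduces the determinant to $\bigl(\trace(\Lambda^U_k(\theta))/d\bigr)^d$. Each summand of the trace is then bounded by $\sig'_{\max}(\ell_i^{k'})^2$, using $\sig'\le\sig'_{\max}$ and $\|\phi^{\tau_i^{k'}}\|_2\le\ell_i^{k'}$.
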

\begin{proof}
	For any $k>0$, by AM-GM equality, we have
	\begin{align*}
		\det(\Lambda^U_k(\theta)) &\leq \sbr{ \frac{\trace(\Lambda^U_k(\theta))}{d} }^{d}
		\\
		&\leq \sbr{ \frac{1}{d} \cdot \sbr{ \sig'_{\max} \cdot\sum_{k'=1}^{k} \sum_{i=1}^{m_{k'}}  \sbr{\ell^{k'}_{i}}^2 + \lambda d } }^{d}
		\\
		&= \sbr{ \frac{\sig'_{\max}\sum_{k'=1}^{k}\sum_{i=1}^{m_{k'}}  \sbr{\ell^{k'}_{i}}^2}{d} + \lambda  }^{d} .
	\end{align*}
\end{proof}

After executing episode $k$, the whole trajectory
$\tau^k=(s_1^k,a_1^k,\dots,s_H^k,a_H^k)$
is observed. The algorithm then chooses the number of segments $m_k$ and the partition
$0=t_0^k<t_1^k<\cdots<t_{m_k}^k=H.$

Let $\mathcal H_k$ denote the filtration containing all information available after the trajectory of episode $k$ has been observed and its segmentation has been determined, but before any segment-level feedback is revealed.

For any \(k>0\) and \(i\in[m_k]\), let
\(\varepsilon_{k,i}:=y_i^k-\sig((\phi^{\tau_i^k})^\top\theta^*)\)
denote the noise of the binary feedback. and $v_{k,i}^2:=\mathbb E[\varepsilon_{k,i}^2\mid\mathcal H_{k}]=
\sig((\phi^{\tau_i^k})^\top\theta^*)
\bigl(1-\sig((\phi^{\tau_i^k})^\top\theta^*)\bigr)
=
\sig'((\phi^{\tau_i^k})^\top\theta^*).$
Then, we have
\begin{align*}
	\Lambda^U_{k}(\theta^*) &:= \sum_{k'=1}^{k} \sum_{i=1}^{m_{k'}} \sig'( (\phi^{\tau^{k'}_i})^\top \theta^* ) \cdot \phi^{\tau^{k'}_i} (\phi^{\tau^{k'}_i})^\top + \lambda I 
	\\
	&= \sum_{k'=1}^{k} \sum_{i=1}^{m_{k'}} v_{k',i}^2 \cdot \phi^{\tau^{k'}_i} (\phi^{\tau^{k'}_i})^\top + \lambda I .
\end{align*}

\begin{lemma}[Concentration of Noises under Binary Feedback with Unequal Segment]\label{lemma:uq_noise_seq_martingale}
	With probability at least $1-\delta'$, for any $k>0$,
	\begin{align*}
		\nbr{\sum_{k'=1}^{k} \sum_{i=1}^{m_{k'}} \varepsilon_{k',i} \cdot \phi^{\tau^{k'}_i}}_{{\left(\Lambda^U_{k}(\theta^*)\right)}^{-1}} \leq\frac{\sqrt{\lambda}}{2} + \frac{d}{\sqrt{\lambda}} \log \sbr{ \frac{4}{\delta'}  \sbr{ 1 + \frac{\sig'_{\max}\sum_{k'=1}^{k}\sum_{i=1}^{m_{k'}}  \sbr{\ell^{k'}_{i}}^2}{\lambda d}    }  }  .
	\end{align*}
\end{lemma}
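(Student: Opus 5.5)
The plan is to mirror the proof of Lemma~\ref{lemma:lin_noise_seq_martingale}: invoke Theorem~1 of (Faury et al.,~\citeyear{faury2020improved}) to get a self-normalized bound in the variance-weighted norm, then control the log-determinant with Lemma~\ref{lemma:uq_Lambda_k}. The first step is to set up the martingale structure carefully, since now both the segmentation and the number of observations per episode are random. I would index all segment observations sequentially as $(k',i)$ with $i\in[m_{k'}]$ and use the filtration $\mathcal H_{k'}$ augmented by the outcomes $y_1^{k'},\dots,y_{i-1}^{k'}$. The key facts to check are the following. The feature $\phi^{\tau_i^{k'}}$ and the partition are $\mathcal H_{k'}$-measurable, because the partition is chosen after the trajectory is observed but before any feedback is revealed. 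Conditional on this filtration, $\varepsilon_{k',i}$ has zero mean, satisfies $|\varepsilon_{k',i}|\le 1$, and has conditional variance $v_{k',i}^2=\sig'((\phi^{\tau_i^{k'}})^\top\theta^*)$. In addition, the outcomes within a segment batch are conditionally independent given $\mathcal H_{k'}$, so the sequential ordering inside an episode is a valid filtration. These are exactly the hypotheses of Faury et al.'s theorem: bounded, conditionally centered noise with predictable variances and predictable features. The random number of terms $m_{k'}$ only changes the time index and causes no problem, because the theorem holds uniformly over a stopping-time-free sequential index.

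Next, I would apply the theorem with regularizer $\lambda$ and predictable weights $v_{k',i}^2$, so that the design matrix is exactly $\Lambda^U_k(\theta^*)$. With probability at least $1-\delta'$, simultaneously for all $k$, this gives
\[
\nbr{Z^U_k}_{(\Lambda^U_k(\theta^*))^{-1}}\le \frac{\sqrt\lambda}{2}+\frac{2}{\sqrt\lambda}\log\sbr{\frac{\det(\Lambda^U_k(\theta^*))^{1/2}\lambda^{-d/2}}{\delta'}}+\frac{2}{\sqrt\lambda}d\log 2 .
\]
The feature norm used inside the theorem is only needed through the determinant, so no uniform bound on the features is required beyond $\|\phi^{\tau_i^{k'}}\|_2\le \ell_i^{k'}$.

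Finally, I would plug in Lemma~\ref{lemma:uq_Lambda_k}, which bounds $\det(\Lambda_k^U(\theta^*))\lambda^{-d}$ by $\bigl(1+\sig'_{\max}\sum_{k'\le k}\sum_{i}(\ell_i^{k'})^2/(\lambda d)\bigr)^{d}$. The factor $\tfrac{2}{\sqrt\lambda}\cdot\tfrac d2$ then becomes $\tfrac{d}{\sqrt\lambda}$, using $\log(1/\delta')\le d\log(1/\delta')$ because $\delta'<1$. The term $2d\log 2/\sqrt\lambda$ is absorbed into the factor $4$ inside the logarithm, as in Lemma~\ref{lemma:lin_noise_seq_martingale}.

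The main obstacle is the first step: verifying that adaptively chosen, trajectory-dependent partitions still give predictable features and variances. This requires stating precisely the ordering under which the theorem's filtration is well defined. Once that is settled, the remaining steps are routine algebra.
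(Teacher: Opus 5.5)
Your route is the paper's route: Theorem~1 of Faury et al.\ applied with the variance-weighted design $\Lambda^U_k(\theta^*)$, then Lemma~\ref{lemma:uq_Lambda_k} for the determinant, then absorbing constants. Your treatment of the filtration is fine and is more careful than the paper's. The partition and features are $\mathcal H_{k'}$-measurable, outcomes are revealed one at a time, and the bound is uniform over the global observation index, so the random $m_{k'}$ is harmless.

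The gap is the step you call harmless, namely that the feature norm enters only through the determinant. Faury et al.'s Theorem~1 assumes $x_t\in\mathcal B_2(d)$, i.e.\ $\|x_t\|_2\le 1$. It uses this in the exponential-supermartingale step, which needs $|\xi^\top x_t\,\varepsilon_{t+1}|\le 1$ for every $\xi$ in the unit ball. Your segment features have $\|\phi^{\tau_i^{k'}}\|_2$ up to $\ell_i^{k'}$, which can be as large as $H$. The paper's proof applies the theorem in exactly the same unjustified way, as does Lemma~\ref{lemma:lin_noise_seq_martingale} with norms up to $H/m$. So matching the paper does not close the gap.

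What the theorem actually gives comes from rescaling. Take a deterministic $L\ge \ell_i^{k'}$ for all segments (e.g.\ $L=H$), and apply the theorem to $\phi^{\tau_i^{k'}}/L$ with regularizer $\lambda/L^2$. The design becomes $\Lambda^U_k(\theta^*)/L^2$. Both the self-normalized norm and $\det(\Lambda^U_k(\theta^*))^{1/2}\lambda^{-d/2}$ are invariant, but the constants scale:
\[
\nbr{Z^U_k}_{(\Lambda^U_k(\theta^*))^{-1}}\le \frac{\sqrt\lambda}{2L}+\frac{2L}{\sqrt\lambda}\log\sbr{\frac{\det(\Lambda^U_k(\theta^*))^{1/2}\lambda^{-d/2}}{\delta'}}+\frac{2L}{\sqrt\lambda}\,d\log 2 .
\]
So you only get the stated form with an extra factor $L$ on the logarithmic term, and this factor propagates into $\omega_U$ and $\nu_U$.

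The factor is necessary; the inequality as stated can fail. Take $d=2$, $\psi\equiv e_1$, a single segment of length $\ell=10^4$ (so $\phi^{\tau}=\ell e_1$), $\lambda=100$ and $\delta'=10^{-4}$. Let $\theta^*=c\,e_1$ with $p:=1-\sig(\ell c)=2\times10^{-4}$. On the event $\{y=0\}$, which has probability $p>\delta'$, the left-hand side is $(1-p)\ell/\sqrt{p(1-p)\ell^2+\lambda}\approx 70.5$. The right-hand side is $5+0.2\log\bigl(4\cdot10^4(1+1.25\cdot10^5)\bigr)\approx 9.5$.

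There is also a smaller slip. Moving $\frac{2}{\sqrt\lambda}\log\frac{1}{\delta'}$ under $\frac{d}{\sqrt\lambda}\log\frac{1}{\delta'}$ requires $d\ge 2$, not merely $\delta'<1$. The paper's proof takes the same step.
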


\begin{proof}
    According to Theorem 1 in \citep{faury2020improved}, we have that with probability at least $1-\delta'$, for any $k>0$ ,
	\begin{align*}
		\nbr{\sum_{k'=1}^{k} \sum_{i=1}^{m_{k'}} \varepsilon_{k',i} \cdot \phi^{\tau^{k'}_i}}_{{\left(\Lambda^U_{k}(\theta^*)\right)}^{-1}}  &\leq \frac{\sqrt{\lambda}}{2} + \frac{2}{\sqrt{\lambda}} \log \sbr{ \frac{\det(\Lambda^U_{k}(\theta^*))^{\frac{1}{2}} \cdot \lambda^{-\frac{d}{2}}}{\delta'}  } + \frac{2}{\sqrt{\lambda}} d \log(2)
		\\
		&\overset{\textup{(a)}}{\leq} \frac{\sqrt{\lambda}}{2} \!+\! \frac{2}{\sqrt{\lambda}} \log \sbr{ \frac{1}{\delta'}  \sbr{ 1 + \frac{\sig'_{\max}\sum_{k'=1}^{k}\sum_{i=1}^{m_{k'}}  \sbr{\ell^{k'}_{i}}^2}{\lambda d}   }^{\frac{d}{2}}  }   \!+\! \frac{2}{\sqrt{\lambda}} d \log(2)
		\\
		&\leq \frac{\sqrt{\lambda}}{2} + \frac{d}{\sqrt{\lambda}} \log \sbr{ \frac{1}{\delta'}  \sbr{ 1 +\frac{\sig'_{\max}\sum_{k'=1}^{k}\sum_{i=1}^{m_{k'}}  \sbr{\ell^{k'}_{i}}^2}{\lambda d}    }  }   + \frac{2}{\sqrt{\lambda}} d \log(2)
		\\
		&\leq \frac{\sqrt{\lambda}}{2} + \frac{d}{\sqrt{\lambda}} \log \sbr{ \frac{4}{\delta'}  \sbr{ 1 + \frac{\sig'_{\max}\sum_{k'=1}^{k}\sum_{i=1}^{m_{k'}}  \sbr{\ell^{k'}_{i}}^2}{\lambda d}    }  } ,
	\end{align*}
	where (a) uses Lemma~\ref{lemma:uq_Lambda_k}.
\end{proof}

Define event
\begin{align*}
	\cE_{\mathrm{U}}:=\lbr{ \nbr{g^U_k(\hat{\theta}_k) - g^U_k(\theta^*) }_{{\left(\Lambda^U_{k}(\theta^*)\right)}^{-1}} \leq \omega_U(k), \  \forall k>0 } .
\end{align*}
\begin{lemma}
	It holds that
	\begin{align*}
		\Pr \mbr{ \cE_{\mathrm{U}} } \geq 1-\delta' .
	\end{align*}
\end{lemma}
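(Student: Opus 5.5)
The plan mirrors the proof of $\Pr(\cE)\ge 1-\delta'$ in Appendix~\ref{app:upper_binary_known} (after Lemma 8 of Faury et al.), replacing the fixed equal-length quantities by their variable-segment counterparts $g^U_k$, $\Lambda^U_k$ and $Z^U_k$. The event $\cE_{\mathrm U}$ should reduce to the noise concentration of Lemma~\ref{lemma:uq_noise_seq_martingale} plus a regularization term. No new union bound is needed, since that lemma already holds uniformly over $k$ with probability $1-\delta'$.

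First, I write the regularized logistic loss $\cL^U_k(\theta)$ summed over $k'\le k$ and $i\in[m_{k'}]$. Since it is strictly convex with minimizer $\hat\theta_k$, the first-order condition $\nabla\cL^U_k(\hat\theta_k)=0$ together with $\sig'=\sig(1-\sig)$ gives
\[
g^U_k(\hat\theta_k)=\sum_{k'=1}^{k}\sum_{i=1}^{m_{k'}} y_i^{k'}\phi^{\tau_i^{k'}}.
\]
Subtracting $g^U_k(\theta^*)$ and using $y_i^{k'}=\sig((\phi^{\tau_i^{k'}})^\top\theta^*)+\varepsilon_{k',i}$ yields
\[
g^U_k(\hat\theta_k)-g^U_k(\theta^*)=Z^U_k-\lambda\theta^*.
\]

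Next, by the triangle inequality in the $(\Lambda^U_k(\theta^*))^{-1}$-norm, the difference is at most $\|Z^U_k\|_{(\Lambda^U_k(\theta^*))^{-1}}+\lambda\|\theta^*\|_{(\Lambda^U_k(\theta^*))^{-1}}$. Because $\Lambda^U_k(\theta^*)\succeq\lambda I$, the second term is at most $\sqrt{\lambda}\|\theta^*\|_2\le r_{\max}\sqrt{\lambda d}$ by Assumption~\ref{assump:linear_mdp}. Lemma~\ref{lemma:uq_noise_seq_martingale} bounds the first term, on a probability-$(1-\delta')$ event simultaneously for all $k$, by
\[
\frac{\sqrt\lambda}{2}+\frac{d}{\sqrt\lambda}\log\Bigl[\frac{4}{\delta'}\Bigl(1+\frac{\sig'_{\max}\sum_{k'=1}^{k}\sum_{i=1}^{m_{k'}}(\ell_i^{k'})^2}{\lambda d}\Bigr)\Bigr].
\]
Summing the two bounds gives exactly $\omega_U(k)$ as defined in Eq.~\eqref{eq:omega-unequal}, so $\cE_{\mathrm U}$ holds on that event.

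The only delicate point is whether Lemma~\ref{lemma:uq_noise_seq_martingale} legitimately applies under adaptive segmentation. This requires the noises $\varepsilon_{k,i}$ to be conditionally centered with variance $v_{k,i}^2$ given $\mathcal H_k$, and the segment features to be predictable. Both hold because the partition is chosen before the feedback is revealed. Since that lemma is already stated, I take it as given; the rest is a direct algebraic substitution.
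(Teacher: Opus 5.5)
Your proposal is correct and follows the paper's proof essentially line for line. The first-order condition gives $g^U_k(\hat\theta_k)-g^U_k(\theta^*)=Z^U_k-\lambda\theta^*$; the triangle inequality, the bound $\lambda\|\theta^*\|_{(\Lambda^U_k(\theta^*))^{-1}}\le r_{\max}\sqrt{\lambda d}$, and Lemma~\ref{lemma:uq_noise_seq_martingale} then yield $\omega_U(k)$ for all $k$ on a single event of probability at least $1-\delta'$. Your remark on adaptive segmentation also matches the paper: the partition is fixed before feedback is revealed, so the noises are conditionally centered given $\mathcal H_k$, and the paper relies on this same justification when it invokes Faury et al.
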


\begin{proof}
	This proof is similar to that for Lemma 8 in 
    (Faury et al., \citeyear{faury2020improved}).
	
	Define
	\begin{align*}
		\cL^{U}_k(\theta) \!:=\! - \sbr{ \sum_{k'=1}^{k} \sum_{i=1}^{m_{k'}} \sbr{ y^{k'}_i \cdot \log\sbr{\sig((\phi^{\tau^{k'}_i})^\top \theta)} \!+\! (1 \!-\! y^{k'}_i) \cdot \log\sbr{1 \!-\! \sig((\phi^{\tau^{k'}_i})^\top \theta) } } \!-\! \frac{1}{2} \lambda \|\theta\|_2^2 } .
	\end{align*}
	Recall that $\hat{\theta}_k = \argmin_{\theta} \cL^{U}_k(\theta)$.
	Since $\nabla \cL^{U}_k(\hat{\theta}_k)=0$ and $\sig'(x)=\sig(x)(1-\sig(x))$, we have
	\begin{align*}
		\underbrace{\sum_{k'=1}^{k} \sum_{i=1}^{m_{k'}} \sig( (\phi^{\tau^{k'}_i})^\top \hat{\theta}_k ) \cdot \phi^{\tau^{k'}_i} + \lambda \hat{\theta}_k}_{g^U_k(\hat{\theta}_k)} = 	\sum_{k'=1}^{k} \sum_{i=1}^{m_{k'}} y^{k'}_i \cdot \phi^{\tau^{k'}_i} .
	\end{align*}
	
	Hence, we have
	\begin{align}
		g^U_k(\hat{\theta}_k) - g^U_k(\theta^*) &= \sum_{k'=1}^{k} \sum_{i=1}^{m_{k'}} y^{k'}_i \cdot \phi^{\tau^{k'}_i} - \sbr{ \sum_{k'=1}^{k} \sum_{i=1}^{m_{k'}} \sig( (\phi^{\tau^{k'}_i})^\top \theta^* ) \cdot \phi^{\tau^{k'}_i} + \lambda \theta^* }
		\nonumber\\
		&= \sum_{k'=1}^{k} \sum_{i=1}^{m_{k'}} \varepsilon_{k',i} \cdot \phi^{\tau^{k'}_i} -  \lambda \theta^* . \label{eq:uq_g_hat_theta-g_theta_star}
	\end{align}

	Then, using Lemma~\ref{lemma:uq_noise_seq_martingale}, we have that with probability at least $1-\delta'$, for any $k>0$,
	\begin{align*}
		\nbr{ g^U_k(\hat{\theta}_k) - g^U_k(\theta^*) }_{{\left(\Lambda^U_k(\theta^*)\right)}^{-1}} &\leq \nbr{ \sum_{k'=1}^{k} \sum_{i=1}^{m_{k'}} \varepsilon_{k',i} \cdot \phi^{\tau^{k'}_i} }_{{\left(\Lambda^U_k(\theta^*)\right)}^{-1}} + r_{\max} \sqrt{\lambda d} 
		\\
		&\leq \frac{\sqrt{\lambda}}{2} + \frac{d}{\sqrt{\lambda}} \log \sbr{ \frac{4}{\delta'}  \sbr{ 1 + \frac{\sig'_{\max}\sum_{k'=1}^{k}\sum_{i=1}^{m_{k'}}  \sbr{\ell^{k'}_{i}}^2}{\lambda d}    }  } + r_{\max} \sqrt{\lambda d}  
		\\
		&= \omega_U(k) .
	\end{align*}
    \end{proof}
	
\begin{lemma} \label{lemma:uq_transform_Lambda_Sigma}
	For any $k\geq 1$ and $\theta \in \Theta$, we have
	\begin{align*}
		%			\Gamma_k(\theta_1,\theta_2) &\succeq \sbr{1+\frac{2Hr_{\max}}{m}}^{-1} \Lambda_k(\theta_1) ,
		%			\\
		\Sigma^U_k &\preceq  \Lambda^U_k(\theta) .
		%			\\
		%			\Gamma_k(\theta_1,\theta_2) &\succeq \frac{1}{\alpha} \Sigma_k .
	\end{align*}
\end{lemma}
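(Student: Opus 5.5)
The plan is a termwise comparison of the two matrix sums, mirroring Lemma~\ref{lemma:lin_transform_Lambda_Sigma} but without the global factor $\alpha$. The reason no factor is needed is that $\Sigma^U_k$ already carries the segment-specific weights $\sig'(\ell_i^{k'} r_{\max})$ and the same regularizer $\lambda I$ as $\Lambda^U_k(\theta)$. Writing both out,
\[
\Lambda^U_k(\theta)-\Sigma^U_k=\sum_{k'=1}^{k}\sum_{i=1}^{m_{k'}}\Bigl(\sig'\bigl((\phi^{\tau_i^{k'}})^\top\theta\bigr)-\sig'(\ell_i^{k'}r_{\max})\Bigr)\,\phi^{\tau_i^{k'}}(\phi^{\tau_i^{k'}})^\top .
\]
Each rank-one matrix $\phi\phi^\top$ is positive semidefinite. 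It therefore suffices to show that every scalar coefficient in this sum is nonnegative, since a nonnegative combination of PSD matrices is PSD.

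\textbf{Step 1 (bound the logit).} For any segment, $\phi^{\tau_i^{k'}}=\sum_h\psi(s_h,a_h)$ is a sum of $\ell_i^{k'}$ features. I would use $|(\phi^{\tau_i^{k'}})^\top\theta|\le \ell_i^{k'} r_{\max}$ for $\theta\in\Theta$. This is exactly the fact already invoked in the discussion after Algorithm~\ref{alg:uneq-segbits-d}, and it rests on $|\psi(s,a)^\top\theta|\le r_{\max}$ for $\theta\in\Theta$, i.e.\ $\Theta$ is the parameter set whose induced rewards are bounded by $r_{\max}$. This implicit definition of $\Theta$ is the only delicate point; I would state it explicitly, in the same way as the corresponding bound $1/\alpha\le\sig'((\phi^{\tau^{\seg}})^\top\theta)$ used in Lemma~\ref{lemma:lin_transform_Lambda_Sigma}.

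\textbf{Step 2 (monotonicity of $\sig'$).} The function $\sig'(x)=1/(e^x+2+e^{-x})$ is even and nonincreasing in $|x|$. Combining this with Step~1 gives $\sig'((\phi^{\tau_i^{k'}})^\top\theta)\ge\sig'(\ell_i^{k'}r_{\max})$ for each segment.

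\textbf{Step 3 (sum).} Summing the nonnegative coefficient times $\phi\phi^\top$ over all $(k',i)$ yields $\Lambda^U_k(\theta)-\Sigma^U_k\succeq 0$, which is the claim.
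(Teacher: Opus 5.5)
Your proposal is correct and follows the paper's proof: the paper also uses $\sig'(\ell_i^{k'} r_{\max}) \le \sig'\bigl((\phi^{\tau_i^{k'}})^\top\theta\bigr)$ for $\theta\in\Theta$ and compares the two sums term by term, with the same regularizer $\lambda I$ on both sides. Your justification of the logit bound through the per-step reward bound $|\psi(s,a)^\top\theta|\le r_{\max}$ and the triangle inequality is the right one. It also makes the implicit definition of $\Theta$ more precise than the paper's brief appeal to $\|\phi^{\tau_i^k}\|_2\le \ell_i^k$.
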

\begin{proof}
	For any $\theta \in \Theta$, we have
	\begin{align*}
		\sig'(\ell_{\tau^{\seg}} r_{\max})\le \sig'((\phi^{\tau^{\seg}})^\top \theta) .
	\end{align*}
	
	Then, it holds that
	\begin{align*}
		\Sigma^U_k &= \sum_{k'=1}^{k} \sum_{i=1}^{m_{k'}} \sig'(\ell_i^{k'} r_{\max}) \phi^{\tau^{k'}_i} (\phi^{\tau^{k'}_i})^\top +\lambda I
		\\
		&\preceq \sum_{k'=1}^{k} \sum_{i=1}^{m_{k'}} \sig'( (\phi^{\tau^{k'}_i})^\top\theta ) \cdot \phi^{\tau^{k'}_i} (\phi^{\tau^{k'}_i})^\top + \lambda I 
        \\
        &= \Lambda^U_k(\theta).
	\end{align*}
\end{proof}
Recall that for any $\phi \in \R^{d}$ and $\theta_1,\theta_2 \in \Theta$, define
\begin{align*}
	b(\phi,\theta_1,\theta_2)&:=\int_{z=0}^{1} \mu'( (1-z) \cdot \phi^\top \theta_1 + z \cdot \phi^\top \theta_2) dz .
\end{align*}
For any $k>0$ and $\theta_1,\theta_2 \in \Theta$, define
\begin{align*}
	\Gamma^U_k(\theta_1,\theta_2)&:= \sum_{k'=1}^{k} \sum_{i=1}^{m_{k'}} b(\phi^{\tau_i^{k'}},\theta_1,\theta_2) \cdot \phi^{\tau^{k'}_i} (\phi^{\tau^{k'}_i})^\top + \lambda I .
\end{align*}
In the definitions of $b(\phi,\theta_1,\theta_2)$ and $\Gamma^U_k(\theta_1,\theta_2)$, $\theta_1$ and $\theta_2$ have the same roles and can be interchanged.

\begin{lemma} \label{lemma:uq_two_equalities}
	For any $\phi \in \R^{d}$ and $\theta_1,\theta_2 \in \Theta$, we have
	\begin{align*}
		\sig(\phi^\top \theta_1) - \sig(\phi^\top \theta_2) = b(\phi,\theta_2,\theta_1) \cdot \phi^\top (\theta_1-\theta_2) .
	\end{align*}
	
	In addition, for any $k>0$ and $\theta_1,\theta_2 \in \Theta$, we have
	\begin{align*}
		\nbr{\theta_1-\theta_2}_{\Gamma^U_{k}(\theta_2,\theta_1)} = \nbr{ g^U_k(\theta_1) - g^U_k(\theta_2) }_{{\left(\Gamma^U_{k}(\theta_2,\theta_1)\right)}^{-1}} .
	\end{align*}
\end{lemma}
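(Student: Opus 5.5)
The plan is to mirror the proof of Lemma~\ref{lemma:lin_two_equalities}, since the variable-segment setting changes only the index ranges ($m_{k'}$ in place of $m$) and not the structure of the argument.

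For the first claim, I will fix $\phi$, $\theta_1$, $\theta_2$ and define the scalar function $f(z):=\sig\bigl((1-z)\phi^\top\theta_2+z\,\phi^\top\theta_1\bigr)$ on $[0,1]$. Then $f(1)-f(0)=\sig(\phi^\top\theta_1)-\sig(\phi^\top\theta_2)$. By the fundamental theorem of calculus this equals $\int_0^1 f'(z)\,dz$. Since $f'(z)=\sig'(\cdot)\,\phi^\top(\theta_1-\theta_2)$, the factor $\phi^\top(\theta_1-\theta_2)$ pulls out of the integral. What remains is exactly $b(\phi,\theta_2,\theta_1)$, interpreting the $\mu'$ in the recalled definition as $\sig'$, consistent with the earlier definition of $b$.

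For the second claim, I will expand $g^U_k(\theta_1)-g^U_k(\theta_2)$ from Eq.~\eqref{eq:uq_def_g_k}. It is a double sum over $k'\le k$ and $i\in[m_{k'}]$ of $\bigl(\sig((\phi^{\tau_i^{k'}})^\top\theta_1)-\sig((\phi^{\tau_i^{k'}})^\top\theta_2)\bigr)\phi^{\tau_i^{k'}}$, plus $\lambda(\theta_1-\theta_2)$. Applying the first claim to each summand gives
\[
b(\phi^{\tau_i^{k'}},\theta_2,\theta_1)\,\phi^{\tau_i^{k'}}(\phi^{\tau_i^{k'}})^\top(\theta_1-\theta_2).
\]
Collecting terms yields $g^U_k(\theta_1)-g^U_k(\theta_2)=\Gamma^U_k(\theta_2,\theta_1)(\theta_1-\theta_2)$.

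Next I need $\Gamma^U_k(\theta_2,\theta_1)$ to be positive definite, hence invertible. This holds because $b\ge 0$ (as $\sig'>0$) and $\lambda>0$. Writing $\Gamma:=\Gamma^U_k(\theta_2,\theta_1)$ and $v:=\theta_1-\theta_2$, I then have
\[
\|\Gamma v\|^2_{\Gamma^{-1}}=v^\top\Gamma\Gamma^{-1}\Gamma v=v^\top\Gamma v=\|v\|^2_{\Gamma}.
\]
Taking square roots gives the stated identity.

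I expect no real obstacle here. The only points needing care are the argument order in $b$, where the symmetry noted just before the lemma makes the order immaterial, and the invertibility of $\Gamma^U_k$.
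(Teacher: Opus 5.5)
Your proposal is correct and follows essentially the same route as the paper: the integral-form mean-value theorem for the first identity, then summing it over all segments to get $g^U_k(\theta_1)-g^U_k(\theta_2)=\Gamma^U_k(\theta_2,\theta_1)(\theta_1-\theta_2)$, and finally the quadratic-form identity $v^\top\Gamma\Gamma^{-1}\Gamma v=v^\top\Gamma v$. Your explicit check that $\Gamma^U_k(\theta_2,\theta_1)\succeq\lambda I$ is positive definite, and hence invertible, is a small detail the paper leaves implicit.
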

\begin{proof}
	The first statement follows from the mean-value theorem.
	
	Then, using the first statement, we have that for any $k>0$,
	\begin{align*}
		g^U_k(\theta_1) - g^U_k(\theta_2) &= \sum_{k'=1}^{k} \sum_{i=1}^{m_{k'}} \sbr{\sig( (\phi^{\tau^{k'}_i})^\top \theta_1 ) - \sig( (\phi^{\tau^{k'}_i})^\top \theta_2 )} \cdot \phi^{\tau^{k'}_i} + \lambda \sbr{ \theta_1 - \theta_2 }
		\\
		&= \sum_{k'=1}^{k} \sum_{i=1}^{m_{k'}} b(\phi^{\tau^{k'}_i},\theta_2,\theta_1) \cdot \phi^{\tau^{k'}_i} (\phi^{\tau^{k'}_i})^\top (\theta_1-\theta_2) + \lambda \sbr{ \theta_1 - \theta_2 }
		\\
		&= \Gamma^U_{k}(\theta_2,\theta_1) \cdot (\theta_1-\theta_2) ,
	\end{align*}
	and thus
	\begin{align*}
		\nbr{\theta_1-\theta_2}_{\Gamma^U_{k}(\theta_2,\theta_1)} &= \sqrt{ \sbr{\theta_1-\theta_2}^\top \cdot \Gamma^U_{k}(\theta_2,\theta_1) \cdot \sbr{\theta_1-\theta_2}  }
		\\
		&= \sqrt{ \sbr{\theta_1-\theta_2}^\top \cdot \Gamma^U_{k}(\theta_2,\theta_1) \cdot {\left(\Gamma^U_{k}(\theta_2,\theta_1)\right)}^{-1} \cdot \Gamma^U_{k}(\theta_2,\theta_1) \cdot \sbr{\theta_1-\theta_2}  }
		\\
		&= \nbr{ g^U_k(\theta_1) - g^U_k(\theta_2) }_{{\left(\Gamma^U_{k}(\theta_2,\theta_1)\right)}^{-1}} ,
	\end{align*}
	which gives the second statement.
\end{proof}

Recall that for any $k>0$, $Z^U_k = \sum_{k'=1}^{k} \sum_{i=1}^{m_{k'}} \varepsilon_{k',i} \cdot \phi^{\tau^{k'}_i}  ,L(k)
    :=
    \max_{\substack{k'\leq k\\i\in[m_{k'}]}}
    \ell_i^{k'}$.
\begin{lemma} \label{lemma:uq_eff_bound_Z_k}
	For any $k>0$, we have
	\begin{align*}
		\Gamma^U_k(\theta^*,\hat{\theta}_k) &\succeq  \sbr{ 1 \!+\! L(k) r_{\max} \sqrt{d} \!+\! \frac{L(k)}{\sqrt{\lambda}} \nbr{ Z^U_k }_{{\left(\Gamma^U_k(\theta^*,\hat{\theta}_k)\right)}^{-1}} }^{\!\!-1} \Lambda^U_k(\theta^*) ,
		\\
		\nbr{Z^U_k}_{{\left(\Gamma^U_k(\theta^*,\hat{\theta}_k)\right)}^{-1}} &\leq \sqrt{ 1+  L(k)r_{\max} \sqrt{d} } \nbr{Z^U_k}_{{\left(\Lambda^U_k(\theta^*)\right)}^{-1}} + \frac{L(k)}{\sqrt{\lambda}}  \nbr{Z^U_k}_{{\left(\Lambda^U_k(\theta^*)\right)}^{-1}}^2 .
	\end{align*}
	
	Furthermore, assuming that event $\cE_{\mathrm{U}}$ holds, we have
	\begin{align*}
		\nbr{Z^U_k}_{{\left(\Gamma^U_k(\theta^*,\hat{\theta}_k)\right)}^{-1}} \leq \sqrt{ 1+   L(k)r_{\max} \sqrt{d} }  \cdot \omega_U(k) + \frac{ L(k)}{\sqrt{\lambda}} \cdot \omega_U(k)^2 .
	\end{align*}
\end{lemma}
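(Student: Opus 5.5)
The plan is to mirror the proof of Lemma~\ref{lemma:lin_eff_bound_Z_k}, replacing the uniform segment length $H/m$ with the running maximum $L(k)$. Throughout, we use Eq.~\eqref{eq:uq_g_hat_theta-g_theta_star}, namely $g^U_k(\hat{\theta}_k)-g^U_k(\theta^*)=Z^U_k-\lambda\theta^*$.

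\emph{Step 1: the matrix inequality.} For any segment feature $\phi=\phi^{\tau_i^{k'}}$ with $k'\le k$, we have $\|\phi\|_2\le \ell_i^{k'}\le L(k)$. The self-concordance bound (Lemma~\ref{lemma:self_concordance}) gives
\[
b(\phi,\theta^*,\hat{\theta}_k)\ge \bigl(1+|\phi^\top(\theta^*-\hat{\theta}_k)|\bigr)^{-1}\sig'(\phi^\top\theta^*).
\]
By Lemma~\ref{lemma:uq_two_equalities}, $\theta^*-\hat{\theta}_k=(\Gamma^U_k(\theta^*,\hat{\theta}_k))^{-1}(g^U_k(\theta^*)-g^U_k(\hat{\theta}_k))$. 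Cauchy--Schwarz in the $(\Gamma^U_k)^{-1}$ geometry then bounds $|\phi^\top(\theta^*-\hat{\theta}_k)|$ by $\|\phi\|_{(\Gamma^U_k)^{-1}}\,\|Z^U_k-\lambda\theta^*\|_{(\Gamma^U_k)^{-1}}$. Since $\Gamma^U_k\succeq\lambda I$, we get $\|\phi\|_{(\Gamma^U_k)^{-1}}\le L(k)/\sqrt{\lambda}$. Since also $\|\lambda\theta^*\|_{(\Gamma^U_k)^{-1}}\le\sqrt{\lambda}\,r_{\max}\sqrt d$, every weight satisfies
\[
b(\phi,\theta^*,\hat{\theta}_k)\ge\Bigl(1+L(k)r_{\max}\sqrt d+\tfrac{L(k)}{\sqrt\lambda}\|Z^U_k\|_{(\Gamma^U_k)^{-1}}\Bigr)^{-1}\sig'(\phi^\top\theta^*).
\]
The denominator $D$ is common to all segments. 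Summing over $k'\le k$ and $i\in[m_{k'}]$ and adding $\lambda I\succeq D^{-1}\lambda I$ (because $D\ge1$) yields $\Gamma^U_k\succeq D^{-1}\Lambda^U_k(\theta^*)$, which is the first claim.

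\emph{Step 2: the quadratic inequality.} Inverting the matrix inequality gives
\[
\|Z^U_k\|^2_{(\Gamma^U_k)^{-1}}\le D\,\|Z^U_k\|^2_{(\Lambda^U_k(\theta^*))^{-1}}.
\]
Write $x=\|Z^U_k\|_{(\Gamma^U_k)^{-1}}$, $a=\|Z^U_k\|_{(\Lambda^U_k(\theta^*))^{-1}}$ and $c=1+L(k)r_{\max}\sqrt d$. This becomes $x^2-\tfrac{L(k)}{\sqrt\lambda}a^2x-ca^2\le0$. Taking the positive root and using $\sqrt{u+v}\le\sqrt u+\sqrt v$ gives $x\le\sqrt{c}\,a+\tfrac{L(k)}{\sqrt\lambda}a^2$, which is the second claim.

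\emph{Step 3: the bound on $\cE_{\mathrm{U}}$.} On $\cE_{\mathrm{U}}$ we plug in $a\le\omega_U(k)$. One subtlety must be handled here. $\cE_{\mathrm{U}}$ controls $\|Z^U_k-\lambda\theta^*\|$, whereas $a$ concerns $\|Z^U_k\|$ alone. I would resolve this the way the equal-length analysis implicitly does: bound $\|Z^U_k\|_{(\Lambda^U_k(\theta^*))^{-1}}$ directly by Lemma~\ref{lemma:uq_noise_seq_martingale}, whose right-hand side is at most $\omega_U(k)$ since $\omega_U(k)$ only adds the nonnegative term $r_{\max}\sqrt{\lambda d}$. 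The plan is thus to state the last inequality on the event of Lemma~\ref{lemma:uq_noise_seq_martingale}, which $\cE_{\mathrm{U}}$ is constructed to contain in the paper's convention.

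\emph{Main obstacle.} Step 1 is the hardest part. Because segment lengths vary, the uniform factor has to use $L(k)$ rather than the individual $\ell_i^{k'}$. One must also check that $L(k)$ is the maximum over all segments up to episode $k$, so that a single scalar $D$ factors out of the whole sum. The remaining steps are routine algebra.
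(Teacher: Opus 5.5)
Your proposal is correct and follows the paper's proof essentially step for step: the self-concordance lower bound on $b(\phi,\theta^*,\hat{\theta}_k)$ with the common length bound $L(k)$, Cauchy--Schwarz in the $\bigl(\Gamma^U_k(\theta^*,\hat{\theta}_k)\bigr)^{-1}$ geometry, and the same quadratic inequality. You are also slightly more careful than the paper, which writes an equality where $\lambda I\succeq D^{-1}\lambda I$ is needed. Your Step~3 caveat is accurate and is glossed over in the paper: it substitutes $\nbr{Z^U_k}_{(\Lambda^U_k(\theta^*))^{-1}}\le\omega_U(k)$, although $\cE_{\mathrm{U}}$ only controls $\nbr{Z^U_k-\lambda\theta^*}_{(\Lambda^U_k(\theta^*))^{-1}}$. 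Proving the last claim on the event of Lemma~\ref{lemma:uq_noise_seq_martingale}, which implies $\cE_{\mathrm{U}}$ and carries the same $1-\delta'$ guarantee, is the right repair; on $\cE_{\mathrm{U}}$ alone you would only get $\omega_U(k)+r_{\max}\sqrt{\lambda d}$ in place of $\omega_U(k)$.
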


\begin{proof}
	This proof follows the analysis of Proposition 6 and Corollary 5 in ((Russac et al., \citeyear{russac2021self}).
	
	From Eq.~\eqref{eq:uq_g_hat_theta-g_theta_star}, we have that for any $k>0$,
	\begin{align*}
		g^U_k(\hat{\theta}_k) - g^U_k(\theta^*) = Z^U_k -  \lambda \theta^* .
	\end{align*}
	
	Using Lemma~\ref{lemma:self_concordance}, we have that for any $\phi \in \R^{d}$ such that $\|\phi\|_2 \leq L_{\phi}$, 
	\begin{align*}
		b(\phi,\theta^*,\hat{\theta}_k) &\geq \sbr{ 1+\abr{\phi^\top (\theta^*-\hat{\theta}_k)} }^{-1} \sig'(\phi^\top \theta^*)
		\\
		&= \sbr{ 1+\abr{\phi^\top {\left(\Gamma^U_k(\theta^*,\hat{\theta}_k)\right)}^{-1}\cdot (g^U_k(\theta^*)-g^U_k(\hat{\theta}_k))} }^{-1} \sig'(\phi^\top \theta^*)
		\\
		&\geq \sbr{ 1+\nbr{\phi}_{ {\left(\Gamma^U_k(\theta^*,\hat{\theta}_k)\right)}^{-1}}  \nbr{g^U_k(\theta^*)-g^U_k(\hat{\theta}_k)}_{{\left(\Gamma^U_k(\theta^*,\hat{\theta}_k)\right)}^{-1}} }^{-1} \sig'(\phi^\top \theta^*)
		\\
		&\geq \sbr{ 1+\frac{L_{\phi}}{\sqrt{\lambda}} \nbr{g^U_k(\theta^*)-g^U_k(\hat{\theta}_k)}_{{\left(\Gamma^U_k(\theta^*,\hat{\theta}_k)\right)}^{-1}} }^{-1} \sig'(\phi^\top \theta^*)
		\\
		&= \sbr{ 1+\frac{L_{\phi}}{\sqrt{\lambda}} \nbr{ Z^U_k-\lambda \theta^* }_{{\left(\Gamma^U_k(\theta^*,\hat{\theta}_k)\right)}^{-1}} }^{-1} \sig'(\phi^\top \theta^*)
		\\
		&\geq \sbr{ 1+ L_{\phi} r_{\max} \sqrt{d}   + \frac{L_{\phi}}{\sqrt{\lambda}} \nbr{ Z^U_k }_{{\left(\Gamma^U_k(\theta^*,\hat{\theta}_k)\right)}^{-1}} }^{-1} \sig'(\phi^\top \theta^*) .
	\end{align*}
	
	Using the above equation with $\phi=\phi^{\tau^{k'}_i}$ and $L_{\phi} = L(k)$, we have
	\begin{align*}
		\Gamma^U_k(\theta^*,\hat{\theta}_k)&:= \sum_{k'=1}^{k} \sum_{i=1}^{m_{k'}} b(\phi^{\tau^{k'}_i},\theta^*,\hat{\theta}_k) \cdot \phi^{\tau^{k'}_i} (\phi^{\tau^{k'}_i})^\top + \lambda I 
		\\
		&\succeq \!\!\sum_{k'=1}^{k} \!\sum_{i=1}^{m_{k'}} \sbr{ 1 \!+\!  L(k) r_{\max} \sqrt{d} \!+\! \frac{ L(k)}{\sqrt{\lambda}} \nbr{ Z^U_k }_{{\left(\Gamma^U_k(\theta^*,\hat{\theta}_k)\right)}^{-1}} }^{\!\!-1} \!\!\!\! \sig'(\phi^\top \theta^*) \cdot \phi^{\tau^{k'}_i} (\phi^{\tau^{k'}_i})^{\!\top} \!\!+\! \lambda I 
		\\
		&=  \sbr{ 1 \!+\! L(k) r_{\max} \sqrt{d} \!+\! \frac{L(k)}{\sqrt{\lambda}} \nbr{ Z^U_k }_{{\left(\Gamma^U_k(\theta^*,\hat{\theta}_k)\right)}^{-1}} }^{\!\!-1} \Lambda^U_k(\theta^*) .
	\end{align*}
	
	This implies
	\begin{align*}
		\nbr{Z^U_k}_{{\left(\Gamma^U_k(\theta^*,\hat{\theta}_k)\right)}^{-1}}^2 \leq  \sbr{ 1 \!+\! L(k) r_{\max} \sqrt{d} \!+\! \frac{L(k)}{\sqrt{\lambda}} \nbr{ Z^U_k }_{{\left(\Gamma^U_k(\theta^*,\hat{\theta}_k)\right)}^{-1}} }\nbr{Z^U_k}_{{\left(\Lambda^U_k(\theta^*)\right)}^{-1}}^2 ,
	\end{align*}
	which is equivalent to
	\begin{align*}	\nbr{Z^U_k}_{{\left(\Gamma^U_k(\theta^*,\hat{\theta}_k)\right)}^{-1}}^2 - \frac{L(k)}{\sqrt{\lambda}}  \nbr{Z^U_k}_{{\left(\Lambda^U_k(\theta^*)\right)}^{-1}}^2 \nbr{ Z^U_k }_{{\left(\Gamma^U_k(\theta^*,\hat{\theta}_k)\right)}^{-1}} - \sbr{ 1+  L(k) r_{\max} \sqrt{d} } \nbr{Z^U_k}_{{\left(\Lambda^U_k(\theta^*)\right)}^{-1}}^2 \leq 0 .
	\end{align*}
	
	By analysis of quadratic functions, we have
	\begin{align*}
		\nbr{Z^U_k}_{{\left(\Gamma^U_k(\theta^*,\hat{\theta}_k)\right)}^{-1}} &\leq \sqrt{ 1+  L(k)r_{\max} \sqrt{d} } \nbr{Z^U_k}_{{\left(\Lambda^U_k(\theta^*)\right)}^{-1}} + \frac{L(k)}{\sqrt{\lambda}}  \nbr{Z^U_k}_{{\left(\Lambda^U_k(\theta^*)\right)}^{-1}}^2 
		\\
		&\leq\sqrt{ 1+  L(k)r_{\max} \sqrt{d} }  \cdot \omega_U(k) + \frac{L(k)}{\sqrt{\lambda}} \cdot \omega_U(k)^2  .
	\end{align*}

\end{proof}

\begin{lemma}[Concentration of $\phi^\top \hat{\theta}_k$ under Unequal Segment Binary Feedback] \label{lemma:uq_confence_interval_proj_free}
	Assume that event $\cE_{\mathrm{U}}$ holds. Then, for any $k>0$ and $\phi \in \R^{d}$,
	\begin{align*}
		|\phi^\top \theta^* - \phi^\top \hat{\theta}_k| \leq \nu_U(k) \nbr{\phi}_{{\left(\Sigma^U_k\right)}^{-1}} .
	\end{align*}
\end{lemma}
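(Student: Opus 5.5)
The argument mirrors the proof of Lemma~\ref{lemma:lin_confence_interval_proj_free}. Two changes are needed. The uniform segment bound $H/m$ is replaced by $L(k)$, and the factor $\alpha$ is no longer required, because the curvature weights are built into $\Sigma^U_k$.

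Fix $k>0$ and $\phi\in\R^d$, and work on $\cE_{\mathrm U}$. By Cauchy--Schwarz in the $\Gamma^U_k(\theta^*,\hat\theta_k)$ geometry,
\[
|\phi^\top(\theta^*-\hat\theta_k)|\le \nbr{\phi}_{(\Gamma^U_k(\theta^*,\hat\theta_k))^{-1}}\nbr{\theta^*-\hat\theta_k}_{\Gamma^U_k(\theta^*,\hat\theta_k)} .
\]
I will bound the two factors separately.

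\emph{First factor.} Write $D_k:=1+L(k)r_{\max}\sqrt d+\frac{L(k)}{\sqrt\lambda}\nbr{Z^U_k}_{(\Gamma^U_k(\theta^*,\hat\theta_k))^{-1}}$. The first matrix inequality of Lemma~\ref{lemma:uq_eff_bound_Z_k} gives $\nbr{\phi}_{(\Gamma^U_k)^{-1}}\le \sqrt{D_k}\,\nbr{\phi}_{(\Lambda^U_k(\theta^*))^{-1}}$.

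\emph{Second factor.} The second identity of Lemma~\ref{lemma:uq_two_equalities} together with Eq.~\eqref{eq:uq_g_hat_theta-g_theta_star} rewrites it as $\nbr{Z^U_k-\lambda\theta^*}_{(\Gamma^U_k)^{-1}}$. By the triangle inequality and $\Gamma^U_k\succeq\lambda I$, this is at most
\[
\nbr{Z^U_k}_{(\Gamma^U_k)^{-1}}+\sqrt\lambda\,r_{\max}\sqrt d=\frac{\sqrt\lambda}{L(k)}\sbr{\frac{L(k)}{\sqrt\lambda}\nbr{Z^U_k}_{(\Gamma^U_k)^{-1}}+L(k)r_{\max}\sqrt d}\le\frac{\sqrt\lambda}{L(k)}D_k .
\]

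\emph{Combining.} Multiplying the two bounds gives $|\phi^\top(\theta^*-\hat\theta_k)|\le\frac{\sqrt\lambda}{L(k)}D_k^{3/2}\nbr{\phi}_{(\Lambda^U_k(\theta^*))^{-1}}$. Two substitutions remain. First, Lemma~\ref{lemma:uq_transform_Lambda_Sigma} gives $\Sigma^U_k\preceq\Lambda^U_k(\theta^*)$, so $\nbr{\phi}_{(\Lambda^U_k(\theta^*))^{-1}}\le\nbr{\phi}_{(\Sigma^U_k)^{-1}}$. This is where the unit factor replaces the $\sqrt\alpha$ of the equal-length case. Second, the last bound of Lemma~\ref{lemma:uq_eff_bound_Z_k}, valid on $\cE_{\mathrm U}$, gives
\[
\nbr{Z^U_k}_{(\Gamma^U_k)^{-1}}\le\sqrt{1+L(k)r_{\max}\sqrt d}\,\omega_U(k)+\frac{L(k)}{\sqrt\lambda}\omega_U(k)^2 .
\]
Inserting this into $D_k$, and using that $x\mapsto x^{3/2}$ is increasing, yields exactly the bracket in the definition~\eqref{eq:nu-unequal} of $\nu_U(k)$. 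This proves the claim.

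\emph{Main obstacle.} The only delicate point is the first factor. The self-concordance step inside Lemma~\ref{lemma:uq_eff_bound_Z_k} needs a \emph{single} bound $L_\phi$ on $\|\phi^{\tau_i^{k'}}\|_2$ over all segments observed up to episode $k$. Taking $L(k)=\max_{k'\le k,i}\ell_i^{k'}$ supplies this, since $\|\phi^{\tau_i^{k'}}\|_2\le\ell_i^{k'}\le L(k)$. The same $L(k)$ must also be used for the test vector $\phi$ when bounding $\nbr{\phi}_{(\Gamma^U_k)^{-1}}$ through $D_k$. That lemma is already stated with $L(k)$, so invoking it suffices, and the remaining work is bookkeeping of the constants.
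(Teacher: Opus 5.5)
Your proposal is correct and follows the paper's proof essentially line by line. Both arguments apply Cauchy--Schwarz in the $\Gamma^U_k(\theta^*,\hat\theta_k)$ geometry, then use Lemmas~\ref{lemma:uq_two_equalities} and~\ref{lemma:uq_eff_bound_Z_k} to reach $\frac{\sqrt{\lambda}}{L(k)}D_k^{3/2}\nbr{\phi}_{(\Lambda^U_k(\theta^*))^{-1}}$, and finish by substituting the $\omega_U(k)$ bound on $\nbr{Z^U_k}_{(\Gamma^U_k)^{-1}}$ and $\Sigma^U_k\preceq\Lambda^U_k(\theta^*)$ from Lemma~\ref{lemma:uq_transform_Lambda_Sigma}. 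As you observe, that last substitution is exactly why no $\sqrt{\alpha}$ factor appears.
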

\begin{proof}
	We have
	\begin{align*}
		&\quad\ |\phi^\top \theta^* - \phi^\top \hat{\theta}_k| 
		\\
		&\le 	\nbr{\phi}_{{\left(\Gamma^U_k(\theta^*,\hat{\theta}_k)\right)}^{-1}} \nbr{\theta^* - \hat{\theta}_k}_{\Gamma^U_k(\theta^*,\hat{\theta}_k)}
		\nonumber\\
		&\overset{\textup{(a)}}{\leq} \sqrt{ 1 \!+\! L(k) r_{\max} \sqrt{d} \!+\! \frac{L(k)}{\sqrt{\lambda}} \nbr{ Z^U_k }_{{\left(\Gamma^U_k(\theta^*,\hat{\theta}_k)\right)}^{-1}} } \nbr{\phi}_{{\left(\Lambda^U_k(\theta^*)\right)}^{-1}} \nbr{g^U_k(\theta^*) - g^U_k(\hat{\theta}_k)}_{{\left(\Gamma^U_k(\theta^*,\hat{\theta}_k)\right)}^{-1}}
		\nonumber\\
		&= \sqrt{ 1 \!+\! L(k) r_{\max} \sqrt{d} \!+\! \frac{L(k)}{\sqrt{\lambda}} \nbr{ Z^U_k }_{{\left(\Gamma^U_k(\theta^*,\hat{\theta}_k)\right)}^{-1}} } \nbr{\phi}_{{\left(\Lambda^U_k(\theta^*)\right)}^{-1}} \nbr{ Z^U_k-\lambda \theta^* }_{{\left(\Gamma^U_k(\theta^*,\hat{\theta}_k)\right)}^{-1}}
		\nonumber\\
		&\leq \sqrt{ 1 \!+\! L(k) r_{\max} \sqrt{d} \!+\! \frac{L(k)}{\sqrt{\lambda}} \nbr{ Z^U_k }_{{\left(\Gamma^U_k(\theta^*,\hat{\theta}_k)\right)}^{-1}} } \nbr{\phi}_{{\left(\Lambda^U_k(\theta^*)\right)}^{-1}} \sbr{\nbr{ Z^U_k }_{{\left(\Gamma^U_k(\theta^*,\hat{\theta}_k)\right)}^{-1}} \!+\! r_{\max} \sqrt{\lambda d}  }
		\nonumber\\
		&= \frac{ \sqrt{\lambda}}{ L(k)} \sqrt{ 1+   L(k) r_{\max} \sqrt{d}  + \frac{ L(k)}{\sqrt{\lambda}} \nbr{ Z^U_k }_{{\left(\Gamma^U_k(\theta^*,\hat{\theta}_k)\right)}^{-1}} } \nbr{\phi}_{{\left(\Lambda^U_k(\theta^*)\right)}^{-1}} \cdot 
		\\
		&\quad \sbr{ \frac{ L(k)}{ \sqrt{\lambda}} \nbr{ Z^U_k }_{{\left(\Gamma^U_k(\theta^*,\hat{\theta}_k)\right)}^{-1}} +  L(k)r_{\max}\sqrt{d} } 
		\nonumber\\
		&\leq  \frac{ \sqrt{\lambda}}{ L(k)} \sbr{ 1+   L(k) r_{\max} \sqrt{d}  + \frac{ L(k)}{\sqrt{\lambda}} \nbr{ Z^U_k }_{{\left(\Gamma^U_k(\theta^*,\hat{\theta}_k)\right)}^{-1}} }^{\frac{3}{2}} \nbr{\phi}_{{\left(\Lambda^U_k(\theta^*)\right)}^{-1}}  
		\nonumber\\
		&\overset{\textup{(b)}}{\leq} \!\frac{ \sqrt{\lambda}}{ L(k)} \!\! \sbr{ 1+   L(k) r_{\max} \sqrt{d}   \!+\! \frac{ L(k)}{\sqrt{\lambda}} \sbr{ \sqrt{ 1+   L(k)r_{\max} \sqrt{d} }  \cdot \omega_U(k) + \frac{ L(k)}{\sqrt{\lambda}} \cdot \omega_U(k)^2  } }^{\!\!\frac{3}{2}} \!\!\nbr{\phi}_{{\left(\Lambda^U_k(\theta^*)\right)}^{-1}}  
		\\
		&\overset{\textup{(c)}}{\leq} \nu_U(k) \nbr{\phi}_{{\left(\Sigma^U_k\right)}^{-1}} ,
	\end{align*}
	where inequality (a) is due to Lemmas~\ref{lemma:uq_two_equalities} and \ref{lemma:uq_eff_bound_Z_k}, inequality (b) follows from  Lemmas~\ref{lemma:uq_transform_Lambda_Sigma}, and  inequality (c) follows from Lemma~\ref{lemma:uq_eff_bound_Z_k}.
\end{proof}

Recall that $F_k$ is the filtration containing all information accumulated
through the end of episode $k$. 

\begin{lemma}\label{lemma:uq_Gaussian_anti_concentration}
	Assume that event $\cE_{\mathrm{U}}$ holds. Then, for any $k>0$ and $F_{k-1}$-measurable random variable $X \in \R^{d}$, we have
	\begin{align*}
		\Pr\mbr{ X^\top \widetilde{\theta}_k > X^\top \theta^* \ |\ F_{k-1} } \geq \frac{1}{2\sqrt{2\pi e}} .
	\end{align*}
\end{lemma}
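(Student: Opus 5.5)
The plan is to mirror the argument of Lemma~\ref{lemma:lin_Gaussian_anti_concentration}, replacing the equal-length confidence interval with its variable-segment analogue. In $\uneqsegbitsd$, conditional on $F_{k-1}$, the estimate $\hat{\theta}_{k-1}$, the matrix $\Sigma^U_{k-1}$ and the radius $\nu_U(k-1)$ are all fixed. The radius is $F_{k-1}$-measurable because $L(k-1)$ and $\sum_{k'\le k-1}\sum_i(\ell_i^{k'})^2$ depend only on partitions chosen in past episodes. The fresh noise $\xi_k\sim\mathcal N(0,\nu_U(k-1)^2(\Sigma^U_{k-1})^{-1})$ is independent of $F_{k-1}$. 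Hence, for any $F_{k-1}$-measurable $X$, the scalar $X^\top\xi_k=X^\top\widetilde\theta_k-X^\top\hat\theta_{k-1}$ is conditionally Gaussian with mean zero and standard deviation $\sigma_X:=\nu_U(k-1)\|X\|_{(\Sigma^U_{k-1})^{-1}}$.

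First, on $\cE_{\mathrm U}$, I would apply Lemma~\ref{lemma:uq_confence_interval_proj_free} with index $k-1$ and $\phi=X$. This gives $X^\top\theta^*-X^\top\hat\theta_{k-1}\le\sigma_X$. For $k=1$ the estimator is just the regularized minimizer with no data; the same self-concordance chain still applies, with $\Sigma^U_0=\lambda I$ and the convention $L(0)$ handled as in the definition of $\nu_U$.

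Second, assuming $\sigma_X>0$, I would normalize and write
\[
\Pr\bigl[X^\top\widetilde\theta_k>X^\top\theta^*\mid F_{k-1}\bigr]
\ge\Pr\bigl[X^\top\xi_k/\sigma_X>1\mid F_{k-1}\bigr]=\Pr[Z>1],
\]
where $Z\sim\mathcal N(0,1)$. The standard Gaussian tail lower bound $\Pr[Z>z]\ge\frac{1}{\sqrt{2\pi}}\frac{z}{1+z^2}e^{-z^2/2}$ at $z=1$ then yields $\frac{1}{2\sqrt{2\pi e}}$.

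The main obstacle is the degenerate case $\sigma_X=0$, i.e.\ $X=0$, because the strict inequality then fails. The statement should be read for $X\neq0$; since $\Sigma^U_{k-1}\succeq\lambda I$ is positive definite, $\sigma_X=0$ forces $X=0$, so excluding it suffices. Alternatively one can switch to the non-strict version, as in Lemma~\ref{lemma:lin_Gaussian_anti_concentration}, which is all that the downstream regret decomposition uses with $X=\phi^{\pi^*}$. Apart from this point and the measurability of $\nu_U(k-1)$ noted above, the argument is routine.
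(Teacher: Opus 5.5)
Your proof is correct and follows the paper's argument: apply Lemma~\ref{lemma:uq_confence_interval_proj_free} at index $k-1$ to bound the normalized gap by $1$, note that $X^\top\xi_k$ is conditionally $\mathcal N\bigl(0,\nu_U(k-1)^2\|X\|^2_{(\Sigma^U_{k-1})^{-1}}\bigr)$, and use the Gaussian tail bound at $z=1$. Your remark on $X=0$ catches something the paper's proof skips: it divides by $\|X\|_{(\Sigma^U_{k-1})^{-1}}$ without comment, and the strict inequality is false at $X=0$. Restricting to $X\neq0$, or using the non-strict version that the regret proof actually invokes, is the right fix.
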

\begin{proof}
	This proof is originated from the analysis of Lemma 11 in (Efroni et al., \citeyear{efroni2021reinforcement}). 
	
	Using Lemma~\ref{lemma:uq_confence_interval_proj_free}, we have that for any $k>0$,
	\begin{align*}
		|X^\top \theta^* - X^\top \hat{\theta}_{k-1}| \leq  \nu_U(k-1) \nbr{X}_{{\left(\Sigma^U_{k-1}\right)}^{-1}}  .
	\end{align*}
	
	It holds that
	\begin{align*}
		&\quad\ \Pr\mbr{ X^\top \widetilde{\theta}_k > X^\top \theta^* \ |\ F_{k-1} }
		\\
		&= \Pr\mbr{ \frac{X^\top \widetilde{\theta}_k - X^\top \hat{\theta}_{k-1} }{ \nu_U(k-1) \|X\|_{{\left(\Sigma^U_{k-1}\right)}^{-1}} } > \frac{X^\top \theta^* - X^\top \hat{\theta}_{k-1} }{  \nu_U(k-1) \|X\|_{{\left(\Sigma^U_{k-1}\right)}^{-1}} } \ |\ F_{k-1} } .
	\end{align*}
	
	Here given $F_{k-1}$, $X^\top \tilde{\theta}_k - X^\top \hat{\theta}_{k-1} = X^\top \xi_k$ is a Gaussian random variable with mean $0$ and standard deviation $ \nu_U(k-1) \|X\|_{{\left(\Sigma^U_{k-1}\right)}^{-1}}$.
	
	Since when event $\cE_{\mathrm{U}}$ holds,
	\begin{align*}
		\frac{X^\top \theta^* - X^\top \hat{\theta}_{k-1}}{ \nu_U(k-1) \|X\|_{{\left(\Sigma^U_{k-1}\right)}^{-1}} }  \leq \frac{\nu_U(k-1) \|X\|_{{\left(\Sigma^U_{k-1}\right)}^{-1}} }{ \nu_U(k-1) \|X\|_{{\left(\Sigma^U_{k-1}\right)}^{-1}} } =1 ,
	\end{align*}
	we have
	\begin{align*}
		\Pr\mbr{ X^\top \tilde{\theta}_k > X^\top \theta^* \ |\ F_{k-1} }
		&\geq \Pr\mbr{ \frac{X^\top \tilde{\theta}_k - X^\top \hat{\theta}_{k-1}}{ \nu_U(k-1) \|X\|_{{\left(\Sigma^U_{k-1}\right)}^{-1}} } > 1 \ |\ F_{k-1} }
		\\
		&= \Pr\mbr{ \frac{ X^\top \xi_k }{\nu_U(k-1) \|X\|_{{\left(\Sigma^U_{k-1}\right)}^{-1}} } > 1 \ |\ F_{k-1} }
		\\
		&\overset{\textup{(a)}}{\geq} \frac{1}{2\sqrt{2\pi e}} ,
	\end{align*}
	where inequality (a) comes from that if $Z \sim \cN(0,1)$,  $\Pr[Z>z]\geq \frac{1}{\sqrt{2\pi}} \cdot \frac{z}{1+z^2} e^{-\frac{z^2}{2}}$((Borjesson
and Sundberg,~\citeyear{borjesson1979simple}).
\end{proof}

For any $k>0$ and $\delta_k \in (0,1)$, define event
\begin{align*}
	\cM^\mathrm{U}_k(\delta_k):=\lbr{ \forall \phi \in \R^{d} :\  |\phi^\top \xi_k| \leq \nu_U(k-1) \sbr{ \sqrt{d} + 2 \sqrt{\log\sbr{\frac{1}{\delta_k}}} } \nbr{\phi}_{{\left(\Sigma^U_{k-1}\right)}^{-1}} } .
\end{align*}

\begin{lemma} \label{lemma:uq_X_top_xi_k}
	For any $k>0$ and $\delta_k \in (0,1)$, we have
	\begin{align*}
		\Pr\mbr{ \cM^\mathrm{U}_k(\delta_k) \ |\ F_{k-1} } \geq 1 - \delta_k .
	\end{align*}
	
	In addition, for a random variable $X \in \R^{d}$ such that $\|X\|_{{\left(\Sigma^U_{k-1}\right)}^{-1}} \leq L_{X}$, we have
	\begin{align*}
		\ex\mbr{|X^\top \xi_k| \ | F_{k-1}} 
		&\leq \nu_U(k-1) \sbr{ \sqrt{d} + 2 \sqrt{\log\sbr{\frac{1}{\delta_k}}} } \ex\mbr{ \nbr{X}_{{\left(\Sigma^U_{k-1}\right)}^{-1}} \ | F_{k-1}} 
		\\
		&\quad + \nu_U(k-1) \cdot L_{X} \sqrt{ d \delta_k } .
	\end{align*}
\end{lemma}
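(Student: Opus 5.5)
This lemma is the variable-segment counterpart of Lemma~\ref{lemma:lin_X_top_xi_k}, and I would reuse that proof with $\sqrt{\alpha}\,\nu(k-1)\Sigma_{k-1}^{-1}$ replaced by $\nu_U(k-1)(\Sigma^U_{k-1})^{-1}$. Conditional on $F_{k-1}$, the matrix $\Sigma^U_{k-1}$ and the scalar $\nu_U(k-1)$ are fixed. This holds because the partitions $\{m_{k'},(t_i^{k'})_i\}_{k'\le k-1}$ and all segment lengths $\ell_i^{k'}$ (hence $L(k-1)$) are determined by the end of episode $k-1$. Therefore
\[
\eta_k:=\frac{1}{\nu_U(k-1)}\bigl(\Sigma^U_{k-1}\bigr)^{1/2}\xi_k\sim\mathcal N(0,I_d)
\]
given $F_{k-1}$. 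For every $\phi\in\R^d$, Cauchy--Schwarz yields
\[
|\phi^\top\xi_k|\le \nu_U(k-1)\,\|\phi\|_{(\Sigma^U_{k-1})^{-1}}\,\|\eta_k\|_2 .
\]
This bound is uniform in $\phi$. The event $\cM^{\mathrm U}_k(\delta_k)$ is therefore implied by $\|\eta_k\|_2\le\sqrt d+2\sqrt{\log(1/\delta_k)}$.

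For the first statement, I would apply Lemma~1 of (Laurent and Massart,~\citeyear{laurent2000adaptive}) to the $\chi^2_d$ variable $\|\eta_k\|_2^2$. With conditional probability at least $1-\delta_k$,
\[
\|\eta_k\|_2^2\le d+2\sqrt{d\log(1/\delta_k)}+2\log(1/\delta_k)\le\bigl(\sqrt d+2\sqrt{\log(1/\delta_k)}\bigr)^2 ,
\]
which is the same completion of squares as before.

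For the second statement, I would split $\ex[|X^\top\xi_k|\mid F_{k-1}]$ on $\cM^{\mathrm U}_k(\delta_k)$ and its complement. On the event, the pointwise bound gives the first term after taking conditional expectation of $\|X\|_{(\Sigma^U_{k-1})^{-1}}$. On the complement, I would apply Cauchy--Schwarz in expectation to get
\[
\sqrt{\Pr[(\cM^{\mathrm U}_k)^c\mid F_{k-1}]\cdot\ex[|X^\top\xi_k|^2\mid F_{k-1}]}\le\sqrt{\delta_k\,\nu_U(k-1)^2L_X^2\,\ex\|\eta_k\|_2^2}=\nu_U(k-1)L_X\sqrt{d\delta_k}.
\]
This step uses $|X^\top\xi_k|\le\nu_U(k-1)L_X\|\eta_k\|_2$ and $\ex\|\eta_k\|_2^2=d$.

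The only point requiring care is measurability. $X$ may be random and correlated with $\xi_k$, for example $X=\phi^{\pi^k}$, which depends on $\xi_k$. The argument survives because the deterministic bound $\|X\|_{(\Sigma^U_{k-1})^{-1}}\le L_X$ holds regardless of that dependence. The uniform-in-$\phi$ form of $\cM^{\mathrm U}_k$ is also what handles this dependence, so no independence between $X$ and $\xi_k$ is needed. I expect no real obstacle beyond confirming that the variable segmentation does not break the conditional Gaussianity. It does not, because the segmentation is chosen before episode $k$'s sampling is used.
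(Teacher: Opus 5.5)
Your proposal is correct and follows essentially the same route as the paper's proof. Both whiten $\xi_k$ via Cauchy--Schwarz to a standard Gaussian $\eta_k$ and apply Laurent--Massart with the completed-square bound $\sqrt d+2\sqrt{\log(1/\delta_k)}$ for the first claim; for the second, both split on $\cM^{\mathrm U}_k(\delta_k)$, apply Cauchy--Schwarz on the complement, and use $\ex\|\eta_k\|_2^2=d$. Your explicit remarks on the $F_{k-1}$-measurability of $\Sigma^U_{k-1}$ and $\nu_U(k-1)$, and on why the uniform-in-$\phi$ event handles the dependence of $X$ on $\xi_k$, are points the paper leaves implicit.
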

\begin{proof}
	This proof is similar to the analysis of Lemma 13 in (Efroni et al., \citeyear{efroni2021reinforcement}).
	
	First, we prove the first statement.
	
	For any $\phi \in \R^{d}$, we have
	\begin{align}
		|\phi^\top \xi_k| &= |\phi^\top {\left(\Sigma^U_{k-1}\right)}^{-\frac{1}{2}} {\left(\Sigma^U_{k-1}\right)}^{\frac{1}{2}} \xi_k|
		\nonumber\\
		&\leq \nbr{ {\left(\Sigma^U_{k-1}\right)}^{-\frac{1}{2}} \phi}_2 \nbr{{\left(\Sigma^U_{k-1}\right)}^{\frac{1}{2}} \xi_k}_2
		\nonumber\\
		&= \nu_U(k-1) \nbr{\phi}_{{\left(\Sigma^U_{k-1}\right)}^{-1}} \nbr{\frac{1}{\nu_U(k-1)} {\left(\Sigma^U_{k-1}\right)}^{\frac{1}{2}} \xi_k}_2 . \label{eq:known_tran_phi_xi_decomposition_uq}
	\end{align}
	
	Since given $F_{k-1}$, $\frac{1}{ \nu_U(k-1)} {\left(\Sigma^U_{k-1}\right)}^{\frac{1}{2}} \xi_k \in \R^{d}$ is a vector with each entry being a standard Gaussian random variable, we have that $\|\frac{1}{\nu_U(k-1)} {\left(\Sigma^U_{k-1}\right)}^{\frac{1}{2}} \xi_k\|_2$ is chi-distributed with parameter $d$. 
	
	Then, using Lemma 1 in(Laurent and Massart,~\citeyear{laurent2000adaptive}), we have that with probability at least $1-\delta_k$,
	\begin{align*}
		\nbr{\frac{1}{\nu_U(k-1)} {\left(\Sigma^U_{k-1}\right)}^{\frac{1}{2}} \xi_k}_2 &\leq \sqrt{d + 2\sqrt{d \log\sbr{\frac{1}{\delta_k}}} + 2 \log\sbr{\frac{1}{\delta_k}} }
		\\
		&= \sqrt{ \sbr{ \sqrt{d} + \sqrt{\log\sbr{\frac{1}{\delta_k}}} }^2 +  \log\sbr{\frac{1}{\delta_k}} } 
		\\
		&\leq \sqrt{d} + 2 \sqrt{\log\sbr{\frac{1}{\delta_k}}} .
	\end{align*}
	
	Next, we prove the second statement.
	
	For a random variable $X \in \R^{d}$, we have
	\begin{align*}
		\ex\mbr{|X^\top \xi_k| \ | F_{k-1}} 
		&= \Pr\mbr{ \cM^\mathrm{U}_k(\delta_k) } \cdot \ex\mbr{|X^\top \xi_k| \ | F_{k-1}, \cM^\mathrm{U}_k(\delta_k)} 
		\\
		&\quad + \Pr\mbr{ \left(\cM^\mathrm{U}_k(\delta_k)\right)^c } \cdot  \ex\mbr{|X^\top \xi_k| \ | F_{k-1}, \left(\cM^\mathrm{U}_k(\delta_k)\right)^c }
		\\
		&\leq \nu_U(k-1) \sbr{ \sqrt{d} + 2 \sqrt{\log\sbr{\frac{1}{\delta_k}}} } \ex\mbr{ \nbr{X}_{{\left(\Sigma^U_{k-1}\right)}^{-1}} \ | F_{k-1}} 
		\\
		&\quad + \sqrt{\Pr\mbr{ \left(\cM^\mathrm{U}_k(\delta_k)\right)^c } \cdot  \ex\mbr{|X^\top \xi_k|^2 \ | F_{k-1}, \left(\cM^\mathrm{U}_k(\delta_k)\right)^c} }
		\\
		&\overset{\textup{(a)}}{\leq}\nu_U(k-1) \sbr{ \sqrt{d} + 2 \sqrt{\log\sbr{\frac{1}{\delta_k}}} } \ex\mbr{ \nbr{X}_{{\left(\Sigma^U_{k-1}\right)}^{-1}} \ | F_{k-1}} 
		\\
		&\quad +\nu_U(k-1) \sqrt{ \delta_k  \ex\mbr{ \nbr{X}^2_{{\left(\Sigma^U_{k-1}\right)}^{-1}} \cdot \nbr{\frac{1}{\nu_U(k-1)} {\left(\Sigma^U_{k-1}\right)}^{\frac{1}{2}} \xi_k}_2^2 \ | F_{k-1}}  }
				\\
		&\leq \nu_U(k-1) \sbr{ \sqrt{d} + 2 \sqrt{\log\sbr{\frac{1}{\delta_k}}} } \ex\mbr{ \nbr{X}_{{\left(\Sigma^U_{k-1}\right)}^{-1}} \ | F_{k-1}} 
		\\
		&\quad +\nu_U(k-1) \sqrt{ \delta_k L_{X}^2 \ex\mbr{  \nbr{\frac{1}{ \nu_U(k-1)} {\left(\Sigma^U_{k-1}\right)}^{\frac{1}{2}} \xi_k}_2^2 \ | F_{k-1}}  }
		\\
		&\overset{\textup{(b)}}{\leq}\nu_U(k-1) \sbr{ \sqrt{d} + 2 \sqrt{\log\sbr{\frac{1}{\delta_k}}} } \ex\mbr{ \nbr{X}_{{\left(\Sigma^U_{k-1}\right)}^{-1}} \ | F_{k-1}} 
		\\
		&\quad +\nu_U(k-1) \cdot L_{X} \sqrt{ d \delta_k }  .
	\end{align*}
	Here inequality (a) follows from the Cauchy-Schwarz inequality. Inequality (b) is due to the fact that given $F_{k-1}$, $\|\frac{1}{\nu_U(k-1)} {\left(\Sigma^U_{k-1}\right)}^{\frac{1}{2}} \xi_k\|_2$ is chi-distributed with parameter $d$, and then $\ex[ \|\frac{1}{\nu_U(k-1)} {\left(\Sigma^U_{k-1}\right)}^{\frac{1}{2}} \xi_k\|_2^2 \ | F_{k-1}] = d$.
\end{proof}

For any $k \geq 1$, let $\tau^k$ denote the trajectory in episode $k$. 
Define event 
\begin{align}
	\cF^{\bi\mathrm{U}}_{\textup{KTran}}:=\Bigg\{& \abr{\sum_{k'=1}^{k} \sbr{ \ex\mbr{ \nbr{ \phi^{\tau^{k'}}}_{{\left(\Sigma^U_{k'-1}\right)}^{-1}} | F_{k'-1}} - \nbr{ \phi^{\tau^{k'}}}_{{\left(\Sigma^U_{k'-1}\right)}^{-1}} } } \leq 4H\sqrt{ \frac{k}{\lambda} \log\sbr{\frac{4k}{\delta'}} } ,
	\nonumber\\
	&\abr{\sum_{k'=1}^{k} \sbr{ \ex\mbr{ (\phi^{\pi^{k'}})^\top \theta^* | F_{k'-1} } - (\phi^{\pi^{k'}})^\top \theta^* }} \leq 4 H r_{\max} \sqrt{ k \log\sbr{\frac{4k}{\delta'}} } ,\ \forall k>0 \Bigg\} .
\end{align}

\begin{lemma} \label{lemma:cF_binary}
	It holds that
	\begin{align*}
		\Pr \mbr{ \cF^{\bi\mathrm{U}}_{\textup{KTran}} } \geq 1-2\delta' .
	\end{align*}
\end{lemma}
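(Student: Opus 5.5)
The plan mirrors Lemma~\ref{lemma:lin_cF_binary}: prove each of the two inequalities in $\cF^{\bi\mathrm{U}}_{\textup{KTran}}$ separately with probability at least $1-\delta'$, then take a union bound. Both are Azuma--Hoeffding bounds for martingale difference sequences adapted to $\{F_k\}$, combined with a union bound over $k$.

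\textbf{First inequality.} Define $X_{k'}:=\ex[\|\phi^{\tau^{k'}}\|_{(\Sigma^U_{k'-1})^{-1}}\mid F_{k'-1}]-\|\phi^{\tau^{k'}}\|_{(\Sigma^U_{k'-1})^{-1}}$.
\begin{itemize}
\item Conditional on $F_{k'-1}$, the matrix $\Sigma^U_{k'-1}$ is fixed. This is the step that needs care in the variable setting. The partitions $\{t_i^{k''}\}$ and lengths $\ell_i^{k''}$ for $k''\le k'-1$ are chosen after observing $\tau^{k''}$, so they are $F_{k''}$-measurable. Hence $\Sigma^U_{k'-1}$, which depends only on these lengths and on the segment features of episodes $1,\dots,k'-1$, is $F_{k'-1}$-measurable.
\item The segmentation of episode $k'$ itself does not enter $\Sigma^U_{k'-1}$. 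It therefore does not affect $\phi^{\tau^{k'}}=\sum_h\psi(s_h^{k'},a_h^{k'})$.
\item It follows that $\ex[X_{k'}\mid F_{k'-1}]=0$.
\item For boundedness, $\Sigma^U_{k'-1}\succeq\lambda I$ and $\|\phi^{\tau^{k'}}\|_2\le H$ give $\|\phi^{\tau^{k'}}\|_{(\Sigma^U_{k'-1})^{-1}}\le H/\sqrt{\lambda}$, so $|X_{k'}|\le 2H/\sqrt{\lambda}$. This replaces the $\alpha\lambda$ of the equal-length case by $\lambda$, because the regularizer in $\Sigma^U$ is $\lambda I$.
\item For fixed $k$, Azuma--Hoeffding with failure probability $\delta'/(2k^2)$ gives $|\sum_{k'\le k}X_{k'}|\le\sqrt{2\cdot\frac{4H^2}{\lambda}k\log(4k^2/\delta')}$.
\item A union bound over $k$ costs $\sum_k\delta'/(2k^2)\le\delta'$.
\item Finally, $\log(4k^2/\delta')\le 2\log(4k/\delta')$ for $\delta'\le 1$ (as used in Lemma~\ref{lemma:lin_cF_binary}), so the bound is at most $4H\sqrt{\frac{k}{\lambda}\log(4k/\delta')}$.
\end{itemize}

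\textbf{Second inequality.} The argument is identical with $Y_{k'}:=\ex[(\phi^{\pi^{k'}})^\top\theta^*\mid F_{k'-1}]-(\phi^{\pi^{k'}})^\top\theta^*$.
\begin{itemize}
\item The randomness in $\pi^{k'}$ given $F_{k'-1}$ comes from the Gaussian perturbation $\xi_{k'}$. Its covariance $\nu_U(k'-1)^2(\Sigma^U_{k'-1})^{-1}$ is $F_{k'-1}$-measurable, since $L(k'-1)$ and $\omega_U(k'-1)$ depend only on past lengths.
\item Each term satisfies $|(\phi^{\pi^{k'}})^\top\theta^*|\le Hr_{\max}$, hence $|Y_{k'}|\le 2Hr_{\max}$.
\item The same Azuma--Hoeffding and union-bound computation gives $4Hr_{\max}\sqrt{k\log(4k/\delta')}$.
\end{itemize}

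Combining the two events gives probability at least $1-2\delta'$. The only nonroutine point, and the one I would spend time on, is verifying the measurability claims. These guarantee that the adaptive, trajectory-dependent segmentation keeps both sequences martingale differences with respect to $\{F_k\}$. They hold because each episode's partition is fixed before the next episode begins.
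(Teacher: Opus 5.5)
Your proposal is correct and matches the paper's proof. The paper bounds each increment by $2H/\sqrt{\lambda}$ (using $\Sigma^U_{k'-1}\succeq\lambda I$ and $\|\phi^{\tau^{k'}}\|_2\le H$) and by $2Hr_{\max}$ respectively, applies Azuma--Hoeffding at level $\delta'/(2k^2)$, takes a union bound over $k$, and then combines the two inequalities. Your explicit measurability check for the adaptive segmentation is a detail the paper leaves implicit; strictly, you only need $\Sigma^U_{k'-1}$ and $\pi^{k'}$ to be $F_{k'}$-measurable, since $\ex[Z\mid F_{k'-1}]-Z$ is then automatically a martingale difference.
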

\begin{proof}
	We prove the first inequality as follows.
	
	For any $k'\geq 1$, we have that 
    \[\nbr{ \phi^{\tau^{k'}}}_{{\left(\Sigma^U_{k'-1}\right)}^{-1}} \leq \frac{H}{\sqrt{ \lambda}}\]
    
    , and then \[\ex \left[\nbr{ \phi^{\tau^{k'}}}_{{\left(\Sigma^U_{k'-1}\right)}^{-1}} | F_{k'-1}\right] - \nbr{ \phi^{\tau^{k'}}}_{{\left(\Sigma^U_{k'-1}\right)}^{-1}}| \leq \frac{2H}{\sqrt{\lambda}}\].

	Using the Azuma-Hoeffding inequality, we have that for any fixed $k>0$, with probability at least $1-\frac{\delta'}{2k^2}$,
	\begin{align*}
		\abr{\sum_{k'=1}^{k} \sbr{ \ex\mbr{ \nbr{ \phi^{\tau^{k'}}}_{{\left(\Sigma^U_{k'-1}\right)}^{-1}} | F_{k'-1}} - \nbr{ \phi^{\tau^{k'}}}_{{\left(\Sigma^U_{k'-1}\right)}^{-1}} } } &\leq \sqrt{ 2 \cdot \frac{4H^2}{ \lambda} \cdot k \log\sbr{\frac{4k^2}{\delta'}} } .
	\end{align*}
	
	Since $\sum_{k=1}^{\infty} \frac{\delta'}{2k^2} \leq \delta'$, by a union bound over $k$, we have that with probability at least $1-\delta'$, for any $k\geq 1$,
	\begin{align*}
		\abr{\sum_{k'=1}^{k} \sbr{ \ex\mbr{ \nbr{ \phi^{\tau^{k'}}}_{{\left(\Sigma^U_{k'-1}\right)}^{-1}} | F_{k'-1}} - \nbr{ \phi^{\tau^{k'}}}_{{\left(\Sigma^U_{k'-1}\right)}^{-1}} } } &\leq \sqrt{ 2 \cdot \frac{4H^2}{ \lambda} \cdot k \log\sbr{\frac{4k^2}{\delta'}} } 
		\\
		&\leq 4H \sqrt{ \frac{k}{ \lambda} \log\sbr{\frac{4k}{\delta'}} } .
	\end{align*}
	The second inequality can be obtained by a similar argument and the fact that $|(\phi^{\pi^{k}})^\top \theta^*| \leq Hr_{\max}$ for any $k>0$.
\end{proof}

\begin{lemma} \label{lemma:uq_sum_phi_seg_binary}
	For any $K\geq 1$, we have
	\begin{align*}
		\sum_{k=1}^{K} \sum_{i=1}^{m_{k}} \nbr{ \phi^{\tau^k_i}}_{{\left(\Sigma^U_{k-1}\right)}^{-1}} \leq  \sqrt{2\sum_{k=1}^{K}\max\lbr{ \frac{\sum_{i=1}^{m_{k}}(\ell_i^k)^2}{4\lambda}, 1}\sum_{i=1}^{m_{k}}\frac{1}{\sig'(\ell_i^{k}r_{\max})} } \cdot \sqrt{d\log \left(1+\frac{\sum_{k=1}^{K}\sum_{i=1}^{m_{k}}(\ell_i^k)^2}{4\lambda d}\right)} .
	\end{align*}
\end{lemma}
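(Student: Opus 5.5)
The plan is to follow the proof sketch of Theorem~\ref{thm:regret-binary-variable-known}: introduce curvature weights, run an episode-level elliptical potential argument with the weighted features, and remove the weights with Cauchy--Schwarz. For brevity write $w_i^k:=\sig'(\ell_i^k r_{\max})$ and $x_i^k:=\sqrt{w_i^k}\,\phi^{\tau_i^k}$. Then $\Sigma^U_k=\Sigma^U_{k-1}+\sum_{i=1}^{m_k}x_i^k(x_i^k)^\top$ and $\Sigma^U_0=\lambda I$. Since each summand can be written as $\|\phi^{\tau_i^k}\|_{(\Sigma^U_{k-1})^{-1}}=\|x_i^k\|_{(\Sigma^U_{k-1})^{-1}}/\sqrt{w_i^k}$, Cauchy--Schwarz within each episode gives
\[
\sum_{i=1}^{m_k}\nbr{\phi^{\tau_i^k}}_{(\Sigma^U_{k-1})^{-1}}\le\sqrt{\sum_{i=1}^{m_k}\frac{1}{w_i^k}}\;\sqrt{\sum_{i=1}^{m_k}\nbr{x_i^k}^2_{(\Sigma^U_{k-1})^{-1}}}.
\]
Applying Cauchy--Schwarz once more across episodes bounds the left-hand side of the lemma by $\sqrt{\sum_k c_k\sum_i 1/w_i^k}\cdot\sqrt{\sum_k u_k/c_k}$. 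Here $u_k:=\sum_i\|x_i^k\|^2_{(\Sigma^U_{k-1})^{-1}}$ and $c_k:=\max\{\sum_i(\ell_i^k)^2/(4\lambda),1\}$. Inserting the factor $c_k$ and dividing by it is what produces the max-factor inside the first square root of the statement.

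Next I bound each $u_k$. Because $w_i^k\le \sig'_{\max}=1/4$, $\|\phi^{\tau_i^k}\|_2\le\ell_i^k$ and $\Sigma^U_{k-1}\succeq\lambda I$, we get $u_k\le\sum_i(\ell_i^k)^2/(4\lambda)\le c_k$. The elementary inequality already used in Lemma~\ref{lemma:lin_sum_phi_seg_binary}, namely $x\le 2\max\{c,1\}\log(1+x)$ for $x\in[0,c]$, then gives $u_k/c_k\le 2\log(1+u_k)$.

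Finally I telescope. As in Lemma~\ref{lemma:log_det_seg_d}, the bound $\det(I+A)\ge1+\trace(A)$ gives $\det\Sigma^U_k\ge\det\Sigma^U_{k-1}(1+u_k)$. Hence $\sum_k\log(1+u_k)\le\log(\det\Sigma^U_K/\lambda^d)$. By AM--GM this is at most $d\log(1+\trace(\Sigma^U_K-\lambda I)/(\lambda d))$, and the trace term satisfies $\trace(\Sigma^U_K-\lambda I)\le\sum_{k,i}(\ell_i^k)^2/4$. This yields the second square-root factor $\sqrt{2d\log(1+\sum(\ell_i^k)^2/(4\lambda d))}$, and moving the $2$ into the first factor matches the statement.

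The main obstacle I expect is the bookkeeping between the two Cauchy--Schwarz steps. The per-episode normalization by $c_k$ must be done so that one factor becomes exactly $\sum_k c_k\sum_i 1/w_i^k$ and the other becomes $\sum_k u_k/c_k$, which is what lets the $\log(1+x)$ inequality apply episode by episode. Everything else is routine and mirrors Lemmas~\ref{lemma:lin_sum_phi_seg_binary} and~\ref{lemma:log_det_seg_d}.
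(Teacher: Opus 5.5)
Your proof is correct and follows the paper's argument: within-episode Cauchy--Schwarz with the curvature weights $\sqrt{w_i^k}$, the inequality $x\le 2\max\{c,1\}\log(1+x)$ applied to each episode's weighted potential, Cauchy--Schwarz across episodes, and the log-determinant telescoping bound with AM--GM. The only difference is the order of two steps: the paper applies the log inequality before the across-episode Cauchy--Schwarz, whereas you fold $c_k$ into the Cauchy--Schwarz weights first and apply the log inequality afterwards, which gives the same bound.
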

\begin{proof}		
	We have
	\begin{align*}
		\sum_{k=1}^{K} \sum_{i=1}^{m_{k}} \nbr{ \phi^{\tau^k_i}}_{{\left(\Sigma^U_{k-1}\right)}^{-1}} &=
        \sum_{k=1}^{K} \sum_{i=1}^{m_{k}} \frac{1}{\sqrt{\sig'(\ell_i^{k}r_{\max})}}\nbr{ \sqrt{\sig'(\ell_i^{k}r_{\max})}\phi^{\tau^k_i}}_{{\left(\Sigma^U_{k-1}\right)}^{-1}}
        \\
        &\leq \sum_{k=1}^{K} \sqrt{\sum_{i=1}^{m_{k}}\frac{1}{\sig'(\ell_i^{k}r_{\max})}\sum_{i=1}^{m_{k}}\nbr{ \sqrt{\sig'(\ell_i^{k}r_{\max})}\phi^{\tau^k_i}}^2_{{\left(\Sigma^U_{k-1}\right)}^{-1}}}
        \\
        &\overset{\textup{(a)}} \leq \sum_{k=1}^{K}\sqrt{2\max\lbr{ \frac{\sum_{i=1}^{m_{k}}(\ell_i^k)^2}{4\lambda}, 1}\sum_{i=1}^{m_{k}}\frac{1}{\sig'(\ell_i^{k}r_{\max})}}
        \\
        &\cdot
        \sqrt{\log\left(1+\sum_{i=1}^{m_{k}}\nbr{ \sqrt{\sig'(\ell_i^{k}r_{\max})}\phi^{\tau^k_i}}^2_{{\left(\Sigma^U_{k-1}\right)}^{-1}}\right)}
        \\
        & \leq \sqrt{2\sum_{k=1}^{K}\max\lbr{ \frac{\sum_{i=1}^{m_{k}}(\ell_i^k)^2}{4\lambda}, 1}\sum_{i=1}^{m_{k}}\frac{1}{\sig'(\ell_i^{k}r_{\max})} }
        \\
        &\cdot
        \sqrt{\sum_{k=1}^{K}\log\left(1+\sum_{i=1}^{m_{k}}\nbr{ \sqrt{\sig'(\ell_i^{k}r_{\max})}\phi^{\tau^k_i}}^2_{{\left(\Sigma^U_{k-1}\right)}^{-1}}\right)}
        \\
        &\le \sqrt{2\sum_{k=1}^{K} \max\lbr{ \frac{\sum_{i=1}^{m_{k}}(\ell_i^k)^2}{4\lambda}, 1}\sum_{i=1}^{m_{k}}\frac{1}{\sig'(\ell_i^{k}r_{\max})}} \cdot \sqrt{\log \left(\frac{\det(\Sigma^U_K)}{\det(\lambda I)}\right)}
        \\
        &\le \sqrt{2\sum_{k=1}^{K}\max\lbr{ \frac{\sum_{i=1}^{m_{k}}(\ell_i^k)^2}{4\lambda}, 1}\sum_{i=1}^{m_{k}}\frac{1}{\sig'(\ell_i^{k}r_{\max})} } 
        \\
        &\cdot \sqrt{d\log \left(1+\frac{\sum_{k=1}^{K}\sum_{i=1}^{m_{k}}(\ell_i^k)^2}{4\lambda d}\right)}
	\end{align*}
	where inequality (a) is due to that for any $x \in [0,c]$ with constant $c\geq0$, it holds that $x \leq 2 \max\{c,1\} \cdot \log(1+x)$.
\end{proof}

\begin{proof}[Proof of Theorem~\ref{thm:regret-binary-variable-known}]
    Let $\delta'=\delta/3$. Then,
$        \Pr\mbr{\cE_{\mathrm{U}} \cap \cF^{\bi\mathrm{U}}_{\textup{KTran}}}
        \geq 1-\delta .$
    Therefore, it remains to prove the desired regret bound on the event
    $\cE_{\mathrm{U}} \cap \cF^{\bi\mathrm{U}}_{\textup{KTran}}$.

    Assume that event $\cE_{\mathrm{U}} \cap \cF^{\bi\mathrm{U}}_{\textup{KTran}}$ holds. Then,
    \begin{align}
        \cR(K)
        &=
        \sum_{k=1}^{K}
        \sbr{
            (\phi^{\pi^*})^\top \theta^*
            -
            (\phi^{\pi^k})^\top \theta^*
        }
        \nonumber\\
        &=
        \sum_{k=1}^{K}
        \sbr{
            \ex\mbr{
                (\phi^{\pi^*})^\top \theta^*
                -
                (\phi^{\pi^k})^\top \theta^*
                \mid F_{k-1}
            }
            +
            \ex\mbr{
                (\phi^{\pi^k})^\top \theta^*
                \mid F_{k-1}
            }
            -
            (\phi^{\pi^k})^\top \theta^*
        }
        \nonumber\\
        &\leq
        \sum_{k=1}^{K}
        \ex\mbr{
            (\phi^{\pi^*})^\top \theta^*
            -
            (\phi^{\pi^k})^\top \theta^*
            \mid F_{k-1}
        }
        +
        4Hr_{\max}
        \sqrt{
            K\log\sbr{\frac{4K}{\delta'}}
        } .
        \label{eq:uq_known_tran_regret_decomposition}
    \end{align}

    For the first term, we further decompose
    \begin{align}
        &\quad
        \sum_{k=1}^{K}
        \ex\mbr{
            (\phi^{\pi^*})^\top \theta^*
            -
            (\phi^{\pi^k})^\top \theta^*
            \mid F_{k-1}
        }
        \nonumber\\
        &=
        \sum_{k=1}^{K}
        \sbr{
            \ex\mbr{
                (\phi^{\pi^*})^\top \theta^*
                -
                (\phi^{\pi^k})^\top \tilde{\theta}_k
                \mid F_{k-1}
            }
            +
            \ex\mbr{
                (\phi^{\pi^k})^\top \tilde{\theta}_k
                -
                (\phi^{\pi^k})^\top \theta^*
                \mid F_{k-1}
            }
        } .
        \label{eq:uq_known_tran_ex_phi_star-phi_pi_k_theta}
    \end{align}

    We first show that
    \begin{align}
        &\quad
        \ex\mbr{
            (\phi^{\pi^*})^{\top}\theta^*
            -
            (\phi^{\pi^k})^{\top}\tilde{\theta}_k
            \mid F_{k-1}
        }
        \nonumber\\
        &\leq
        2\sqrt{2\pi e}
        \cdot
        \ex\mbr{
            \sbr{
                (\phi^{\pi^k})^{\top}\tilde{\theta}_k
                -
                \ex\mbr{
                    (\phi^{\pi^k})^{\top}\tilde{\theta}_k
                    \mid F_{k-1}
                }
            }^{+}
            \mid F_{k-1}
        } .
        \label{eq:uq_known_tran_fist_term_leq_positive}
    \end{align}

    If
$        \ex\mbr{
            (\phi^{\pi^*})^\top \theta^*
            -
            (\phi^{\pi^k})^\top \tilde{\theta}_k
            \mid F_{k-1}
        }
        <0 ,$
    then Eq.~\eqref{eq:uq_known_tran_fist_term_leq_positive} holds trivially.
    Otherwise, let
    \begin{align*}
        z
        :=
        \ex\mbr{
            (\phi^{\pi^*})^\top \theta^*
            -
            (\phi^{\pi^k})^\top \tilde{\theta}_k
            \mid F_{k-1}
        } .
    \end{align*}
    Then,
    \begin{align*}
        &\quad
        \ex\mbr{
            \sbr{
                (\phi^{\pi^k})^\top \tilde{\theta}_k
                -
                \ex\mbr{
                    (\phi^{\pi^k})^\top \tilde{\theta}_k
                    \mid F_{k-1}
                }
            }^{+}
            \mid F_{k-1}
        }
        \\
        &\geq
        z
        \Pr\mbr{
            (\phi^{\pi^k})^\top \tilde{\theta}_k
            -
            \ex\mbr{
                (\phi^{\pi^k})^\top \tilde{\theta}_k
                \mid F_{k-1}
            }
            \geq z
            \mid F_{k-1}
        }
        \\
        &\geq
        \ex\mbr{
            (\phi^{\pi^*})^\top \theta^*
            -
            (\phi^{\pi^k})^\top \tilde{\theta}_k
            \mid F_{k-1}
        }
        \cdot
        \Pr\mbr{
            (\phi^{\pi^k})^\top \tilde{\theta}_k
            \geq
            (\phi^{\pi^*})^\top \theta^*
            \mid F_{k-1}
        }
        \\
        &\overset{\textup{(a)}}{\geq}
        \ex\mbr{
            (\phi^{\pi^*})^\top \theta^*
            -
            (\phi^{\pi^k})^\top \tilde{\theta}_k
            \mid F_{k-1}
        }
        \cdot
        \Pr\mbr{
            (\phi^{\pi^*})^\top \tilde{\theta}_k
            \geq
            (\phi^{\pi^*})^\top \theta^*
            \mid F_{k-1}
        }
        \\
        &\overset{\textup{(b)}}{\geq}
        \ex\mbr{
            (\phi^{\pi^*})^\top \theta^*
            -
            (\phi^{\pi^k})^\top \tilde{\theta}_k
            \mid F_{k-1}
        }
        \cdot
        \frac{1}{2\sqrt{2\pi e}},
    \end{align*}
    where inequality \textup{(a)} follows from the definition of $\pi^k$,
    and inequality \textup{(b)} follows from
    Lemma~\ref{lemma:uq_Gaussian_anti_concentration}. This proves
    Eq.~\eqref{eq:uq_known_tran_fist_term_leq_positive}.

    Let $\xi'_k\in\R^d$ be an independent copy of $\xi_k$ conditioned on
    $F_{k-1}$. Applying Lemma~\ref{lemma:lin_xi_xi'} with
    $\tilde{P}=P$, $x_{k-1}=\hat{\theta}_{k-1}$, and
    $\tilde{\pi}^k=\pi^k$, we obtain
    \begin{align*}
        &\quad
        \ex\mbr{
            (\phi^{\pi^*})^\top \theta^*
            -
            (\phi^{\pi^k})^\top \tilde{\theta}_k
            \mid F_{k-1}
        }
        \\
        &\leq
        2\sqrt{2\pi e}
        \cdot
        \ex\mbr{
            \sbr{
                (\phi^{\pi^k})^\top \tilde{\theta}_k
                -
                \ex\mbr{
                    (\phi^{\pi^k})^\top \tilde{\theta}_k
                    \mid F_{k-1}
                }
            }^{+}
            \mid F_{k-1}
        }
        \\
        &\leq
        2\sqrt{2\pi e}
        \cdot
        \ex\mbr{
            \left|(\phi^{\pi^k})^\top \xi_k\right|
            +
            \left|(\phi^{\pi^k})^\top \xi'_k\right|
            \mid F_{k-1}
        } .
    \end{align*}

    Substituting the above bound into
    Eq.~\eqref{eq:uq_known_tran_ex_phi_star-phi_pi_k_theta}, and then applying
    Lemma~\ref{lemma:uq_X_top_xi_k} with
    $\delta_k=1/k^4$ and $L_X=H/\sqrt{\lambda}$, we have
    \begin{align}
        &\quad
        \sum_{k=1}^{K}
        \ex\mbr{
            (\phi^{\pi^*})^\top \theta^*
            -
            (\phi^{\pi^k})^\top \theta^*
            \mid F_{k-1}
        }
        \nonumber\\
        &=
        \sum_{k=1}^{K}
        \ex\mbr{
            (\phi^{\pi^*})^\top \theta^*
            -
            (\phi^{\pi^k})^\top \tilde{\theta}_k
            +
            (\phi^{\pi^k})^\top \tilde{\theta}_k
            -
            (\phi^{\pi^k})^\top \theta^*
            \mid F_{k-1}
        }
        \nonumber\\
        &\leq
        \sum_{k=1}^{K}
        \bigg(
        2\sqrt{2\pi e}
        \cdot
        \ex\mbr{
            \left|(\phi^{\pi^k})^\top \xi_k\right|
            +
            \left|(\phi^{\pi^k})^\top \xi'_k\right|
            \mid F_{k-1}
        }
        \nonumber\\
        &\qquad
        +
        \ex\mbr{
            (\phi^{\pi^k})^\top
            \sbr{
                \hat{\theta}_{k-1}
                +
                \xi_k
            }
            -
            (\phi^{\pi^k})^\top \theta^*
            \mid F_{k-1}
        }
        \bigg)
        \nonumber\\
        &\leq
        \sum_{k=1}^{K}
        \bigg(
        \sbr{2\sqrt{2\pi e}+1}
        \cdot
        \ex\mbr{
            \left|(\phi^{\pi^k})^\top \xi_k\right|
            \mid F_{k-1}
        }
        +
        2\sqrt{2\pi e}
        \cdot
        \ex\mbr{
            \left|(\phi^{\pi^k})^\top \xi'_k\right|
            \mid F_{k-1}
        }
        \nonumber\\
        &\qquad
        +
        \ex\mbr{
            \left|
            (\phi^{\pi^k})^\top \hat{\theta}_{k-1}
            -
            (\phi^{\pi^k})^\top \theta^*
            \right|
            \mid F_{k-1}
        }
        \bigg)
        \nonumber\\
        &\overset{\textup{(a)}}{\leq}
        \sum_{k=1}^{K}
        \Bigg(
        \sbr{4\sqrt{2\pi e}+2}
        \cdot
        \nu_U(k-1)
        \sbr{
            \sqrt{d}
            +
            4\sqrt{\log\sbr{k}}
        }
        \cdot
        \ex\mbr{
            \nbr{\phi^{\pi^k}}_{\left(\Sigma^U_{k-1}\right)^{-1}}
            \mid F_{k-1}
        }
        \nonumber\\
        &\qquad
        +
        \sbr{4\sqrt{2\pi e}+1}
        \cdot
        \nu_U(k-1)
        \frac{\sqrt{d}}{k^2}
        \cdot
        \frac{H}{\sqrt{\lambda}}
        \Bigg),
        \label{eq:uq_ex_phi_star_theta_star-phi_k_theta_star}
    \end{align}
    where inequality \textup{(a)} follows from
    Lemmas~\ref{lemma:uq_confence_interval_proj_free}
    and~\ref{lemma:uq_X_top_xi_k}.

    Moreover, by the definition of event
    $\cF^{\bi\mathrm{U}}_{\textup{KTran}}$ and
    Lemma~\ref{lemma:uq_sum_phi_seg_binary}, we have
    \begin{align}
        &\quad
        \sum_{k=1}^{K}
        \ex\mbr{
            \nbr{\phi^{\pi^k}}_{\left(\Sigma^U_{k-1}\right)^{-1}}
            \mid F_{k-1}
        }
        \nonumber\\
        &\overset{\textup{(a)}}{\leq}
        \sum_{k=1}^{K}
        \ex\mbr{
            \nbr{\phi^{\tau^k}}_{\left(\Sigma^U_{k-1}\right)^{-1}}
            \mid F_{k-1}
        }
        \nonumber\\
        &=
        \sum_{k=1}^{K}
        \sbr{
            \ex\mbr{
                \nbr{\phi^{\tau^k}}_{\left(\Sigma^U_{k-1}\right)^{-1}}
                \mid F_{k-1}
            }
            -
            \nbr{\phi^{\tau^k}}_{\left(\Sigma^U_{k-1}\right)^{-1}}
        }
        +
        \sum_{k=1}^{K}
        \nbr{\phi^{\tau^k}}_{\left(\Sigma^U_{k-1}\right)^{-1}}
        \nonumber\\
        &\leq
        4H
        \sqrt{
            \frac{K}{\lambda}
            \log\sbr{\frac{4K}{\delta'}}
        }
        +
        \sum_{k=1}^{K}
        \sum_{i=1}^{m_k}
        \nbr{\phi^{\tau_i^k}}_{\left(\Sigma^U_{k-1}\right)^{-1}}
        \nonumber\\
        &\leq
        4H
        \sqrt{
            \frac{K}{\lambda}
            \log\sbr{\frac{4K}{\delta'}}
        }
        \nonumber\\
        &\quad
        +
        \sqrt{
            2
            \sum_{k=1}^{K} \max\lbr{
                \frac{\sum_{i=1}^{m_k}(\ell_i^k)^2}{4\lambda},
                1
            }
            \sum_{i=1}^{m_k}
            \frac{1}{\sig'(\ell_i^k r_{\max})}
        }
        \cdot
        \sqrt{
            d
            \log\left(
                1+
                \frac{
                    \sum_{k=1}^{K}
                    \sum_{i=1}^{m_k}
                    (\ell_i^k)^2
                }{
                    4\lambda d
                }
            \right)
        } .
        \label{eq:uq_sum_expected_norm}
    \end{align}
    Here inequality \textup{(a)} follows from Jensen's inequality,
    the identity
$        \phi^{\pi^k}
        =
        \ex\mbr{
            \phi^{\tau^k}
            \mid \widetilde F_k
        },$
    and the tower property of conditional expectation.

    It can be verified that $\nu_U(k)$ is non-decreasing in $k$.
    Therefore, combining
    Eq.~\eqref{eq:uq_ex_phi_star_theta_star-phi_k_theta_star},
    Eq.~\eqref{eq:uq_sum_expected_norm}, and
    Eq.~\eqref{eq:uq_known_tran_regret_decomposition}, and using
    $\sum_{k=1}^{K}k^{-2}\leq 2$, we obtain
    \begin{align*}
        \cR(K)
        &\leq
        \sbr{4\sqrt{2\pi e}+2}
        \cdot
        \nu_U(K)
        \sbr{
            \sqrt{d}
            +
            4\sqrt{\log\sbr{K}}
        }
        \\
        &\quad
        \cdot
        \Bigg[
        4H
        \sqrt{
            \frac{K}{\lambda}
            \log\sbr{\frac{4K}{\delta'}}
        }
        \\
        &\qquad
        +
        \sqrt{
            2
            \sum_{k=1}^{K}\max\lbr{
                \frac{\sum_{i=1}^{m_k}(\ell_i^k)^2}{4\lambda},
                1
            }
            \sum_{i=1}^{m_k}
            \frac{1}{\sig'(\ell_i^k r_{\max})}
        }
        \cdot
        \sqrt{
            d
            \log\left(
                1+
                \frac{
                    \sum_{k=1}^{K}
                    \sum_{i=1}^{m_k}
                    (\ell_i^k)^2
                }{
                    4\lambda d
                }
            \right)
        }
        \Bigg]
        \\
        &\quad
        +
        2\sbr{4\sqrt{2\pi e}+1}
        H
        \cdot
        \nu_U(K)
        \sqrt{\frac{d}{\lambda}}
        +
        4Hr_{\max}
        \sqrt{
            K\log\sbr{\frac{4K}{\delta'}}
        }
        \\
        &\overset{\textup{(a)}}{=}
        \tilde O\Bigg(
    \nu_U(K)d
    \sqrt{
    \sum_{k=1}^{K}\max\left\{
    \frac{\sum_{i=1}^{m_k}(\ell_i^k)^2}{4\lambda},
    1
    \right\}
    \sum_{i=1}^{m_k}
    \frac{1}{\sig'(\ell_i^k r_{\max})}
    }
    \Bigg),
    \end{align*}

This completes the proof.
\end{proof}

\section{Some Technical Tools}\label{secF}

\begin{lemma}[Self-concordance, Lemma 9 in 
(Faury et al.,~\citeyear{faury2020improved})] \label{lemma:self_concordance}
	For any $x_1,x_2 \in \R$, we have
	\begin{align*}
		\mu'(x_1) \frac{1-\exp(-|x_1-x_2|)}{|x_1-x_2|} \leq \int_{z=0}^{1} \mu'( (1-z)x_1 + zx_2 ) dz \leq \mu'(x_1) \frac{\exp(|x_1-x_2|)-1}{|x_1-x_2|} .
	\end{align*}
	
	Furthermore, we have
	\begin{align*}
		\int_{z=0}^{1} \mu'( (1-z)x_1 + zx_2 ) dz \geq \frac{\mu'(x_1)}{1+|x_1-x_2|} .
	\end{align*}
\end{lemma}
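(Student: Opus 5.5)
The plan is to reduce everything to the \emph{self-concordance} property of the logistic link: writing $\mu=\sig$, we have $\mu'=\mu(1-\mu)$ and hence
\[
\mu''=\mu'(1-2\mu),\qquad |\mu''(x)|\le \mu'(x)\quad\forall x\in\R,
\]
since $\mu\in(0,1)$ gives $|1-2\mu|\le 1$. Because $\mu'>0$, this says that $x\mapsto\log\mu'(x)$ is $1$-Lipschitz. Integrating between $x_1$ and any $u$ then yields the pointwise two-sided bound
\[
\mu'(x_1)\,e^{-|u-x_1|}\;\le\;\mu'(u)\;\le\;\mu'(x_1)\,e^{|u-x_1|}.
\]

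Next, set $t:=|x_1-x_2|$ and substitute $u=(1-z)x_1+zx_2$, for which $|u-x_1|=zt$. Integrating the pointwise bound over $z\in[0,1]$ gives
\[
\mu'(x_1)\int_0^1 e^{-zt}\,dz\;\le\;\int_0^1\mu'\big((1-z)x_1+zx_2\big)\,dz\;\le\;\mu'(x_1)\int_0^1 e^{zt}\,dz .
\]
The two outer integrals equal $\frac{1-e^{-t}}{t}$ and $\frac{e^{t}-1}{t}$, which is the first claim. The degenerate case $t=0$ is read by continuity: both ratios are then interpreted as $1$, and the middle integral equals $\mu'(x_1)$ exactly.

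For the second claim it suffices to show $\frac{1-e^{-t}}{t}\ge\frac{1}{1+t}$ for $t>0$. Clearing denominators, this is equivalent to
\[
(1+t)(1-e^{-t})\ge t \iff 1\ge (1+t)e^{-t} \iff e^{t}\ge 1+t,
\]
and the last inequality is the elementary convexity bound for the exponential. The case $t=0$ is again trivial.

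No step is a real obstacle. The only point needing care is deriving $|\mu''|\le\mu'$ and turning it into the exponential pointwise bound via the Lipschitz property of $\log\mu'$ (a Grönwall-type argument). After that, the rest is a direct integration.
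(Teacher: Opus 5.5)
Your proof is correct. The paper gives no argument of its own and simply imports this lemma from Faury et al.\ (2020), whose proof follows the same route as yours: the logistic self-concordance bound $|\mu''|\le\mu'$ (with $\mu=\sig$), then the pointwise sandwich $\mu'(x_1)e^{-|u-x_1|}\le\mu'(u)\le\mu'(x_1)e^{|u-x_1|}$, integration over $z\in[0,1]$, and finally $e^{t}\ge 1+t$ for the last claim (your continuity reading of the case $x_1=x_2$ is also appropriate).
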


\begin{theorem}[Concentration of Self-Normalized Processes (Abbasi-Yadkori et al., \citeyear{abbasi2011improved})]
\label{thm:self_normalized_process}
Let $\{\varepsilon_t\}_{t=1}^{\infty}$ be a real-valued stochastic process with 
corresponding filtration $\{\mathcal{F}_t\}_{t=0}^{\infty}$. Suppose that 
$\varepsilon_t\mid \mathcal{F}_{t-1}$ is zero-mean and $\sigma$-subGaussian, i.e.,
\begin{align*}
    \mathbb{E}\left[\varepsilon_t \mid \mathcal{F}_{t-1}\right] = 0,
\end{align*}
and for any $\lambda \in \mathbb{R}$,
\begin{align*}
    \mathbb{E}\left[
        \exp\left(\lambda \varepsilon_t\right)
        \mid \mathcal{F}_{t-1}
    \right]
    \le
    \exp\left(\frac{\lambda^2\sigma^2}{2}\right).
\end{align*}
Let $\{\psi_t\}_{t=0}^{\infty}$ be an $\mathbb{R}^d$-valued stochastic process such that 
$\psi_t \in \mathcal{F}_{t-1}$. Assume $\Lambda_0$ is a $d\times d$ positive definite matrix, and define
\begin{align*}
    \Lambda_t
    =
    \Lambda_0
    +
    \sum_{s=1}^{t}
    \psi_s \psi_s^\top .
\end{align*}
Then for any $\delta' \in (0,1) $, with probability at least $1-\delta'$, for all $t\ge 0$, we have
\begin{align*}
    \left\|
        \sum_{s=1}^{t}
        \psi_s \varepsilon_s
    \right\|_{\Lambda_t^{-1}}^2
    \le
    2\sigma^2
    \log
    \left(
        \frac{
            \det(\Lambda_t)^{1/2}
            \det(\Lambda_0)^{-1/2}
        }{
            \delta'
        }
    \right).
\end{align*}
\end{theorem}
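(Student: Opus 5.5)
The plan is the method of mixtures (Abbasi-Yadkori et al.): build an exponential supermartingale for each fixed direction, integrate it against a Gaussian prior so the self-normalized norm appears in closed form, and make the bound uniform in time with a stopping-time argument. Write $S_t:=\sum_{s=1}^t\psi_s\varepsilon_s$ and $V_t:=\sum_{s=1}^t\psi_s\psi_s^\top$, so $\Lambda_t=\Lambda_0+V_t$.

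\textbf{Step 1 (fixed direction).} For fixed $x\in\R^d$, define
\[
M_t^x:=\exp\Bigl(\tfrac{1}{\sigma}x^\top S_t-\tfrac12\|x\|_{V_t}^2\Bigr).
\]
Since $\psi_t$ is $\mathcal F_{t-1}$-measurable, the sub-Gaussian condition with parameter $x^\top\psi_t/\sigma$ gives
\[
\E\bigl[\exp\bigl(\tfrac1\sigma x^\top\psi_t\varepsilon_t-\tfrac12(x^\top\psi_t)^2\bigr)\mid\mathcal F_{t-1}\bigr]\le1 .
\]
Hence $\{M_t^x\}$ is a nonnegative supermartingale with $\E[M_t^x]\le1$.

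\textbf{Step 2 (mixture).} Let $X\sim\mathcal N(0,\Lambda_0^{-1})$ be independent of everything, and set $\bar M_t:=\E[M_t^X\mid\mathcal F_\infty]=\int M_t^x\,h(x)\,dx$ with $h$ the Gaussian density. By Fubini and Step 1, $\bar M_t$ is again a nonnegative supermartingale with $\E[\bar M_t]\le1$.

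The integral is Gaussian and can be evaluated by completing the square in the exponent
\[
\tfrac1\sigma x^\top S_t-\tfrac12 x^\top(\Lambda_0+V_t)x .
\]
This gives
\[
\bar M_t=\Bigl(\frac{\det\Lambda_0}{\det\Lambda_t}\Bigr)^{1/2}\exp\Bigl(\frac{1}{2\sigma^2}\|S_t\|_{\Lambda_t^{-1}}^2\Bigr).
\]
This is the step where the exact form $2\sigma^2\log(\det(\Lambda_t)^{1/2}\det(\Lambda_0)^{-1/2}/\delta')$ of the claimed bound comes out.

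\textbf{Step 3 (uniformity in $t$).} This is the main obstacle, because a union bound over $t$ is unavailable. Let $\tau$ be the first $t$ at which the desired inequality fails, with $\tau=\infty$ if it never fails. The inequality fails at $t$ exactly when $\bar M_t>1/\delta'$.

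By the supermartingale convergence theorem, $\bar M_\infty:=\lim_t\bar M_t$ exists a.s. Fatou's lemma applied to the stopped supermartingale $\bar M_{\tau\wedge t}$ then gives $\E[\bar M_\tau]\le\liminf_t\E[\bar M_{\tau\wedge t}]\le1$.

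Markov's inequality yields
\[
\Pr(\tau<\infty)\le\Pr(\bar M_\tau\ge1/\delta')\le\delta' ,
\]
so with probability at least $1-\delta'$ we have $\bar M_t\le 1/\delta'$ for all $t$. Taking logarithms in the closed form from Step 2 gives exactly the stated bound. Alternatively, Ville's maximal inequality for nonnegative supermartingales gives this step directly.
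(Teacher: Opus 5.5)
Your proposal is correct and follows the method-of-mixtures proof of Abbasi-Yadkori et al.\ (2011, Theorem~1), which the paper cites for this statement without reproducing it: you use the same per-direction supermartingale $M_t^x$, the same Gaussian mixture with covariance $\Lambda_0^{-1}$ evaluated in closed form, and the same stopping-time/Fatou argument for uniformity in $t$. The only cosmetic difference is that you verify the supermartingale property of the mixture $\bar M_t$ directly via Tonelli, which also makes Ville's inequality available, whereas the original bounds $\E[M^x_\tau]\le 1$ for each fixed $x$ at the stopping time and then averages over $x$.
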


\begin{lemma}[Uniform self-normalized concentration]\label{lemma:uniform_self_normalized}
    Let $\{s_t\}_{t=1}^{\infty}$ be a stochastic process on the state space
    $\mathcal{S}$ adapted to the filtration $\{\cF_t\}_{t=0}^{\infty}$.
    Let $\{\psi_t\}_{t=1}^{\infty}$ be an $\mathbb{R}^d$-valued predictable
    stochastic process such that $\psi_t\in\cF_{t-1}$ and
    $\|\psi_t\|_2\le 1$ for all $t\ge 1$.
    For any $\lambda'>0$, define
    \begin{align*}
        \Lambda_T
        =
        \lambda' I
        +
        \sum_{t=1}^{T}
        \psi_t\psi_t^\top ,
        \qquad T\ge 0 .
    \end{align*}
    Let $\mathcal{V}$ be a class of functions $V:\mathcal{S}\to\mathbb{R}$
    satisfying $\sup_s |V(s)|\le HB_R$ for every $V\in\mathcal{V}$.
    Then, for any $\epsilon>0$, with probability at least
    $1-\delta'$, the following inequality holds simultaneously for all
    $T\ge 0$ and all $V\in\mathcal{V}$:
    \begin{align*}
        \left\|
        \sum_{t=1}^{T}
        \psi_t
        \left\{
            V(s_t)
            -
            \mathbb{E}
            \left[
                V(s_t)
                \mid
                \cF_{t-1}
            \right]
        \right\}
        \right\|_{\Lambda_T^{-1}}^2
        \le
        4H^2B_R^2
        \left[
            \frac{d}{2}
            \log
            \left(
                \frac{T+\lambda'}{\lambda'}
            \right)
            +
            \log
            \left(
                \frac{\mathcal{N}_{\epsilon}}{\delta'}
            \right)
        \right]
        +
        \frac{8T^2\epsilon^2}{\lambda'} ,
    \end{align*}
    where $\mathcal{N}_{\epsilon}$ denotes the $\epsilon$-covering number of
    $\mathcal{V}$ with respect to the distance
           $ \operatorname{dist}(V,V')
        =
        \sup_s |V(s)-V'(s)| .$
\end{lemma}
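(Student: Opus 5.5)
The plan is the covering-plus-union-bound argument of Lemma D.4 in (Jin et al., \citeyear{jin20_linearmdp}). Theorem~\ref{thm:self_normalized_process} is applied to each element of a finite $\epsilon$-net of $\mathcal V$. The result is then transferred to an arbitrary $V\in\mathcal V$, which may be data dependent, through a deterministic approximation step. The factor $4$ and the additive term $8T^2\epsilon^2/\lambda'$ in the statement come from the elementary inequality $\|a+b\|^2\le 2\|a\|^2+2\|b\|^2$.

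\emph{Step 1 (fixed function).} Let $\mathcal V_\epsilon$ be an $\epsilon$-cover of $\mathcal V$ in the sup-distance, with $|\mathcal V_\epsilon|=\mathcal N_\epsilon$. We may take the cover inside $\mathcal V$, or at least inside the set of functions bounded by $HB_R$; otherwise we enlarge the bound by a constant, which only affects constants. Fix $V'\in\mathcal V_\epsilon$. The cover is chosen before any data are seen, so $\varepsilon_t:=V'(s_t)-\mathbb E[V'(s_t)\mid\cF_{t-1}]$ is a martingale difference sequence. Each $\varepsilon_t$ is supported in an interval of length $2HB_R$, so by Hoeffding's lemma it is $HB_R$-sub-Gaussian conditionally on $\cF_{t-1}$. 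Theorem~\ref{thm:self_normalized_process} with $\Lambda_0=\lambda' I$ and confidence $\delta'/\mathcal N_\epsilon$ gives, simultaneously for all $T\ge0$,
\[
\Bigl\|\sum_{t=1}^T\psi_t\varepsilon_t\Bigr\|_{\Lambda_T^{-1}}^2\le 2H^2B_R^2\log\Bigl(\frac{\det(\Lambda_T)^{1/2}\det(\lambda' I)^{-1/2}\mathcal N_\epsilon}{\delta'}\Bigr).
\]
We bound the determinant by AM--GM and the trace: $\det(\Lambda_T)/\det(\lambda' I)\le(1+T/(d\lambda'))^d\le((T+\lambda')/\lambda')^d$. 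The right-hand side is then at most $2H^2B_R^2[\frac d2\log\frac{T+\lambda'}{\lambda'}+\log\frac{\mathcal N_\epsilon}{\delta'}]$. A union bound over the $\mathcal N_\epsilon$ elements of the cover makes this hold for all $V'\in\mathcal V_\epsilon$ and all $T$ with probability at least $1-\delta'$.

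\emph{Step 2 (approximation).} For arbitrary $V\in\mathcal V$, pick $V'\in\mathcal V_\epsilon$ with $\sup_s|V-V'|\le\epsilon$ and let $\Delta=V-V'$. The noise for $V$ splits as the noise for $V'$ plus $x_t:=\Delta(s_t)-\mathbb E[\Delta(s_t)\mid\cF_{t-1}]$, and $|x_t|\le 2\epsilon$ deterministically. Using $\Lambda_T\succeq\lambda' I$ and $\|\psi_t\|_2\le1$, the triangle inequality gives
\[
\Bigl\|\sum_{t=1}^T\psi_t x_t\Bigr\|_{\Lambda_T^{-1}}\le\sum_{t=1}^T|x_t|\,\|\psi_t\|_{\Lambda_T^{-1}}\le\frac{2T\epsilon}{\sqrt{\lambda'}}.
\]
Combining with Step 1 through $\|a+b\|^2\le2\|a\|^2+2\|b\|^2$ yields exactly $4H^2B_R^2[\cdots]+8T^2\epsilon^2/\lambda'$.

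\emph{Main obstacle.} The only delicate point is that the functions $V$ to which the lemma is applied, namely $V^{k,(j)}_{h+1}$, are random and depend on the same data. The self-normalized bound therefore cannot be applied to them directly. The cover resolves this: it is fixed in advance, so Theorem~\ref{thm:self_normalized_process} applies legitimately to each of its elements. The approximation step is deterministic and holds on every sample path. Uniformity in $T$ comes for free from the anytime form of Theorem~\ref{thm:self_normalized_process}, so no further union bound over $T$ is needed.
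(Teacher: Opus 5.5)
Your proposal is correct and follows the paper's proof essentially line for line. Both use the decomposition $V=\widetilde V+\Delta_V$ over a fixed $\epsilon$-cover, the self-normalized bound with a union bound over the cover giving $2H^2B_R^2[\cdots]$, the deterministic bound $4T^2\epsilon^2/\lambda'$ on the approximation term, and the factor $2$ from $\|a+b\|^2\le 2\|a\|^2+2\|b\|^2$.

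Your concern that the cover elements must be bounded by $HB_R$ is a point the paper leaves implicit. You do not need to enlarge constants to handle it: clip each cover element to $[-HB_R,HB_R]$. Clipping keeps every covered $V\in\mathcal V$ within sup-distance $\epsilon$, so the stated constants hold exactly.
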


\begin{proof}
    This proof follows the analysis of Lemma~D.4 in 
    (Jin et al.,~\citeyear{jin20_linearmdp}).
    
    For any $V\in\mathcal{V}$, there exists a function $\widetilde{V}$ in the $\epsilon$-covering of $\mathcal{V}$ such that
    \begin{align*}
        V
        =
        \widetilde{V}
        +
        \Delta_V
        \qquad
        \text{and}
        \qquad
        \sup_s |\Delta_V(s)|
        \le
        \epsilon .
    \end{align*}
    Thus, we have the following decomposition:
    \begin{align*}
        &
        \left\|
        \sum_{t=1}^{T}
        \psi_{t}
        \left\{
            V(s_{t})
            -
            \mathbb{E}
            \left[
                V(s_{t})
                \mid
                \cF_{t-1}
            \right]
        \right\}
        \right\|_{\Lambda_T^{-1}}^2
        \\
        &\le
        2
        \left\|
        \sum_{t=1}^{T}
        \psi_{t}
        \left\{
            \widetilde{V}(s_{t})
            -
            \mathbb{E}
            \left[
                \widetilde{V}(s_{t})
                \mid
                \cF_{t-1}
            \right]
        \right\}
        \right\|_{\Lambda_T^{-1}}^2
        \quad
        +
        2
        \left\|
        \sum_{t=1}^{T}
        \psi_{t}
        \left\{
            \Delta_V(s_{t})
            -
            \mathbb{E}
            \left[
                \Delta_V(s_{t})
                \mid
                \cF_{t-1}
            \right]
        \right\}
        \right\|_{\Lambda_T^{-1}}^2 .
    \end{align*}
    We first control the term associated with $\widetilde{V}$.
    Applying Theorem~\ref{thm:self_normalized_process} and taking a union bound over the $\epsilon$-covering, we obtain that, with probability at least $1-\delta'$,
    \begin{align*}
        \left\|
        \sum_{t=1}^{T}
        \psi_{t}
        \left\{
            \widetilde{V}(s_{t})
            -
            \mathbb{E}
            \left[
                \widetilde{V}(s_{t})
                \mid
                \cF_{t-1}
            \right]
        \right\}
        \right\|_{\Lambda_T^{-1}}^2
        \le
        2H^2B_R^2
        \left[
            \frac{d}{2}
            \log
            \left(
                \frac{T+\lambda'}{\lambda'}
            \right)
            +
            \log
            \left(
                \frac{\mathcal{N}_{\epsilon}}{\delta'}
            \right)
        \right].
    \end{align*}
    
    It remains to bound the approximation error term. Since $\sup_s |\Delta_V(s)|\le \epsilon$, we have
    \begin{align*}
        \left|
            \Delta_V(s_{t})
            -
            \mathbb{E}
            \left[
                \Delta_V(s_{t})
                \mid
                \cF_{t-1}
            \right]
        \right|
        \le
        2\epsilon .
    \end{align*}
    Hence,
    \begin{align*}
        \left\|
        \sum_{t=1}^{T}
        \psi_{t}
        \left\{
            \Delta_V(s_{t})
            -
            \mathbb{E}
            \left[
                \Delta_V(s_{t})
                \mid
                \cF_{t-1}
            \right]
        \right\}
        \right\|_{\Lambda_T^{-1}}^2
        &\le
        \frac{1}{\lambda'}
        \left\|
        \sum_{t=1}^{T}
        \psi_{t}
        \left\{
            \Delta_V(s_{t})
            -
            \mathbb{E}
            \left[
                \Delta_V(s_{t})
                \mid
                \cF_{t-1}
            \right]
        \right\}
        \right\|_2^2
        \\
        &\le
        \frac{1}{\lambda'}
        \left(
            \sum_{t=1}^{T}
            \|\psi_{t}\|_2
            \left|
                \Delta_V(s_{t})
                -
                \mathbb{E}
                \left[
                    \Delta_V(s_{t})
                    \mid
                    \cF_{t-1}
                \right]
            \right|
        \right)^2
        \\
        &\le
        \frac{4T^2\epsilon^2}{\lambda'}.
    \end{align*}
    Combining the two bounds above gives
    \begin{align*}
        \left\|
        \sum_{t=1}^{T}
        \psi_{t}
        \left\{
            V(s_{t})
            -
            \mathbb{E}
            \left[
                V(s_{t})
                \mid
                \cF_{t-1}
            \right]
        \right\}
        \right\|_{\Lambda_T^{-1}}^2
        \le
        4H^2B_R^2
        \left[
            \frac{d}{2}
            \log
            \left(
                \frac{T+\lambda'}{\lambda'}
            \right)
            +
            \log
            \left(
                \frac{\mathcal{N}_{\epsilon}}{\delta'}
            \right)
        \right]
        +
        \frac{8T^2\epsilon^2}{\lambda'}.
    \end{align*}

    This completes the proof.
\end{proof}

\begin{lemma}
    [Covering Number of Euclidean Ball]\label{lemma:covering_number_ball}
    For any $\epsilon > 0$, the $\epsilon$-covering number of the Euclidean ball in $\mathbb{R}^d$ with radius $R>0$ is upper bounded by
    \begin{align*}
        \left(1+\frac{2R}{\epsilon}\right)^d .
    \end{align*}
\end{lemma}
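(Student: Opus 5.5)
The plan is a standard volumetric covering argument. Let $B_R=\{x\in\R^d:\|x\|_2\le R\}$ and take a maximal $\epsilon$-separated subset $\{x_1,\dots,x_N\}\subseteq B_R$, i.e., $\|x_i-x_j\|_2>\epsilon$ for $i\neq j$, with no further point of $B_R$ addable. The argument then has three steps.

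First, maximality makes this set an $\epsilon$-cover. If some $x\in B_R$ satisfied $\|x-x_i\|_2>\epsilon$ for every $i$, we could add $x$ and keep the set $\epsilon$-separated, contradicting maximality. Hence the covering number is at most $N$, and it remains to bound $N$.

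Second, we bound $N$ by comparing volumes. By separation, the open balls $x_i+B_{\epsilon/2}$ are pairwise disjoint. Each is contained in $B_{R+\epsilon/2}$ by the triangle inequality. Since volume in $\R^d$ scales as $r^d$, this gives
\[
N\left(\frac{\epsilon}{2}\right)^d \operatorname{vol}(B_1)\le \left(R+\frac{\epsilon}{2}\right)^d\operatorname{vol}(B_1),
\]
so $N\le (1+2R/\epsilon)^d$, which is the claimed bound.

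Third, we check that $N$ is finite. The volume inequality already forces any $\epsilon$-separated subset of $B_R$ to have at most $(1+2R/\epsilon)^d$ elements. So a maximal one exists, for example by greedily adding points until none can be added, and the process terminates.

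No step is a real obstacle; the only delicate point is justifying the existence and finiteness of the maximal separated set, which the third step handles.
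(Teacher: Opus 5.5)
Your proof is correct. A maximal $\epsilon$-separated set is an $\epsilon$-cover with centers inside the ball. The disjoint radius-$\epsilon/2$ balls packed into $B_{R+\epsilon/2}$ give $N\le(1+2R/\epsilon)^d$, and the same volume bound makes the greedy construction terminate. The paper states this lemma without proof as a standard technical tool, and your packing-plus-volume comparison is the classical argument behind it. Because your centers lie in the ball, the bound also covers how the paper uses it: radius $W$ at scale $\epsilon/2$, and the Frobenius ball in $\mathbb{R}^{d\times d}$ at scale $\epsilon^2/4$.
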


\begin{lemma}
\label{lemma:cover_num_LSVI_V}
    Let $\mathcal{V}$ be the class of state value functions
    $V:\mathcal{S}\to\mathbb{R}$ such that every $V\in\mathcal{V}$ represents
    \begin{align*}
        V(\cdot)
        =
        \max_{a\in\mathcal{A}}
        \operatorname{clip}_{[-HB_R,\,HB_R]}
        \left(
        \mathbf{w}^{\top} \psi(\cdot,a)
        +
        \beta
        \left[
        \bigl(\psi(\cdot,a)\bigr)^\top
        \Lambda^{-1}
        \psi(\cdot,a)
        \right]^{1/2}
        \right),
    \end{align*}
    where the parameters $(\mathbf{w},\beta,\Lambda)$ satisfy
    \begin{align*}
        \|\mathbf{w}\|_2\le W,
        \qquad
        \beta\in[0,B],
        \qquad
        \lambda_{\min}(\Lambda)\ge \lambda' .
    \end{align*}
    Suppose that $\|\psi(s,a)\|_2\le 1$ for all
    $(s,a)\in\mathcal{S}\times\mathcal{A}$.
    For any $\epsilon>0$, let $\mathcal{N}_{\epsilon}$ denote the
    $\epsilon$-covering number of $\mathcal{V}$ with respect to the distance
$        \operatorname{dist}(V,V')
        =
        \sup_{s\in\mathcal{S}} |V(s)-V'(s)|.$
    Then,
    \begin{align*}
        \log \mathcal{N}_{\epsilon}
        \le
        d\log\left(1+\frac{4W}{\epsilon}\right)
        +
        d^2
        \log
        \left(
            1+
            \frac{8d^{1/2}B^2}{\lambda'\epsilon^2}
        \right).
    \end{align*}
\end{lemma}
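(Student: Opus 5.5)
We follow the argument of Jin et al.\ (Lemma D.6), whose structure the statement reproduces. We parametrize $\mathcal V$ by the pair $(\mathbf w, A)$ with $A:=\beta^2\Lambda^{-1}$. Then every $V\in\mathcal V$ can be written as
\[
V(\cdot)=\max_{a}\operatorname{clip}_{[-HB_R,HB_R]}\bigl(\mathbf w^\top\psi(\cdot,a)+\sqrt{\psi(\cdot,a)^\top A\,\psi(\cdot,a)}\bigr),
\]
where $\|\mathbf w\|_2\le W$ and $\|A\|_F\le\sqrt d\,\|A\|_{\mathrm{op}}\le \sqrt d\,B^2/\lambda'$, since $\lambda_{\min}(\Lambda)\ge\lambda'$ and $\beta\le B$. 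Covering $\mathcal V$ in sup-distance then reduces to covering these two parameter sets in Euclidean and Frobenius norms.

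The key step is a Lipschitz estimate. Both $\operatorname{clip}$ and $\max_a$ are $1$-Lipschitz in sup-norm, so for $V_1,V_2$ with parameters $(\mathbf w_1,A_1),(\mathbf w_2,A_2)$,
\[
\sup_s|V_1(s)-V_2(s)|\le\sup_{\|\psi\|\le1}\Bigl|(\mathbf w_1-\mathbf w_2)^\top\psi\Bigr|+\sup_{\|\psi\|\le1}\Bigl|\sqrt{\psi^\top A_1\psi}-\sqrt{\psi^\top A_2\psi}\Bigr|.
\]
The first term is at most $\|\mathbf w_1-\mathbf w_2\|_2$. For the second term we use $|\sqrt x-\sqrt y|\le\sqrt{|x-y|}$ for $x,y\ge0$, which gives
\[
\sqrt{|\psi^\top(A_1-A_2)\psi|}\le\sqrt{\|A_1-A_2\|_{\mathrm{op}}}\le\sqrt{\|A_1-A_2\|_F}.
\]
Hence
\[
\operatorname{dist}(V_1,V_2)\le\|\mathbf w_1-\mathbf w_2\|_2+\sqrt{\|A_1-A_2\|_F}.
\]

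With this estimate, take an $\epsilon/2$-cover $\mathcal C_{\mathbf w}$ of the ball $\{\|\mathbf w\|_2\le W\}$ and an $\epsilon^2/4$-cover $\mathcal C_A$ (in Frobenius norm) of the ball $\{\|A\|_F\le\sqrt d B^2/\lambda'\}$ in $\mathbb R^{d^2}$. By Lemma~\ref{lemma:covering_number_ball},
\[
|\mathcal C_{\mathbf w}|\le(1+4W/\epsilon)^d,\qquad |\mathcal C_A|\le\bigl(1+8\sqrt d B^2/(\lambda'\epsilon^2)\bigr)^{d^2}.
\]
For any $V\in\mathcal V$, choosing the nearest $(\mathbf w',A')\in\mathcal C_{\mathbf w}\times\mathcal C_A$ gives distance at most $\epsilon/2+\epsilon/2=\epsilon$. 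Taking logarithms of $|\mathcal C_{\mathbf w}|\cdot|\mathcal C_A|$ yields the stated bound.

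The only delicate points are two technicalities. First, the cover points $A'$ need not be positive semidefinite, so the square root may be undefined there. We handle this by either restricting the cover to PSD matrices (an internal cover at radius $\epsilon^2/2$ suffices after adjusting constants) or simply replacing $\psi^\top A'\psi$ by its positive part, which keeps the same Lipschitz estimate. Second, the cover must be taken with respect to the given sup-distance; this is immediate from the Lipschitz estimate above.
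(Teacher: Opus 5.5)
Your proposal is correct and follows the paper's proof essentially step for step: the reparametrization $A=\beta^2\Lambda^{-1}$, the sup-norm Lipschitz bound $\|\mathbf{w}_1-\mathbf{w}_2\|_2+\sqrt{\|A_1-A_2\|_F}$ via $|\sqrt{x}-\sqrt{y}|\le\sqrt{|x-y|}$, and the $\epsilon/2$ and $\epsilon^2/4$ covers counted with Lemma~\ref{lemma:covering_number_ball}. Your remark that cover matrices need not be positive semidefinite is a detail the paper omits. Replacing $\psi^\top A'\psi$ by its positive part repairs it without changing the stated constants, since the resulting cover elements are still clipped to $[-HB_R,HB_R]$ and may lie outside $\mathcal{V}$, which is harmless.
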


\begin{proof}This proof follows the analysis of Lemma D.6 in (Jin et al.,~\citeyear{jin20_linearmdp}).
    
    We first reparametrize the function class $\mathcal{V}$ by setting
    $
        A
        =
        \beta^2
        \Lambda^{-1}.
    $
    Then each function in $\mathcal{V}$ can be written as
    \begin{align}\label{eq:LSVI_V_form}
        V(\cdot)
        =
        \max_{a\in\mathcal{A}}
        \operatorname{clip}_{[-HB_R,\,HB_R]}
        \left( 
        \mathbf{w}^{\top} \psi(\cdot,a)
        +
        \sqrt{
        \bigl(\psi(\cdot,a)\bigr)^\top
        A
        \psi(\cdot,a)
        }
        \right)
    \end{align}
    with $\|\mathbf{w}\|_2\le W$ and
      $  \|A\|_{\mathrm{op}}
        \le
        B^2\bigl(\lambda'\bigr)^{-1}.$

    For any two functions $V_1,V_2\in\mathcal{V}$, let them take the above form with parameters
    $(\mathbf{w}_1,A_1)$ and $(\mathbf{w}_2,A_2)$, respectively. Since both $\operatorname{clip}_{[-HB_R,HB_R]}(\cdot)$ and $\max_{a\in\mathcal{A}}$ are non-expansive under the sup-norm, we have
    \begin{align*}
        \operatorname{dist}(V_1,V_2)
        &\le
        \sup_{s,a}
        \left|
        \left[
        \mathbf{w}_1^\top \psi(s,a)
        +
        \sqrt{
        \bigl(\psi(s,a)\bigr)^\top A_1\psi(s,a)
        }
        \right]
        -
        \left[
        \mathbf{w}_2^\top \psi(s,a)
        +
        \sqrt{
        \bigl(\psi(s,a)\bigr)^\top A_2\psi(s,a)
        }
        \right]
        \right|
        \\
        &\le
        \sup_{\psi:\|\psi\|_2\le 1}
        \left|
        (\mathbf{w}_1-\mathbf{w}_2)^\top \psi
        \right|
        +
        \sup_{\psi:\|\psi\|_2\le 1}
        \left|
        \sqrt{\psi^\top A_1\psi}
        -
        \sqrt{\psi^\top A_2\psi}
        \right|
        \\
        &\le
        \|\mathbf{w}_1-\mathbf{w}_2\|_2
        +
        \sup_{\psi:\|\psi\|_2\le 1}
        \sqrt{
        \left|
        \psi^\top
        \bigl(A_1-A_2\bigr)
        \psi
        \right|
        }
        \\
        &=
        \|\mathbf{w}_1-\mathbf{w}_2\|_2
        +
        \sqrt{\|A_1-A_2\|_{\mathrm{op}}}
        \\
        &\le
        \|\mathbf{w}_1-\mathbf{w}_2\|_2
        +
        \sqrt{\|A_1-A_2\|_F}.
    \end{align*}
    The second last inequality follows from
        $|\sqrt{x}-\sqrt{y}|
        \le
        \sqrt{|x-y|}$
    for any $x,y\ge 0$. 

   Let $\mathcal{C}_{\mathbf{w}}$ be an $\epsilon/2$-cover of the Euclidean ball
    \begin{align*}
        \left\{
        \mathbf{w}\in\mathbb{R}^d:
        \|\mathbf{w}\|_2\le W
        \right\}
    \end{align*}
    in the Euclidean norm. Similarly, let $\mathcal{C}_A$ be an $\epsilon^2/4$-cover of the Frobenius ball
    \begin{align*}
        \left\{
        A\in\mathbb{R}^{d\times d}:
        \|A\|_F
        \le
        d^{1/2}B^2\bigl(\lambda'\bigr)^{-1}
        \right\}
    \end{align*}
   in the Frobenius norm. By Lemma~\ref{lemma:covering_number_ball}, we have
    \begin{align*}
        |\mathcal{C}_{\mathbf{w}}|
        \le
        \left(1+\frac{4W}{\epsilon}\right)^d,
        \qquad
        |\mathcal{C}_A|
        \le
        \left[
            1+
            \frac{8d^{1/2}B^2}{\lambda'\epsilon^2}
        \right]^{d^2}.
    \end{align*}

    By the distance bound above, for any $V_1\in\mathcal{V}$, there exist
    $\mathbf{w}_2\in\mathcal{C}_{\mathbf{w}}$ and $A_2\in\mathcal{C}_A$
    such that the corresponding function $V_2$, parameterized by
    $(\mathbf{w}_2,A_2)$, satisfies
        $\operatorname{dist}(V_1,V_2)
        \le
        \epsilon .$
    Therefore,
    \begin{align*}
        \log \mathcal{N}_{\epsilon}
        &\le
        \log |\mathcal{C}_{\mathbf{w}}|
        +
        \log |\mathcal{C}_A|
        \\
        &\le
        d\log\left(1+\frac{4W}{\epsilon}\right)
        +
        d^2
        \log
        \left[
            1+
            \frac{8d^{1/2}B^2}{\lambda'\epsilon^2}
        \right].
    \end{align*}

    This completes the proof.
\end{proof}

\end{appendices}
\bibliography{sn-bibliography}% common bib file
%% if required, the content of .bbl file can be included here once bbl is generated
%%\input sn-article.bbl

\end{document}